\def\widebar{\accentset{{\cc@style\underline{\mskip10mu}}}}
\newsavebox\myboxA
\newsavebox\myboxB
\newlength\mylenA
\newcommand*\xoverline[2][0.5]{%
    \sbox{\myboxA}{$\m@th#2$}%
    \setbox\myboxB\null
    \ht\myboxB=\ht\myboxA%
    \dp\myboxB=\dp\myboxA%
    \wd\myboxB=#1\wd\myboxA
    \sbox\myboxB{$\m@th\overline{\copy\myboxB}$}
    \setlength\mylenA{\the\wd\myboxA}
    \addtolength\mylenA{-\the\wd\myboxB}%
    \ifdim\wd\myboxB<\wd\myboxA%
       \rlap{\hskip 0.8\mylenA\usebox\myboxB}{\usebox\myboxA}%
    \else
        \hskip -0.5\mylenA\rlap{\usebox\myboxA}{\hskip 0.5\mylenA\usebox\myboxB}%
    \fi}
\theoremstyle{plain}
\newtheorem{theorem}{Theorem}
\newtheorem{lemma}{Lemma}
\newtheorem{proposition}{Proposition}
\newtheorem{corollary}{Corollary}
\newtheorem{definition}{Definition}
\newcommand{\bzero}{\mathbf{0}}
\newcommand{\C}{\mathbb{C}}
\newcommand{\R}{\mathbb{R}}
\newcommand{\ch}{\mathfrak{h}}
\newcommand{\ce}{\mathfrak{e}}
\newcommand{\bA}{\boldsymbol{A}}
\newcommand{\bB}{\boldsymbol{B}}
\newcommand{\bC}{\boldsymbol{C}}
\newcommand{\be}{\boldsymbol{e}}
\newcommand{\bF}{\boldsymbol{F}}
\newcommand{\bG}{\boldsymbol{G}}
\newcommand{\bh}{\boldsymbol{h}}
\newcommand{\bH}{\boldsymbol{H}}
\newcommand{\bI}{\boldsymbol{I}}
\newcommand{\bL}{\boldsymbol{L}}
\newcommand{\bM}{\boldsymbol{M}}
\newcommand{\bQ}{\boldsymbol{Q}}
\newcommand{\br}{\boldsymbol{r}}
\newcommand{\bR}{\boldsymbol{R}}
\newcommand{\bS}{\boldsymbol{S}}
\newcommand{\bT}{\boldsymbol{T}}
\newcommand{\bU}{\boldsymbol{U}}
\newcommand{\bv}{\boldsymbol{v}}
\newcommand{\bV}{\boldsymbol{V}}
\newcommand{\bW}{\boldsymbol{W}}
\newcommand{\bx}{\boldsymbol{x}}
\newcommand{\bX}{\boldsymbol{X}}
\newcommand{\by}{\boldsymbol{y}}
\newcommand{\bY}{\boldsymbol{Y}}
\newcommand{\bz}{\boldsymbol{z}}
\newcommand{\bZ}{\boldsymbol{Z}}
\newcommand{\cA}{\mathcal{A}}
\newcommand{\cB}{\mathcal{B}}
\newcommand{\cC}{\mathcal{C}}
\newcommand{\cF}{\mathcal{F}}
\newcommand{\cG}{\mathcal{G}}
\newcommand{\cH}{\mathcal{H}}
\newcommand{\cI}{\mathcal{I}}
\newcommand{\cJ}{\mathcal{J}}
\newcommand{\cL}{\mathcal{L}}
\newcommand{\cM}{\mathcal{M}}
\newcommand{\cN}{\mathcal{N}}
\newcommand{\cO}{\mathcal{O}}
\newcommand{\cP}{\mathcal{P}}
\newcommand{\cQ}{\mathcal{Q}}
\newcommand{\cR}{\mathcal{R}}
\newcommand{\cS}{\mathcal{S}}
\newcommand{\cT}{\mathcal{T}}
\newcommand{\cU}{\mathcal{U}}
\newcommand{\cV}{\mathcal{V}}
\newcommand{\cW}{\mathcal{W}}
\newcommand{\cX}{\mathcal{X}}
\newcommand{\cY}{\mathcal{Y}}
\newcommand{\cZ}{\mathcal{Z}}
\newcommand{\bcA}{\boldsymbol{\cA}}
\newcommand{\bcB}{\boldsymbol{\cB}}
\newcommand{\bcC}{\boldsymbol{\cC}}
\newcommand{\bcF}{\boldsymbol{\cF}}
\newcommand{\bcG}{\boldsymbol{\cG}}
\newcommand{\bcH}{\boldsymbol{\cH}}
\newcommand{\bcI}{\boldsymbol{\cI}}
\newcommand{\bcJ}{\boldsymbol{\cJ}}
\newcommand{\bcL}{\boldsymbol{\cL}}
\newcommand{\bcM}{\boldsymbol{\cM}}
\newcommand{\bcP}{\boldsymbol{\cP}}
\newcommand{\bcQ}{\boldsymbol{\cQ}}
\newcommand{\bcR}{\boldsymbol{\cR}}
\newcommand{\bcS}{\boldsymbol{\cS}}
\newcommand{\bcU}{\boldsymbol{\cU}}
\newcommand{\bcV}{\boldsymbol{\cV}}
\newcommand{\bcW}{\boldsymbol{\cW}}
\newcommand{\bcX}{\boldsymbol{\cX}}
\newcommand{\bcY}{\boldsymbol{\cY}}
\newcommand{\bcZ}{\boldsymbol{\cZ}}
\newcommand{\bPhi}{\boldsymbol{\Phi}}
\newcommand{\bOmega}{\boldsymbol{\Omega}}
\DeclareMathOperator*{\argmin}{arg\,min}
\newcommand{\sgn}{\mathrm{sgn}}
\newcommand{\dist}{\mathrm{dist}}
\newcommand{\tr}{\mathrm{tr}}
\newcommand{\supp}{\mathrm{supp}}
\title{Guaranteed Nonconvex Low-Rank Tensor Estimation via Scaled Gradient Descent}
\newcommand{\email}[1]{\href{mailto:#1}{#1}}
\author{
Tong Wu\thanks{Beijing Institute for General Artificial Intelligence, Beijing, China. (\email{wutong@bigai.ai}).}
}
\begin{document}

\maketitle

\vspace{5mm}
\begin{abstract}
Tensors, which give a faithful and effective representation to deliver the intrinsic structure of multi-dimensional data, play a crucial role in an increasing number of signal processing and machine learning problems. However, tensor data are often accompanied by arbitrary signal corruptions, including missing entries and sparse noise. A fundamental challenge is to reliably extract the meaningful information from corrupted tensor data in a statistically and computationally efficient manner. This paper develops a scaled gradient descent (ScaledGD) algorithm to directly estimate the tensor factors with tailored spectral initializations under the tensor-tensor product (t-product) and tensor singular value decomposition (t-SVD) framework. With tailored variants for tensor robust principal component analysis, (robust) tensor completion and tensor regression, we theoretically show that ScaledGD achieves linear convergence at a constant rate that is independent of the condition number of the ground truth low-rank tensor, while maintaining the low per-iteration cost of gradient descent. To the best of our knowledge, ScaledGD is the first algorithm that provably has such properties for low-rank tensor estimation with the t-SVD. Finally, numerical examples are provided to demonstrate the efficacy of ScaledGD in accelerating the convergence rate of ill-conditioned low-rank tensor estimation in a number of applications.
\end{abstract}

\section{Introduction}

With the increasing availability of high-dimensional and multiway datasets, we are witnessing a growing demand for efficient data analysis techniques. Many practical applications collect data that are naturally in the form of a tensor. Instances of tensor data include images, videos, hyperspectral images, and signals generated by magnetic resonance systems. Tensors arise naturally as a powerful model for better capturing the multiway relationships and interactions within data than matrices; examples include image processing \citep{LiuMWY.PAMI2013}, climate forecasting \citep{YuCL.ICML2015}, topic modeling \citep{AnandkumarGHKT.JMLR2014}, and neuroimaging data analysis \citep{AhmedRB.SIMODS2020}. In this work, specifically, we consider the tensor estimation problem, which is the central task across many problems. Mathematically, the goal of tensor estimation is to estimate a $K$-way tensor $\bcX_{\star} \in \R^{n_1 \times \dots \times n_K}$ from a set of observations $\by \in \R^m$ given by
\begin{align}\label{eqn:probsetup}
\by \approx \cA(\bcX_{\star}),
\end{align}
where $\cA(\cdot)$ is a linear map that models the data collection process. For ease of presentation, we consider the case $K = 3$ throughout the paper, while the general case will be investigated in future work. Arguably, the data of interest can be well represented by a much smaller number of latent factors compared to the dimensionality of the ambient space, which suggests exploiting low-dimensional geometric structures for meaningful recovery.

\subsection{Low-rank Tensor Estimation}

One of the most widely adopted low-rank tensor decompositions---which is the focus of this paper---considers low-rank structure under the \emph{tensor singular value decomposition (t-SVD)} \citep{KilmerBHH.SIMAX2013}. Specifically, we assume the ground truth tensor $\bcX_{\star}$ of tensor tubal rank-$r$ that admits the following t-SVD:
\begin{align*}
\bcX_{\star} = \bcU_{\star} \ast_{\bPhi} \bcG_{\star} \ast_{\bPhi} \bcV_{\star}^H,
\end{align*}
where $\ast_{\bPhi}$ denotes the t-product evoked by the transformation matrix $\bPhi$, $\bcU_{\star} \in \R^{n_1 \times r \times n_3}$ and $\bcV_{\star} \in \R^{n_2 \times r \times n_3}$ are orthogonal tensors, and $\bcG_{\star} \in \R^{r \times r \times n_3}$ is an f-diagonal tensor (see Definitions in Section~\ref{ssec:tsvddef}). Compared with other tensor decomposition strategies such as the CANDECOMP/PARAFAC (CP) decomposition \citep{Kiers.2000} and Tucker decomposition \citep{Tucker.1966}, t-SVD can take advantage of structures in both the original and transformed domains. It has been observed that after conducting Discrete Fourier Transform (DFT) along the third dimension of a three-way tensor, the transformed tensor may exhibit low-rank structure in the Fourier domain \citep{ZhangEAHK.CVPR2014}, which can be characterized by the tensor tubal rank and tensor nuclear norm \citep{KilmerBHH.SIMAX2013}. Low-rankness in the spectral domain extends traditional models that require strong low-rankness in the original domain. The t-SVD based models evoked by the tubal rank own the same tight recovery bound as the matrix cases \citep{LuFLY.IJCAI2018}. Another advantage of the t-SVD scheme is its superior capability in capturing the ``spatial-shifting'' correlation that is ubiquitous in real multiway data. Interestingly, the t-SVD can be generalized by replacing DFT with any invertible linear transforms \citep{KernfeldKA.LAA2015}. Driven by these advantages, extensive numerical examples have demonstrated its efficacy in many applications \citep{ZhangA.TSP2017,LuFLY.IJCAI2018,LuPW.CVPR2019,SongNZ.NLAA2020,KilmerHAN.PNAS2021,ZhouLFLY.PAMI2021,KongLL.ML2021,Lu.ICCV2021,QinWZWLH.TIP2022,Wu.ML2026}.

\paragraph{Motivating examples.} We focus on optimization problems that look for low-rank tensors using partial or corrupted observations.
\begin{itemize}
  \item \emph{Tensor RPCA.} We first consider the tensor robust principal component analysis (RPCA) problem, which attempts to find a low-rank tensor that best approximates grossly corrupted observations. Mathematically, imagine that we are given a data tensor
\begin{align}\label{eqn:TRPCAprob}
\bcY = \bcX_{\star} + \bcS_{\star},
\end{align}
where $\bcX_{\star}$ is low-rank and $\bcS_{\star}$ is sparse, and both components are of arbitrary magnitudes. Our goal is to recover $\bcX_{\star}$ and $\bcS_{\star}$ from the corrupted observation $\bcY$ in an efficient manner.
  \item \emph{Tensor completion.} We then study the low-rank tensor completion problem, which aims to recover a low-rank tensor from only a small subset of its revealed observations. Mathematically, we are given entries
\begin{align*}
[\bcX_{\star}]_{i,j,k}, \quad (i,j,k) \in \bOmega,
\end{align*}
where $\bOmega$ denotes the set of the indices of the observed entries. The goal is to recover $\bcX_{\star}$ from the observed entries in $\bOmega$.
  \item \emph{Robust tensor completion (RTC)}. In this problem, we attempt to recover an underlying low-rank tensor by observing a small number of sparsely corrupted entries. Formally, let $\bcY$ be a tensor with a decomposition $\bcY = \bcX_{\star} + \bcS_{\star}$, where $\bcX_{\star}$ is low-rank and $\bcS_{\star}$ is a sparse corruption tensor whose few non-zero entries are arbitrary. The RTC problem is to recover $\bcX_{\star}$ from the observed entries $\{ \bcY_{i,j,k} | (i,j,k) \in \bOmega \}$, where $\bOmega$ denotes the locations of the observed entries in $\bcY$.
  \item \emph{Tensor regression.} We finally consider a linear tensor-structured regression model, where the $i$-th response or observation is given by
  \begin{align*}
  y_i = \langle \bcA_i, \bcX_{\star} \rangle, \quad i = 1, 2, \dots, m,
  \end{align*}
  where $\bcA_i \in \R^{n_1 \times n_2 \times n_3}$ is the $i$-th measurement tensor and $\langle \cdot , \cdot \rangle$ denotes the canonical inner product. The goal is to recover the low-rank tensor $\bcX_{\star}$ from the responses $\by = \{ y_i \}_{i=1}^m$.
\end{itemize}

\subsection{Main Contributions}

In view of the low-rank t-SVD, a natural approach to estimate the low-rank tensor $\bcX_{\star}$ in \eqref{eqn:probsetup} is to parameterize $\bcX = \bcL \ast_{\bPhi} \bcR^H$ by two factor tensors $\bcL \in \R^{n_1 \times r \times n_3}$ and $\bcR \in \R^{n_2 \times r \times n_3}$, and then to estimate the ground truth factors via optimizing the unconstrained least-squares loss function:
\begin{align}
\min_{\bcL \in \R^{n_1 \times r \times n_3}, \bcR \in \R^{n_2 \times r \times n_3}} f(\bcL, \bcR) \coloneq \| \cA(\bcL \ast_{\bPhi} \bcR^H) - \by \|_2^2,
\end{align}
where $\bcR^H$ denotes the conjugate transpose of $\bcR$ (see Definition~\ref{def:conjtrans}) and we use the vector $\ell_2$ norm to quantify the error. Despite the nonconvexity of the objective function, a simple and intuitive approach is to update the tensor factors via gradient descent. While remarkable progress has been made in recent years in the matrix setting \citep{ChiLC.TSP2019}, this line of research still remains relatively unexplored for the tensor setting, especially when it comes to provable sample and computational guarantees.

Motivated by the recent success of scaled gradient descent (ScaledGD) \citep{TongMC.TSP2021,TongMC.JMLR2021,TongMC.JMLR2022,DongTMC.IMA2023} for accelerating ill-conditioned low-rank estimation, we propose to update the tensor factors iteratively along the scaled gradient directions:
\begin{align}\label{eqn:scaledGD}
\bcL_{t+1} & = \bcL_t - \eta \nabla_{\bcL} f(\bcL_t, \bcR_t) \ast_{\bPhi} (\bcR_t^H \ast_{\bPhi} \bcR_t)^{-1},  \nonumber \\
\bcR_{t+1} & = \bcR_t - \eta \nabla_{\bcR} f(\bcL_t, \bcR_t) \ast_{\bPhi} (\bcL_t^H \ast_{\bPhi} \bcL_t)^{-1},
\end{align}
where $\eta > 0$ is the learning rate and $\nabla_{\bcL} f(\bcL_t, \bcR_t)$ (resp., $\nabla_{\bcR} f(\bcL_t, \bcR_t)$) denotes the gradient of $f$ with respect to $\bcL_t$ (resp., $\bcR_t$) at the $t$-th iteration. Compared with vanilla gradient descent, our method scales or preconditions the search directions of $\bcL_t$ and $\bcR_t$ in \eqref{eqn:scaledGD} by $(\bcR_t^H \ast_{\bPhi} \bcR_t)^{-1}$ and $(\bcL_t^H \ast_{\bPhi} \bcL_t)^{-1}$, respectively. With the preconditioners, ScaledGD updates the tensor factors with better search directions and larger step sizes. As the preconditioners are computed by inverting two $r \times r \times n_3$ tensors, whose size is much smaller than the dimension of the tensor factors, each iteration of ScaledGD only adds minimal overhead to the gradient computation, while maintaining a linear rate of convergence regardless of the condition number.

\paragraph{Theoretical guarantees.} We investigate the theoretical properties of ScaledGD, which are notably more challenging than the matrix counterpart. Our model is more generic as it is allowed to use any invertible linear transforms that satisfy certain conditions. We establish that ScaledGD---when initialized with a tailored spectral initialization scheme---can achieve linear convergence at a rate \emph{independent} of the condition number of the ground truth tensor with near-optimal sample complexities. To be concrete, we have the following guarantees:
\begin{itemize}
  \item \emph{Tensor RPCA.} Under the deterministic corruption model \citep{LuFCLLY.PAMI2020}, ScaledGD converges linearly to the true low-rank tensor in both the Frobenius norm and the entrywise $\ell_{\infty}$ norm, as long as the level of corruptions---measured in terms of the fraction of nonzero entries per tube in each mode---does not exceed the order of $\alpha \lesssim \frac{n_3 \sqrt{n_{(2)} n_2}}{\mu s_r^{1.5} \kappa (n_1 + n_2 n_3)} \wedge \frac{n_3}{\mu^2 s_r^2 \kappa^2}$, where $n_{(2)} = \min(n_1, n_2)$, $\mu$ and $\kappa$ are the incoherence parameter and the condition number of the ground truth tensor $\bcX_{\star}$, respectively, and $s_r$ is the summation of all the entries in the multi-rank of $\bcX_{\star}$ (to be formally defined later).
  \item \emph{Tensor completion.} Under the Bernoulli sampling model, ScaledGD combined with a properly designed scaled projection step reaches $\epsilon$-accuracy in $\cO (\log(1/\epsilon))$ iterations, as long as the sample complexity satisfies $p \gtrsim \Big( \frac{\mu s_r (n_1 + n_2) \log( (n_1 \vee n_2) n_3 )}{n_1 n_2 n_3} \vee \frac{\mu^2 s_r^2 \kappa^4 \ell \log( (n_1 \vee n_2) n_3 )}{(n_1 \wedge n_2) n_3^2} \Big)$, where $\ell$ is a constant related to the transformation matrix $\bPhi$.
  \item \emph{Robust tensor completion.} Under the random Bernoulli observation model, ScaledGD succeeds with high probability as long as the fraction of corrupted entries $\alpha \lesssim \frac{p n_3 \sqrt{n_{(2)} n_2}}{\mu s_r^{1.5} \kappa (n_1 + n_2 n_3)} \wedge \frac{p n_3}{\mu^2 s_r^2 \kappa^2}$ and the probability of observation satisfies $p \gtrsim \log( (n_1 \vee n_2) n_3 ) \max \Big( \frac{\mu^{1.5} s_r^{1.5} (n_1 + n_2)}{n_3 \sqrt{n_1 n_2}}, \frac{\mu^2 s_r^2 \ell}{n_3 \sqrt{(n_1 \wedge n_2) n_3}}, \frac{\mu s_r \kappa^2}{\sqrt{(n_1 \wedge n_2) n_3}} \Big)$. Note that the recent work \citet{CaiKLY.PAMI2026} proposed a scalable method for robust matrix completion based on ScaledGD, the recovery guarantee derived in \citet{CaiKLY.PAMI2026} is under the assumption that the matrix is fully observed ($p = 1$). Remarkably, our analysis extends this result to the tensor case and considers the general case where the tensor is only partially observed.
  \item \emph{Tensor regression.} As long as the measurement operator satisfies the tensor restricted isometry property (TRIP) \citep{RauhutSS.LAA2017} with a TRIP constant $\delta_{2r} \lesssim 1/(\sqrt{\frac{s_r}{\ell}} \kappa)$, ScaledGD reaches $\epsilon$-accuracy in $\cO(\log(1/\epsilon))$ iterations when initialized by the spectral method.
\end{itemize}

\subsection{Related Work}

\paragraph{Scaled first-order methods for low-rank matrix recovery.} Variants of the scaled gradient methods have been proposed for low-rank matrix recovery \citep{MishraAS.arXiv2012,MishraS.SIOPT2016,TannerW.ACHA2016,TongMC.JMLR2021} that aim to approximate the low-rank matrix by the product of two factor matrices, where strong statistical guarantees were first established in \citet{TongMC.JMLR2021}. The matrix RPCA method in \citet{CaiLY.NeurIPS2021} extended \citet{TongMC.JMLR2021} by using a threshold-based trimming procedure to identify the sparse matrix. \citet{JiaWPFM.NeurIPS2023} proved the global convergence of ScaledGD and alternating scaled gradient descent (AltScaledGD) for the nonconvex low-rank matrix factorization problem. There have been many other efforts in the literature to solve the low-rank matrix recovery problem with provable nonconvex optimization procedures; see, e.g., \citet{ChenW.arXiv2015,GuWL.AISTATS2016,YiPCC.NIPS2016,ZhengL.arXiv2016,GeJZ.ICML2017,CherapanamjeriGJ.ICML2017,DuHL.NeurIPS2018,ZengS.TSP2018,LiZT.IMA2019,YeD.NeurIPS2021} for an incomplete list.

\paragraph{Low-rank tensor estimation using t-SVD.} Turning to the tensor case, unfolding-based approaches typically lead to performance degradation since they neglect the high-order interactions within tensor data. Recently, motivated by the notion of tensor-tensor product (t-product) \citep{KilmerBHH.SIMAX2013} and t-SVD scheme, a new tensor nuclear norm was proposed and applied in tensor RPCA \citep{LuFCLLY.PAMI2020,GaoZXXGT.PAMI2021,Lu.ICCV2021}, tensor completion \citep{ZhangA.TSP2017,LuFLY.IJCAI2018,LuPW.CVPR2019,SongNZ.NLAA2020,QinWZWLH.TIP2022}, and tensor data clustering \citep{ZhouLFLY.PAMI2021,Wu.AISTATS2024,Wu.ML2026}. To accelerate tensor RPCA, \citet{QiuWTMY.ICML2022} proposed two alternating projection algorithms that exhibit linear convergence behavior. While this paper and \citet{QiuWTMY.ICML2022} share the same tensor RPCA setup, our work is fundamentally different from \citet{QiuWTMY.ICML2022}. The methods proposed in \citet{QiuWTMY.ICML2022} are based on the idea of alternating projection, which can be considered an extension of \citet{NetrapalliNSAJ.NIPS2014,CaiCW.JMLR2019} in the matrix case to the case of tensors. In contrast, our work draws inspiration from \citet{TongMC.JMLR2021,CaiLY.NeurIPS2021} and parameterizes the low-rank term in \eqref{eqn:TRPCAprob} by two low-rank factors. To the best of our knowledge, this paper is the first work that provides rigorous statistical and computational guarantees for scaled gradient methods based on the t-SVD framework. There are many technical novelty in our analysis compared to \citet{TongMC.JMLR2021,CaiLY.NeurIPS2021}. Specifically, the optimization problems in \citet{TongMC.JMLR2021,CaiLY.NeurIPS2021} primarily involve matrix-valued variables, whereas our optimization problem primarily involves tensor-valued variables. Because of this reason, all the inequalities involving matrices in \citet{TongMC.JMLR2021,CaiLY.NeurIPS2021} must be proved using some tensor properties involving t-product in our analysis.

\paragraph{Provable low-rank tensor estimation using other decompositions.} As mentioned earlier, it is not straightforward to generalize low-rank matrix estimation methods to tensors because a tensor can be decomposed in many ways. Beyond the t-SVD, common tensor decompositions include CP \citep{Kiers.2000}, Tucker \citep{Tucker.1966}, HOSVD \citep{LathauwerMV.SIMAX2000}, and tensor-train \citep{Oseledets.SISC2011}. Several works for low-rank tensor estimation relying on these decompositions have been proposed \citep{GoldfarbQ.SIMAX2014,AnandkumarJSN.AISTATS2016,DriggsBB.arXiv2019,XiaY.FoCM2019,YuanZGC.SPIC2019,YangZJMH.AMC2020,CaiLX.JASA2022,QinWZ.JMLR2024}. Recently, the authors in \citet{TongMC.JMLR2022,DongTMC.IMA2023} developed efficient algorithms for tensor RPCA, tensor completion and tensor regression under the Tucker decomposition using scaled gradient descent. In contrast to our work, the low-rank tensor in \citet{TongMC.JMLR2022,DongTMC.IMA2023} is factorized as $(\bU, \bV, \bW) \cdot \bcS$, and four factors are needed to be estimated, leading to a much more complicated nonconvex landscape than our case.

\subsection{Paper Organization}

The rest of this paper is organized as follows. Section~\ref{sec:prelim} presents some notations and preliminaries. Section~\ref{sec:theory} describes the applications of the proposed ScaledGD method to the four problems studied in this paper with theoretical guarantees in terms of both statistical and computational complexities. In Section~\ref{sec:proofoutline}, we outline the proof for our main results. Section~\ref{sec:experiment} illustrates the superior empirical performance of the proposed method. Finally, we conclude in Section~\ref{sec:conclude}. The proofs are deferred to the appendix.

\section{Notations and Preliminaries}
\label{sec:prelim}

In this section, we introduce the notations and provide a concise overview of t-SVD, which establishes the groundwork for the development of our algorithms.

\paragraph{Notations.} We use lowercase, bold lowercase, bold uppercase, and bold calligraphic letters for scalars, vectors, matrices, and tensors, respectively. The real and complex Euclidean spaces are denoted as $\R$ and $\C$, respectively. Superscript $H$ and $T$ denote conjugate transpose and transpose, respectively. For a three-way tensor $\bcA \in \C^{n_1 \times n_2 \times n_3}$, we use $\bcA(i,:,:)$, $\bcA(:,i,:)$ and $\bcA(:,:,i)$ to denote its $i$-th horizontal, lateral and frontal slice, respectively. The $(i,j,k)$-th entry of $\bcA$ is denoted as $\bcA_{i,j,k}$. For brevity, the frontal slice $\bcA(:,:,i)$ and the lateral slice $\bcA(:,i,:)$ are denoted compactly as $\bA^{(i)}$ and $\bcA_{(i)}$, respectively. The $(i,j)$-th mode-1, mode-2 and mode-3 tubes are denoted by $\bcA(:,i,j)$, $\bcA(i,:,j)$ and $\bcA(i,j,:)$, respectively. For a matrix $\bA$, its ($i,j$)-th entry is denoted as $\bA_{i,j}$. The $i$-th row and the $i$-th column of $\bA$ are denoted by $\bA_{i,\cdot}$ and $\bA_{\cdot,i}$, respectively. The $n \times n$ identity matrix is denoted by $\bI_n$. The inner product between two matrices $\bA, \bB \in \C^{n_1 \times n_2}$ is defined as $\langle \bA, \bB \rangle = \tr(\bA^H \bB)$, where $\tr(\cdot)$ denotes the matrix trace. The inner product between two tensors $\bcA, \bcB \in \C^{n_1 \times n_2 \times n_3}$ is defined as $\langle \bcA, \bcB \rangle = \sum_{i=1}^{n_3} \langle \bA^{(i)}, \bB^{(i)} \rangle$. The facewise product between two tensors $\bcA \in \C^{n_1 \times n_2 \times n_3}$ and $\bcB \in \C^{n_2 \times n_4 \times n_3}$, denoted by $\bcA \triangle \bcB$, is a tensor $\bcC \in \C^{n_1 \times n_4 \times n_3}$ such that each frontal slice of $\bcC$ is the matrix multiplication of the corresponding frontal slices of $\bcA$ and $\bcB$, that is, $\bC^{(i)} = \bA^{(i)} \bB^{(i)}$ \citep{KernfeldKA.LAA2015}. Let $a \vee b = \max ( a, b )$, $a \wedge b = \min ( a, b )$, and $[a] \coloneq \{1, 2, \dots, a\}$. Further, $f(n) \gtrsim g(n)$ (resp., $f(n) \lesssim g(n)$) means $|f(n)|/|g(n)| \geq c$ (resp., $|f(n)|/|g(n)| \leq c$) for some constant $c > 0$ when $n$ is sufficiently large. We use the terminology ``with high probability'' to denote the event happens with probability at least $1 - c_1 n^{-c_2}$, where $c_1$ and $c_2$ denote positive universal constants, which are allowed to differ from line to line.

Some norms of vector, matrix and tensor are used. The $\ell_1$, $\ell_{\infty}$, $\ell_{2,\infty}$ and Frobenius norms of $\bcA$ are defined as $\|\bcA\|_1 = \sum_{i,j,k} |\bcA_{i,j,k}|$, $\|\bcA\|_{\infty} = \max_{i,j,k} |\bcA_{i,j,k}|$, $\|\bcA\|_{2,\infty} = \max_i \| \bcA(i,:,:) \|_F$ and $\|\bcA\|_F = \sqrt{\sum_{i,j,k} |\bcA_{i,j,k}|^2}$, respectively. For $\bv \in \C^n$, its $\ell_2$ norm is denoted as $\| \bv \|_2 = \sqrt{\sum_i |v_i|^2}$ and we use $\ell_0$ norm to denote the number of nonzero elements in $\bv$. The spectral norm of a matrix $\bA$ is denoted as $\| \bA \| = \max_i \sigma_i (\bA)$, where $\sigma_i (\bA)$'s are the singular values of $\bA$. The matrix nuclear norm of $\bA$ is $\| \bA \|_{\ast} = \sum_i \sigma_i (\bA)$.

\subsection{Tensor Singular Value Decomposition}
\label{ssec:tsvddef}

The framework of tensor singular value decomposition (t-SVD) is based on the t-product under an invertible linear transform $L$ \citep{KernfeldKA.LAA2015}. In this paper, the transformation matrix $\bPhi$ defining the transform $L$ is restricted to be orthogonal, i.e., $\bPhi \in \C^{n_3 \times n_3}$ satisfying
\begin{align}\label{eqn:phiconstraint}
\bPhi \bPhi^H = \bPhi^H \bPhi = \ell \bI_{n_3},
\end{align}
where $\ell > 0$ is a constant. We define the associated linear transform $L(\cdot)$ with its inverse mapping $L^{-1}(\cdot)$ on any $\bcA \in \R^{n_1 \times n_2 \times n_3}$ as
\begin{align}\label{eqn:mode3prod}
\xoverline{\bcA} \coloneq L(\bcA) = \bcA \times_3 \bPhi \quad \mathrm{and} \quad L^{-1}(\bcA) \coloneq \bcA \times_3 \bPhi^{-1},
\end{align}
where $\times_3$ denotes the mode-3 tensor-matrix product \citep{KoldaB.Rev2009}, i.e., for any $\bcA \in \C^{n_1 \times n_2 \times n_3}$ and $\bM \in \C^{n_4 \times n_3}$, $\bcB = \bcA \times_3 \bM \in \C^{n_1 \times n_2 \times n_4}$ such that
\begin{align*}
\bcB_{i,j,l} = \sum_{k=1}^{n_3} \bcA_{i,j,k} \bM_{l,k}, \quad i \in [n_1], j \in [n_2], l \in [n_4].
\end{align*}
\begin{definition}[t-product \citep{KernfeldKA.LAA2015}]
The t-product of any $\bcA \in \C^{n_1 \times n_2 \times n_3}$ and $\bcB \in \C^{n_2 \times n_4 \times n_3}$ under transform $L$ in \eqref{eqn:mode3prod}, is a unique tensor $\bcC = \bcA \ast_{\bPhi} \bcB \in \C^{n_1 \times n_4 \times n_3}$ such that $L(\bcC) = L(\bcA) \triangle L(\bcB)$.
\end{definition}
We denote $\widebar{\bA} \in \C^{n_1 n_3 \times n_2 n_3}$ as the block diagonal matrix with its $i$-th diagonal block corresponding to the $i$-th frontal slice $\widebar{\bA}^{(i)}$ of $\xoverline{\bcA}$ in \eqref{eqn:mode3prod}, i.e.,
\begin{align}\label{eqn:defbdiag}
\widebar{\bA} = \mathtt{bdiag}(\xoverline{\bcA}) =
\begin{bmatrix}
\widebar{\bA}^{(1)} &  &  \\
 & \ddots &  \\
 & & \widebar{\bA}^{(n_3)}
\end{bmatrix},
\end{align}
where $\mathtt{bdiag}$ is an operator that maps $\xoverline{\bcA}$ to $\widebar{\bA}$. Using \eqref{eqn:phiconstraint}, we have the following properties:
\begin{align}\label{eqn:tensorproperty}
\|\bcA\|_F = \frac{1}{\sqrt{\ell}} \|\xoverline{\bcA}\|_F = \frac{1}{\sqrt{\ell}} \|\widebar{\bA}\|_F \quad \mathrm{and} \quad \langle \bcA, \bcB \rangle = \frac{1}{\ell} \langle \xoverline{\bcA}, \xoverline{\bcB} \rangle = \frac{1}{\ell} \langle \widebar{\bA}, \widebar{\bB} \rangle.
\end{align}
\begin{definition}[Conjugate transpose \citep{SongNZ.NLAA2020}]\label{def:conjtrans}
The conjugate transpose of a tensor $\bcA \in \C^{n_1 \times n_2 \times n_3}$ under $L$ is the tensor $\bcA^H \in \C^{n_2 \times n_1 \times n_3}$ satisfying $L(\bcA^H)^{(i)} = (L(\bcA)^{(i)})^H$, $i = 1, 2, \dots, n_3$.
\end{definition}
\begin{definition}[Identity tensor \citep{SongNZ.NLAA2020}]
The identity tensor $\bcI_n \in \C^{n \times n \times n_3}$ under $L$ is a tensor such that each frontal slice of $L(\bcI_n) = \xoverline{\bcI}_n$ is the identity matrix $\bI_n$. Then $\bcI_n = L^{-1}(\xoverline{\bcI}_n)$ gives the identity tensor under $L$.
\end{definition}
\begin{definition}[Orthogonal tensor \citep{SongNZ.NLAA2020}]
A tensor $\bcQ \in \C^{n \times n \times n_3}$ is orthogonal if it satisfies $\bcQ \ast_{\bPhi} \bcQ^H = \bcQ^H \ast_{\bPhi} \bcQ = \bcI_n$.
\end{definition}
\begin{definition}[Invertible tensor]
A tensor $\bcA \in \R^{r \times r \times n_3}$ is invertible if there exists a tensor $\bcB \in \R^{r \times r \times n_3}$ such that $\bcA \ast_{\bPhi} \bcB = \bcB \ast_{\bPhi} \bcA = \bcI_r$. The set of invertible tensors in $\R^{r \times r \times n_3}$ is denoted by $\mathrm{GL}(r)$.
\end{definition}
\begin{definition}[t-SVD \citep{SongNZ.NLAA2020}]
Let $L$ be any invertible linear transform in \eqref{eqn:mode3prod}. For any $\bcA \in \C^{n_1 \times n_2 \times n_3}$, it can be factorized as $\bcA = \bcU \ast_{\bPhi} \bcG \ast_{\bPhi} \bcV^H$, where $\bcU \in \C^{n_1 \times n_1 \times n_3}$ and $\bcV \in \C^{n_2 \times n_2 \times n_3}$ are orthogonal, and $\bcG \in \C^{n_1 \times n_2 \times n_3}$ is an f-diagonal tensor, whose frontal slices are diagonal matrices.
\end{definition}
For any $\bcA$, we have the following relationship between its t-SVD and the matrix SVD of its block-diagonal matrix \citep{KernfeldKA.LAA2015}:
\begin{align*}
\bcA = \bcU \ast_{\bPhi} \bcG \ast_{\bPhi} \bcV^H \Leftrightarrow \widebar{\bA} = \widebar{\bU} \cdot \widebar{\bG} \cdot \widebar{\bV}^H,
\end{align*}
where $\cdot$ denotes the matrix multiplication. In words, the t-product in the spatial domain corresponds to matrix multiplication of the frontal slices in the transformed domain. Note that when $n_3 = 1$, the operator $\ast_{\bPhi}$ reduces to matrix multiplication.
\begin{definition}[Tensor multi-rank and tubal rank \citep{KilmerHAN.PNAS2021}]
The multi-rank of a tensor $\bcA \in \C^{n_1 \times n_2 \times n_3}$ is a vector $\br \in \R^{n_3}$, with its $i$-th entry being the rank of the $i$-th frontal slice of $\xoverline{\bcA}$, i.e., $r_i = \mathrm{rank} (\widebar{\bA}^{(i)})$. Let $\bcA = \bcU \ast_{\bPhi} \bcG \ast_{\bPhi} \bcV^H$ be the t-SVD of $\bcA$. The tensor tubal rank $\mathrm{rank}_t(\bcA)$ under $L$ is defined as the number of nonzero singular tubes of $\bcG$, i.e.,
\begin{align*}
\mathrm{rank}_t(\bcA) = \sharp \{ i: \bcG(i,i,:) \neq \bzero \} = \max(r_1, \dots, r_{n_3}).
\end{align*}
\end{definition}
We denote
\begin{align}\label{eqn:rankrproj}
\mathbb{P}_r(\bcA) = \argmin_{\widetilde{\bcA} : \mathrm{rank}_t(\widetilde{\bcA}) \leq r} \| \bcA - \widetilde{\bcA} \|_F^2
\end{align}
as the tubal rank-$r$ approximation of $\bcA$, which is given by the top-$r$ t-SVD of $\bcA$ by the tensor Eckart-Young theorem \citep{ZhangEAHK.CVPR2014,KilmerHAN.PNAS2021}.
\begin{definition}[Tensor nuclear norm and spectral norm \citep{LuPW.CVPR2019}]
The tensor nuclear norm of $\bcA \in \C^{n_1 \times n_2 \times n_3}$ under $L$ is defined as $\| \bcA \|_{\ast} = \frac{1}{\ell} \sum_{i=1}^{n_3} \| \widebar{\bA}^{(i)} \|_{\ast} = \frac{1}{\ell} \| \widebar{\bA} \|_{\ast}$. The spectral norm of $\bcA$ is defined as $\| \bcA \| = \| \widebar{\bA} \|$.
\end{definition}

\section{Main Results}
\label{sec:theory}

This section is devoted to introducing ScaledGD and establishing its performance guarantee for various low-rank tensor estimation problems.

\subsection{Models and Assumptions}

Suppose that the ground truth tubal rank-$r$ tensor $\bcX_{\star}$ with multi-rank $\br$ admits the following compact t-SVD $\bcX_{\star} = \bcU_{\star} \ast_{\bPhi} \bcG_{\star} \ast_{\bPhi} \bcV_{\star}^H$, where $\bcU_{\star} \in \R^{n_1 \times r \times n_3}$, $\bcV_{\star} \in \R^{n_2 \times r \times n_3}$, and $\bcG_{\star} \in \R^{r \times r \times n_3}$. Define the ground truth low-rank factors as
\begin{align*}
\bcL_{\star} = \bcU_{\star} \ast_{\bPhi} \bcG_{\star}^{\frac{1}{2}} \quad \mathrm{and} \quad \bcR_{\star} = \bcV_{\star} \ast_{\bPhi} \bcG_{\star}^{\frac{1}{2}}
\end{align*}
so that $\bcX_{\star} = \bcL_{\star} \ast_{\bPhi} \bcR_{\star}^H$. Here, the ``square root'' of a tensor $\bcA$, denoted by $\bcA^{\frac{1}{2}}$, is obtained by setting $\bcA^{\frac{1}{2}} \coloneq \bcB = L^{-1} (\xoverline{\bcB})$, where the $i$-th frontal slice of $\xoverline{\bcB}$ is $\widebar{\bB}^{(i)} = (\widebar{\bA}^{(i)})^{\frac{1}{2}}$. The factored representation is not unique in that for any invertible tensor $\bcQ \in \mathrm{GL}(r)$, one has $\bcL_{\star} \ast_{\bPhi} \bcR_{\star}^H = (\bcL_{\star} \ast_{\bPhi} \bcQ) \ast_{\bPhi} (\bcR_{\star} \ast_{\bPhi} \bcQ^{-H})^H$. We define the following two important singular values of tensor $\bcX_{\star}$ as
\begin{align*}
& \bar{\sigma}_1 (\bcX_{\star}) = \max \{ \xoverline{\bcG}_{i,i,k} | \xoverline{\bcG}_{i,i,k} > 0, i \leq r_k, k \in [n_3] \}  \\
\mathrm{and} \quad & \bar{\sigma}_{s_r} (\bcX_{\star}) = \min \{ \xoverline{\bcG}_{i,i,k} | \xoverline{\bcG}_{i,i,k} > 0, i \leq r_k, k \in [n_3] \}.
\end{align*}
We first introduce the condition number of the tensor $\bcX_{\star}$.
\begin{definition}[Condition number]\label{def:condnumber}
The condition number $\kappa$ of $\bcX_{\star}$ is defined as
\begin{align*}
\kappa \coloneq \bar{\sigma}_1 (\bcX_{\star}) / \bar{\sigma}_{s_r} (\bcX_{\star}).
\end{align*}
\end{definition}
Another parameter is the incoherence parameter, which is crucial in determining the well-posedness of low-rank tensor estimation.
\begin{definition}[Incoherence]\label{def:incoherence}
For a tubal rank-$r$ tensor $\bcX_{\star} \in \R^{n_1 \times n_2 \times n_3}$, assume that the multi-rank of $\bcX_{\star}$ is $\br$ and it has the t-SVD $\bcX_{\star} = \bcU_{\star} \ast_{\bPhi} \bcG_{\star} \ast_{\bPhi} \bcV_{\star}^H$. Then $\bcX_{\star}$ is said to satisfy the tensor incoherence conditions with parameter $\mu$ if
\begin{align}\label{eqn:incoherence}
\| \bcU_{\star} \|_{2,\infty} = \max_{i \in [n_1]} \| \bcU_{\star}^H \ast_{\bPhi} \mathring{\boldsymbol{\ce}}_i \|_F \leq \sqrt{\frac{\mu s_r}{n_1 n_3 \ell}} \quad \mathrm{and} \quad \| \bcV_{\star} \|_{2,\infty} = \max_{j \in [n_2]} \| \bcV_{\star}^H \ast_{\bPhi} \mathring{\boldsymbol{\ce}}_j \|_F \leq \sqrt{\frac{\mu s_r}{n_2 n_3 \ell}}.
\end{align}
Here, $\mathring{\boldsymbol{\ce}}_i$ is a tensor of size $n_1 \times 1 \times n_3$ with the entries of the $(i,1)$-th mode-3 tube of $L(\mathring{\boldsymbol{\ce}}_i)$ equaling 1 and the rest equaling 0, and $s_r = \sum_{k=1}^{n_3} r_k$.
\end{definition}
Notice that when all the $r_k$'s are equal to the tubal rank $r$, i.e., $s_r = n_3 r$, the above conditions will be equivalent to the ones in \citet[Proposition 1]{Lu.ICCV2021}. Roughly speaking, a small incoherence parameter ensures that the tensor columns are not correlated with the standard tensor basis, i.e., the energy of the tensor is evenly distributed across its entries. To track the performance of ScaledGD iterates $\bcF_t = [\bcL_t^H, \bcR_t^H]^H$, one needs a distance metric that properly takes account of the factor ambiguity due to invertible transforms. Motivated by the analysis in \citet{TongMC.JMLR2021}, we consider the following distance metric that resolves the ambiguity in the t-SVD.
\begin{definition}[Distance metric]\label{def:distF}
Let $\bcF = \begin{bmatrix} \bcL \\ \bcR \end{bmatrix} \in \R^{(n_1+n_2) \times r \times n_3}$ and $\bcF_{\star} = \begin{bmatrix} \bcL_{\star} \\ \bcR_{\star} \end{bmatrix} \in \R^{(n_1+n_2) \times r \times n_3}$, denote
\begin{align}
\dist(\bcF, \bcF_{\star}) = \sqrt{\inf_{\bcQ \in \mathrm{GL}(r)} \| (\bcL \ast_{\bPhi} \bcQ - \bcL_{\star}) \ast_{\bPhi} \bcG_{\star}^{\frac{1}{2}} \|_F^2 + \| (\bcR \ast_{\bPhi} \bcQ^{-H} - \bcR_{\star}) \ast_{\bPhi} \bcG_{\star}^{\frac{1}{2}} \|_F^2}.
\end{align}
If the infimum is attained at the argument $\bcQ$, it is called the optimal alignment tensor between $\bcF$ and $\bcF_{\star}$.
\end{definition}
As indicated in Appendix~\ref{sec:techlemmas}, for the ScaledGD iterates $\{ \bcF_t \}$, the optimal alignment tensors $\{ \bcQ_t \}$ always exist and hence are well-defined.

\subsection{Tensor RPCA}

Suppose we have observed a data tensor $\bcY \in \R^{n_1 \times n_2 \times n_3}$ of the form $\bcY = \bcX_{\star} + \bcS_{\star}$, where $\bcX_{\star}$ is a low-rank tensor with tubal rank-$r$ and $\bcS_{\star}$ is a sparse tensor---in which the number of nonzero entries is much smaller than its ambient dimension---modeling corruptions in the observations due to sensor failures, malicious attacks, or other system errors. Given $\bcY$ and $r$, the goal of tensor RPCA is to reliably estimate the two tensors $\bcX_{\star}$ and $\bcS_{\star}$.

Following the matrix case in \citet{ChandrasekaranSPW.SIOPT2011,NetrapalliNSAJ.NIPS2014}, we consider a deterministic sparsity model for $\bcS_{\star}$, in which $\bcS_{\star}$ contains at most $\alpha$-fraction of nonzero entries per tube.
\begin{definition}\label{def:sparsity}
A sparse tensor $\bcS_{\star} \in \R^{n_1 \times n_2 \times n_3}$ is $\alpha$-sparse, i.e., $\bcS_{\star} \in \mathscr{S}_{\alpha}$, where we denote
\begin{align*}
\mathscr{S}_{\alpha} \coloneq \{ \bcS \in \R^{n_1 \times n_2 \times n_3} & : \| \bcS(:,j,k) \|_0 \leq \alpha n_1, \| \bcS(i,:,k) \|_0 \leq \alpha n_2, \| \bcS(i,j,:) \|_0 \leq \alpha n_3,  \\
&\qquad\qquad\qquad\qquad ~\mathrm{for}~ i \in [n_1], j \in [n_2], k \in [n_3] \}.
\end{align*}
\end{definition}

Motivated by the works in \citet{TongMC.JMLR2021,CaiLY.NeurIPS2021}, we parameterize $\bcX = \bcL \ast_{\bPhi} \bcR^H$ by two low-rank factors $\bcL \in \R^{n_1 \times r \times n_3}$ and $\bcR \in \R^{n_2 \times r \times n_3}$ that are more memory-efficient, and instead optimize over the factors by solving the following optimization problem:
\begin{align}\label{eqn:TRPCAprobform}
\min_{ \bcF = [ \bcL^H, \bcR^H ]^H \in \R^{(n_1+n_2) \times r \times n_3}, \bcS \in \R^{n_1 \times n_2 \times n_3} } f(\bcF, \bcS) \coloneq \frac{1}{2} \| \bcL \ast_{\bPhi} \bcR^H + \bcS - \bcY \|_F^2.
\end{align}

Despite the nonconvexity of the objective function, one might be tempted to update the tensor factors via gradient descent, which, however, likely to converge slowly even for moderately ill-conditioned tensors \citep{YiPCC.NIPS2016,TongMC.JMLR2021,HanWZ.AOS2022}. Similar to the algorithms in \citet{TongMC.JMLR2021,DongTMC.IMA2023}, our algorithm alternates between corruption removal and factor refinements, where $(\bcL, \bcR)$ are updated via the proposed ScaledGD algorithm, and $\bcS$ is updated by soft thresholding. Specifically, we use the following soft-shrinkage operator $\cT_{\zeta} (\cdot)$ that sets entries with magnitudes smaller than $\zeta$ to 0, while uniformly shrinking the magnitudes of the other entries by $\zeta$, defined as
\begin{align}\label{eqn:soft-thresholding}
[\cT_{\zeta} (\bcM)]_{i,j,k} \coloneq \sgn([\bcM]_{i,j,k}) \cdot \max ( 0, |[\bcM]_{i,j,k}| - \zeta ).
\end{align}
The sparse outlier tensor is updated via
\begin{align}
\bcS_{t+1} = \cT_{\zeta_{t+1}}(\bcY - \bcL_t \ast_{\bPhi} \bcR_t^H)
\end{align}
with the schedule $\zeta_t$ to be specified shortly.

To complete the algorithm description, we still need to specify how to initialize the algorithm. We start with initializing the sparse tensor by $\bcS_0 = \cT_{\zeta_0} (\bcY)$ to remove the obvious outliers. Next, for the low-rank component, we set $\bcL_0 = \bcU_0 \ast_{\bPhi} \bcG_0^{\frac{1}{2}}$ and $\bcR_0 = \bcV_0 \ast_{\bPhi} \bcG_0^{\frac{1}{2}}$, where $\bcU_0 \ast_{\bPhi} \bcG_0 \ast_{\bPhi} \bcV_0^H$ is the best tubal rank-$r$ approximation of $\bcY - \bcS_0$. Combining all the steps mentioned above, we can now formally present the algorithm in Algorithm~\ref{algo:ScaledGDTRPCA}, which we dub ScaledGD for simplicity. ScaledGD costs only $\cO(n_1 n_2 n_3 r + n_1 n_2 n_3^2 + (n_1 + n_2) n_3 r^2 + n_3 r^3)$ flops per iteration provided $r \ll n_1 \wedge n_2$. For some special transforms, e.g., DFT, the per-iteration complexity is $\cO(n_1 n_2 n_3 r + n_1 n_2 n_3 \log(n_3) + (n_1 + n_2) n_3 r^2 + n_3 r^3)$. The breakdown of ScaledGD's complexity is provided in Appendix~\ref{sec:complexity}.

\begin{algorithm}[t]
\caption{ScaledGD for tensor robust PCA with spectral initialization}
\label{algo:ScaledGDTRPCA}
\textbf{Input:} Observed tensor $\bcY$, the transformation matrix $\bPhi$ associated with the transform $L$, the tubal rank $r$, learning rate $\eta$, maximum number of iterations $T$, and threshold schedule $\{ \zeta_t \}_{t=0}^T$.
\begin{algorithmic}
\State \textbf{Spectral initialization:} Let $\bcU_0 \ast_{\bPhi} \bcG_0 \ast_{\bPhi} \bcV_0^H$ be the top-$r$ t-SVD of $\bcY - \bcS_0$, where $\bcS_0 = \cT_{\zeta_0}(\bcY)$.
\begin{align*}
\mathrm{Set} \quad \bcL_0 = \bcU_0 \ast_{\bPhi} \bcG_0^{\frac{1}{2}} \quad \mathrm{and} \quad \bcR_0 = \bcV_0 \ast_{\bPhi} \bcG_0^{\frac{1}{2}}.
\end{align*}
\State \textbf{Scaled gradient updates:} \textbf{for} $t = 0, 1, \dots, T-1$ \textbf{do}
\begin{align}\label{eqn:TRPCAupdate}
\bcS_{t+1} & = \cT_{\zeta_{t+1}}(\bcY - \bcL_t \ast_{\bPhi} \bcR_t^H)  \nonumber \\
\bcL_{t+1} & = \bcL_t - \eta (\bcL_t \ast_{\bPhi} \bcR_t^H + \bcS_{t+1} - \bcY) \ast_{\bPhi} \bcR_t \ast_{\bPhi} (\bcR_t^H \ast_{\bPhi} \bcR_t)^{-1}  \\
\bcR_{t+1} & = \bcR_t - \eta (\bcL_t \ast_{\bPhi} \bcR_t^H + \bcS_{t+1} - \bcY)^H \ast_{\bPhi} \bcL_t \ast_{\bPhi} (\bcL_t^H \ast_{\bPhi} \bcL_t)^{-1}.  \nonumber
\end{align}
\end{algorithmic}
\textbf{Output:} The recovered low-rank tensor $\bcX_T = \bcL_T \ast_{\bPhi} \bcR_T^H$.
\end{algorithm}

\paragraph{Theoretical guarantees.} The following theorem establishes that ScaledGD algorithm---with proper choices of the tuning parameters, recovers the ground truth tensor $\bcX_{\star}$, as long as the fraction of corruptions is not too large. For convenience, we denote $n_{(1)} = \max(n_1, n_2)$ and $n_{(2)} = \min(n_1, n_2)$ in the following.
\begin{theorem}\label{thm:TRPCA}
Let $\bcY = \bcX_{\star} + \bcS_{\star} \in \R^{n_1 \times n_2 \times n_3}$, where $\bcX_{\star}$ is a multi-rank $\br$ tensor with $\mu$-incoherence and $\bcS_{\star}$ is $\alpha$-sparse. Suppose that the thresholding values $\{ \zeta_t \}_{t=0}^{\infty}$ obey that $\| \bcX_{\star} \|_{\infty} \leq \zeta_0 \leq 2 \| \bcX_{\star} \|_{\infty}$ and $\zeta_{t+1} = \rho \zeta_t$, $t \geq 1$, for some properly tuned $\zeta_1 = 3 \frac{\mu s_r}{n_3 \sqrt{n_1 n_2 \ell}} \bar{\sigma}_{s_r} (\bcX_{\star})$ and $\frac{1}{4} \leq \eta \leq \frac{8}{9}$, where $\rho = 1 - 0.6 \eta$. Then the iterates of ScaledGD with spectral initialization satisfy
\begin{align}
\| \bcL_t \ast_{\bPhi} \bcR_t^H - \bcX_{\star} \|_F & \leq \frac{0.03}{\sqrt{\ell}} \rho^t \bar{\sigma}_{s_r} (\bcX_{\star}),  \nonumber \\
\| \bcL_t \ast_{\bPhi} \bcR_t^H - \bcX_{\star} \|_{\infty} & \leq 3 \rho^t \frac{\mu s_r}{n_3 \sqrt{n_1 n_2 \ell}} \bar{\sigma}_{s_r} (\bcX_{\star}),  \nonumber \\
\| \bcS_t - \bcS_{\star} \|_{\infty} \leq 6 \rho^{t-1} \frac{\mu s_r}{n_3 \sqrt{n_1 n_2 \ell}} \bar{\sigma}_{s_r} (\bcX_{\star}) & \quad \mathrm{and} \quad \supp(\bcS_t) \subseteq \supp (\bcS_{\star})
\end{align}
for all $t \geq 1$, as long as the level of corruptions obeys $\alpha \leq \frac{n_3 \sqrt{n_{(2)} n_2}}{10^4 \mu s_r^{1.5} \kappa (n_1 + n_2 n_3)} \wedge \frac{n_3}{10^5 \mu^2 s_r^2 \kappa^2}$.
\end{theorem}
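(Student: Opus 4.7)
The plan is to run a joint induction over two coupled quantities at every iteration $t$: a distance-type bound $\dist(\bcF_t, \bcF_{\star}) \leq \epsilon \rho^t \bar{\sigma}_{s_r}(\bcX_{\star})$ for some small absolute constant $\epsilon$, together with an incoherence bound on the aligned factors, namely $\|(\bcL_t \ast_{\bPhi} \bcQ_t - \bcL_{\star}) \ast_{\bPhi} \bcG_{\star}^{1/2}\|_{2,\infty} \lesssim \frac{1}{\sqrt{n_1 n_3 \ell}} \sqrt{\mu s_r} \, \rho^t \bar{\sigma}_{s_r}(\bcX_{\star})$ and its analogue for $\bcR_t$, with $\bcQ_t$ the optimal alignment tensor from Definition~\ref{def:distF}. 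Given these two invariants, the target Frobenius and $\ell_{\infty}$ bounds on $\bcL_t \ast_{\bPhi} \bcR_t^H - \bcX_{\star}$ follow from the product decomposition
\begin{align*}
\bcL_t \ast_{\bPhi} \bcR_t^H - \bcX_{\star} = (\bcL_t \ast_{\bPhi} \bcQ_t - \bcL_{\star}) \ast_{\bPhi} (\bcR_t \ast_{\bPhi} \bcQ_t^{-H})^H + \bcL_{\star} \ast_{\bPhi} (\bcR_t \ast_{\bPhi} \bcQ_t^{-H} - \bcR_{\star})^H,
\end{align*}
combined with a $\|\cdot\|_{2,\infty} \cdot \|\cdot\|_{2,\infty}$ estimate in the t-product algebra for the entrywise norm and \eqref{eqn:tensorproperty} to pass between the spatial and block-diagonal Fourier domains.

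For the \emph{base case}, I would lift the problem into the block-diagonal Fourier representation via $\widebar{\bY - \bcS_0}$ and exploit that $\zeta_0 \in [\|\bcX_{\star}\|_{\infty}, 2\|\bcX_{\star}\|_{\infty}]$ together with $\supp(\bcS_0) \subseteq \supp(\bcS_{\star})$: on the support of $\bcS_{\star}$, soft-thresholding shifts the nonzero entry by at most $\zeta_0$, while off-support entries are zeroed out. Hence $\bcS_0 - \bcS_{\star}$ is $\alpha$-sparse in the sense of Definition~\ref{def:sparsity} with $\ell_{\infty}$ norm at most $2\|\bcX_{\star}\|_{\infty} \leq 2 \mu s_r / (n_3 \sqrt{n_1 n_2 \ell}) \,\bar{\sigma}_1(\bcX_{\star})$ using the incoherence of $\bcX_{\star}$. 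Converting this into a spectral-norm bound on each frontal slice of $\widebar{\bcS_0 - \bcS_{\star}}$ via a tube-sparsity argument (tensor analogue of $\|\bS\| \leq \sqrt{\|\bS\|_{1,\infty}\|\bS\|_{\infty,1}}$) and invoking Weyl and Wedin on each slice of $\widebar{\bcY - \bcS_0}$ yields the desired initial distance and initial incoherence under the assumed corruption level $\alpha$.

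For the \emph{inductive step}, I first use the induction hypothesis to control $\bcS_{t+1} = \cT_{\zeta_{t+1}}(\bcY - \bcL_t \ast_{\bPhi} \bcR_t^H)$: because the entrywise error $\|\bcL_t \ast_{\bPhi} \bcR_t^H - \bcX_{\star}\|_{\infty}$ is already at most $\zeta_{t+1}$ by the chosen threshold schedule $\zeta_{t+1} = \rho \zeta_t$ with $\rho = 1 - 0.6\eta$, the soft-thresholding zeroes out every coordinate outside $\supp(\bcS_{\star})$ and leaves an error of at most $2\zeta_{t+1}$ on-support, yielding the $\supp$ and $\ell_{\infty}$ claims for $\bcS_{t+1}$. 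I then analyze one ScaledGD step on the low-rank factors, treating $\bcE_{t+1} \coloneq \bcS_{t+1} - \bcS_{\star}$ as a bounded sparse perturbation. Following the matrix template of \cite{TongMC.JMLR2021,CaiLY.NeurIPS2021} but rewritten in t-product form, I expand $\dist^2(\bcF_{t+1}, \bcF_{\star})$ at the alignment $\bcQ_t$, split into the ideal-descent term (which contracts by a factor $1 - c\eta$ via the standard scaled-gradient identities applied slice-by-slice in the Fourier domain) and a sparse perturbation term that is bounded using $\|\bcE_{t+1}\|_\infty$ and the tube-sparsity of $\supp(\bcE_{t+1}) \subseteq \supp(\bcS_{\star})$; the preconditioners $(\bcR_t^H \ast_{\bPhi} \bcR_t)^{-1}$ and $(\bcL_t^H \ast_{\bPhi} \bcL_t)^{-1}$ are shown to be well-conditioned precisely because the distance invariant forces the singular tubes of $\bcL_t, \bcR_t$ to stay within a constant factor of those of $\bcL_{\star}, \bcR_{\star}$. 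Propagating the $\|\cdot\|_{2,\infty}$ incoherence then amounts to writing the ScaledGD update coordinate-tube-wise and bounding the contribution of the sparse perturbation via $\|\mathring{\boldsymbol{\ce}}_i^H \ast_{\bPhi} \bcE_{t+1}\|_{2,\infty}$, where the tube-sparsity keeps the number of interacting entries small.

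The hardest step is the incoherence propagation: the scaled gradient update mixes the preconditioners, the sparse perturbation, and the clean ScaledGD descent, and one must simultaneously maintain $\|\cdot\|_{2,\infty}$-type control on both $\bcL_{t+1}$ and $\bcR_{t+1}$ \emph{without} a leave-one-out style resampling available (the data is deterministic). The matrix-case tricks have to be rederived using t-product identities, and every spectral bound on a sparse tensor must carry the transform-dependent factor $\sqrt{\ell}$ from \eqref{eqn:tensorproperty} and the tube-sparsity of Definition~\ref{def:sparsity}, which is what ultimately fixes the allowable $\alpha$ at the stated $\frac{n_3 \sqrt{n_{(2)} n_2}}{\mu s_r^{1.5} \kappa (n_1 + n_2 n_3)} \wedge \frac{n_3}{\mu^2 s_r^2 \kappa^2}$ threshold. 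Once both invariants are closed for $t+1$, the four displayed bounds in the theorem are immediate corollaries.
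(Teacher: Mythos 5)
Your overall architecture---a joint induction on $\dist(\bcF_t,\bcF_{\star})$ and a $\|\cdot\|_{2,\infty}$ incoherence invariant on the aligned factor errors, with the threshold schedule guaranteeing $\supp(\bcS_{t+1})\subseteq\supp(\bcS_{\star})$ and $\|\bcS_{t+1}-\bcS_{\star}\|_{\infty}\le 2\zeta_{t+1}$ before each gradient step, and the final four bounds read off from the two invariants---is exactly the paper's (Lemmas~\ref{lemma:TRPCAcontraction}, \ref{lemma:TRPCAinitial} and \ref{lemma:sparity}). Two steps, however, would not go through as you describe them.

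First, the base-case incoherence. Weyl/Wedin-type perturbation bounds applied to the slices of the block-diagonalization of $\bcY-\bcS_0$ control subspace angles in spectral or Frobenius norm; they do not yield the row-wise bound $\sqrt{n_1}\,\|(\bcL_0\ast_{\bPhi}\bcQ_0-\bcL_{\star})\ast_{\bPhi}\bcG_{\star}^{\frac{1}{2}}\|_{2,\infty}\lesssim\sqrt{\mu s_r/(n_3\ell)}\,\bar{\sigma}_{s_r}(\bcX_{\star})$, and with deterministic corruption there is no leave-one-out device to upgrade them. The paper instead exploits the exact identity $\bcL_0=(\bcX_{\star}-\bcS_{\triangle})\ast_{\bPhi}\bcR_0\ast_{\bPhi}(\bcR_0^H\ast_{\bPhi}\bcR_0)^{-1}$ (valid because $\bcU_0\ast_{\bPhi}\bcG_0\ast_{\bPhi}\bcV_0^H$ is the top-$r$ t-SVD of $\bcX_{\star}-\bcS_{\triangle}$ with $\bcS_{\triangle}=\bcS_0-\bcS_{\star}$), which produces a bound on $\sqrt{n_1}\|\bcL_{\triangle}\ast_{\bPhi}\bcG_{\star}^{\frac{1}{2}}\|_{2,\infty}$ in terms of $\sqrt{n_2}\|\bcR_{\triangle}\ast_{\bPhi}\bcG_{\star}^{\frac{1}{2}}\|_{2,\infty}$ and vice versa; solving this coupled self-bounding system (with contraction coefficient $2\mu s_r\kappa\sqrt{\alpha/n_3}/(1-\epsilon)^2<1$) is precisely what generates the second constraint $\alpha\lesssim n_3/(\mu^2 s_r^2\kappa^2)$ in the theorem. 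Your sketch needs a substitute for this argument; without it the incoherence invariant cannot be seeded.

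Second, closing the incoherence induction requires the $\|\cdot\|_{2,\infty}$ bound at the alignment $\bcQ_{t+1}$, whereas the one-step analysis of the update naturally produces it at $\bcQ_t$. The paper transfers between the two via a perturbation estimate on $\bcG_{\star}^{\frac{1}{2}}\ast_{\bPhi}\bcQ_t^{-1}\ast_{\bPhi}(\bcQ_{t+1}-\bcQ_t)\ast_{\bPhi}\bcG_{\star}^{\frac{1}{2}}$ (Lemma~\ref{lemma:Qperturbation}), which itself consumes the just-established distance contraction at step $t+1$. Your proposal omits this alignment-switching step, and without it the induction hypothesis at $t+1$ is not actually established.
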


Note that the value of $\rho$ was used to simplify the proof, and it should not be considered as an optimal convergence rate. Theorem~\ref{thm:TRPCA} implies that upon appropriate choices of parameters, if the level of corruptions satisfies $\alpha \lesssim \frac{n_3 \sqrt{n_{(2)} n_2}}{\mu s_r^{1.5} \kappa (n_1 + n_2 n_3)} \wedge \frac{n_3}{\mu^2 s_r^2 \kappa^2}$, we can ensure that the proposed ScaledGD algorithm---starting from a carefully designed spectral initialization, converges linearly to the ground truth tensor $\bcX_{\star}$ in both the Frobenius norm and the entrywise $\ell_{\infty}$ norm at a constant rate, which is independent of the condition number, even when the gross corruptions are arbitrary. Note that the choice of parameters relies on the knowledge of $\bcX_{\star}$, which is usually unknown in practice. Thus, Theorem~\ref{thm:TRPCA} can be considered as a proof for the existence of the appropriate parameters.

The proof of our convergence theorem follows the route established in \citet{CaiLY.NeurIPS2021}. However, the algorithm in \citet{CaiLY.NeurIPS2021} is designed for matrices, while the extension from matrices to tensors is not trivial as different mathematical tools are required. For example, we need to use the property of t-product to prove some bounds on norms of $\bcS$, and we have to interconvert between the original and transformed domains in the proofs.

\subsection{Tensor Completion}

We now consider the tensor completion problem. Let $\bcX_{\star} \in \R^{n_1 \times n_2 \times n_3}$ be an unknown tensor and it has tubal rank $\mathrm{rank}_t(\bcX_{\star}) = r$. We assume to observe the entries of $\bcX_{\star}$ at locations given by a set $\bOmega = \{ (i,j,k) | \delta_{ijk} = 1 \}$, where $\delta_{ijk}$'s are independent and identically distributed (i.i.d.) Bernoulli variables taking value one with probability $p$ and zero with probability $1-p$. We denote such a Bernoulli sampling by $\bOmega \sim \mathrm{Ber}(p)$. The goal of tensor completion is to recover the tensor $\bcX_{\star}$ from its partial observation $\bcP_{\bOmega}(\bcX_{\star})$, where $\bcP_{\bOmega} : \R^{n_1 \times n_2 \times n_3} \to \R^{n_1 \times n_2 \times n_3}$ is a projection such that
\begin{displaymath}
[\bcP_{\bOmega}(\bcX_{\star})]_{i,j,k} = \left\{ \begin{array}{ll}
[\bcX_{\star}]_{i,j,k}, & \mathrm{if~} (i,j,k) \in \bOmega, \\
0, & \mathrm{otherwise}.
\end{array} \right.
\end{displaymath}
This can be achieved by minimizing the loss function
\begin{align}\label{eqn:TCprobform}
\min_{\bcF = [ \bcL^H, \bcR^H ]^H \in \R^{(n_1+n_2) \times r \times n_3}} f(\bcF) \coloneq \frac{1}{2p} \| \bcP_{\bOmega} (\bcL \ast_{\bPhi} \bcR^H - \bcX_{\star}) \|_F^2.
\end{align}

Similar to tensor RPCA, the underlying low-rank tensor $\bcX_{\star}$ is required to be incoherent (cf. Definition~\ref{def:incoherence}) to avoid ill-posedness. One typical strategy to ensure the incoherence condition in the matrix setting is to trim the rows of the factors after the gradient update \citep{ChenW.arXiv2015}. However, we need to be careful here because we need to preserve the equivariance with respect to invertible transforms. Again, inspired by \citet{TongMC.JMLR2021}, we introduce the following projection operator: for every $\widetilde{\bcF} = \begin{bmatrix} \widetilde{\bcL} \\ \widetilde{\bcR} \end{bmatrix} \in \R^{(n_1+n_2) \times r \times n_3}$,
\begin{align}\label{eqn:scaledproj}
\cP_{\varsigma}(\widetilde{\bcF}) & \coloneq \argmin_{\bcF \in \R^{(n_1+n_2) \times r \times n_3}} \| (\bcL - \widetilde{\bcL}) \ast_{\bPhi} (\widetilde{\bcR}^H \ast_{\bPhi} \widetilde{\bcR})^{\frac{1}{2}} \|_F^2 + \| (\bcR - \widetilde{\bcR}) \ast_{\bPhi} (\widetilde{\bcL}^H \ast_{\bPhi} \widetilde{\bcL})^{\frac{1}{2}} \|_F^2  \nonumber \\
&\quad \mathrm{s.t.} \quad \sqrt{n_1} \| \bcL \ast_{\bPhi} (\widetilde{\bcR}^H \ast_{\bPhi} \widetilde{\bcR})^{\frac{1}{2}} \|_{2,\infty} \vee \sqrt{n_2} \| \bcR \ast_{\bPhi} (\widetilde{\bcL}^H \ast_{\bPhi} \widetilde{\bcL})^{\frac{1}{2}} \|_{2,\infty} \leq \varsigma,
\end{align}
where $\varsigma > 0$ is the projection radius. Fortunately, following the proof in \citet[Proposition 7]{TongMC.JMLR2021}, this problem has a closed-form solution, as stated below.
\begin{proposition}\label{prop:scaledprojsol}
The solution to \eqref{eqn:scaledproj} is given by $\cP_{\varsigma}(\widetilde{\bcF}) = \begin{bmatrix} \bcL \\ \bcR \end{bmatrix}$, where
\begin{align}\label{eqn:scaledprojsol}
\bcL(i,:,:) & = \Big(1 \wedge \frac{\varsigma}{\sqrt{n_1} \| \widetilde{\bcL}(i,:,:) \ast_{\bPhi} \widetilde{\bcR}^H \|_F} \Big) \widetilde{\bcL}(i,:,:), \quad i \in [n_1],  \nonumber \\
\quad \mathrm{and} \quad \bcR(j,:,:) & = \Big(1 \wedge \frac{\varsigma}{\sqrt{n_2} \| \widetilde{\bcR}(j,:,:) \ast_{\bPhi} \widetilde{\bcL}^H \|_F} \Big) \widetilde{\bcR}(j,:,:), \quad j \in [n_2].
\end{align}
\end{proposition}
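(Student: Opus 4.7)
The plan is to exploit that problem~\eqref{eqn:scaledproj} decouples completely between the $\bcL$- and $\bcR$-blocks, and that within each block it further decouples across horizontal slices, ultimately reducing to a classical Euclidean ball projection. Concretely, I would first observe that the objective is a sum of a term depending only on $\bcL$ and a term depending only on $\bcR$, while the constraint $\sqrt{n_1}\|\bcL\ast_{\bPhi}(\widetilde{\bcR}^H\ast_{\bPhi}\widetilde{\bcR})^{1/2}\|_{2,\infty}\vee \sqrt{n_2}\|\bcR\ast_{\bPhi}(\widetilde{\bcL}^H\ast_{\bPhi}\widetilde{\bcL})^{1/2}\|_{2,\infty}\leq\varsigma$ is the conjunction of one bound on $\bcL$ alone and one on $\bcR$ alone. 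Hence it suffices to treat the $\bcL$-subproblem, writing $\bM_R:=(\widetilde{\bcR}^H\ast_{\bPhi}\widetilde{\bcR})^{1/2}$; the $\bcR$-subproblem is handled symmetrically.

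Next, I would make the change of variables $\bcA:=\bcL\ast_{\bPhi}\bM_R$, which is a bijection since $\bM_R\in\mathrm{GL}(r)$ (as is needed for the scaled updates themselves to be well-defined), turning the $\bcL$-subproblem into
\begin{align*}
\min_{\bcA}\|\bcA-\widetilde{\bcL}\ast_{\bPhi}\bM_R\|_F^2\quad\mathrm{s.t.}\quad\max_{i\in[n_1]}\|\bcA(i,:,:)\|_F\leq\varsigma/\sqrt{n_1}.
\end{align*}
Using that the t-product preserves horizontal slices (i.e., $(\bcC\ast_{\bPhi}\bcB)(i,:,:)=\bcC(i,:,:)\ast_{\bPhi}\bcB$, which follows by passing to the block-diagonal frequency-domain representation) and that $\|\cdot\|_F^2$ splits additively across horizontal slices, this problem decouples into $n_1$ independent projections of $\widetilde{\bcA}(i,:,:):=\widetilde{\bcL}(i,:,:)\ast_{\bPhi}\bM_R$ onto the Frobenius ball of radius $\varsigma/\sqrt{n_1}$, each with the standard closed-form solution $\bcA(i,:,:)=\bigl(1\wedge\tfrac{\varsigma/\sqrt{n_1}}{\|\widetilde{\bcA}(i,:,:)\|_F}\bigr)\widetilde{\bcA}(i,:,:)$.

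Undoing the change of variables by right-multiplying through by $\bM_R^{-1}$ then gives $\bcL(i,:,:)=\bigl(1\wedge\tfrac{\varsigma/\sqrt{n_1}}{\|\widetilde{\bcA}(i,:,:)\|_F}\bigr)\widetilde{\bcL}(i,:,:)$, and the denominator matches the form in~\eqref{eqn:scaledprojsol} via the identity
\begin{align*}
\|\widetilde{\bcL}(i,:,:)\ast_{\bPhi}\bM_R\|_F^2=\langle\widetilde{\bcL}(i,:,:),\widetilde{\bcL}(i,:,:)\ast_{\bPhi}\widetilde{\bcR}^H\ast_{\bPhi}\widetilde{\bcR}\rangle=\|\widetilde{\bcL}(i,:,:)\ast_{\bPhi}\widetilde{\bcR}^H\|_F^2,
\end{align*}
which uses that $\bM_R$ is Hermitian with $\bM_R\ast_{\bPhi}\bM_R^H=\widetilde{\bcR}^H\ast_{\bPhi}\widetilde{\bcR}$ together with the standard adjoint property of the t-product conjugate transpose.

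The main obstacle I anticipate is verifying the two structural facts about the t-product that drive the decomposition: preservation of horizontal-slice structure under $\ast_{\bPhi}$, and the adjoint identity $\langle\bcA\ast_{\bPhi}\bcB,\bcC\rangle=\langle\bcA,\bcC\ast_{\bPhi}\bcB^H\rangle$. Neither is immediate from the definitions, but both follow cleanly by lifting to the block-diagonal matrices $\widebar{\bA},\widebar{\bB},\widebar{\bC}$ as in~\eqref{eqn:tensorproperty}, where the t-product becomes ordinary block-diagonal matrix multiplication, the Frobenius norm is preserved up to the factor $\sqrt{\ell}$, and horizontal slices of the tensor correspond to rows of the block-diagonal matrix in a natural way.
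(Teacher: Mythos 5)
Your proposal is correct and follows essentially the same route as the paper, which simply defers to the matrix-case argument of \cite[Proposition 7]{TongMC.JMLR2021}: decouple the $\bcL$- and $\bcR$-blocks, change variables by the invertible weight $(\widetilde{\bcR}^H\ast_{\bPhi}\widetilde{\bcR})^{\frac{1}{2}}$, reduce to independent per-slice Euclidean ball projections, and undo the change of variables using $\|\widetilde{\bcL}(i,:,:)\ast_{\bPhi}(\widetilde{\bcR}^H\ast_{\bPhi}\widetilde{\bcR})^{\frac{1}{2}}\|_F=\|\widetilde{\bcL}(i,:,:)\ast_{\bPhi}\widetilde{\bcR}^H\|_F$. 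The tensor-specific facts you flag (slice-wise action of $\ast_{\bPhi}$ and the adjoint identity) do hold via the block-diagonal frequency-domain representation, so no gap remains.
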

With this projection operator in hand, we are ready to propose our ScaledGD method with the spectral initialization for solving tensor completion, as described in Algorithm~\ref{algo:ScaledGDTC}. The computational cost at each iteration is $\cO(n_1 n_2 n_3 r + p n_1 n_2 n_3^2 + (n_1 + n_2) n_3 r^2 + n_3 r^3)$ for general linear transforms, with reduction to $\cO(n_1 n_2 n_3 r + p n_1 n_2 n_3 \log(n_3) + (n_1 + n_2) n_3 r^2 + n_3 r^3)$ for DFT, which is much lower compared to the algorithm in \citet{LuPW.CVPR2019}.

\begin{algorithm}[t]
\caption{ScaledGD for tensor completion with spectral initialization}
\label{algo:ScaledGDTC}
\textbf{Input:} Partially observed data tensor $\bcP_{\bOmega}(\bcX_{\star})$, the transformation matrix $\bPhi$ associated with the transform $L$, the tubal rank $r$, learning rate $\eta$, and maximum number of iterations $T$.
\begin{algorithmic}
\State \textbf{Spectral initialization:} Let $\bcU_0 \ast_{\bPhi} \bcG_0 \ast_{\bPhi} \bcV_0^H$ be the top-$r$ t-SVD of $\frac{1}{p} \bcP_{\bOmega}(\bcX_{\star})$, and set
\begin{align}\label{eqn:TCinit}
\begin{bmatrix} \bcL_0 \\ \bcR_0 \end{bmatrix} = \cP_{\varsigma} \left(\begin{bmatrix} \bcU_0 \ast_{\bPhi} \bcG_0^{\frac{1}{2}} \\ \bcV_0 \ast_{\bPhi} \bcG_0^{\frac{1}{2}} \end{bmatrix} \right).
\end{align}
\State \textbf{Scaled projected gradient updates:} \textbf{for} $t = 0, 1, \dots, T-1$ \textbf{do}
\begin{align}\label{eqn:TCupdate}
\begin{bmatrix} \bcL_{t+1} \\ \bcR_{t+1} \end{bmatrix} = \cP_{\varsigma} \left(\begin{bmatrix}
\bcL_t - \frac{\eta}{p} \bcP_{\bOmega}(\bcL_t \ast_{\bPhi} \bcR_t^H - \bcX_{\star}) \ast_{\bPhi} \bcR_t \ast_{\bPhi} (\bcR_t^H \ast_{\bPhi} \bcR_t)^{-1}  \\
\bcR_t - \frac{\eta}{p} \bcP_{\bOmega}(\bcL_t \ast_{\bPhi} \bcR_t^H - \bcX_{\star})^H \ast_{\bPhi} \bcL_t \ast_{\bPhi} (\bcL_t^H \ast_{\bPhi} \bcL_t)^{-1}
\end{bmatrix}\right).
\end{align}
\end{algorithmic}
\textbf{Output:} The recovered low-rank tensor $\bcX_T = \bcL_T \ast_{\bPhi} \bcR_T^H$.
\end{algorithm}

\paragraph{Theoretical guarantees.} Encouragingly, we can guarantee that ScaledGD provably recovers the ground truth tensor, as long as the sample size is sufficiently large.
\begin{theorem}\label{thm:TC}
Suppose that $\bcX_{\star}$ is $\mu$-incoherent, and that $p$ satisfies $p \geq c \Big( \frac{\mu s_r (n_1 + n_2) \log( (n_1 \vee n_2) n_3 )}{n_1 n_2 n_3} \vee \frac{\mu^2 s_r^2 \kappa^4 \ell \log( (n_1 \vee n_2) n_3 )}{(n_1 \wedge n_2) n_3^2} \Big)$ for some sufficiently large constant $c$. Set the projection radius as $\varsigma \geq c_{\varsigma} \sqrt{\frac{\mu s_r}{n_3 \ell}} \bar{\sigma}_1 (\bcX_{\star})$ for some constant $c_{\varsigma} \geq 1.02$. If the step size obeys $0 < \eta \leq \frac{2}{3}$, then with high probability, for all $t \geq 0$, the iterates of ScaledGD in \eqref{eqn:TCupdate} satisfy
\begin{align*}
\dist(\bcF_t, \bcF_{\star}) \leq (1 - 0.6 \eta)^t 0.02 \bar{\sigma}_{s_r} (\bcX_{\star}) \quad \mathrm{and} \quad \| \bcL_t \ast_{\bPhi} \bcR_t^H - \bcX_{\star} \|_F \leq (1 - 0.6 \eta)^t 0.03 \bar{\sigma}_{s_r} (\bcX_{\star}).
\end{align*} 
\end{theorem}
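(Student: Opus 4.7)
The plan is to proceed by induction, establishing a joint invariant at every iteration that controls both the scaled distance $\dist(\bcF_t, \bcF_\star)$ and a weighted incoherence condition on the iterates. Concretely, I would show that if we maintain
\begin{align*}
\dist(\bcF_t, \bcF_\star) \leq (1 - 0.6\eta)^t \cdot 0.02 \,\bar{\sigma}_{s_r}(\bcX_\star), \qquad \sqrt{n_1}\|(\bcL_t\ast_\bPhi \bcQ_t - \bcL_\star)\ast_\bPhi \bcG_\star^{1/2}\|_{2,\infty} \vee \sqrt{n_2}\|(\bcR_t\ast_\bPhi \bcQ_t^{-H} - \bcR_\star)\ast_\bPhi \bcG_\star^{1/2}\|_{2,\infty} \leq C \sqrt{\tfrac{\mu s_r}{n_3 \ell}}\bar{\sigma}_{s_r}(\bcX_\star),
\end{align*}
at step $t$ (with $\bcQ_t$ the optimal alignment tensor from Definition~\ref{def:distF}), then the scaled projected gradient step in \eqref{eqn:TCupdate} produces $\bcF_{t+1}$ satisfying the same bounds with $t\mapsto t+1$. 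The Frobenius bound on $\bcL_t\ast_\bPhi\bcR_t^H - \bcX_\star$ then follows from the distance bound by the factorization lemma that appears in the technical appendix.

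\noindent\textbf{Initialization.} The base case $t=0$ is the content of the spectral initialization analysis. In the Fourier domain, $\tfrac1p\widebar{\bP_\bOmega(\bcX_\star)}$ is block-diagonal, and on each frontal slice the Bernoulli sampling operator concentrates around the identity on the relevant low-rank subspace. I would lift the matrix concentration of $\|\tfrac1p\bcP_\bOmega(\bcX_\star) - \bcX_\star\|$ (in the tensor spectral norm of Definition~\ref{def:conjtrans}) to $\lesssim \sqrt{\tfrac{n_1\vee n_2}{p n_3}}\,\mu s_r\,\bar\sigma_1(\bcX_\star)/\bar\sigma_{s_r}$-type bounds via a Bernstein inequality on the block-diagonal operator, then invoke a t-SVD version of Wedin/Weyl to control the top-$r$ t-SVD factors. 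The sample complexity $p \gtrsim \mu s_r(n_1+n_2)\log(\cdot)/(n_1 n_2 n_3)$ enters precisely here. The scaled projection $\cP_\varsigma$ then truncates the factor rows so that the incoherence part of the invariant also holds at $t=0$, using Proposition~\ref{prop:scaledprojsol} and the contraction property of $\cP_\varsigma$ under the weighted metric (this is the tensor analogue of \cite[Lemma 21]{TongMC.JMLR2021}).

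\noindent\textbf{Inductive contraction.} Given the invariant at step $t$, I would decompose the pre-projection update as $\widetilde\bcL_{t+1}\ast_\bPhi \bcQ_t - \bcL_\star = (\bcL_t\ast_\bPhi\bcQ_t - \bcL_\star) - \eta\,\bcE_L$, where $\bcE_L = (\tfrac1p\bcP_\bOmega-\mathcal I)(\bcL_t\ast_\bPhi\bcR_t^H - \bcX_\star)\ast_\bPhi \bcR_t\ast_\bPhi(\bcR_t^H\ast_\bPhi\bcR_t)^{-1} + (\bcL_t\ast_\bPhi\bcR_t^H - \bcX_\star)\ast_\bPhi \bcR_t\ast_\bPhi(\bcR_t^H\ast_\bPhi\bcR_t)^{-1}$, and analogously for $\bcR$. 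The deterministic ``population'' part of $\bcE$ produces the same contraction as in the fully-observed Burer--Monteiro landscape; by the preconditioning property of ScaledGD this contraction rate is $(1-c\eta)$ independent of $\kappa$, and the computation proceeds exactly as in the proof of Theorem~\ref{thm:TRPCA} (with $\bcS_t - \bcS_\star$ replaced by zero). The stochastic part is controlled by a uniform concentration inequality of the form
\begin{align*}
\Bigl| \bigl\langle \bcA,\; (\tfrac1p\bcP_\bOmega - \mathcal I)(\bcB) \bigr\rangle \Bigr| \lesssim \sqrt{\tfrac{\log((n_1\vee n_2)n_3)}{p}}\,\|\bcA\|_F \|\bcB\|_F
\end{align*}
for tensors $\bcA,\bcB$ that lie in the low-rank tangent space and satisfy the weighted-incoherence invariant; this is where the second branch $p \gtrsim \mu^2 s_r^2 \kappa^4 \ell \log(\cdot)/((n_1\wedge n_2)n_3^2)$ of the sample complexity is consumed. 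Finally, since the projection $\cP_\varsigma$ is non-expansive in the scaled norm defining $\dist(\cdot,\bcF_\star)$ (again by Proposition~\ref{prop:scaledprojsol}), applying $\cP_\varsigma$ can only decrease the distance, closing the induction.

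\noindent\textbf{Main obstacle.} The hardest step is the uniform concentration of $(\tfrac1p\bcP_\bOmega - \mathcal I)$ restricted to the low-rank tangent space \emph{together with} the propagation of the weighted incoherence through the update. Classical matrix completion proofs handle this either via a leave-one-out decoupling argument or via an $\ell_\infty$-type bound on the sampling operator. Translating these arguments to the t-SVD setting requires exploiting the Fourier-domain identity \eqref{eqn:tensorproperty} so that the operator $\mathtt{bdiag}(L(\bcP_\bOmega(\cdot)))$ is amenable to block-matrix Bernstein inequalities, while simultaneously keeping track of the $\ell$ factor from \eqref{eqn:phiconstraint} and the $n_3$ factors introduced by passing between the spatial and frequency domains. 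Getting these book-keeping factors to line up is what produces the $\ell$ and $n_3^2$ dependence in the second branch of the sample complexity, and is the principal technical contribution that goes beyond the matrix proofs in \cite{TongMC.JMLR2021,CaiLY.NeurIPS2021}.
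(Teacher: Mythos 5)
Your overall skeleton matches the paper's: induction on a joint invariant, a spectral-initialization base case controlled by concentration of $\tfrac1p\bcP_{\bOmega}(\bcX_\star)$ in tensor spectral norm (Lemma~\ref{lemma:POmegaspectral} plus Lemma~\ref{lemma:Procrustes}), a per-step contraction obtained by splitting the update into the population term (identical to the tensor-factorization analysis) and a stochastic term, and non-expansiveness of $\cP_\varsigma$ to close the loop. That part is sound and is essentially the paper's route.

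The one substantive divergence is how the incoherence half of the invariant is maintained, and here your plan overcomplicates things in a way that would stall if executed. You state the invariant as a $\|\cdot\|_{2,\infty}$ bound on the \emph{aligned factor differences} $(\bcL_t\ast_\bPhi\bcQ_t-\bcL_\star)\ast_\bPhi\bcG_\star^{1/2}$ (the tensor-RPCA-style invariant) and then flag its propagation through the stochastic gradient step as the main obstacle, suggesting leave-one-out decoupling or $\ell_\infty$ bounds on the sampling operator. The paper never propagates any such bound through the update: the invariant it carries is $\sqrt{n_1}\|\bcL_t\ast_\bPhi\bcR_t^H\|_{2,\infty}\vee\sqrt{n_2}\|\bcR_t\ast_\bPhi\bcL_t^H\|_{2,\infty}\leq\varsigma$, which is restored \emph{by construction} at every iteration because $\bcF_{t+1}=\cP_\varsigma(\widetilde\bcF_{t+1})$ and the projection's feasibility constraint is exactly this condition (second conclusion of Lemma~\ref{lemma:scaledproj}). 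The needed $\|\cdot\|_{2,\infty}$ bounds on $\bcL_\sharp\ast_\bPhi\bcG_\star^{\pm 1/2}$ and on $\bcL_\triangle\ast_\bPhi\bcG_\star^{1/2}$ are then \emph{derived} from that product bound via $\|\bcA\ast_\bPhi\bcB\|_{2,\infty}\geq\|\bcA\|_{2,\infty}\bar\sigma_{s_r}(\bcB)$ (Lemma~\ref{lemma:TC-cond}), so no leave-one-out machinery is required; this is precisely why the projection is part of the algorithm. Relatedly, your single stated concentration bound $|\langle\bcA,(\tfrac1p\bcP_\bOmega-\cI)\bcB\rangle|\lesssim\sqrt{\log(\cdot)/p}\,\|\bcA\|_F\|\bcB\|_F$ only holds for tensors in the tangent space (Lemma~\ref{lemma:POmegaFnorm}/Corollary~\ref{coro:POmegatangent}, which consumes the first branch of the sample complexity); the cross terms such as $\bcL_\triangle\ast_\bPhi\bcR_\triangle^H$ are not in the tangent space and require the separate bound of Lemma~\ref{lemma:POmeganonconvex}, stated in terms of $\|\cdot\|_{2,2,\infty}$ norms of the factors --- this is where the iterate incoherence is actually used and where the $\mu^2 s_r^2\kappa^4\ell/n_3^2$ branch of the sample complexity comes from. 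You would need to separate these two regimes explicitly; as written your proposal conflates them into one inequality that is false outside the tangent space.
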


Theorem~\ref{thm:TC} establishes that the distance $\dist(\bcF_t, \bcF_{\star})$ contracts linearly at a constant rate, as long as the probability of observation satisfies $p \gtrsim \Big( \frac{\mu s_r (n_1 + n_2) \log( (n_1 \vee n_2) n_3 )}{n_1 n_2 n_3} \vee \frac{\mu^2 s_r^2 \kappa^4 \ell \log( (n_1 \vee n_2) n_3 )}{(n_1 \wedge n_2) n_3^2} \Big)$. To reach an $\epsilon$-accurate estimate, i.e., $\| \bcL_t \ast_{\bPhi} \bcR_t^H - \bcX_{\star} \|_F \leq \epsilon \bar{\sigma}_{s_r} (\bcX_{\star})$, ScaledGD takes at most $\cO (\log(1/\epsilon))$ iterations, which is independent of the condition number, as long as the sample complexity is large enough.

\subsection{Robust Tensor Completion}
\label{ssec:RTC}

Assume that we partially observe some entries of a data tensor $\bcY \in \R^{n_1 \times n_2 \times n_3}$ of the form $\bcY = \bcX_{\star} + \bcS_{\star}$, where $\bcX_{\star}$ is a tubal rank-$r$ tensor and $\bcS_{\star}$ is a sparse tensor. The task of robust tensor completion is to recover the two tensors $\bcX_{\star}$ and $\bcS_{\star}$ from the observations $\bcP_{\bOmega}(\bcY) = \bcP_{\bOmega}(\bcX_{\star} + \bcS_{\star})$, where the set of observed locations in $\bOmega$ is sampled independently according to the Bernoulli model with probability $p$. We also assume that $\bcP_{\bOmega}(\bcS_{\star})$ is $\alpha p$-sparse, i.e., it has at most $\alpha p$-fraction of nonzero entries per tube for each mode.

To avoid the nonconvex low-tubal-rank constraint on $\bcX$, we rewrite $\bcX = \bcL \ast_{\bPhi} \bcR^H$ and consider the following objective function:
\begin{align}\label{eqn:RTCprobform}
\min_{ \bcF = [ \bcL^H, \bcR^H ]^H \in \R^{(n_1+n_2) \times r \times n_3}, \bcS \in \R^{n_1 \times n_2 \times n_3} } f(\bcF, \bcS) \coloneq \frac{1}{2p} \| \bcP_{\bOmega}(\bcL \ast_{\bPhi} \bcR^H + \bcS - \bcY) \|_F^2.
\end{align}
Similar to Algorithm~\ref{algo:ScaledGDTRPCA}, we first initialize the sparse tensor by $\bcS_0 = \cT_{\zeta_0}(\bcP_{\bOmega}(\bcY))$ and take $\bcL_0 = \bcU_0 \ast_{\bPhi} \bcG_0^{\frac{1}{2}}$ and $\bcR_0 = \bcV_0 \ast_{\bPhi} \bcG_0^{\frac{1}{2}}$, where $\bcU_0 \ast_{\bPhi} \bcG_0 \ast_{\bPhi} \bcV_0^H$ is the best tubal rank-$r$ approximation of $\frac{1}{p} \bcP_{\bOmega}(\bcY - \bcS_0)$. Note that $p^{-1}$ is necessary here to reweight the expectation of sampling operator \citep{Recht.JMLR2011}. In the phase of iterative updates, we alternatively update the sparse and low-rank components in the fashion of ScaledGD, formally stated in Algorithm~\ref{algo:ScaledGDRTC}. For any invertible linear transforms, the per-iteration complexity is $\cO(n_1 n_2 n_3 r + p n_1 n_2 n_3^2 + (n_1 + n_2) n_3 r^2 + n_3 r^3)$. For some special transforms, e.g., DFT, the per-iteration complexity is $\cO(n_1 n_2 n_3 r + p n_1 n_2 n_3 \log(n_3) + (n_1 + n_2) n_3 r^2 + n_3 r^3)$.

\begin{algorithm}[t]
\caption{ScaledGD for robust tensor completion with spectral initialization}
\label{algo:ScaledGDRTC}
\textbf{Input:} Partially observed data tensor $\bcP_{\bOmega}(\bcY)$, the transformation matrix $\bPhi$ associated with the transform $L$, the tubal rank $r$, learning rate $\eta$, maximum number of iterations $T$, and threshold schedule $\{ \zeta_t \}_{t=0}^T$.
\begin{algorithmic}
\State \textbf{Spectral initialization:} Let $\bcU_0 \ast_{\bPhi} \bcG_0 \ast_{\bPhi} \bcV_0^H$ be the top-$r$ t-SVD of $\frac{1}{p} \bcP_{\bOmega}(\bcY - \bcS_0)$, where $\bcS_0 = \cT_{\zeta_0}(\bcP_{\bOmega}(\bcY))$.
\begin{align*}
\mathrm{Set} \quad \bcL_0 = \bcU_0 \ast_{\bPhi} \bcG_0^{\frac{1}{2}} \quad \mathrm{and} \quad \bcR_0 = \bcV_0 \ast_{\bPhi} \bcG_0^{\frac{1}{2}}.
\end{align*}
\State \textbf{Scaled gradient updates:} \textbf{for} $t = 0, 1, \dots, T-1$ \textbf{do}
\begin{align}\label{eqn:RTCupdate}
\bcS_{t+1} & = \cT_{\zeta_{t+1}}(\bcP_{\bOmega}(\bcY - \bcL_t \ast_{\bPhi} \bcR_t^H))  \nonumber \\
\bcL_{t+1} & = \bcL_t - \frac{\eta}{p} \bcP_{\bOmega}(\bcL_t \ast_{\bPhi} \bcR_t^H + \bcS_{t+1} - \bcY) \ast_{\bPhi} \bcR_t \ast_{\bPhi} (\bcR_t^H \ast_{\bPhi} \bcR_t)^{-1}  \\
\bcR_{t+1} & = \bcR_t - \frac{\eta}{p} \bcP_{\bOmega}(\bcL_t \ast_{\bPhi} \bcR_t^H + \bcS_{t+1} - \bcY)^H \ast_{\bPhi} \bcL_t \ast_{\bPhi} (\bcL_t^H \ast_{\bPhi} \bcL_t)^{-1}.  \nonumber
\end{align}
\end{algorithmic}
\textbf{Output:} The recovered low-rank tensor $\bcX_T = \bcL_T \ast_{\bPhi} \bcR_T^H$.
\end{algorithm}

\paragraph{Theoretical guarantees.} The following theorem states that ScaledGD algorithm---with proper choices of the tuning parameters, recovers $\bcX_{\star}$ with high probability, as long as the fraction of corruptions is not too large and the number of observations is large enough.
\begin{theorem}\label{thm:RTC}
Let $\bcP_{\bOmega}(\bcY) = \bcP_{\bOmega}(\bcX_{\star} + \bcS_{\star}) \in \R^{n_1 \times n_2 \times n_3}$, where $\bcX_{\star}$ is a multi-rank $\br$ tensor with $\mu$-incoherence and $\bcP_{\bOmega}(\bcS_{\star})$ is $\alpha p$-sparse. Suppose that the thresholding values $\{ \zeta_t \}_{t=0}^{\infty}$ obey that $\| \bcP_{\bOmega}(\bcX_{\star}) \|_{\infty} \leq \zeta_0 \leq 2 \| \bcP_{\bOmega}(\bcX_{\star}) \|_{\infty}$ and $\zeta_{t+1} = \rho \zeta_t$, $t \geq 1$, for some properly tuned $\zeta_1 = 3 \frac{\mu s_r}{n_3 \sqrt{n_1 n_2 \ell}} \bar{\sigma}_{s_r} (\bcX_{\star})$ and $\frac{1}{5} \leq \eta \leq \frac{1}{2}$, where $\rho = 1 - 0.3 \eta$. Then the iterates of ScaledGD satisfy
\begin{align}
\| \bcL_t \ast_{\bPhi} \bcR_t^H - \bcX_{\star} \|_F & \leq \frac{0.03}{\sqrt{\ell}} \rho^t \bar{\sigma}_{s_r} (\bcX_{\star}),  \nonumber \\
\| \bcL_t \ast_{\bPhi} \bcR_t^H - \bcX_{\star} \|_{\infty} & \leq 3 \rho^t \frac{\mu s_r}{n_3 \sqrt{n_1 n_2 \ell}} \bar{\sigma}_{s_r} (\bcX_{\star}),  \nonumber \\
\| \bcS_t - \bcS_{\star} \|_{\infty} \leq 6 \rho^{t-1} \frac{\mu s_r}{n_3 \sqrt{n_1 n_2 \ell}} \bar{\sigma}_{s_r} (\bcX_{\star}) & \quad \mathrm{and} \quad \supp(\bcS_t) \subseteq \supp (\bcP_{\bOmega}(\bcS_{\star}))
\end{align}
for all $t \geq 1$, as long as the level of corruptions obeys $\alpha \leq \frac{p n_3 \sqrt{n_{(2)} n_2}}{10^4 \mu s_r^{1.5} \kappa (n_1 + n_2 n_3)} \wedge \frac{p n_3}{10^6 \mu^2 s_r^2 \kappa^2}$ and the probability to observe an entry is high enough, i.e.,
\begin{align*}
p \geq c \log( (n_1 \vee n_2) n_3 ) \max \Big( \frac{\mu^{1.5} s_r^{1.5} (n_1 + n_2)}{n_3 \sqrt{n_1 n_2}}, \frac{\mu^2 s_r^2 \ell}{n_3 \sqrt{(n_1 \wedge n_2) n_3}}, \frac{\mu s_r \kappa^2}{\sqrt{(n_1 \wedge n_2) n_3}} \Big)
\end{align*}
for some sufficiently large constant $c$.
\end{theorem}

Theorem~\ref{thm:RTC} establishes that the proposed ScaledGD algorithm (cf. Algorithm~\ref{algo:ScaledGDRTC}) finds the ground truth tensor at a constant linear rate, as long as the fraction of corruptions satisfies $\alpha \lesssim \frac{p n_3 \sqrt{n_{(2)} n_2}}{\mu s_r^{1.5} \kappa (n_1 + n_2 n_3)} \wedge \frac{p n_3}{\mu^2 s_r^2 \kappa^2}$ and the probability of observation satisfies $p \gtrsim \log( (n_1 \vee n_2) n_3 ) \max \Big( \frac{\mu^{1.5} s_r^{1.5} (n_1 + n_2)}{n_3 \sqrt{n_1 n_2}}, \frac{\mu^2 s_r^2 \ell}{n_3 \sqrt{(n_1 \wedge n_2) n_3}}, \frac{\mu s_r \kappa^2}{\sqrt{(n_1 \wedge n_2) n_3}} \Big)$. Notice that in contrast to Theorem~\ref{thm:TRPCA}, the upper bound of $\alpha$ is proportional to the value of $p$, which is aligned with the intuition that as we observe more entries in $\bcY$, it is still possible to recover $\bcX_{\star}$ even for a higher level of corruptions in $\bcX_{\star}$.

\subsection{Tensor Regression}
\label{ssec:TR}

We now move on to the tensor regression problem. Assume that we have collected a set of observations, given by
\begin{align}
\by = \cA(\bcX_{\star}) \in \R^m ~ \mathrm{with} ~ y_i = \langle \bcA_i, \bcX_{\star} \rangle, i = 1, \dots, m,
\end{align}
where each $\bcA_i \in \R^{n_1 \times n_2 \times n_3}$ corresponds to the $i$-th measurement tensor, and $\cA(\bcX) = \{ \langle \bcA_i, \bcX \rangle \}_{i=1}^m$ is a linear map from $\R^{n_1 \times n_2 \times n_3}$ to $\R^m$. The goal of tensor regression is to recover $\bcX_{\star}$ from $\by$, especially when the number of observations $m \ll n_1 n_2 n_3$. The preceding discussion helps us pose this problem in terms of the following optimization problem:
\begin{align}\label{eqn:TRprobform}
\min_{\bcF = [ \bcL^H, \bcR^H ]^H \in \R^{(n_1+n_2) \times r \times n_3}} f(\bcF) \coloneq \frac{1}{2} \| \cA(\bcL \ast_{\bPhi} \bcR^H) - \by \|_F^2.
\end{align}

The proposed ScaledGD algorithm to minimize \eqref{eqn:TRprobform} is summarized in Algorithm~\ref{algo:ScaledGDTR}. The algorithm starts with an initialization step by applying t-SVD on $\cA^{\ast}(\by)$, followed by scaled gradient updates given in \eqref{eqn:TRupdate}, where $\cA^{\ast}(\cdot)$ is the adjoint operator of $\cA(\cdot)$, defined as $\cA^{\ast}(\by) = \sum_{i=1}^m y_i \bcA_i$.

\begin{algorithm}[t]
\caption{ScaledGD for low-rank tensor regression with spectral initialization}
\label{algo:ScaledGDTR}
\textbf{Input:} Linear map $\cA(\cdot)$, observation vector $\by$, the transformation matrix $\bPhi$ associated with the transform $L$, the tubal rank $r$, learning rate $\eta$, and maximum number of iterations $T$.
\begin{algorithmic}
\State \textbf{Spectral initialization:} Let $\bcU_0 \ast_{\bPhi} \bcG_0 \ast_{\bPhi} \bcV_0^H$ be the top-$r$ t-SVD of $\cA^{\ast}(\by)$, and set
\begin{align}\label{eqn:TRinit}
\mathrm{Set} \quad \bcL_0 = \bcU_0 \ast_{\bPhi} \bcG_0^{\frac{1}{2}} \quad \mathrm{and} \quad \bcR_0 = \bcV_0 \ast_{\bPhi} \bcG_0^{\frac{1}{2}}.
\end{align}
\State \textbf{Scaled projected gradient updates:} \textbf{for} $t = 0, 1, \dots, T-1$ \textbf{do}
\begin{align}\label{eqn:TRupdate}
\bcL_{t+1} & = \bcL_t - \eta \cA^{\ast} (\cA(\bcL_t \ast_{\bPhi} \bcR_t^H) - \by) \ast_{\bPhi} \bcR_t \ast_{\bPhi} (\bcR_t^H \ast_{\bPhi} \bcR_t)^{-1}  \\
\bcR_{t+1} & = \bcR_t - \eta \cA^{\ast} (\cA(\bcL_t \ast_{\bPhi} \bcR_t^H) - \by)^H \ast_{\bPhi} \bcL_t \ast_{\bPhi} (\bcL_t^H \ast_{\bPhi} \bcL_t)^{-1}.  \nonumber
\end{align}
\end{algorithmic}
\textbf{Output:} The recovered low-rank tensor $\bcX_T = \bcL_T \ast_{\bPhi} \bcR_T^H$.
\end{algorithm}

\paragraph{Theoretical guarantees.} To understand the performance of ScaledGD for tensor regression, we adopt a standard assumption on the measurement operator $\cA(\cdot)$, namely the tensor restricted isometry property (TRIP).
\begin{definition}[TRIP \citep{RauhutSS.LAA2017}]
The linear map $\cA(\cdot) : \R^{n_1 \times n_2 \times n_3} \to \R^m$ is said to obey the tubal rank-$r$ TRIP with a constant $\delta_r \in [0, 1)$, if for all tensors $\bcX \in \R^{n_1 \times n_2 \times n_3}$ of tubal rank at most $r$, the following condition holds 
\begin{align*}
(1 - \delta_r) \| \bcX \|_F^2 \leq \| \cA(\bcX) \|_2^2 \leq (1 + \delta_r) \| \bcX \|_F^2.
\end{align*}
\end{definition}

If $\cA(\cdot)$ satisfies tubal rank-$2r$ TRIP with $\delta_r \in [0, 1)$, then for any two tensors $\bcX_1, \bcX_2 \in \R^{n_1 \times n_2 \times n_3}$ of tubal rank at most $r$, we have
\begin{align*}
(1 - \delta_{2r}) \| \bcX_1 - \bcX_2 \|_F^2 \leq \| \cA(\bcX_1 - \bcX_2) \|_2^2 \leq (1 + \delta_{2r}) \| \bcX_1 - \bcX_2 \|_F^2.
\end{align*}
The TRIP under the higher-order singular value decomposition (HOSVD), the tensor train (TT) format, and the Tucker decomposition has been investigated extensively \citep{RauhutSS.LAA2017}.
\begin{theorem}\label{thm:TR}
Suppose that $\cA(\cdot)$ obeys the $2r$-TRIP with $\delta_{2r} \leq 0.02/(\sqrt{\frac{s_r}{\ell}} \kappa)$. If the step size obeys $0 < \eta \leq \frac{2}{3}$, then for all $t \geq 0$, the iterates of the ScaledGD method in Algorithm~\ref{algo:ScaledGDTR} satisfy 
\begin{align*}
\dist(\bcF_t, \bcF_{\star}) \leq (1 - 0.6 \eta)^t 0.1 \bar{\sigma}_{s_r} (\bX_{\star}) \quad \mathrm{and} \quad \| \bcL_t \ast_{\bPhi} \bcR_t^H - \bcX_{\star} \|_F \leq (1 - 0.6 \eta)^t 0.15 \bar{\sigma}_{s_r} (\bX_{\star}).
\end{align*}
\end{theorem}

Theorem~\ref{thm:TR} establishes that the distance $\dist(\bcF_t, \bcF_{\star})$ contracts linearly at a constant rate, as long as $\delta_{2r} \lesssim 1/(\sqrt{\frac{s_r}{\ell}} \kappa)$. To reach $\epsilon$-accuracy, i.e., $\| \bcL_t \ast_{\bPhi} \bcR_t^H - \bcX_{\star} \|_F \leq \epsilon \bar{\sigma}_{s_r} (\bX_{\star})$, ScaledGD again takes at most $T = \cO(\log(1/\epsilon))$ iterations, which is independent of the condition number $\kappa$ of $\bcX_{\star}$.

\section{Proof Sketch}
\label{sec:proofoutline}

In this section, we provide some intuitions and sketch the proof of our main theorems.

\subsection{ScaledGD for Tensor Factorization}

To shed light on why ScaledGD is robust to ill-conditioning, we first consider the problem of factorizing a tensor $\bcX_{\star}$ into two low-rank factors: 
\begin{align}\label{eqn:TFloss}
\min_{\bcF = [ \bcL^H, \bcR^H ]^H \in \R^{(n_1+n_2) \times r \times n_3}} f(\bcF) \coloneq \frac{1}{2} \| \bcL \ast_{\bPhi} \bcR^H - \bcX_{\star} \|_F^2.
\end{align}
Recalling the update rule~\eqref{eqn:scaledGD}, ScaledGD proceeds as follows:
\begin{align}\label{eqn:TFupdate}
\bcL_{t+1} & = \bcL_t - \eta (\bcL_t \ast_{\bPhi} \bcR_t^H - \bcX_{\star}) \ast_{\bPhi} \bcR_t \ast_{\bPhi} (\bcR_t^H \ast_{\bPhi} \bcR_t)^{-1},  \nonumber \\
\bcR_{t+1} & = \bcR_t - \eta (\bcL_t \ast_{\bPhi} \bcR_t^H - \bcX_{\star})^H \ast_{\bPhi} \bcL_t \ast_{\bPhi} (\bcL_t^H \ast_{\bPhi} \bcL_t)^{-1},
\end{align}
Due to property \eqref{eqn:tensorproperty}, problem \eqref{eqn:TFloss} can be transformed into the transformed domain by solving the following problem:
\begin{align}\label{eqn:TFlossmatrixform}
\min_{\widetilde{\bF} = [ \widebar{\bL}^H, \widebar{\bR}^H ]^H \in \C^{(n_1+n_2) n_3 \times n_3 r}} f(\widebar{\bL}, \widebar{\bR}) \coloneq \frac{1}{2 \ell} \| \widebar{\bL} \cdot \widebar{\bR}^H - \widebar{\bX}_{\star} \|_F^2,
\end{align}
where $\widebar{\bL} = \mathtt{bdiag}(\xoverline{\bcL})$ as defined in \eqref{eqn:defbdiag} and $\cdot$ again denotes the matrix multiplication. Note that we use $\widetilde{\bF} = \begin{bmatrix} \widebar{\bL} \\ \widebar{\bR} \end{bmatrix}$ here to denote the vertical concatenation of $\widebar{\bL}$ and $\widebar{\bR}$, which differs from the matrix $\widebar{\bF} = \mathtt{bdiag}(\xoverline{\bcF})$ only by a permutation of rows, where $\xoverline{\bcF} = L(\bcF)$ with $\bcF = \begin{bmatrix} \bcL \\ \bcR \end{bmatrix}$. This is now a standard matrix least-squares problem, up to a scalar. The gradients of $f(\widetilde{\bF})$ in \eqref{eqn:TFlossmatrixform} with respect to $\widebar{\bL}$ and $\widebar{\bR}$ are given by
\begin{align*}
\nabla_{\widebar{\bL}} f(\widetilde{\bF}) = \frac{1}{\ell} ( \widebar{\bL} \widebar{\bR}^H - \widebar{\bX}_{\star}) \widebar{\bR} \quad \mathrm{and} \quad \nabla_{\widebar{\bR}} f(\widetilde{\bF}) = \frac{1}{\ell} (\widebar{\bL} \widebar{\bR}^H - \widebar{\bX}_{\star})^H \widebar{\bL},
\end{align*}
which allows for the computation of the Hessian with respect to $\widebar{\bL}$ and $\widebar{\bR}$. When written in terms of the vectorized variables, the Hessians are expressed as
\begin{align*}
\nabla_{\widebar{\bL},\widebar{\bL}}^2 f(\widetilde{\bF}) = \frac{1}{\ell} (\widebar{\bR}^H \widebar{\bR}) \otimes \bI_{n_1 n_3} \quad \mathrm{and} \quad \nabla_{\widebar{\bR},\widebar{\bR}}^2 f(\widetilde{\bF}) = \frac{1}{\ell} (\widebar{\bL}^H \widebar{\bL}) \otimes \bI_{n_2 n_3},
\end{align*}
where $\otimes$ denotes the Kronecker product. Let $\mathtt{vec}(\bA)$ denote the vectorization of a matrix $\bA$. Viewed in the vectorized form, it is not difficult to check that the ScaledGD update \eqref{eqn:TFupdate} is equivalent to approximating the Hessian of the loss function \eqref{eqn:TFlossmatrixform} by only keeping its diagonal blocks, i.e., 
\begin{align*}
\mathtt{vec}(\widetilde{\bF}_{t+1}) = \mathtt{vec}(\widetilde{\bF}_t) - \frac{\eta}{\ell} \begin{bmatrix} \nabla_{\widebar{\bL},\widebar{\bL}}^2 f(\widetilde{\bF}_t) & \bzero \\
\bzero & \nabla_{\widebar{\bR},\widebar{\bR}}^2 f(\widetilde{\bF}_t) \end{bmatrix}^{-1} \mathtt{vec}(\nabla_{\widetilde{\bF}} f(\widetilde{\bF}_t)).
\end{align*}
Under this formulation, the ScaledGD update corresponds to a Newton-type update up to a constant step size, yielding a natural quasi-Newton interpretation where the preconditioner is designed as the inverse of the diagonal approximation of the Hessian. Compared with vanilla gradient descent, ScaledGD exploits additional curvature information through a structured scaling of the gradient. This quasi-Newton interpretation provides a natural explanation for its improved convergence behavior, especially in ill-conditioned regimes.

\paragraph{Theoretical guarantees for tensor factorization.} The following theorem, whose proof can be found in Appendix~\ref{sec:TFproof}, formally establishes that as long as initialization is not too far from the ground truth, $\dist(\bcF_t, \bcF_{\star})$ will contract at a constant linear rate for the tensor factorization problem.

\begin{theorem}\label{thm:TF}
Suppose that the initialization $\bcF_0$ satisfies $\dist(\bcF_0, \bcF_{\star}) \leq \frac{0.1}{\sqrt{\ell}} \bar{\sigma}_{s_r} (\bcX_{\star})$. If the step size obeys $0 < \eta \leq \frac{2}{3}$, then for all $t \geq 0$, the iterates of the ScaledGD method in \eqref{eqn:TFupdate} satisfy
\begin{align*}
\dist(\bcF_t, \bcF_{\star}) \leq (1 - 0.7 \eta)^t \frac{0.1}{\sqrt{\ell}} \bar{\sigma}_{s_r} (\bcX_{\star}) \quad \mathrm{and} \quad \| \bcL_t \ast_{\bPhi} \bcR_t^H - \bcX_{\star} \|_F \leq (1 - 0.7 \eta)^t \frac{0.15}{\sqrt{\ell}} \bar{\sigma}_{s_r} (\bcX_{\star}).
\end{align*}
\end{theorem}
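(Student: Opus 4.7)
The plan is to establish local linear convergence by induction on $t$, adapting the matrix ScaledGD argument of \cite{TongMC.JMLR2021} to the t-product setting. The base case is the hypothesis. For the inductive step, suppose $\dist(\bcF_t, \bcF_{\star}) \leq \epsilon_t := (1 - 0.7\eta)^t \frac{0.1}{\sqrt{\ell}} \bar{\sigma}_{s_r}(\bcX_{\star})$ and let $\bcQ_t \in \mathrm{GL}(r)$ be the optimal alignment tensor at step $t$ (whose existence is guaranteed by Appendix~\ref{sec:techlemmas}). Introduce the aligned iterates $\widetilde{\bcL}_t := \bcL_t \ast_{\bPhi} \bcQ_t$, $\widetilde{\bcR}_t := \bcR_t \ast_{\bPhi} \bcQ_t^{-H}$ and error tensors $\bDelta_{\bcL} := \widetilde{\bcL}_t - \bcL_{\star}$, $\bDelta_{\bcR} := \widetilde{\bcR}_t - \bcR_{\star}$, so that $\dist^2(\bcF_t, \bcF_{\star}) = \|\bDelta_{\bcL} \ast_{\bPhi} \bcG_{\star}^{1/2}\|_F^2 + \|\bDelta_{\bcR} \ast_{\bPhi} \bcG_{\star}^{1/2}\|_F^2$. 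Using $\bcR_t = \widetilde{\bcR}_t \ast_{\bPhi} \bcQ_t^H$, a direct cancellation of $\bcQ_t$ inside the preconditioner gives the equivariant update
\begin{align*}
\bcL_{t+1} \ast_{\bPhi} \bcQ_t &= \widetilde{\bcL}_t - \eta \bcE_t \ast_{\bPhi} \widetilde{\bcR}_t \ast_{\bPhi} (\widetilde{\bcR}_t^H \ast_{\bPhi} \widetilde{\bcR}_t)^{-1}, \\
\bcR_{t+1} \ast_{\bPhi} \bcQ_t^{-H} &= \widetilde{\bcR}_t - \eta \bcE_t^H \ast_{\bPhi} \widetilde{\bcL}_t \ast_{\bPhi} (\widetilde{\bcL}_t^H \ast_{\bPhi} \widetilde{\bcL}_t)^{-1},
\end{align*}
with residual $\bcE_t = \bDelta_{\bcL} \ast_{\bPhi} \bcR_{\star}^H + \bcL_{\star} \ast_{\bPhi} \bDelta_{\bcR}^H + \bDelta_{\bcL} \ast_{\bPhi} \bDelta_{\bcR}^H$.

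Next, I would upper bound $\dist^2(\bcF_{t+1}, \bcF_{\star})$ by using $\bcQ_t$ as a candidate (sub-optimal) alignment at $t+1$, expanding each squared norm into a quadratic in $\eta$, and summing the two contributions. After substituting the decomposition of $\bcE_t$, the $-2\eta$ cross terms produce the leading contraction $-2\eta(\|\bDelta_{\bcL} \ast_{\bPhi} \bcG_{\star}^{1/2}\|_F^2 + \|\bDelta_{\bcR} \ast_{\bPhi} \bcG_{\star}^{1/2}\|_F^2)$ up to error pieces of two types: the discrepancy between $(\widetilde{\bcR}_t^H \ast_{\bPhi} \widetilde{\bcR}_t)^{-1}$ and $\bcG_{\star}^{-1}$ (analogously for the $\bcL$ side), and the cubic $\bDelta_{\bcL} \ast_{\bPhi} \bDelta_{\bcR}^H$ piece. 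Both are controlled by $(\epsilon_t/\bar{\sigma}_{s_r}(\bcX_{\star})) \cdot \dist^2(\bcF_t, \bcF_{\star})$ and hence are negligible. The $\eta^2$ term is bounded by the squared norm of the effective update direction, which is $\lesssim \dist^2(\bcF_t, \bcF_{\star})$ by the same preconditioner-stability estimate. Collecting constants for $\eta \leq 2/3$ yields a contraction factor of at most $(1-0.7\eta)^2$, closing the induction. The Frobenius bound then follows from the identity $\bcL_t \ast_{\bPhi} \bcR_t^H - \bcX_{\star} = \bcE_t$ and the triangle inequality, bounding the linear summands via $\|\bcG_{\star}^{-1/2} \ast_{\bPhi} \bcR_{\star}^H\| \leq 1$ (because $\bcV_{\star}$ is t-orthogonal) and absorbing the cubic term by smallness of $\dist(\bcF_t, \bcF_{\star})$.

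All manipulations would be executed in the frequency-domain block-diagonal picture \eqref{eqn:tensorproperty}, where t-products become blockwise matrix products, inverses decompose blockwise, and tensor norms correspond to (scaled) matrix norms of the block-diagonal $\widebar{\bA}$. The main obstacle I anticipate is the preconditioner perturbation lemma: one needs to show that $\|(\widetilde{\bcR}_t^H \ast_{\bPhi} \widetilde{\bcR}_t)^{-1/2} \ast_{\bPhi} \bcG_{\star}^{1/2}\|$ is close to one, uniformly across all $n_3$ frontal slices. A slice-by-slice Davis--Kahan-style argument works in principle, but the slices are coupled through a single alignment tensor $\bcQ_t$, so one must verify that the joint optimum $\bcQ_t$ delivers per-slice spectral control and not merely an aggregate Frobenius bound. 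Once that ingredient is in place, the remaining algebra is a routine translation of the matrix analysis in \cite[Theorem~1]{TongMC.JMLR2021} into t-product notation.
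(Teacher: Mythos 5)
Your overall architecture---induction, using $\bcQ_t$ as a suboptimal alignment at step $t+1$, and expanding the squared distance as a quadratic in $\eta$---matches the paper's proof. However, there is a genuine gap in your accounting of the $-2\eta$ cross terms. After substituting the decomposition of $\bcE_t$, the $\bcL$-side cross term splits as
\begin{align*}
-2\eta\,\langle \bDelta_{\bcL}\ast_{\bPhi}\bcG_{\star}^{\frac{1}{2}},\, \bDelta_{\bcL}\ast_{\bPhi}\bcG_{\star}^{\frac{1}{2}}\rangle \;-\;2\eta\,\langle \bDelta_{\bcL}\ast_{\bPhi}\bcG_{\star}^{\frac{1}{2}},\, \bcL_{\star}\ast_{\bPhi}\bDelta_{\bcR}^H\ast_{\bPhi}\bcR_{\sharp}\ast_{\bPhi}(\bcR_{\sharp}^H\ast_{\bPhi}\bcR_{\sharp})^{-1}\ast_{\bPhi}\bcG_{\star}^{\frac{1}{2}}\rangle,
\end{align*}
and the second, mixed term is \emph{second order} in the errors---the same order as $\dist^2(\bcF_t,\bcF_{\star})$. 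It is neither a preconditioner discrepancy (it survives even when $\bcR_{\sharp}=\bcR_{\star}$ exactly) nor a cubic piece, so it does not fall into either of your two ``negligible'' categories. Bounding it by Cauchy--Schwarz gives at best $\eta(\|\bDelta_{\bcL}\ast_{\bPhi}\bcG_{\star}^{1/2}\|_F^2+\|\bDelta_{\bcR}\ast_{\bPhi}\bcG_{\star}^{1/2}\|_F^2)$ per side, which exactly cancels the $-2\eta\,\dist^2$ contraction you are counting on and leaves a factor $1+O(\eta^2)$: no contraction. The paper closes this hole with the first-order optimality condition of the alignment problem (Lemma~\ref{lemma:Qcriterion}), $\bcG_{\star}\ast_{\bPhi}\bcL_{\triangle}^H\ast_{\bPhi}\bcL_{\sharp}=\bcR_{\sharp}^H\ast_{\bPhi}\bcR_{\triangle}\ast_{\bPhi}\bcG_{\star}$, which converts the mixed term (up to third-order corrections) into $-2\eta(1-\eta)\tr\big(\widebar{\bR}_{\sharp}(\widebar{\bR}_{\sharp}^H\widebar{\bR}_{\sharp})^{-1}\widebar{\bR}_{\sharp}^H\widebar{\bR}_{\triangle}\widebar{\bG}_{\star}\widebar{\bR}_{\triangle}^H\big)$, a projected quadratic form with a favorable sign. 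You need this ingredient---some balancedness property of the optimal $\bcQ_t$---for the induction to close; merely knowing that $\bcQ_t$ achieves the infimum in the distance is not enough as you have used it.

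Two smaller points. First, the ``obstacle'' you flag at the end is not a real one: per-slice spectral control follows from the aggregate bound via $\|\bcA\|=\|\widebar{\bA}\|\le\|\widebar{\bA}\|_F=\sqrt{\ell}\,\|\bcA\|_F$, which is why the basin radius carries the factor $1/\sqrt{\ell}$; this immediately yields $\|\bDelta_{\bcL}\ast_{\bPhi}\bcG_{\star}^{-1/2}\|\vee\|\bDelta_{\bcR}\ast_{\bPhi}\bcG_{\star}^{-1/2}\|\le 0.1$, which is all that Lemma~\ref{lemma:Weyl} needs (this is the content of Lemma~\ref{lemma:Deltanorm}). Second, the paper's decomposition $\bcE_t=\bcL_{\triangle}\ast_{\bPhi}\bcR_{\sharp}^H+\bcL_{\star}\ast_{\bPhi}\bcR_{\triangle}^H$ is preferable to your symmetric three-term one on the $\bcL$-side: its first summand collapses exactly under the preconditioner, so no cubic term ever arises there.
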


\subsection{Proof Outline for Tensor RPCA}

The proof of Theorem~\ref{thm:TRPCA} is inductive in nature, where we aim to establish the following induction hypothesis at all the iterations:
\begin{align*}
\dist(\bcF_t, \bcF_{\star}) & \leq \frac{\epsilon}{\sqrt{\ell}} \rho^t \bar{\sigma}_{s_r} (\bcX_{\star}), \quad \mathrm{and}  \\
\sqrt{n_1} \| (\bcL_t \ast_{\bPhi} \bcQ_t - \bcL_{\star}) \ast_{\bPhi} \bcG_{\star}^{\frac{1}{2}} \|_{2,\infty} & \vee \sqrt{n_2} \| (\bcR_t \ast_{\bPhi} \bcQ_t^{-H} - \bcR_{\star}) \ast_{\bPhi} \bcG_{\star}^{\frac{1}{2}} \|_{2,\infty} \leq \sqrt{\frac{\mu s_r}{n_3 \ell}} \rho^t \bar{\sigma}_{s_r} (\bcX_{\star}).
\end{align*}
The following two lemmas, establishes the induction hypothesis for both the induction case and the base case. We start by outlining the local contraction of the proposed Algorithm~\ref{algo:ScaledGDTRPCA}.

\begin{lemma}[Local contraction]\label{lemma:TRPCAcontraction}
Let $\bcY = \bcX_{\star} + \bcS_{\star} \in \R^{n_1 \times n_2 \times n_3}$, where $\bcX_{\star} = \bcL_{\star} \ast_{\bPhi} \bcR_{\star}^H$ is a multi-rank $\br$ tensor with $\mu$-incoherence and $\bcS_{\star}$ is $\alpha$-sparse. Let $\bcQ_t$ be the optimal alignment tensor between $\begin{bmatrix} \bcL_t \\ \bcR_t \end{bmatrix}$ and $\begin{bmatrix} \bcL_{\star} \\ \bcR_{\star} \end{bmatrix}$. Under the assumption that $\alpha \leq \frac{n_3 \sqrt{n_{(2)} n_2}}{10^4 \mu s_r^{1.5} (n_1 + n_2 n_3)}$, if the spectral initialization obeys the conditions
\begin{align*}
\dist(\bcF_0, \bcF_{\star}) & \leq \frac{\epsilon}{\sqrt{\ell}} \bar{\sigma}_{s_r} (\bcX_{\star}),  \\
\sqrt{n_1} \| (\bcL_0 \ast_{\bPhi} \bcQ_0 - \bcL_{\star}) \ast_{\bPhi} \bcG_{\star}^{\frac{1}{2}} \|_{2,\infty} & \vee \sqrt{n_2} \| (\bcR_0 \ast_{\bPhi} \bcQ_0^{-H} - \bcR_{\star}) \ast_{\bPhi} \bcG_{\star}^{\frac{1}{2}} \|_{2,\infty} \leq \sqrt{\frac{\mu s_r}{n_3 \ell}} \bar{\sigma}_{s_r} (\bcX_{\star})
\end{align*}
with $\epsilon = 0.02$, then by setting the thresholding values $\zeta_t$ in Theorem~\ref{thm:TRPCA} and the step size $\frac{1}{4} \leq \eta \leq \frac{8}{9}$, the iterates of Algorithm~\ref{algo:ScaledGDTRPCA} satisfy
\begin{align*}
\dist(\bcF_t, \bcF_{\star}) & \leq \frac{\epsilon}{\sqrt{\ell}} \rho^t \bar{\sigma}_{s_r} (\bcX_{\star}),  \\
\sqrt{n_1} \| (\bcL_t \ast_{\bPhi} \bcQ_t - \bcL_{\star}) \ast_{\bPhi} \bcG_{\star}^{\frac{1}{2}} \|_{2,\infty} & \vee \sqrt{n_2} \| (\bcR_t \ast_{\bPhi} \bcQ_t^{-H} - \bcR_{\star}) \ast_{\bPhi} \bcG_{\star}^{\frac{1}{2}} \|_{2,\infty} \leq \sqrt{\frac{\mu s_r}{n_3 \ell}} \rho^t \bar{\sigma}_{s_r} (\bcX_{\star}),
\end{align*}
where the convergence rate $\rho = 1 - 0.6 \eta$.
\end{lemma}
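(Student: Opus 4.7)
The argument is by strong induction on $t$, establishing the two bounds (Frobenius distance and scaled row-wise incoherence) jointly. The base case is exactly the hypothesis on $\bcF_0$. For the inductive step, assume the two bounds hold at iterations $0, 1, \dots, t$; I plan to use them to establish the bounds at iteration $t+1$, with the key preliminary step being a tight control on the sparse outlier error $\bcE_{t+1} \coloneq \bcS_{t+1} - \bcS_{\star}$.

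The first subgoal is to show $\supp(\bcS_{t+1}) \subseteq \supp(\bcS_{\star})$ together with the entrywise bound $\|\bcE_{t+1}\|_{\infty} \lesssim \rho^{t} \frac{\mu s_r}{n_3\sqrt{n_1 n_2 \ell}} \bar{\sigma}_{s_r}(\bcX_{\star})$. This follows from the incoherence component of the induction hypothesis: the scaled $\ell_{2,\infty}$ bound on $(\bcL_t \ast_{\bPhi} \bcQ_t - \bcL_{\star}) \ast_{\bPhi} \bcG_{\star}^{1/2}$ and the analogous bound on $\bcR_t$, combined with a t-product Cauchy--Schwarz argument (expressing $[\bcL_t \ast_{\bPhi} \bcR_t^H - \bcX_{\star}]_{i,j,k}$ as an inner product of lateral slices in the spectral domain and applying the identities in \eqref{eqn:tensorproperty}), yield $\| \bcL_t \ast_{\bPhi} \bcR_t^H - \bcX_{\star} \|_{\infty} \leq c \rho^t \frac{\mu s_r}{n_3\sqrt{n_1 n_2 \ell}} \bar{\sigma}_{s_r}(\bcX_{\star})$. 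The choice $\zeta_{t+1} = \rho \zeta_t$ with the prescribed $\zeta_1$ exceeds this quantity, so thresholding zeros out every index outside $\supp(\bcS_{\star})$, and the rest of the standard soft-thresholding analysis yields the entrywise bound. Combined with the $\alpha$-sparse structure and the tensor spectral norm identity $\|\bcE_{t+1}\| = \|\widebar{\bE}_{t+1}\|$, a per-slice bound $\|\widebar{\bE}_{t+1}^{(i)}\| \leq \alpha \sqrt{n_1 n_2}\, \|\bcE_{t+1}\|_\infty$ (via $\alpha$-sparse row/column structure transferred through the transform $\bPhi$, which is where the $n_3$ and $\ell$ factors in the corruption threshold come from) gives a spectral-norm control on the perturbation to the loss.

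The second subgoal is the Frobenius contraction. I would decompose the ScaledGD update as
\begin{align*}
\bcL_{t+1} \ast_{\bPhi} \bcQ_t - \bcL_{\star} & = (\bcL_t \ast_{\bPhi} \bcQ_t - \bcL_{\star}) - \eta (\bcL_t \ast_{\bPhi} \bcR_t^H - \bcX_{\star}) \ast_{\bPhi} \bcR_t \ast_{\bPhi} (\bcR_t^H \ast_{\bPhi} \bcR_t)^{-1} \ast_{\bPhi} \bcQ_t \\
& \quad - \eta \, \bcE_{t+1} \ast_{\bPhi} \bcR_t \ast_{\bPhi} (\bcR_t^H \ast_{\bPhi} \bcR_t)^{-1} \ast_{\bPhi} \bcQ_t,
\end{align*}
and similarly for $\bcR$. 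The first two terms are precisely the tensor-factorization ScaledGD update handled by Theorem~\ref{thm:TF}; reusing that analysis (which relies on the auxiliary lemmas in Appendix~\ref{sec:techlemmas} for the existence of $\bcQ_t$ and for bounds on preconditioner-induced quantities) contributes a factor of at most $(1 - 0.7\eta)$ to $\dist^2(\bcF_t, \bcF_{\star})$. The third term is a perturbation whose Frobenius norm is controlled by $\eta \|\bcE_{t+1}\| \cdot \|\bcR_t \ast_{\bPhi}(\bcR_t^H \ast_{\bPhi} \bcR_t)^{-1} \ast_{\bPhi} \bcQ_t \ast_{\bPhi} \bcG_\star^{1/2}\|$; using the $\epsilon$-closeness to $\bcF_\star$, the latter factor is $O(1)$, while the sparsity bound from the first subgoal, under $\alpha \lesssim \frac{n_3\sqrt{n_{(2)} n_2}}{\mu s_r^{1.5}(n_1 + n_2 n_3)}$, makes the perturbation small enough that the net contraction rate slips only from $(1 - 0.7\eta)$ to $(1 - 0.6\eta) = \rho$.

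The third subgoal, and the one I expect to be the main technical obstacle, is propagating the scaled $\ell_{2,\infty}$ incoherence bound. Taking the $i$-th horizontal slice of the $\bcL$-update and right-multiplying by $\bcG_\star^{1/2}$, I would split the resulting expression as the tensor-factorization contribution plus the sparse-perturbation contribution $-\eta \, \bcE_{t+1}(i,:,:) \ast_{\bPhi} \bcR_t \ast_{\bPhi} (\bcR_t^H \ast_{\bPhi} \bcR_t)^{-1} \ast_{\bPhi} \bcQ_t \ast_{\bPhi} \bcG_\star^{1/2}$. The nontrivial part is to show that the factorization-like contribution also preserves the row-wise bound (for this I plan to expand it via the residual $\bcL_t \ast_{\bPhi} \bcR_t^H - \bcX_\star = (\bcL_t \ast_{\bPhi} \bcQ_t - \bcL_\star) \ast_{\bPhi} \bcG_\star^{1/2} \ast_{\bPhi} (\bcR_\star \ast_{\bPhi} \bcG_\star^{-1/2})^H + \bcL_\star \ast_{\bPhi} (\bcR_t \ast_{\bPhi} \bcQ_t^{-H} - \bcR_\star)^H$ and pass the $\ell_{2,\infty}$ norm inside), while for the sparse-perturbation contribution I would bound its row-wise Frobenius norm by $\eta \alpha \sqrt{n_2 n_3}\|\bcE_{t+1}\|_\infty \cdot \|\bcR_t \ast_{\bPhi} (\bcR_t^H \ast_{\bPhi} \bcR_t)^{-1} \ast_{\bPhi} \bcQ_t \ast_{\bPhi} \bcG_\star^{1/2}\|$ using the $\alpha$-sparsity per tube. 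The second condition $\alpha \lesssim \frac{n_3}{\mu^2 s_r^2 \kappa^2}$ is exactly what is needed to dominate this term by $\sqrt{\mu s_r/(n_3 \ell)}\,\rho \bar{\sigma}_{s_r}$ after multiplication by $\sqrt{n_1}$. Combining the three subgoals closes the induction at step $t+1$.
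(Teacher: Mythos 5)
Your overall architecture coincides with the paper's: induction on $t$, entrywise control of $\bcL_t\ast_{\bPhi}\bcR_t^H-\bcX_{\star}$ from the scaled $\ell_{2,\infty}$ hypothesis to justify the threshold schedule and obtain $\supp(\bcS_{t+1})\subseteq\supp(\bcS_{\star})$, a decomposition of the scaled update into the clean tensor-factorization part plus a sparse perturbation, and a separate propagation of the incoherence bound. Two quantitative steps in your plan, however, would fail as written. First, the per-slice bound $\|\widebar{\bE}_{t+1}^{(i)}\|\leq\alpha\sqrt{n_1n_2}\,\|\bcS_{t+1}-\bcS_{\star}\|_{\infty}$ is not valid: the frontal slices of $L(\bcS_{t+1}-\bcS_{\star})$ are linear combinations, through $\bPhi$, of \emph{all} frontal slices of the error and are therefore not sparse, so the matrix $\alpha$-sparsity argument cannot be applied slice by slice after the transform. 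The correct route is Lemma~\ref{lemma:normbound}, which works with the mode-1 unfolding and yields $\|\bcS_{t+1}-\bcS_{\star}\|\leq\frac{\alpha\sqrt{\ell}}{2}(n_1+n_2n_3)\|\bcS_{t+1}-\bcS_{\star}\|_{\infty}$; the factor $(n_1+n_2n_3)$ rather than $\sqrt{n_1n_2n_3}$ is precisely why that combination appears in the admissible corruption level, and a rank argument ($\|\cdot\|_F\leq\sqrt{s_r}\|\cdot\|$ on the preconditioned perturbation, or the spectral-times-nuclear pairing the paper uses for the cross terms) is still needed to convert the spectral bound into the Frobenius bound you quote — note also that the product of two spectral norms bounds a spectral norm, not a Frobenius norm.

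Second, the incoherence conclusion at step $t+1$ is stated with respect to the \emph{new} optimal alignment tensor $\bcQ_{t+1}$, while your update analysis naturally produces a bound aligned with $\bcQ_t$. Closing the induction therefore requires switching alignment tensors, which rests on a perturbation estimate of the form $\|\bcG_{\star}^{\frac{1}{2}}\ast_{\bPhi}\bcQ_t^{-1}\ast_{\bPhi}(\bcQ_{t+1}-\bcQ_t)\ast_{\bPhi}\bcG_{\star}^{\frac{1}{2}}\|\lesssim\frac{\epsilon}{1-\epsilon}\rho^{t+1}\bar{\sigma}_{s_r}(\bcX_{\star})$ (Lemma~\ref{lemma:Qperturbation}); your plan omits this step entirely. (The Frobenius contraction does not need it, since $\dist$ is an infimum over alignments, but the $\ell_{2,\infty}$ hypothesis does.) A smaller issue: you invoke $\alpha\lesssim n_3/(\mu^2s_r^2\kappa^2)$ to absorb the sparse term in the incoherence step, but that condition is not among this lemma's hypotheses; only $\alpha\leq\frac{n_3\sqrt{n_{(2)}n_2}}{10^4\mu s_r^{1.5}(n_1+n_2n_3)}$ is available, and it suffices because it already implies $\alpha\mu s_r\leq 10^{-4}$.
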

The following lemma ensures that the spectral initialization satisfies the distance and incoherence conditions.
\begin{lemma}[Guaranteed initialization]\label{lemma:TRPCAinitial}
Let $\bcY = \bcX_{\star} + \bcS_{\star} \in \R^{n_1 \times n_2 \times n_3}$, where $\bcX_{\star} = \bcL_{\star} \ast_{\bPhi} \bcR_{\star}^H$ is a multi-rank $\br$ tensor with $\mu$-incoherence and $\bcS_{\star}$ is $\alpha$-sparse. Under the assumption that $\alpha \leq \frac{c_0}{\mu s_r^{1.5} \kappa \frac{n_1 + n_2 n_3}{n_3 \sqrt{n_1 n_2}}} \wedge \frac{c_0^2 n_3}{\mu^2 s_r^2 \kappa^2}$ for some small positive constant $c_0 \leq 0.06$ and the choice of the thresholding value $\| \bcX_{\star} \|_{\infty} \leq \zeta_0 \leq 2 \| \bcX_{\star} \|_{\infty}$, the spectral initialization satisfies
\begin{align*}
\dist(\bcF_0, \bcF_{\star}) & \leq \frac{5 c_0}{\sqrt{\ell}} \bar{\sigma}_{s_r} (\bcX_{\star}) \quad \mathrm{and}  \\
\sqrt{n_1} \| (\bcL_0 \ast_{\bPhi} \bcQ_0 - \bcL_{\star}) \ast_{\bPhi} \bcG_{\star}^{\frac{1}{2}} \|_{2,\infty} & \vee \sqrt{n_2} \| (\bcR_0 \ast_{\bPhi} \bcQ_0^{-H} - \bcR_{\star}) \ast_{\bPhi} \bcG_{\star}^{\frac{1}{2}} \|_{2,\infty} \leq \sqrt{\frac{\mu s_r}{n_3 \ell}} \bar{\sigma}_{s_r} (\bcX_{\star}),
\end{align*}
where $\bcQ_0$ be the optimal alignment tensor between $\begin{bmatrix} \bcL_0 \\ \bcR_0 \end{bmatrix}$ and $\begin{bmatrix} \bcL_{\star} \\ \bcR_{\star} \end{bmatrix}$.
\end{lemma}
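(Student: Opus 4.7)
The plan is to first control the sparse estimator $\bcS_0$ via the thresholding choice, then to propagate this control to $\bcF_0$ through Eckart--Young and Wedin-type inequalities adapted to the t-SVD framework. Both conclusions of the lemma reduce to quantitative bounds on the residual $\bcE \coloneq \bcS_0 - \bcS_\star$ in the $\ell_\infty$, sparsity-support, and spectral-norm senses.

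I begin by controlling the sparse residual. Because $\zeta_0 \geq \|\bcX_\star\|_\infty$, on $\supp(\bcS_\star)^c$ we have $|\bcY_{ijk}| = |[\bcX_\star]_{ijk}| \leq \zeta_0$, so $[\bcS_0]_{ijk} = 0$; hence $\supp(\bcS_0) \subseteq \supp(\bcS_\star)$ and $\bcE$ is itself $\alpha$-sparse. The contraction $|\cT_{\zeta_0}(y)-y| \leq \zeta_0$ combined with $\zeta_0 \leq 2\|\bcX_\star\|_\infty$ yields $\|\bcE\|_\infty \leq 3\|\bcX_\star\|_\infty$, while Definition~\ref{def:incoherence} applied to the t-SVD of $\bcX_\star$ gives $\|\bcX_\star\|_\infty \leq \tfrac{\mu s_r}{n_3\sqrt{n_1n_2\ell}}\bar\sigma_1(\bcX_\star)$. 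For the spectral norm, per-tube $\alpha$-sparsity implies that each frontal slice $\bE^{(j)}$ has row-sparsity $\alpha n_2$ and column-sparsity $\alpha n_1$, so $\|\bE^{(j)}\| \leq \alpha\sqrt{n_1 n_2}\|\bcE\|_\infty$; writing $\widebar{\bE}^{(k)} = \sum_j \bPhi_{k,j}\bE^{(j)}$ and using $\|\bPhi_{k,\cdot}\|_2^2 = \ell$ via Cauchy--Schwarz produces $\|\bcE\| = \max_k\|\widebar{\bE}^{(k)}\| \lesssim \sqrt{\ell n_3}\,\alpha\sqrt{n_1 n_2}\|\bcE\|_\infty \lesssim \tfrac{\alpha\mu s_r\kappa}{\sqrt{n_3}}\bar\sigma_{s_r}(\bcX_\star)$. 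The first part of the sparsity hypothesis then delivers $\|\bcE\| \lesssim c_0\bar\sigma_{s_r}(\bcX_\star)/\sqrt{s_r}$.

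Since $\bcX_0$ is the best tubal-rank-$r$ approximation of $\bcY - \bcS_0 = \bcX_\star - \bcE$, slicewise Eckart--Young in the Fourier domain yields $\|\bcX_0 - \bcX_\star\| \leq 2\|\bcE\|$, and summing the slicewise inequality $\|\widebar{\bM}^{(k)}\|_F \leq \sqrt{2r_k}\|\widebar{\bM}^{(k)}\|$ together with \eqref{eqn:tensorproperty} gives $\|\bcX_0-\bcX_\star\|_F \leq \sqrt{2s_r/\ell}\,\|\bcX_0-\bcX_\star\|$. The Frobenius part of the conclusion then follows from the tensor analog of the ``SVD-to-factor'' conversion for the $\bcG_\star^{1/2}$-weighted distance, adapted from the analogous matrix statement in \cite{TongMC.JMLR2021} by replacing matrix products with t-products and block-diagonalizing via $\mathtt{bdiag}$: it asserts $\dist(\bcF_0,\bcF_\star) \leq \sqrt{\sqrt{2}+1}\,\|\bcX_0-\bcX_\star\|_F$ whenever $\|\bcX_0-\bcX_\star\| \leq \bar\sigma_{s_r}(\bcX_\star)/2$, which is ensured by the previous bound. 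Chaining these three inequalities delivers $\dist(\bcF_0,\bcF_\star) \leq 5c_0\bar\sigma_{s_r}(\bcX_\star)/\sqrt\ell$, with room to spare for the absolute constants.

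For the $\|\cdot\|_{2,\infty}$ incoherence conclusion I would carry out a row-wise perturbation analysis. Using the explicit form of the optimal alignment $\bcQ_0$, decompose $(\bcL_0\ast_\bPhi\bcQ_0-\bcL_\star)\ast_\bPhi\bcG_\star^{1/2} = \bcA + \bcB$, where $\bcA$ is a linear term in $(\bcX_0-\bcX_\star)\ast_\bPhi\bcV_\star$ and $\bcB$ is a higher-order cross-term bounded by $\dist(\bcF_0,\bcF_\star)\cdot\|\bcR_\star\ast_\bPhi\bcG_\star^{-1/2}\|_{2,\infty}$. The $2,\infty$ norm of $\bcA$ is controlled using the per-tube sparsity of $\bcE$, an entrywise bound $\|\bcX_0-\bcX_\star\|_\infty \lesssim \|\bcE\|_\infty$ obtained from a slicewise scalar perturbation argument, and the incoherence $\|\bcV_\star\|_{2,\infty} \leq \sqrt{\mu s_r/(n_2 n_3\ell)}$ from Definition~\ref{def:incoherence}; the bound on $\bcB$ follows from Step~3 and the incoherence of $\bcR_\star$. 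A symmetric argument treats $\bcR_0\ast_\bPhi\bcQ_0^{-H}-\bcR_\star$. I expect this row-wise $2,\infty$ analysis to be the main obstacle: unlike the matrix case, the sparse support of $\bcS_\star$ lives in the original domain while the t-SVD factors live in the Fourier domain, so bounding an original-domain row of a t-SVD factor of $\bcX_\star - \bcE$ requires delicate bookkeeping between per-tube sparsity, incoherence of $\bcU_\star,\bcV_\star$, and the orthogonality $\bPhi\bPhi^H = \ell\bI$. It is precisely this step that dictates the second term $n_3/(\mu^2 s_r^2\kappa^2)$ in the sparsity budget, since it requires an entrywise perturbation bound on $\bcX_0-\bcX_\star$ in addition to a spectral one.
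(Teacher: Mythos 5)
Your first conclusion is argued essentially as in the paper: threshold control of $\bcS_0-\bcS_{\star}$ with $\supp(\bcS_0)\subseteq\supp(\bcS_{\star})$, a spectral-norm bound for $\alpha$-sparse tensors, the Eckart--Young comparison $\|\bcX_0-\bcX_{\star}\|\le 2\|\bcS_0-\bcS_{\star}\|$, the rank-$2s_r$ conversion from spectral to Frobenius norm, and Lemma~\ref{lemma:Procrustes}. Your slice-wise spectral bound $\|\bcS_0-\bcS_{\star}\|\lesssim\sqrt{\ell n_3}\,\alpha\sqrt{n_1n_2}\,\|\bcS_0-\bcS_{\star}\|_{\infty}$ is a legitimate (and in fact tighter) alternative to Lemma~\ref{lemma:normbound}; your constants land slightly above $5c_0$ only because you carry $3\|\bcX_{\star}\|_{\infty}$ where the paper carries $2$, which is immaterial.

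The second conclusion is where your proposal has a genuine gap. Your plan rests on (a) an entrywise perturbation bound $\|\bcX_0-\bcX_{\star}\|_{\infty}\lesssim\|\bcS_0-\bcS_{\star}\|_{\infty}$ for the truncated t-SVD, claimed to follow from ``a slicewise scalar perturbation argument,'' and (b) a $\|\cdot\|_{2,\infty}$ bound on the ``linear term'' $(\bcX_0-\bcX_{\star})\ast_{\bPhi}\bcV_{\star}$ using the per-tube sparsity of $\bcE=\bcS_0-\bcS_{\star}$. Neither step works: entrywise (or row-wise) control of a rank-truncated SVD under perturbation does not follow from scalar perturbation theory slice by slice---it is exactly the hard part of such arguments---and $\bcX_0-\bcX_{\star}=-\bcE+\bigl(\bcX_0-(\bcX_{\star}-\bcE)\bigr)$ contains the dense truncation residual, so the sparsity of $\bcE$ tells you nothing about the rows of $\bcX_0-\bcX_{\star}$. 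The paper avoids both issues with an algebraic identity you are missing: since $\bcV_0$ spans the top-$r$ right singular space of $\bcY-\bcS_0$, one has $\bcL_0=(\bcX_{\star}-\bcS_{\triangle})\ast_{\bPhi}\bcR_0\ast_{\bPhi}(\bcR_0^H\ast_{\bPhi}\bcR_0)^{-1}$, which after aligning by $\bcQ_0$ gives the exact relation
\begin{align*}
\bcL_{\triangle}\ast_{\bPhi}\bcG_{\star}^{\frac{1}{2}}
= -\bcL_{\star}\ast_{\bPhi}\bcR_{\triangle}^H\ast_{\bPhi}\bcR_{\sharp}\ast_{\bPhi}(\bcR_{\sharp}^H\ast_{\bPhi}\bcR_{\sharp})^{-1}\ast_{\bPhi}\bcG_{\star}^{\frac{1}{2}}
- \bcS_{\triangle}\ast_{\bPhi}\bcR_{\sharp}\ast_{\bPhi}(\bcR_{\sharp}^H\ast_{\bPhi}\bcR_{\sharp})^{-1}\ast_{\bPhi}\bcG_{\star}^{\frac{1}{2}},
\end{align*}
so no bound on $\bcX_0-\bcX_{\star}$ beyond the spectral one is ever needed. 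The first term is controlled by incoherence of $\bcL_{\star}$ and the spectral estimates already available from the distance bound; the second uses $\|\bcS_{\triangle}\|_{2,\infty}\le\sqrt{\alpha n_2n_3}\,\|\bcS_{\triangle}\|_{\infty}$ together with $\|\bcR_{\sharp}\ast_{\bPhi}\bcG_{\star}^{-\frac{1}{2}}\|_{2,\infty}$, which itself contains the unknown $\|\bcR_{\triangle}\ast_{\bPhi}\bcG_{\star}^{\frac{1}{2}}\|_{2,\infty}$. This yields a coupled pair of inequalities in $\sqrt{n_1}\|\bcL_{\triangle}\ast_{\bPhi}\bcG_{\star}^{\frac{1}{2}}\|_{2,\infty}$ and $\sqrt{n_2}\|\bcR_{\triangle}\ast_{\bPhi}\bcG_{\star}^{\frac{1}{2}}\|_{2,\infty}$ with coupling coefficient of order $\mu s_r\kappa\sqrt{\alpha/n_3}$, and it is making this system contractive---not an entrywise bound on $\bcX_0-\bcX_{\star}$---that forces the second sparsity budget $\alpha\lesssim c_0^2 n_3/(\mu^2 s_r^2\kappa^2)$. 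You correctly located where that budget enters, but your route to it would not close.
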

The proofs of the above two lemmas are provided in Appendix~\ref{sec:TRPCAproof}. We also present the following lemma that verifies the selection of thresholding value is indeed effective.
\begin{lemma}[\citet{CaiLY.NeurIPS2021}, Lemma 5]\label{lemma:sparity}
At the $(t+1)$-th iteration of Algorithm~\ref{algo:ScaledGDTRPCA}, taking the thresholding value $\zeta_{t+1} = \| \bcX_{\star} - \bcX_t \|_{\infty}$ gives
\begin{align*}
\| \bcS_{\star} - \bcS_{t+1} \|_{\infty} \leq 2 \| \bcX_{\star} - \bcX_t \|_{\infty} \quad \mathrm{and} \quad \supp(\bcS_{t+1}) \subseteq \supp (\bcS_{\star}).
\end{align*}
\end{lemma}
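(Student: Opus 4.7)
The plan is to unpack the soft-thresholding step in~\eqref{eqn:TRPCAupdate} and reduce the claim to two elementary pointwise properties of the soft-shrinkage operator $\cT_{\zeta}$ defined in~\eqref{eqn:soft-thresholding}: (i) $\cT_{\zeta}(x)=0$ whenever $|x|\leq\zeta$, and (ii) $|\cT_{\zeta}(x)-x|\leq\zeta$ for every scalar $x$. Both follow at once from the closed form $\cT_{\zeta}(x)=\sgn(x)\max\{0,|x|-\zeta\}$, and because $\cT_{\zeta}$ is applied entrywise the argument is purely pointwise---no tensor or t-product structure enters.

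First I would substitute $\bcY=\bcX_{\star}+\bcS_{\star}$ into the $\bcS$-update in \eqref{eqn:TRPCAupdate} and write $\bcE_t \coloneq \bcX_{\star}-\bcX_t$, so that
\[
\bcS_{t+1} \;=\; \cT_{\zeta_{t+1}}\bigl(\bcY-\bcX_t\bigr)\;=\;\cT_{\zeta_{t+1}}\bigl(\bcS_{\star}+\bcE_t\bigr),
\]
where by hypothesis $\|\bcE_t\|_{\infty}\leq\zeta_{t+1}$. For the support claim, I would fix any index $(i,j,k)\notin\supp(\bcS_{\star})$, so that $[\bcS_{\star}]_{i,j,k}=0$ and hence $\bigl|[\bcS_{\star}+\bcE_t]_{i,j,k}\bigr|=\bigl|[\bcE_t]_{i,j,k}\bigr|\leq\|\bcE_t\|_{\infty}\leq\zeta_{t+1}$. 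Property~(i) then forces $[\bcS_{t+1}]_{i,j,k}=0$, proving $\supp(\bcS_{t+1})\subseteq\supp(\bcS_{\star})$.

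For the entrywise bound, entries outside $\supp(\bcS_{\star})$ contribute $0$ to $\bcS_{\star}-\bcS_{t+1}$ by the support inclusion just established. On $\supp(\bcS_{\star})$, property~(ii) applied to $x=[\bcS_{\star}+\bcE_t]_{i,j,k}$ yields
\[
\bigl|[\bcS_{t+1}]_{i,j,k}-[\bcS_{\star}]_{i,j,k}-[\bcE_t]_{i,j,k}\bigr|\;\leq\;\zeta_{t+1},
\]
so a triangle inequality gives $\bigl|[\bcS_{\star}-\bcS_{t+1}]_{i,j,k}\bigr|\leq\bigl|[\bcE_t]_{i,j,k}\bigr|+\zeta_{t+1}\leq\|\bcX_{\star}-\bcX_t\|_{\infty}+\zeta_{t+1}$, and the second inequality $\|\bcX_{\star}-\bcX_t\|_{\infty}+\zeta_{t+1}\leq 2\zeta_{t+1}$ is just the standing hypothesis on $\zeta_{t+1}$. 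Taking the max over $(i,j,k)$ concludes.

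There is no substantive obstacle: the proof is a one-line unpacking of $\cT_{\zeta}$ plus a triangle inequality. The only subtlety worth recording explicitly is that the hypothesis $\zeta_{t+1}\geq\|\bcX_{\star}-\bcX_t\|_{\infty}$ is used twice---once to zero out off-support entries via property~(i), and once to produce the final factor of~$2$.
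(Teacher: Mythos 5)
Your proof is correct and is exactly the standard argument: the paper itself does not reproduce a proof of this lemma but defers to the cited references, whose argument is the same entrywise unpacking of the soft-shrinkage operator (property (i) kills off-support entries, property (ii) plus the triangle inequality gives the $\ell_\infty$ bound, and the hypothesis $\zeta_{t+1}\geq\|\bcX_{\star}-\bcX_t\|_{\infty}$ supplies the factor of $2$). Nothing is missing.
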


\subsection{Proof Outline for Tensor Completion}

We start with the following lemma that ensures the scaled projection in \eqref{eqn:scaledprojsol} satisfies both non-expansiveness and incoherence under the scaled metric.
\begin{lemma}\label{lemma:scaledproj}
Suppose that $\bcX_{\star}$ is $\mu$-incoherent, and $\dist(\widetilde{\bcF}, \bcF_{\star}) \leq \frac{\epsilon}{\sqrt{\ell}} \bar{\sigma}_{s_r} (\bcX_{\star})$ for some $\epsilon < 1$. Set $\varsigma \geq (1 + \epsilon) \sqrt{\frac{\mu s_r}{n_3 \ell}} \bar{\sigma}_1 (\bcX_{\star})$, then $\cP_{\varsigma}(\widetilde{\bcF})$ satisfies the non-expansiveness
\begin{align*}
\dist(\cP_{\varsigma}(\widetilde{\bcF}), \bcF_{\star}) \le \dist(\widetilde{\bcF}, \bcF_{\star}),
\end{align*}
and the incoherence condition
\begin{align*}
\sqrt{n_1} \| \bcL \ast_{\bPhi} \bcR^H \|_{2,\infty} \vee \sqrt{n_2} \| \bcR \ast_{\bPhi} \bcL^H \|_{2,\infty} \leq \varsigma.
\end{align*}
\end{lemma}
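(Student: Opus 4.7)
I will address the two claims in turn. For the incoherence bound, which is largely algebraic, I first record the identity $\|\bcA \ast_{\bPhi} (\bcB^H \ast_{\bPhi} \bcB)^{1/2}\|_F = \|\bcA \ast_{\bPhi} \bcB^H\|_F$, verified face-wise in the frequency domain via the trace identity $\tr((\bC^H \bC)^{1/2} \bD (\bC^H \bC)^{1/2}) = \tr(\bD \bC^H \bC)$. Applied row-wise, this recasts the feasibility constraint of $\cP_{\varsigma}$ as $\sqrt{n_1}\|\bcL \ast_{\bPhi} \widetilde{\bcR}^H\|_{2,\infty} \vee \sqrt{n_2}\|\bcR \ast_{\bPhi} \widetilde{\bcL}^H\|_{2,\infty} \leq \varsigma$. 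Since Proposition~\ref{prop:scaledprojsol} gives $\bcR(j,:,:) = \alpha_j\, \widetilde{\bcR}(j,:,:)$ with $\alpha_j \in [0,1]$, passing to the frequency domain yields $\overline{\bR}^{(k)} = \diag(\balpha)\, \overline{\widetilde{\bR}}^{(k)}$ and the PSD domination $(\overline{\bR}^{(k)})^H \overline{\bR}^{(k)} \preceq (\overline{\widetilde{\bR}}^{(k)})^H \overline{\widetilde{\bR}}^{(k)}$ face-wise. This delivers the row-wise comparison $\|\bcL(i,:,:) \ast_{\bPhi} \bcR^H\|_F \leq \|\bcL(i,:,:) \ast_{\bPhi} \widetilde{\bcR}^H\|_F$, and combining with the reformulated feasibility yields $\sqrt{n_1}\|\bcL \ast_{\bPhi} \bcR^H\|_{2,\infty} \leq \varsigma$; a symmetric argument gives the $\bcR$-side bound.

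For the non-expansiveness I would pass to the rescaled variables $\bcL \mapsto \bcL \ast_{\bPhi} (\widetilde{\bcR}^H \ast_{\bPhi} \widetilde{\bcR})^{1/2}$ and $\bcR \mapsto \bcR \ast_{\bPhi} (\widetilde{\bcL}^H \ast_{\bPhi} \widetilde{\bcL})^{1/2}$, in which $\cP_{\varsigma}$ becomes the Euclidean projection onto a product of row-wise $\ell_2$-balls and is therefore non-expansive in Frobenius norm. For any feasible $\bcF^{\dagger}$, the quantity $\|(\hat{\bcL} - \bcL^{\dagger}) \ast_{\bPhi} (\widetilde{\bcR}^H \ast_{\bPhi} \widetilde{\bcR})^{1/2}\|_F^2 + \|(\hat{\bcR} - \bcR^{\dagger}) \ast_{\bPhi} (\widetilde{\bcL}^H \ast_{\bPhi} \widetilde{\bcL})^{1/2}\|_F^2$ is then bounded by the same expression with $\widetilde{\bcF}$ in place of $\hat{\bcF}$. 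I would take $\bcF^{\dagger} = \bcF_{\natural} \coloneq \begin{bmatrix} \bcL_{\star} \ast_{\bPhi} \bcQ^{-1} \\ \bcR_{\star} \ast_{\bPhi} \bcQ^H \end{bmatrix}$, where $\bcQ$ is the optimal alignment realizing $\dist(\widetilde{\bcF}, \bcF_{\star})$, so that $\bcL_{\natural} \ast_{\bPhi} \bcR_{\natural}^H = \bcX_{\star}$. Feasibility of $\bcF_{\natural}$ is verified by splitting $\bcL_{\natural} \ast_{\bPhi} \widetilde{\bcR}^H = \bcX_{\star} + \bcL_{\natural} \ast_{\bPhi} (\widetilde{\bcR} - \bcR_{\natural})^H$, controlling the first term by the incoherence of $\bcX_{\star}$ at the scale $\sqrt{\mu s_r/(n_3 \ell)}\, \bar{\sigma}_1(\bcX_{\star})$ and the perturbation term by the $\dist$-bound on $\widetilde{\bcF}$; the slack $c_{\varsigma} \geq 1 + \epsilon$ absorbs the excess.

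The main obstacle is turning the scaled Frobenius non-expansiveness into a $\dist$-inequality, since the two metrics use different weight tensors. Using the polar-type identity $\|\bcA \ast_{\bPhi} \bcR_{\natural}^H\|_F = \|\bcA \ast_{\bPhi} \bcQ \ast_{\bPhi} \bcG_{\star}^{1/2}\|_F$, which follows from $\bcR_{\star}^H \ast_{\bPhi} \bcR_{\star} = \bcG_{\star}$ together with the semi-orthogonality $\bcV_{\star}^H \ast_{\bPhi} \bcV_{\star} = \bcI_r$, the distance decomposes exactly as $\dist^2(\widetilde{\bcF}, \bcF_{\star}) = \|(\widetilde{\bcL} - \bcL_{\natural}) \ast_{\bPhi} \bcR_{\natural}^H\|_F^2 + \|(\widetilde{\bcR} - \bcR_{\natural}) \ast_{\bPhi} \bcL_{\natural}^H\|_F^2$, and $\dist^2(\hat{\bcF}, \bcF_{\star})$ admits the same upper bound with $\hat{\bcF}$ in place of $\widetilde{\bcF}$ by plugging the same $\bcQ$ into the infimum. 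The scaled non-expansiveness, however, uses $\widetilde{\bcR}^H$ and $\widetilde{\bcL}^H$ as weights rather than $\bcR_{\natural}^H, \bcL_{\natural}^H$; reconciling these two weightings via face-wise perturbation bounds for $\widetilde{\bcR}^H \ast_{\bPhi} \widetilde{\bcR} - \bcR_{\natural}^H \ast_{\bPhi} \bcR_{\natural}$ (and its $\bcL$-counterpart) is the core calculation, and the assumption $\epsilon < 1$ is precisely what ensures the resulting multiplicative slack does not exceed one.
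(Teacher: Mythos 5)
Your incoherence argument is correct and is essentially the paper's own calculation in different clothing: the paper bounds $\|\bcL(i,:,:)\ast_{\bPhi}\bcR^H\|_F^2$ directly from the closed form \eqref{eqn:scaledprojsol} by noting both clipping coefficients lie in $[0,1]$ and one of them is at most $\varsigma/(\sqrt{n_1}\|\widetilde{\bcL}(i,:,:)\ast_{\bPhi}\widetilde{\bcR}^H\|_F)$; your PSD-domination phrasing reaches the same inequalities.

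The non-expansiveness half has a genuine gap, and it is exactly the step you flag as ``the core calculation'' and then do not carry out. Viewing $\cP_{\varsigma}$ as a Euclidean projection in the variables $\bcL\ast_{\bPhi}(\widetilde{\bcR}^H\ast_{\bPhi}\widetilde{\bcR})^{1/2}$, $\bcR\ast_{\bPhi}(\widetilde{\bcL}^H\ast_{\bPhi}\widetilde{\bcL})^{1/2}$ does give non-expansiveness, but only in the metric weighted by $\widetilde{\bcR}^H\ast_{\bPhi}\widetilde{\bcR}$ and $\widetilde{\bcL}^H\ast_{\bPhi}\widetilde{\bcL}$, whereas $\dist(\cdot,\bcF_{\star})$ is weighted by $\bcG_{\star}$ (equivalently by $\bcR_{\natural}^H,\bcL_{\natural}^H$ after alignment). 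Passing between these two quadratic forms costs a multiplicative factor of the shape $(1+\epsilon')/(1-\epsilon')$, which is strictly greater than $1$ for every $\epsilon'>0$; the hypothesis $\epsilon<1$ only keeps this factor finite, it does not push it down to $1$ as you assert. So your route can at best yield $\dist(\cP_{\varsigma}(\widetilde{\bcF}),\bcF_{\star})\le C(\epsilon)\,\dist(\widetilde{\bcF},\bcF_{\star})$ with $C(\epsilon)>1$, which is not the stated non-expansiveness (and the exact constant $1$ is what Lemma~\ref{lemma:TCcontraction} consumes when it chains the projection with the gradient contraction).

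The paper avoids the metric mismatch entirely by exploiting that $\cP_{\varsigma}$ acts on each horizontal slice by a \emph{scalar} $1\wedge\lambda_i$, which commutes with right-multiplication by $\widetilde{\bcQ}\ast_{\bPhi}\bcG_{\star}^{\frac{1}{2}}$. It therefore applies the one-dimensional shrinkage fact (Lemma~\ref{lemma:nonexpansive}) row by row, directly in the $\dist$-metric coordinates, with $\overrightarrow{\bcA}=\widetilde{\bcL}(i,:,:)\ast_{\bPhi}\widetilde{\bcQ}\ast_{\bPhi}\bcG_{\star}^{\frac{1}{2}}$ and $\overrightarrow{\bcA}_{\star}=(\bcL_{\star}\ast_{\bPhi}\bcG_{\star}^{\frac{1}{2}})(i,:,:)$; the only thing to check is $\lambda_i\ge\|\overrightarrow{\bcA}_{\star}\|_F/\|\overrightarrow{\bcA}\|_F$, which is where the hypotheses on $\epsilon$ and $\varsigma\ge(1+\epsilon)\sqrt{\mu s_r/(n_3\ell)}\,\bar{\sigma}_1(\bcX_{\star})$ enter (via $\|\widetilde{\bcL}(i,:,:)\ast_{\bPhi}\widetilde{\bcR}^H\|_F\le(1+\epsilon)\|\overrightarrow{\bcA}\|_F$ and the incoherence bound on $\|\overrightarrow{\bcA}_{\star}\|_F$). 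If you want to salvage your approach, you would either have to prove the row-wise shrinkage inequality anyway (at which point the detour through convex projection buys nothing), or restate the lemma with a constant larger than $1$ and propagate that change through the downstream contraction argument.
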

Our next lemma guarantees the fast local convergence of Algorithm~\ref{algo:ScaledGDTC} as long as the sample complexity is large enough and the parameter $\varsigma$ is set properly.
\begin{lemma}\label{lemma:TCcontraction}
Suppose that $\bcX_{\star}$ is $\mu$-incoherent, and $p \geq c \Big( \frac{\mu s_r (n_1 + n_2) \log( (n_1 \vee n_2) n_3 )}{n_1 n_2 n_3} \vee \frac{\mu^2 s_r^2 \kappa^4 \ell \log( (n_1 \vee n_2) n_3 )}{(n_1 \wedge n_2) n_3^2} \Big)$ for some sufficiently large constant $c$. Set the projection radius as $\varsigma \geq c_{\varsigma} \sqrt{\frac{\mu s_r}{n_3 \ell}} \bar{\sigma}_1 (\bcX_{\star})$ for some constant $c_{\varsigma} \geq 1.02$. Under an event $G$ which happens with high probability, if the $t$-th iterate satisfies $\dist(\bcF_t, \bcF_{\star}) \leq \frac{0.02}{\sqrt{\ell}} \bar{\sigma}_{s_r} (\bcX_{\star})$, and the incoherence condition
\begin{align*}
\sqrt{n_1} \| \bcL_t \ast_{\bPhi} \bcR_t^H \|_{2,\infty} \vee \sqrt{n_2} \| \bcR_t \ast_{\bPhi} \bcL_t^H \|_{2,\infty} \leq \varsigma,
\end{align*}
then $\| \bcL_t \ast_{\bPhi} \bcR_t^H - \bcX_{\star} \|_F \leq 1.5 \dist(\bcF_t, \bcF_{\star})$. In addition, if the step size obeys $0 < \eta \leq \frac{2}{3}$, then the $(t+1)$-th iterate $\bcF_{t+1}$ of the ScaledGD method in \eqref{eqn:TCupdate} of Algorithm~\ref{algo:ScaledGDTC} satisfies
\begin{align*}
\dist(\bcF_{t+1}, \bcF_{\star}) \leq (1 - 0.6 \eta) \dist(\bcF_t, \bcF_{\star}),
\end{align*}
and the incoherence condition
\begin{align*}
\sqrt{n_1} \| \bcL_{t+1} \ast_{\bPhi} \bcR_{t+1}^H \|_{2,\infty} \vee \sqrt{n_2} \| \bcR_{t+1} \ast_{\bPhi} \bcL_{t+1}^H \|_{2,\infty} \leq \varsigma.
\end{align*}
\end{lemma}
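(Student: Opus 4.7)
I would treat the ScaledGD update \eqref{eqn:TCupdate} (before the projection $\cP_{\varsigma}$) as a perturbation of the tensor-factorization update \eqref{eqn:TFupdate} analyzed in Theorem~\ref{thm:TF}, and absorb the sampling noise via concentration in the Fourier (block-diagonal) domain. Let $\bcQ_t$ denote the optimal alignment tensor between $\bcF_t$ and $\bcF_{\star}$, and let $\widetilde{\bcF}_{t+1}$ be the bracketed quantity in \eqref{eqn:TCupdate} before projection. Once I have $\dist(\widetilde{\bcF}_{t+1}, \bcF_{\star}) \leq (1 - 0.6\eta)\dist(\bcF_t, \bcF_{\star})$, Lemma~\ref{lemma:scaledproj} delivers both the claimed contraction (via non-expansiveness of $\cP_{\varsigma}$) and the preserved incoherence of $\bcF_{t+1}$; the radius condition $c_{\varsigma} \geq 1.02$ provides the slack needed to satisfy the hypothesis $\varsigma \geq (1+\epsilon)\sqrt{\mu s_r/(n_3 \ell)}\bar{\sigma}_1(\bcX_{\star})$ of that lemma for $\epsilon = 0.02$.

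\textbf{Frobenius bound.} For the auxiliary inequality $\|\bcL_t \ast_{\bPhi} \bcR_t^H - \bcX_{\star}\|_F \leq 1.5\,\dist(\bcF_t, \bcF_{\star})$, I would align with $\bcQ_t$ and split the error as
\begin{equation*}
\bcL_t \ast_{\bPhi} \bcR_t^H - \bcX_{\star} = \bcL_{\star} \ast_{\bPhi} (\bcR_t \ast_{\bPhi} \bcQ_t^{-H} - \bcR_{\star})^H + (\bcL_t \ast_{\bPhi} \bcQ_t - \bcL_{\star}) \ast_{\bPhi} (\bcR_t \ast_{\bPhi} \bcQ_t^{-H})^H.
\end{equation*}
Inserting $\bcG_{\star}^{1/2} \ast_{\bPhi} \bcG_{\star}^{-1/2}$ inside each term, passing to the block-diagonal spectral representation via \eqref{eqn:tensorproperty}, and applying submultiplicativity of the matrix spectral norm yields each Frobenius term bounded by $\dist(\bcF_t, \bcF_{\star})$ times a factor $1 + \dist(\bcF_t, \bcF_{\star})/\bar{\sigma}_{s_r}(\bcX_{\star})$; the assumed closeness $\dist(\bcF_t, \bcF_{\star}) \leq 0.02\bar{\sigma}_{s_r}(\bcX_{\star})/\sqrt{\ell}$ then delivers the $1.5$ constant.

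\textbf{Contraction via perturbation.} Writing the sampling residual as $\cE_t \coloneq p^{-1}\bcP_{\bOmega}(\bcL_t \ast_{\bPhi} \bcR_t^H - \bcX_{\star}) - (\bcL_t \ast_{\bPhi} \bcR_t^H - \bcX_{\star})$, the update in \eqref{eqn:TCupdate} equals the tensor-factorization update in \eqref{eqn:TFupdate} plus scaled-preconditioned applications of $\cE_t$ and $\cE_t^H$. The noise-free part already contracts at rate $1 - 0.7\eta$ by the proof of Theorem~\ref{thm:TF}, so it suffices to show, on a high-probability event $G$,
\begin{equation*}
\sqrt{\|\cE_t \ast_{\bPhi} \bcR_t \ast_{\bPhi} (\bcR_t^H \ast_{\bPhi} \bcR_t)^{-1} \ast_{\bPhi} \bcG_{\star}^{1/2}\|_F^2 + \|\cE_t^H \ast_{\bPhi} \bcL_t \ast_{\bPhi} (\bcL_t^H \ast_{\bPhi} \bcL_t)^{-1} \ast_{\bPhi} \bcG_{\star}^{1/2}\|_F^2} \leq 0.1\,\eta\,\dist(\bcF_t, \bcF_{\star}),
\end{equation*}
which degrades the contraction rate from $1 - 0.7\eta$ to only $1 - 0.6\eta$. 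Then Lemma~\ref{lemma:scaledproj} closes out the argument.

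\textbf{Main obstacle.} The crux is the above concentration bound on $\cE_t$ under the scaled metric, which is the tensor analog of the matrix-completion noise bound. My plan is to pass to the Fourier (block-diagonal) domain via \eqref{eqn:tensorproperty}, so that $\cE_t$ becomes the standard operator $(p^{-1}\bcP_{\bOmega} - \mathrm{Id})$ applied to the residual $\widebar{\bL}_t \widebar{\bR}_t^H - \widebar{\bL}_{\star}\widebar{\bR}_{\star}^H$. A matrix Bernstein inequality applied frontal-slice-wise, combined with the incoherence of $\bcX_{\star}$ (Definition~\ref{def:incoherence}) and the iterate incoherence $\sqrt{n_1}\|\bcL_t \ast_{\bPhi} \bcR_t^H\|_{2,\infty} \vee \sqrt{n_2}\|\bcR_t \ast_{\bPhi} \bcL_t^H\|_{2,\infty} \leq \varsigma$ assumed in the lemma, will produce the required bound under the stated sample complexity. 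The $\ell$ factor in the required $p$ tracks the spatial/spectral norm equivalence from \eqref{eqn:tensorproperty}; the $\kappa^4$ factor enters when one converts the residual's Frobenius norm (controlled by the first step) back to the scaled distance used on the right-hand side. Any resulting slack between spectral operator bounds and scaled bounds will be absorbed by tuning the implicit constant in the sample-size hypothesis.
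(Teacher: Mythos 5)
Your skeleton matches the paper's: analyze the pre-projection update $\widetilde{\bcF}_{t+1}$ aligned by $\bcQ_t$, recycle the noise-free contraction from the tensor-factorization analysis, budget $0.1\eta\,\dist(\bcF_t,\bcF_{\star})$ for the sampling error to go from $1-0.7\eta$ to $1-0.6\eta$, and let Lemma~\ref{lemma:scaledproj} supply non-expansiveness and the refreshed incoherence (the paper additionally expands the square and bounds the cross terms $\mathfrak{P}_2,\mathfrak{P}_3$ directly rather than via your triangle inequality, but that is only a constant-level difference). The Frobenius bound $\|\bcL_t\ast_{\bPhi}\bcR_t^H-\bcX_{\star}\|_F\le 1.5\,\dist(\bcF_t,\bcF_{\star})$ is also fine and is exactly Lemma~\ref{lemma:tensor2factor}.

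The genuine gap is in the step you yourself flag as the crux. You propose to control $\cE_t=(p^{-1}\bcP_{\bOmega}-\bcI)(\bcL_t\ast_{\bPhi}\bcR_t^H-\bcX_{\star})$ by ``matrix Bernstein applied frontal-slice-wise'' to the residual. Bernstein-type bounds (Lemma~\ref{lemma:Bernstein}, Lemma~\ref{lemma:POmegaspectral}) apply only to a \emph{fixed} tensor $\bcZ$, but $\bcL_t\ast_{\bPhi}\bcR_t^H-\bcX_{\star}$ is a function of $\bOmega$ through all previous iterations, so a pointwise concentration bound is not valid here; in the paper such a bound is used only at initialization, where the argument of $(p^{-1}\bcP_{\bOmega}-\bcI)$ is the deterministic $\bcX_{\star}$. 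What is needed instead is a \emph{uniform} (over all admissible factors) deviation bound, and this is where the paper's actual machinery lives: the residual is decomposed as $\bcL_{\triangle}\ast_{\bPhi}\bcR_{\star}^H+\bcL_{\sharp}\ast_{\bPhi}\bcR_{\triangle}^H$, the tangent-space piece is handled by Corollary~\ref{coro:POmegatangent} (a consequence of the restricted isometry on the fixed tangent space $\bT$, Lemma~\ref{lemma:POmegaFnorm}), and the remaining factored pieces are handled by Lemma~\ref{lemma:POmeganonconvex}, which is uniform in $\bcL_A,\bcR_A,\bcL_B,\bcR_B$ because it reduces to a single spectral bound on $(p^{-1}\bcP_{\bOmega}-\bcI)(\bcJ)$ plus a nuclear-norm bound on a Hadamard product of factored tensors in terms of $\ell_{2,2,\infty}$ norms. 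The event $G$ is precisely the intersection of these two uniform events. Your proposal never supplies this uniformization, nor the conversion (Lemma~\ref{lemma:TC-cond}) from the assumed product incoherence $\sqrt{n_1}\|\bcL_t\ast_{\bPhi}\bcR_t^H\|_{2,\infty}\vee\sqrt{n_2}\|\bcR_t\ast_{\bPhi}\bcL_t^H\|_{2,\infty}\le\varsigma$ to the factor-level bounds $\|\bcL_{\sharp}\ast_{\bPhi}\bcG_{\star}^{\pm\frac{1}{2}}\|_{2,\infty}$ that Lemma~\ref{lemma:POmeganonconvex} consumes; without both, the claimed bound on $\cE_t$ does not follow. (A minor further point: the $\kappa^4$ in the sample complexity arises because the uniform non-convex bound carries $\nu_2\propto\kappa^2$ through these $\ell_{2,\infty}$ factors and one needs $\nu_2\lesssim 0.1$, not from converting Frobenius norms back to the scaled distance.)
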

Therefore, if we can find an initialization that is close to the ground truth and satisfies the incoherence condition, Lemma~\ref{lemma:TCcontraction} guarantees that the iterates of ScaledGD remain incoherent and converge linearly.
\begin{lemma}\label{lemma:TCinitial}
Suppose that $\bcX_{\star}$ is $\mu$-incoherent, then with high probability, the spectral initialization before projection $\widetilde{\bcF}_0 = \begin{bmatrix} \bcU_0 \ast_{\bPhi} \bcG_0^{\frac{1}{2}} \\ \bcV_0 \ast_{\bPhi} \bcG_0^{\frac{1}{2}} \end{bmatrix}$ in \eqref{eqn:TCinit} satisfies
\begin{align*}
\dist(\widetilde{\bcF}_0, \bcF_{\star}) \leq c \Big( \frac{\mu s_r \log( (n_1 \vee n_2) n_3 ) }{p n_3 \sqrt{n_1 n_2}} + \sqrt{\frac{\mu s_r \log( (n_1 \vee n_2) n_3 )}{p (n_1 \wedge n_2) n_3}} \Big) 5 \sqrt{\frac{s_r}{\ell}} \kappa \bar{\sigma}_{s_r} (\bcX_{\star}).
\end{align*}
\end{lemma}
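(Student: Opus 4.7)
The plan is to reduce the initialization analysis to two pieces: a concentration bound on the tensor perturbation $\bcE \coloneq \frac{1}{p}\bcP_{\bOmega}(\bcX_{\star}) - \bcX_{\star}$ under the tensor spectral norm, and a perturbation bound that transfers this noise control into the scaled factor metric $\dist(\cdot, \bcF_{\star})$. Concretely, I will first show that the top-$r$ t-SVD truncation $\widetilde{\bcX}_0 = \bcU_0 \ast_{\bPhi} \bcG_0 \ast_{\bPhi} \bcV_0^H$ of $\frac{1}{p}\bcP_{\bOmega}(\bcX_{\star})$ satisfies $\|\widetilde{\bcX}_0 - \bcX_{\star}\| \lesssim \|\bcE\|$ via a Weyl-type argument (using that $\widetilde{\bcX}_0$ is the best tubal rank-$r$ approximation so $\|\widetilde{\bcX}_0 - \frac{1}{p}\bcP_{\bOmega}(\bcX_{\star})\| \leq \|\bcX_{\star} - \frac{1}{p}\bcP_{\bOmega}(\bcX_{\star})\|$). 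Moving this through the scaled factor metric, the distance $\dist(\widetilde{\bcF}_0, \bcF_{\star})$ inherits an extra $\sqrt{s_r/\ell}\,\kappa$ factor: one $\sqrt{s_r/\ell}$ arises from going from operator norm on an (effectively) rank-$s_r$ block-diagonal object to a Frobenius-style factor distance, and one $\kappa$ appears from rebalancing the scaled metric (which multiplies factor errors by $\bcG_{\star}^{1/2}$) against the polar-factor construction of $\widetilde{\bcF}_0$ from $\widetilde{\bcX}_0$. This is the standard ``spectral initialization pays $\kappa$'' phenomenon.

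The core technical work is bounding $\|\bcE\|$. By \eqref{eqn:tensorproperty}, $\|\bcE\| = \|\widebar{\bE}\|$, and $\widebar{\bE}$ is a sum of independent random block-diagonal matrices: writing $\bcE = \sum_{i,j,k}(\frac{\delta_{ijk}}{p}-1)[\bcX_{\star}]_{ijk}\,\be_i \circ \be_j \circ \be_k$ as a sum of independent mean-zero tensors, each summand maps under $L$ and $\mathtt{bdiag}$ to a matrix whose spectral norm is controlled via the incoherence. I will apply the matrix Bernstein inequality to $\widebar{\bE}$. The variance proxy will be controlled by expanding $\E[\widebar{\bE}\widebar{\bE}^H]$ and using the entrywise structure of $\bcX_{\star}$ together with Definition~\ref{def:incoherence}, which yields $\|\bcX_{\star}\|_{\infty} \lesssim \frac{\mu s_r}{n_3\sqrt{n_1 n_2}\ell}\,\bar{\sigma}_1(\bcX_{\star})$ and gives the $\sqrt{\mu s_r \log((n_1\vee n_2)n_3)/(p(n_1\wedge n_2)n_3)}\,\bar{\sigma}_1(\bcX_{\star})$ subgaussian term. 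The uniform bound on each summand provides the $\frac{\mu s_r \log((n_1\vee n_2)n_3)}{p n_3 \sqrt{n_1 n_2}}\,\bar{\sigma}_1(\bcX_{\star})$ term. Combining yields exactly the parenthesized bracket in the statement, multiplied by $\bar{\sigma}_1(\bcX_{\star}) = \kappa\,\bar{\sigma}_{s_r}(\bcX_{\star})$, with the remaining $\sqrt{s_r/\ell}$ absorbed from the subsequent Frobenius-norm conversion.

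To convert $\|\widetilde{\bcX}_0 - \bcX_{\star}\|$ into $\dist(\widetilde{\bcF}_0, \bcF_{\star})$, I will invoke (or establish in the appendix) a tensor analog of the ``spectral-to-factor'' lemma already used in the matrix ScaledGD line of work \cite{TongMC.JMLR2021}: for any $\widetilde{\bcX} = \widetilde{\bcL}\ast_{\bPhi}\widetilde{\bcR}^H$ of tubal rank at most $r$, a specific choice of alignment tensor $\bcQ$ (coming from the polar decomposition of $\bcG_{\star}^{-1/2}\ast_{\bPhi}(\widetilde{\bcU}^H\ast_{\bPhi}\bcU_{\star})\ast_{\bPhi}\bcG_{\star}^{1/2}$ or its symmetric variant) yields $\dist(\widetilde{\bcF},\bcF_{\star}) \lesssim \sqrt{s_r/\ell}\,\|\widetilde{\bcX}-\bcX_{\star}\|/\sqrt{\bar{\sigma}_{s_r}(\bcX_{\star})}\cdot \sqrt{\bar{\sigma}_1(\bcX_{\star})}$, i.e. $\lesssim \sqrt{s_r/\ell}\,\sqrt{\kappa}\,\|\widetilde{\bcX}-\bcX_{\star}\|$. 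Upgrading this to the $\kappa$ (rather than $\sqrt{\kappa}$) displayed in the lemma requires rescaling through $\bar{\sigma}_1/\bar{\sigma}_{s_r}$, which is where the final factor $\kappa$ shows up cleanly.

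The principal obstacle is the Bernstein bookkeeping: unlike the matrix case, every random summand has to be lifted through $L$ and $\mathtt{bdiag}$, so the variance proxy must be computed in the block-diagonal representation while the incoherence information is native to the original domain. Carefully tracking the $\ell$ factors from \eqref{eqn:phiconstraint}, and verifying that the entrywise $\ell_{\infty}$ bound on $\bcX_{\star}$ derived from Definition~\ref{def:incoherence} survives the transform with only a multiplicative $\ell$ (rather than $\ell^2$) loss, is the delicate step. Once that bookkeeping is in place, the two terms in the bracket match term-by-term with the Bernstein tail, and the perturbation-to-factor conversion closes the proof.
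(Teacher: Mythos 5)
Your overall skeleton matches the paper's: the noise bound $\|p^{-1}\bcP_{\bOmega}(\bcX_{\star})-\bcX_{\star}\|$ via matrix Bernstein applied to $\widebar{\bE}$ (this is exactly Lemma~\ref{lemma:POmegaspectral}, whose two terms come from the range bound $p^{-1}\sqrt{\ell}\,\|\bcX_{\star}\|_{\infty}$ and the variance proxy $p^{-1}\ell\,\|\bcX_{\star}\|_{\infty,2}^2$), followed by the triangle inequality with the best tubal rank-$r$ approximation to get $\|\widetilde{\bcX}_0-\bcX_{\star}\|\leq 2\|\bcE\|$, and a rank-$2s_r$ conversion from spectral to Frobenius norm. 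Your identification of which Bernstein term produces which bracket entry, and of the $\ell$-bookkeeping through $\mathtt{bdiag}$, is also consistent with the paper.

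The genuine problem is your accounting of the factor $\kappa$ in the transfer from $\|\widetilde{\bcX}_0-\bcX_{\star}\|_F$ to $\dist(\widetilde{\bcF}_0,\bcF_{\star})$, which is internally inconsistent and would not close as written. In your second paragraph you (correctly) obtain the bracket times $\bar{\sigma}_1(\bcX_{\star})=\kappa\,\bar{\sigma}_{s_r}(\bcX_{\star})$ already at the Bernstein stage, because the incoherence bounds $\|\bcX_{\star}\|_{\infty}\leq \frac{\mu s_r}{n_3\sqrt{n_1n_2\ell}}\bar{\sigma}_1(\bcX_{\star})$ and $\|\bcX_{\star}\|_{\infty,2}\leq\sqrt{\frac{\mu s_r}{(n_1\wedge n_2)n_3\ell}}\,\bar{\sigma}_1(\bcX_{\star})$ carry the top singular value. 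But in your first and third paragraphs you claim the factor-distance conversion itself contributes another $\sqrt{\kappa}$ (from a polar-decomposition alignment) that must then be ``upgraded'' to $\kappa$ by rescaling through $\bar{\sigma}_1/\bar{\sigma}_{s_r}$. If both sources were real, the final bound would scale like $\kappa^2$ (or at best $\kappa^{3/2}$), not $\kappa$, and the lemma as stated would not follow. The resolution is that the conversion step is condition-number free: the paper's Lemma~\ref{lemma:Procrustes} gives $\dist(\bcF,\bcF_{\star})\leq(\sqrt{2}+1)^{1/2}\|\bcL\ast_{\bPhi}\bcR^H-\bcX_{\star}\|_F$ with no dependence on $\kappa$, and combined with $\|\cdot\|_F\leq\sqrt{2s_r/\ell}\,\|\cdot\|$ for the rank-$2s_r$ difference this yields the clean prefactor $5\sqrt{s_r/\ell}$. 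You should drop the polar-factor/rebalancing argument entirely and invoke the Procrustes-type bound instead; the only $\kappa$ in the statement is the one already sitting inside the incoherence bounds on $\|\bcX_{\star}\|_{\infty}$ and $\|\bcX_{\star}\|_{\infty,2}$.
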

Therefore, as long as $p \geq c \mu s_r^2 \kappa^2 \log( (n_1 \vee n_2) n_3 ) / ( (n_1 \wedge n_2) n_3 )$ for some sufficiently large constant $c$, the initial distance satisfies $\dist(\widetilde{\bcF}_0, \bcF_{\star}) \leq \frac{0.02}{\sqrt{\ell}} \bar{\sigma}_{s_r} (\bcX_{\star})$. We can then invoke Lemma~\ref{lemma:scaledproj} to have that $\bcF_0 = \cP_{\varsigma} (\widetilde{\bcF}_0)$ meets the conditions required in Lemma~\ref{lemma:TCcontraction}, which further enables us to repetitively apply Lemma~\ref{lemma:scaledproj} to finish the proof of Theorem~\ref{thm:TC}. The proofs of the three supporting lemmas can be found in Appendix~\ref{sec:Completionproof}.

\subsection{Proof Outline for Robust Tensor Completion}

We first present the following two lemmas of local linear convergence and guaranteed initialization, whose proofs are deferred to Appendix~\ref{sec:RobustCompletionproof}. Then the claims in Theorem~\ref{thm:RTC} follow immediately in the same fashion as the proof of Theorem~\ref{thm:TRPCA} by setting $c_0 \leq 0.001$.

\begin{lemma}[Local contraction]\label{lemma:RTCcontraction}
Let $\bcP_{\bOmega}(\bcY) = \bcP_{\bOmega}(\bcX_{\star} + \bcS_{\star}) \in \R^{n_1 \times n_2 \times n_3}$, where $\bcX_{\star} = \bcL_{\star} \ast_{\bPhi} \bcR_{\star}^H$ is a multi-rank $\br$ tensor with $\mu$-incoherence and $\bcP_{\bOmega}(\bcS_{\star})$ is $\alpha p$-sparse. Let $\bcQ_t$ be the optimal alignment tensor between $\begin{bmatrix} \bcL_t \\ \bcR_t \end{bmatrix}$ and $\begin{bmatrix} \bcL_{\star} \\ \bcR_{\star} \end{bmatrix}$. Under the assumption that $\alpha \leq \frac{p n_3 \sqrt{n_{(2)} n_2}}{10^4 \mu s_r^{1.5} (n_1 + n_2 n_3)}$, where $p \geq c \Big( \frac{\mu^{1.5} s_r^{1.5} (n_1 + n_2) \log( (n_1 \vee n_2) n_3 )}{n_3 \sqrt{n_1 n_2}} \vee \frac{\mu^2 s_r^2 \ell \log( (n_1 \vee n_2) n_3 )}{n_3 \sqrt{(n_1 \wedge n_2) n_3}} \Big)$ for some sufficiently large constant $c$. If the spectral initialization obeys the conditions
\begin{align*}
\dist(\bcF_0, \bcF_{\star}) & \leq \frac{\epsilon}{\sqrt{\ell}} \bar{\sigma}_{s_r} (\bcX_{\star}),  \\
\sqrt{n_1} \| (\bcL_0 \ast_{\bPhi} \bcQ_0 - \bcL_{\star}) \ast_{\bPhi} \bcG_{\star}^{\frac{1}{2}} \|_{2,\infty} & \vee \sqrt{n_2} \| (\bcR_0 \ast_{\bPhi} \bcQ_0^{-H} - \bcR_{\star}) \ast_{\bPhi} \bcG_{\star}^{\frac{1}{2}} \|_{2,\infty} \leq \sqrt{\frac{\mu s_r}{n_3 \ell}} \bar{\sigma}_{s_r} (\bcX_{\star})
\end{align*}
with $\epsilon = 0.02$, then by setting the thresholding values $\zeta_t$ in Theorem~\ref{thm:RTC} and the step size $\frac{1}{5} \leq \eta \leq \frac{1}{2}$, the iterates of Algorithm~\ref{algo:ScaledGDRTC} satisfy
\begin{align*}
\dist(\bcF_t, \bcF_{\star}) & \leq \frac{\epsilon}{\sqrt{\ell}} \rho^t \bar{\sigma}_{s_r} (\bcX_{\star}),  \\
\sqrt{n_1} \| (\bcL_t \ast_{\bPhi} \bcQ_t - \bcL_{\star}) \ast_{\bPhi} \bcG_{\star}^{\frac{1}{2}} \|_{2,\infty} & \vee \sqrt{n_2} \| (\bcR_t \ast_{\bPhi} \bcQ_t^{-H} - \bcR_{\star}) \ast_{\bPhi} \bcG_{\star}^{\frac{1}{2}} \|_{2,\infty} \leq \sqrt{\frac{\mu s_r}{n_3 \ell}} \rho^t \bar{\sigma}_{s_r} (\bcX_{\star}),
\end{align*}
where the convergence rate $\rho = 1 - 0.3 \eta$.
\end{lemma}
\begin{lemma}[Guaranteed initialization]\label{lemma:RTCinitial}
Let $\bcP_{\bOmega}(\bcY) = \bcP_{\bOmega}(\bcX_{\star} + \bcS_{\star}) \in \R^{n_1 \times n_2 \times n_3}$, where $\bcX_{\star} = \bcL_{\star} \ast_{\bPhi} \bcR_{\star}^H$ is a multi-rank $\br$ tensor with $\mu$-incoherence and $\bcP_{\bOmega}(\bcS_{\star})$ is $\alpha p$-sparse. Under the assumption that $\alpha \leq \frac{c_0 p}{\mu s_r^{1.5} \kappa \frac{n_1 + n_2 n_3}{n_3 \sqrt{n_1 n_2}}} \wedge \frac{c_0^2 p n_3}{\mu^2 s_r^2 \kappa^2}$ for some small positive constant $c_0 \leq 0.05$, where $p \geq c \mu s_r \kappa^2 \log( (n_1 \vee n_2) n_3 ) / \sqrt{(n_1 \wedge n_2) n_3}$ for some sufficiently large constant $c$. Given the choice of the thresholding value $\| \bcP_{\bOmega}(\bcX_{\star}) \|_{\infty} \leq \zeta_0 \leq 2 \| \bcP_{\bOmega}(\bcX_{\star}) \|_{\infty}$, the spectral initialization satisfies
\begin{align*}
\dist(\bcF_0, \bcF_{\star}) & \leq \frac{5 c_0 + 0.01}{\sqrt{\ell}} \bar{\sigma}_{s_r} (\bcX_{\star}) \quad \mathrm{and}  \\
\sqrt{n_1} \| (\bcL_0 \ast_{\bPhi} \bcQ_0 - \bcL_{\star}) \ast_{\bPhi} \bcG_{\star}^{\frac{1}{2}} \|_{2,\infty} & \vee \sqrt{n_2} \| (\bcR_0 \ast_{\bPhi} \bcQ_0^{-H} - \bcR_{\star}) \ast_{\bPhi} \bcG_{\star}^{\frac{1}{2}} \|_{2,\infty} \leq \sqrt{\frac{\mu s_r}{n_3 \ell}} \bar{\sigma}_{s_r} (\bcX_{\star}),
\end{align*}
where $\bcQ_0$ be the optimal alignment tensor between $\begin{bmatrix} \bcL_0 \\ \bcR_0 \end{bmatrix}$ and $\begin{bmatrix} \bcL_{\star} \\ \bcR_{\star} \end{bmatrix}$.
\end{lemma}

\subsection{Proof Outline for Tensor Regression}

Now we turn to the proof outline for tensor regression (cf. Theorem~\ref{thm:TR}). Leveraging the TRIP of $\cA(\cdot)$, we can establish the following local convergence guarantee of ScaledGD, as long as the iterates are close to the ground truth.
\begin{lemma}\label{lemma:TRcontraction}
Suppose that $\cA(\cdot)$ obeys the $2r$-TRIP with $\delta_{2r} \leq 0.02$. If the $t$-th iterate satisfies $\dist(\bcF_t, \bcF_{\star}) \leq 0.1 \bar{\sigma}_{s_r} (\bX_{\star})$, then $\| \bcL_t \ast_{\bPhi} \bcR_t^H - \bcX_{\star} \|_F \leq 1.5 \dist(\bcF_t, \bcF_{\star})$. In addition, if the step size obeys $0 < \eta \leq \frac{2}{3}$, then the $(t+1)$-th iterate $\bcF_{t+1}$ of the ScaledGD method in \eqref{eqn:TRupdate} of Algorithm~\ref{algo:ScaledGDTR} satisfies
\begin{align*}
\dist(\bcF_{t+1}, \bcF_{\star}) \leq (1 - 0.6 \eta) \dist(\bcF_t, \bcF_{\star}).
\end{align*}
\end{lemma}

To establish the induction hypothesis, we still need to check the quality of the spectral initialization, for which we have the following lemma.
\begin{lemma}\label{lemma:TRinitial}
Suppose that $\cA(\cdot)$ obeys the $2r$-TRIP with a constant $\delta_{2r}$. Then the spectral initialization in \eqref{eqn:TRinit} for low-rank tensor regression satisfies
\begin{align*}
\dist(\bcF_{0},\bcF_{\star}) \leq 5 \delta_{2r} \sqrt{\frac{s_r}{\ell}} \kappa \bar{\sigma}_{s_r} (\bcX_{\star}).
\end{align*}
\end{lemma}
As a result, setting $\delta_{2r} \leq 0.02/(\sqrt{\frac{s_r}{\ell}} \kappa)$ as specified in Theorem~\ref{thm:TR}, the initial distance satisfies $\dist(\bcF_0, \bcF_{\star}) \leq 0.1 \bar{\sigma}_{s_r} (\bX_{\star})$, allowing us to invoke Lemma~\ref{lemma:TRcontraction} recursively. The proof of Theorem~\ref{thm:TR} is then complete. The proofs of Lemma~\ref{lemma:TRcontraction} and Lemma~\ref{lemma:TRinitial} can be found in Appendix~\ref{sec:Regressionproof}.

\section{Numerical Experiments}
\label{sec:experiment}

In this section, we present several experimental results demonstrating the effectiveness of our proposed methods. We first provide numerical experiments to corroborate our theoretical findings. Then we evaluate the performance of our algorithms by focusing on video denoising and background initialization tasks. All experiments are performed in Matlab with an AMD Ryzen 9 5950X 3.40GHz CPU and 64GB RAM.

\subsection{Synthetic Data Experiments}

We compare the iteration complexity of ScaledGD with vanilla gradient descent (GD). For fair comparison, both algorithms start from the same spectral initialization, and the update rule of GD is given by
\begin{align*}
\bcL_{t+1} & = \bcL_t - \eta_{\mathrm{GD}} \nabla_{\bcL} f(\bcL_t, \bcR_t),  \\
\bcR_{t+1} & = \bcR_t - \eta_{\mathrm{GD}} \nabla_{\bcR} f(\bcL_t, \bcR_t),
\end{align*}
where $\eta_{\mathrm{GD}} = \eta / \bar{\sigma}_1 (\bcX_{\star})$ stands for the step size for vanilla GD.

\begin{figure}[t]
\centering
\subfloat[\centering Tensor RPCA, DFT \\ $n = 100$, $\alpha = 0.1$]{\includegraphics[height=1.4in]{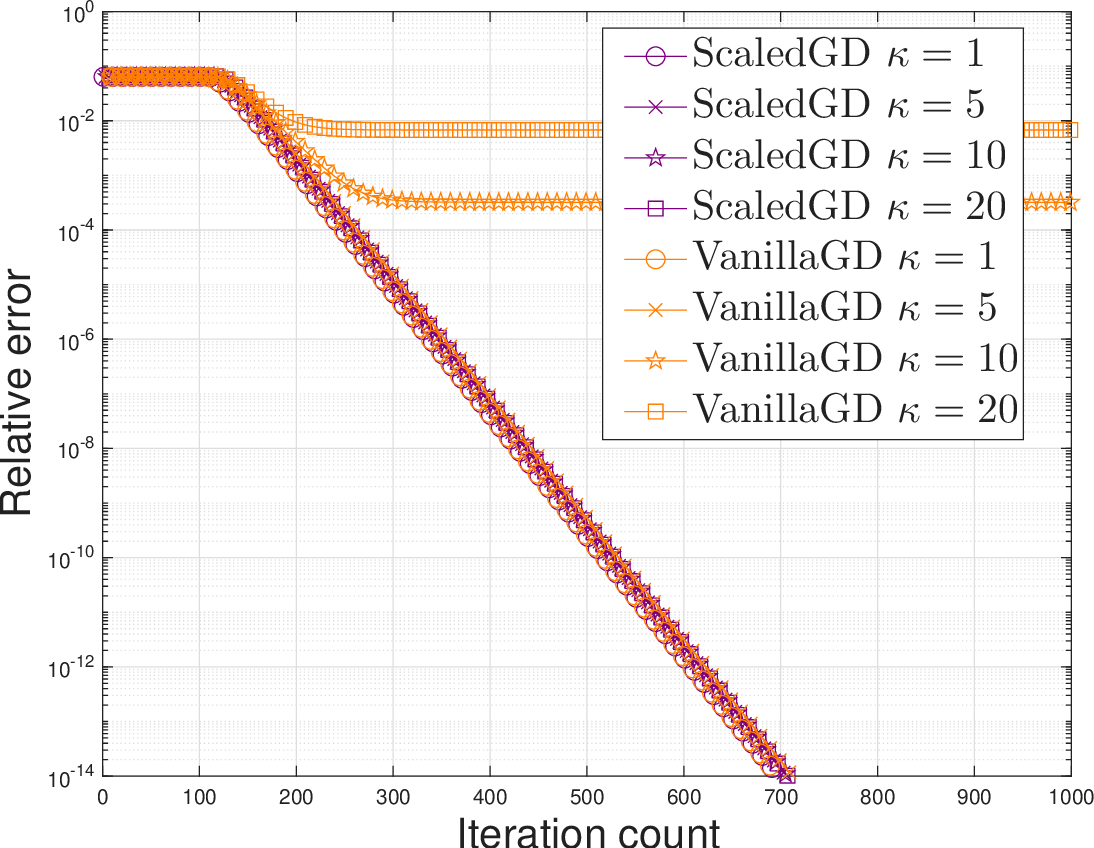} \label{fig:TRPCAFFTnoiseless}}
\quad
\subfloat[\centering Tensor RPCA, DCT \\ $n = 100$, $\alpha = 0.1$]{\includegraphics[height=1.4in]{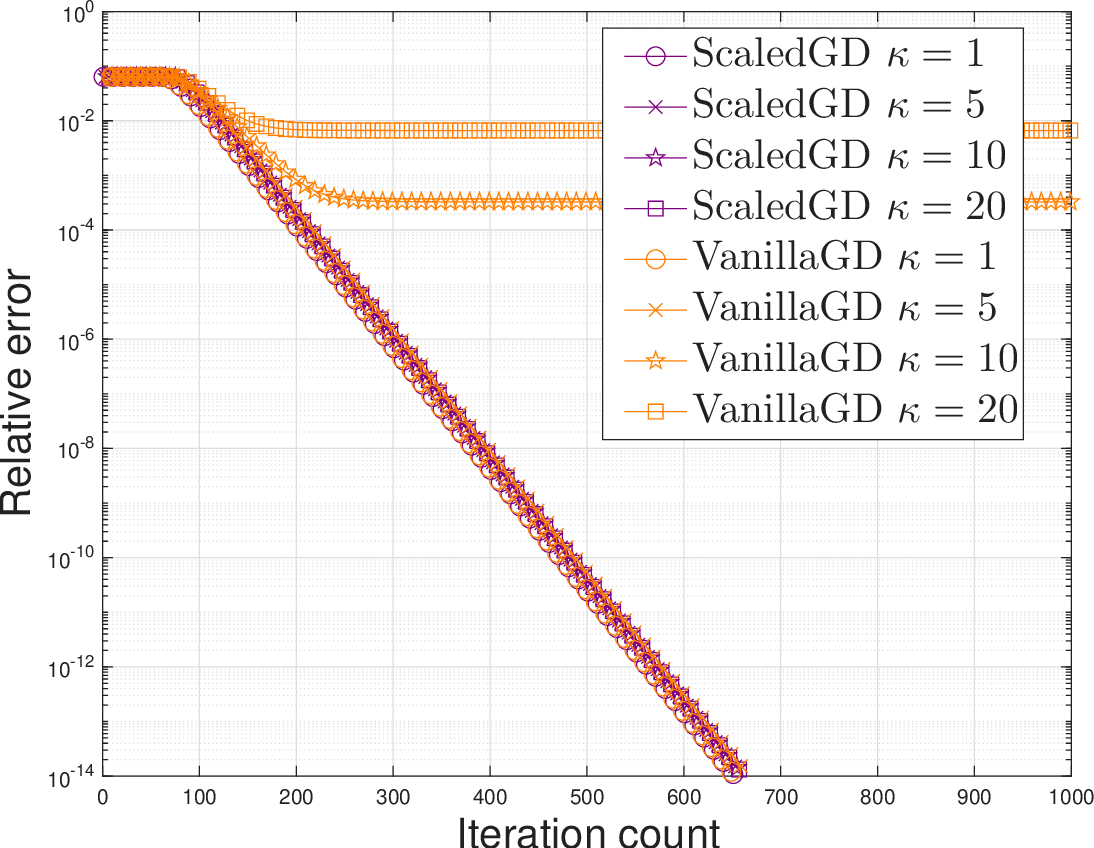} \label{fig:TRPCADCTnoiseless}}
\quad
\subfloat[\centering Tensor Completion, DFT \\ $n = 100$, $p = 0.4$]{\includegraphics[height=1.4in]{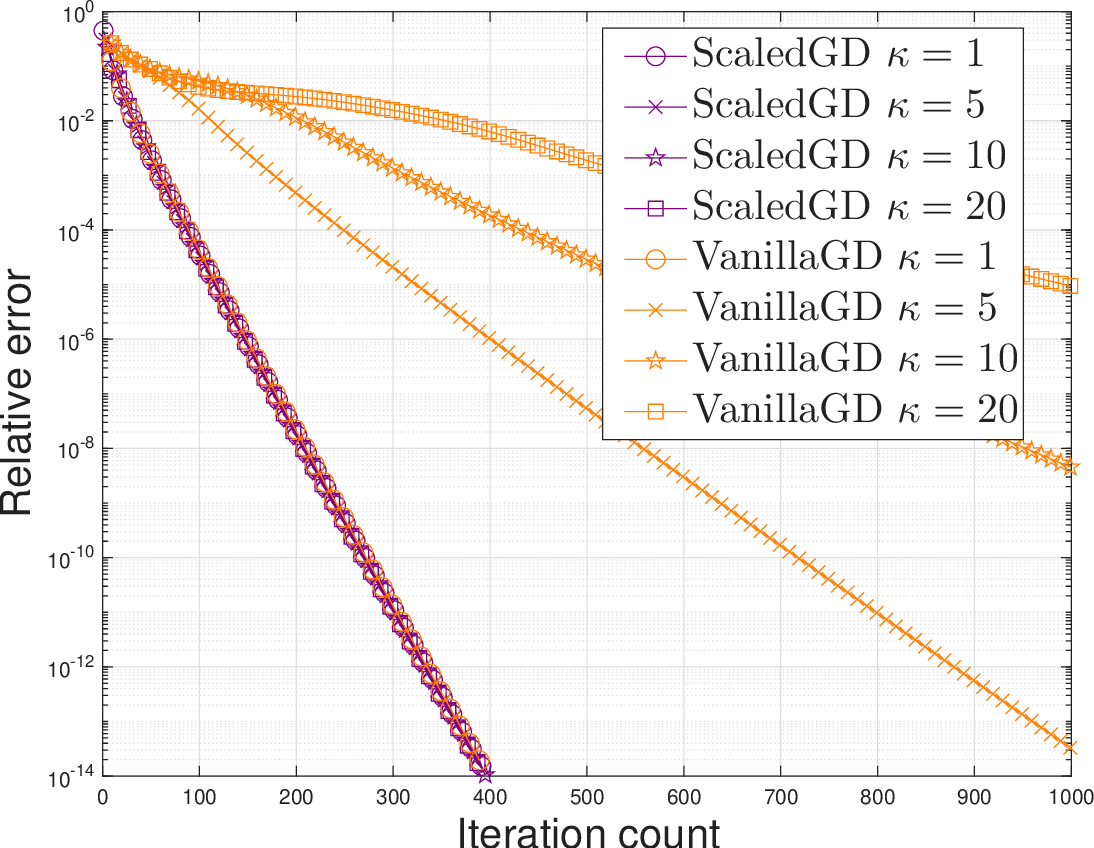} \label{fig:TCFFTnoiseless}}
\quad
\subfloat[\centering Tensor Completion, DCT \\ $n = 100$, $p = 0.4$]{\includegraphics[height=1.4in]{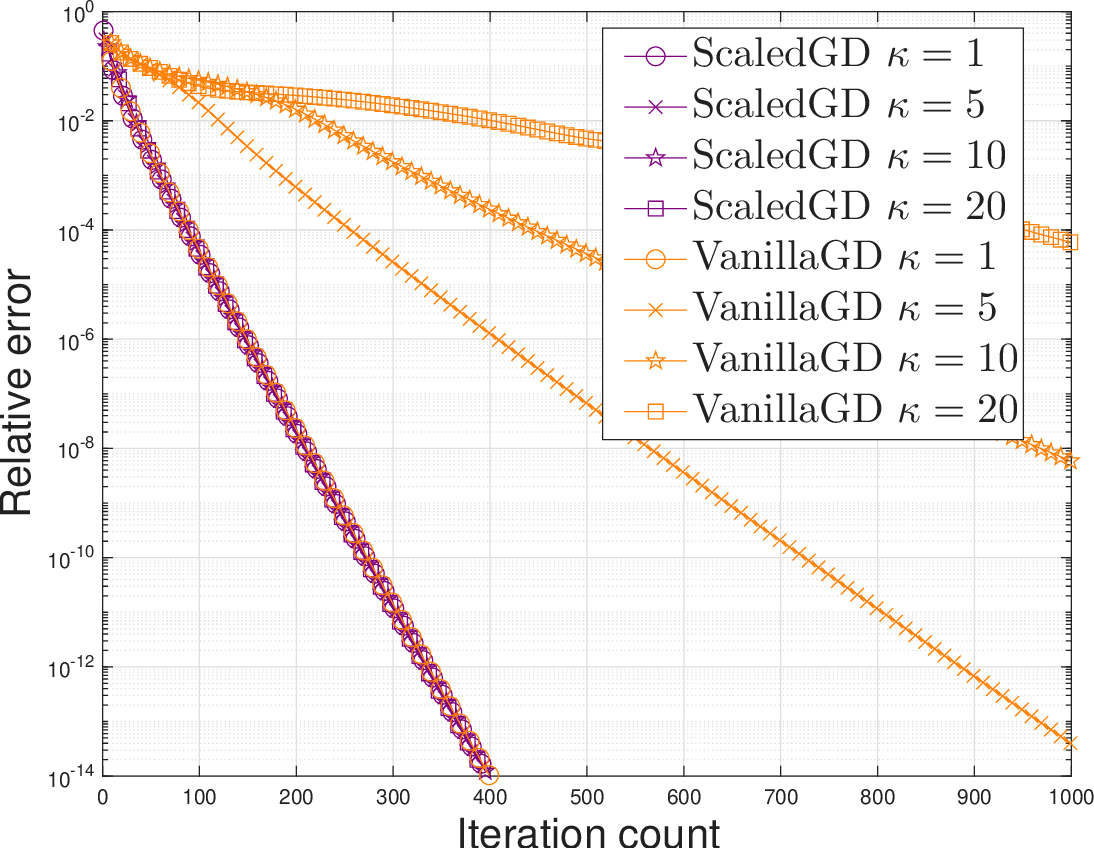} \label{fig:TCDCTnoiseless}}
\quad
\subfloat[\centering RTC, DFT \\ $n = 100$, $\alpha = 0.1$, $p = 0.6$]{\includegraphics[height=1.4in]{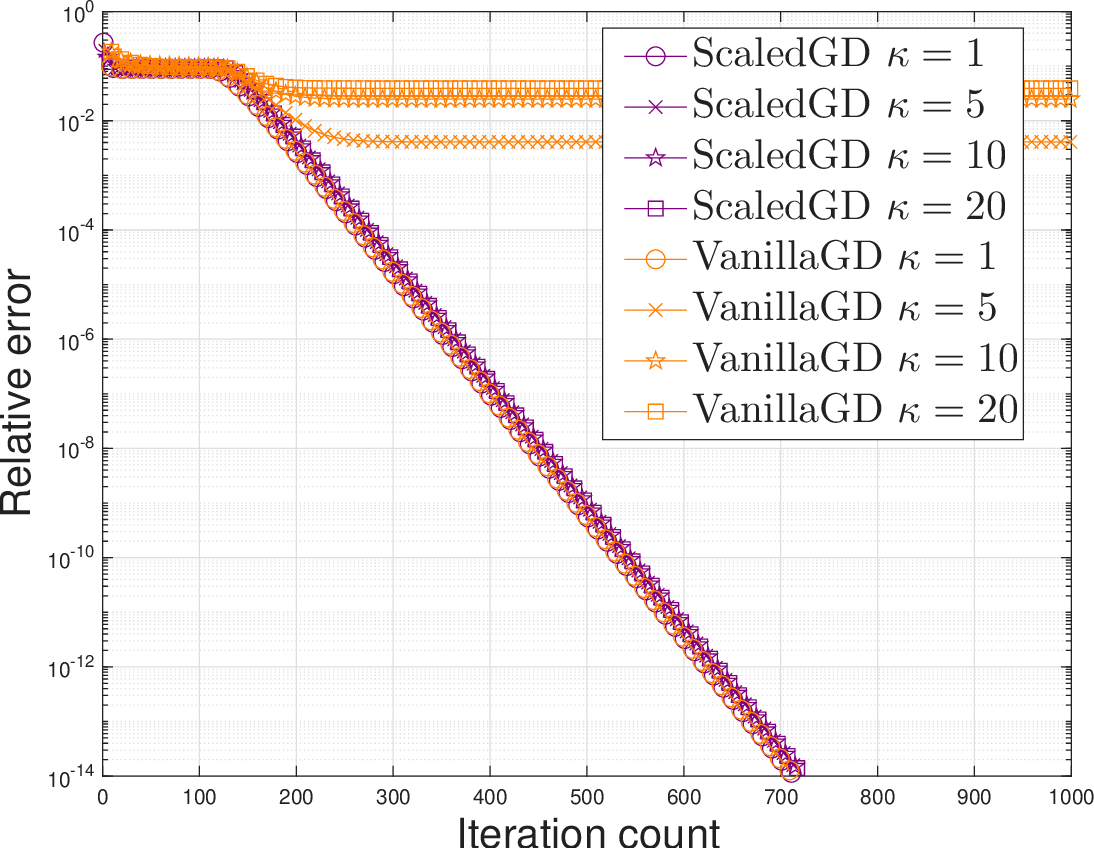} \label{fig:RTCFFTnoiseless}}
\quad
\subfloat[\centering RTC, DCT \\ $n = 100$, $\alpha = 0.1$, $p = 0.6$]{\includegraphics[height=1.4in]{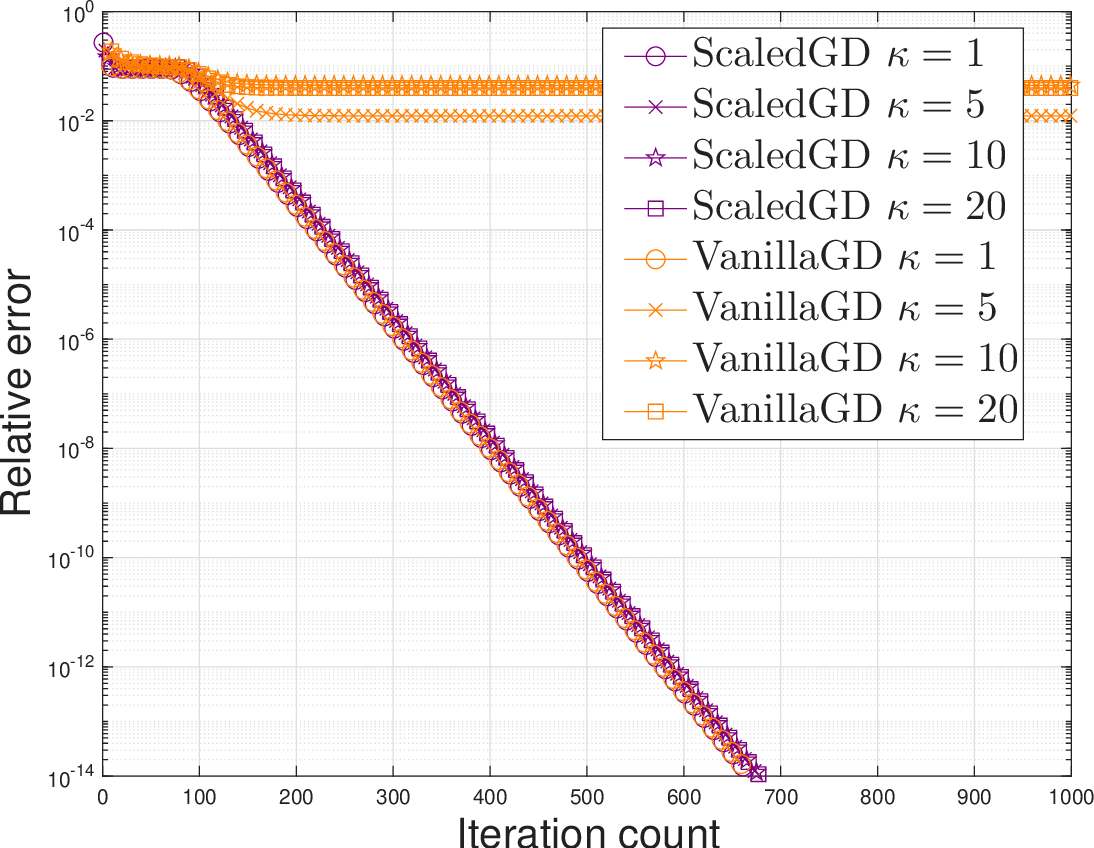} \label{fig:RTCDCTnoiseless}}
\quad
\subfloat[\centering Tensor Regression, DFT \\ $n = 50$, $m = 25000$]{\includegraphics[height=1.4in]{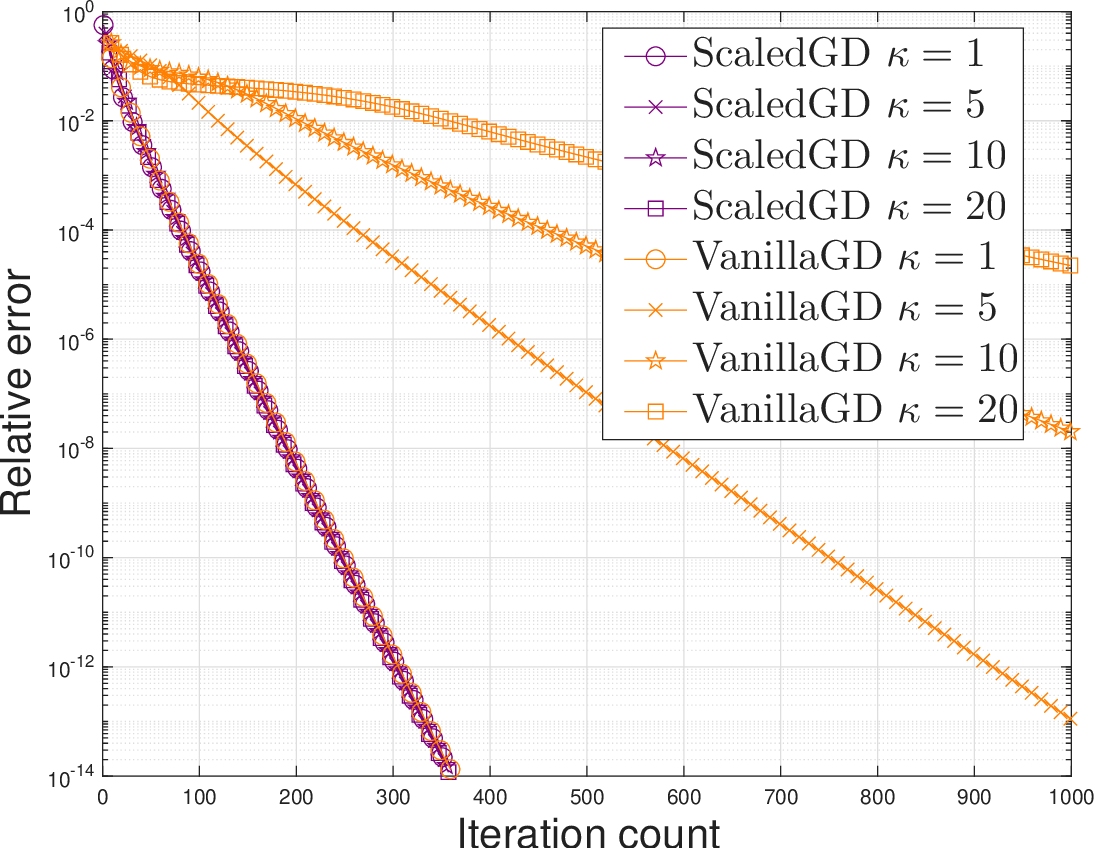} \label{fig:TRFFTnoiseless}}
\quad
\subfloat[\centering Tensor Regression, DCT \\ $n = 50$, $m = 25000$]{\includegraphics[height=1.4in]{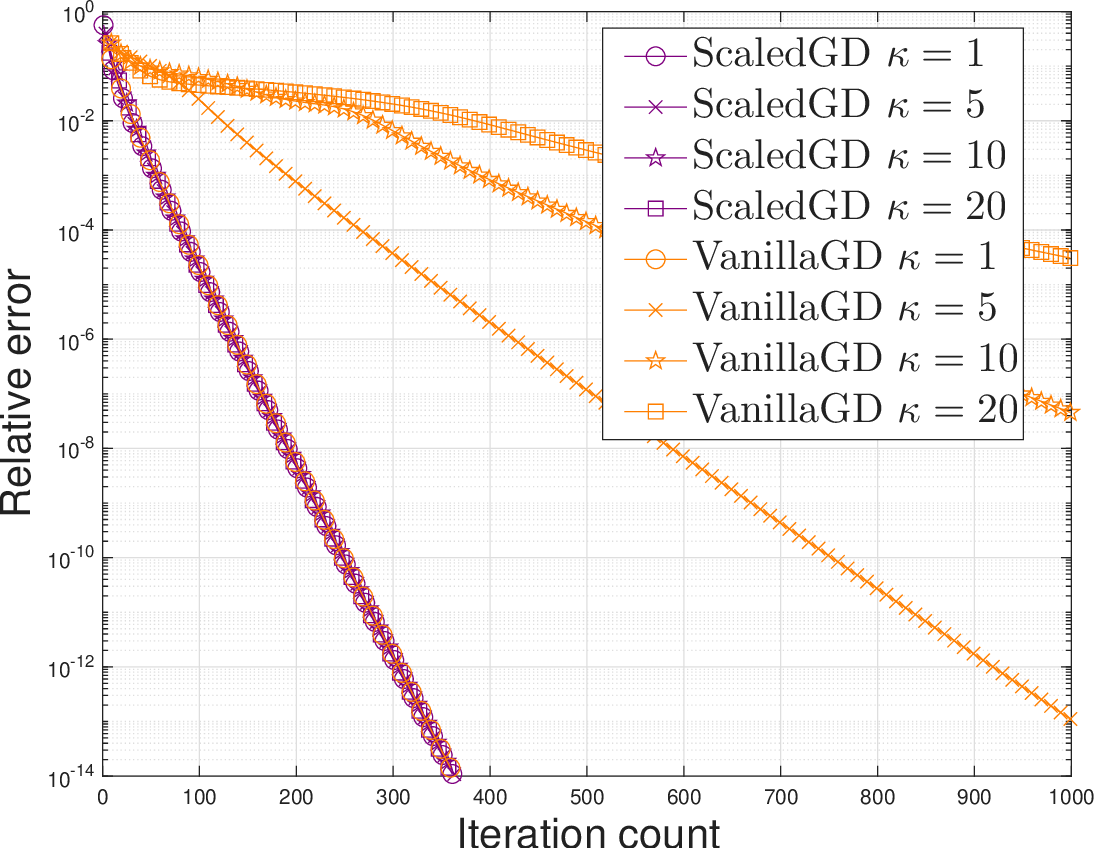} \label{fig:TRDCTnoiseless}}
\caption{The relative errors of ScaledGD and vanilla GD with respect to the iteration count under different condition numbers $\kappa = 1, 5, 10, 20$ for (a/b) tensor RPCA, (c/d) tensor completion, (e/f) robust tensor completion, and (g/h) tensor regression.}
\label{fig:noiselessiteration}
\end{figure}

\begin{figure}[t]
\centering
\subfloat[\centering Tensor RPCA, DFT \\ $n = 100$, $\alpha = 0.1$]{\includegraphics[height=1.4in]{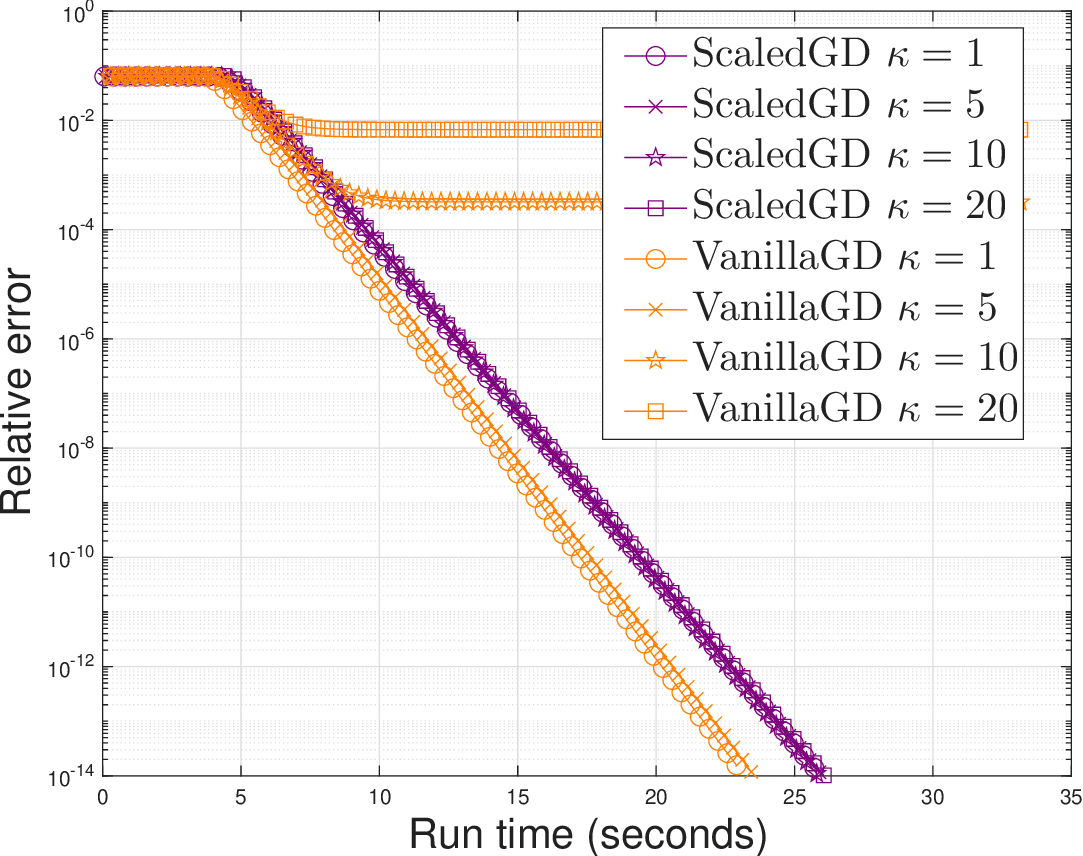} \label{fig:TRPCAFFTtime}}
\quad
\subfloat[\centering Tensor RPCA, DCT \\ $n = 100$, $\alpha = 0.1$]{\includegraphics[height=1.4in]{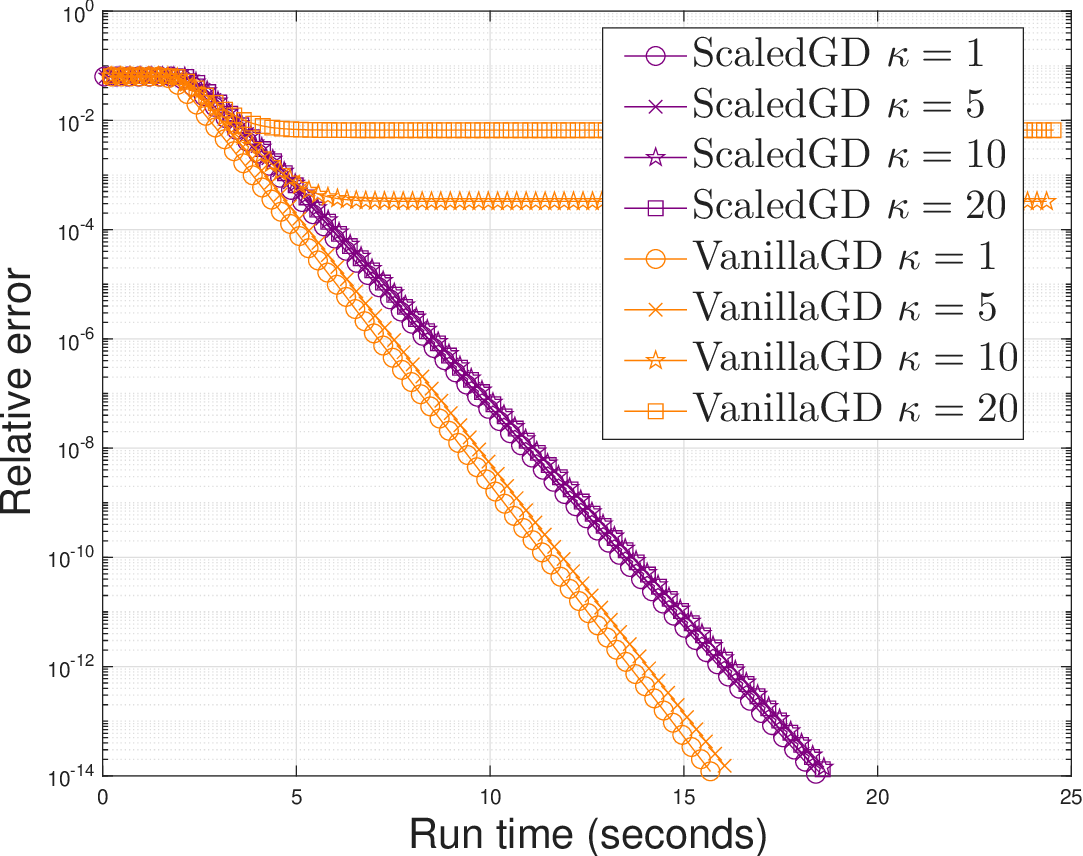} \label{fig:TRPCADCTtime}}
\quad
\subfloat[\centering Tensor Completion, DFT \\ $n = 100$, $p = 0.4$]{\includegraphics[height=1.4in]{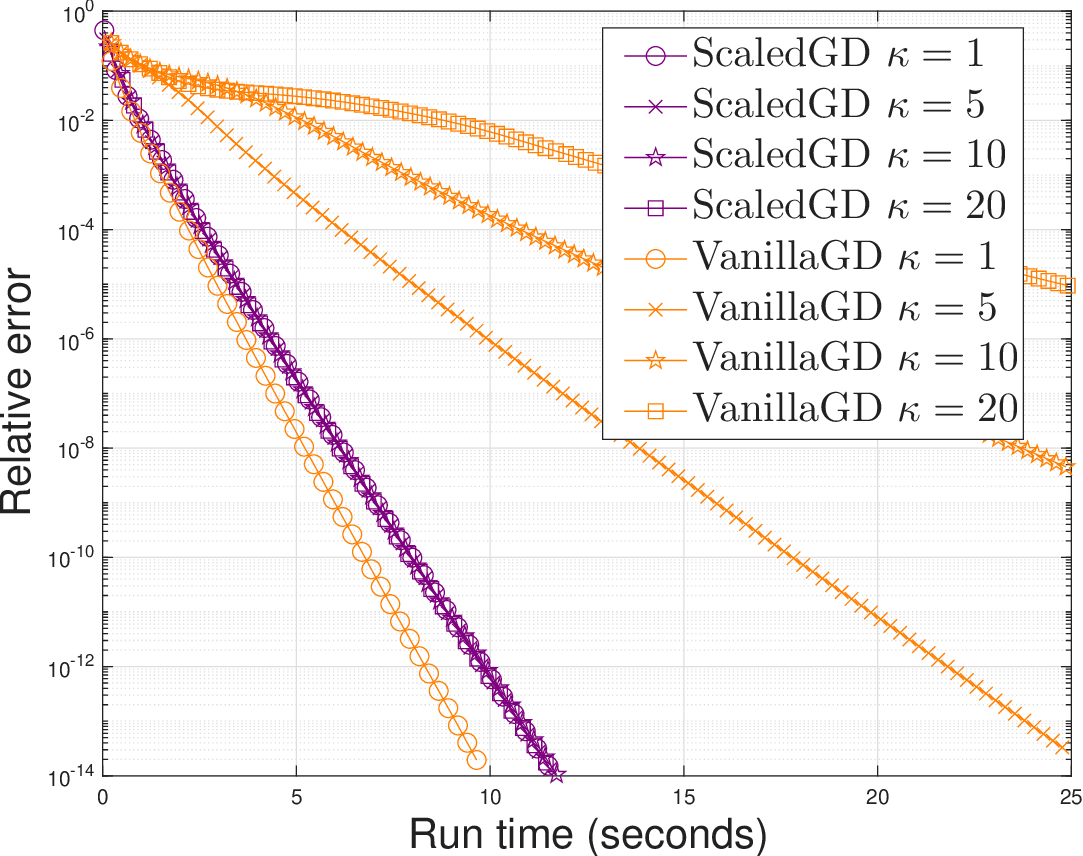} \label{fig:TCFFTtime}}
\quad
\subfloat[\centering Tensor Completion, DCT \\ $n = 100$, $p = 0.4$]{\includegraphics[height=1.4in]{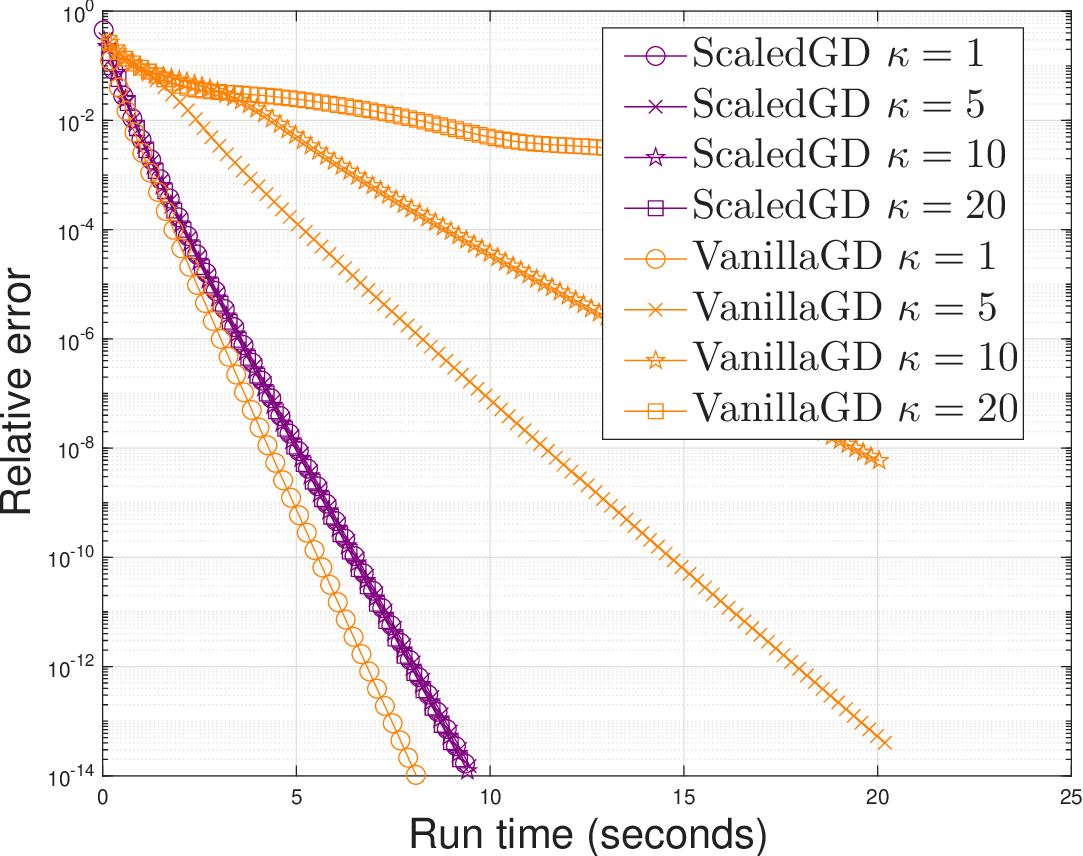} \label{fig:TCDCTtime}}
\quad
\subfloat[\centering RTC, DFT \\ $n = 100$, $\alpha = 0.1$, $p = 0.6$]{\includegraphics[height=1.4in]{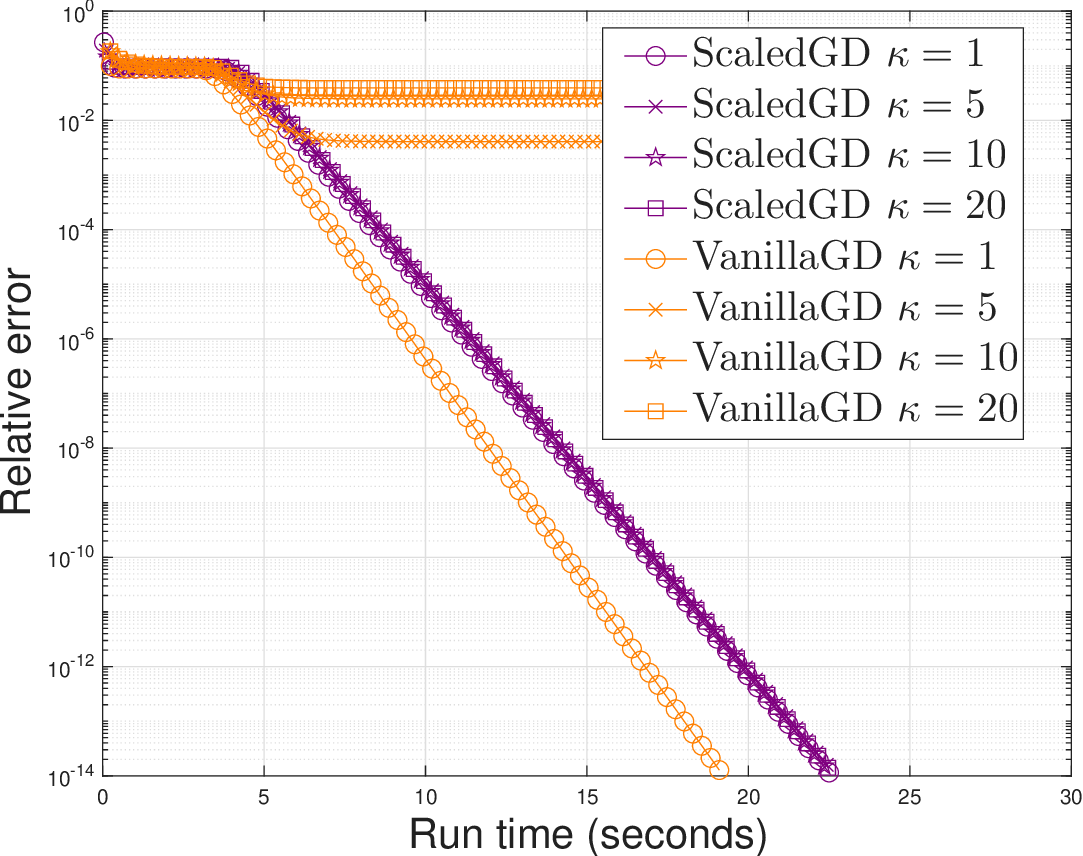} \label{fig:RTCFFTtime}}
\quad
\subfloat[\centering RTC, DCT \\ $n = 100$, $\alpha = 0.1$, $p = 0.6$]{\includegraphics[height=1.4in]{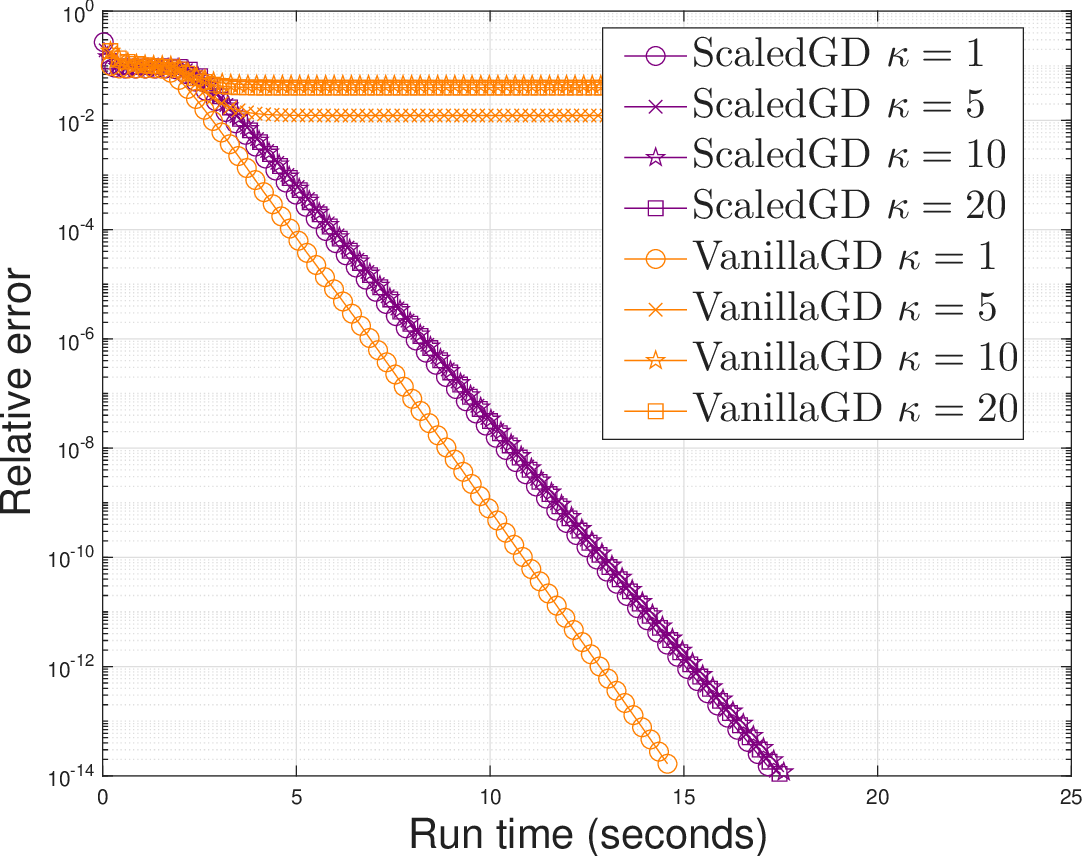} \label{fig:RTCDCTtime}}
\quad
\subfloat[\centering Tensor Regression, DFT \\ $n = 50$, $m = 25000$]{\includegraphics[height=1.4in]{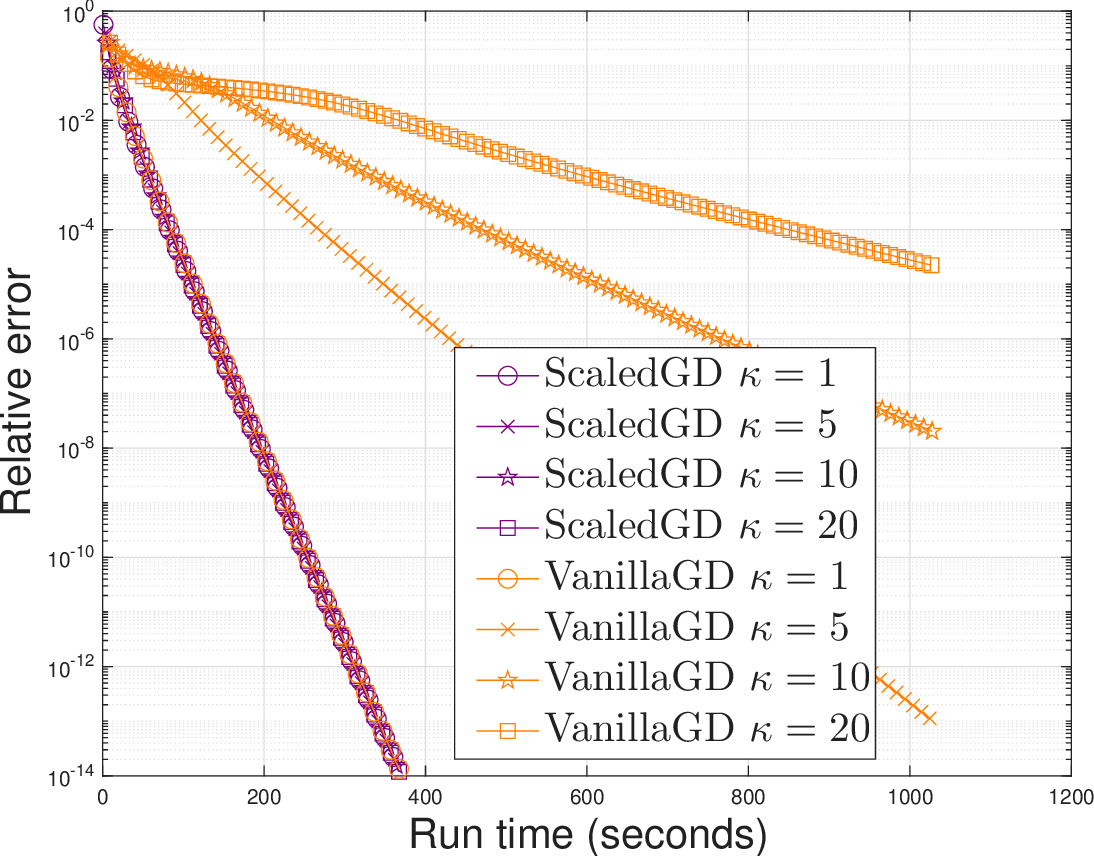} \label{fig:TRFFTtime}}
\quad
\subfloat[\centering Tensor Regression, DCT \\ $n = 50$, $m = 25000$]{\includegraphics[height=1.4in]{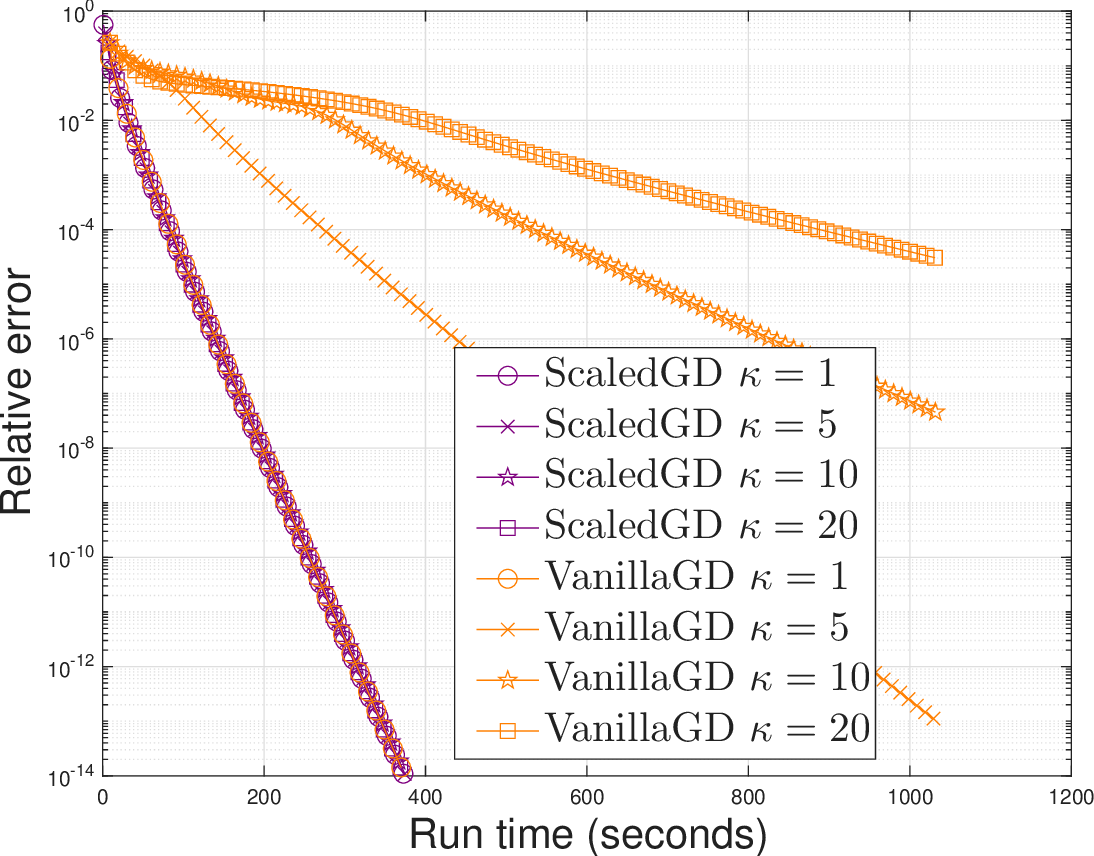} \label{fig:TRDCTtime}}
\caption{The relative errors of ScaledGD and vanilla GD with respect to run time (in seconds) under different condition numbers $\kappa = 1, 5, 10, 20$ for (a/b) tensor RPCA, (c/d) tensor completion, (e/f) robust tensor completion, and (g/h) tensor regression.}
\label{fig:noiselessruntime}
\end{figure}

In this experiment, we adopt two invertible linear transforms: (a) Discrete Fourier Transform (DFT) and (b) Discrete Cosine Transform (DCT). For simplicity, we set $n_1 = n_2 = n$. We generate the ground truth tensor $\bcX_{\star} \in \R^{n \times n \times n_3}$ as follows. We first generate an $n \times r \times n_3$ tensor with i.i.d. random signs, and take its $r$ left singular tensors as $\bcU_{\star}$, and similarly for $\bcV_{\star}$. The diagonal entries in each frontal slice of the f-diagonal tensor $\xoverline{\bcG}_{\star}$ are set to be linearly distributed from 1 to $1/\kappa$. Then the underlying low-rank tensor is generated by $\bcX_{\star} = \bcU_{\star} \ast_{\bPhi} \bcG_{\star} \ast_{\bPhi} \bcV_{\star}^H$, which has the specified condition number $\kappa$ and tubal rank $r$. We consider the following four low-rank tensor estimation tasks:
\begin{itemize}
  \item \emph{Tensor RPCA.} We generate the sparse corruption tensor by uniformly and independently sampling $\alpha$-fraction of the entries as the non-zero locations of $\bcS_{\star}$ with $\alpha = 0.1$. The magnitudes of the non-zero entries of $\bcS_{\star}$ are sampled i.i.d. from the uniform distribution over the interval of $[-\mathbb{E}[|\bcX_{\star}|_{i,j,k}], \mathbb{E}[|\bcX_{\star}|_{i,j,k}]]$. The observation tensor is $\bcY = \bcX_{\star} + \bcS_{\star} + \bcW$, where $\bcW_{i,j,k} \sim \cN(0, \sigma_w^2)$ composed of i.i.d. Gaussian entries.
  \item \emph{Tensor completion.} We assume random Bernoulli observations, where each entry of $\bcX_{\star}$ is observed with probability $p = 0.4$ independently. The observation is $\bcY = \bcP_{\bOmega}(\bcX_{\star} + \bcW)$, where $\bcW_{i,j,k} \sim \cN(0, \sigma_w^2)$ is composed of i.i.d. Gaussian entries. Moreover, we perform the scaled gradient updates without projections.
  \item \emph{Robust tensor completion.} The problem formulation is stated in Section~\ref{ssec:RTC}. We first generate the corruption using a sparse tensor $\bcS_{\star} \in \mathscr{S}_{\alpha}$ with $\alpha = 0.1$. The observation is generated by $\bcY = \bcP_{\bOmega}(\bcX_{\star} + \bcS_{\star})$, where $\bOmega$ is generated according to the Bernoulli model with probability $p = 0.6$.
  \item \emph{Tensor regression.} The problem formulation is detailed in Section~\ref{ssec:TR}. Here, we collect $m = 5 n n_3 r$ measurements in the form of $y_i = \langle \bcA_i, \bcX_{\star} \rangle$, in which the measurement tensors $\bcA_i$ are generated with i.i.d. Gaussian entries with zero mean and variance $1/m$.
\end{itemize}
We simply set $\eta = 0.4$ for the robust tensor completion problem and fix $\eta = 0.5$ for both ScaledGD and vanilla GD for the other three problems (see Figure~\ref{fig:hyperparam} for justifications). The hyperparameters for tensor RPCA and robust tensor completion are set as follows: $\zeta_0 = 0.5$, $\zeta_1 = 0.5$, and $\rho = 0.95$. For simplicity, we set $n_3 = n = 100$ and $r = 10$ for the first three problems and set $n = 50$, $n_3 = 20$ and $r = 5$ for the tensor regression problem.

\begin{figure}[t]
\centering
\subfloat[Tensor RPCA, DFT]{\includegraphics[height=1.7in]{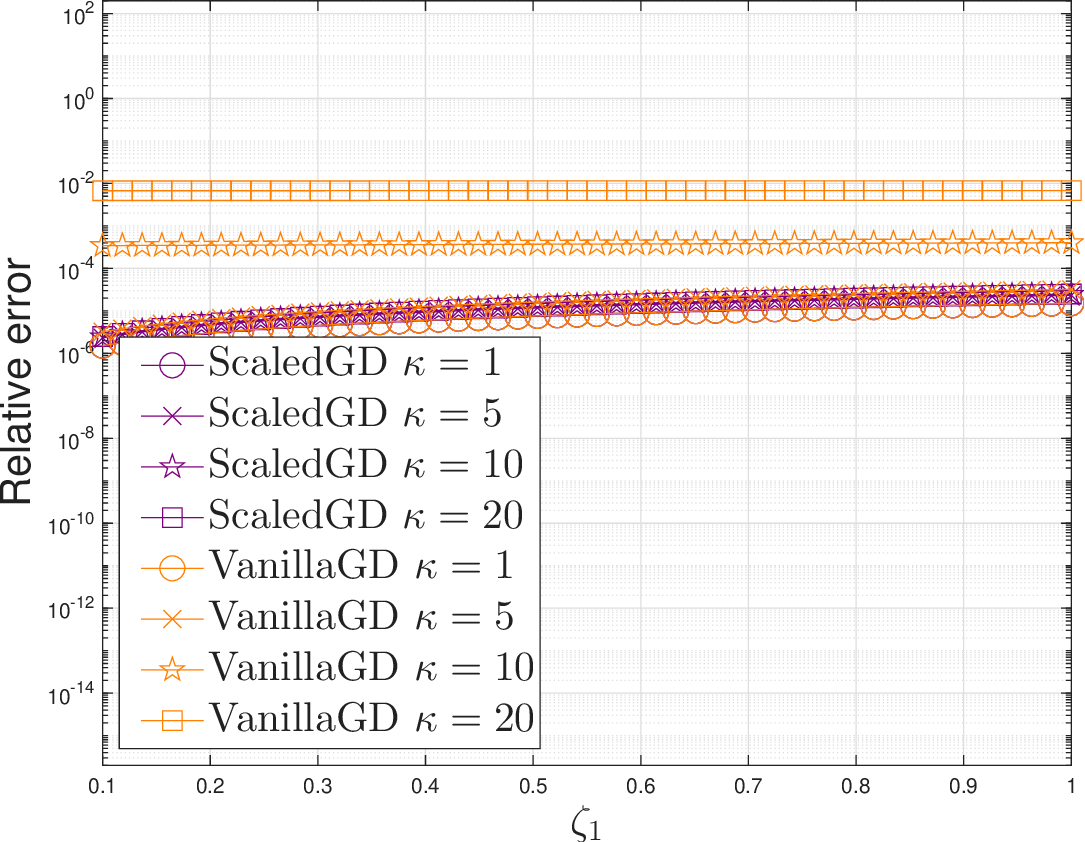} \label{fig:TRPCAFFTzeta1}}
\quad
\subfloat[Tensor RPCA, DCT]{\includegraphics[height=1.7in]{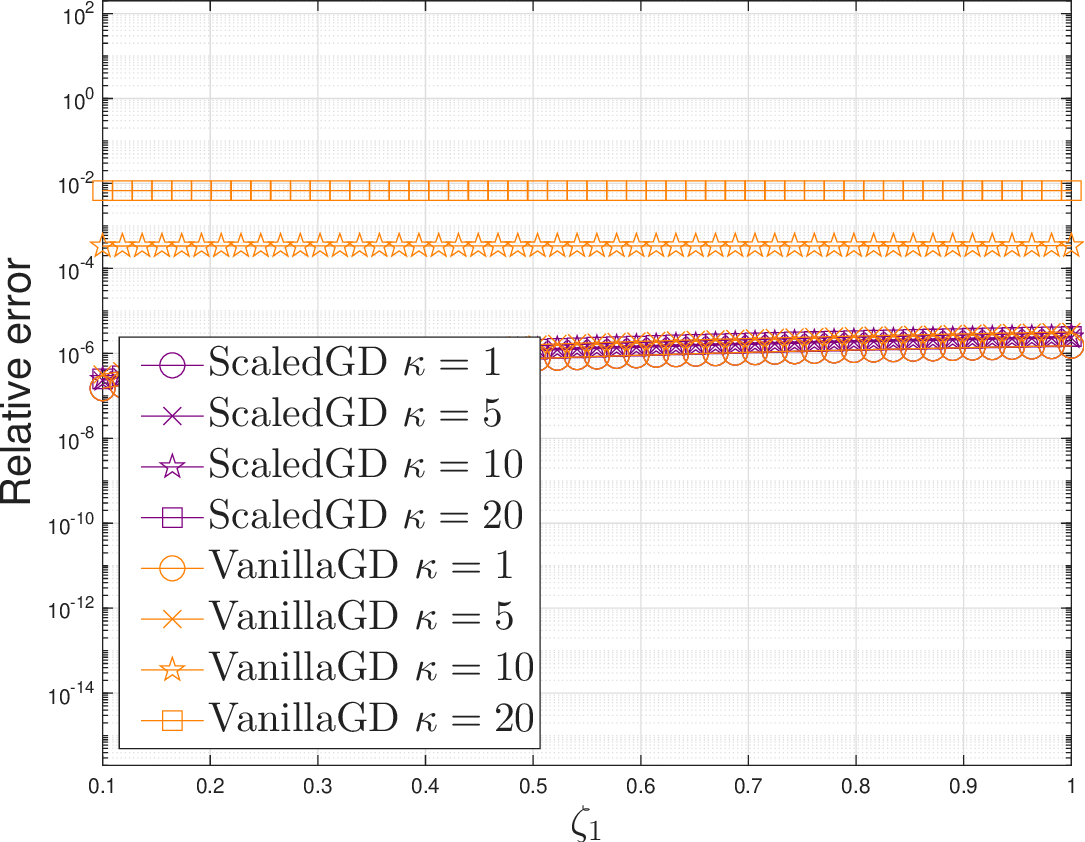} \label{fig:TRPCADCTzeta1}}
\quad
\subfloat[Tensor Completion, DFT]{\includegraphics[height=1.7in]{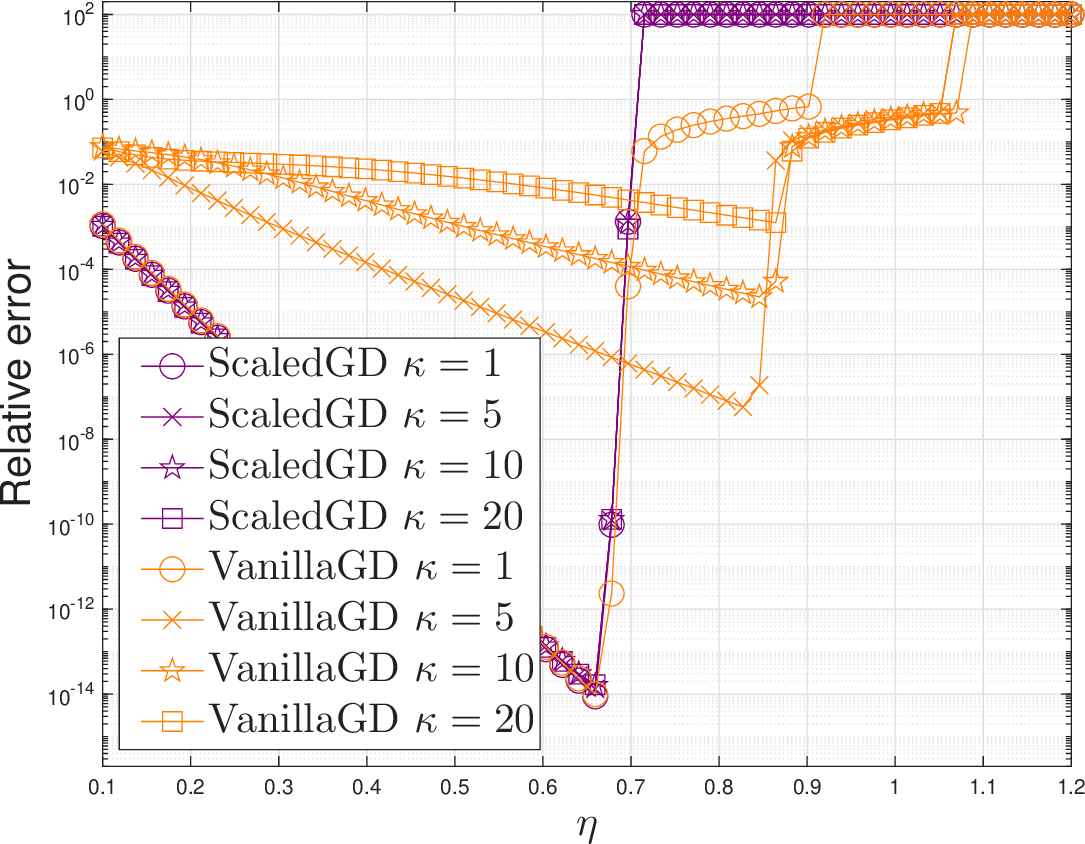} \label{fig:TCFFTstep}}
\quad
\subfloat[Tensor Completion, DCT]{\includegraphics[height=1.7in]{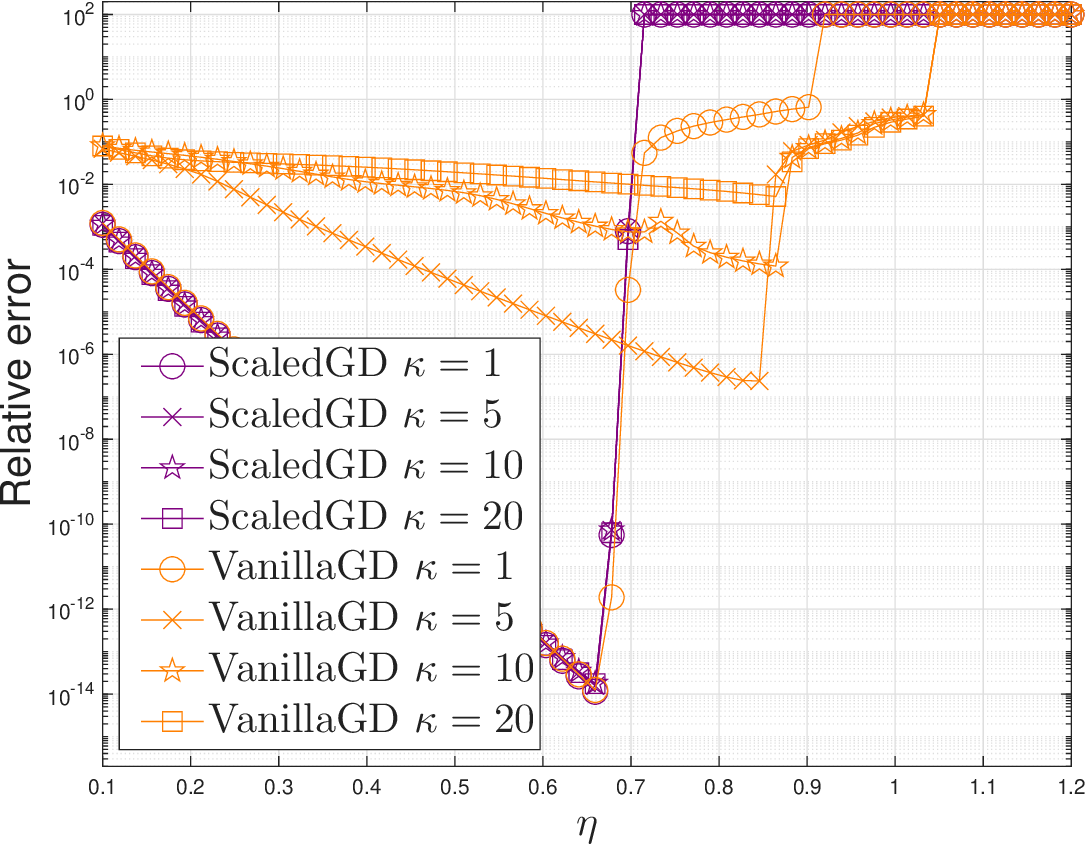} \label{fig:TCDCTstep}}
\caption{(a) and (b) show the relative errors of ScaledGD and vanilla GD after 300 iterations with respect to different $\zeta_1$'s from 0.1 to 1 under different condition numbers $\kappa = 1, 5, 10, 20$ for tensor RPCA with $n = 100$, $r = 10$, and $\alpha = 0.1$. (c) and (d) show the relative errors of ScaledGD and vanilla GD after 300 iterations with respect to different step sizes $\eta$ from 0.1 to 1.2 under different condition numbers $\kappa = 1, 5, 10, 20$ for tensor completion with $n = 100$, $r = 10$, and $p = 0.4$.}
\label{fig:hyperparam}
\end{figure}

We first illustrate the convergence performance under noise-free observations, i.e., $\bcW = \bzero$. Figure~\ref{fig:noiselessiteration} shows the relative reconstruction error $\| \bcX_t - \bcX_{\star} \|_F / \| \bcX_{\star} \|_F$ with respect to the iteration count $t$ for the four problems under different condition numbers $\kappa = 1, 5, 10, 20$ using the two transforms. For the tensor RPCA and robust tensor completion problems, ScaledGD converges at the same rate as vanilla GD under good conditioning (e.g., $\kappa = 1, 5$ for tensor RPCA and $\kappa = 1$ for robust tensor completion); under ill-conditioning, i.e., when $\kappa$ is large, ScaledGD converges linearly with a rate that is independent of $\kappa$, while vanilla GD does not converge because the relative error stays above $10^{-4}$. For the tensor completion and tensor regression problems, we can see that ScaledGD converges rapidly at a rate independent of the condition number, and matches the fastest rate of vanilla GD with perfect conditioning $\kappa = 1$. We plot the relative reconstruction errors of ScaledGD and vanilla GD for the four problems with respect to the algorithm running time (in seconds) under different condition numbers $\kappa = 1, 5, 10, 20$ in Figure~\ref{fig:noiselessruntime}. We again observe the advantage of ScaledGD over vanilla GD for the tensor RPCA and robust tensor completion problems under ill-conditioning because vanilla GD does not converge in this scenario. For the tensor completion problem, although vanilla GD runs slightly faster than ScaledGD when $\kappa = 1$, the convergence rate of vanilla GD deteriorates quickly with the increase of $\kappa$. As such, under ill-conditioning, the computational burdens can be substantially increased for vanilla GD compared to ScaledGD. Finally, Figure~\ref{fig:TRFFTtime} and Figure~\ref{fig:TRDCTtime} suggest that ScaledGD can be as fast as vanilla GD even under perfect conditioning for tensor regression, while vanilla GD turns out to be much slower than ScaledGD when $\kappa$ increases.

Next, we study the sensitivity of the convergence behavior of ScaledGD with respect to the choice of the hyperparameters. We first examine the effect of $\zeta_1$ on the convergence speeds of ScaledGD and vanilla GD for tensor RPCA by fixing $\zeta_0 = 0.5$, $\eta = 0.5$, and $\rho = 0.95$. We run both algorithms for at most 300 iterations (the algorithm is terminated if the relative error exceeds $10^2$). As indicated in Figure~\ref{fig:TRPCAFFTzeta1} and Figure~\ref{fig:TRPCADCTzeta1}, ScaledGD achieves exact recovery over a wide range of $\zeta_1$ values, i.e., the relative error is always below $10^{-5}$ and the recovery performance remains unaffected by the condition number. In contrast, vanilla GD cannot achieve exact recovery when $\kappa$ gets large. Figure~\ref{fig:TCFFTstep} and Figure~\ref{fig:TCDCTstep} illustrate the convergence speeds of ScaledGD and vanilla GD under different step sizes for tensor completion, where we again run both algorithms for at most 300 iterations. It can be seen that ScaledGD outperforms vanilla GD over a large range of step sizes, even when the step size of vanilla GD is optimized for its performance. Hence, our choice of $\eta = 0.5$ in the previous tensor completion experiments for the comparison between ScaledGD and vanilla GD is reasonable.

Finally, we examine the performance of ScaledGD for tensor RPCA and tensor completion under Gaussian noisy observations. We denote the signal-to-noise ratio as $\mathrm{SNR} \coloneq 10 \log_{10} \frac{\| \bcX{\star} \|_F^2}{n^2 n_3 \sigma_w^2}$ in dB. We plot the relative error $\| \bcX_t - \bcX_{\star} \|_F / \| \bcX_{\star} \|_F$ with respect to the iteration count $t$ in Figure~\ref{fig:noisy} under the condition number $\kappa = 10$ and various $\mathrm{SNR} = 40, 60, 80\mathrm{dB}$. For tensor RPCA, ScaledGD achieves smaller error compared to vanilla GD. For tensor completion, both methods achieve the same statistical error eventually, but ScaledGD converges much faster. In addition, the convergence speeds of ScaledGD are irrespective of the noise levels.

\begin{figure}[tbp]
\centering
\subfloat[Tensor RPCA, DFT]{\includegraphics[height=1.7in]{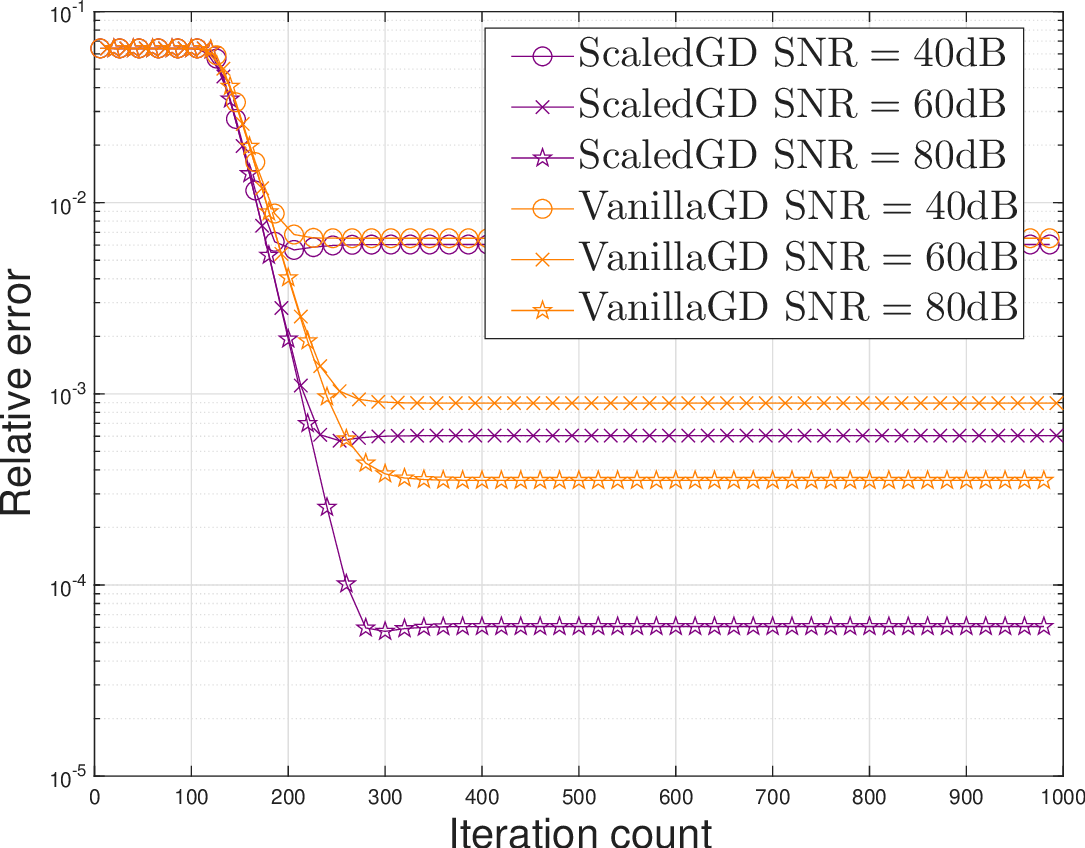} \label{fig:TRPCAFFTnoisy}}
\quad
\subfloat[Tensor RPCA, DCT]{\includegraphics[height=1.7in]{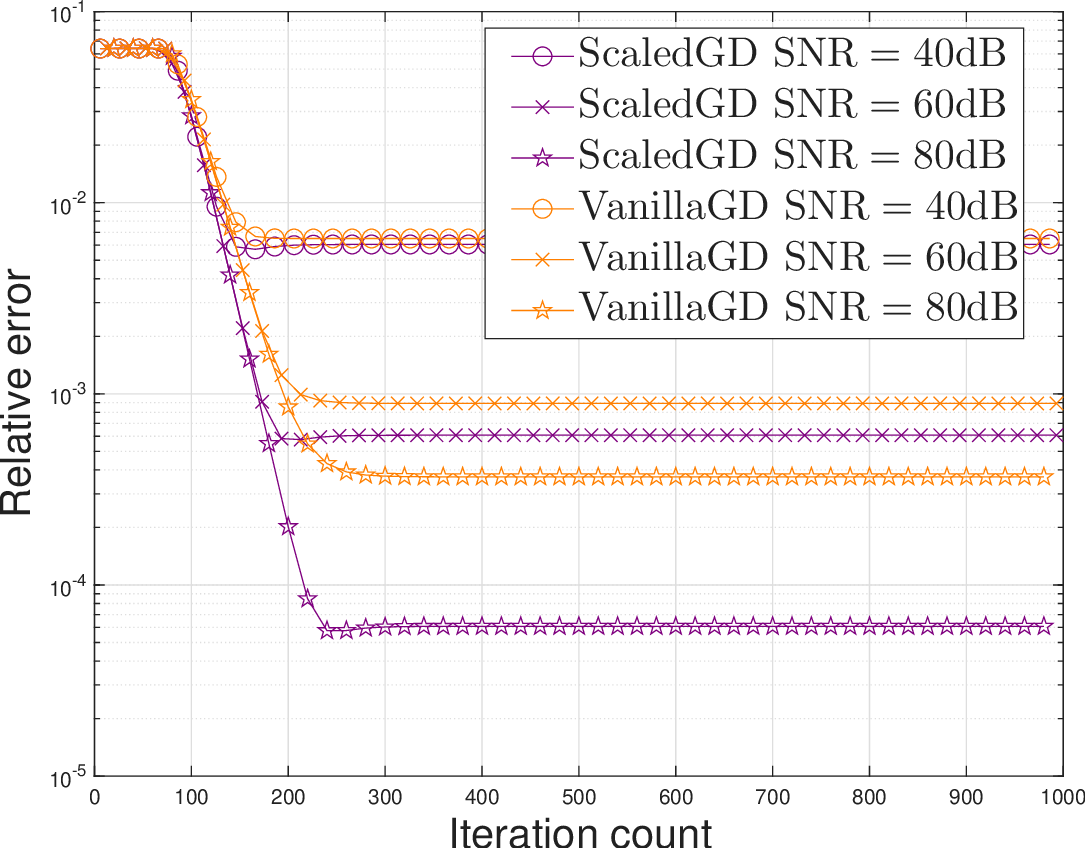} \label{fig:TRPCADCTnoisy}}
\quad
\subfloat[Tensor completion, DFT]{\includegraphics[height=1.7in]{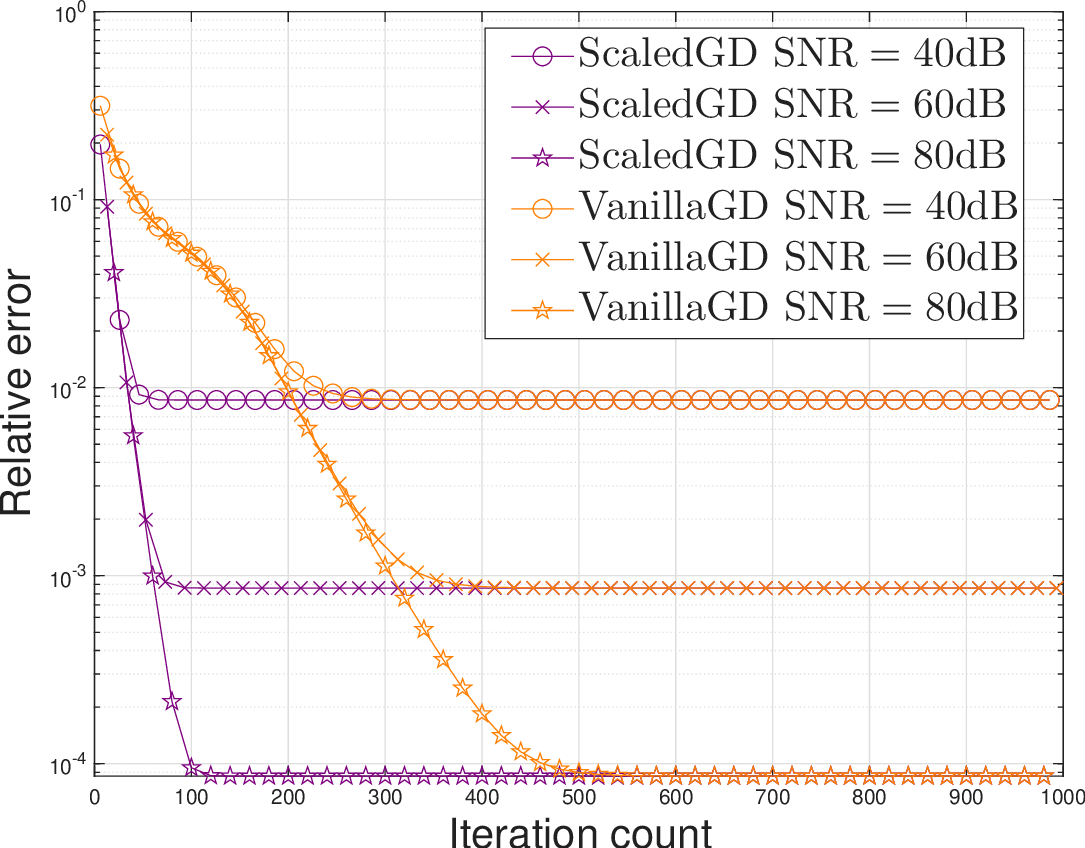} \label{fig:TCFFTnoisy}}
\quad
\subfloat[Tensor completion, DCT]{\includegraphics[height=1.7in]{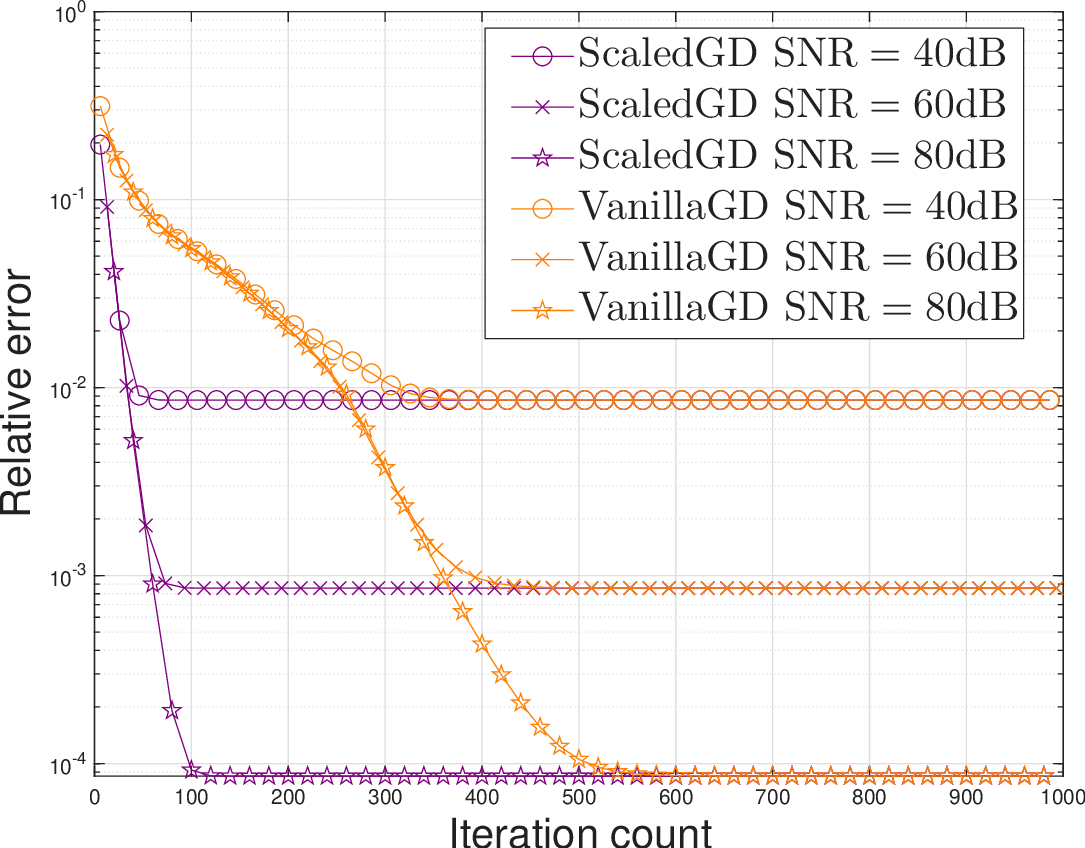} \label{fig:TCDCTnoisy}}
\caption{The relative errors of ScaledGD and vanilla GD with respect to the iteration count under the condition number $\kappa = 10$ and signal-to-noise ratios $\mathrm{SNR} = 40, 60, 80\mathrm{dB}$ for (a/b) tensor RPCA and (c/d) tensor completion.}
\label{fig:noisy}
\end{figure}

\subsection{Real-World Applications}

In this section, we apply ScaledGD for tensor RPCA and compare the performance of ScaledGD with other tensor RPCA algorithms, including TRPCA \citep{LuFCLLY.PAMI2020,Lu.ICCV2021}, ScaledGD using Tucker decomposition (ScaledGD-Tucker) \citep{DongTMC.IMA2023} and EAPT \citep{QiuWTMY.ICML2022} on real datasets. We again use two different linear transforms for ScaledGD, TRPCA and EAPT in all experiments, i.e., DFT and DCT. The corresponding methods of ScaledGD are called ScaledGD-DFT and ScaledGD-DCT for short, and this naming convention is the same for TRPCA and EAPT. We fix $\lambda = 1/\sqrt{\max(n_1, n_2) \ell}$ for TRPCA and tune the parameters of ScaledGD-Tucker and EAPT to achieve their best performance.

\begin{table}[t]
\centering
\caption{Comparison of video data denoising performance with different levels of corruptions. The best result is shown in bold.}
\begin{tabular}{c||ccc|ccc}
\hline
\multirow{2}{*}{Methods} & \multicolumn{3}{c|}{$\alpha = 0.05$} & \multicolumn{3}{c}{$\alpha = 0.1$}  \\
\cline{2-7}
 & Time(sec) & RSE & PSNR & Time(sec) & RSE & PSNR  \\
\hline
\multicolumn{7}{c}{Carphone}  \\
\hline
TRPCA-DFT & 185.88 & 0.0658 & 30.5242 & 185.47 & 0.0678 & 30.2640  \\
TRPCA-DCT & 215.57 & 0.0655 & 30.5618 & 216.82 & 0.0673 & 30.3350  \\
ScaledGD-Tucker & 53.77 & 0.1384 & 24.0724 & 53.57 & 0.1364 & 24.1950  \\
EAPT-DFT & 30.42 & 0.0648 & 30.6570 & 30.28 & 0.0654 & 30.5801  \\
EAPT-DCT & \textbf{25.62} & 0.0679 & 30.2524 & 25.70 & 0.0685 & 30.1802  \\
ScaledGD-DFT & 35.09 & \textbf{0.0603} & \textbf{31.2809} & 33.08 & \textbf{0.0624} & \textbf{30.9828}  \\
ScaledGD-DCT & 26.80 & 0.0632 & 30.8775 & \textbf{25.53} & 0.0655 & 30.5617  \\
\hline
\multicolumn{7}{c}{Coastguard}  \\
\hline
TRPCA-DFT & 165.00 & \textbf{0.0511} & \textbf{31.7435} & 164.15 & \textbf{0.0537} & \textbf{31.3203}  \\
TRPCA-DCT & 196.24 & 0.0570 & 30.8040 & 199.77 & 0.0595 & 30.4273  \\
ScaledGD-Tucker & 39.37 & 0.1442 & 22.7378 & 39.29 & 0.1432 & 22.7957  \\
EAPT-DFT & 24.30 & 0.0611 & 30.2289 & 24.24 & 0.0621 & 30.0861  \\
EAPT-DCT & 23.64 & 0.0734 & 28.6131 & 23.54 & 0.0748 & 28.4520  \\
ScaledGD-DFT & 23.23 & 0.0578 & 30.6929 & 22.96 & 0.0597 & 30.4192  \\
ScaledGD-DCT & \textbf{20.81} & 0.0696 & 29.0768 & \textbf{20.64} & 0.0723 & 28.7491  \\
\hline
\end{tabular}
\label{tab:denoise}
\end{table}

\begin{figure}[tbp]
\centering
\includegraphics[width=6in]{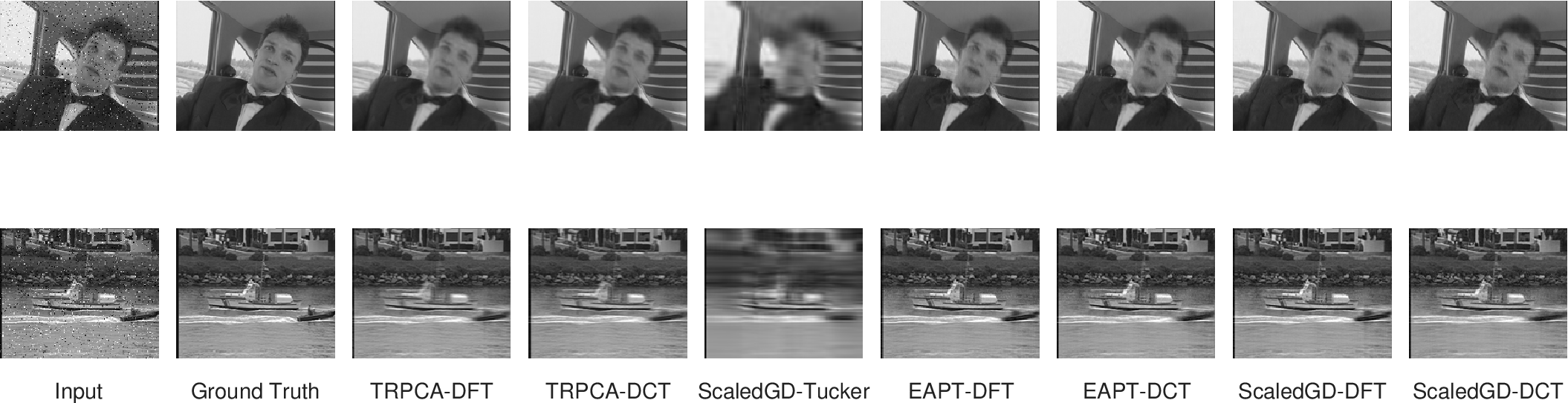}
\caption{Video denoising examples of ``Carphone'' (top) and ``Coastguard'' (bottom) for $\alpha = 0.1$.}
\label{fig:denoiseviz}
\end{figure}

\begin{figure}[htbp]
\centering
\subfloat[DFT]{\includegraphics[height=1.7in]{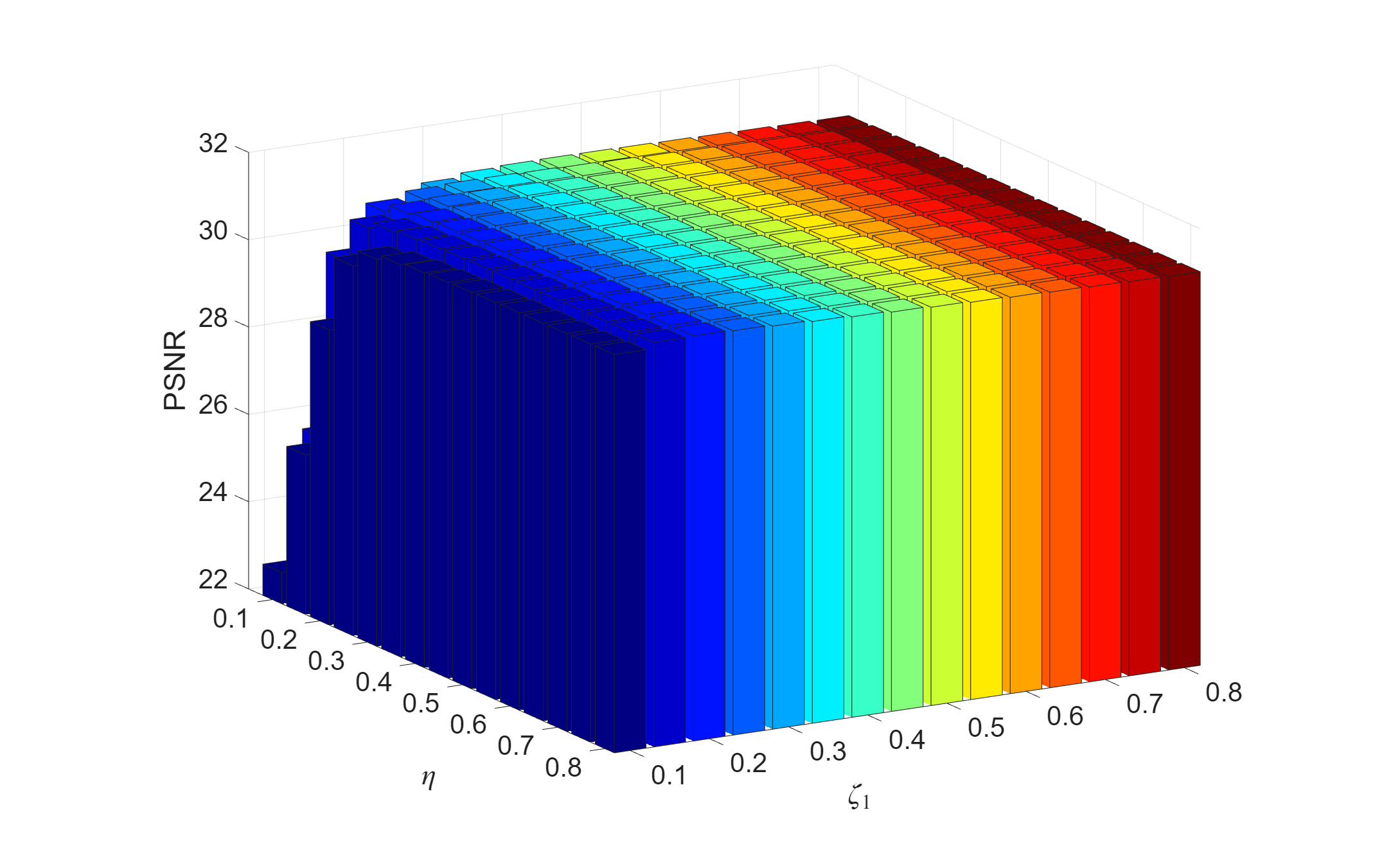} \label{fig:paramsensitivityFFT}}
\quad
\subfloat[DCT]{\includegraphics[height=1.7in]{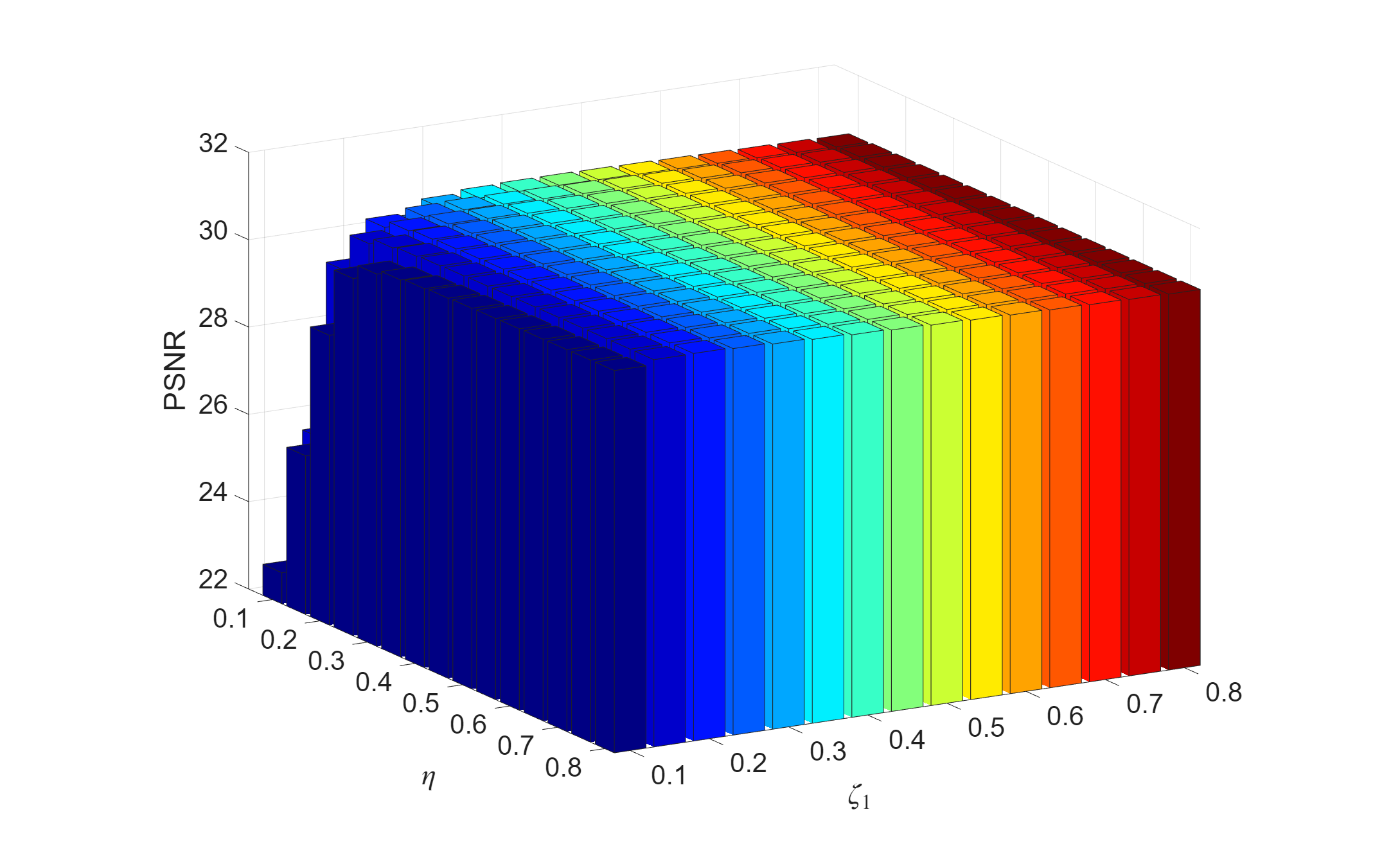} \label{fig:paramsensitivityDCT}}
\caption{Parameter sensitivity analysis with respect to $\zeta_1$ and $\eta$ on the ``Carphone'' sequence.}
\end{figure}

\paragraph{Video denoising.} Here, we compare the performance of different methods on video denoising. Specifically, we select two video sequences ``Carphone'' and ``Coastguard'' from YUV video sequences\footnote{\url{http://trace.eas.asu.edu/yuv/index.html}} for comparison. We randomly select $\alpha$-fraction of pixels in each frame and add white Gaussian noise with variance 0.1 on these pixels. We adopt two metrics, namely, the Peak Signal-to-Noise Ratio (PSNR) and the relative standard error (RSE) for evaluation. For ScaledGD, the parameters are set to be $\zeta_0 = 0.5$, $\zeta_1 = 0.5$, $\eta = 0.5$, $\rho = 0.9$ and $r = 20$. The numerical results in Table~\ref{tab:denoise} demonstrate that t-SVD based methods achieve much better performance than ScaledGD-Tucker in terms of higher PSNR values and lower RSE values, which suggests that t-SVD provides a more realistic modelling scenario compared to Tucker decomposition for such data. ScaledGD-DFT achieves superior recovery accuracy compared to the state-of-the-art methods for ``Carphone'' sequence. While the PSNR value for ScaledGD-DFT is slightly lower than TRPCA-DFT for ``Coastguard'' sequence, the running time of TRPCA-DFT is 7 times of that of ScaledGD-DFT because ScaledGD eliminates the need for full t-SVD computations within TRPCA. Figure~\ref{fig:denoiseviz} provides two visual examples depicting the recovery results.

We also conduct experiments on ``Carphone'' sequence to show the effect of the parameters $\zeta_1$ and $\eta$ on the PSNR value by fixing $\zeta_0 = 0.5$ and $\rho = 0.9$. The PSNR results with different combinations of $\zeta_1$ and $\eta$ for DFT and DCT are given in Figure~\ref{fig:paramsensitivityFFT} and Figure~\ref{fig:paramsensitivityDCT}, respectively. It is evident that competitive performance can be obtained over a wide range of parameters, e.g., the PSNR value can be above 31 when $\zeta_1, \eta \geq 0.2$ for DFT.

\begin{table}[t]
\small
\centering
\caption{Background subtraction results on the BMC dataset. The best result is shown in bold and the second best on F-measure and running time (in seconds) is underlined.}
\setlength\tabcolsep{0.15em}{\begin{tabular}{c|c||c|c|c|c|c|c|c|c|c}
\hline
Methods & Metrics & Video 1 & Video 2 & Video 3 & Video 4 & Video 5 & Video 6 & Video 7 & Video 8 & Video 9  \\
\hline
\multirow{4}{*}{\shortstack{TRPCA \\ -DFT}} & Precision & 0.5618 & \textbf{0.7175} & 0.4657 & \textbf{0.3893} & \textbf{0.6078} & \textbf{0.6378} & 0.6238 & \textbf{0.5967} & 0.3094  \\
 & Recall & 0.6367 & 0.5500 & 0.9147 & 0.7488 & 0.7036 & 0.6368 & 0.6088 & 0.6776 & 0.8428  \\
 & F-measure & 0.5812 & 0.5565 & \textbf{0.6119} & \textbf{0.4741} & 0.5733 & 0.5861 & \underline{0.5650} & 0.6211 & 0.4212  \\
 & Time & 102.86 & 124.17 & 14.83 & 80.36 & 105.59 & 127.40 & 145.10 & 70.87 & 76.64  \\
\hline
\multirow{4}{*}{\shortstack{TRPCA \\ -DCT}} & Precision & 0.5559 & 0.7154 & 0.4646 & 0.3877 & 0.6072 & 0.6374 & \textbf{0.6245} & 0.5912 & 0.3160  \\
 & Recall & 0.6340 & 0.5512 & 0.9143 & 0.7510 & 0.7082 & 0.6401 & 0.6107 & 0.6765 & 0.8434  \\
 & F-measure & 0.5766 & 0.5567 & \underline{0.6107} & \underline{0.4733} & 0.5750 & 0.5881 & \textbf{0.5671} & 0.6174 & 0.4306  \\
 & Time & 137.14 & 165.57 & 14.32 & 106.39 & 142.36 & 168.28 & 189.68 & 93.19 & 101.81  \\
\hline
\multirow{4}{*}{\shortstack{ScaledGD \\ -Tucker}} & Precision & 0.2466 & 0.4249 & 0.0824 & 0.0297 & 0.1972 & 0.2241 & 0.2247 & 0.1751 & 0.0156  \\
 & Recall & \textbf{0.8107} & \textbf{0.8304} & \textbf{0.9503} & \textbf{0.8908} & \textbf{0.9305} & \textbf{0.8277} & \textbf{0.8109} & \textbf{0.8167} & \textbf{0.8887}  \\
 & F-measure & 0.3546 & 0.5356 & 0.1504 & 0.0550 & 0.2952 & 0.3244 & 0.3184 & 0.2755 & 0.0301  \\
 & Time & 18.96 & 26.76 & 5.55 & 15.64 & 21.17 & 23.20 & 27.02 & 14.24 & 17.63  \\
\hline
\multirow{4}{*}{\shortstack{EAPT \\ -DFT}} & Precision & 0.5443 & 0.6271 & 0.0292 & 0.0887 & 0.4789 & 0.5254 & 0.5275 & 0.5138 & 0.1657  \\
 & Recall & 0.7430 & 0.6423 & 0.3698 & 0.7552 & 0.8676 & 0.7200 & 0.6797 & 0.7846 & 0.7693  \\
 & F-measure & 0.6220 & 0.5795 & 0.0537 & 0.1338 & 0.5491 & 0.5601 & 0.5436 & 0.6042 & 0.2518  \\
 & Time & 12.38 & 15.55 & 3.87 & 10.25 & 13.24 & 14.64 & 16.34 & 9.62 & 11.03  \\
\hline
\multirow{4}{*}{\shortstack{EAPT \\ -DCT}} & Precision & 0.5603 & 0.6342 & 0.0233 & 0.0705 & 0.4918 & 0.5311 & 0.4574 & 0.4545 & 0.1096  \\
 & Recall & 0.7464 & 0.6513 & 0.3992 & 0.7647 & 0.8772 & 0.7252 & 0.7210 & 0.7918 & 0.7594  \\
 & F-measure & 0.6338 & \underline{0.5912} & 0.0437 & 0.1127 & 0.5605 & 0.5701 & 0.5154 & 0.5513 & 0.1786  \\
 & Time & \textbf{7.37} & \textbf{9.15} & \textbf{2.30} & \textbf{5.79} & \textbf{7.78} & \textbf{8.61} & \textbf{9.22} & \textbf{5.70} & \textbf{6.51}  \\
\hline
\multirow{4}{*}{\shortstack{ScaledGD \\ -DFT}} & Precision & 0.6181 & 0.6849 & \textbf{0.6467} & 0.3705 & 0.5382 & 0.5831 & 0.5775 & 0.5782 & 0.6085  \\
 & Recall & 0.7031 & 0.6046 & 0.3387 & 0.7209 & 0.8263 & 0.6837 & 0.6551 & 0.7580 & 0.7225  \\
 & F-measure & \underline{0.6483} & 0.5879 & 0.4087 & 0.4220 & \underline{0.5841} & \underline{0.5891} & 0.5636 & \textbf{0.6451} & \underline{0.6363}  \\
 & Time & 10.96 & 14.54 & 2.97 & 10.41 & 12.87 & 13.74 & 16.24 & 9.94 & 10.39  \\
\hline
\multirow{4}{*}{\shortstack{ScaledGD \\ -DCT}} & Precision & \textbf{0.6322} & 0.6916 & 0.6432 & 0.3733 & 0.5674 & 0.5844 & 0.5742 & 0.5722 & \textbf{0.6095}  \\
 & Recall & 0.7101 & 0.6057 & 0.3639 & 0.7193 & 0.8535 & 0.6881 & 0.6546 & 0.7602 & 0.7357  \\
 & F-measure & \textbf{0.6605} & \textbf{0.5924} & 0.4338 & 0.4188 & \textbf{0.6172} & \textbf{0.5935} & 0.5620 & \underline{0.6410} & \textbf{0.6458}  \\
 & Time & \underline{8.80} & \underline{12.18} & \underline{2.38} & \underline{8.32} & \underline{10.39} & \underline{11.67} & \underline{14.08} & \underline{7.54} & \underline{8.01}  \\
\hline
\end{tabular}}
\label{tab:bgs}
\end{table}

\begin{figure}[ht]
\centering
\includegraphics[width=6in]{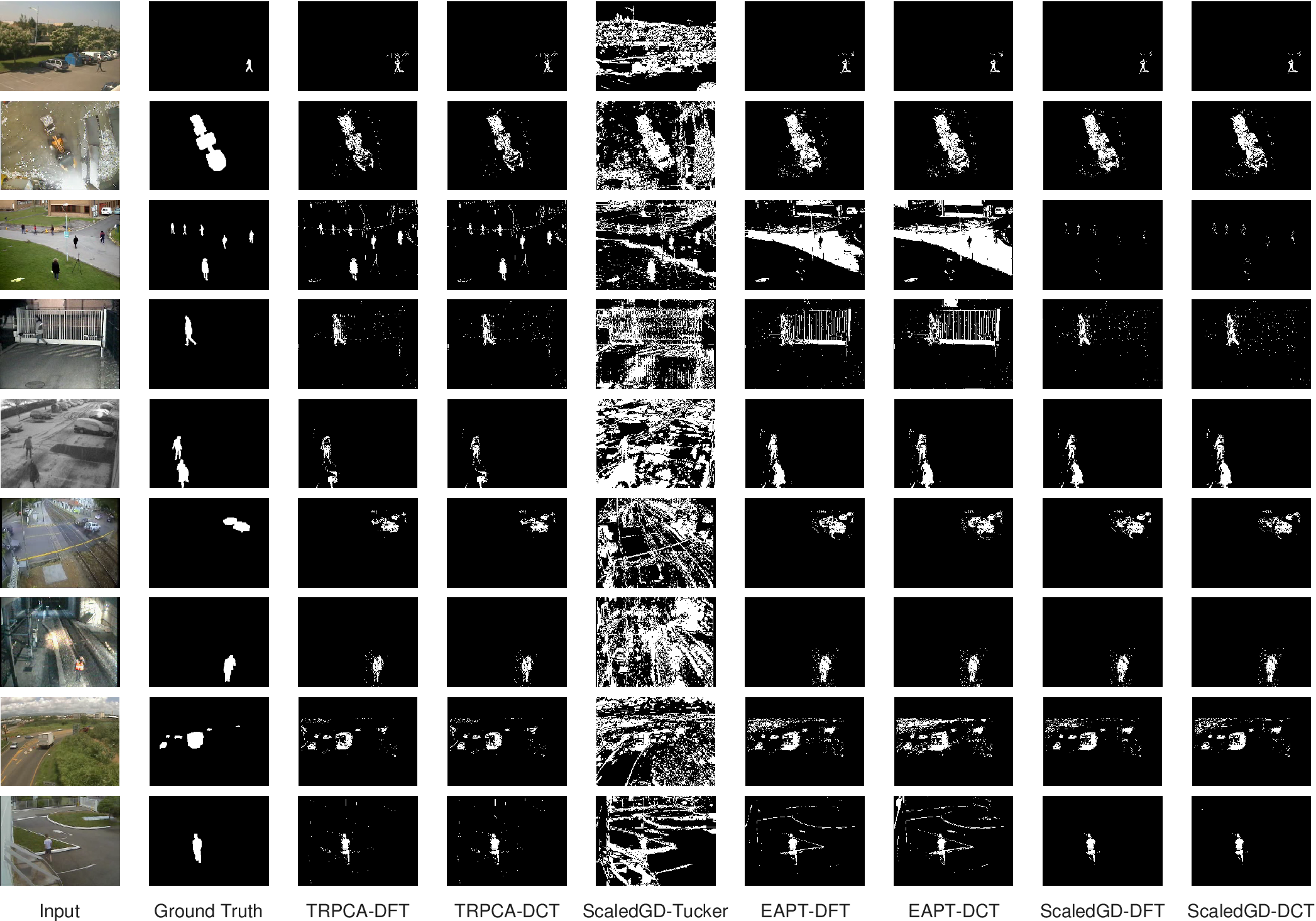}
\caption{Video background subtraction visual results using the BMC dataset \citep{VacavantCWL.ACCVW2012}. From top to bottom are 9 sequences within the dataset.}
\label{fig:bgsviz}
\end{figure}

\paragraph{Video background subtraction.} We show the effectiveness of the proposed ScaledGD on the Background Models Challenge (BMC) dataset\footnote{\url{http://backgroundmodelschallenge.eu/}} \citep{VacavantCWL.ACCVW2012} for video background subtraction. Here, the low-rank tensor corresponds to the relatively static background across frames, while the foreground consisting of moving objects, which are usually sparsely distributed in the video frames and can be viewed as outliers, correspond to the sparse tensor. Thus, it is reasonable to apply tensor RPCA to separate the background and foreground. The dataset contains 9 real video sequences and we use all these sequences for both qualitative and quantitative analysis. To evaluate the performance of ScaledGD, the Precision, Recall, and F-measure, are used as basic evaluation metrics. For both ScaledGD-DFT and ScaledGD-DCT, we fix $\zeta_0 = 0.15$, $\zeta_1 = 0.15$, $\eta = 0.85$, $\rho = 0.95$, $r = 5$. As can be seen from Table~\ref{tab:bgs}, EAPT-DCT takes the least running time and ScaledGD-DCT is only a little slower than EAPT-DCT. While ScaledGD-Tucker tends to give us a high recall value, ScaledGD-DFT and ScaledGD-DCT achieve the highest F-measure scores for 6 videos. For Video 7, the performance of ScaledGD-DFT and ScaledGD-DCT is also comparable to that of TRPCA-DFT and TRPCA-DCT, but the running time of TRPCA is at least 8.5 times of that of ScaledGD. The visual results depicted in Figure~\ref{fig:bgsviz} verify that our method is competent to extract the foreground from these videos.

We conclude by noting that the choice of transformations in these experiments is agnostic to the data. Nonetheless, selection of the optimal transformation and even learning the optimal transform is an open problem for future research.

\section{Conclusions}
\label{sec:conclude}

This paper developed a scaled gradient descent (ScaledGD) algorithm for factored low-rank tensor estimation based on the t-SVD under invertible linear transforms. We rigorously establish that under standard assumptions, ScaledGD only takes $\cO (\log(1/\epsilon))$ iterations to reach $\epsilon$-accuracy, without the dependency on the condition number of the ground truth tensor when initialized via the spectral method. There are several future directions that are worth exploring, which we briefly discuss below.
\begin{itemize}
  \item \emph{Parameter estimation for tensor RPCA.} The proposed algorithm for tensor RPCA involves an iteration-varying threshold operation following a geometric decaying schedule, which contains several hyperparameters that need to be tuned carefully for real data. Our future work includes learning the optimal hyperparameters using deep unfolding and self-supervised learning \citep{CaiLY.NeurIPS2021,DongSDC.ICASSP2023}.
  \item \emph{High-order extension of ScaledGD.} Based on the multilinear algebra for high-order t-SVD, it is of great interest to extend our method for high-order tensors and to unify the understanding of theoretical guarantee for low-rank tensor estimation problems when the invertible linear transforms $L$ satisfy certain conditions.
  \item \emph{Entrywise error control for tensor completion.} In this work, we aimed at minimizing the Frobenius norm of the reconstructed tensor in tensor completion. There exists another work that deals with minimizing the $\ell_{\infty}$ error for matrix completion with statistical guarantees \citep{MaWCC.FoCM2020}. It is therefore interesting to develop similar strong entrywise error guarantees of ScaledGD for tensor completion with t-SVD.
\end{itemize}

\small
\bibliography{TW_bib}


\newpage
\appendix

\section{Technical Lemmas}
\label{sec:techlemmas}

In this section, we introduce main preliminaries and useful lemmas which will be used in the proofs. We use bold calligraphic letters with arrows on top to denote tensor columns of size $n_1 \times 1 \times n_3$, e.g., $\overrightarrow{\bcA}$. We define the $\ell_{\infty,2}$-norm and $\ell_{2,2,\infty}$-norm of a tensor $\bcA$ as
\begin{align*}
\| \bcA \|_{\infty,2} = \max \{ \max_i \| \bcA(i,:,:) \|_F, \max_j \| \bcA(:,j,:) \|_F \},
\end{align*}
and $\| \bcA \|_{2,2,\infty} = \max_{i,k} \| \bcA(i,:,k) \|_F$, respectively.

\begin{definition}[Standard tensor basis \citep{Lu.ICCV2021}]\label{def:stdtensorbasis}
The tensor \textbf{column basis} with respect to the transform $L$, denoted as $\mathring{\boldsymbol{\ce}}_i$, is a tensor of size $n_1 \times 1 \times n_3$ with the entries of the $(i,1)$-th mode-3 tube of $L(\mathring{\boldsymbol{\ce}}_i)$ equaling 1 and the rest equaling 0. Similarly, the \textbf{row basis} $\mathring{\boldsymbol{\ce}}_j^H$ is of size $1 \times n_2 \times n_3$ with the entries of the $(1,j)$-th mode-3 tube of $L(\mathring{\boldsymbol{\ce}}_j^H)$ equaling to 1 and the rest equaling to 0. The \textbf{tube basis} $\dot{\boldsymbol{\ce}}_k$ is a tensor of size $1 \times 1 \times n_3$ with the $(1,1,k)$-th entry of $L(\dot{\boldsymbol{\ce}}_k)$ equaling 1 and the rest equaling 0.
\end{definition}

Denote $\bar{\boldsymbol{\ce}}_{ijk}$ as a unit tensor with only the $(i,j,k)$-th entry equaling 1 and others equaling 0. Based on Definition~\ref{def:stdtensorbasis}, $\bar{\boldsymbol{\ce}}_{ijk}$ can be expressed as
\begin{align}\label{eqn:defeijk}
\bar{\boldsymbol{\ce}}_{ijk} = L(\mathring{\boldsymbol{\ce}}_i \ast_{\bPhi} \dot{\boldsymbol{\ce}}_k \ast_{\bPhi} \mathring{\boldsymbol{\ce}}_j^H).
\end{align}
Then for any tensor $\bcA \in \R^{n_1 \times n_2 \times n_3}$, we have $\bcA_{i,j,k} = \langle \bcA, \bar{\boldsymbol{\ce}}_{ijk} \rangle$ and
\begin{align*}
\bcA = \sum_{i,j,k} \langle \bcA, \bar{\boldsymbol{\ce}}_{ijk} \rangle \bar{\boldsymbol{\ce}}_{ijk}.
\end{align*}

\begin{definition}
For any $\bcA \in \R^{n_1 \times n_2 \times n_3}$, the projection onto $\bOmega$ is defined as
\begin{align*}
\bcP_{\bOmega}(\bcA) = \sum_{ijk} \delta_{ijk} \bcA_{i,j,k} \bar{\boldsymbol{\ce}}_{ijk},
\end{align*}
where $\delta_{ijk} = 1_{(i,j,k) \in \bOmega}$ and $1_{(\cdot)}$ denotes the indicator function.
\end{definition}

\begin{definition}
Let $\bcM \in \R^{n_1 \times n_2 \times n_3}$ with $\mathrm{rank}_t(\bcM) = r$ and its skinny t-SVD be $\bcM = \bcU \ast_{\bPhi} \bcG \ast_{\bPhi} \bcV^H$. We denote $\bT$ by the set
\begin{align}\label{eqn:defT}
\bT = \{ \bcU \ast_{\bPhi} \bcZ^H + \bcW \ast_{\bPhi} \bcV^H | \bcZ \in \R^{n_2 \times r \times n_3}, \bcW \in \R^{n_1 \times r \times n_3} \},
\end{align}
and by $\bT^{\perp}$ its orthogonal complement.
\end{definition}
The projections onto $\bT$ and its complementary set $\bT^{\perp}$ are respectively denoted as
\begin{align}\label{eqn:defprojT}
\bcP_{\bT}(\bcA) = \bcU \ast_{\bPhi} \bcU^H \ast_{\bPhi} \bcA + \bcA \ast_{\bPhi} \bcV \ast_{\bPhi} \bcV^H - \bcU \ast_{\bPhi} \bcU^H \ast_{\bPhi} \bcA \ast_{\bPhi} \bcV \ast_{\bPhi} \bcV^H,
\end{align}
and
\begin{align*}
\bcP_{\bT^{\perp}}(\bcA) = \bcA - \bcP_{\bT}(\bcA) = (\bcI_{n_1} - \bcU \ast_{\bPhi} \bcU^H) \ast_{\bPhi} \bcA \ast_{\bPhi} (\bcI_{n_2} - \bcV \ast_{\bPhi} \bcV^H).
\end{align*}

In the following, we use $\bcQ_t$ to denote the optimal alignment tensor between $(\bcL_t;\bcR_t)$ and $(\bcL_{\star};\bcR_{\star})$. For notational convenience, we denote $\bcL_{\sharp} \coloneq \bcL_t \ast_{\bPhi} \bcQ_t$, $\bcR_{\sharp} \coloneq \bcR_t \ast_{\bPhi} \bcQ_t^{-H}$, $\bcL_{\triangle} \coloneq \bcL_{\sharp} - \bcL_{\star}$, $\bcR_{\triangle} \coloneq \bcR_{\sharp} - \bcR_{\star}$, $\bcS_{\triangle} \coloneq \bcS_{t+1} - \bcS_{\star}$.

\subsection{Tensor Algebra}

\begin{lemma}
Let $\bcX \in \R^{n_1 \times n_2 \times n_3}$, then
\begin{align*}
\xoverline{\bcX}(:,:,k) = \begin{bmatrix}
\bX_1 & \bX_2 & \cdots & \bX_{n_2}
\end{bmatrix}
\begin{bmatrix}
\bPhi_{k,\cdot}^T & \bzero & \bzero & \bzero  \\
\bzero & \bPhi_{k,\cdot}^T & \bzero & \bzero  \\
\vdots & \vdots & \ddots & \vdots  \\
\bzero & \bzero & \bzero & \bPhi_{k,\cdot}^T
\end{bmatrix}
\coloneq \bX_{(1)} \widetilde{\bPhi}_k,
\end{align*}
where each $\bX_j$ is obtained by ``squeezing'' each sample $\bcX_{(j)} \in \R^{n_1 \times 1 \times n_3}$ into a matrix, i.e., $\bX_j = \mathtt{squeeze} (\bcX_{(j)}) \in \R^{n_1 \times n_3}$.
\end{lemma}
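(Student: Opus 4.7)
The statement is essentially a bookkeeping identity: it rewrites the $k$-th frontal slice of $\xoverline{\bcX} = \bcX \times_3 \bPhi$ as a product of a mode-1 unfolding-type arrangement of $\bcX$ with a block-diagonal matrix built from the $k$-th row of $\bPhi$. The plan is therefore to verify the identity entrywise by unwinding the definitions of the mode-3 tensor-matrix product, the squeeze operator, and the block structure of $\widetilde{\bPhi}_k$.

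First I would recall the definition of the mode-3 product: the $(i,j,k)$-entry of $\xoverline{\bcX} = \bcX \times_3 \bPhi$ is $\sum_{s=1}^{n_3} \bPhi_{k,s}\, \bcX_{i,j,s}$. In particular, the frontal slice satisfies
\begin{equation*}
[\xoverline{\bcX}(:,:,k)]_{i,j} \;=\; \sum_{s=1}^{n_3} \bPhi_{k,s}\, \bcX_{i,j,s}.
\end{equation*}

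Next I would compute the $(i,j)$-entry of $\bX_{(1)} \widetilde{\bPhi}_k$. By the definition of squeezing, $\bX_j \in \R^{n_1 \times n_3}$ has entries $(\bX_j)_{i,s} = \bcX_{i,j,s}$, so the $i$-th row of the concatenation $\bX_{(1)} = [\bX_1, \bX_2, \ldots, \bX_{n_2}]$ consists, block by block in $j' = 1, \ldots, n_2$, of the vector $(\bcX_{i,j',1}, \bcX_{i,j',2}, \ldots, \bcX_{i,j',n_3})$. The $j$-th column of $\widetilde{\bPhi}_k$, on the other hand, has the $n_3$-vector $\bPhi_{k,\cdot}^T$ placed in the $j$-th block and zeros in every other block. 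Multiplying these selects only the $j' = j$ block, giving
\begin{equation*}
[\bX_{(1)} \widetilde{\bPhi}_k]_{i,j} \;=\; \sum_{s=1}^{n_3} \bcX_{i,j,s}\, \bPhi_{k,s},
\end{equation*}
which matches the expression derived above.

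Since the two sides agree entry-by-entry for every $(i,j)$ with $i \in [n_1]$, $j \in [n_2]$, the matrix identity $\xoverline{\bcX}(:,:,k) = \bX_{(1)} \widetilde{\bPhi}_k$ follows. There is no real obstacle here; the only point requiring care is to keep the indexing conventions straight, in particular to verify that the block-diagonal pattern of $\widetilde{\bPhi}_k$ correctly picks out the $j$-th squeezed slice $\bX_j$ inside $\bX_{(1)}$ and contracts it against $\bPhi_{k,\cdot}$ along the mode-3 direction. Once this is done, the identity reduces to the defining formula for the mode-3 product.
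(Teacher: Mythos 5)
Your proof is correct and follows essentially the same route as the paper: both start from the entrywise definition $[\xoverline{\bcX}]_{i,j,k} = \sum_{s}\bPhi_{k,s}\bcX_{i,j,s}$ and then observe that the block-diagonal structure of $\widetilde{\bPhi}_k$ selects the $j$-th squeezed slice and contracts it against $\bPhi_{k,\cdot}^T$. No gaps.
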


\begin{proof}
The ($i,j,k$)-th entry of $\xoverline{\bcX}$ can be written as
\begin{align*}
\xoverline{\bcX}_{i,j,k} = \sum_{k'=1}^{n_3} \bPhi_{k,k'} \bcX_{i,j,k'} = \bcX(i,j,:) \bPhi_{k,\cdot}^T.
\end{align*}
Then we have
\begin{align*}
& \xoverline{\bcX}(:,:,k) \\
= & \begin{bmatrix}
\mathtt{squeeze}(\bcX_{(1)}) & \mathtt{squeeze}(\bcX_{(2)}) & \cdots & \mathtt{squeeze}(\bcX_{(n_2)})
\end{bmatrix}
\begin{bmatrix}
\bPhi_{k,\cdot}^T & \bzero & \bzero & \bzero  \\
\bzero & \bPhi_{k,\cdot}^T & \bzero & \bzero  \\
\vdots & \vdots & \ddots & \vdots  \\
\bzero & \bzero & \bzero & \bPhi_{k,\cdot}^T
\end{bmatrix} \\
= & \begin{bmatrix}
\bX_1 & \bX_2 & \cdots & \bX_{n_2}
\end{bmatrix}
\begin{bmatrix}
\bPhi_{k,\cdot}^T & \bzero & \bzero & \bzero  \\
\bzero & \bPhi_{k,\cdot}^T & \bzero & \bzero  \\
\vdots & \vdots & \ddots & \vdots  \\
\bzero & \bzero & \bzero & \bPhi_{k,\cdot}^T
\end{bmatrix} \coloneq \bX_{(1)} \widetilde{\bPhi}_k,
\end{align*}
where $\bX_{(1)} \in \R^{n_1 \times n_2 n_3}$ and $\widetilde{\bPhi}_k \in \C^{n_2 n_3 \times n_2}$.
\end{proof}
\begin{lemma}\label{lemma:normbound}
Let $\bcS \in \R^{n_1 \times n_2 \times n_3}$ is $\alpha$-sparse, then
\begin{align*}
\| \bcS \| \leq \frac{\alpha \sqrt{\ell}}{2} (n_1 + n_2 n_3) \| \bcS \|_{\infty} \quad \mathrm{and} \quad \| \bcS \|_{2,\infty} \leq \sqrt{\alpha n_2 n_3} \| \bcS \|_{\infty}.
\end{align*}
\end{lemma}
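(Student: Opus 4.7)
The two bounds are independent, so I would handle them separately.

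The $\ell_{2,\infty}$ bound is a direct counting argument. For each $i \in [n_1]$, the horizontal slice $\bcS(i,:,:) \in \R^{n_2 \times n_3}$ satisfies $\|\bcS(i,:,k)\|_0 \leq \alpha n_2$ for every $k$ by Definition~\ref{def:sparsity}; summing over $k$ gives at most $\alpha n_2 n_3$ nonzero entries in $\bcS(i,:,:)$, each bounded in magnitude by $\|\bcS\|_{\infty}$. Hence $\|\bcS(i,:,:)\|_F^2 \leq \alpha n_2 n_3 \|\bcS\|_{\infty}^2$, and taking the maximum over $i$ yields the claim.

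For the spectral bound, I would first reduce to bounding the spectral norms of the frontal slices of $\xoverline{\bcS}$. Since $\|\bcS\| = \|\widebar{\bS}\|$ and $\widebar{\bS}$ is block diagonal with blocks $\widebar{\bS}^{(k)}$, it suffices to bound $\max_k \|\widebar{\bS}^{(k)}\|$. From $\xoverline{\bcS} = \bcS \times_3 \bPhi$ one reads off $\widebar{\bS}^{(k)} = \sum_{k'=1}^{n_3} \bPhi_{k,k'}\, \bS^{(k')}$, where $\bS^{(k')}$ denotes the $k'$-th original-domain frontal slice. The triangle inequality combined with Cauchy--Schwarz applied to the row $\bPhi_{k,\cdot}$ (whose Euclidean norm is $\sqrt{\ell}$ by \eqref{eqn:phiconstraint}) gives
\[
\|\widebar{\bS}^{(k)}\| \;\leq\; \sum_{k'} |\bPhi_{k,k'}|\, \|\bS^{(k')}\| \;\leq\; \sqrt{n_3 \ell}\; \max_{k'} \|\bS^{(k')}\|.
\]

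It then remains to bound each sparse frontal slice $\bS^{(k')}$. The $\alpha$-sparsity definition forces at most $\alpha n_1$ nonzeros per column and at most $\alpha n_2$ nonzeros per row of $\bS^{(k')}$, so the max column sum and max row sum are bounded by $\alpha n_1 \|\bcS\|_{\infty}$ and $\alpha n_2 \|\bcS\|_{\infty}$, respectively. The standard inequality $\|\bA\| \leq \sqrt{\|\bA\|_{1,1}\, \|\bA\|_{\infty,\infty}}$ then yields $\|\bS^{(k')}\| \leq \alpha \sqrt{n_1 n_2}\, \|\bcS\|_{\infty}$. Plugging back gives $\|\bcS\| \leq \alpha \sqrt{\ell\, n_1 n_2 n_3}\, \|\bcS\|_{\infty}$, and a final AM--GM rewriting $\sqrt{n_1 \cdot n_2 n_3} \leq (n_1 + n_2 n_3)/2$ produces the stated form. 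The only mildly delicate piece is the last AM--GM step, which seems tuned to how the bound is invoked downstream (asymmetry in $n_1$ versus $n_2 n_3$); everything else is bookkeeping around the transform and the sparsity pattern.
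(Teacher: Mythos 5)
Your proof is correct, and the spectral-norm half takes a genuinely different route from the paper's. The paper reduces to the mode-1 unfolding: using the identity $\xoverline{\bcS}(:,:,k)=\bS_{(1)}\widetilde{\bPhi}_k$ it gets $\|\bcS\|\le\sqrt{\ell}\,\|\bS_{(1)}\|$ with $\bS_{(1)}\in\R^{n_1\times n_2 n_3}$, and then bounds the bilinear form $\sum_{i,j,k}x_i\bcS_{i,j,k}z_{(j-1)n_3+k}$ via $ab\le(a^2+b^2)/2$, exploiting that each row of the unfolding has at most $\alpha n_2 n_3$ nonzeros and each column at most $\alpha n_1$; this yields the arithmetic-mean form $\tfrac{\alpha\sqrt{\ell}}{2}(n_1+n_2n_3)$ directly, not as an AM--GM relaxation. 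You instead stay slice-wise: you bound each original-domain frontal slice by the elementary interpolation $\|\bA\|^2\le\big(\max_j\sum_i|\bA_{i,j}|\big)\big(\max_i\sum_j|\bA_{i,j}|\big)$, giving $\|\bS^{(k')}\|\le\alpha\sqrt{n_1n_2}\,\|\bcS\|_\infty$, and pay a factor $\sqrt{n_3\ell}$ for aggregating over the transform row by Cauchy--Schwarz, arriving at $\alpha\sqrt{\ell\,n_1n_2n_3}\,\|\bcS\|_\infty$. Every step checks out, and your intermediate bound is in fact never weaker than the stated one and is strictly sharper whenever $n_1\ne n_2n_3$ (e.g.\ it scales as $n^{1.5}$ versus $n^2$ in the cubic case $n_1=n_2=n_3=n$); the final step $\sqrt{n_1\cdot n_2n_3}\le(n_1+n_2n_3)/2$ correctly recovers the lemma as stated. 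Your $\ell_{2,\infty}$ counting argument is the same one the paper leaves implicit. The only cosmetic point is that the norms you denote $\|\bA\|_{1,1}$ and $\|\bA\|_{\infty,\infty}$ are not defined in the paper, so you should spell out that they are the maximum absolute column sum and row sum, respectively.
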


\begin{proof}
We only need to prove the first claim, since the second claim can be directly followed by the fact that $\bcS$ has at most $\alpha$-fraction nonzero entries along each mode-2 tube of $\bcS$. Let $\bx \in \R^{n_1}$ and $\bz \in \R^{n_2 n_3}$ be unit vectors, using $a b \leq (a^2 + b^2)/2$, we have
\begin{align*}
\| \bcS \| & = \| \widebar{\bS} \| = \max_{k = 1, \dots, n_3} \| \xoverline{\bcS}(:,:,k) \| = \max_{k = 1, \dots, n_3} \| \bS_{(1)} \widetilde{\bPhi}_k \| \leq \max_{k = 1, \dots, n_3} \| \bS_{(1)} \| \| \widetilde{\bPhi}_k \|  \\
& = \max_{k = 1, \dots, n_3} \sqrt{\ell} \| \bS_{(1)} \| = \sqrt{\ell} \sum_{i=1}^{n_1} \sum_{j=1}^{n_2} \sum_{k=1}^{n_3} x_i \bcS_{i,j,k} z_{(j-1)n_3+k}  \\
& \leq \frac{\sqrt{\ell}}{2} \sum_{i=1}^{n_1} \sum_{j=1}^{n_2} \sum_{k=1}^{n_3} (x_i^2 + z_{(j-1)n_3+k}^2) \bcS_{i,j,k}  \\
& \leq \frac{\sqrt{\ell}}{2} (\alpha n_2 n_3 + \alpha n_1) \| \bcS \|_{\infty} = \frac{\alpha \sqrt{\ell}}{2} (n_1 + n_2 n_3) \| \bcS \|_{\infty},
\end{align*}
where the last inequality follows from the assumption that $\bcS \in \mathscr{S}_{\alpha}$.
\end{proof}
\begin{lemma}\label{lemma:infnormbound}
Let $\bcA \in \R^{n_1 \times n_2 \times n_3}$ and $\bcB \in \R^{n_4 \times n_2 \times n_3}$ be two tensors, then
\begin{align*}
\| \bcA \ast_{\bPhi} \bcB^H \|_{\infty} \leq \sqrt{\ell} \| \bcA \|_{2,\infty} \| \bcB \|_{2,\infty}.
\end{align*}
\end{lemma}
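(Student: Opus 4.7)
The plan is to reduce the entrywise bound to a norm inequality on a single mode-3 tube of $\bcC := \bcA \ast_{\bPhi} \bcB^H$, which can then be handled by a Cauchy--Schwarz argument in the transform domain. Throughout, let $i \in [n_1]$, $j \in [n_4]$, and $k \in [n_3]$.

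First, for any fixed $i,j,k$, the crude bound $|\bcC_{i,j,k}|^2 \le \sum_{k'=1}^{n_3} |\bcC_{i,j,k'}|^2 = \|\bcC(i,j,:)\|_F^2$ reduces the claim to showing
\begin{equation*}
\|\bcC(i,j,:)\|_F \le \sqrt{\ell}\, \|\bcA(i,:,:)\|_F\, \|\bcB(j,:,:)\|_F
\end{equation*}
uniformly in $(i,j)$, since taking the maxima on the right-hand side produces $\sqrt{\ell}\,\|\bcA\|_{2,\infty}\|\bcB\|_{2,\infty}$ by the definition of the $\ell_{2,\infty}$ norm.

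Next, I would pass to the transform domain. Combining $L(\bcC) = L(\bcA) \triangle L(\bcB^H)$ with Definition~\ref{def:conjtrans} gives, frontal slice by frontal slice, $\xoverline{\bcC}(i,j,k) = \xoverline{\bcA}(i,:,k)\,(\xoverline{\bcB}(j,:,k))^H$, from which the vector Cauchy--Schwarz inequality yields
\begin{equation*}
|\xoverline{\bcC}(i,j,k)|^2 \le \|\xoverline{\bcA}(i,:,k)\|_2^2 \, \|\xoverline{\bcB}(j,:,k)\|_2^2 .
\end{equation*}
Summing over $k$ and using the trivial bound $\max_k \|\xoverline{\bcB}(j,:,k)\|_2^2 \le \sum_{k'} \|\xoverline{\bcB}(j,:,k')\|_2^2 = \|\xoverline{\bcB}(j,:,:)\|_F^2$ (applied to one of the two factors before summing), we obtain
\begin{equation*}
\sum_{k=1}^{n_3} |\xoverline{\bcC}(i,j,k)|^2 \le \|\xoverline{\bcA}(i,:,:)\|_F^2 \, \|\xoverline{\bcB}(j,:,:)\|_F^2 .
\end{equation*}
Invoking the Parseval-type identity from \eqref{eqn:tensorproperty} on both sides---namely $\|\bcC(i,j,:)\|_F^2 = \tfrac{1}{\ell}\sum_k |\xoverline{\bcC}(i,j,k)|^2$ for the left-hand side, and $\|\xoverline{\bcA}(i,:,:)\|_F^2 = \ell \|\bcA(i,:,:)\|_F^2$ (and similarly for $\bcB$) for the right-hand side---yields the displayed tube bound with the factor $\sqrt{\ell}$, and substituting this back into the reduction of the first step completes the proof.

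The only point that requires care is the bookkeeping of the normalization constant $\ell$: the two Parseval identities in the Fourier domain contribute a factor $\ell^2$, which combines with the $1/\ell$ inherited when translating $\|\bcC(i,j,:)\|_F^2$ back to the spatial domain to leave exactly the stated $\sqrt{\ell}$. Simple DFT-type examples show that this constant cannot be improved in general under the sole assumption $\bPhi \bPhi^H = \ell \bI_{n_3}$.
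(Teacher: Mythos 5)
Your proposal is correct, and it follows essentially the same route as the paper's proof: bound the entry by the Frobenius norm of its mode-3 tube, pass to the transform domain, apply Cauchy--Schwarz over the contracted index $n_2$, and use $\bPhi^H\bPhi=\ell\bI_{n_3}$ to convert norms back, arriving at the same factor $\sqrt{\ell}$. The only (cosmetic) difference is organizational---the paper applies the triangle inequality over $j$ to tube-wise t-products in the spatial domain before transforming, whereas you apply Cauchy--Schwarz over $j$ within each frontal slice after transforming and then sum over $k$---and your bookkeeping of the constant $\ell$ is accurate.
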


\begin{proof}
By the definition of the transform based t-product, we have
\begin{align*}
\| \bcA \ast_{\bPhi} \bcB^H \|_{\infty} & = \max_{i,i'} \| \sum_{j=1}^{n_2} \bcA(i,j,:) \ast_{\bPhi} \bcB(i',j,:) \|_{\infty} \leq \max_{i,i'} \| \sum_{j=1}^{n_2} \bcA(i,j,:) \ast_{\bPhi} \bcB(i',j,:) \|_2  \\
& \leq \max_{i,i'} \sum_{j=1}^{n_2} \| \bcA(i,j,:) \ast_{\bPhi} \bcB(i',j,:) \|_2 = \max_{i,i'} \sum_{j=1}^{n_2} \frac{1}{\sqrt{\ell}} \| \xoverline{\bcA}(i,j,:) \triangle \xoverline{\bcB}(i',j,:) \|_2  \\
& \leq \max_{i,i'} \sum_{j=1}^{n_2} \frac{1}{\sqrt{\ell}} \| \xoverline{\bcA}(i,j,:) \|_2 \| \xoverline{\bcB}(i',j,:) \|_2 = \max_{i,i'} \sum_{j=1}^{n_2} \sqrt{\ell} \| \bcA(i,j,:) \|_2 \| \bcB(i',j,:) \|_2  \\
& \leq \max_{i,i'} \sqrt{\ell} \| \bcA(i,:,:) \|_F \| \bcB(i',:,:) \|_F \leq \sqrt{\ell} \| \bcA \|_{2,\infty} \| \bcB \|_{2,\infty}.
\end{align*}
\end{proof}
\begin{lemma}\label{lemma:2infbound}
Let $\bcA \in \R^{n_1 \times n_2 \times n_3}$ and $\bcB \in \R^{n_2 \times n_4 \times n_3}$ be two tensors. Assume the multi-rank of $\bcB$ is $\br$ and let $s_r = \sum_{k=1}^{n_3} r_k$, then
\begin{align*}
\| \bcA \ast_{\bPhi} \bcB \|_F \geq \| \bcA \|_F \bar{\sigma}_{s_r} (\bcB) \quad \mathrm{and} \quad \| \bcA \ast_{\bPhi} \bcB \|_F \leq \| \bcA \|_F \| \bcB \|.
\end{align*}
Moreover,
\begin{align*}
\| \bcA \ast_{\bPhi} \bcB \|_{2,\infty} \geq \| \bcA \|_{2,\infty} \bar{\sigma}_{s_r} (\bcB) \quad \mathrm{and} \quad \| \bcA \ast_{\bPhi} \bcB \|_{2,\infty} \leq \| \bcA \|_{2,\infty} \| \bcB \|.
\end{align*}
\end{lemma}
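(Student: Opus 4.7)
The plan is to push the entire statement into the block-diagonal frequency domain via the isometry in \eqref{eqn:tensorproperty}. There, the t-product becomes ordinary matrix multiplication of block-diagonal matrices, so that the inequalities reduce to well-known matrix norm inequalities applied block-by-block. Concretely, I use
\begin{align*}
\| \bcA \ast_{\bPhi} \bcB \|_F = \tfrac{1}{\sqrt{\ell}}\|\widebar{\bA}\,\widebar{\bB}\|_F, \qquad \|\bcA\|_F = \tfrac{1}{\sqrt{\ell}}\|\widebar{\bA}\|_F, \qquad \|\bcB\| = \|\widebar{\bB}\|,
\end{align*}
and observe that $\bar{\sigma}_{s_r}(\bcB)$ is exactly the smallest nonzero singular value of the block-diagonal $\widebar{\bB}$, since $\widebar{\bB}^{(i)}$ has rank $r_i$ and $s_r = \sum_i r_i$.

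For the two upper bounds, I will invoke the standard matrix inequality $\|\bX\bY\|_F \leq \|\bX\|_F\|\bY\|$ on $\widebar{\bA}\widebar{\bB}$ and then divide through by $\sqrt{\ell}$; this yields $\|\bcA \ast_{\bPhi} \bcB\|_F \leq \|\bcA\|_F\|\bcB\|$ directly. For the lower bound, I exploit block-diagonality: $\widebar{\bA}\widebar{\bB} = \mathtt{bdiag}\bigl(\widebar{\bA}^{(i)}\widebar{\bB}^{(i)}\bigr)$, so
\begin{align*}
\|\widebar{\bA}\,\widebar{\bB}\|_F^2 = \sum_{i=1}^{n_3} \|\widebar{\bA}^{(i)}\widebar{\bB}^{(i)}\|_F^2 = \sum_{i=1}^{n_3} \tr\bigl(\widebar{\bA}^{(i)}\widebar{\bB}^{(i)}\widebar{\bB}^{(i)H}\widebar{\bA}^{(i)H}\bigr).
\end{align*}
Bounding each trace from below by $\sigma_{r_i}^2(\widebar{\bB}^{(i)})\|\widebar{\bA}^{(i)}\|_F^2$ and then using $\sigma_{r_i}(\widebar{\bB}^{(i)}) \geq \bar{\sigma}_{s_r}(\bcB)$ for every $i$, reassembling the sum gives $\|\widebar{\bA}\,\widebar{\bB}\|_F^2 \geq \bar{\sigma}_{s_r}(\bcB)^2 \|\widebar{\bA}\|_F^2$, which translates back to the claim.

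The $\ell_{2,\infty}$ bounds follow by applying the Frobenius-norm bounds already proved to horizontal slices. Since
\begin{align*}
\| \bcA \ast_{\bPhi} \bcB \|_{2,\infty} = \max_{i \in [n_1]} \|(\bcA \ast_{\bPhi} \bcB)(i,:,:)\|_F = \max_{i \in [n_1]} \|\bcA(i,:,:) \ast_{\bPhi} \bcB\|_F,
\end{align*}
invoking the Frobenius bounds on the $1 \times n_2 \times n_3$ tensor $\bcA(i,:,:)$ yields $\|\bcA(i,:,:) \ast_{\bPhi} \bcB\|_F \leq \|\bcA(i,:,:)\|_F \|\bcB\|$ and $\|\bcA(i,:,:) \ast_{\bPhi} \bcB\|_F \geq \|\bcA(i,:,:)\|_F \bar{\sigma}_{s_r}(\bcB)$; taking the maximum over $i$ gives the $\ell_{2,\infty}$ upper and lower bounds.

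The main technical subtlety I expect is the validity of the per-block step $\|\widebar{\bA}^{(i)}\widebar{\bB}^{(i)}\|_F \geq \sigma_{r_i}(\widebar{\bB}^{(i)})\|\widebar{\bA}^{(i)}\|_F$, because this standard inequality strictly requires $\widebar{\bB}^{(i)}$ to act injectively on the rows of $\widebar{\bA}^{(i)}$ (i.e., no row of $\widebar{\bA}^{(i)}$ lies in the left null space). The clean way to handle this is to note that in every application of the lemma in the paper, $\bcA$ is built from a factor compatible with $\bcB$ (e.g., $\bcL\ast_{\bPhi}\bcG_\star^{1/2}$), so the components of $\widebar{\bA}^{(i)}$ that would lie in the left null space of $\widebar{\bB}^{(i)}$ are already zero; equivalently, one replaces $\sigma_{r_i}$ with the smallest positive singular value and interprets $\|\widebar{\bA}^{(i)}\|_F$ as measuring only the component on the support of $\widebar{\bB}^{(i)}\widebar{\bB}^{(i)H}$. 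This is the only nontrivial piece; the remaining calculations are purely bookkeeping between the two norm conventions.
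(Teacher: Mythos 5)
Your proof takes essentially the same route as the paper's: pass to the block-diagonal frequency domain via \eqref{eqn:tensorproperty} and apply the standard matrix inequalities $\|X Y\|_F \leq \|X\|_F \|Y\|$ and $\|X Y\|_F \geq \sigma_{\min}(Y)\|X\|_F$ (the paper phrases the lower bound as a trace inequality $\tr(\widebar{\bB}\widebar{\bB}^H\widebar{\bA}^H\widebar{\bA}) \geq \lambda_{\min}(\widebar{\bB}\widebar{\bB}^H)\tr(\widebar{\bA}^H\widebar{\bA})$ on the full block-diagonal matrices rather than block-by-block, but this is the same computation), then deduce the $\ell_{2,\infty}$ bounds by applying the Frobenius bounds to each horizontal slice $\bcA(i,:,:)$ and maximizing over $i$. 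The rank-deficiency subtlety you flag in the lower bound --- that $\bar{\sigma}_{s_r}(\bcB)$ is the smallest \emph{nonzero} singular value of $\widebar{\bB}$, so the per-block step needs $\widebar{\bA}^{(i)}$ to have no component in the left null space of $\widebar{\bB}^{(i)}$ --- is equally present in the paper's own proof, which silently identifies $\sigma_{\min}(\widebar{\bB})$ with $\bar{\sigma}_{s_r}(\bcB)$, so your observation is a fair caveat shared by both arguments rather than a gap specific to yours.
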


\begin{proof}
To prove the first claim, we start with the definition of Frobenius norm
\begin{align*}
\| \bcA \ast_{\bPhi} \bcB \|_F^2 & = \frac{1}{\ell} \| \xoverline{\bcA} \triangle \xoverline{\bcB} \|_F^2 = \frac{1}{\ell} \| \widebar{\bA} \widebar{\bB} \|_F^2 = \frac{1}{\ell} \tr(\widebar{\bA} \widebar{\bB} \widebar{\bB}^H \widebar{\bA}^H) = \frac{1}{\ell} \tr(\widebar{\bB} \widebar{\bB}^H \widebar{\bA}^H \widebar{\bA}).
\end{align*}
By using the inequalities for matrix trace, we have
\begin{align*}
\lambda_{\mathrm{min}} (\widebar{\bB} \widebar{\bB}^H) \tr(\widebar{\bA}^H \widebar{\bA}) \leq \tr(\widebar{\bB} \widebar{\bB}^H \widebar{\bA}^H \widebar{\bA}) \leq \lambda_{\mathrm{max}} (\widebar{\bB} \widebar{\bB}^H) \tr(\widebar{\bA}^H \widebar{\bA}),
\end{align*}
where $\lambda_{\mathrm{min}}$ and $\lambda_{\mathrm{max}}$ denote the minimum and maximum eigenvalue, respectively. This implies
\begin{align*}
\sigma_{\mathrm{min}}^2 (\widebar{\bB}) \tr(\widebar{\bA}^H \widebar{\bA}) \leq \ell \| \bcA \ast_{\bPhi} \bcB \|_F^2 \leq \sigma_{\mathrm{max}}^2 (\widebar{\bB}) \tr(\widebar{\bA}^H \widebar{\bA}),
\end{align*}
where $\sigma_{\mathrm{min}}$ and $\sigma_{\mathrm{max}}$ denote the minimum and maximum singular value, respectively. Hence, we have
\begin{align*}
& \| \bcA \ast_{\bPhi} \bcB \|_F^2 \geq \sigma_{\mathrm{min}}^2 (\widebar{\bB}) \frac{1}{\ell} \tr(\widebar{\bA}^H \widebar{\bA}) = \bar{\sigma}_{s_r}^2 (\bcB) \| \bcA \|_F^2  \\
\mathrm{and} \quad & \| \bcA \ast_{\bPhi} \bcB \|_F^2 \leq \sigma_{\mathrm{max}}^2 (\widebar{\bB}) \frac{1}{\ell} \tr(\widebar{\bA}^H \widebar{\bA}) = \| \bcB \|^2 \| \bcA \|_F^2.
\end{align*}
Taking the square root on both sides to arrive at the first claim. The second conclusion is an easy consequence of the first one as
\begin{align*}
& \| \bcA \ast_{\bPhi} \bcB \|_{2,\infty} = \max_i \| \bcA(i,:,:) \ast_{\bPhi} \bcB \|_F \geq \max_i \| \bcA(i,:,:) \|_F \bar{\sigma}_{s_r} (\bcB) = \| \bcA \|_{2,\infty} \bar{\sigma}_{s_r} (\bcB)  \\
\mathrm{and} \quad & \| \bcA \ast_{\bPhi} \bcB \|_{2,\infty} = \max_i \| \bcA(i,:,:) \ast_{\bPhi} \bcB \|_F \leq \max_i \| \bcA(i,:,:) \|_F \| \bcB \| = \| \bcA \|_{2,\infty} \| \bcB \|.
\end{align*}
\end{proof}
\begin{lemma}\label{lemma:2infprodbound}
Let $\bcA \in \R^{n_1 \times n_2 \times n_3}$ and $\bcB \in \R^{n_2 \times n_4 \times n_3}$ be two tensors, then
\begin{align*}
\| \bcA \ast_{\bPhi} \bcB \|_{2,\infty} \leq \sqrt{n_2 \ell} \| \bcA \|_{2,\infty} \| \bcB \|_{2,\infty}.
\end{align*}
\end{lemma}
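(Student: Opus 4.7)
The plan is to obtain the desired bound by chaining two elementary estimates, the first already established in Lemma~\ref{lemma:2infbound} and the second a standard comparison between the spectral, Frobenius, and $\ell_{2,\infty}$ norms of a tensor.

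First, I would invoke the second half of Lemma~\ref{lemma:2infbound}, which gives the mixed-norm inequality
\begin{align*}
\| \bcA \ast_{\bPhi} \bcB \|_{2,\infty} \leq \| \bcA \|_{2,\infty} \, \| \bcB \|.
\end{align*}
This reduces the problem to comparing the tensor spectral norm $\| \bcB \|$ with the mixed $\ell_{2,\infty}$ norm $\| \bcB \|_{2,\infty}$.

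Next, I would perform the norm comparison for $\bcB \in \R^{n_2 \times n_4 \times n_3}$ by passing to the block-diagonal representation in the transform domain. Using the identity $\| \bcB \| = \| \widebar{\bB} \|$ from the definition of the tensor spectral norm, the trivial matrix inequality $\| \widebar{\bB} \| \leq \| \widebar{\bB} \|_F$, and the isometry $\| \widebar{\bB} \|_F = \sqrt{\ell} \, \| \bcB \|_F$ recorded in \eqref{eqn:tensorproperty}, I obtain $\| \bcB \| \leq \sqrt{\ell}\, \| \bcB \|_F$. Finally, writing $\| \bcB \|_F^2 = \sum_{i=1}^{n_2} \| \bcB(i,:,:) \|_F^2 \leq n_2 \| \bcB \|_{2,\infty}^2$ and taking square roots yields $\| \bcB \| \leq \sqrt{n_2 \ell}\, \| \bcB \|_{2,\infty}$.

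Chaining the two bounds gives exactly the claim
\begin{align*}
\| \bcA \ast_{\bPhi} \bcB \|_{2,\infty} \leq \| \bcA \|_{2,\infty} \, \| \bcB \| \leq \sqrt{n_2 \ell}\, \| \bcA \|_{2,\infty} \, \| \bcB \|_{2,\infty}.
\end{align*}
Honestly, there is no real obstacle here; the proof is just bookkeeping once one notices that Lemma~\ref{lemma:2infbound} already does the heavy lifting and that the only remaining task is to convert $\| \bcB \|$ into $\| \bcB \|_{2,\infty}$, which costs the factors $\sqrt{\ell}$ (from the block-diagonal isometry) and $\sqrt{n_2}$ (from bounding the sum of squared row norms by $n_2$ times the maximum). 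The one subtlety to watch is keeping the dimension over which the $\ell_{2,\infty}$ norm of $\bcB$ is taken consistent with the summation index in the Frobenius norm, which is precisely the mode of size $n_2$ that is contracted in the t-product.
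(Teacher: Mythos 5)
Your proof is correct. It does, however, take a slightly different route from the paper's. The paper proves the bound by a direct computation in the transform domain: it writes $\| \bcA \ast_{\bPhi} \bcB \|_{2,\infty} = \max_i \frac{1}{\sqrt{\ell}} \| \xoverline{\bcA}(i,:,:) \triangle \xoverline{\bcB} \|_F$, applies Cauchy--Schwarz slice by slice to get $\| \xoverline{\bcA}(i,:,:) \triangle \xoverline{\bcB} \|_F \leq \| \xoverline{\bcA}(i,:,:) \|_F \| \xoverline{\bcB} \|_F = \ell \| \bcA(i,:,:) \|_F \| \bcB \|_F$, and then finishes with $\| \bcB \|_F \leq \sqrt{n_2} \| \bcB \|_{2,\infty}$. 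You instead chain the already-proved upper bound of Lemma~\ref{lemma:2infbound}, $\| \bcA \ast_{\bPhi} \bcB \|_{2,\infty} \leq \| \bcA \|_{2,\infty} \| \bcB \|$ (which, as you implicitly rely on, does not actually need the multi-rank hypothesis stated in that lemma --- that hypothesis is only used for the lower bounds), with the norm comparisons $\| \bcB \| \leq \sqrt{\ell} \| \bcB \|_F \leq \sqrt{n_2 \ell} \| \bcB \|_{2,\infty}$. Both arguments pass through the same intermediate quantity $\sqrt{\ell}\,\| \bcA \|_{2,\infty} \| \bcB \|_F$ and yield identical constants, so nothing is lost; your version buys brevity and reuse of existing lemmas at the cost of inserting the (potentially loose) relaxation $\| \bcB \| \leq \sqrt{\ell} \| \bcB \|_F$, whereas the paper's version is self-contained and makes the role of the contracted mode of size $n_2$ explicit in the slice-wise computation. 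Your closing remark about keeping the $\ell_{2,\infty}$ index of $\bcB$ aligned with the contracted mode is exactly the right thing to check, and it checks out.
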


\begin{proof}
Based on the definition of the $\ell_{2,\infty}$-norm, we have
\begin{align*}
\| \bcA \ast_{\bPhi} \bcB \|_{2,\infty} = \frac{1}{\sqrt{\ell}} \| \xoverline{\bcA} \triangle \xoverline{\bcB} \|_{2,\infty} = \max_i \frac{1}{\sqrt{\ell}} \| \xoverline{\bcA}(i,:,:) \triangle \xoverline{\bcB} \|_F. 
\end{align*}
Note that
\begin{align*}
\| \xoverline{\bcA}(i,:,:) \triangle \xoverline{\bcB} \|_F & = \sqrt{\sum_{k=1}^{n_3} \| \xoverline{\bcA}(i,:,k) \xoverline{\bcB}(:,:,k) \|_2^2} = \sqrt{\sum_{k=1}^{n_3} \sum_{j=1}^{n_4} (\langle \xoverline{\bcA}(i,:,k), \xoverline{\bcB}(:,j,k) \rangle)^2}  \\
& \leq \sqrt{\sum_{k=1}^{n_3} \sum_{j=1}^{n_4} \| \xoverline{\bcA}(i,:,k) \|_2^2 \| \xoverline{\bcB}(:,j,k) \|_2^2} = \sqrt{\sum_{k=1}^{n_3} \| \xoverline{\bcA}(i,:,k) \|_2^2 \| \xoverline{\bcB}(:,:,k) \|_F^2}  \\
& \leq \| \xoverline{\bcA}(i,:,:) \|_F \| \xoverline{\bcB} \|_F = \ell \| \bcA(i,:,:) \|_F \| \bcB \|_F.
\end{align*}
Thus,
\begin{align*}
\| \bcA \ast_{\bPhi} \bcB \|_{2,\infty} \leq \max_i \sqrt{\ell} \| \bcA(i,:,:) \|_F \| \bcB \|_F = \sqrt{\ell} \| \bcA \|_{2,\infty} \| \bcB \|_F \leq \sqrt{n_2 \ell} \| \bcA \|_{2,\infty} \| \bcB \|_{2,\infty}.
\end{align*}
\end{proof}
\begin{lemma}\label{lemma:eijkbounds}
Let $L$ be any invertible linear transform in \eqref{eqn:mode3prod} and it satisfies \eqref{eqn:phiconstraint}. For any $i \in [n_1]$, $j \in [n_2]$ and $k \in [n_3]$, given $\bar{\boldsymbol{\ce}}_{ijk}$ in \eqref{eqn:defeijk}, we have the following properties
\begin{align}\label{eqn:eijkbound1}
\| \bar{\boldsymbol{\ce}}_{ijk} \| \leq \sqrt{\ell},
\end{align}
\begin{align}\label{eqn:eijkbound2}
\Big\| \sum_{i=1}^{n_1} \sum_{j=1}^{n_2} \sum_{k=1}^{n_3} (\bar{\boldsymbol{\ce}}_{ijk}^H \ast_{\bPhi} \bar{\boldsymbol{\ce}}_{ijk}) \Big\| \leq n_1 \ell,
\end{align}
and
\begin{align}\label{eqn:eijkbound3}
\Big\| \sum_{i=1}^{n_1} \sum_{j=1}^{n_2} \sum_{k=1}^{n_3} (\bar{\boldsymbol{\ce}}_{ijk} \ast_{\bPhi} \bar{\boldsymbol{\ce}}_{ijk}^H) \Big\| \leq n_2 \ell.
\end{align}
\end{lemma}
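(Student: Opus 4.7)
My plan is to push everything into the transform (``Fourier'') domain using the identity $\|\bcA\| = \|\widebar{\bA}\|$ and the fact that the t-product corresponds to block-diagonal matrix multiplication in the transform domain. The first step is to compute $L(\bar{\boldsymbol{\ce}}_{ijk})$ explicitly. By the definition of $\bar{\boldsymbol{\ce}}_{ijk}$ as the standard tensor basis, it has a $1$ at position $(i,j,k)$ and $0$ elsewhere, so using the mode-3 product definition of $L$, the $k'$-th frontal slice of $L(\bar{\boldsymbol{\ce}}_{ijk})$ is the rank-one matrix
\begin{align*}
\widebar{\bar{\be}}^{(k')}_{ijk} \;=\; \bPhi_{k',k}\, \be_i \be_j^T \;\in\; \C^{n_1 \times n_2},
\end{align*}
where $\be_i, \be_j$ are standard basis vectors of the appropriate dimensions.

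For the first bound, I would simply observe that $\|\bar{\boldsymbol{\ce}}_{ijk}\| = \|\widebar{\bar{\be}}_{ijk}\|$ equals the maximum spectral norm over its frontal-slice blocks, which is $\max_{k'} |\bPhi_{k',k}|$. Since $\bPhi \bPhi^H = \ell \bI_{n_3}$, the $k$-th column of $\bPhi$ has squared norm $\ell$, hence every entry satisfies $|\bPhi_{k',k}| \leq \sqrt{\ell}$, yielding \eqref{eqn:eijkbound1}.

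For \eqref{eqn:eijkbound2} and \eqref{eqn:eijkbound3}, the plan is to transform the sum into the spectral domain and simplify block-by-block. Using the transform-domain multiplication rule, the $k'$-th frontal slice of $L(\bar{\boldsymbol{\ce}}_{ijk}^H \ast_\bPhi \bar{\boldsymbol{\ce}}_{ijk})$ equals
\begin{align*}
(\widebar{\bar{\be}}^{(k')}_{ijk})^H \widebar{\bar{\be}}^{(k')}_{ijk} \;=\; |\bPhi_{k',k}|^2\, \be_j \be_i^T \be_i \be_j^T \;=\; |\bPhi_{k',k}|^2\, \be_j \be_j^T,
\end{align*}
and summing over $i,j,k$ gives $\sum_{i,j,k} |\bPhi_{k',k}|^2 \be_j \be_j^T = n_1 \cdot \bI_{n_2} \cdot \sum_{k} |\bPhi_{k',k}|^2 = n_1 \ell \, \bI_{n_2}$, where I use that the $k'$-th diagonal entry of $\bPhi \bPhi^H$ equals $\ell$. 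Since every frontal slice of the transform is the same scalar multiple of the identity, the block-diagonal matrix has spectral norm exactly $n_1 \ell$, giving \eqref{eqn:eijkbound2}. The argument for \eqref{eqn:eijkbound3} is entirely symmetric: the $k'$-th frontal slice becomes $|\bPhi_{k',k}|^2 \be_i \be_i^T$, and summing produces $n_2 \ell \, \bI_{n_1}$ in every frontal slice.

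There is no substantive obstacle here; the whole proof is really a careful unpacking of the definitions. The only point requiring caution is making sure the correct dimension gets ``consumed'' in each outer product (giving $n_1$ versus $n_2$) and invoking $\bPhi \bPhi^H = \ell \bI_{n_3}$ at exactly the right place to collapse the sum over $k$. Once the explicit formula $\widebar{\bar{\be}}^{(k')}_{ijk} = \bPhi_{k',k}\, \be_i \be_j^T$ is in hand, all three bounds follow by direct calculation.
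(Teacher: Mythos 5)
Your proposal is correct and follows essentially the same route as the paper: pass to the transform domain, note that the $k'$-th frontal slice of $L(\bar{\boldsymbol{\ce}}_{ijk})$ is $\bPhi_{k',k}\,\be_i\be_j^T$, and use $\bPhi\bPhi^H = \ell\bI_{n_3}$ to get $\max_{k'}|\bPhi_{k',k}|\le\sqrt{\ell}$ for the first bound and $\sum_k|\bPhi_{k',k}|^2=\ell$ to collapse the sums into $n_1\ell\,\bI_{n_2}$ (resp.\ $n_2\ell\,\bI_{n_1}$) for the other two. The explicit rank-one outer-product notation is just a cleaner rendering of the paper's mode-3-tube bookkeeping; no gap.
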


\begin{proof}
To simplify the notation, we denote $\bcA = \bar{\boldsymbol{\ce}}_{ijk}$ in this proof. By the definition of the tensor spectral norm, we have
\begin{align*}
\| \bcA \| = \| \widebar{\bA} \| = \max_{k' = 1, \dots, n_3} \| \widebar{\bA}^{(k')} \|.
\end{align*}
Since $\bcA$ is the unit tensor, the $(i,j)$-th mode-3 tube of $\xoverline{\bcA} = L(\bar{\boldsymbol{\ce}}_{ijk})$ is the only nonzero mode-3 tube and its entries are the same as the $k$-th column of $\bPhi$, i.e., $\xoverline{\bcA}_{i,j,k'} = \bPhi_{k',k}$. Then $\| \widebar{\bA}^{(k')} \| = |\bPhi_{k',k}|$ and we have
\begin{align*}
\| \bcA \| = \max_{k' = 1, \dots, n_3} \| \widebar{\bA}^{(k')} \| = \max_{k' = 1, \dots, n_3} |\bPhi_{k',k}| \leq \sqrt{\ell},
\end{align*}
where the inequality comes from $|\bPhi_{k',k}| = \sqrt{|\bPhi_{k',k}|^2} \leq \sqrt{\sum_{k'=1}^{n_3} |\bPhi_{k',k}|^2} = \sqrt{\ell}$. Therefore, \eqref{eqn:eijkbound1} is verified. To prove \eqref{eqn:eijkbound2}, let $\bcB = \bcA^H \ast_{\bPhi} \bcA$, then $\xoverline{\bcB} = \xoverline{\bcA}^H \triangle \xoverline{\bcA}$, which means that the $(j,j)$-th mode-3 tube of $\xoverline{\bcB}$ is the only nonzero mode-3 tube and its $k'$-th entry is $\xoverline{\bcB}_{j,j,k'} = |\bPhi_{k',k}|^2$. Thus the $(j,j)$-th mode-3 tube of
\begin{align*}
\sum_{k=1}^{n_3} L(\bar{\boldsymbol{\ce}}_{ijk}^H \ast_{\bPhi} \bar{\boldsymbol{\ce}}_{ijk}) = \sum_{k=1}^{n_3} \xoverline{\bcB}
\end{align*}
is the only nonzero mode-3 tube and all of its entries equal $\ell$. Further,
\begin{align*}
\sum_{j=1}^{n_2} \sum_{k=1}^{n_3} L(\bar{\boldsymbol{\ce}}_{ijk}^H \ast_{\bPhi} \bar{\boldsymbol{\ce}}_{ijk}) = \sum_{j=1}^{n_2} \sum_{k=1}^{n_3} \xoverline{\bcB}
\end{align*}
is an f-diagonal tensor and all the entries on the diagonal equal $\ell$. Finally,
\begin{align*}
\sum_{i=1}^{n_1} \sum_{j=1}^{n_2} \sum_{k=1}^{n_3} L(\bar{\boldsymbol{\ce}}_{ijk}^H \ast_{\bPhi} \bar{\boldsymbol{\ce}}_{ijk}) = \sum_{i=1}^{n_1} \sum_{j=1}^{n_2} \sum_{k=1}^{n_3} \xoverline{\bcB}
\end{align*}
is also f-diagonal and all of its entries on the $(j,j)$-th mode-3 tube equal $n_1 \ell$, $j \in [n_2]$. It is obvious that each frontal slice of such a tensor has the spectral norm $n_1 \ell$. Therefore, we have \eqref{eqn:eijkbound2}. We can prove \eqref{eqn:eijkbound3} in a similar way.
\end{proof}
\begin{lemma}\label{lemma:projeijk}
Given $\bcP_{\bT}$ in \eqref{eqn:defprojT} and assume that the tensor incoherence conditions \eqref{eqn:incoherence} hold. We have
\begin{align*}
\| \bcP_{\bT} (\bar{\boldsymbol{\ce}}_{ijk}) \|_F^2 \leq \frac{\mu s_r (n_1 + n_2) }{n_1 n_2 n_3}.
\end{align*}
\end{lemma}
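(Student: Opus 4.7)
The plan is to mimic the classical matrix argument for bounding the projection of a standard basis element onto the tangent space by a sum of leverage scores, but carried through the t-product algebra and tracked carefully in the transform domain. First, I decompose
\begin{equation*}
\bcP_{\bT}(\bar{\boldsymbol{\ce}}_{ijk}) = \bcU \ast_{\bPhi} \bcU^H \ast_{\bPhi} \bar{\boldsymbol{\ce}}_{ijk} + (\bcI_{n_1} - \bcU \ast_{\bPhi} \bcU^H) \ast_{\bPhi} \bar{\boldsymbol{\ce}}_{ijk} \ast_{\bPhi} \bcV \ast_{\bPhi} \bcV^H,
\end{equation*}
and observe that $\bcU \ast_{\bPhi} \bcU^H \ast_{\bPhi} (\bcI_{n_1} - \bcU \ast_{\bPhi} \bcU^H) = \bzero$ because $\bcU^H \ast_{\bPhi} \bcU = \bcI_r$; hence the two summands are orthogonal under $\langle \cdot, \cdot \rangle$. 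Using the same identity to reduce $\| \bcU \ast_{\bPhi} \bcU^H \ast_{\bPhi} \bcA \|_F = \| \bcU^H \ast_{\bPhi} \bcA \|_F$ and analogously for $\bcV$, this collapses the task to bounding
\begin{equation*}
\| \bcP_{\bT}(\bar{\boldsymbol{\ce}}_{ijk}) \|_F^2 \leq \| \bcU^H \ast_{\bPhi} \bar{\boldsymbol{\ce}}_{ijk} \|_F^2 + \| \bar{\boldsymbol{\ce}}_{ijk} \ast_{\bPhi} \bcV \|_F^2.
\end{equation*}

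Next, I move into the spectral domain. From $\xoverline{\bar{\boldsymbol{\ce}}}_{ijk} = \bar{\boldsymbol{\ce}}_{ijk} \times_3 \bPhi$, the $k'$-th frontal slice of $\xoverline{\bar{\boldsymbol{\ce}}}_{ijk}$ has a single nonzero entry, equal to $\bPhi_{k',k}$, at position $(i,j)$. Unfolding via the facewise product and using $\| \cdot \|_F^2 = \tfrac{1}{\ell} \| \widebar{\cdot} \|_F^2$ from \eqref{eqn:tensorproperty} yields
\begin{equation*}
\| \bcU^H \ast_{\bPhi} \bar{\boldsymbol{\ce}}_{ijk} \|_F^2 = \frac{1}{\ell} \sum_{k'=1}^{n_3} |\bPhi_{k',k}|^2 \| (\widebar{\bU}^{(k')})_{i,\cdot} \|_2^2.
\end{equation*}
The very same computation applied to the column basis $\mathring{\boldsymbol{\ce}}_i$, whose transform has a constant $1$ along the $(i,1)$-th mode-3 tube, gives $\| \bcU^H \ast_{\bPhi} \mathring{\boldsymbol{\ce}}_i \|_F^2 = \tfrac{1}{\ell} \sum_{k'} \| (\widebar{\bU}^{(k')})_{i,\cdot} \|_2^2$, and symmetric identities hold for $\bcV$ and $\mathring{\boldsymbol{\ce}}_j$.

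The final step invokes the orthogonality constraint $\bPhi^H \bPhi = \ell \bI_{n_3}$ from \eqref{eqn:phiconstraint}, which forces $\sum_{k'} |\bPhi_{k',k}|^2 = \ell$ and hence $|\bPhi_{k',k}|^2 \leq \ell$ uniformly in $k'$. Substituting this bound into the above expression, the factor $\tfrac{1}{\ell}$ cancels and the sum collapses to $\ell \| \bcU^H \ast_{\bPhi} \mathring{\boldsymbol{\ce}}_i \|_F^2$, which the incoherence assumption \eqref{eqn:incoherence} controls by $\mu s_r / (n_1 n_3)$. An identical argument gives $\| \bar{\boldsymbol{\ce}}_{ijk} \ast_{\bPhi} \bcV \|_F^2 \leq \mu s_r / (n_2 n_3)$, and summing the two bounds produces exactly $\mu s_r (n_1 + n_2) / (n_1 n_2 n_3)$.

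The only place the argument departs from the matrix analogue is the second step: unlike the rank-one matrix basis $\be_i \be_j^H$, the unit tensor $\bar{\boldsymbol{\ce}}_{ijk}$ does not concentrate on a single face after applying $L$; it spreads across all $n_3$ faces with weights $\{ \bPhi_{k',k} \}_{k' \in [n_3]}$. The main obstacle is to show this spread is benign, which I expect to handle through the uniform bound $|\bPhi_{k',k}|^2 \leq \ell$---precisely where the scaled-unitary assumption on $\bPhi$ is used. Once this spectral-weighting step is in place, the remainder is a routine translation of the classical matrix calculation, with $\ast_{\bPhi}$ substituted for ordinary matrix multiplication.
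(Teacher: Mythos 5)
Your proof is correct and follows essentially the same route as the paper: both reduce the bound to $\| \bcU^H \ast_{\bPhi} \bar{\boldsymbol{\ce}}_{ijk} \|_F^2 + \| \bar{\boldsymbol{\ce}}_{ijk} \ast_{\bPhi} \bcV \|_F^2$ and then absorb the spread of $L(\bar{\boldsymbol{\ce}}_{ijk})$ across the $n_3$ faces into a single factor of $\ell$, which the incoherence condition cancels. The only cosmetic difference is that you drop the projection factors after an orthogonal splitting of $\bcP_{\bT}$, whereas the paper uses self-adjointness of $\bcP_{\bT}$ and discards the negative cross term $-\| \bcU^H \ast_{\bPhi} \bar{\boldsymbol{\ce}}_{ijk} \ast_{\bPhi} \bcV \|_F^2$.
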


\begin{proof}
Note that $\bcP_{\bT}$ is self-adjoint. So we have
\begin{align*}
& \| \bcP_{\bT} (\bar{\boldsymbol{\ce}}_{ijk}) \|_F^2  \\
= & \langle \bcP_{\bT} (\bar{\boldsymbol{\ce}}_{ijk}), \bar{\boldsymbol{\ce}}_{ijk} \rangle  \\
= & \langle \bcU \ast_{\bPhi} \bcU^H \ast_{\bPhi} \bar{\boldsymbol{\ce}}_{ijk} + \bar{\boldsymbol{\ce}}_{ijk} \ast_{\bPhi} \bcV \ast_{\bPhi} \bcV^H , \bar{\boldsymbol{\ce}}_{ijk} \rangle + \langle \bcU \ast_{\bPhi} \bcU^H \ast_{\bPhi} \bar{\boldsymbol{\ce}}_{ijk} \ast_{\bPhi} \bcV \ast_{\bPhi} \bcV^H , \bar{\boldsymbol{\ce}}_{ijk} \rangle.
\end{align*}
Note that $\bar{\boldsymbol{\ce}}_{ijk}$ is the unit tensor with the ($i,j,k$)-th entry equaling to 1 and the rest equaling to 0. Hence, the $(i,j)$-th mode-3 tube of $\bar{\boldsymbol{\ce}}_{ijk}$ equals $L(\dot{\boldsymbol{\ce}}_k)$ and the $(i,j)$-th mode-3 tube of $L(\bar{\boldsymbol{\ce}}_{ijk})$ equals $L(L(\dot{\boldsymbol{\ce}}_k))$. Then the only nonzero lateral slice of $L(\bar{\boldsymbol{\ce}}_{ijk})$ is the $j$-th lateral slice and it is equal to $L(\mathring{\boldsymbol{\ce}}_i) \triangle L(L(\dot{\boldsymbol{\ce}}_k))$. This implies that
\begin{align*}
\langle \bcU \ast_{\bPhi} \bcU^H \ast_{\bPhi} \bar{\boldsymbol{\ce}}_{ijk}, \bar{\boldsymbol{\ce}}_{ijk} \rangle = & \frac{1}{\ell} \| \bar{\bcU}^H \triangle L(\bar{\boldsymbol{\ce}}_{ijk}) \|_F^2 = \frac{1}{\ell} \| \bar{\bcU}^H \triangle L(\mathring{\boldsymbol{\ce}}_i) \triangle L(L(\dot{\boldsymbol{\ce}}_k)) \|_F^2  \\
= & \| \bcU^H \ast_{\bPhi} \mathring{\boldsymbol{\ce}}_i \ast_{\bPhi} L(\dot{\boldsymbol{\ce}}_k) \|_F^2.
\end{align*}
In the meanwhile, we have
\begin{align*}
\| \bcU^H \ast_{\bPhi} \mathring{\boldsymbol{\ce}}_i \ast_{\bPhi} L(\dot{\boldsymbol{\ce}}_k) \|_F & = \frac{1}{\sqrt{\ell}} \| L(\bcU^H) \triangle L(\mathring{\boldsymbol{\ce}}_i) \triangle L(L(\dot{\boldsymbol{\ce}}_k)) \|_F \leq \frac{1}{\sqrt{\ell}} \| L(\bcU^H) \triangle L(\mathring{\boldsymbol{\ce}}_i) \|_F \| L(L(\dot{\boldsymbol{\ce}}_k)) \|_F  \\
& = \sqrt{\ell} \| \bcU^H \ast_{\bPhi} \mathring{\boldsymbol{\ce}}_i \|_F \leq \sqrt{\frac{\mu s_r}{n_1 n_3}},
\end{align*}
where we use $\| L(L(\dot{\boldsymbol{\ce}}_k)) \|_F = \sqrt{\ell} \| L(\dot{\boldsymbol{\ce}}_k) \|_F = \sqrt{\ell}$. Similarly, we can also have $\| \bcV^H \ast_{\bPhi} \mathring{\boldsymbol{\ce}}_j \ast_{\bPhi} L(\dot{\boldsymbol{\ce}}_k) \|_F \leq \sqrt{\frac{\mu s_r}{n_2 n_3}}$. Therefore, we have
\begin{align*}
\| \bcP_{\bT} (\bar{\boldsymbol{\ce}}_{ijk}) \|_F^2 = & \| \bcU^H \ast_{\bPhi} \mathring{\boldsymbol{\ce}}_i \ast_{\bPhi} L(\dot{\boldsymbol{\ce}}_k) \|_F^2 + \| \bcV^H \ast_{\bPhi} \mathring{\boldsymbol{\ce}}_j \ast_{\bPhi} L(\dot{\boldsymbol{\ce}}_k) \|_F^2 - \| \bcU^H \ast_{\bPhi} \bar{\boldsymbol{\ce}}_{ijk} \ast_{\bPhi} \bcV \|_F^2  \\
\leq & \| \bcU^H \ast_{\bPhi} \mathring{\boldsymbol{\ce}}_i \ast_{\bPhi} L(\dot{\boldsymbol{\ce}}_k) \|_F^2 + \| \bcV^H \ast_{\bPhi} \mathring{\boldsymbol{\ce}}_j \ast_{\bPhi} L(\dot{\boldsymbol{\ce}}_k) \|_F^2  \\
\leq & \frac{\mu s_r (n_1 + n_2) }{n_1 n_2 n_3}.
\end{align*}
The proof is completed.
\end{proof}
Using the same proof technique in \citet[Lemma 4.2]{LuFCLLY.PAMI2020}, we have the following result.
\begin{lemma}\label{lemma:projerrorbound}
Suppose $\bOmega \sim \mathrm{Ber}(p)$, and $\bT$ is defined in \eqref{eqn:defT}. Then with high probability,
\begin{align*}
\| \bcP_{\bT} - \frac{1}{p} \bcP_{\bT} \bcP_{\bOmega} \bcP_{\bT} \| \leq \epsilon,
\end{align*}
provided that $p \geq c \epsilon^{-2} \mu s_r (n_1 + n_2) \log( (n_1 \vee n_2) n_3 ) / (n_1 n_2 n_3)$ for some numerical constant $c > 0$.
\end{lemma}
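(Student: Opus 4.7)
The plan is to express $\bcP_{\bT} - \tfrac{1}{p}\bcP_{\bT}\bcP_{\bOmega}\bcP_{\bT}$ as a sum of independent, mean-zero, self-adjoint, rank-one random operators on the tensor space and to invoke the operator Bernstein inequality, mirroring the classical matrix-completion argument of Cand\`es--Recht, here lifted to the transform-based t-SVD setting. Since $\{\bar{\boldsymbol{\ce}}_{ijk}\}_{i,j,k}$ is an orthonormal basis of $\R^{n_1\times n_2\times n_3}$ under the tensor inner product and $\bcP_{\bT}$ is a self-adjoint projection, for any $\bcA$ one has
\begin{align*}
\bcP_{\bT}(\bcA) = \sum_{i,j,k}\langle \bcA,\bcP_{\bT}(\bar{\boldsymbol{\ce}}_{ijk})\rangle\,\bcP_{\bT}(\bar{\boldsymbol{\ce}}_{ijk}), \qquad \tfrac{1}{p}\bcP_{\bT}\bcP_{\bOmega}\bcP_{\bT}(\bcA) = \sum_{i,j,k}\tfrac{\delta_{ijk}}{p}\langle \bcA,\bcP_{\bT}(\bar{\boldsymbol{\ce}}_{ijk})\rangle\,\bcP_{\bT}(\bar{\boldsymbol{\ce}}_{ijk}),
\end{align*}
so their difference is $\sum_{i,j,k}\bcT_{ijk}$ with the rank-one, mean-zero summand $\bcT_{ijk}(\bcA) \coloneq (1-\delta_{ijk}/p)\langle \bcA,\bcP_{\bT}(\bar{\boldsymbol{\ce}}_{ijk})\rangle\bcP_{\bT}(\bar{\boldsymbol{\ce}}_{ijk})$.

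Both Bernstein ingredients then collapse to the single tensor input from Lemma~\ref{lemma:projeijk}. For the uniform bound, using $|1-\delta_{ijk}/p|\le 1/p$ gives
\begin{align*}
\|\bcT_{ijk}\| \;\le\; \tfrac{1}{p}\,\|\bcP_{\bT}(\bar{\boldsymbol{\ce}}_{ijk})\|_F^{2} \;\le\; \tfrac{\mu s_r(n_1+n_2)}{p\,n_1 n_2 n_3}.
\end{align*}
For the matrix variance, I compute $\bcT_{ijk}^{2}(\bcA)=(1-\delta_{ijk}/p)^{2}\|\bcP_{\bT}(\bar{\boldsymbol{\ce}}_{ijk})\|_F^{2}\langle \bcA,\bcP_{\bT}(\bar{\boldsymbol{\ce}}_{ijk})\rangle\bcP_{\bT}(\bar{\boldsymbol{\ce}}_{ijk})$, note $\mathbb{E}[(1-\delta_{ijk}/p)^{2}]=(1-p)/p\le 1/p$, and combine with the completeness identity $\sum_{i,j,k}\langle\cdot,\bcP_{\bT}(\bar{\boldsymbol{\ce}}_{ijk})\rangle\bcP_{\bT}(\bar{\boldsymbol{\ce}}_{ijk})=\bcP_{\bT}$ (of operator norm $\le 1$) to obtain
\begin{align*}
\Big\|\sum_{i,j,k}\mathbb{E}[\bcT_{ijk}^{2}]\Big\| \;\le\; \tfrac{1}{p}\,\max_{i,j,k}\|\bcP_{\bT}(\bar{\boldsymbol{\ce}}_{ijk})\|_F^{2} \;\le\; \tfrac{\mu s_r(n_1+n_2)}{p\,n_1 n_2 n_3}.
\end{align*}
Applying the operator Bernstein inequality, with ambient dimension bounded by $2n_1 n_2 n_3$ via the block-diagonal representation $\mathtt{bdiag}\circ L(\cdot)$, and choosing $p\ge c\epsilon^{-2}\mu s_r(n_1+n_2)\log((n_1\vee n_2) n_3)/(n_1 n_2 n_3)$ with a sufficiently large constant $c$ drives the failure probability below $((n_1\vee n_2)n_3)^{-c'}$, which is the advertised high-probability guarantee.

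The Bernstein machinery itself is entirely routine; the main obstacle is the tensor-specific bookkeeping, namely (i) verifying the completeness identity $\sum_{i,j,k}\langle\cdot,\bcP_{\bT}(\bar{\boldsymbol{\ce}}_{ijk})\rangle\bcP_{\bT}(\bar{\boldsymbol{\ce}}_{ijk})=\bcP_{\bT}$ under the transform $L$ via the Parseval-type relations \eqref{eqn:tensorproperty}, so the variance proxy collapses to the clean scalar expression above, and (ii) threading the normalization constant $\ell$ appearing in $\bPhi\bPhi^H=\ell\bI_{n_3}$ consistently through inner products, operator norms, and the per-tube incoherence bound of Lemma~\ref{lemma:projeijk}, so that it never contaminates the final sample complexity. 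Once these checks are in place, the argument is a direct tensor analogue of \cite[Lemma 4.2]{LuFCLLY.PAMI2020}.
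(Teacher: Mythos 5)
Your proposal is correct and follows exactly the route the paper intends: the paper gives no independent proof of this lemma but defers to the technique of \cite[Lemma 4.2]{LuFCLLY.PAMI2020}, which is precisely your decomposition of $\bcP_{\bT} - \tfrac{1}{p}\bcP_{\bT}\bcP_{\bOmega}\bcP_{\bT}$ into the independent, mean-zero, self-adjoint rank-one summands $\bcT_{ijk}$, with both the uniform bound and the variance proxy controlled by $\max_{i,j,k}\|\bcP_{\bT}(\bar{\boldsymbol{\ce}}_{ijk})\|_F^2$ from Lemma~\ref{lemma:projeijk} and the conclusion drawn from the operator Bernstein inequality of Lemma~\ref{lemma:Bernstein}. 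Your two bookkeeping checks (the completeness identity for $\{\bar{\boldsymbol{\ce}}_{ijk}\}$ under the standard tensor inner product, and the fact that $\ell$ cancels out of the final bound) are exactly the points that need verifying, and both go through as you describe.
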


\subsection{Distance Metric}

\begin{lemma}\label{lemma:Qexistence}
Fix any factor tensor $\bcF = \begin{bmatrix} \bcL \\ \bcR \end{bmatrix} \in \R^{(n_1+n_2) \times r \times n_3}$. Suppose that
\begin{align}\label{eqn:Qexistencecondition}
\dist(\bcF, \bcF_{\star}) < \frac{1}{\sqrt{\ell}} \bar{\sigma}_{s_r} (\bcX_{\star}),
\end{align}
then the minimizer of the above minimization problem is attained at some $\bcQ \in \mathrm{GL}(r)$, i.e., the optimal alignment tensor $\bcQ$ between $\bcF$ and $\bcF_{\star}$ exists. 
\end{lemma}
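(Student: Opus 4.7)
The plan is to prove that the continuous map $h: \mathrm{GL}(r) \to \R_{\ge 0}$ given by
\begin{align*}
h(\bcQ) = \| (\bcL \ast_{\bPhi} \bcQ - \bcL_{\star}) \ast_{\bPhi} \bcG_{\star}^{\frac{1}{2}} \|_F^2 + \| (\bcR \ast_{\bPhi} \bcQ^{-H} - \bcR_{\star}) \ast_{\bPhi} \bcG_{\star}^{\frac{1}{2}} \|_F^2
\end{align*}
is coercive on $\mathrm{GL}(r)$ in the regime singled out by \eqref{eqn:Qexistencecondition}, so that an appropriate sublevel set of $h$ is a compact subset of $\mathrm{GL}(r)$ and the extreme value theorem delivers a minimizer.

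First I would transport the problem to the spectral/block-diagonal domain. Setting $\widebar{\bQ} = \mathtt{bdiag}(\xoverline{\bcQ})$ and using \eqref{eqn:tensorproperty}, the objective decouples frontal-slice by frontal-slice as
\begin{align*}
\ell \cdot h(\bcQ) = \sum_{i=1}^{n_3} \Big( \big\| (\widebar{\bL}^{(i)} \widebar{\bQ}^{(i)} - \widebar{\bL}_\star^{(i)}) (\widebar{\bG}_\star^{(i)})^{\frac{1}{2}} \big\|_F^2 + \big\| (\widebar{\bR}^{(i)} (\widebar{\bQ}^{(i)})^{-H} - \widebar{\bR}_\star^{(i)}) (\widebar{\bG}_\star^{(i)})^{\frac{1}{2}} \big\|_F^2 \Big),
\end{align*}
and $\bcQ \in \mathrm{GL}(r)$ is equivalent to each block $\widebar{\bQ}^{(i)} \in \C^{r \times r}$ being invertible. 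It therefore suffices to solve $n_3$ independent matrix-valued existence problems and then assemble the answers via $L^{-1}$.

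The per-slice subproblems are instances of the matrix setting treated in \cite{TongMC.JMLR2021}. Given a minimizing sequence $\{\bM_n\}$ in $\mathrm{GL}(r,\C)$, the orbit invariance $\widebar{\bL}^{(i)} \bM_n (\widebar{\bR}^{(i)} \bM_n^{-H})^H = \widebar{\bL}^{(i)} (\widebar{\bR}^{(i)})^H$ combined with the uniform boundedness of the two image factors $\widebar{\bL}^{(i)} \bM_n (\widebar{\bG}_\star^{(i)})^{\frac{1}{2}}$ and $\widebar{\bR}^{(i)} \bM_n^{-H} (\widebar{\bG}_\star^{(i)})^{\frac{1}{2}}$ along the sequence converts, via the fact that $\bar{\sigma}_{s_r}^{\frac{1}{2}}(\bcX_\star)$ is the smallest nonzero singular value of $(\widebar{\bG}_\star^{(i)})^{\frac{1}{2}}$, into an upper bound on $\|\bM_n\|$ and a strictly positive lower bound on $\sigma_{\min}(\bM_n)$. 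The role of the strict inequality in \eqref{eqn:Qexistencecondition} is precisely to guarantee that these bounds place $\{\bM_n\}$ inside a compact subset of $\mathrm{GL}(r,\C)$ bounded away from the singular variety; continuity of $h_i$ then supplies a subsequential limit attaining the infimum.

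The main obstacle is the slices with $r_i < r$, for which $(\widebar{\bG}_\star^{(i)})^{\frac{1}{2}}$ is rank-deficient and $h_i$ is invariant under right multiplication by any matrix acting trivially on the range of $(\widebar{\bG}_\star^{(i)})^{\frac{1}{2}}$; the compactness argument then only pins down $\bM_n$ on that $r_i$-dimensional range and leaves the kernel component free. I would resolve this by decomposing $\C^r$ as the direct sum of the range and kernel of $(\widebar{\bG}_\star^{(i)})^{\frac{1}{2}}$, running the compactness argument on the range block, and fixing the kernel block to be the identity on that subspace so the assembled matrix lies in $\mathrm{GL}(r,\C)$; the objective is blind to this latter choice, so the infimum is still attained. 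Reassembling the $n_3$ blocks via $L^{-1}(\cdot)$ produces the required $\bcQ \in \mathrm{GL}(r)$.
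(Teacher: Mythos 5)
Your underlying strategy---use the strict inequality \eqref{eqn:Qexistencecondition} to confine a minimizing sequence to a compact subset of the invertible tensors and then invoke continuity---is the same one the paper uses (it establishes $\| (\widebar{\bL} \widebar{\widetilde{\bQ}} - \widebar{\bL}_{\star}) \widebar{\bG}_{\star}^{-\frac{1}{2}} \| \vee \| (\widebar{\bR} \widebar{\widetilde{\bQ}}^{-H} - \widebar{\bR}_{\star}) \widebar{\bG}_{\star}^{-\frac{1}{2}} \| \leq \epsilon < 1$ for a near-minimizer and then defers to \cite[Lemma 22]{TongMC.JMLR2021}). However, your reduction to ``$n_3$ \emph{independent} matrix-valued existence problems'' has a genuine gap: $\mathrm{GL}(r)$ is by definition the set of invertible tensors in $\R^{r\times r\times n_3}$, so the feasible set consists of \emph{real} tensors $\bcQ$, and the blocks $\widebar{\bQ}^{(1)},\dots,\widebar{\bQ}^{(n_3)}$ do not range freely over $(\mathrm{GL}(r,\C))^{n_3}$---they are constrained to the real-linear subspace $L(\R^{r\times r\times n_3})$ (for the DFT this is the conjugate-symmetry coupling between slices $k$ and $n_3-k+2$). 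Minimizing each slice separately therefore produces blocks that in general cannot be assembled via $L^{-1}$ into a real tensor, and the decoupled infimum is only a lower bound on $\dist^2(\bcF,\bcF_{\star})$. The repair is to keep the problem coupled: take a minimizing sequence $\{\bcQ_n\}\subset\mathrm{GL}(r)$ for the original objective, use your per-slice estimates only to show that $\|\widebar{\bQ}_n\|$ is bounded above and $\sigma_{\mathrm{min}}(\widebar{\bQ}_n)$ is bounded below uniformly in $n$, and extract a convergent subsequence inside the closed set $\R^{r\times r\times n_3}$; the limit is then automatically real and invertible.

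A second, smaller gap concerns the rank-deficient slices. Writing (after a permutation) $\widebar{\bQ}^{(i)}=[\bQ_1,\bQ_2]$ with $\bQ_1$ the $r_i$ columns hit by the nonzero part of $\widebar{\bG}_{\star}^{(i)}$, the objective depends not only on $\bQ_1$ but also on the first $r_i$ rows $\bP_1$ of $(\widebar{\bQ}^{(i)})^{-1}$, through the term involving $(\widebar{\bQ}^{(i)})^{-H}$. Changing the kernel block $\bQ_2$ while keeping $\bQ_1$ fixed changes $(\widebar{\bQ}^{(i)})^{-1}$, hence changes $\bP_1$, so the objective is \emph{not} blind to that choice; setting $\bQ_2$ to the identity on the kernel will in general alter the value. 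The correct completion is to track both $\bQ_1^{(n)}$ and $\bP_1^{(n)}$ along the minimizing sequence (both are bounded by your estimates, and the identity $\bP_1^{(n)}\bQ_1^{(n)}=\bI_{r_i}$ passes to the limit, forcing both limits to have full rank) and then choose $\bQ_2$ to be a basis of $\ker\bP_1$: this makes $[\bQ_1,\bQ_2]$ invertible with an inverse whose first $r_i$ rows are exactly $\bP_1$, so the limiting objective value is attained. With these two repairs your argument goes through and coincides in substance with the paper's.
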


\begin{proof}
Based on the definition of infimum and condition \eqref{eqn:Qexistencecondition}, there must exist a tensor $\widetilde{\bcQ} \in \mathrm{GL}(r)$ such that
\begin{align*}
& \sqrt{\| (\widebar{\bL} \widebar{\widetilde{\bQ}} - \widebar{\bL}_{\star}) \widebar{\bG}_{\star}^{-\frac{1}{2}} \widebar{\bG}_{\star} \|_F^2 + \| (\widebar{\bR} \widebar{\widetilde{\bQ}}^{-H} - \widebar{\bR}_{\star}) \widebar{\bG}_{\star}^{-\frac{1}{2}} \widebar{\bG}_{\star} \|_F^2}  \\
= & \sqrt{\ell} \sqrt{\| (\bcL \ast_{\bPhi} \widetilde{\bcQ} - \bcL_{\star}) \ast_{\bPhi} \bcG_{\star}^{\frac{1}{2}} \|_F^2 + \| (\bcR \ast_{\bPhi} \widetilde{\bcQ}^{-H} - \bcR_{\star}) \ast_{\bPhi} \bcG_{\star}^{\frac{1}{2}} \|_F^2}  \\
\leq & \epsilon \bar{\sigma}_{s_r} (\bcX_{\star})
\end{align*}
together with the relation $\| \bA \bB \|_F \geq \| \bA \|_F \sigma_{\mathrm{min}} (\bB)$ tells that
\begin{align*}
\sqrt{\| (\widebar{\bL} \widebar{\widetilde{\bQ}} - \widebar{\bL}_{\star}) \widebar{\bG}_{\star}^{-\frac{1}{2}} \|_F^2 + \| (\widebar{\bR} \widebar{\widetilde{\bQ}}^{-H} - \widebar{\bR}_{\star}) \widebar{\bG}_{\star}^{-\frac{1}{2}} \|_F^2} \bar{\sigma}_{s_r} (\bcX_{\star}) \leq \epsilon \bar{\sigma}_{s_r} (\bcX_{\star})
\end{align*}
for some $\epsilon$ obeying $0 < \epsilon < 1$. It further implies that
\begin{align*}
\| (\widebar{\bL} \widebar{\widetilde{\bQ}} - \widebar{\bL}_{\star}) \widebar{\bG}_{\star}^{-\frac{1}{2}} \| \vee \| (\widebar{\bR} \widebar{\widetilde{\bQ}}^{-H} - \widebar{\bR}_{\star}) \widebar{\bG}_{\star}^{-\frac{1}{2}} \| \leq \epsilon.
\end{align*}
The rest of the proof is the same as the one in \citet[Lemma 22]{TongMC.JMLR2021}.
\end{proof}
\begin{lemma}\label{lemma:Qcriterion}
For any factor tensor $\bcF = \begin{bmatrix} \bcL \\ \bcR \end{bmatrix} \in \R^{(n_1+n_2) \times r \times n_3}$, suppose that the optimal alignment tensor
\begin{align}\label{eqn:Qfunc}
\bcQ = \argmin_{\bcQ \in \mathrm{GL}(r)} \| (\bcL \ast_{\bPhi} \bcQ - \bcL_{\star}) \ast_{\bPhi} \bcG_{\star}^{\frac{1}{2}} \|_F^2 + \| (\bcR \ast_{\bPhi} \bcQ^{-H} - \bcR_{\star}) \ast_{\bPhi} \bcG_{\star}^{\frac{1}{2}} \|_F^2
\end{align}
between $\bcF$ and $\bcF_{\star}$ exists, then $\bcQ$ obeys
\begin{align}\label{eqn:Qcriterion}
(\bcL \ast_{\bPhi} \bcQ)^H \ast_{\bPhi} (\bcL \ast_{\bPhi} \bcQ - \bcL_{\star}) \ast_{\bPhi} \bcG_{\star} = \bcG_{\star} \ast_{\bPhi} (\bcR \ast_{\bPhi} \bcQ^{-H} - \bcR_{\star})^H \ast_{\bPhi} \bcR \ast_{\bPhi} \bcQ^{-H}.
\end{align}
\end{lemma}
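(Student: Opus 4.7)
\textbf{Proof proposal for Lemma~\ref{lemma:Qcriterion}.} This is a first-order optimality statement for the minimizer of the smooth function
\begin{align*}
h(\bcQ) \coloneq \| (\bcL \ast_{\bPhi} \bcQ - \bcL_{\star}) \ast_{\bPhi} \bcG_{\star}^{\frac{1}{2}} \|_F^2 + \| (\bcR \ast_{\bPhi} \bcQ^{-H} - \bcR_{\star}) \ast_{\bPhi} \bcG_{\star}^{\frac{1}{2}} \|_F^2
\end{align*}
over the open set $\mathrm{GL}(r) \subset \R^{r \times r \times n_3}$. The plan is to derive \eqref{eqn:Qcriterion} as the vanishing-gradient condition $\nabla h(\bcQ) = \bzero$, which must hold at the interior minimizer $\bcQ$. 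The main technical burden is differentiating the inverse-containing term $\bcQ \mapsto \bcQ^{-H}$ correctly with respect to the t-product $\ast_{\bPhi}$; everything else is a standard adjoint manipulation.

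First, I would record the perturbation identity. For any $\bcDelta \in \R^{r \times r \times n_3}$ and $t$ small enough that $\bcQ + t \bcDelta \in \mathrm{GL}(r)$, the Neumann-type expansion $(\bcQ + t\bcDelta)^{-1} = \bcQ^{-1} - t \, \bcQ^{-1} \ast_{\bPhi} \bcDelta \ast_{\bPhi} \bcQ^{-1} + O(t^2)$ (verified by passing to $\widebar{\bQ}$ via \eqref{eqn:tensorproperty}, where $\ast_{\bPhi}$ becomes ordinary block-diagonal matrix multiplication) yields
\begin{align*}
(\bcQ + t\bcDelta)^{-H} = \bcQ^{-H} - t \, \bcQ^{-H} \ast_{\bPhi} \bcDelta^H \ast_{\bPhi} \bcQ^{-H} + O(t^2).
\end{align*}

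Next, I would compute the two directional derivatives at $t = 0$. For the first summand, expanding yields
\begin{align*}
\tfrac{d}{dt}\big|_{t=0} \| (\bcL \ast_{\bPhi} (\bcQ + t\bcDelta) - \bcL_{\star}) \ast_{\bPhi} \bcG_{\star}^{\frac{1}{2}} \|_F^2 = 2 \, \mathrm{Re} \langle \bcDelta, \, \bcL^H \ast_{\bPhi} (\bcL \ast_{\bPhi} \bcQ - \bcL_{\star}) \ast_{\bPhi} \bcG_{\star} \rangle,
\end{align*}
using $\bcG_{\star}^H = \bcG_{\star}$ (since $\widebar{\bG}_{\star}$ is real diagonal) and the adjoint identity $\langle \bcA \ast_{\bPhi} \bcB, \bcC \rangle = \langle \bcB, \bcA^H \ast_{\bPhi} \bcC \rangle$ transferred from the block-diagonal representation. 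For the second summand, substituting the perturbation of $\bcQ^{-H}$ and repeatedly applying $\langle \bcA \ast_{\bPhi} \bcB \ast_{\bPhi} \bcC, \bcD \rangle = \langle \bcB, \bcA^H \ast_{\bPhi} \bcD \ast_{\bPhi} \bcC^H \rangle$ together with $\mathrm{Re} \langle \bcDelta^H, \bcM \rangle = \mathrm{Re} \langle \bcDelta, \bcM^H \rangle$ yields
\begin{align*}
\tfrac{d}{dt}\big|_{t=0} \| (\bcR \ast_{\bPhi} (\bcQ + t\bcDelta)^{-H} - \bcR_{\star}) \ast_{\bPhi} \bcG_{\star}^{\frac{1}{2}} \|_F^2 = -2 \, \mathrm{Re} \langle \bcDelta, \, \bcQ^{-H} \ast_{\bPhi} \bcG_{\star} \ast_{\bPhi} (\bcR \ast_{\bPhi} \bcQ^{-H} - \bcR_{\star})^H \ast_{\bPhi} \bcR \ast_{\bPhi} \bcQ^{-H} \rangle.
\end{align*}

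Finally, since $\bcDelta$ is arbitrary (and one can also probe imaginary directions by replacing $\bcDelta$ with $\imath \bcDelta$ when working in the complexified Fourier domain), the sum of the two derivatives vanishes identically, forcing
\begin{align*}
\bcL^H \ast_{\bPhi} (\bcL \ast_{\bPhi} \bcQ - \bcL_{\star}) \ast_{\bPhi} \bcG_{\star} = \bcQ^{-H} \ast_{\bPhi} \bcG_{\star} \ast_{\bPhi} (\bcR \ast_{\bPhi} \bcQ^{-H} - \bcR_{\star})^H \ast_{\bPhi} \bcR \ast_{\bPhi} \bcQ^{-H}.
\end{align*}
Left-multiplying both sides by $\bcQ^H$ and invoking $\bcQ^H \ast_{\bPhi} \bcQ^{-H} = \bcI_r$ collapses the right-hand side to exactly $\bcG_{\star} \ast_{\bPhi} (\bcR \ast_{\bPhi} \bcQ^{-H} - \bcR_{\star})^H \ast_{\bPhi} \bcR \ast_{\bPhi} \bcQ^{-H}$, while the left-hand side becomes $(\bcL \ast_{\bPhi} \bcQ)^H \ast_{\bPhi} (\bcL \ast_{\bPhi} \bcQ - \bcL_{\star}) \ast_{\bPhi} \bcG_{\star}$, which is \eqref{eqn:Qcriterion}. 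The main obstacle is simply bookkeeping the Hermitian transposes under $\ast_{\bPhi}$; I would handle this cleanly by converting everything to the block-diagonal Fourier representation where all identities reduce to standard matrix calculus and then transferring the result back via \eqref{eqn:tensorproperty}.
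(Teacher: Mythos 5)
Your proposal is correct and follows the same route as the paper: the paper simply states the vanishing-gradient identity $2\,\bcL^H \ast_{\bPhi} (\bcL \ast_{\bPhi} \bcQ - \bcL_{\star}) \ast_{\bPhi} \bcG_{\star} - 2\,\bcQ^{-H} \ast_{\bPhi} \bcG_{\star} \ast_{\bPhi} (\bcR \ast_{\bPhi} \bcQ^{-H} - \bcR_{\star})^H \ast_{\bPhi} \bcR \ast_{\bPhi} \bcQ^{-H} = \bzero$ and reads off \eqref{eqn:Qcriterion}, while you supply the directional-derivative computation (including the correct perturbation of $\bcQ^{-H}$) and the final left-multiplication by $\bcQ^H$ that the paper leaves implicit. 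The only cosmetic quibble is the remark about probing imaginary directions: $\bcDelta$ ranges over real tensors, but since the gradient tensor is itself real in the spatial domain this changes nothing.
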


\begin{proof}
Check the gradient of the objective function in \eqref{eqn:Qfunc} with respect to $\bcQ$ and set it to zero yields
\begin{align*}
2 \bcL^H \ast_{\bPhi} (\bcL \ast_{\bPhi} \bcQ - \bcL_{\star}) \ast_{\bPhi} \bcG_{\star} - 2 \bcQ^{-H} \ast_{\bPhi} \bcG_{\star} \ast_{\bPhi} (\bcR \ast_{\bPhi} \bcQ^{-H} - \bcR_{\star})^H \ast_{\bPhi} \bcR \ast_{\bPhi} \bcQ^{-H} = \bzero,
\end{align*}
which implies the optimal alignment criterion \eqref{eqn:Qcriterion}.
\end{proof}

Lastly, following the proof in \citet[Lemma 24]{TongMC.JMLR2021}, we connect the proposed distance to the Frobenius norm in Lemma~\ref{lemma:Procrustes}.
\begin{lemma}\label{lemma:Procrustes}
For any factor tensor $\bcF = \begin{bmatrix} \bcL \\ \bcR \end{bmatrix} \in \R^{(n_1+n_2) \times r \times n_3}$, the distance between $\bcF$ and $\bcF_{\star}$ satisfies
\begin{align*}
\dist(\bcF, \bcF_{\star}) \leq \left(\sqrt{2}+1 \right)^{\frac{1}{2}} \| \bcL \ast_{\bPhi} \bcR^H - \bcX_{\star} \|_F.
\end{align*}
\end{lemma}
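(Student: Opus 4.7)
The plan is to reduce the claim to its block-diagonal matrix counterpart in the transform domain and then invoke the matrix Procrustes-type bound of \cite[Lemma 24]{TongMC.JMLR2021} slicewise. The key structural observation is that every operation appearing on both sides of the inequality---namely $\ast_{\bPhi}$, Hermitian transpose, and the ambiguity action $\bcL \mapsto \bcL \ast_{\bPhi} \bcQ$, $\bcR \mapsto \bcR \ast_{\bPhi} \bcQ^{-H}$---becomes a facewise operation after applying $L$. In particular, via the bijection $\bcQ \leftrightarrow \widebar{\bQ} = \mathtt{bdiag}(L(\bcQ))$, membership $\bcQ \in \mathrm{GL}(r)$ is equivalent to each frontal block $\widebar{\bQ}^{(i)} \in \C^{r \times r}$ being invertible, so the constraint also decouples across frequency slices.

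Using \eqref{eqn:tensorproperty} together with this decoupling, I rewrite
\[
\ell \cdot \dist^2(\bcF, \bcF_{\star}) \;=\; \sum_{i=1}^{n_3} \inf_{\widebar{\bQ}^{(i)}} \Bigl( \bigl\| (\widebar{\bL}^{(i)} \widebar{\bQ}^{(i)} - \widebar{\bL}_{\star}^{(i)}) (\widebar{\bG}_{\star}^{(i)})^{\frac{1}{2}} \bigr\|_F^2 + \bigl\| (\widebar{\bR}^{(i)} (\widebar{\bQ}^{(i)})^{-H} - \widebar{\bR}_{\star}^{(i)}) (\widebar{\bG}_{\star}^{(i)})^{\frac{1}{2}} \bigr\|_F^2 \Bigr),
\]
since each summand depends only on the $i$-th block, so the infimum passes inside the sum. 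For each fixed $i$, the ground-truth factorization $\widebar{\bX}_{\star}^{(i)} = \widebar{\bL}_{\star}^{(i)} (\widebar{\bR}_{\star}^{(i)})^H$ with $\widebar{\bL}_{\star}^{(i)} = \widebar{\bU}_{\star}^{(i)} (\widebar{\bG}_{\star}^{(i)})^{1/2}$ and $\widebar{\bR}_{\star}^{(i)} = \widebar{\bV}_{\star}^{(i)} (\widebar{\bG}_{\star}^{(i)})^{1/2}$ is exactly the SVD-balanced form required by the matrix Procrustes lemma, and applying \cite[Lemma 24]{TongMC.JMLR2021} at the $i$-th slice yields
\[
\inf_{\widebar{\bQ}^{(i)}} \Bigl( \bigl\| (\widebar{\bL}^{(i)} \widebar{\bQ}^{(i)} - \widebar{\bL}_{\star}^{(i)}) (\widebar{\bG}_{\star}^{(i)})^{\frac{1}{2}} \bigr\|_F^2 + \bigl\| (\widebar{\bR}^{(i)} (\widebar{\bQ}^{(i)})^{-H} - \widebar{\bR}_{\star}^{(i)}) (\widebar{\bG}_{\star}^{(i)})^{\frac{1}{2}} \bigr\|_F^2 \Bigr) \;\leq\; (\sqrt{2}+1)\, \bigl\| \widebar{\bL}^{(i)} (\widebar{\bR}^{(i)})^H - \widebar{\bX}_{\star}^{(i)} \bigr\|_F^2.
\]
Summing over $i \in [n_3]$, dividing by $\ell$, and using \eqref{eqn:tensorproperty} once more to identify $\frac{1}{\ell}\sum_{i} \|\widebar{\bL}^{(i)} (\widebar{\bR}^{(i)})^H - \widebar{\bX}_\star^{(i)}\|_F^2 = \|\bcL \ast_{\bPhi} \bcR^H - \bcX_\star\|_F^2$ gives $\dist^2(\bcF,\bcF_\star) \leq (\sqrt{2}+1)\|\bcL \ast_{\bPhi} \bcR^H - \bcX_\star\|_F^2$, and taking square roots concludes the proof.

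The only nontrivial point in the argument is the equivalence between optimizing over $\bcQ \in \mathrm{GL}(r)$ at the tensor level and optimizing independently over invertible blocks $\widebar{\bQ}^{(i)}$ at the matrix level; that equivalence rests on the bijectivity of $L$, the orthogonality condition \eqref{eqn:phiconstraint}, and the facewise product identity $L(\bcA \ast_{\bPhi} \bcB) = L(\bcA) \triangle L(\bcB)$. Once this is in hand, the rest is a direct slicewise invocation of the matrix result, with no extra tensorial machinery needed, which explains why the tensor constant $\sqrt{\sqrt{2}+1}$ matches the matrix one exactly.
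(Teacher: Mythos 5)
Your overall strategy---pass to the transform domain, decouple over frontal slices, and invoke the matrix Procrustes bound of \cite[Lemma 24]{TongMC.JMLR2021} slicewise---is the natural one and is essentially what the paper gestures at. However, there is a genuine gap at the displayed ``equality''
\[
\ell \cdot \dist^2(\bcF, \bcF_{\star}) \;=\; \sum_{i=1}^{n_3} \inf_{\widebar{\bQ}^{(i)}} \bigl( \cdots \bigr).
\]
The objective does decompose as a sum over slices, but the feasible set does not decouple: $\mathrm{GL}(r)$ consists of \emph{real} tensors $\bcQ \in \R^{r\times r\times n_3}$, and when $\bPhi$ is complex (e.g., the DFT) the map $\bcQ \mapsto \{\widebar{\bQ}^{(i)}\}_{i}$ sends real tensors onto a proper real-linear subspace of $(\C^{r\times r})^{n_3}$ (a dimension count gives $r^2 n_3$ real parameters versus $2r^2 n_3$); for the DFT the blocks are coupled by conjugate symmetry across frequency slices. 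Consequently the right-hand side above is an infimum over a strictly larger feasible set, so what you actually get is $\ell\,\dist^2(\bcF,\bcF_\star) \geq \sum_i \inf_{\widebar{\bQ}^{(i)}}(\cdots)$ --- the wrong direction for the upper bound you need. To close the gap you must exhibit an \emph{admissible} $\bcQ \in \mathrm{GL}(r)$ that (nearly) attains the per-slice optima. For real transforms such as the DCT this is automatic, since then the blocks are real and genuinely independent. For the DFT one can argue that the slice-$i$ and slice-$(n_3-i+2)$ subproblems are complex conjugates of one another (all of $\bcL,\bcR,\bcL_\star,\bcR_\star,\bcG_\star$ are real), so a conjugate-symmetric collection of per-slice optimizers exists and pulls back under $L^{-1}$ to a real tensor. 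A cleaner route that sidesteps the issue entirely is to note that the explicit near-optimal $Q$ constructed in the proof of \cite[Lemma 24]{TongMC.JMLR2021} is built from the data by products, inverses, and square roots, all of which are facewise operations that preserve the image of real tensors under $L$; equivalently, one simply reruns that proof verbatim with $\ast_{\bPhi}$ in place of matrix multiplication, using the tensor perturbation bounds already established in Appendix~\ref{sec:techlemmas}.

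Two smaller points you should also address. First, the frontal slices $\widebar{\bL}^{(i)}, \widebar{\bR}^{(i)}$ are complex, so you need the Hermitian version of the matrix lemma (its proof goes through with $H$ in place of $\top$, but this should be said). Second, the multi-rank entries may satisfy $r_i < r$, in which case $(\widebar{\bG}_\star^{(i)})^{1/2}$ is singular and $\widebar{\bL}_\star^{(i)}$ is rank-deficient; the slicewise bound still holds because the weight $(\widebar{\bG}_\star^{(i)})^{1/2}$ annihilates the null directions, but this degenerate case is not covered by the matrix lemma as literally stated.
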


\subsection{Tensor Perturbation Bounds}

\begin{lemma}\label{lemma:Weyl}
For any $\bcL_{\sharp} \in \R^{n_1 \times r \times n_3}$, $\bcR_{\sharp} \in \R^{n_2 \times r \times n_3}$, denote $\bcL_{\triangle} \coloneq \bcL_{\sharp} - \bcL_{\star}$ and $\bcR_{\triangle} \coloneq \bcR_{\sharp} - \bcR_{\star}$. Suppose that $\| \bcL_{\triangle} \ast_{\bPhi} \bcG_{\star}^{-\frac{1}{2}} \| \vee \| \bcR_{\triangle} \ast_{\bPhi} \bcG_{\star}^{-\frac{1}{2}} \| < 1$, then
\begin{align}
\| \bcL_{\sharp} \ast_{\bPhi} (\bcL_{\sharp}^H \ast_{\bPhi} \bcL_{\sharp})^{-1} \ast_{\bPhi} \bcG_{\star}^{\frac{1}{2}} \| & \leq \frac{1}{1 - \| \bcL_{\triangle} \ast_{\bPhi} \bcG_{\star}^{-\frac{1}{2}} \|};  \label{eqn:Weyl-1L} \\
\| \bcR_{\sharp} \ast_{\bPhi} (\bcR_{\sharp}^H \ast_{\bPhi} \bcR_{\sharp})^{-1} \ast_{\bPhi} \bcG_{\star}^{\frac{1}{2}} \| & \leq \frac{1}{1 - \| \bcR_{\triangle} \ast_{\bPhi} \bcG_{\star}^{-\frac{1}{2}} \|};  \label{eqn:Weyl-1R} \\
\| \bcL_{\sharp} \ast_{\bPhi} (\bcL_{\sharp}^H \ast_{\bPhi} \bcL_{\sharp})^{-1} \ast_{\bPhi} \bcG_{\star}^{\frac{1}{2}} - \bcU_{\star} \| & \leq \frac{\sqrt{2}\| \bcL_{\triangle} \ast_{\bPhi} \bcG_{\star}^{-\frac{1}{2}} \|}{1 - \| \bcL_{\triangle} \ast_{\bPhi} \bcG_{\star}^{-\frac{1}{2}} \|};  \label{eqn:Weyl-2L} \\
\| \bcR_{\sharp} \ast_{\bPhi} (\bcR_{\sharp}^H \ast_{\bPhi} \bcR_{\sharp})^{-1} \ast_{\bPhi} \bcG_{\star}^{\frac{1}{2}} - \bcV_{\star} \| & \leq \frac{\sqrt{2}\| \bcR_{\triangle} \ast_{\bPhi} \bcG_{\star}^{-\frac{1}{2}} \|}{1 - \| \bcR_{\triangle} \ast_{\bPhi} \bcG_{\star}^{-\frac{1}{2}} \|}.  \label{eqn:Weyl-2R}
\end{align}
\end{lemma}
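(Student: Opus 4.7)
These four inequalities are tensor analogues of classical matrix perturbation bounds, so my strategy is to pass to the block-diagonal (Fourier) representation via the transform $L$, prove the corresponding matrix statements, and pull the result back through the isometry $\|\bcA\|=\|\widebar{\bA}\|$ from \eqref{eqn:tensorproperty}. I will detail the $\bcL$-side bounds \eqref{eqn:Weyl-1L} and \eqref{eqn:Weyl-2L}; the $\bcR$-side bounds \eqref{eqn:Weyl-1R} and \eqref{eqn:Weyl-2R} follow by a verbatim argument with $\widebar{\bV}_{\star}$ and $\widebar{\bR}_{\sharp}$ in place of $\widebar{\bU}_{\star}$ and $\widebar{\bL}_{\sharp}$. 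Under $L$ the operation $\ast_{\bPhi}$ becomes blockwise matrix multiplication and inversion is understood blockwise on the rank-$r_k$ principal part of each frontal slice. Introduce the normalized factor $\widebar{\bA}\coloneq \widebar{\bL}_{\sharp}\widebar{\bG}_{\star}^{-1/2}$, so that $\widebar{\bA}=\widebar{\bU}_{\star}+\widebar{\bE}_L$ with $\widebar{\bE}_L\coloneq \widebar{\bL}_{\triangle}\widebar{\bG}_{\star}^{-1/2}$ and $\|\widebar{\bE}_L\|<1$ by hypothesis. Using $\widebar{\bL}_{\sharp}=\widebar{\bA}\widebar{\bG}_{\star}^{1/2}$, a direct calculation gives the cancellation
\begin{align*}
\widebar{\bL}_{\sharp}(\widebar{\bL}_{\sharp}^H\widebar{\bL}_{\sharp})^{-1}\widebar{\bG}_{\star}^{1/2}=\widebar{\bA}(\widebar{\bA}^H\widebar{\bA})^{-1},
\end{align*}
so both $\bcL$-side bounds become statements about the matrix $\widebar{\bA}(\widebar{\bA}^H\widebar{\bA})^{-1}$.

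For \eqref{eqn:Weyl-1L}, I would use $\|\widebar{\bA}(\widebar{\bA}^H\widebar{\bA})^{-1}\|=1/\sigma_{\min}(\widebar{\bA})$ together with Weyl's inequality $\sigma_{\min}(\widebar{\bA})\ge \sigma_{\min}(\widebar{\bU}_{\star})-\|\widebar{\bE}_L\|=1-\|\widebar{\bE}_L\|$ (with $\sigma_{\min}=1$ on the rank-$r_k$ principal part, where $\widebar{\bU}_{\star}$ has orthonormal columns). For \eqref{eqn:Weyl-2L}, I would start from
\begin{align*}
\widebar{\bA}(\widebar{\bA}^H\widebar{\bA})^{-1}-\widebar{\bU}_{\star}=\bigl[\bE_{\perp}-\widebar{\bU}_{\star}\widebar{\bE}_L^H\widebar{\bA}\bigr](\widebar{\bA}^H\widebar{\bA})^{-1},\qquad \bE_{\perp}\coloneq (\bI-\widebar{\bU}_{\star}\widebar{\bU}_{\star}^H)\widebar{\bE}_L,
\end{align*}
which is obtained by expanding $\widebar{\bA}-\widebar{\bU}_{\star}\widebar{\bA}^H\widebar{\bA}$ with $\widebar{\bA}=\widebar{\bU}_{\star}+\widebar{\bE}_L$ and cancelling the $\widebar{\bU}_{\star}\widebar{\bU}_{\star}^H\widebar{\bE}_L$ term. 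The two pieces $\bE_{\perp}(\widebar{\bA}^H\widebar{\bA})^{-1}$ and $\widebar{\bU}_{\star}\widebar{\bE}_L^H\widebar{\bA}(\widebar{\bA}^H\widebar{\bA})^{-1}$ have \emph{orthogonal column spaces}---the first lies in $\vspan(\widebar{\bU}_{\star})^{\perp}$, the second in $\vspan(\widebar{\bU}_{\star})$---so the Pythagorean identity $\|\bM_1+\bM_2\|^2\le \|\bM_1\|^2+\|\bM_2\|^2$ (valid whenever $\bM_1^H\bM_2=\bzero$) applies. Combining it with the elementary estimates $\|\bE_{\perp}\|\le \|\widebar{\bE}_L\|$, $\|\widebar{\bE}_L^H\widebar{\bA}(\widebar{\bA}^H\widebar{\bA})^{-1}\|\le \|\widebar{\bE}_L\|/(1-\|\widebar{\bE}_L\|)$ (Step 2 used symmetrically), and careful bookkeeping of the $(1-\|\widebar{\bE}_L\|)^{-1}$ factors yields the advertised $\sqrt{2}\,\|\widebar{\bE}_L\|/(1-\|\widebar{\bE}_L\|)$.

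\textbf{Main obstacle.} The substantive step is recovering the tight constant $\sqrt{2}$ in \eqref{eqn:Weyl-2L} and \eqref{eqn:Weyl-2R}: a naive triangle-inequality split of $\bE_{\perp}-\widebar{\bU}_{\star}\widebar{\bE}_L^H\widebar{\bA}$ loses a factor of $2$, and only the orthogonality of the two column spaces cuts the loss to $\sqrt{2}$. This sharper constant is what eventually lets the downstream contraction analysis close with a condition-number-free rate. A secondary, minor subtlety is the consistent interpretation of $\widebar{\bG}_{\star}^{-1/2}$ and the identity $\sigma_{\min}(\widebar{\bU}_{\star})=1$ when the multi-rank is non-uniform; both are handled by restricting attention to the rank-$r_k$ principal part of each block, which is all that ever enters the identities by the very construction of $\bcL_{\star}$ and $\bcR_{\star}$.
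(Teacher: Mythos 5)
Your treatment of \eqref{eqn:Weyl-1L} and \eqref{eqn:Weyl-1R} is correct and coincides with the paper's: reduce to the block-diagonal matrices, use $\|\widebar{\bA}(\widebar{\bA}^H\widebar{\bA})^{-1}\|=1/\sigma_{\min}(\widebar{\bA})$, and apply Weyl's inequality. The cancellation $\widebar{\bL}_{\sharp}(\widebar{\bL}_{\sharp}^H\widebar{\bL}_{\sharp})^{-1}\widebar{\bG}_{\star}^{1/2}=\widebar{\bA}(\widebar{\bA}^H\widebar{\bA})^{-1}$ and your algebraic identity for $\widebar{\bA}(\widebar{\bA}^H\widebar{\bA})^{-1}-\widebar{\bU}_{\star}$ are also both correct.

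However, there is a genuine gap in \eqref{eqn:Weyl-2L}: you perform the orthogonal split relative to $\vspan(\widebar{\bU}_{\star})$, and with that split the perpendicular piece is $T_1=\bE_{\perp}(\widebar{\bA}^H\widebar{\bA})^{-1}$, which still carries the factor $(\widebar{\bA}^H\widebar{\bA})^{-1}$. Writing $e\coloneq\|\widebar{\bE}_L\|$, the only available estimate is $\|T_1\|\le\|\bE_{\perp}\|\,\|(\widebar{\bA}^H\widebar{\bA})^{-1}\|\le e/(1-e)^2$ (or $\|T_1\|\le 1/(1-e)$ via $\|\widebar{\bA}(\widebar{\bA}^H\widebar{\bA})^{-1}\|$, which is even worse for small $e$). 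Combined with $\|T_2\|\le e/(1-e)$, the Pythagorean bound gives $\frac{e}{1-e}\sqrt{1+\frac{1}{(1-e)^2}}$, which strictly exceeds $\frac{\sqrt{2}\,e}{1-e}$ for every $e\in(0,1)$; no bookkeeping of the $(1-e)^{-1}$ factors closes this. The paper instead splits relative to $\vspan(\bcL_{\sharp})$: using $\bcL_{\star}^H\ast_{\bPhi}\bcU_{\star}=\bcG_{\star}^{\frac12}$ and $(\bcI_{n_1}-\bcL_{\sharp}\ast_{\bPhi}(\bcL_{\sharp}^H\ast_{\bPhi}\bcL_{\sharp})^{-1}\ast_{\bPhi}\bcL_{\sharp}^H)\ast_{\bPhi}\bcL_{\sharp}=\bzero$ one gets
\begin{align*}
\bcL_{\sharp}\ast_{\bPhi}(\bcL_{\sharp}^H\ast_{\bPhi}\bcL_{\sharp})^{-1}\ast_{\bPhi}\bcG_{\star}^{\frac12}-\bcU_{\star}
= -\bcL_{\sharp}\ast_{\bPhi}(\bcL_{\sharp}^H\ast_{\bPhi}\bcL_{\sharp})^{-1}\ast_{\bPhi}\bcL_{\triangle}^H\ast_{\bPhi}\bcU_{\star}
+(\bcI_{n_1}-\bcP_{\bcL_{\sharp}})\ast_{\bPhi}\bcL_{\triangle}\ast_{\bPhi}\bcG_{\star}^{-\frac12},
\end{align*}
with $\bcP_{\bcL_{\sharp}}\coloneq\bcL_{\sharp}\ast_{\bPhi}(\bcL_{\sharp}^H\ast_{\bPhi}\bcL_{\sharp})^{-1}\ast_{\bPhi}\bcL_{\sharp}^H$. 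Here the perpendicular piece is bounded by $e$ outright because the residual projector is a contraction and no $(\widebar{\bA}^H\widebar{\bA})^{-1}$ survives, while the parallel piece is bounded by $e/(1-e)$ via \eqref{eqn:Weyl-1L}; the sum of squares then gives exactly $2e^2/(1-e)^2$. You should redo your decomposition relative to the column space of $\bcL_{\sharp}$ (equivalently of $\widebar{\bA}$) rather than that of $\widebar{\bU}_{\star}$.
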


\begin{proof}
We only prove \eqref{eqn:Weyl-1L} and \eqref{eqn:Weyl-2L}, since the claims \eqref{eqn:Weyl-1R} and \eqref{eqn:Weyl-2R} on the factor $\bcR$ can be proved in a similar way. We first notice that 
\begin{align*}
\| \bcL_{\sharp} \ast_{\bPhi} (\bcL_{\sharp}^H \ast_{\bPhi} \bcL_{\sharp})^{-1} \ast_{\bPhi} \bcG_{\star}^{\frac{1}{2}} \| = \| \widebar{\bL}_{\sharp} (\widebar{\bL}_{\sharp}^H \widebar{\bL}_{\sharp})^{-1} \widebar{\bG}_{\star}^{\frac{1}{2}} \| = \frac{1}{\bar{\sigma}_{s_r}(\widebar{\bL}_{\sharp} \widebar{\bG}_{\star}^{-\frac{1}{2}})}.
\end{align*}
We invoke Weyl's inequality $| \bar{\sigma}_{s_r}(\bcA) - \bar{\sigma}_{s_r}(\bcB) | = | \sigma_{s_r}(\widebar{\bA}) - \sigma_{s_r}(\widebar{\bB}) | \leq \| \widebar{\bA} - \widebar{\bB} \|$, and use the fact that $\widebar{\bU}_{\star} = \widebar{\bL}_{\star} \widebar{\bG}_{\star}^{-\frac{1}{2}}$ satisfies $\sigma_{s_r}(\widebar{\bU}_{\star}) = 1$ to obtain
\begin{align*}
\sigma_{s_r}(\widebar{\bL}_{\sharp} \widebar{\bG}_{\star}^{-\frac{1}{2}}) \geq \sigma_{s_r}(\widebar{\bL}_{\star} \widebar{\bG}_{\star}^{-\frac{1}{2}}) - \| \widebar{\bL}_{\triangle} \widebar{\bG}_{\star}^{-\frac{1}{2}} \| = 1 - \| \widebar{\bL}_{\triangle} \widebar{\bG}_{\star}^{-\frac{1}{2}} \| = 1 - \| \bcL_{\triangle} \ast_{\bPhi} \bcG_{\star}^{-\frac{1}{2}} \|.
\end{align*}
Then \eqref{eqn:Weyl-1L} follows immediately by combining the preceding two relations. In order to prove \eqref{eqn:Weyl-2L}, using the facts that $\bcL_{\star}^H \ast_{\bPhi} \bcU_{\star} = \bcG_{\star}^{\frac{1}{2}}$ and $(\bcI_{n_1} - \bcL_{\sharp} \ast_{\bPhi} (\bcL_{\sharp}^H \ast_{\bPhi} \bcL_{\sharp})^{-1} \ast_{\bPhi} \bcL_{\sharp}^H) \ast_{\bPhi} \bcL_{\sharp} = \bzero$, we have the following decomposition
\begin{align*}
& \bcL_{\sharp} \ast_{\bPhi} (\bcL_{\sharp}^H \ast_{\bPhi} \bcL_{\sharp})^{-1} \ast_{\bPhi} \bcG_{\star}^{\frac{1}{2}} - \bcU_{\star}  \\
= & \bcL_{\sharp} \ast_{\bPhi} (\bcL_{\sharp}^H \ast_{\bPhi} \bcL_{\sharp})^{-1} \ast_{\bPhi} \bcL_{\star}^H \ast_{\bPhi} \bcU_{\star} - \bcL_{\star} \ast_{\bPhi} \bcG_{\star}^{-\frac{1}{2}}  \\
= & - \bcL_{\sharp} \ast_{\bPhi} (\bcL_{\sharp}^H \ast_{\bPhi} \bcL_{\sharp})^{-1} \ast_{\bPhi} (\bcL_{\sharp} - \bcL_{\star})^H \ast_{\bPhi} \bcU_{\star}  \\
&\qquad\qquad\qquad\qquad - (\bcI_{n_1} - \bcL_{\sharp} \ast_{\bPhi} (\bcL_{\sharp}^H \ast_{\bPhi} \bcL_{\sharp})^{-1} \ast_{\bPhi} \bcL_{\sharp}^H) \ast_{\bPhi} \bcL_{\star} \ast_{\bPhi} \bcG_{\star}^{-\frac{1}{2}}  \\
= & - \bcL_{\sharp} \ast_{\bPhi} (\bcL_{\sharp}^H \ast_{\bPhi} \bcL_{\sharp})^{-1} \ast_{\bPhi} \bcL_{\triangle}^H \ast_{\bPhi} \bcU_{\star}  \\
&\qquad\qquad\qquad\qquad + (\bcI_{n_1} - \bcL_{\sharp} \ast_{\bPhi} (\bcL_{\sharp}^H \ast_{\bPhi} \bcL_{\sharp})^{-1} \ast_{\bPhi} \bcL_{\sharp}^H) \ast_{\bPhi} (\bcL_{\sharp} - \bcL_{\star}) \ast_{\bPhi} \bcG_{\star}^{-\frac{1}{2}}  \\
= & - \bcL_{\sharp} \ast_{\bPhi} (\bcL_{\sharp}^H \ast_{\bPhi} \bcL_{\sharp})^{-1} \ast_{\bPhi} \bcL_{\triangle}^H \ast_{\bPhi} \bcU_{\star}  \\
&\qquad\qquad\qquad\qquad + (\bcI_{n_1} - \bcL_{\sharp} \ast_{\bPhi} (\bcL_{\sharp}^H \ast_{\bPhi} \bcL_{\sharp})^{-1} \ast_{\bPhi} \bcL_{\sharp}^H) \ast_{\bPhi} \bcL_{\triangle} \ast_{\bPhi} \bcG_{\star}^{-\frac{1}{2}}.
\end{align*}
In the meanwhile, we can verify that $\bcL_{\sharp} \ast_{\bPhi} (\bcL_{\sharp}^H \ast_{\bPhi} \bcL_{\sharp})^{-1} \ast_{\bPhi} \bcL_{\triangle}^H \ast_{\bPhi} \bcU_{\star}$ and $(\bcI_{n_1} - \bcL_{\sharp} \ast_{\bPhi} (\bcL_{\sharp}^H \ast_{\bPhi} \bcL_{\sharp})^{-1} \ast_{\bPhi} \bcL_{\sharp}^H) \ast_{\bPhi} \bcL_{\triangle} \ast_{\bPhi} \bcG_{\star}^{-\frac{1}{2}}$ are orthogonal, thus
\begin{align*}
& \| \bcL_{\sharp} \ast_{\bPhi} (\bcL_{\sharp}^H \ast_{\bPhi} \bcL_{\sharp})^{-1} \ast_{\bPhi} \bcG_{\star}^{\frac{1}{2}} - \bcU_{\star} \|^2  \\
\leq & \| \bcL_{\sharp} \ast_{\bPhi} (\bcL_{\sharp}^H \ast_{\bPhi} \bcL_{\sharp})^{-1} \ast_{\bPhi} \bcL_{\triangle}^H \ast_{\bPhi} \bcU_{\star} \|^2  \\
&\qquad\qquad\qquad\qquad + \| (\bcI_{n_1} - \bcL_{\sharp} \ast_{\bPhi} (\bcL_{\sharp}^H \ast_{\bPhi} \bcL_{\sharp})^{-1} \ast_{\bPhi} \bcL_{\sharp}^H) \ast_{\bPhi} \bcL_{\triangle} \ast_{\bPhi} \bcG_{\star}^{-\frac{1}{2}} \|^2  \\
\leq & \| \bcL_{\sharp} \ast_{\bPhi} (\bcL_{\sharp}^H \ast_{\bPhi} \bcL_{\sharp})^{-1} \ast_{\bPhi} \bcG_{\star}^{\frac{1}{2}} \|^2 \| \bcL_{\triangle} \ast_{\bPhi} \bcG_{\star}^{-\frac{1}{2}} \|^2  \\
&\qquad\qquad\qquad\qquad + \| \bcI_{n_1} - \bcL_{\sharp} \ast_{\bPhi} (\bcL_{\sharp}^H \ast_{\bPhi} \bcL_{\sharp})^{-1} \ast_{\bPhi} \bcL_{\sharp}^H \|^2 \| \bcL_{\triangle} \ast_{\bPhi} \bcG_{\star}^{-\frac{1}{2}} \|^2  \\
\leq & \frac{\| \bcL_{\triangle} \ast_{\bPhi} \bcG_{\star}^{-\frac{1}{2}} \|^2}{(1 - \| \bcL_{\triangle} \ast_{\bPhi} \bcG_{\star}^{-\frac{1}{2}} \|)^2} + \| \bcL_{\triangle} \ast_{\bPhi} \bcG_{\star}^{-\frac{1}{2}} \|^2  \\
\leq & \frac{2 \| \bcL_{\triangle} \ast_{\bPhi} \bcG_{\star}^{-\frac{1}{2}} \|^2}{(1 - \| \bcL_{\triangle} \ast_{\bPhi} \bcG_{\star}^{-\frac{1}{2}} \|)^2},
\end{align*}
where we have used \eqref{eqn:Weyl-1L} and the fact that $\| \bcI_{n_1} - \bcL_{\sharp} \ast_{\bPhi} (\bcL_{\sharp}^H \ast_{\bPhi} \bcL_{\sharp})^{-1} \ast_{\bPhi} \bcL_{\sharp}^H \| \leq 1$.
\end{proof}
\begin{lemma}\label{lemma:tensor2factor}
For any $\bcL_{\sharp} \in \R^{n_1 \times r \times n_3}$, $\bcR_{\sharp} \in \R^{n_2 \times r \times n_3}$, denote $\bcL_{\triangle} \coloneq \bcL_{\sharp} - \bcL_{\star}$ and $\bcR_{\triangle} \coloneq \bcR_{\sharp} - \bcR_{\star}$, then
\begin{align*}
\| \bcL_{\sharp} \ast_{\bPhi} \bcR_{\sharp}^H - \bcX_{\star} \|_F & \leq \| \bcL_{\triangle} \ast_{\bPhi} \bcR_{\star}^H \|_F + \| \bcL_{\star} \ast_{\bPhi} \bcR_{\triangle}^H \|_F + \| \bcL_{\triangle} \ast_{\bPhi} \bcR_{\triangle}^H \|_F \\
& \leq \Big( 1 + \frac{1}{2}(\| \bcL_{\triangle} \ast_{\bPhi} \bcG_{\star}^{-\frac{1}{2}} \| \vee \| \bcR_{\triangle} \ast_{\bPhi} \bcG_{\star}^{-\frac{1}{2}} \|) \Big)  \\
&\qquad\qquad\qquad\qquad \Big( \| \bcL_{\triangle} \ast_{\bPhi} \bcG_{\star}^{\frac{1}{2}} \|_F + \| \bcR_{\triangle} \ast_{\bPhi} \bcG_{\star}^{\frac{1}{2}} \|_F \Big).
\end{align*}
\end{lemma}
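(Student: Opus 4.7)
The plan is to handle the two inequalities in succession. For the first, I would substitute $\bcL_\sharp = \bcL_\star + \bcL_\triangle$ and $\bcR_\sharp = \bcR_\star + \bcR_\triangle$ into $\bcL_\sharp \ast_\bPhi \bcR_\sharp^H$ and expand using bilinearity of the t-product, which yields $\bcL_\sharp \ast_\bPhi \bcR_\sharp^H - \bcX_\star = \bcL_\triangle \ast_\bPhi \bcR_\star^H + \bcL_\star \ast_\bPhi \bcR_\triangle^H + \bcL_\triangle \ast_\bPhi \bcR_\triangle^H$ since $\bcX_\star = \bcL_\star \ast_\bPhi \bcR_\star^H$. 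The triangle inequality for $\|\cdot\|_F$ then gives the first bound immediately.

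For the second inequality, I would rewrite each of the three terms using the factorization $\bcL_\star = \bcU_\star \ast_\bPhi \bcG_\star^{1/2}$ and $\bcR_\star = \bcV_\star \ast_\bPhi \bcG_\star^{1/2}$. The key fact I need is that t-product with the orthogonal tensor factors $\bcU_\star, \bcV_\star$ preserves the Frobenius norm: since $\widebar{\bU}_\star$ and $\widebar{\bV}_\star$ have orthonormal columns (the block-diagonal form of the orthogonality relation $\bcV_\star^H \ast_\bPhi \bcV_\star = \bcI_r$), one can check via \eqref{eqn:tensorproperty} that $\|\bcA \ast_\bPhi \bcV_\star^H\|_F = \|\bcA\|_F$ and similarly $\|\bcU_\star \ast_\bPhi \bcB\|_F = \|\bcB\|_F$ for compatible $\bcA, \bcB$. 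Applying this gives
\[
\| \bcL_\triangle \ast_\bPhi \bcR_\star^H \|_F = \| \bcL_\triangle \ast_\bPhi \bcG_\star^{1/2} \|_F, \qquad \| \bcL_\star \ast_\bPhi \bcR_\triangle^H \|_F = \| \bcR_\triangle \ast_\bPhi \bcG_\star^{1/2} \|_F.
\]

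For the cross term $\| \bcL_\triangle \ast_\bPhi \bcR_\triangle^H \|_F$, I would insert $\bcG_\star^{1/2} \ast_\bPhi \bcG_\star^{-1/2}$ on either side and invoke the inequality $\|\bcA \ast_\bPhi \bcB\|_F \leq \|\bcA\|_F \|\bcB\|$ from Lemma~\ref{lemma:2infbound} to obtain both
\[
\| \bcL_\triangle \ast_\bPhi \bcR_\triangle^H \|_F \leq \| \bcL_\triangle \ast_\bPhi \bcG_\star^{1/2} \|_F \, \| \bcR_\triangle \ast_\bPhi \bcG_\star^{-1/2} \|
\]
and the symmetric bound with the roles of $\bcL_\triangle$ and $\bcR_\triangle$ swapped. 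Averaging the two bounds and then upper-bounding each spectral norm factor by the maximum $\| \bcL_\triangle \ast_\bPhi \bcG_\star^{-1/2} \| \vee \| \bcR_\triangle \ast_\bPhi \bcG_\star^{-1/2} \|$ produces
\[
\| \bcL_\triangle \ast_\bPhi \bcR_\triangle^H \|_F \leq \tfrac{1}{2}\bigl( \| \bcL_\triangle \ast_\bPhi \bcG_\star^{-1/2} \| \vee \| \bcR_\triangle \ast_\bPhi \bcG_\star^{-1/2} \| \bigr) \bigl( \| \bcL_\triangle \ast_\bPhi \bcG_\star^{1/2} \|_F + \| \bcR_\triangle \ast_\bPhi \bcG_\star^{1/2} \|_F \bigr).
\]
Adding this to the two first-order terms and factoring out the common sum $\| \bcL_\triangle \ast_\bPhi \bcG_\star^{1/2} \|_F + \| \bcR_\triangle \ast_\bPhi \bcG_\star^{1/2} \|_F$ yields the stated factor of $1 + \tfrac{1}{2}(\cdot \vee \cdot)$.

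The proof is essentially mechanical once the norm-preservation property of the orthogonal t-SVD factors is recognized; the only mild obstacle is verifying that property carefully in the transform-domain block-diagonal form, since the paper does not state it as a named lemma. I would do this in a single line at the start of the second-inequality argument using \eqref{eqn:tensorproperty} together with $\widebar{\bU}_\star^H \widebar{\bU}_\star = \widebar{\bI}_r$ and $\widebar{\bV}_\star^H \widebar{\bV}_\star = \widebar{\bI}_r$.
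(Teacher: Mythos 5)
Your proposal is correct and follows essentially the same route as the paper: the same three-term decomposition with the triangle inequality, the same use of the isometry of $\bcU_{\star},\bcV_{\star}$ (via $\bcL_{\star}=\bcU_{\star}\ast_{\bPhi}\bcG_{\star}^{1/2}$, $\bcR_{\star}=\bcV_{\star}\ast_{\bPhi}\bcG_{\star}^{1/2}$) to rewrite the two first-order terms, and the same symmetrized bound on the cross term obtained by inserting $\bcG_{\star}^{1/2}\ast_{\bPhi}\bcG_{\star}^{-1/2}$ and averaging the two Frobenius--spectral estimates in the block-diagonal domain. No gaps.
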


\begin{proof}
Using the decomposition that $\bcL_{\sharp} \ast_{\bPhi} \bcR_{\sharp}^H - \bcX_{\star} = \bcL_{\triangle} \ast_{\bPhi} \bcR_{\star}^H + \bcL_{\star} \ast_{\bPhi} \bcR_{\triangle}^H + \bcL_{\triangle} \ast_{\bPhi} \bcR_{\triangle}^H$ and the facts that
\begin{align*}
&\| \bcL_{\triangle} \ast_{\bPhi} \bcR_{\star}^H \|_F = \| \bcL_{\triangle} \ast_{\bPhi} \bcG_{\star}^{\frac{1}{2}} \ast_{\bPhi} \bcV_{\star}^H \|_F = \| \bcL_{\triangle} \ast_{\bPhi} \bcG_{\star}^{\frac{1}{2}} \|_F,  \\
\mathrm{and} \quad & \| \bcL_{\star} \ast_{\bPhi} \bcR_{\triangle}^H \|_F = \| \bcU_{\star} \ast_{\bPhi} \bcG_{\star}^{\frac{1}{2}} \ast_{\bPhi} \bcR_{\triangle}^H \|_F = \| \bcR_{\triangle} \ast_{\bPhi} \bcG_{\star}^{\frac{1}{2}} \|_F,
\end{align*}
as well as the triangle inequality, we have
\begin{align*}
\| \bcL_{\sharp} \ast_{\bPhi} \bcR_{\sharp}^H - \bcX_{\star} \|_F & \leq \| \bcL_{\triangle} \ast_{\bPhi} \bcR_{\star}^H \|_F + \| \bcL_{\star} \ast_{\bPhi} \bcR_{\triangle}^H \|_F + \| \bcL_{\triangle} \ast_{\bPhi} \bcR_{\triangle}^H \|_F  \\
& = \| \bcL_{\triangle} \ast_{\bPhi} \bcG_{\star}^{\frac{1}{2}} \|_F + \| \bcR_{\triangle} \ast_{\bPhi} \bcG_{\star}^{\frac{1}{2}} \|_F + \| \bcL_{\triangle} \ast_{\bPhi} \bcR_{\triangle}^H \|_F.
\end{align*}
This together with the following upper bound 
\begin{align*}
\| \bcL_{\triangle} \ast_{\bPhi} \bcR_{\triangle}^H \|_F & = \frac{1}{2} \| \bcL_{\triangle} \ast_{\bPhi} \bcG_{\star}^{\frac{1}{2}} \ast_{\bPhi} (\bcR_{\triangle} \ast_{\bPhi} \bcG_{\star}^{-\frac{1}{2}})^H \|_F + \frac{1}{2} \| \bcL_{\triangle} \ast_{\bPhi} \bcG_{\star}^{-\frac{1}{2}} \ast_{\bPhi} (\bcR_{\triangle} \ast_{\bPhi} \bcG_{\star}^{\frac{1}{2}})^H \|_F  \\
& = \frac{1}{2 \sqrt{\ell}} \| \widebar{\bL}_{\triangle} \widebar{\bG}_{\star}^{\frac{1}{2}} (\widebar{\bR}_{\triangle} \widebar{\bG}_{\star}^{-\frac{1}{2}})^H \|_F + \frac{1}{2 \sqrt{\ell}} \| \widebar{\bL}_{\triangle} \widebar{\bG}_{\star}^{-\frac{1}{2}} (\widebar{\bR}_{\triangle} \widebar{\bG}_{\star}^{\frac{1}{2}})^H \|_F  \\
& \leq \frac{1}{2 \sqrt{\ell}} \| \widebar{\bL}_{\triangle} \widebar{\bG}_{\star}^{\frac{1}{2}} \|_F \| \widebar{\bR}_{\triangle} \widebar{\bG}_{\star}^{-\frac{1}{2}} \| + \frac{1}{2 \sqrt{\ell}} \| \widebar{\bL}_{\triangle} \widebar{\bG}_{\star}^{-\frac{1}{2}} \| \| \widebar{\bR}_{\triangle} \widebar{\bG}_{\star}^{\frac{1}{2}} \|_F  \\
& \leq \frac{1}{2 \sqrt{\ell}} (\| \widebar{\bL}_{\triangle} \widebar{\bG}_{\star}^{-\frac{1}{2}} \| \vee \| \widebar{\bR}_{\triangle} \widebar{\bG}_{\star}^{-\frac{1}{2}} \|) \Big( \| \widebar{\bL}_{\triangle} \widebar{\bG}_{\star}^{\frac{1}{2}} \|_F + \| \widebar{\bR}_{\triangle} \widebar{\bG}_{\star}^{\frac{1}{2}} \|_F \Big)  \\
& = \frac{1}{2} (\| \bcL_{\triangle} \ast_{\bPhi} \bcG_{\star}^{-\frac{1}{2}} \| \vee \| \bcR_{\triangle} \ast_{\bPhi} \bcG_{\star}^{-\frac{1}{2}} \|) \Big( \| \bcL_{\triangle} \ast_{\bPhi} \bcG_{\star}^{\frac{1}{2}} \|_F + \| \bcR_{\triangle} \ast_{\bPhi} \bcG_{\star}^{\frac{1}{2}} \|_F \Big)
\end{align*}
finishes the proof.
\end{proof}

\subsection{Partial Frobenius Norm}

We introduce the partial Frobenius norm 
\begin{align}\label{eqn:parFnormdef}
\| \bcX \|_{F,r} \coloneq \frac{1}{\sqrt{\ell}} \sqrt{\sum_{k=1}^{n_3} \sum_{i=1}^r \xoverline{\bcG}_{i,i,k}^2} = \| \mathbb{P}_r(\bcX) \|_F
\end{align}
as the Frobenius norm of the tubal rank-$r$ approximation $\mathbb{P}_r(\bcX)$ defined in \eqref{eqn:rankrproj}. According to Lemma 28 in \citet{TongMC.JMLR2021}, it is easy to verify that $\|\cdot\|_{F,r}$ is a norm, and we have the following lemma that provides useful characterizations of the partial Frobenius norm on tensors.
\begin{lemma}\label{lemma:parFnormvariation}
For any $\bcX \in \R^{n_1 \times n_2 \times n_3}$, we have
\begin{align}
\| \bcX \|_{F,r} & = \max_{\widetilde{\bcV} \in \R^{n_2 \times r \times n_3} : \widetilde{\bcV}^H \ast_{\bPhi} \widetilde{\bcV} = \bcI_r} \| \bcX \ast_{\bPhi} \widetilde{\bcV} \|_F  \\
& = \max_{\widetilde{\bcX} \in \R^{n_1 \times n_2 \times n_3} : \| \widetilde{\bcX} \|_F \leq 1, \mathrm{rank}_t(\widetilde{\bcX}) \leq r} |\langle \bcX , \widetilde{\bcX} \rangle|  \\
& = \max_{\widetilde{\bcR} \in \R^{n_2 \times r \times n_3} : \| \widetilde{\bcR} \| \leq 1} \| \bcX \ast_{\bPhi} \widetilde{\bcR} \|_F.
\end{align}
\end{lemma}
Recall that $\mathbb{P}_r(\bcX)$ denotes the best rank-$r$ approximation of $\bX$ under the Frobenius norm, see \citet[Theorem 2.3.1]{ZhangEAHK.CVPR2014} and \citet[Theorem 3.7]{KilmerHAN.PNAS2021}. Following the proof of \citet[Lemma 30]{TongMC.JMLR2021}, we can prove that $\mathbb{P}_r(\bcX)$ is also the best tubal rank-$r$ approximation of $\bcX$ under the partial Frobenius norm $\|\cdot\|_{F,r}$, as stated below.
\begin{lemma}\label{lemma:Fnorm-Eckart-Yang}
Fix any $\bcX \in \R^{n_1 \times n_2 \times n_3}$, we have
\begin{align*}
\mathbb{P}_r(\bcX) = \argmin_{\widetilde{\bcX} \in \R^{n_1 \times n_2 \times n_3} : \mathrm{rank}_t(\widetilde{\bcX}) \leq r} \| \bcX - \widetilde{\bcX} \|_{F,r}.
\end{align*}
\end{lemma}

\section{Proof for Low-rank Tensor Factorization}
\label{sec:TFproof}

\subsection{Proof of Theorem~\ref{thm:TF}}

We prove Theorem~\ref{thm:TF} by induction. Specifically, we show that for all $t \geq 0$, ($i$) $\dist(\bcF_t, \bcF_{\star}) \leq (1 - 0.7 \eta)^t \dist(\bcF_0, \bcF_{\star}) \leq \frac{0.1}{\sqrt{\ell}} (1 - 0.7 \eta)^t \bar{\sigma}_{s_r} (\bcX_{\star})$ and ($ii$) the optimal alignment tensor $\bcQ_t$ between $\bcF_t$ and $\bcF_{\star}$ exists.

For the base case $t = 0$, the first induction hypothesis on the distance metric trivially holds, and the second one also holds because of Lemma~\ref{lemma:Qexistence} and the assumption that $\dist(\bcF_0, \bcF_{\star}) \leq \frac{0.1}{\sqrt{\ell}} \bar{\sigma}_{s_r} (\bcX_{\star})$. Suppose that the $t$-th iterate $\bcF_t$ obeys the aforementioned induction hypotheses. Our goal is to show that $\bcF_{t+1}$ also satisfies these conditions. Let $\epsilon \coloneq 0.1$. By the definition of $\dist^2(\bcF_{t+1},\bcF_{\star})$, we have
\begin{align*}
\dist^2(\bcF_{t+1},\bcF_{\star}) \leq \| (\bcL_{t+1} \ast_{\bPhi} \bcQ_t - \bcL_{\star}) \ast_{\bPhi} \bcG_{\star}^{\frac{1}{2}} \|_F^2 + \| (\bcR_{t+1} \ast_{\bPhi} \bcQ_t^{-H} - \bcR_{\star}) \ast_{\bPhi} \bcG_{\star}^{\frac{1}{2}} \|_F^2.
\end{align*}
According to the update rule \eqref{eqn:TFupdate}, we have
\begin{align}\label{eqn:TFexpand}
& (\bcL_{t+1} \ast_{\bPhi} \bcQ_t - \bcL_{\star}) \ast_{\bPhi} \bcG_{\star}^{\frac{1}{2}}  \nonumber \\
= & \Big( \bcL_{\sharp} - \eta (\bcL_{\sharp} \ast_{\bPhi} \bcR_{\sharp}^H - \bcX_{\star}) \ast_{\bPhi} \bcR_{\sharp} \ast_{\bPhi} (\bcR_{\sharp}^H \ast_{\bPhi} \bcR_{\sharp})^{-1} - \bcL_{\star} \Big) \ast_{\bPhi} \bcG_{\star}^{\frac{1}{2}}  \nonumber \\
= & \Big( \bcL_{\triangle} - \eta (\bcL_{\triangle} \ast_{\bPhi} \bcR_{\sharp}^H + \bcL_{\star} \ast_{\bPhi} \bcR_{\triangle}^H) \ast_{\bPhi} \bcR_{\sharp} \ast_{\bPhi} (\bcR_{\sharp}^H \ast_{\bPhi} \bcR_{\sharp})^{-1} \Big) \ast_{\bPhi} \bcG_{\star}^{\frac{1}{2}}  \nonumber \\
= & (1 - \eta) \bcL_{\triangle} \ast_{\bPhi} \bcG_{\star}^{\frac{1}{2}} - \eta \bcL_{\star} \ast_{\bPhi} \bcR_{\triangle}^H \ast_{\bPhi} \bcR_{\sharp} \ast_{\bPhi} (\bcR_{\sharp}^H \ast_{\bPhi} \bcR_{\sharp})^{-1} \ast_{\bPhi} \bcG_{\star}^{\frac{1}{2}}.
\end{align}
As a result, we can write \eqref{eqn:TFexpand} as $\| (\bcL_{t+1} \ast_{\bPhi} \bcQ_t - \bcL_{\star}) \ast_{\bPhi} \bcG_{\star}^{\frac{1}{2}} \|_F^2 = \frac{1}{\ell} \| (\widebar{\bL}_{t+1} \widebar{\bQ}_t - \widebar{\bL}_{\star}) \widebar{\bG}_{\star}^{\frac{1}{2}} \|_F^2$, where
\begin{align}\label{eqn:Q1}
\mathfrak{Q}_1 & \coloneq \| (\widebar{\bL}_{t+1} \widebar{\bQ}_t - \widebar{\bL}_{\star}) \widebar{\bG}_{\star}^{\frac{1}{2}} \|_F^2  \nonumber \\
& = (1 - \eta)^2 \tr( \widebar{\bL}_{\triangle} \widebar{\bG}_{\star} \widebar{\bL}_{\triangle}^H ) - 2 \eta (1 - \eta) \tr( \widebar{\bL}_{\star} \widebar{\bR}_{\triangle}^H \widebar{\bR}_{\sharp} (\widebar{\bR}_{\sharp}^H \widebar{\bR}_{\sharp})^{-1} \widebar{\bG}_{\star} \widebar{\bL}_{\triangle}^H )  \nonumber \\
&\qquad\qquad + \eta^2 \tr( \widebar{\bR}_{\sharp} (\widebar{\bR}_{\sharp}^H \widebar{\bR}_{\sharp})^{-1} \widebar{\bG}_{\star} (\widebar{\bR}_{\sharp}^H \widebar{\bR}_{\sharp})^{-1} \widebar{\bR}_{\sharp}^H \widebar{\bR}_{\triangle} \widebar{\bG}_{\star} \widebar{\bR}_{\triangle}^H ),
\end{align}
where we have used the fact $\widebar{\bL}_{\star}^H \widebar{\bL}_{\star} = \widebar{\bG}_{\star}$. Since $\bcL_{\sharp}$ and $\bcR_{\sharp}$ are aligned with $\bcL_{\star}$ and $\bcR_{\star}$, Lemma~\ref{lemma:Qcriterion} tells that $\bcG_{\star} \ast_{\bPhi} \bcL_{\triangle}^H \ast_{\bPhi} \bcL_{\sharp} = \bcR_{\sharp}^H \ast_{\bPhi} \bcR_{\triangle} \ast_{\bPhi} \bcG_{\star}$, i.e., $\widebar{\bG}_{\star} \widebar{\bL}_{\triangle}^H \widebar{\bL}_{\sharp} = \widebar{\bR}_{\sharp}^H \widebar{\bR}_{\triangle} \widebar{\bG}_{\star}$. The second term in \eqref{eqn:Q1} can be written as
\begin{align*}
& 2 \eta (1 - \eta) \tr( \widebar{\bL}_{\star} \widebar{\bR}_{\triangle}^H \widebar{\bR}_{\sharp} (\widebar{\bR}_{\sharp}^H \widebar{\bR}_{\sharp})^{-1} \widebar{\bG}_{\star} \widebar{\bL}_{\triangle}^H )  \\
= & 2 \eta (1 - \eta) \tr( \widebar{\bR}_{\sharp} (\widebar{\bR}_{\sharp}^H \widebar{\bR}_{\sharp})^{-1} \widebar{\bG}_{\star} \widebar{\bL}_{\triangle}^H \widebar{\bL}_{\star} \widebar{\bR}_{\triangle}^H )  \\
= & 2 \eta (1 - \eta) \tr( \widebar{\bR}_{\sharp} (\widebar{\bR}_{\sharp}^H \widebar{\bR}_{\sharp})^{-1} \widebar{\bG}_{\star} \widebar{\bL}_{\triangle}^H \widebar{\bL}_{\sharp} \widebar{\bR}_{\triangle}^H ) - 2 \eta (1 - \eta) \tr( \widebar{\bR}_{\sharp} (\widebar{\bR}_{\sharp}^H \widebar{\bR}_{\sharp})^{-1} \widebar{\bG}_{\star} \widebar{\bL}_{\triangle}^H \widebar{\bL}_{\triangle} \widebar{\bR}_{\triangle}^H )  \\
= & 2 \eta (1 - \eta) \tr( \widebar{\bR}_{\sharp} (\widebar{\bR}_{\sharp}^H \widebar{\bR}_{\sharp})^{-1} \widebar{\bR}_{\sharp}^H \widebar{\bR}_{\triangle} \widebar{\bG}_{\star} \widebar{\bR}_{\triangle}^H ) - 2 \eta (1 - \eta) \tr( \widebar{\bR}_{\sharp} (\widebar{\bR}_{\sharp}^H \widebar{\bR}_{\sharp})^{-1} \widebar{\bG}_{\star} \widebar{\bL}_{\triangle}^H \widebar{\bL}_{\triangle} \widebar{\bR}_{\triangle}^H ).
\end{align*}
The third term in \eqref{eqn:Q1} can be written as
\begin{align*}
& \eta^2 \tr( \widebar{\bR}_{\sharp} (\widebar{\bR}_{\sharp}^H \widebar{\bR}_{\sharp})^{-1} \widebar{\bG}_{\star} (\widebar{\bR}_{\sharp}^H \widebar{\bR}_{\sharp})^{-1} \widebar{\bR}_{\sharp}^H \widebar{\bR}_{\triangle} \widebar{\bG}_{\star} \widebar{\bR}_{\triangle}^H )  \\
= & \eta^2 \tr( \widebar{\bR}_{\sharp} (\widebar{\bR}_{\sharp}^H \widebar{\bR}_{\sharp})^{-1} \widebar{\bR}_{\sharp}^H \widebar{\bR}_{\triangle} \widebar{\bG}_{\star} \widebar{\bR}_{\triangle}^H )  \\
&\qquad - \eta^2 \tr( \widebar{\bR}_{\sharp} (\widebar{\bR}_{\sharp}^H \widebar{\bR}_{\sharp})^{-1} (\widebar{\bR}_{\sharp}^H \widebar{\bR}_{\sharp} - \widebar{\bG}_{\star}) (\widebar{\bR}_{\sharp}^H \widebar{\bR}_{\sharp})^{-1} \widebar{\bR}_{\sharp}^H \widebar{\bR}_{\triangle} \widebar{\bG}_{\star} \widebar{\bR}_{\triangle}^H ).
\end{align*}
Thus, \eqref{eqn:Q1} can be written in another form as
\begin{align*}
\mathfrak{Q}_1 & = (1 - \eta)^2 \tr( \widebar{\bL}_{\triangle} \widebar{\bG}_{\star} \widebar{\bL}_{\triangle}^H ) - \eta (2 - 3 \eta) \tr( \widebar{\bR}_{\sharp} (\widebar{\bR}_{\sharp}^H \widebar{\bR}_{\sharp})^{-1} \widebar{\bR}_{\sharp}^H \widebar{\bR}_{\triangle} \widebar{\bG}_{\star} \widebar{\bR}_{\triangle}^H )  \\
&\qquad\qquad + 2 \eta (1 - \eta) \tr( \widebar{\bR}_{\sharp} (\widebar{\bR}_{\sharp}^H \widebar{\bR}_{\sharp})^{-1} \widebar{\bG}_{\star} \widebar{\bL}_{\triangle}^H \widebar{\bL}_{\triangle} \widebar{\bR}_{\triangle}^H )  \\
&\qquad\qquad - \eta^2 \tr( \widebar{\bR}_{\sharp} (\widebar{\bR}_{\sharp}^H \widebar{\bR}_{\sharp})^{-1} (\widebar{\bR}_{\sharp}^H \widebar{\bR}_{\sharp} - \widebar{\bG}_{\star}) (\widebar{\bR}_{\sharp}^H \widebar{\bR}_{\sharp})^{-1} \widebar{\bR}_{\sharp}^H \widebar{\bR}_{\triangle} \widebar{\bG}_{\star} \widebar{\bR}_{\triangle}^H ).
\end{align*}
Following the proof in \citet[Section B.2]{TongMC.JMLR2021}, we can bound this term by
\begin{align*}
\mathfrak{Q}_1 & \leq \Big( (1 - \eta)^2 + \frac{2 \epsilon}{1 - \epsilon} \eta (1 - \eta) \Big) \tr( \widebar{\bL}_{\triangle} \widebar{\bG}_{\star} \widebar{\bL}_{\triangle}^H ) + \frac{2 \epsilon + \epsilon^2}{(1 - \epsilon)^2} \eta^2 \tr ( \widebar{\bR}_{\triangle} \widebar{\bG}_{\star} \widebar{\bR}_{\triangle}^H )  \\
& = \Big( (1 - \eta)^2 + \frac{2 \epsilon}{1 - \epsilon} \eta (1 - \eta) \Big) \| \widebar{\bL}_{\triangle} \widebar{\bG}_{\star}^{\frac{1}{2}} \|_F^2 + \frac{2 \epsilon + \epsilon^2}{(1 - \epsilon)^2} \eta^2 \| \widebar{\bR}_{\triangle} \widebar{\bG}_{\star}^{\frac{1}{2}} \|_F^2.
\end{align*}
Hence,
\begin{align*}
\| (\bcL_{t+1} \ast_{\bPhi} \bcQ_t - \bcL_{\star}) \ast_{\bPhi} \bcG_{\star}^{\frac{1}{2}} \|_F^2 \leq \Big( (1 - \eta)^2 + \frac{2 \epsilon}{1 - \epsilon} \eta (1 - \eta) \Big) \| \bcL_{\triangle} \ast_{\bPhi} \bcG_{\star}^{\frac{1}{2}} \|_F^2 + \frac{2 \epsilon + \epsilon^2}{(1 - \epsilon)^2} \eta^2 \| \bcR_{\triangle} \ast_{\bPhi} \bcG_{\star}^{\frac{1}{2}} \|_F^2.
\end{align*}
One can have a similar bound for the second term $\| (\bcR_{t+1} \ast_{\bPhi} \bcQ_t^{-H} - \bcR_{\star}) \ast_{\bPhi} \bcG_{\star}^{\frac{1}{2}} \|_F^2$. Therefore we obtain
\begin{align*}
\| (\bcL_{t+1} \ast_{\bPhi} \bcQ_t - \bcL_{\star}) \ast_{\bPhi} \bcG_{\star}^{\frac{1}{2}} \|_F^2 + \| (\bcR_{t+1} \ast_{\bPhi} \bcQ_t^{-H} - \bcR_{\star}) \ast_{\bPhi} \bcG_{\star}^{\frac{1}{2}} \|_F^2 \leq \hbar^2 (\eta;\epsilon) \dist^2(\bcF_t, \bcF_{\star}),
\end{align*}
where the contraction rate is given by
\begin{align*}
\hbar^2 (\eta;\epsilon) \coloneq (1 - \eta)^2 + \frac{2 \epsilon}{1 - \epsilon} \eta (1 - \eta) + \frac{2 \epsilon + \epsilon^2}{(1 - \epsilon)^2} \eta^2.
\end{align*}
With $\epsilon = 0.1$ and $0 < \eta \leq \frac{2}{3}$, we have $\hbar (\eta;\epsilon) \leq 1 - 0.7 \eta$. Thus we conclude that 
\begin{align*}
\dist(\bcF_{t+1}, \bcF_{\star}) & \leq \sqrt{\| (\bcL_{t+1} \ast_{\bPhi} \bcQ_t - \bcL_{\star}) \ast_{\bPhi} \bcG_{\star}^{\frac{1}{2}} \|_F^2 + \| (\bcR_{t+1} \ast_{\bPhi} \bcQ_t^{-H} - \bcR_{\star}) \ast_{\bPhi} \bcG_{\star}^{\frac{1}{2}} \|_F^2}  \\
& \leq (1 - 0.7 \eta) \dist(\bcF_t, \bcF_{\star})  \\
& \leq (1 - 0.7 \eta)^{t+1} \dist(\bcF_0, \bcF_{\star}) \leq (1 - 0.7 \eta)^{t+1} \frac{0.1}{\sqrt{\ell}} \bar{\sigma}_{s_r} (\bcX_{\star}).
\end{align*}
This proves the first induction hypothesis. The existence of the optimal alignment tensor $\bcQ_{t+1}$ between $\bcF_{t+1}$ and $\bcF_{\star}$ is assured by Lemma~\ref{lemma:Qexistence}, which finishes the proof for the second hypothesis.

The second conclusion is an easy consequence of Lemma~\ref{lemma:tensor2factor} as
\begin{align}\label{eqn:disttensor}
\| \bcL_t \ast_{\bPhi} \bcR_t^H - \bcX_{\star} \|_F & \leq (1 + \frac{\epsilon}{2}) ( \| \bcL_{\triangle} \ast_{\bPhi} \bcG_{\star}^{\frac{1}{2}} \|_F + \| \bcR_{\triangle} \ast_{\bPhi} \bcG_{\star}^{\frac{1}{2}} \|_F )  \nonumber \\
& \leq (1 + \frac{\epsilon}{2}) \sqrt{2} \dist(\bcF_t, \bcF_{\star})  \nonumber \\
& \leq 1.5 \dist(\bcF_t, \bcF_{\star}),
\end{align}
where we used $a + b \leq \sqrt{2 (a^2 + b^2)}$ in the second line. The proof is now completed.

\section{Proof for Tensor RPCA}
\label{sec:TRPCAproof}

Before presenting the proof of Lemma~\ref{lemma:TRPCAcontraction}, we present several auxiliary lemmas.
\begin{lemma}\label{lemma:Deltanorm}
If
\begin{align*} 
\dist(\bcF_t, \bcF_{\star}) & \leq \frac{\epsilon}{\sqrt{\ell}} \rho^t \bar{\sigma}_{s_r} (\bcX_{\star})
\end{align*}
for some $0 < \epsilon < 1$, then the following inequalities hold
\begin{align*}
\| \bcL_{\triangle} \ast_{\bPhi} \bcG_{\star}^{\frac{1}{2}} \|_F & \vee \| \bcR_{\triangle} \ast_{\bPhi} \bcG_{\star}^{\frac{1}{2}} \|_F \leq \frac{\epsilon}{\sqrt{\ell}} \rho^t \bar{\sigma}_{s_r} (\bcX_{\star});  \\
\| \bcL_{\triangle} \ast_{\bPhi} \bcG_{\star}^{\frac{1}{2}} \| & \vee \| \bcR_{\triangle} \ast_{\bPhi} \bcG_{\star}^{\frac{1}{2}} \| \leq \epsilon \rho^t \bar{\sigma}_{s_r} (\bcX_{\star});  \\
\| \bcL_{\sharp} \ast_{\bPhi} (\bcL_{\sharp}^H \ast_{\bPhi} \bcL_{\sharp})^{-1} \ast_{\bPhi} \bcG_{\star}^{\frac{1}{2}} \| & \vee \| \bcR_{\sharp} \ast_{\bPhi} (\bcR_{\sharp}^H \ast_{\bPhi} \bcR_{\sharp})^{-1} \ast_{\bPhi} \bcG_{\star}^{\frac{1}{2}} \| \leq \frac{1}{1 - \epsilon}.
\end{align*}
\end{lemma}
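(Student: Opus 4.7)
\textbf{Proof proposal for Lemma~\ref{lemma:Deltanorm}.}

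The plan is to unpack the three claims in order, since each relies on the previous. For the first claim, I would use the fact that $\bcQ_t$ is the optimal alignment tensor, so by Definition~\ref{def:distF},
\begin{align*}
\dist^2(\bcF_t,\bcF_{\star}) = \| \bcL_{\triangle} \ast_{\bPhi} \bcG_{\star}^{\frac{1}{2}} \|_F^2 + \| \bcR_{\triangle} \ast_{\bPhi} \bcG_{\star}^{\frac{1}{2}} \|_F^2.
\end{align*}
Since both summands on the right are nonnegative, each of them is at most $\dist^2(\bcF_t,\bcF_{\star}) \leq \tfrac{\epsilon^2}{\ell}\rho^{2t}\bar{\sigma}_{s_r}^2(\bcX_{\star})$, proving the first inequality.

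For the second claim, I would convert to the block-diagonal domain and relate the spectral and Frobenius norms. By definition of the tensor spectral norm and the identity $\|\widebar{\bA}\|_F = \sqrt{\ell}\,\|\bcA\|_F$ from \eqref{eqn:tensorproperty}, we have
\begin{align*}
\| \bcL_{\triangle} \ast_{\bPhi} \bcG_{\star}^{\frac{1}{2}} \| \;=\; \| \widebar{\bL}_{\triangle}\widebar{\bG}_{\star}^{\frac{1}{2}} \| \;\leq\; \| \widebar{\bL}_{\triangle}\widebar{\bG}_{\star}^{\frac{1}{2}} \|_F \;=\; \sqrt{\ell}\,\| \bcL_{\triangle} \ast_{\bPhi} \bcG_{\star}^{\frac{1}{2}} \|_F,
\end{align*}
and combining with the first inequality yields the factor of $\sqrt{\ell}$ that exactly cancels the $1/\sqrt{\ell}$, giving the stated bound $\epsilon\rho^t\bar{\sigma}_{s_r}(\bcX_{\star})$. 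The same reasoning applies to the $\bcR_{\triangle}$ factor.

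For the third claim, the natural route is via Lemma~\ref{lemma:Weyl}~\eqref{eqn:Weyl-1L}, which states
\begin{align*}
\| \bcL_{\sharp} \ast_{\bPhi} (\bcL_{\sharp}^H \ast_{\bPhi} \bcL_{\sharp})^{-1} \ast_{\bPhi} \bcG_{\star}^{\frac{1}{2}} \| \;\leq\; \frac{1}{1 - \| \bcL_{\triangle} \ast_{\bPhi} \bcG_{\star}^{-\frac{1}{2}} \|},
\end{align*}
provided the denominator is positive. So the remaining task is to bound $\| \bcL_{\triangle} \ast_{\bPhi} \bcG_{\star}^{-\frac{1}{2}} \|$ by $\epsilon\rho^t \leq \epsilon$. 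For this I would write $\widebar{\bL}_{\triangle}\widebar{\bG}_{\star}^{-\frac{1}{2}} = (\widebar{\bL}_{\triangle}\widebar{\bG}_{\star}^{\frac{1}{2}})\widebar{\bG}_{\star}^{-1}$, apply submultiplicativity to get $\|\widebar{\bL}_{\triangle}\widebar{\bG}_{\star}^{-\frac{1}{2}}\| \leq \|\widebar{\bL}_{\triangle}\widebar{\bG}_{\star}^{\frac{1}{2}}\|/\bar{\sigma}_{s_r}(\bcX_{\star})$, and then invoke the second inequality to obtain $\|\bcL_{\triangle} \ast_{\bPhi} \bcG_{\star}^{-\frac{1}{2}}\| \leq \epsilon\rho^t \leq \epsilon < 1$. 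Using monotonicity of $x \mapsto 1/(1-x)$ on $[0,1)$ then delivers the bound $1/(1-\epsilon)$; the $\bcR$-version follows symmetrically via \eqref{eqn:Weyl-1R}.

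No single step looks delicate in isolation; the only point requiring care is the bookkeeping with the scalar $\ell$ (which appears because of the constraint $\bPhi\bPhi^H = \ell\bI_{n_3}$). The factor $\sqrt{\ell}$ lost when converting Frobenius to spectral norm is precisely what is needed for the $1/\sqrt{\ell}$ in the hypothesis to disappear in the second claim, and the division by $\bar{\sigma}_{s_r}(\bcX_{\star})$ in passing from $\bcG_{\star}^{1/2}$ to $\bcG_{\star}^{-1/2}$ is precisely what is needed to match up in the third claim.
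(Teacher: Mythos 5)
Your proposal is correct and follows essentially the same route as the paper: the first claim from the definition of $\dist$ with the optimal alignment tensor, the second from $\|\bcA\| = \|\widebar{\bA}\| \leq \|\widebar{\bA}\|_F = \sqrt{\ell}\,\|\bcA\|_F$, and the third by bounding $\|\bcL_{\triangle}\ast_{\bPhi}\bcG_{\star}^{-\frac{1}{2}}\| \vee \|\bcR_{\triangle}\ast_{\bPhi}\bcG_{\star}^{-\frac{1}{2}}\| \leq \epsilon\rho^t \leq \epsilon$ and invoking \eqref{eqn:Weyl-1L}--\eqref{eqn:Weyl-1R}. Your write-up is in fact slightly more explicit than the paper's (e.g., spelling out the submultiplicativity step for the passage from $\bcG_{\star}^{1/2}$ to $\bcG_{\star}^{-1/2}$), and the $\ell$-bookkeeping is handled correctly.
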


\begin{proof}
Notice that $\dist(\bcF_t, \bcF_{\star}) \geq \sqrt{\| \bcL_{\triangle} \ast_{\bPhi} \bcG_{\star}^{\frac{1}{2}} \|_F^2 \vee \| \bcR_{\triangle} \ast_{\bPhi} \bcG_{\star}^{\frac{1}{2}} \|_F^2}$. The first claim is directly followed by the definition of $\dist(\cdot, \cdot)$. By the fact that $\| \bcA \| = \| \widebar{\bA} \| \leq \| \widebar{\bA} \|_F = \| \xoverline{\bcA} \|_F = \sqrt{\ell} \| \bcA \|_F$ for any tensor, the second claim can be deduced from the first claim. As a consequence, we have $\| \bcL_{\triangle} \ast_{\bPhi} \bcG_{\star}^{-\frac{1}{2}} \| \vee \| \bcR_{\triangle} \ast_{\bPhi} \bcG_{\star}^{-\frac{1}{2}} \| \leq \epsilon \rho^t \leq \epsilon$, given $\rho = 1 - 0.6 \eta < 1$, then the third claim can be obtained by applying \eqref{eqn:Weyl-1L} and \eqref{eqn:Weyl-1R}.
\end{proof}
\begin{lemma}\label{lemma:Qperturbation}
If 
\begin{align*}
\| (\bcL_{t+1} \ast_{\bPhi} \bcQ_t - \bcL_{\star}) \ast_{\bPhi} \bcG_{\star}^{\frac{1}{2}} \| \vee \| (\bcR_{t+1} \ast_{\bPhi} \bcQ_t^{-H} - \bcR_{\star}) \ast_{\bPhi} \bcG_{\star}^{\frac{1}{2}} \| \leq \epsilon \rho^{t+1} \bar{\sigma}_{s_r} (\bcX_{\star}),
\end{align*}
then
\begin{align*}
& \| \bcG_{\star}^{\frac{1}{2}} \ast_{\bPhi} \bcQ_t^{-1} \ast_{\bPhi} (\bcQ_{t+1} - \bcQ_t) \ast_{\bPhi} \bcG_{\star}^{\frac{1}{2}} \| \vee \| \bcG_{\star}^{\frac{1}{2}} \ast_{\bPhi} \bcQ_t^H \ast_{\bPhi} (\bcQ_{t+1} - \bcQ_t)^{-H} \ast_{\bPhi} \bcG_{\star}^{\frac{1}{2}} \|  \\
& \leq \frac{2 \epsilon}{1 - \epsilon} \rho^{t+1} \bar{\sigma}_{s_r} (\bcX_{\star}).
\end{align*}
\end{lemma}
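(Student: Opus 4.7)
My plan is to exploit the first-order optimality condition satisfied by $\bcQ_{t+1}$ from Lemma~\ref{lemma:Qcriterion} applied to $\bcF_{t+1}$, reparameterized through the relative alignment $\bcH \coloneq \bcQ_t^{-1} \ast_{\bPhi} \bcQ_{t+1}$. Introduce the shorthands $\widetilde{\bcL} \coloneq \bcL_{t+1} \ast_{\bPhi} \bcQ_t$, $\widetilde{\bcR} \coloneq \bcR_{t+1} \ast_{\bPhi} \bcQ_t^{-H}$, $\widetilde{\bcL}_{\triangle} \coloneq \widetilde{\bcL} - \bcL_{\star}$, $\widetilde{\bcR}_{\triangle} \coloneq \widetilde{\bcR} - \bcR_{\star}$. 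A direct identity (using $\bcQ_t^H \ast_{\bPhi} \bcQ_{t+1}^{-H} = \bcH^{-H}$) shows that the two target quantities equal $\bcG_{\star}^{\frac{1}{2}} \ast_{\bPhi} (\bcH - \bcI) \ast_{\bPhi} \bcG_{\star}^{\frac{1}{2}}$ and $\bcG_{\star}^{\frac{1}{2}} \ast_{\bPhi} (\bcH^{-H} - \bcI) \ast_{\bPhi} \bcG_{\star}^{\frac{1}{2}}$, respectively. Moreover, dividing the hypothesis by $\bar{\sigma}_{s_r}(\bcX_{\star})$ yields the normalized bounds $\| \widetilde{\bcL}_{\triangle} \ast_{\bPhi} \bcG_{\star}^{-\frac{1}{2}} \| \vee \| \widetilde{\bcR}_{\triangle} \ast_{\bPhi} \bcG_{\star}^{-\frac{1}{2}} \| \leq \epsilon \rho^{t+1}$, which is the scale at which Lemma~\ref{lemma:Weyl} operates.

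Substituting $\bcL_{t+1} \ast_{\bPhi} \bcQ_{t+1} = \widetilde{\bcL} \ast_{\bPhi} \bcH$ and $\bcR_{t+1} \ast_{\bPhi} \bcQ_{t+1}^{-H} = \widetilde{\bcR} \ast_{\bPhi} \bcH^{-H}$ into Lemma~\ref{lemma:Qcriterion}, expanding via $\widetilde{\bcL} = \bcL_{\star} + \widetilde{\bcL}_{\triangle}$ and $\widetilde{\bcR} = \bcR_{\star} + \widetilde{\bcR}_{\triangle}$, and using $\bcL_{\star}^H \ast_{\bPhi} \bcL_{\star} = \bcR_{\star}^H \ast_{\bPhi} \bcR_{\star} = \bcG_{\star}$, the principal terms combine into the structural identity
\[
\bcH^H \ast_{\bPhi} \bcG_{\star} \ast_{\bPhi} (\bcH - \bcI) \ast_{\bPhi} \bcG_{\star} + \bcG_{\star} \ast_{\bPhi} (\bcI - \bcH^{-1}) \ast_{\bPhi} \bcG_{\star} \ast_{\bPhi} \bcH^{-H} = \mathfrak{R},
\]
where the residual $\mathfrak{R}$ gathers every summand carrying at least one factor of $\widetilde{\bcL}_{\triangle}$ or $\widetilde{\bcR}_{\triangle}$. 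Passing to the transform domain turns every $\ast_{\bPhi}$ into block-diagonal matrix multiplication and decouples the identity into $n_3$ independent matrix equations of the type treated in the matrix analog in \cite{TongMC.JMLR2021}.

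On each frontal block, the plan is the following. First, apply Lemma~\ref{lemma:Weyl} to $\widetilde{\bcL}$ and $\widetilde{\bcR}$ to obtain an \emph{a priori} bound $\| \bcG_{\star}^{\frac{1}{2}}(\bcH - \bcI) \bcG_{\star}^{\frac{1}{2}} \| = \mathcal{O}(\epsilon \rho^{t+1}) \bar{\sigma}_{s_r}(\bcX_{\star})$. Second, Taylor-expand $\bcI - \bcH^{-1} = (\bcH - \bcI) - (\bcH - \bcI)^2 + \cdots$ and $\bcH^{-H} = \bcI - (\bcH - \bcI)^H + \cdots$ around $\bcI$, whereby the left-hand side of the structural identity reduces to $2\, \bcG_{\star} \ast_{\bPhi} (\bcH - \bcI) \ast_{\bPhi} \bcG_{\star}$ plus higher-order tails. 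Third, bound $\|\mathfrak{R}\|$ by Cauchy-Schwarz using the normalized error bounds on $\widetilde{\bcL}_{\triangle}, \widetilde{\bcR}_{\triangle}$, solve for $\bcG_{\star}^{\frac{1}{2}}(\bcH - \bcI) \bcG_{\star}^{\frac{1}{2}}$, and absorb the higher-order tails into a $\frac{1}{1-\epsilon}$ factor to obtain the claimed bound $\frac{2\epsilon}{1-\epsilon} \rho^{t+1} \bar{\sigma}_{s_r}(\bcX_{\star})$. The companion bound on $\bcH^{-H} - \bcI$ follows from the identity $\bcH^{-H} - \bcI = -\bcH^{-H} \ast_{\bPhi} (\bcH - \bcI)^H$ combined with $\|\bcH^{-H}\| \leq 1/(1 - \|\bcH - \bcI\|)$. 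The main technical obstacle is the second step: sharply tracking the higher-order Taylor tails so that they are absorbed into the $\frac{1}{1-\epsilon}$ factor rather than degrading the leading constant, since the nonlinear coupling between $\bcH$ and $\bcH^{-H}$ in the optimality equation is precisely what makes this inversion delicate.
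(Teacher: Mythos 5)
Your plan has a genuine gap at its load-bearing step: the \emph{a priori} bound on $\bcH-\bcI$. Lemma~\ref{lemma:Weyl} only controls quantities of the form $\bcL_{\sharp}\ast_{\bPhi}(\bcL_{\sharp}^H\ast_{\bPhi}\bcL_{\sharp})^{-1}\ast_{\bPhi}\bcG_{\star}^{\frac{1}{2}}$ built from a \emph{single} aligned factor; it says nothing about $\bcQ_{t+1}$, hence nothing about $\bcH=\bcQ_t^{-1}\ast_{\bPhi}\bcQ_{t+1}$. Without an a priori bound $\|\bcH-\bcI\|<1$ you cannot justify the Neumann/Taylor expansion of $\bcH^{-1}$, and even granting convergence, your final step produces a self-referential inequality of the schematic form $2X\le 2\epsilon\rho^{t+1}\bar{\sigma}_{s_r}+C\epsilon X+C'X^2/\bar{\sigma}_{s_r}$ with $X=\|\bcG_{\star}^{\frac{1}{2}}\ast_{\bPhi}(\bcH-\bcI)\ast_{\bPhi}\bcG_{\star}^{\frac{1}{2}}\|$ (the quadratic terms come from $\mathfrak{R}$ and from the tails of $\bcH^{-1}$). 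Such an inequality admits a spurious large solution $X\asymp\bar{\sigma}_{s_r}$, so ``solving for $\bcH-\bcI$'' does not close without exactly the a priori smallness you set out to prove. The stationarity route via Lemma~\ref{lemma:Qcriterion} is therefore a detour that cannot be completed as described.

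The paper's proof avoids the optimality equation entirely. It uses the exact factorization $\bcR_{t+1}\ast_{\bPhi}(\bcQ_t^{-H}-\bcQ_{t+1}^{-H})\ast_{\bPhi}\bcG_{\star}^{\frac{1}{2}}=\big(\bcR_{t+1}\ast_{\bPhi}\bcQ_{t+1}^{-H}\ast_{\bPhi}\bcG_{\star}^{-\frac{1}{2}}\big)\ast_{\bPhi}\big(\bcG_{\star}^{\frac{1}{2}}\ast_{\bPhi}(\bcH^H-\bcI)\ast_{\bPhi}\bcG_{\star}^{\frac{1}{2}}\big)$ (the tensor version of \cite[Lemma 27]{TongMC.JMLR2021}): the smallest singular value of the first factor is at least $1-\epsilon\rho^{t+1}$ by a Weyl-type argument, and the left-hand side is bounded by the triangle inequality as $\|(\bcR_{t+1}\ast_{\bPhi}\bcQ_t^{-H}-\bcR_{\star})\ast_{\bPhi}\bcG_{\star}^{\frac{1}{2}}\|+\|(\bcR_{t+1}\ast_{\bPhi}\bcQ_{t+1}^{-H}-\bcR_{\star})\ast_{\bPhi}\bcG_{\star}^{\frac{1}{2}}\|\le 2\epsilon\rho^{t+1}\bar{\sigma}_{s_r}(\bcX_{\star})$, where the second summand is controlled by the \emph{minimality} of $\bcQ_{t+1}$ --- an ingredient absent from your argument (your hypothesis only controls the error at the old alignment $\bcQ_t$). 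Dividing gives $\frac{2\epsilon}{1-\epsilon}\rho^{t+1}\bar{\sigma}_{s_r}(\bcX_{\star})$ directly, with the companion bound obtained symmetrically from the $\bcL$-factor. If you want to salvage your approach, you would first have to establish the a priori bound by this very argument, at which point the structural identity is redundant.
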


\begin{proof}
First, following the proof of \citet[Lemma 27]{TongMC.JMLR2021}, we have the following inequalities:
\begin{align*}
\| \bcG_{\star}^{\frac{1}{2}} \ast_{\bPhi} \widetilde{\bcQ}^{-1} \ast_{\bPhi} \bcQ \ast_{\bPhi} \bcG_{\star}^{\frac{1}{2}} - \bcG_{\star} \| & \leq \frac{\| \bcR_{\sharp} \ast_{\bPhi} (\widetilde{\bcQ}^{-H} - \bcQ^{-H}) \ast_{\bPhi} \bcG_{\star}^{\frac{1}{2}} \|}{1 - \| (\bcR_{\sharp} \ast_{\bPhi} \bcQ^{-H} - \bcR_{\star}) \ast_{\bPhi} \bcG_{\star}^{-\frac{1}{2}} \|},  \\
\| \bcG_{\star}^{\frac{1}{2}} \ast_{\bPhi} \widetilde{\bcQ}^H \ast_{\bPhi} \bcQ^{-H} \ast_{\bPhi} \bcG_{\star}^{\frac{1}{2}} - \bcG_{\star} \| & \leq \frac{\| \bcL_{\sharp} \ast_{\bPhi} (\widetilde{\bcQ} - \bcQ) \ast_{\bPhi} \bcG_{\star}^{\frac{1}{2}} \|}{1 - \| (\bcL_{\sharp} \ast_{\bPhi} \bcQ - \bcL_{\star}) \ast_{\bPhi} \bcG_{\star}^{-\frac{1}{2}} \|}
\end{align*}
for any $\bcL_{\sharp} \in \R^{n_1 \times r \times n_3}$, $\bcR_{\sharp} \in \R^{n_2 \times r \times n_3}$ and any invertible tensors $\bcQ, \widetilde{\bcQ} \in \mathrm{GL}(r)$, as long as $\| (\bcL_{\sharp} \ast_{\bPhi} \bcQ - \bcL_{\star}) \ast_{\bPhi} \bcG_{\star}^{-\frac{1}{2}} \| \vee \| (\bcR_{\sharp} \ast_{\bPhi} \bcQ^{-H} - \bcR_{\star}) \ast_{\bPhi} \bcG_{\star}^{-\frac{1}{2}} \| < 1$.

Next, notice that $\| (\bcR_{t+1} \ast_{\bPhi} \bcQ_{t+1}^{-H} - \bcR_{\star}) \ast_{\bPhi} \bcG_{\star}^{\frac{1}{2}} \| \leq \epsilon \rho^{t+1} \bar{\sigma}_{s_r} (\bcX_{\star})$, i.e., $\| (\bcR_{t+1} \ast_{\bPhi} \bcQ_{t+1}^{-H} - \bcR_{\star}) \ast_{\bPhi} \bcG_{\star}^{-\frac{1}{2}} \| \leq \epsilon \rho^{t+1}$. Thus, by taking $\bcR_{\sharp} = \bcR_{t+1}$, $\bcQ = \bcQ_{t+1}$, and $\widetilde{\bcQ} = \bcQ_t$, we obtain
\begin{align*}
&\| \bcG_{\star}^{\frac{1}{2}} \ast_{\bPhi} \bcQ_t^{-1} \ast_{\bPhi} (\bcQ_{t+1} - \bcQ_t) \ast_{\bPhi} \bcG_{\star}^{\frac{1}{2}} \|  \\
= & \| \bcG_{\star}^{\frac{1}{2}} \ast_{\bPhi} \bcQ_t^{-1} \ast_{\bPhi} \bcQ_{t+1} \ast_{\bPhi} \bcG_{\star}^{\frac{1}{2}} - \bcG_{\star} \|  \\
\leq & \frac{\| \bcR_{t+1} \ast_{\bPhi} (\bcQ_t^{-H} - \bcQ_{t+1}^{-H}) \ast_{\bPhi} \bcG_{\star}^{\frac{1}{2}} \|}{1 - \| (\bcR_{t+1} \ast_{\bPhi} \bcQ_{t+1}^{-H} - \bcR_{\star}) \ast_{\bPhi} \bcG_{\star}^{-\frac{1}{2}} \|}  \\
\leq & \frac{\| (\bcR_{t+1} \ast_{\bPhi} \bcQ_t^{-H} - \bcR_{\star}) \ast_{\bPhi} \bcG_{\star}^{\frac{1}{2}} \| + \| (\bcR_{t+1} \ast_{\bPhi} \bcQ_{t+1}^{-H} - \bcR_{\star}) \ast_{\bPhi} \bcG_{\star}^{\frac{1}{2}} \|}{1 - \| (\bcR_{t+1} \ast_{\bPhi} \bcQ_{t+1}^{-H} - \bcR_{\star}) \ast_{\bPhi} \bcG_{\star}^{-\frac{1}{2}} \|}  \\
\leq & \frac{2 \epsilon \rho^{t+1}}{1 - \epsilon \rho^{t+1}} \bar{\sigma}_{s_r} (\bcX_{\star})  \\
\leq & \frac{2 \epsilon}{1 - \epsilon} \rho^{t+1} \bar{\sigma}_{s_r} (\bcX_{\star}),
\end{align*}
provided $\rho = 1 - 0.6 \eta < 1$. Similarly, one can see
\begin{align*}
\| \bcG_{\star}^{\frac{1}{2}} \ast_{\bPhi} \bcQ_t^H \ast_{\bPhi} (\bcQ_{t+1} - \bcQ_t)^{-H} \ast_{\bPhi} \bcG_{\star}^{\frac{1}{2}} \| \leq \frac{2 \epsilon}{1 - \epsilon} \rho^{t+1} \bar{\sigma}_{s_r} (\bcX_{\star}).
\end{align*}
This finishes the proof.
\end{proof}
\begin{lemma}\label{lemma:Xinfnorm}
If
\begin{align*}
\dist(\bcF_t, \bcF_{\star}) & \leq \frac{\epsilon}{\sqrt{\ell}} \rho^t \bar{\sigma}_{s_r} (\bcX_{\star}) \quad \mathrm{and}  \\
\sqrt{n_1} \| \bcL_{\triangle} \ast_{\bPhi} \bcG_{\star}^{\frac{1}{2}} \|_{2,\infty} & \vee \sqrt{n_2} \| \bcR_{\triangle} \ast_{\bPhi} \bcG_{\star}^{\frac{1}{2}} \|_{2,\infty} \leq \sqrt{\frac{\mu s_r}{n_3 \ell}} \rho^t \bar{\sigma}_{s_r} (\bcX_{\star}),
\end{align*}
then
\begin{align*}
\| \bcX_{\star} - \bcX_t \|_{\infty} \leq 3 \frac{\mu s_r}{n_3 \sqrt{n_1 n_2 \ell}} \rho^t \bar{\sigma}_{s_r} (\bcX_{\star}).
\end{align*}
\end{lemma}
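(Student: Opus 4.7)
The plan is to expand $\bcX_{\star} - \bcX_t$ into three first/second-order terms and bound each one in $\ell_{\infty}$ using Lemma~\ref{lemma:infnormbound}, combined with the incoherence hypothesis and the assumed $\ell_{2,\infty}$ control on the factor errors. Since $\bcX_t = \bcL_t \ast_{\bPhi} \bcR_t^H = \bcL_{\sharp} \ast_{\bPhi} \bcR_{\sharp}^H$ (the $\bcQ_t$ cancels), and $\bcX_{\star} = \bcL_{\star} \ast_{\bPhi} \bcR_{\star}^H$, a direct decomposition gives
\begin{align*}
\bcX_{\star} - \bcX_t = -\bcL_{\triangle} \ast_{\bPhi} \bcR_{\star}^H - \bcL_{\star} \ast_{\bPhi} \bcR_{\triangle}^H - \bcL_{\triangle} \ast_{\bPhi} \bcR_{\triangle}^H.
\end{align*}
The goal is then to show each of the three terms is bounded in $\ell_{\infty}$ by $\frac{\mu s_r}{n_3 \sqrt{n_1 n_2 \ell}} \rho^t \bar{\sigma}_{s_r}(\bcX_{\star})$, so that the triangle inequality gives the claimed factor of $3$.

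For the first (linear) term, I would rewrite $\bcR_{\star}^H = \bcG_{\star}^{\frac{1}{2}} \ast_{\bPhi} \bcV_{\star}^H$, so that $\bcL_{\triangle} \ast_{\bPhi} \bcR_{\star}^H = (\bcL_{\triangle} \ast_{\bPhi} \bcG_{\star}^{\frac{1}{2}}) \ast_{\bPhi} \bcV_{\star}^H$. Then Lemma~\ref{lemma:infnormbound} gives
\begin{align*}
\| \bcL_{\triangle} \ast_{\bPhi} \bcR_{\star}^H \|_{\infty} \leq \sqrt{\ell}\, \| \bcL_{\triangle} \ast_{\bPhi} \bcG_{\star}^{\frac{1}{2}} \|_{2,\infty} \cdot \| \bcV_{\star} \|_{2,\infty},
\end{align*}
and plugging in the hypothesis $\| \bcL_{\triangle} \ast_{\bPhi} \bcG_{\star}^{\frac{1}{2}} \|_{2,\infty} \leq \frac{1}{\sqrt{n_1}}\sqrt{\frac{\mu s_r}{n_3 \ell}}\rho^t \bar{\sigma}_{s_r}(\bcX_{\star})$ together with the incoherence bound $\| \bcV_{\star} \|_{2,\infty} \leq \sqrt{\mu s_r / (n_2 n_3 \ell)}$ from Definition~\ref{def:incoherence} yields exactly $\frac{\mu s_r}{n_3 \sqrt{n_1 n_2 \ell}} \rho^t \bar{\sigma}_{s_r}(\bcX_{\star})$. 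The second term is bounded symmetrically after writing $\bcL_{\star} = \bcU_{\star} \ast_{\bPhi} \bcG_{\star}^{\frac{1}{2}}$.

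The main (but modest) obstacle is the quadratic cross-term $\bcL_{\triangle} \ast_{\bPhi} \bcR_{\triangle}^H$, which does not carry a $\bcG_{\star}^{\frac{1}{2}}$ in an obvious place. I would insert $\bcG_{\star}^{\frac{1}{2}} \ast_{\bPhi} \bcG_{\star}^{-1} \ast_{\bPhi} \bcG_{\star}^{\frac{1}{2}}$ and write
\begin{align*}
\bcL_{\triangle} \ast_{\bPhi} \bcR_{\triangle}^H = \big( \bcL_{\triangle} \ast_{\bPhi} \bcG_{\star}^{\frac{1}{2}} \ast_{\bPhi} \bcG_{\star}^{-1} \big) \ast_{\bPhi} \big( \bcR_{\triangle} \ast_{\bPhi} \bcG_{\star}^{\frac{1}{2}} \big)^H,
\end{align*}
and then apply Lemma~\ref{lemma:infnormbound} followed by Lemma~\ref{lemma:2infbound} to pull out $\| \bcG_{\star}^{-1} \| = 1/\bar{\sigma}_{s_r}(\bcX_{\star})$, giving
\begin{align*}
\| \bcL_{\triangle} \ast_{\bPhi} \bcR_{\triangle}^H \|_{\infty} \leq \frac{\sqrt{\ell}}{\bar{\sigma}_{s_r}(\bcX_{\star})} \| \bcL_{\triangle} \ast_{\bPhi} \bcG_{\star}^{\frac{1}{2}} \|_{2,\infty} \| \bcR_{\triangle} \ast_{\bPhi} \bcG_{\star}^{\frac{1}{2}} \|_{2,\infty} \leq \frac{\mu s_r}{n_3 \sqrt{n_1 n_2 \ell}} \rho^{2t} \bar{\sigma}_{s_r}(\bcX_{\star}),
\end{align*}
which is at most $\frac{\mu s_r}{n_3 \sqrt{n_1 n_2 \ell}} \rho^t \bar{\sigma}_{s_r}(\bcX_{\star})$ since $\rho \leq 1$. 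Summing the three bounds completes the proof.
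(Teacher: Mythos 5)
Your proof is correct and follows essentially the same route as the paper: both bound $\|\bcX_\star-\bcX_t\|_\infty$ via Lemma~\ref{lemma:infnormbound} together with the incoherence and $\ell_{2,\infty}$ hypotheses. The only cosmetic difference is that the paper uses the two-term decomposition $\bcL_{\triangle}\ast_{\bPhi}\bcR_{\sharp}^H+\bcL_{\star}\ast_{\bPhi}\bcR_{\triangle}^H$ and gets the constant $3$ as $2+1$ (from $\|\bcR_{\sharp}\ast_{\bPhi}\bcG_{\star}^{-\frac{1}{2}}\|_{2,\infty}\le 2\sqrt{\mu s_r/(n_2 n_3\ell)}$), whereas you use the symmetric three-term decomposition and get $3$ as $1+1+1$, with the cross term contributing $\rho^{2t}\le\rho^t$.
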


\begin{proof}
First, based on Definition~\ref{def:incoherence} and the assumption of this lemma, we have
\begin{align*}
\| \bcR_{\sharp} \ast_{\bPhi} \bcG_{\star}^{-\frac{1}{2}} \|_{2,\infty} & \leq \| \bcR_{\triangle} \ast_{\bPhi} \bcG_{\star}^{\frac{1}{2}} \|_{2,\infty} \| \bcG_{\star}^{-1} \| + \| \bcR_{\star} \ast_{\bPhi} \bcG_{\star}^{-\frac{1}{2}} \|_{2,\infty}  \\
& \leq (\rho^t + 1) \sqrt{\frac{\mu s_r}{n_2 n_3 \ell}} \leq 2 \sqrt{\frac{\mu s_r}{n_2 n_3 \ell}}.
\end{align*}
Furthermore, using Lemma~\ref{lemma:infnormbound}, we obtain
\begin{align*}
\| \bcX_{\star} - \bcX_t \|_{\infty} & = \| \bcL_{\triangle} \ast_{\bPhi} \bcR_{\sharp}^H + \bcL_{\star} \ast_{\bPhi} \bcR_{\triangle}^H \|_{\infty} \leq \| \bcL_{\triangle} \ast_{\bPhi} \bcR_{\sharp}^H \|_{\infty} + \| \bcL_{\star} \ast_{\bPhi} \bcR_{\triangle}^H \|_{\infty}  \\
& \leq \sqrt{\ell} \| \bcL_{\triangle} \ast_{\bPhi} \bcG_{\star}^{\frac{1}{2}} \|_{2,\infty} \| \bcR_{\sharp} \ast_{\bPhi} \bcG_{\star}^{-\frac{1}{2}} \|_{2,\infty} + \sqrt{\ell} \| \bcL_{\star} \ast_{\bPhi} \bcG_{\star}^{-\frac{1}{2}} \|_{2,\infty} \| \bcR_{\triangle} \ast_{\bPhi} \bcG_{\star}^{\frac{1}{2}} \|_{2,\infty}  \\
& \leq \sqrt{\ell} (2 \sqrt{\frac{\mu s_r}{n_2 n_3 \ell}} + \sqrt{\frac{\mu s_r}{n_2 n_3 \ell}} ) \sqrt{\frac{\mu s_r}{n_1 n_3 \ell}} \rho^t \bar{\sigma}_{s_r} (\bcX_{\star})  \\
& = 3 \frac{\mu s_r}{n_3 \sqrt{n_1 n_2 \ell}} \rho^t \bar{\sigma}_{s_r} (\bcX_{\star}).
\end{align*}
This finishes the proof.
\end{proof}

\subsection{Proof of Lemma~\ref{lemma:TRPCAcontraction}}

This proof is done by induction.

\paragraph{Base case.} Since $\rho^0 = 1$, the assumed initial conditions satisfy the base case at $t = 0$.

\paragraph{Induction step.} At the $t$-th iteration, we assume the conditions
\begin{align*}
\dist(\bcF_t, \bcF_{\star}) & \leq \frac{\epsilon}{\sqrt{\ell}} \rho^t \bar{\sigma}_{s_r} (\bcX_{\star}),  \\
\sqrt{n_1} \| (\bcL_t \ast_{\bPhi} \bcQ_t - \bcL_{\star}) \ast_{\bPhi} \bcG_{\star}^{\frac{1}{2}} \|_{2,\infty} & \vee \sqrt{n_2} \| (\bcR_t \ast_{\bPhi} \bcQ_t^{-H} - \bcR_{\star}) \ast_{\bPhi} \bcG_{\star}^{\frac{1}{2}} \|_{2,\infty} \leq \sqrt{\frac{\mu s_r}{n_3 \ell}} \rho^t \bar{\sigma}_{s_r} (\bcX_{\star})
\end{align*}
hold. In view of the condition $\dist(\bcF_t, \bcF_{\star}) \leq \frac{0.02}{\sqrt{\ell}} \rho^t \bar{\sigma}_{s_r} (\bcX_{\star})$ and Lemma~\ref{lemma:Qexistence}, one knows that $\bcQ_t$, the optimal alignment tensor between $\bcF_t$ and $\bcF_{\star}$ exists, and $\epsilon \coloneq 0.02$. In what follows, we shall prove the distance contraction and the incoherence condition separately.

\emph{Distance contraction:} By the definition of $\dist^2(\bcF_{t+1},\bcF_{\star})$, we have
\begin{align*}
\dist^2(\bcF_{t+1},\bcF_{\star}) \leq \| (\bcL_{t+1} \ast_{\bPhi} \bcQ_t - \bcL_{\star}) \ast_{\bPhi} \bcG_{\star}^{\frac{1}{2}} \|_F^2 + \| (\bcR_{t+1} \ast_{\bPhi} \bcQ_t^{-H} - \bcR_{\star}) \ast_{\bPhi} \bcG_{\star}^{\frac{1}{2}} \|_F^2.
\end{align*}
According to the update rule \eqref{eqn:TRPCAupdate}, we have
\begin{align}\label{eqn:TRPCAexpand}
& (\bcL_{t+1} \ast_{\bPhi} \bcQ_t - \bcL_{\star}) \ast_{\bPhi} \bcG_{\star}^{\frac{1}{2}}  \nonumber \\
= & \Big( \bcL_{\sharp} - \eta (\bcL_{\sharp} \ast_{\bPhi} \bcR_{\sharp}^H + \bcS_{t+1} - \bcX_{\star} - \bcS_{\star}) \ast_{\bPhi} \bcR_{\sharp} \ast_{\bPhi} (\bcR_{\sharp}^H \ast_{\bPhi} \bcR_{\sharp})^{-1} - \bcL_{\star} \Big) \ast_{\bPhi} \bcG_{\star}^{\frac{1}{2}}  \nonumber \\
= & \Big( \bcL_{\triangle} - \eta (\bcL_{\sharp} \ast_{\bPhi} \bcR_{\sharp}^H - \bcX_{\star}) \ast_{\bPhi} \bcR_{\sharp} \ast_{\bPhi} (\bcR_{\sharp}^H \ast_{\bPhi} \bcR_{\sharp})^{-1}  \nonumber \\
&\qquad\qquad\qquad\qquad\qquad\qquad - \eta (\bcS_{t+1} - \bcS_{\star}) \ast_{\bPhi} \bcR_{\sharp} \ast_{\bPhi} (\bcR_{\sharp}^H \ast_{\bPhi} \bcR_{\sharp})^{-1} \Big) \ast_{\bPhi} \bcG_{\star}^{\frac{1}{2}}  \nonumber \\
= & \Big( \bcL_{\triangle} - \eta (\bcL_{\triangle} \ast_{\bPhi} \bcR_{\sharp}^H + \bcL_{\star} \ast_{\bPhi} \bcR_{\triangle}^H) \ast_{\bPhi} \bcR_{\sharp} \ast_{\bPhi} (\bcR_{\sharp}^H \ast_{\bPhi} \bcR_{\sharp})^{-1}  \nonumber \\
&\qquad\qquad\qquad\qquad\qquad\qquad - \eta \bcS_{\triangle} \ast_{\bPhi} \bcR_{\sharp} \ast_{\bPhi} (\bcR_{\sharp}^H \ast_{\bPhi} \bcR_{\sharp})^{-1} \Big) \ast_{\bPhi} \bcG_{\star}^{\frac{1}{2}}  \nonumber \\
= & (1 - \eta) \bcL_{\triangle} \ast_{\bPhi} \bcG_{\star}^{\frac{1}{2}} - \eta \bcL_{\star} \ast_{\bPhi} \bcR_{\triangle}^H \ast_{\bPhi} \bcR_{\sharp} \ast_{\bPhi} (\bcR_{\sharp}^H \ast_{\bPhi} \bcR_{\sharp})^{-1} \ast_{\bPhi} \bcG_{\star}^{\frac{1}{2}}  \nonumber \\
&\qquad\qquad\qquad\qquad\qquad\qquad - \eta \bcS_{\triangle} \ast_{\bPhi} \bcR_{\sharp} \ast_{\bPhi} (\bcR_{\sharp}^H \ast_{\bPhi} \bcR_{\sharp})^{-1} \ast_{\bPhi} \bcG_{\star}^{\frac{1}{2}}.
\end{align}
Taking the squared Frobenius norm of both sides of \eqref{eqn:TRPCAexpand} to obtain
\begin{align*}
\| (\bcL_{t+1} \ast_{\bPhi} \bcQ_t - \bcL_{\star}) \ast_{\bPhi} \bcG_{\star}^{\frac{1}{2}} \|_F^2 = & \frac{1}{\ell} \| (\widebar{\bL}_{t+1} \widebar{\bQ}_t - \widebar{\bL}_{\star}) \widebar{\bG}_{\star}^{\frac{1}{2}} \|_F^2  \\
= & \frac{1}{\ell} \Big( \| (1 - \eta) \widebar{\bL}_{\triangle} \widebar{\bG}_{\star}^{\frac{1}{2}} - \eta \widebar{\bL}_{\star} \widebar{\bR}_{\triangle}^H \widebar{\bR}_{\sharp} (\widebar{\bR}_{\sharp}^H \widebar{\bR}_{\sharp})^{-1} \widebar{\bG}_{\star}^{\frac{1}{2}} \|_F^2  \\
&\quad - 2 \eta (1 - \eta) \tr \Big( \widebar{\bS}_{\triangle} \widebar{\bR}_{\sharp} (\widebar{\bR}_{\sharp}^H \widebar{\bR}_{\sharp})^{-1} \widebar{\bG}_{\star} \widebar{\bL}_{\triangle}^H \Big)  \\
&\quad + 2 \eta^2 \tr \Big( \widebar{\bS}_{\triangle} \widebar{\bR}_{\sharp} (\widebar{\bR}_{\sharp}^H \widebar{\bR}_{\sharp})^{-1} \widebar{\bG}_{\star} (\widebar{\bR}_{\sharp}^H \widebar{\bR}_{\sharp})^{-1} \widebar{\bR}_{\sharp}^H \widebar{\bR}_{\triangle} \widebar{\bL}_{\star}^H \Big)  \\
&\quad + \eta^2 \| \widebar{\bS}_{\triangle} \widebar{\bR}_{\sharp} (\widebar{\bR}_{\sharp}^H \widebar{\bR}_{\sharp})^{-1} \widebar{\bG}_{\star}^{\frac{1}{2}} \|_F^2 \Big)  \\
\coloneq & \frac{1}{\ell} (\mathfrak{R}_1 - \mathfrak{R}_2 + \mathfrak{R}_3 + \mathfrak{R}_4).
\end{align*}

\paragraph{Bound of $\mathfrak{R}_1$.} The component $\mathfrak{R}_1$ is identical to \eqref{eqn:Q1}, and the bound of this term was shown therein. We will clear this bound further by applying Lemma~\ref{lemma:Deltanorm}, that is
\begin{align}\label{eqn:TF-Lt-bound}
\mathfrak{R}_1 & \leq \Big( (1 - \eta)^2 + \frac{2 \epsilon}{1 - \epsilon} \eta (1 - \eta) \Big) \tr( \widebar{\bL}_{\triangle} \widebar{\bG}_{\star} \widebar{\bL}_{\triangle}^H ) + \frac{2 \epsilon + \epsilon^2}{(1 - \epsilon)^2} \eta^2 \tr ( \widebar{\bR}_{\triangle} \widebar{\bG}_{\star} \widebar{\bR}_{\triangle}^H )  \nonumber \\
& = (1 - \eta)^2 \| \widebar{\bL}_{\triangle} \widebar{\bG}_{\star}^{\frac{1}{2}} \|_F^2 + \frac{2 \epsilon}{1 - \epsilon} \eta (1 - \eta) \| \widebar{\bL}_{\triangle} \widebar{\bG}_{\star}^{\frac{1}{2}} \|_F^2 + \frac{2 \epsilon + \epsilon^2}{(1 - \epsilon)^2} \eta^2 \| \widebar{\bR}_{\triangle} \widebar{\bG}_{\star}^{\frac{1}{2}} \|_F^2  \nonumber \\
& = (1 - \eta)^2 \| \widebar{\bL}_{\triangle} \widebar{\bG}_{\star}^{\frac{1}{2}} \|_F^2 + \Big( (1 - \eta) \frac{2 \epsilon^3}{1 - \epsilon} + \eta \frac{2 \epsilon^3 + \epsilon^4}{(1 - \epsilon)^2} \Big) \eta \rho^{2t} \bar{\sigma}_{s_r}^2 (\bcX_{\star}).
\end{align}

\paragraph{Bound of $\mathfrak{R}_2$.} According to Lemma~\ref{lemma:normbound}, $\| \widebar{\bS}_{\triangle} \| = \| \bcS_{\triangle} \| \leq \frac{\alpha \sqrt{\ell}}{2} (n_1 + n_2 n_3) \| \bcS_{\triangle} \|_{\infty}$. Lemma~\ref{lemma:sparity} implies $\bcS_{\triangle}$ is an $\alpha$-sparse tensor and our threshold $\zeta_{t+1}$ suggests that $\| \bcS_{\triangle} \|_{\infty} \leq 6 \frac{\mu s_r}{n_3 \sqrt{n_1 n_2 \ell}} \rho^t \bar{\sigma}_{s_r} (\bcX_{\star})$, we further have
\begin{align*}
& \Big| \tr \Big( \widebar{\bS}_{\triangle} \widebar{\bR}_{\sharp} (\widebar{\bR}_{\sharp}^H \widebar{\bR}_{\sharp})^{-1} \widebar{\bG}_{\star} \widebar{\bL}_{\triangle}^H \Big) \Big|  \\
\leq & \| \widebar{\bS}_{\triangle} \| \| \widebar{\bR}_{\sharp} (\widebar{\bR}_{\sharp}^H \widebar{\bR}_{\sharp})^{-1} \widebar{\bG}_{\star} \widebar{\bL}_{\triangle}^H \|_{\ast}  \\
\leq & \frac{\alpha \sqrt{\ell}}{2} (n_1 + n_2 n_3) \| \bcS_{\triangle} \|_{\infty} \sqrt{s_r} \| \widebar{\bR}_{\sharp} (\widebar{\bR}_{\sharp}^H \widebar{\bR}_{\sharp})^{-1} \widebar{\bG}_{\star} \widebar{\bL}_{\triangle}^H \|_F  \\
\leq & \frac{\alpha \sqrt{\ell}}{2} (n_1 + n_2 n_3) \| \bcS_{\triangle} \|_{\infty} \sqrt{s_r} \| \widebar{\bR}_{\sharp} (\widebar{\bR}_{\sharp}^H \widebar{\bR}_{\sharp})^{-1} \widebar{\bG}_{\star}^{\frac{1}{2}} \| \| \widebar{\bL}_{\triangle} \widebar{\bG}_{\star}^{\frac{1}{2}} \|_F  \\
\leq & 3 \alpha \mu s_r^{1.5} \frac{n_1 + n_2 n_3}{n_3 \sqrt{n_1 n_2}} \frac{\epsilon}{1 - \epsilon} \rho^{2t} \bar{\sigma}_{s_r}^2 (\bcX_{\star}).
\end{align*}
Hence,
\begin{align*}
|\mathfrak{R}_2| \leq 6 \eta (1 - \eta) \alpha \mu s_r^{1.5} \frac{n_1 + n_2 n_3}{n_3 \sqrt{n_1 n_2}} \frac{\epsilon}{1 - \epsilon} \rho^{2t} \bar{\sigma}_{s_r}^2 (\bcX_{\star}).
\end{align*}

\paragraph{Bound of $\mathfrak{R}_3$.} Similar to $\mathfrak{R}_2$, we have
\begin{align*}
& \Big| \tr \Big( \widebar{\bS}_{\triangle} \widebar{\bR}_{\sharp} (\widebar{\bR}_{\sharp}^H \widebar{\bR}_{\sharp})^{-1} \widebar{\bG}_{\star} (\widebar{\bR}_{\sharp}^H \widebar{\bR}_{\sharp})^{-1} \widebar{\bR}_{\sharp}^H \widebar{\bR}_{\triangle} \widebar{\bL}_{\star}^H \Big) \Big|  \\
\leq & \| \widebar{\bS}_{\triangle} \| \| \widebar{\bR}_{\sharp} (\widebar{\bR}_{\sharp}^H \widebar{\bR}_{\sharp})^{-1} \widebar{\bG}_{\star} (\widebar{\bR}_{\sharp}^H \widebar{\bR}_{\sharp})^{-1} \widebar{\bR}_{\sharp}^H \widebar{\bR}_{\triangle} \widebar{\bL}_{\star}^H \|_{\ast}  \\
\leq & \frac{\alpha \sqrt{\ell}}{2} (n_1 + n_2 n_3) \| \bcS_{\triangle} \|_{\infty} \sqrt{s_r} \| \widebar{\bR}_{\sharp} (\widebar{\bR}_{\sharp}^H \widebar{\bR}_{\sharp})^{-1} \widebar{\bG}_{\star} (\widebar{\bR}_{\sharp}^H \widebar{\bR}_{\sharp})^{-1} \widebar{\bR}_{\sharp}^H \widebar{\bR}_{\triangle} \widebar{\bL}_{\star}^H \|_F  \\
\leq & \frac{\alpha \sqrt{\ell}}{2} (n_1 + n_2 n_3) \| \bcS_{\triangle} \|_{\infty} \sqrt{s_r} \| \widebar{\bR}_{\sharp} (\widebar{\bR}_{\sharp}^H \widebar{\bR}_{\sharp})^{-1} \widebar{\bG}_{\star}^{\frac{1}{2}} \|^2 \| \widebar{\bR}_{\triangle} \widebar{\bL}_{\star}^H \|_F  \\
\leq & \frac{\alpha \sqrt{\ell}}{2} (n_1 + n_2 n_3) \| \bcS_{\triangle} \|_{\infty} \sqrt{s_r} \| \widebar{\bR}_{\sharp} (\widebar{\bR}_{\sharp}^H \widebar{\bR}_{\sharp})^{-1} \widebar{\bG}_{\star}^{\frac{1}{2}} \|^2 \| \widebar{\bR}_{\triangle} \widebar{\bG}_{\star}^{\frac{1}{2}} \|_F \| \widebar{\bU}_{\star} \|  \\
\leq & 3 \alpha \mu s_r^{1.5} \frac{n_1 + n_2 n_3}{n_3 \sqrt{n_1 n_2}} \frac{\epsilon}{(1 - \epsilon)^2} \rho^{2t} \bar{\sigma}_{s_r}^2 (\bcX_{\star}).
\end{align*}
Hence,
\begin{align*}
|\mathfrak{R}_3| \leq 6 \eta^2 \alpha \mu s_r^{1.5} \frac{n_1 + n_2 n_3}{n_3 \sqrt{n_1 n_2}} \frac{\epsilon}{(1 - \epsilon)^2} \rho^{2t} \bar{\sigma}_{s_r}^2 (\bcX_{\star}).
\end{align*}

\paragraph{Bound of $\mathfrak{R}_4$.}
\begin{align*}
\| \widebar{\bS}_{\triangle} \widebar{\bR}_{\sharp} (\widebar{\bR}_{\sharp}^H \widebar{\bR}_{\sharp})^{-1} \widebar{\bG}_{\star}^{\frac{1}{2}} \|_F^2 & \leq s_r \| \widebar{\bS}_{\triangle} \widebar{\bR}_{\sharp} (\widebar{\bR}_{\sharp}^H \widebar{\bR}_{\sharp})^{-1} \widebar{\bG}_{\star}^{\frac{1}{2}} \|^2  \\
& \leq s_r \| \widebar{\bS}_{\triangle} \|^2 \| \widebar{\bR}_{\sharp} (\widebar{\bR}_{\sharp}^H \widebar{\bR}_{\sharp})^{-1} \widebar{\bG}_{\star}^{\frac{1}{2}} \|^2  \\
& \leq \frac{\alpha^2 \ell s_r}{4} (n_1 + n_2 n_3)^2 \| \bcS_{\triangle} \|_{\infty}^2 \| \widebar{\bR}_{\sharp} (\widebar{\bR}_{\sharp}^H \widebar{\bR}_{\sharp})^{-1} \widebar{\bG}_{\star}^{\frac{1}{2}} \|^2  \\
& \leq 9 \alpha^2 \mu^2 s_r^3 \frac{(n_1 + n_2 n_3)^2}{n_1 n_2 n_3^2} \frac{1}{(1 - \epsilon)^2} \rho^{2t} \bar{\sigma}_{s_r}^2 (\bcX_{\star}).
\end{align*}
Hence,
\begin{align*}
|\mathfrak{R}_4| \leq 9 \eta^2 \alpha^2 \mu^2 s_r^3 \frac{(n_1 + n_2 n_3)^2}{n_1 n_2 n_3^2} \frac{1}{(1 - \epsilon)^2} \rho^{2t} \bar{\sigma}_{s_r}^2 (\bcX_{\star}).
\end{align*}
Combining all the bounds together, we have
\begin{align*}
& \| (\bcL_{t+1} \ast_{\bPhi} \bcQ_t - \bcL_{\star}) \ast_{\bPhi} \bcG_{\star}^{\frac{1}{2}} \|_F^2  \\ 
\leq & \frac{1}{\ell} \Big( (1 - \eta)^2 \| \widebar{\bL}_{\triangle} \widebar{\bG}_{\star}^{\frac{1}{2}} \|_F^2 + \Big( (1 - \eta) \frac{2 \epsilon^3}{1 - \epsilon} + \eta \frac{2 \epsilon^3 + \epsilon^4}{(1 - \epsilon)^2} \Big) \eta \rho^{2t} \bar{\sigma}_{s_r}^2 (\bcX_{\star})  \\
&\quad + 6 \eta (1 - \eta) \alpha \mu s_r^{1.5} \frac{n_1 + n_2 n_3}{n_3 \sqrt{n_1 n_2}} \frac{\epsilon}{1 - \epsilon} \rho^{2t} \bar{\sigma}_{s_r}^2 (\bcX_{\star})  \\
&\quad + 6 \eta^2 \alpha \mu s_r^{1.5} \frac{n_1 + n_2 n_3}{n_3 \sqrt{n_1 n_2}} \frac{\epsilon}{(1 - \epsilon)^2} \rho^{2t} \bar{\sigma}_{s_r}^2 (\bcX_{\star})  \\
&\quad + 9 \eta^2 \alpha^2 \mu^2 s_r^3 \frac{(n_1 + n_2 n_3)^2}{n_1 n_2 n_3^2} \frac{1}{(1 - \epsilon)^2} \rho^{2t} \bar{\sigma}_{s_r}^2 (\bcX_{\star}) \Big).
\end{align*}
We can obtain a similar bound for $\| (\bcR_{t+1} \ast_{\bPhi} \bcQ_t^{-H} - \bcR_{\star}) \ast_{\bPhi} \bcG_{\star}^{\frac{1}{2}} \|_F^2$. Thus, we have
\begin{align}\label{eqn:distbound}
& \dist^2(\bcF_{t+1}, \bcF_{\star})  \nonumber \\
\leq & \frac{1}{\ell} \Big( (1 - \eta)^2 \Big( \| \widebar{\bL}_{\triangle} \widebar{\bG}_{\star}^{\frac{1}{2}} \|_F^2 + \| \widebar{\bR}_{\triangle} \widebar{\bG}_{\star}^{\frac{1}{2}} \|_F^2 \Big) + 2 \Big( (1 - \eta) \frac{2 \epsilon^3}{1 - \epsilon} + \eta \frac{2 \epsilon^3 + \epsilon^4}{(1 - \epsilon)^2} \Big) \eta \rho^{2t} \bar{\sigma}_{s_r}^2 (\bcX_{\star})  \nonumber \\
&\quad + 12 \eta (1 - \eta) \alpha \mu s_r^{1.5} \frac{n_1 + n_2 n_3}{n_3 \sqrt{n_1 n_2}} \frac{\epsilon}{1 - \epsilon} \rho^{2t} \bar{\sigma}_{s_r}^2 (\bcX_{\star}) + 12 \eta^2 \alpha \mu s_r^{1.5} \frac{n_1 + n_2 n_3}{n_3 \sqrt{n_1 n_2}} \frac{\epsilon}{(1 - \epsilon)^2} \rho^{2t} \bar{\sigma}_{s_r}^2 (\bcX_{\star})  \nonumber \\
&\quad + 18 \eta^2 \alpha^2 \mu^2 s_r^3 \frac{(n_1 + n_2 n_3)^2}{n_1 n_2 n_3^2} \frac{1}{(1 - \epsilon)^2} \rho^{2t} \bar{\sigma}_{s_r}^2 (\bcX_{\star}) \Big)  \nonumber \\
\leq & \Big( (1 - \eta)^2 + 2 \Big( (1 - \eta) \frac{2 \epsilon}{1 - \epsilon} + \eta \frac{2 \epsilon + \epsilon^2}{(1 - \epsilon)^2} \Big) \eta + 12 \eta (1 - \eta) \alpha \mu s_r^{1.5} \frac{n_1 + n_2 n_3}{n_3 \sqrt{n_1 n_2}} \frac{1}{\epsilon (1 - \epsilon)}  \nonumber \\
&\quad + 12 \eta^2 \alpha \mu s_r^{1.5} \frac{n_1 + n_2 n_3}{n_3 \sqrt{n_1 n_2}} \frac{1}{\epsilon (1 - \epsilon)^2} + 18 \eta^2 \alpha^2 \mu^2 s_r^3 \frac{(n_1 + n_2 n_3)^2}{n_1 n_2 n_3^2} \frac{1}{\epsilon^2 (1 - \epsilon)^2} \Big) \frac{1}{\ell} \epsilon^2 \rho^{2t} \bar{\sigma}_{s_r}^2 (\bcX_{\star})  \nonumber \\
\leq & (1 - 0.6 \eta)^2 \frac{1}{\ell} \epsilon^2 \rho^{2t} \bar{\sigma}_{s_r}^2 (\bcX_{\star}),
\end{align}
where we use the fact $\| \widebar{\bL}_{\triangle} \widebar{\bG}_{\star}^{\frac{1}{2}} \|_F^2 + \| \widebar{\bR}_{\triangle} \widebar{\bG}_{\star}^{\frac{1}{2}} \|_F^2 = \ell \dist^2(\bcF_t,\bcF_{\star}) \leq \epsilon^2 \rho^{2t} \bar{\sigma}_{s_r}^2 (\bcX_{\star})$ in the second step, and the last step holds with $\epsilon = 0.02$, $\alpha \mu s_r^{1.5} \frac{n_1 + n_2 n_3}{n_3 \sqrt{n_1 n_2}} \leq \alpha \mu s_r^{1.5} \frac{n_1 + n_2 n_3}{n_3 \sqrt{n_{(2)} n_2}} \leq 10^{-4}$, and $\frac{1}{4} < \eta \leq \frac{8}{9}$. Thus we conclude that
\begin{align*}
\dist(\bcF_{t+1}, \bcF_{\star}) & \leq \frac{\epsilon}{\sqrt{\ell}} \rho^{t+1} \bar{\sigma}_{s_r} (\bcX_{\star})
\end{align*}
by setting $\rho = 1 - 0.6 \eta$.

\emph{Incoherence condition:} We first use \eqref{eqn:TRPCAexpand} again to obtain
\begin{align*}
& \| (\bcL_{t+1} \ast_{\bPhi} \bcQ_t - \bcL_{\star}) \ast_{\bPhi} \bcG_{\star}^{\frac{1}{2}} \|_{2,\infty}  \\
\leq & (1 - \eta) \| \bcL_{\triangle} \ast_{\bPhi} \bcG_{\star}^{\frac{1}{2}} \|_{2,\infty} + \eta \| \bcL_{\star} \ast_{\bPhi} \bcR_{\triangle}^H \ast_{\bPhi} \bcR_{\sharp} \ast_{\bPhi} (\bcR_{\sharp}^H \ast_{\bPhi} \bcR_{\sharp})^{-1} \ast_{\bPhi} \bcG_{\star}^{\frac{1}{2}} \|_{2,\infty}  \\
&\quad + \eta \| \bcS_{\triangle} \ast_{\bPhi} \bcR_{\sharp} \ast_{\bPhi} (\bcR_{\sharp}^H \ast_{\bPhi} \bcR_{\sharp})^{-1} \ast_{\bPhi} \bcG_{\star}^{\frac{1}{2}} \|_{2,\infty}  \\
\coloneq & \mathfrak{B}_1 + \mathfrak{B}_2 + \mathfrak{B}_3.
\end{align*}

\paragraph{Bound of $\mathfrak{B}_1$.} Followed by the assumption in this step, we can easily have $\mathfrak{B}_1 \leq (1 - \eta) \sqrt{\frac{\mu s_r}{n_1 n_3 \ell}} \rho^t \bar{\sigma}_{s_r} (\bcX_{\star})$.

\paragraph{Bound of $\mathfrak{B}_2$.} Definition~\ref{def:incoherence} implies $\| \bcL_{\star} \ast_{\bPhi} \bcG_{\star}^{-\frac{1}{2}} \|_{2,\infty} \leq \sqrt{\frac{\mu s_r}{n_1 n_3 \ell}}$. Thus, using Lemma~\ref{lemma:2infbound} and Lemma~\ref{lemma:Deltanorm}, we have
\begin{align*}
\mathfrak{B}_2 & \leq \eta \| \bcL_{\star} \ast_{\bPhi} \bcG_{\star}^{-\frac{1}{2}} \|_{2,\infty} \| \bcR_{\triangle} \ast_{\bPhi} \bcG_{\star}^{\frac{1}{2}} \| \| \bcR_{\sharp} \ast_{\bPhi} (\bcR_{\sharp}^H \ast_{\bPhi} \bcR_{\sharp})^{-1} \ast_{\bPhi} \bcG_{\star}^{\frac{1}{2}} \|  \\
& \leq \eta \frac{\epsilon}{1 - \epsilon} \sqrt{\frac{\mu s_r}{n_1 n_3 \ell}} \rho^t \bar{\sigma}_{s_r} (\bcX_{\star}).
\end{align*}

\paragraph{Bound of $\mathfrak{B}_3$.} By Lemma~\ref{lemma:sparity}, $\supp(\bcS_{\triangle}) \subseteq \supp(\bcS_{\star})$, which implies that $\bcS_{\triangle}$ is an $\alpha$-sparse tensor. Thus, using Lemmas~\ref{lemma:normbound},~\ref{lemma:sparity} and~\ref{lemma:Xinfnorm}, we have
\begin{align*}
\mathfrak{B}_3 & \leq \eta \| \bcS_{\triangle} \|_{2,\infty} \| \bcR_{\sharp} \ast_{\bPhi} (\bcR_{\sharp}^H \ast_{\bPhi} \bcR_{\sharp})^{-1} \ast_{\bPhi} \bcG_{\star}^{\frac{1}{2}} \| \leq \eta \frac{\sqrt{\alpha n_2 n_3}}{1 - \epsilon} \| \bcS_{\triangle} \|_{\infty}  \\
& \leq 6 \eta \frac{\sqrt{\alpha \mu s_r}}{1 - \epsilon} \sqrt{\frac{\mu s_r}{n_1 n_3 \ell}} \rho^t \bar{\sigma}_{s_r} (\bcX_{\star}).
\end{align*}
Putting all the bounds together, we obtain
\begin{align*}
\| (\bcL_{t+1} \ast_{\bPhi} \bcQ_t - \bcL_{\star}) \ast_{\bPhi} \bcG_{\star}^{\frac{1}{2}} \|_{2,\infty} \leq \Big( (1 - \eta) + \eta \frac{\epsilon}{1 - \epsilon} + 6 \eta \frac{\sqrt{\alpha \mu s_r}}{1 - \epsilon} \Big) \sqrt{\frac{\mu s_r}{n_1 n_3 \ell}} \rho^t \bar{\sigma}_{s_r} (\bcX_{\star}).
\end{align*}
We can then have
\begin{align*}
\| (\bcL_{t+1} \ast_{\bPhi} \bcQ_t - \bcL_{\star}) \ast_{\bPhi} \bcG_{\star}^{-\frac{1}{2}} \|_{2,\infty} \leq \Big( (1 - \eta) + \eta \frac{\epsilon}{1 - \epsilon} + 6 \eta \frac{\sqrt{\alpha \mu s_r}}{1 - \epsilon} \Big) \sqrt{\frac{\mu s_r}{n_1 n_3 \ell}} \rho^t.
\end{align*}
The last step is to switch the alignment tensor from $\bcQ_t$ to $\bcQ_{t+1}$. Note that \eqref{eqn:distbound} together with Lemma~\ref{lemma:Qexistence} confirms the existence of $\bcQ_{t+1}$. Applying the triangle inequality and Lemma~\ref{lemma:Qperturbation}, we have
\begin{align*}
& \| (\bcL_{t+1} \ast_{\bPhi} \bcQ_{t+1} - \bcL_{\star}) \ast_{\bPhi} \bcG_{\star}^{\frac{1}{2}} \|_{2,\infty}  \\
\leq & \| (\bcL_{t+1} \ast_{\bPhi} \bcQ_t - \bcL_{\star}) \ast_{\bPhi} \bcG_{\star}^{\frac{1}{2}} \|_{2,\infty} + \| (\bcL_{t+1} \ast_{\bPhi} (\bcQ_{t+1} - \bcQ_t) \ast_{\bPhi} \bcG_{\star}^{\frac{1}{2}} \|_{2,\infty}  \\
= & \| (\bcL_{t+1} \ast_{\bPhi} \bcQ_t - \bcL_{\star}) \ast_{\bPhi} \bcG_{\star}^{\frac{1}{2}} \|_{2,\infty}  \\
&\qquad\qquad + \| (\bcL_{t+1} \ast_{\bPhi} \bcQ_t \ast_{\bPhi} \bcG_{\star}^{-\frac{1}{2}} \ast_{\bPhi} \bcG_{\star}^{\frac{1}{2}} \ast_{\bPhi} \bcQ_t^{-1} \ast_{\bPhi} (\bcQ_{t+1} - \bcQ_t) \ast_{\bPhi} \bcG_{\star}^{\frac{1}{2}} \|_{2,\infty}  \\
\leq & \| (\bcL_{t+1} \ast_{\bPhi} \bcQ_t - \bcL_{\star}) \ast_{\bPhi} \bcG_{\star}^{\frac{1}{2}} \|_{2,\infty}  \\
&\qquad\qquad + \| (\bcL_{t+1} \ast_{\bPhi} \bcQ_t \ast_{\bPhi} \bcG_{\star}^{-\frac{1}{2}} \|_{2,\infty} \| \bcG_{\star}^{\frac{1}{2}} \ast_{\bPhi} \bcQ_t^{-1} \ast_{\bPhi} (\bcQ_{t+1} - \bcQ_t) \ast_{\bPhi} \bcG_{\star}^{\frac{1}{2}} \|  \\
\leq & \| (\bcL_{t+1} \ast_{\bPhi} \bcQ_t - \bcL_{\star}) \ast_{\bPhi} \bcG_{\star}^{\frac{1}{2}} \|_{2,\infty} + \Big( \| (\bcL_{t+1} \ast_{\bPhi} \bcQ_t - \bcL_{\star}) \ast_{\bPhi} \bcG_{\star}^{-\frac{1}{2}} \|_{2,\infty}  \\
&\qquad\qquad + \| \bcL_{\star} \ast_{\bPhi} \bcG_{\star}^{-\frac{1}{2}} \|_{2,\infty} \Big) \| \bcG_{\star}^{\frac{1}{2}} \ast_{\bPhi} \bcQ_t^{-1} \ast_{\bPhi} (\bcQ_{t+1} - \bcQ_t) \ast_{\bPhi} \bcG_{\star}^{\frac{1}{2}} \|  \\
\leq & \Big( (1 - \eta) + \eta \frac{\epsilon}{1 - \epsilon} + 6 \eta \frac{\sqrt{\alpha \mu s_r}}{1 - \epsilon} \Big) \sqrt{\frac{\mu s_r}{n_1 n_3 \ell}} \rho^t \bar{\sigma}_{s_r} (\bcX_{\star})  \\
&\qquad\qquad + \Big( \big( (1 - \eta) + \eta \frac{\epsilon}{1 - \epsilon} + 6 \eta \frac{\sqrt{\alpha \mu s_r}}{1 - \epsilon} \big) \sqrt{\frac{\mu s_r}{n_1 n_3 \ell}} \rho^t + \sqrt{\frac{\mu s_r}{n_1 n_3 \ell}} \Big) \frac{2 \epsilon}{1 - \epsilon} \rho^{t+1} \bar{\sigma}_{s_r} (\bcX_{\star}) \\
\leq & \Big( (1 - \eta) + \eta \frac{\epsilon}{1 - \epsilon} + 6 \eta \frac{\sqrt{\alpha \mu s_r}}{1 - \epsilon} + \frac{2 \epsilon}{1 - \epsilon} \big( (2 - \eta) + \eta \frac{\epsilon}{1 - \epsilon} + 6 \eta \frac{\sqrt{\alpha \mu s_r}}{1 - \epsilon} \big) \Big) \sqrt{\frac{\mu s_r}{n_1 n_3 \ell}} \rho^t \bar{\sigma}_{s_r} (\bcX_{\star})  \\
\leq & (1 - 0.6 \eta) \sqrt{\frac{\mu s_r}{n_1 n_3 \ell}} \rho^t \bar{\sigma}_{s_r} (\bcX_{\star}),
\end{align*}
where we use $\epsilon = 0.02$, $\alpha \mu s_r < \alpha \mu s_r^{1.5} \frac{n_1 + n_2 n_3}{n_3 \sqrt{n_{(2)} n_2}} \leq 10^{-4}$, and $\frac{1}{4} \leq \eta \leq \frac{8}{9}$ in the last step. A similar result can be computed for $\| (\bcR_{t+1} \ast_{\bPhi} \bcQ_{t+1}^{-H} - \bcR_{\star}) \ast_{\bPhi} \bcG_{\star}^{\frac{1}{2}} \|_{2,\infty}$. The conclusion that
\begin{align*}
& \sqrt{n_1} \| (\bcL_{t+1} \ast_{\bPhi} \bcQ_{t+1} - \bcL_{\star}) \ast_{\bPhi} \bcG_{\star}^{\frac{1}{2}} \|_{2,\infty} \vee \sqrt{n_2} \| (\bcR_{t+1} \ast_{\bPhi} \bcQ_{t+1}^{-H} - \bcR_{\star}) \ast_{\bPhi} \bcG_{\star}^{\frac{1}{2}} \|_{2,\infty}  \\
& \leq \sqrt{\frac{\mu s_r}{n_3 \ell}} \rho^{t+1} \bar{\sigma}_{s_r} (\bcX_{\star})
\end{align*}
can be achieved by setting $\rho = 1 - 0.6 \eta$. This finishes the proof.

\subsection{Proof of Lemma~\ref{lemma:TRPCAinitial}}

First, notice that the $(i,j)$-th mode-3 tube of $\bar{\boldsymbol{\ce}}_{ijk}$, which is equal to $L(\dot{\boldsymbol{\ce}}_k)$, is the only nonzero mode-3 tube of $\bar{\boldsymbol{\ce}}_{ijk}$. Thus the only nonzero mode-3 tube of $L(\bar{\boldsymbol{\ce}}_{ijk})$ is its $(i,j)$-th mode-3 tube, and it is equal to $L(L(\dot{\boldsymbol{\ce}}_k))$. Then $L(\bar{\boldsymbol{\ce}}_{ijk})$ can be written as $L(\bar{\boldsymbol{\ce}}_{ijk}) = L(\mathring{\boldsymbol{\ce}}_i) \triangle L(L(\dot{\boldsymbol{\ce}}_k)) \triangle L(\mathring{\boldsymbol{\ce}}_j^H)$. By Definition~\ref{def:incoherence}, we have
\begin{align*}
\| \bcX_{\star} \|_{\infty} & = \max_{i,j,k} | \langle \bcU_{\star} \ast_{\bPhi} \bcG_{\star} \ast_{\bPhi} \bcV_{\star}^H, \bar{\boldsymbol{\ce}}_{ijk} \rangle | = \frac{1}{\ell} \max_{i,j,k} | \langle \xoverline{\bcU}_{\star} \triangle \xoverline{\bcG}_{\star} \triangle \xoverline{\bcV}_{\star}^H, L(\bar{\boldsymbol{\ce}}_{ijk}) \rangle |  \\
& = \frac{1}{\ell} \max_{i,j,k} | \langle \xoverline{\bcU}_{\star} \triangle \xoverline{\bcG}_{\star} \triangle \xoverline{\bcV}_{\star}^H, L(\mathring{\boldsymbol{\ce}}_i) \triangle L(L(\dot{\boldsymbol{\ce}}_k)) \triangle L(\mathring{\boldsymbol{\ce}}_j^H) \rangle |  \\
& = \frac{1}{\ell} \max_{i,j,k} | \langle \widebar{\bU}_{\star} \widebar{\bG}_{\star} \widebar{\bV}_{\star}^H, \widebar{\mathring{\be}}_i \widebar{\bh}_k \widebar{\mathring{\be}}_j^H \rangle | = \frac{1}{\ell} \max_{i,j,k} | \langle \widebar{\mathring{\be}}_i^H \widebar{\bU}_{\star} \widebar{\bG}_{\star} \widebar{\bV}_{\star}^H \widebar{\mathring{\be}}_j, \widebar{\bh}_k \rangle |  \\
& \leq \frac{1}{\ell} \max_{i,j,k} \| \widebar{\mathring{\be}}_i^H \widebar{\bU}_{\star} \widebar{\bG}_{\star} \widebar{\bV}_{\star}^H \widebar{\mathring{\be}}_j \|_F \| \widebar{\bh}_k \|_F  \\
& \leq \frac{1}{\ell} \max_{i,j,k} \| \widebar{\mathring{\be}}_i^H \widebar{\bU}_{\star} \|_F \| \widebar{\bG}_{\star} \| \| \widebar{\bV}_{\star}^H \widebar{\mathring{\be}}_j \|_F \| \widebar{\bh}_k \|_F  \\
& \leq \max_{i,j,k} \| \mathring{\boldsymbol{\ce}}_i^H \ast_{\bPhi} \bcU_{\star} \|_F \| \widebar{\bG}_{\star} \| \| \bcV_{\star}^H \ast_{\bPhi} \mathring{\boldsymbol{\ce}}_j \|_F \| \widebar{\bh}_k \|_F  \\
& \leq \frac{\mu s_r}{n_3 \sqrt{n_1 n_2 \ell}} \bar{\sigma}_1 (\bcX_{\star}),
\end{align*}
where $\widebar{\mathring{\be}}_i = \mathtt{bdiag} (L(\mathring{\boldsymbol{\ce}}_i))$ and $\boldsymbol{\ch}_k = L(L(\dot{\boldsymbol{\ce}}_k))$. Invoking Lemma~\ref{lemma:sparity} with $\bcX_{-1} = \bzero$, we have
\begin{align*}
\| \bcS_{\star} - \bcS_0 \|_{\infty} \leq 2 \frac{\mu s_r}{n_3 \sqrt{n_1 n_2 \ell}} \bar{\sigma}_1 (\bcX_{\star}) \quad \mathrm{and} \quad \supp(\bcS_0) \subseteq \supp(\bcS_{\star}),
\end{align*}
which implies $\bcS_{\star} - \bcS_0$ is an $\alpha$-sparse tensor. Applying Lemma~\ref{lemma:normbound}, we have
\begin{align*}
\| \bcS_{\star} - \bcS_0 \| \leq \frac{\alpha \sqrt{\ell}}{2} (n_1 + n_2 n_3) \| \bcS_{\star} - \bcS_0 \|_{\infty} \leq \alpha \mu s_r \frac{n_1 + n_2 n_3}{n_3 \sqrt{n_1 n_2}} \bar{\sigma}_1 (\bcX_{\star}) = \alpha \mu s_r \kappa \frac{n_1 + n_2 n_3}{n_3 \sqrt{n_1 n_2}} \bar{\sigma}_{s_r} (\bcX_{\star}).
\end{align*}
Since $\bcX_0 = \bcL_0 \ast_{\bPhi} \bcR_0^H$ is the best approximation of $\bcY - \bcS_0$ with tubal rank $r$, we obtain
\begin{align*}
\| \bcX_{\star} - \bcX_0 \| = \| \widebar{\bX}_{\star} - \widebar{\bX}_0 \| & \leq \| \widebar{\bX}_{\star} - (\widebar{\bY} - \widebar{\bS}_0) \| + \| (\widebar{\bY} - \widebar{\bS}_0) - \widebar{\bX}_0 \|  \\
& \leq 2 \| \widebar{\bX}_{\star} - (\widebar{\bY} - \widebar{\bS}_0) \|  \\
& = 2 \| \widebar{\bS}_{\star} - \widebar{\bS}_0 \|  \\
& \leq 2 \alpha \mu s_r \kappa \frac{n_1 + n_2 n_3}{n_3 \sqrt{n_1 n_2}} \bar{\sigma}_{s_r} (\bcX_{\star}),
\end{align*}
where we use the definition $\widebar{\bY} = \widebar{\bX}_{\star} + \widebar{\bS}_{\star}$ in the equality. Using Lemma~\ref{lemma:Procrustes}, we obtain
\begin{align*}
\dist(\bcF_0, \bcF_{\star}) & \leq (\sqrt{2}+1)^{\frac{1}{2}} \frac{1}{\sqrt{\ell}} \| \widebar{\bX}_{\star} - \widebar{\bX}_0 \|_F \leq \sqrt{\frac{2(\sqrt{2}+1) s_r}{\ell}} \| \widebar{\bX}_{\star} - \widebar{\bX}_0 \|  \\
& \leq 5 \frac{1}{\sqrt{\ell}} \alpha \mu s_r^{1.5} \kappa \frac{n_1 + n_2 n_3}{n_3 \sqrt{n_1 n_2}} \bar{\sigma}_{s_r} (\bcX_{\star}),
\end{align*}
where we use the fact that $\widebar{\bX}_{\star} - \widebar{\bX}_0$ has at most rank-$2 s_r$. Given $\epsilon = 5 c_0$ and $\alpha \leq \frac{c_0}{\mu s_r^{1.5} \kappa \frac{n_1 + n_2 n_3}{n_3 \sqrt{n_1 n_2}}}$, our first claim
\begin{align}\label{eqn:initdist}
\dist(\bcF_0, \bcF_{\star}) \leq \frac{5 c_0}{\sqrt{\ell}} \bar{\sigma}_{s_r} (\bcX_{\star})
\end{align}
is proved.

Next, we proceed to prove the second claim. For the ease of presentation, we define $\bcL_{\sharp} \coloneq \bcL_0 \ast_{\bPhi} \bcQ_0$, $\bcR_{\sharp} \coloneq \bcR_0 \ast_{\bPhi} \bcQ_0^{-H}$, $\bcL_{\triangle} \coloneq \bcL_{\sharp} - \bcL_{\star}$, $\bcR_{\triangle} \coloneq \bcR_{\sharp} - \bcR_{\star}$, $\bcS_{\triangle} \coloneq \bcS_0 - \bcS_{\star}$. We first notice that $\bcU_0 \ast_{\bPhi} \bcG_0 \ast_{\bPhi} \bcV_0^H = \text{t-SVD}_r (\bcY - \bcS_0) = \text{t-SVD}_r (\bcX_{\star} - \bcS_{\triangle})$, thus 
\begin{align*}
\bcL_0 & = \bcU_0 \ast_{\bPhi} \bcG_0^{\frac{1}{2}} = (\bcX_{\star} - \bcS_{\triangle}) \ast_{\bPhi} \bcV_0 \ast_{\bPhi} \bcG_0^{-\frac{1}{2}} = (\bcX_{\star} - \bcS_{\triangle}) \ast_{\bPhi} \bcR_0 \ast_{\bPhi} \bcG_0^{-1}  \\
& = (\bcX_{\star} - \bcS_{\triangle}) \ast_{\bPhi} \bcR_0 \ast_{\bPhi} (\bcR_0^H \ast_{\bPhi} \bcR_0)^{-1}.
\end{align*}
Multiplying $\bcQ_0 \ast_{\bPhi} \bcG_{\star}^{\frac{1}{2}}$ on both sides using transformed t-product, we have
\begin{align*}
\bcL_{\sharp} \ast_{\bPhi} \bcG_{\star}^{\frac{1}{2}} & = \bcL_0 \ast_{\bPhi} \bcQ_0 \ast_{\bPhi} \bcG_{\star}^{\frac{1}{2}} = (\bcX_{\star} - \bcS_{\triangle}) \ast_{\bPhi} \bcR_0 \ast_{\bPhi} (\bcR_0^H \ast_{\bPhi} \bcR_0)^{-1} \ast_{\bPhi} \bcQ_0 \ast_{\bPhi} \bcG_{\star}^{\frac{1}{2}}  \\
& = (\bcX_{\star} - \bcS_{\triangle}) \ast_{\bPhi} \bcR_{\sharp} \ast_{\bPhi} (\bcR_{\sharp}^H \ast_{\bPhi} \bcR_{\sharp})^{-1} \ast_{\bPhi} \bcG_{\star}^{\frac{1}{2}}.
\end{align*}
Subtracting $\bcX_{\star} \ast_{\bPhi} \bcR_{\sharp} \ast_{\bPhi} (\bcR_{\sharp}^H \ast_{\bPhi} \bcR_{\sharp})^{-1} \ast_{\bPhi} \bcG_{\star}^{\frac{1}{2}}$ on both sides and using the fact that $\bcL_{\sharp} \ast_{\bPhi} \bcG_{\star}^{\frac{1}{2}} = \bcL_{\sharp} \ast_{\bPhi} \bcR_{\sharp}^H \ast_{\bPhi} \bcR_{\sharp} \ast_{\bPhi} (\bcR_{\sharp}^H \ast_{\bPhi} \bcR_{\sharp})^{-1} \ast_{\bPhi} \bcG_{\star}^{\frac{1}{2}}$, along with the decomposition that $\bcL_{\sharp} \ast_{\bPhi} \bcR_{\sharp}^H - \bcX_{\star} = \bcL_{\triangle} \ast_{\bPhi} \bcR_{\sharp}^H + \bcL_{\star} \ast_{\bPhi} \bcR_{\triangle}^H$, we have
\begin{align*}
\bcL_{\triangle} \ast_{\bPhi} \bcG_{\star}^{\frac{1}{2}} + \bcL_{\star} \ast_{\bPhi} \bcR_{\triangle}^H \ast_{\bPhi} \bcR_{\sharp} \ast_{\bPhi} (\bcR_{\sharp}^H \ast_{\bPhi} \bcR_{\sharp})^{-1} \ast_{\bPhi} \bcG_{\star}^{\frac{1}{2}} = - \bcS_{\triangle} \ast_{\bPhi} \bcR_{\sharp} \ast_{\bPhi} (\bcR_{\sharp}^H \ast_{\bPhi} \bcR_{\sharp})^{-1} \ast_{\bPhi} \bcG_{\star}^{\frac{1}{2}}.
\end{align*}
Thus, 
\begin{align*}
\| \bcL_{\triangle} \ast_{\bPhi} \bcG_{\star}^{\frac{1}{2}} \|_{2,\infty} & \leq \| \bcL_{\star} \ast_{\bPhi} \bcR_{\triangle}^H \ast_{\bPhi} \bcR_{\sharp} \ast_{\bPhi} (\bcR_{\sharp}^H \ast_{\bPhi} \bcR_{\sharp})^{-1} \ast_{\bPhi} \bcG_{\star}^{\frac{1}{2}} \|_{2,\infty}  \\
&\qquad\qquad + \| \bcS_{\triangle} \ast_{\bPhi} \bcR_{\sharp} \ast_{\bPhi} (\bcR_{\sharp}^H \ast_{\bPhi} \bcR_{\sharp})^{-1} \ast_{\bPhi} \bcG_{\star}^{\frac{1}{2}} \|_{2,\infty}  \\
& \coloneq \mathfrak{J}_1 + \mathfrak{J}_2.
\end{align*}

\paragraph{Bound of $\mathfrak{J}_1$.} Since \eqref{eqn:initdist} holds, Lemma~\ref{lemma:Deltanorm} implies $\| \bcR_{\triangle} \ast_{\bPhi} \bcG_{\star}^{\frac{1}{2}} \| \leq \epsilon \bar{\sigma}_{s_r} (\bcX_{\star})$ and $\| \bcR_{\sharp} \ast_{\bPhi} (\bcR_{\sharp}^H \ast_{\bPhi} \bcR_{\sharp})^{-1} \ast_{\bPhi} \bcG_{\star}^{\frac{1}{2}} \| \leq \frac{1}{1 - \epsilon}$. By Definition~\ref{def:incoherence} and Lemma~\ref{lemma:2infbound}, we obtain
\begin{align}\label{eqn:J1bound}
\mathfrak{J}_1 & \leq \| \bcL_{\star} \ast_{\bPhi} \bcG_{\star}^{-\frac{1}{2}} \|_{2,\infty} \| \bcR_{\triangle} \ast_{\bPhi} \bcG_{\star}^{\frac{1}{2}} \| \| \bcR_{\sharp} \ast_{\bPhi} (\bcR_{\sharp}^H \ast_{\bPhi} \bcR_{\sharp})^{-1} \ast_{\bPhi} \bcG_{\star}^{\frac{1}{2}} \|  \nonumber \\
& \leq \sqrt{\frac{\mu s_r}{n_1 n_3 \ell}} \frac{\epsilon}{1 - \epsilon} \bar{\sigma}_{s_r} (\bcX_{\star}).
\end{align}

\paragraph{Bound of $\mathfrak{J}_2$.} By Lemmas~\ref{lemma:normbound}, ~\ref{lemma:2infbound} and~\ref{lemma:2infprodbound}, we have
\begin{align*}
\mathfrak{J}_2 & \leq \sqrt{n_2 \ell} \| \bcS_{\triangle} \|_{2,\infty} \| \bcR_{\sharp} \ast_{\bPhi} (\bcR_{\sharp}^H \ast_{\bPhi} \bcR_{\sharp})^{-1} \ast_{\bPhi} \bcG_{\star}^{\frac{1}{2}} \|_{2,\infty}  \\
& \leq \sqrt{n_2 \ell} \| \bcS_{\triangle} \|_{2,\infty} \| \bcR_{\sharp} \ast_{\bPhi} \bcG_{\star}^{-\frac{1}{2}} \|_{2,\infty} \| \bcG_{\star}^{\frac{1}{2}} \ast_{\bPhi} (\bcR_{\sharp}^H \ast_{\bPhi} \bcR_{\sharp})^{-1} \ast_{\bPhi} \bcG_{\star}^{\frac{1}{2}} \|  \\
& \leq \sqrt{n_2 \ell} \sqrt{\alpha n_2 n_3} \| \bcS_{\triangle} \|_{\infty} \| \bcR_{\sharp} \ast_{\bPhi} \bcG_{\star}^{-\frac{1}{2}} \|_{2,\infty} \| \bcR_{\sharp} \ast_{\bPhi} (\bcR_{\sharp}^H \ast_{\bPhi} \bcR_{\sharp})^{-1} \ast_{\bPhi} \bcG_{\star}^{\frac{1}{2}} \|^2  \\
& \leq 2 n_2 \sqrt{\alpha n_3 \ell} \frac{\mu s_r}{n_3 \sqrt{n_1 n_2 \ell}} \bar{\sigma}_1 (\bcX_{\star}) \| \bcR_{\sharp} \ast_{\bPhi} \bcG_{\star}^{-\frac{1}{2}} \|_{2,\infty} \| \bcR_{\sharp} \ast_{\bPhi} (\bcR_{\sharp}^H \ast_{\bPhi} \bcR_{\sharp})^{-1} \ast_{\bPhi} \bcG_{\star}^{\frac{1}{2}} \|^2  \\
& \leq 2 \frac{\mu s_r \kappa}{(1 - \epsilon)^2} \sqrt{\frac{\alpha n_2}{n_1 n_3}} \bar{\sigma}_{s_r} (\bcX_{\star}) \| \bcR_{\sharp} \ast_{\bPhi} \bcG_{\star}^{-\frac{1}{2}} \|_{2,\infty}  \\
& \leq 2 \frac{\mu s_r \kappa}{(1 - \epsilon)^2} \sqrt{\frac{\alpha n_2}{n_1 n_3}} \Big( \sqrt{\frac{\mu s_r}{n_2 n_3 \ell}} + \| \bcR_{\triangle} \ast_{\bPhi} \bcG_{\star}^{-\frac{1}{2}} \|_{2,\infty} \Big) \bar{\sigma}_{s_r} (\bcX_{\star}).
\end{align*}
Note that $\| \bcR_{\triangle} \ast_{\bPhi} \bcG_{\star}^{-\frac{1}{2}} \|_{2,\infty} \leq \frac{\| \bcR_{\triangle} \ast_{\bPhi} \bcG_{\star}^{\frac{1}{2}} \|_{2,\infty}}{\bar{\sigma}_{s_r} (\bcX_{\star})}$. Thus,
\begin{align*}
\sqrt{n_1} \| \bcL_{\triangle} \ast_{\bPhi} \bcG_{\star}^{\frac{1}{2}} \|_{2,\infty} & \leq \sqrt{\frac{\mu s_r}{n_3 \ell}} \Big( \frac{\epsilon}{1 - \epsilon} + 2 \frac{\mu s_r \kappa}{(1 - \epsilon)^2} \sqrt{\frac{\alpha}{n_3}} \Big) \bar{\sigma}_{s_r} (\bcX_{\star})  \\
&\qquad\qquad + 2 \frac{\mu s_r \kappa}{(1 - \epsilon)^2} \sqrt{\frac{\alpha}{n_3}} \sqrt{n_2} \| \bcR_{\triangle} \ast_{\bPhi} \bcG_{\star}^{\frac{1}{2}} \|_{2,\infty},
\end{align*}
and similarly one can see
\begin{align*}
\sqrt{n_2} \| \bcR_{\triangle} \ast_{\bPhi} \bcG_{\star}^{\frac{1}{2}} \|_{2,\infty} & \leq \sqrt{\frac{\mu s_r}{n_3 \ell}} \Big( \frac{\epsilon}{1 - \epsilon} + 2 \frac{\mu s_r \kappa}{(1 - \epsilon)^2} \sqrt{\frac{\alpha}{n_3}} \Big) \bar{\sigma}_{s_r} (\bcX_{\star})  \\
&\qquad\qquad + 2 \frac{\mu s_r \kappa}{(1 - \epsilon)^2} \sqrt{\frac{\alpha}{n_3}} \sqrt{n_1} \| \bcL_{\triangle} \ast_{\bPhi} \bcG_{\star}^{\frac{1}{2}} \|_{2,\infty}.
\end{align*}
Therefore,
\begin{align*}
\sqrt{n_1} \| \bcL_{\triangle} \ast_{\bPhi} \bcG_{\star}^{\frac{1}{2}} \|_{2,\infty} \vee \sqrt{n_2} \| \bcR_{\triangle} \ast_{\bPhi} \bcG_{\star}^{\frac{1}{2}} \|_{2,\infty} & \leq \frac{\frac{\epsilon}{1 - \epsilon} + 2 \frac{\mu s_r \kappa}{(1 - \epsilon)^2} \sqrt{\frac{\alpha}{n_3}}}{1 - 2 \frac{\mu s_r \kappa}{(1 - \epsilon)^2} \sqrt{\frac{\alpha}{n_3}}} \sqrt{\frac{\mu s_r}{n_3 \ell}} \bar{\sigma}_{s_r} (\bcX_{\star})  \\
& \leq \frac{\frac{\epsilon}{1 - \epsilon} + 2 \frac{c_0}{(1 - \epsilon)^2}}{1 - 2 \frac{c_0}{(1 - \epsilon)^2}} \sqrt{\frac{\mu s_r}{n_3 \ell}} \bar{\sigma}_{s_r} (\bcX_{\star})  \\
& = \frac{\frac{5 c_0}{1 - 5 c_0} + 2 \frac{c_0}{(1 - 5 c_0)^2}}{1 - 2 \frac{c_0}{(1 - 5 c_0)^2}} \sqrt{\frac{\mu s_r}{n_3 \ell}} \bar{\sigma}_{s_r} (\bcX_{\star})  \\
& \leq \sqrt{\frac{\mu s_r}{n_3 \ell}} \bar{\sigma}_{s_r} (\bcX_{\star})
\end{align*}
as long as $c_0 \leq 0.06$. This finishes the proof.

\subsection{Proof of Theorem~\ref{thm:TRPCA}}

\begin{proof}
We set $c_0 \leq 0.004$ in Lemma~\ref{lemma:TRPCAinitial}, then the results of Lemma~\ref{lemma:TRPCAinitial} satisfy the condition of Lemma~\ref{lemma:TRPCAcontraction} and give
\begin{align*}
\dist(\bcF_t, \bcF_{\star}) & \leq \frac{0.02}{\sqrt{\ell}} (1 - 0.6 \eta)^t \bar{\sigma}_{s_r} (\bcX_{\star}),  \\
\sqrt{n_1} \| (\bcL_t \ast_{\bPhi} \bcQ_t - \bcL_{\star}) \ast_{\bPhi} \bcG_{\star}^{\frac{1}{2}} \|_{2,\infty} & \vee \sqrt{n_2} \| (\bcR_t \ast_{\bPhi} \bcQ_t^{-H} - \bcR_{\star}) \ast_{\bPhi} \bcG_{\star}^{\frac{1}{2}} \|_{2,\infty} \leq \sqrt{\frac{\mu s_r}{n_3 \ell}} (1 - 0.6 \eta)^t \bar{\sigma}_{s_r} (\bcX_{\star})
\end{align*}
for all $t \geq 0$. According to Lemma~\ref{lemma:Deltanorm}, we have $\| \bcL_{\triangle} \ast_{\bPhi} \bcG_{\star}^{-\frac{1}{2}} \| \vee \| \bcR_{\triangle} \ast_{\bPhi} \bcG_{\star}^{-\frac{1}{2}} \| \leq \epsilon (1 - 0.6 \eta)^t \leq \epsilon$. According to \eqref{eqn:disttensor}, $\| \bcL_t \ast_{\bPhi} \bcR_t^H - \bcX_{\star} \|_F \leq 1.5 \dist(\bcF_t, \bcF_{\star})$. Hence, the first claim is proved. The second and third claims are followed by Lemma~\ref{lemma:Xinfnorm} and Lemma~\ref{lemma:sparity}, respectively.
\end{proof}

\section{Proof for Tensor Completion}
\label{sec:Completionproof}

This section is devoted to the proofs of claims related to tensor completion.

\subsection{Proof of Lemma~\ref{lemma:scaledproj}}

\begin{lemma}[\citet{TongMC.JMLR2021}, Claim 5]\label{lemma:nonexpansive}
For tensor columns $\overrightarrow{\bcA}, \overrightarrow{\bcA}_{\star} \in \R^{n_1 \times 1 \times n_3}$ and $\lambda \geq \| \overrightarrow{\bcA}_{\star} \|_F / \| \overrightarrow{\bcA} \|_F$, it holds that
\begin{align*}
\| (1 \wedge \lambda) \overrightarrow{\bcA} - \overrightarrow{\bcA}_{\star} \|_F \leq \| \overrightarrow{\bcA} - \overrightarrow{\bcA}_{\star} \|_F.
\end{align*}
\end{lemma}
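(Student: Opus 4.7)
The plan is to reduce the claim to an elementary one-variable inequality by conditioning on whether the scaling factor $1\wedge\lambda$ equals $1$ or $\lambda$. If $\lambda \geq 1$, then $1\wedge\lambda = 1$ and the inequality collapses to an equality, so there is nothing to prove. The only substantive case is $\lambda < 1$, where $1\wedge\lambda = \lambda$ and we must show $\| \lambda \overrightarrow{\bcA} - \overrightarrow{\bcA}_{\star} \|_F \leq \| \overrightarrow{\bcA} - \overrightarrow{\bcA}_{\star} \|_F$.

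For this case, I will compare the squared Frobenius norms, using the standard inner-product expansion in $\R^{n_1 \times 1 \times n_3}$ (treating tensor columns as vectors), to obtain
\[
\| \lambda \overrightarrow{\bcA} - \overrightarrow{\bcA}_{\star} \|_F^2 - \| \overrightarrow{\bcA} - \overrightarrow{\bcA}_{\star} \|_F^2 = (\lambda - 1)\bigl[ (\lambda + 1)\| \overrightarrow{\bcA} \|_F^2 - 2 \langle \overrightarrow{\bcA}, \overrightarrow{\bcA}_{\star} \rangle \bigr].
\]
Since $\lambda - 1 < 0$, it suffices to verify that the bracketed factor is nonnegative. This is where the hypothesis $\lambda \geq \| \overrightarrow{\bcA}_{\star} \|_F / \| \overrightarrow{\bcA} \|_F$ enters: by Cauchy--Schwarz,
\[
2 \langle \overrightarrow{\bcA}, \overrightarrow{\bcA}_{\star} \rangle \leq 2 \| \overrightarrow{\bcA} \|_F \| \overrightarrow{\bcA}_{\star} \|_F \leq 2\lambda \| \overrightarrow{\bcA} \|_F^2,
\]
and the elementary bound $2\lambda \leq \lambda + 1$ (valid since $\lambda < 1$) delivers $(\lambda + 1)\| \overrightarrow{\bcA} \|_F^2 - 2 \langle \overrightarrow{\bcA}, \overrightarrow{\bcA}_{\star} \rangle \geq 0$, completing the argument.

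There is essentially no obstacle: the whole proof is a two-line case split followed by a direct expansion and one application of Cauchy--Schwarz. The only minor care needed is the edge case $\overrightarrow{\bcA} = \bzero$, which is handled by interpreting $\|\overrightarrow{\bcA}_\star\|_F / \|\overrightarrow{\bcA}\|_F$ as forcing $\overrightarrow{\bcA}_\star = \bzero$ (otherwise no valid $\lambda$ exists), in which case the inequality holds trivially.
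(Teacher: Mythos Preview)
Your proof is correct. The paper does not actually prove this lemma; it simply cites it as Claim~5 from \cite{TongMC.JMLR2021} and uses it as a black box in the proof of Lemma~\ref{lemma:scaledproj}. Your case split on $\lambda \gtrless 1$ followed by the squared-norm expansion and Cauchy--Schwarz is the standard elementary argument, and nothing more is needed here.
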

We first prove the non-expansiveness property. Denote the optimal alignment tensor between $\widetilde{\bcF}$ and $\bcF_{\star}$ as $\widetilde{\bcQ}$, whose existence is guaranteed by Lemma~\ref{lemma:Qexistence}. Let $\cP_{\varsigma}(\widetilde{\bcF}) = \begin{bmatrix} \widetilde{\bcL} \\ \widetilde{\bcR} \end{bmatrix}$, by the definition of $\dist(\cP_{\varsigma}(\widetilde{\bcF}),\bcF_{\star})$, we have 
\begin{align}\label{eqn:disttrailer}
\dist^{2}(\cP_{\varsigma}(\widetilde{\bcF}), \bcF_{\star}) & \leq \sum_{i=1}^{n_1} \| \bcL(i,:,:) \ast_{\bPhi} \widetilde{\bcQ} \ast_{\bPhi} \bcG_{\star}^{\frac{1}{2}} - ( \bcL_{\star} \ast_{\bPhi} \bcG_{\star}^{\frac{1}{2}} )(i,:,:) \|_F^2  \nonumber \\
&\qquad + \sum_{j=1}^{n_2} \| \bcR(j,:,:) \ast_{\bPhi} \widetilde{\bcQ}^{-H} \ast_{\bPhi} \bcG_{\star}^{\frac{1}{2}} - ( \bcR_{\star} \ast_{\bPhi} \bcG_{\star}^{\frac{1}{2}} )(j,:,:) \|_F^2.
\end{align}
Note that the condition $\dist(\widetilde{\bcF}, \bcF_{\star}) \leq \frac{\epsilon}{\sqrt{\ell}} \bar{\sigma}_{s_r} (\bcX_{\star})$ implies 
\begin{align*}
\| (\widetilde{\bcL} \ast_{\bPhi} \widetilde{\bcQ} - \bcL_{\star}) \ast_{\bPhi} \bcG_{\star}^{-\frac{1}{2}} \| \vee \| (\widetilde{\bcR} \ast_{\bPhi} \widetilde{\bcQ}^{-H} - \bcR_{\star}) \ast_{\bPhi} \bcG_{\star}^{-\frac{1}{2}} \| \leq \epsilon.
\end{align*}
Utilizing the fact that $\bcR_{\star} \ast_{\bPhi} \bcG_{\star}^{-\frac{1}{2}} = \bcV_{\star}$, we arrive at
\begin{align*}
\| \widetilde{\bcL}(i,:,:) \ast_{\bPhi} \widetilde{\bcR}^H \|_F & \leq \| \widetilde{\bcL}(i,:,:) \ast_{\bPhi} \widetilde{\bcQ} \ast_{\bPhi} \bcG_{\star}^{\frac{1}{2}} \|_F \| \widetilde{\bcR} \ast_{\bPhi} \widetilde{\bcQ}^{-H} \ast_{\bPhi} \bcG_{\star}^{-\frac{1}{2}} \|  \\
&\leq \| \widetilde{\bcL}(i,:,:) \ast_{\bPhi} \widetilde{\bcQ} \ast_{\bPhi} \bcG_{\star}^{\frac{1}{2}} \|_F \Big( \| \bcV_{\star} \| + \| (\widetilde{\bcR} \ast_{\bPhi} \widetilde{\bcQ}^{-H} - \bcR_{\star}) \ast_{\bPhi} \bcG_{\star}^{-\frac{1}{2}} \| \Big)  \\
&\leq (1+\epsilon) \| \widetilde{\bcL}(i,:,:) \ast_{\bPhi} \widetilde{\bcQ} \ast_{\bPhi} \bcG_{\star}^{\frac{1}{2}} \|_F.
\end{align*}
In addition, the $\mu$-incoherence of $\bcX_{\star}$ yields
\begin{align*}
\sqrt{n_1} \| ( \bcL_{\star} \ast_{\bPhi} \bcG_{\star}^{\frac{1}{2}} )(i,:,:) \|_F & \leq \sqrt{n_1} \| \bcU_{\star} \|_{2,\infty} \| \bcG_{\star} \| \leq \sqrt{\frac{\mu s_r}{n_3 \ell}} \bar{\sigma}_1 (\bcX_{\star}) \leq \frac{\varsigma}{1+\epsilon},
\end{align*}
where the last inequality follows from the choice of $\varsigma$. Take the above two inequalities to reach 
\begin{align*}
\frac{\varsigma}{\sqrt{n_1} \| \widetilde{\bcL}(i,:,:) \ast_{\bPhi} \widetilde{\bcR}^H \|_F} \geq \frac{\| ( \bcL_{\star} \ast_{\bPhi} \bcG_{\star}^{\frac{1}{2}} )(i,:,:) \|_F}{\| \widetilde{\bcL}(i,:,:) \ast_{\bPhi} \widetilde{\bcQ} \ast_{\bPhi} \bcG_{\star}^{\frac{1}{2}} \|_F}.
\end{align*}
We can then apply Lemma~\ref{lemma:nonexpansive} with $\overrightarrow{\bcA} = \widetilde{\bcL}(i,:,:) \ast_{\bPhi} \widetilde{\bcQ} \ast_{\bPhi} \bcG_{\star}^{\frac{1}{2}}$, $\overrightarrow{\bcA}_{\star} = ( \bcL_{\star} \ast_{\bPhi} \bcG_{\star}^{\frac{1}{2}} )(i,:,:)$, and $\lambda = \frac{\varsigma}{\sqrt{n_1} \| \widetilde{\bcL}(i,:,:) \ast_{\bPhi} \widetilde{\bcR}^H \|_F}$ to obtain
\begin{align*}
& \| \bcL(i,:,:) \ast_{\bPhi} \widetilde{\bcQ} \ast_{\bPhi} \bcG_{\star}^{\frac{1}{2}} - ( \bcL_{\star} \ast_{\bPhi} \bcG_{\star}^{\frac{1}{2}} )(i,:,:) \|_F^2  \\
= & \| \Big(1 \wedge \frac{\varsigma}{\sqrt{n_1} \| \widetilde{\bcL}(i,:,:) \ast_{\bPhi} \widetilde{\bcR}^H \|_F} \Big) \widetilde{\bcL}(i,:,:) \ast_{\bPhi} \widetilde{\bcQ} \ast_{\bPhi} \bcG_{\star}^{\frac{1}{2}} - ( \bcL_{\star} \ast_{\bPhi} \bcG_{\star}^{\frac{1}{2}} )(i,:,:) \|_F^2  \\
\leq & \| \widetilde{\bcL}(i,:,:) \ast_{\bPhi} \widetilde{\bcQ} \ast_{\bPhi} \bcG_{\star}^{\frac{1}{2}} - ( \bcL_{\star} \ast_{\bPhi} \bcG_{\star}^{\frac{1}{2}} )(i,:,:) \|_F^2.
\end{align*}
Following a similar argument for $\bcR$, we conclude that
\begin{align*}
\dist^{2}(\cP_{\varsigma}(\widetilde{\bcF}), \bcF_{\star}) & \leq \sum_{i=1}^{n_1} \| \widetilde{\bcL}(i,:,:) \ast_{\bPhi} \widetilde{\bcQ} \ast_{\bPhi} \bcG_{\star}^{\frac{1}{2}} - ( \bcL_{\star} \ast_{\bPhi} \bcG_{\star}^{\frac{1}{2}} )(i,:,:) \|_F^2  \\
&\qquad + \sum_{j=1}^{n_2} \| \widetilde{\bcR}(j,:,:) \ast_{\bPhi} \widetilde{\bcQ}^{-H} \ast_{\bPhi} \bcG_{\star}^{\frac{1}{2}} - ( \bcR_{\star} \ast_{\bPhi} \bcG_{\star}^{\frac{1}{2}} )(j,:,:) \|_F^2 \leq \dist^{2}(\widetilde{\bcF}, \bcF_{\star}).
\end{align*}

Next, we prove the incoherence condition. For any $i \in [n_1]$, one has
\begin{align*}
& \| \bcL(i,:,:) \ast_{\bPhi} \bcR^H \|_F^2  \\
= & \sum_{j=1}^{n_2} \| \bcL(i,:,:) \ast_{\bPhi} \bcR(j,:,:)^H \|_F^2  \\
= & \sum_{j=1}^{n_2} \Big(1 \wedge \frac{\varsigma}{\sqrt{n_1} \| \widetilde{\bcL}(i,:,:) \ast_{\bPhi} \widetilde{\bcR}^H \|_F} \Big)^2 \| \widetilde{\bcL}(i,:,:) \ast_{\bPhi} \widetilde{\bcR}(j,:,:)^H \|_F^2 \Big(1 \wedge \frac{\varsigma}{\sqrt{n_2} \| \widetilde{\bcR}(j,:,:) \ast_{\bPhi} \widetilde{\bcL}^H \|_F} \Big)^2  \\
\overset{\mathrm{(a)}}{\leq} & \Big(1 \wedge \frac{\varsigma}{\sqrt{n_1} \| \widetilde{\bcL}(i,:,:) \ast_{\bPhi} \widetilde{\bcR}^H \|_F} \Big)^2 \sum_{j=1}^{n_2} \| \widetilde{\bcL}(i,:,:) \ast_{\bPhi} \widetilde{\bcR}(j,:,:)^H \|_F^2  \\
= & \Big(1 \wedge \frac{\varsigma}{\sqrt{n_1} \| \widetilde{\bcL}(i,:,:) \ast_{\bPhi} \widetilde{\bcR}^H \|_F} \Big)^2 \| \widetilde{\bcL}(i,:,:) \ast_{\bPhi} \widetilde{\bcR}^H \|_F^2  \\
\overset{\mathrm{(b)}}{\leq} & \frac{\varsigma^2}{n_1},
\end{align*}
where $\mathrm{(a)}$ follows from $1 \wedge \frac{\varsigma}{\sqrt{n_2} \| \widetilde{\bcR}(j,:,:) \ast_{\bPhi} \widetilde{\bcL}^H \|_F} \leq 1$, and $\mathrm{(b)}$ follows from $1 \wedge \frac{\varsigma}{\sqrt{n_1} \| \widetilde{\bcL}(i,:,:) \ast_{\bPhi} \widetilde{\bcR}^H \|_F} \leq \frac{\varsigma}{\sqrt{n_1} \| \widetilde{\bcL}(i,:,:) \ast_{\bPhi} \widetilde{\bcR}^H \|_F}$. Similarly, one can also have $\| \bcR(j,:,:) \ast_{\bPhi} \bcL^H \|_F^2 \leq \frac{\varsigma^2}{n_2}$. Combining these two bounds completes the proof.

\subsection{Proof of Lemma~\ref{lemma:TCcontraction}}

We gather several useful inequalities regarding the operator $\bcP_{\bOmega} (\cdot)$ for the Bernoulli observation model.
\begin{lemma}[\citet{Tropp.FoCM2012}]\label{lemma:Bernstein}
Consider a finite sequence $\{\bZ_k\}$ of independent, random $d_1 \times d_2$ matrices that satisfy the assumption $\mathbb{E}[\bZ_k] = \bzero$ and $\| \bZ_k \| \leq R$ almost surely. Let $\sigma^2 = \max \{ \| \sum_k \mathbb{E}[\bZ_k \bZ_k^H] \|, \| \sum_k \mathbb{E}[\bZ_k^H \bZ_k] \| \}$. Then, for any $t \geq 0$, we have
\begin{align*}
\mathbb{P} \Big[ \| \sum_k \bZ_k \| \geq t \Big] & \leq (d_1 + d_2) \exp \Big( - \frac{t^2}{2 \sigma^2 + \frac{2}{3} R t} \Big)  \\
& \leq (d_1 + d_2) \exp \Big( - \frac{3 t^2}{8 \sigma^2} \Big), \quad \mathrm{for} ~ t \leq \sigma^2 / R.
\end{align*}
Or, for any $c > 0$, we have
\begin{align*}
\| \sum_k \bZ_k \| \leq 2 \sqrt{c \sigma^2 \log(d_1 + d_2)} + c R \log(d_1 + d_2)
\end{align*}
with probability at least $1 - (d_1 + d_2)^{1 - c}$.
\end{lemma}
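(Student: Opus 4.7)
The plan is to follow the standard Laplace-transform route to matrix concentration due to Ahlswede--Winter and Tropp, whose key ingredient is Lieb's concavity theorem. First I would reduce to the self-adjoint case via Hermitian dilation: define
\begin{align*}
\tilde{\bZ}_k \;=\; \begin{bmatrix} \mathbf{0} & \bZ_k \\ \bZ_k^H & \mathbf{0} \end{bmatrix} \in \C^{(d_1+d_2)\times(d_1+d_2)},
\end{align*}
so that $\tilde{\bZ}_k$ is Hermitian, $\lambda_{\max}(\tilde{\bZ}_k) = \|\bZ_k\| \le R$ almost surely, $\mathbb{E}\tilde{\bZ}_k = \mathbf{0}$, and $\tilde{\bZ}_k^2 = \mathrm{diag}(\bZ_k\bZ_k^H,\; \bZ_k^H\bZ_k)$. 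Since $\|\sum_k \bZ_k\| = \lambda_{\max}(\sum_k \tilde{\bZ}_k)$ and $\|\sum_k \mathbb{E}\tilde{\bZ}_k^2\| = \sigma^2$, it suffices to bound the largest eigenvalue in the Hermitian setting.

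Next, I would invoke the master tail inequality: for any $\theta > 0$, Markov's inequality applied to $e^{\theta \lambda_{\max}(\cdot)} = \lambda_{\max}(e^{\theta\,\cdot})$ gives
\begin{align*}
\mathbb{P}\Bigl[\lambda_{\max}\Bigl(\textstyle\sum_k \tilde{\bZ}_k\Bigr) \ge t\Bigr] \;\le\; e^{-\theta t}\,\mathbb{E}\,\tr \exp\Bigl(\theta \textstyle\sum_k \tilde{\bZ}_k\Bigr).
\end{align*}
The expectation on the right is then peeled off one summand at a time by repeatedly applying Lieb's concavity theorem (concavity of $\bA \mapsto \tr \exp(\bH + \log \bA)$) together with Jensen's inequality. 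This yields
\begin{align*}
\mathbb{E}\,\tr \exp\Bigl(\theta \textstyle\sum_k \tilde{\bZ}_k\Bigr) \;\le\; \tr \exp\Bigl(\textstyle\sum_k \log \mathbb{E}\, e^{\theta \tilde{\bZ}_k}\Bigr),
\end{align*}
which is the critical step replacing the elementary identity $\mathbb{E}\,e^{\theta \sum X_k} = \prod_k \mathbb{E}\,e^{\theta X_k}$ valid in the scalar (commutative) case. I expect this to be the main obstacle in a from-scratch proof, since Lieb's inequality is deep; for the present exposition I would cite it.

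It then remains to control the matrix cumulant $\log \mathbb{E}\,e^{\theta \tilde{\bZ}_k}$. Using $\mathbb{E}\tilde{\bZ}_k = \mathbf{0}$ together with the almost-sure bound $\|\tilde{\bZ}_k\| \le R$, I would expand $e^{\theta \tilde{\bZ}_k} = \bI + \theta \tilde{\bZ}_k + \sum_{p\ge 2} \frac{\theta^p}{p!}\tilde{\bZ}_k^p$ and bound $\tilde{\bZ}_k^p \preceq R^{p-2}\tilde{\bZ}_k^2$ for $p \ge 2$; geometric summation of $\sum_{p\ge 2}\theta^p R^{p-2}/p!$ (which converges for $\theta < 3/R$ with an extra factor $\tfrac{1}{2}/(1-\theta R/3)$) and the operator-monotonicity of $\log(\bI+\bX) \preceq \bX$ give
\begin{align*}
\log \mathbb{E}\,e^{\theta \tilde{\bZ}_k} \;\preceq\; \frac{\theta^2/2}{1-\theta R/3}\,\mathbb{E}\tilde{\bZ}_k^2.
\end{align*}

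Finally, summing over $k$, trading $\tr \exp(\cdot) \le (d_1+d_2)\exp(\lambda_{\max}(\cdot))$, and inserting $\|\sum_k \mathbb{E}\tilde{\bZ}_k^2\| = \sigma^2$ yields
\begin{align*}
\mathbb{P}\Bigl[\|\textstyle\sum_k \bZ_k\| \ge t\Bigr] \;\le\; (d_1+d_2)\exp\!\Bigl(-\frac{\theta^2 \sigma^2/2}{1-\theta R/3} + \theta t\Bigr)^{-1}\!\!,
\end{align*}
and minimizing over $\theta \in (0, 3/R)$ at $\theta = t/(\sigma^2 + Rt/3)$ gives the claimed Bernstein bound $(d_1+d_2)\exp\!\bigl(-t^2/(2\sigma^2 + \tfrac{2}{3}Rt)\bigr)$. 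The sub-Gaussian regime for $t \le \sigma^2/R$ follows because then $\tfrac{2}{3}Rt \le \tfrac{2}{3}\sigma^2$, producing the simpler exponent $3t^2/(8\sigma^2)$. Inverting the tail bound by setting the right-hand side to $(d_1+d_2)^{1-c}$ delivers the ``with probability $1 - (d_1+d_2)^{1-c}$'' form $\|\sum_k \bZ_k\| \le 2\sqrt{c\sigma^2\log(d_1+d_2)} + cR\log(d_1+d_2)$.
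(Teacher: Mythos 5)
The paper does not prove this lemma at all---it is quoted verbatim from \cite{Tropp.FoCM2012}---so there is no in-paper argument to compare against; your sketch correctly reproduces the standard proof from that reference (Hermitian dilation, the Laplace-transform/master-tail bound, subadditivity of matrix cumulants via Lieb's concavity theorem, the mgf bound $\log\mathbb{E}e^{\theta\tilde{\bZ}_k}\preceq \frac{\theta^2/2}{1-\theta R/3}\mathbb{E}\tilde{\bZ}_k^2$, and optimization at $\theta = t/(\sigma^2+Rt/3)$), with the deep ingredient (Lieb) appropriately cited rather than reproven. The derivations of the sub-Gaussian regime for $t\le\sigma^2/R$ and of the inverted high-probability form are also correct; note only that the probability in the paper's statement should read $1-(d_1+d_2)^{1-c}$ rather than $1-(n_1+n_2)^{1-c}$, a typo in the paper that your final sentence implicitly corrects.
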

\begin{lemma}\label{lemma:POmegaspectral}
Suppose that $\bcZ \in \R^{n_1 \times n_2 \times n_3}$ is fixed, and $\bOmega \sim \mathrm{Ber}(p)$. Then with high probability, 
\begin{align*}
\| (p^{-1} \bcP_{\bOmega} - \bcI_{n_1}) (\bcZ) \| \leq c \Big( \frac{\sqrt{\ell} \log( (n_1 \vee n_2) n_3 )}{p} \| \bcZ \|_{\infty} + \sqrt{\frac{\ell \log( (n_1 \vee n_2) n_3 )}{p}}\| \bcZ \|_{\infty,2} \Big),
\end{align*}
for some numerical constant $c > 0$.
\end{lemma}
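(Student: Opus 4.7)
\textbf{Proof plan for Lemma~\ref{lemma:POmegaspectral}.} The plan is to apply the matrix Bernstein inequality (Lemma~\ref{lemma:Bernstein}) in the transform domain. First, I would rewrite the operator as a sum of independent mean-zero random tensors:
\begin{align*}
(p^{-1} \bcP_{\bOmega} - \bcI_{n_1})(\bcZ) = \sum_{i,j,k} \xi_{ijk}\, \bcZ_{i,j,k}\, \bar{\boldsymbol{\ce}}_{ijk}, \qquad \xi_{ijk} \coloneq p^{-1}\delta_{ijk} - 1,
\end{align*}
where $\mathbb{E}[\xi_{ijk}] = 0$ and $\mathbb{E}[\xi_{ijk}^2] = (1-p)/p \leq 1/p$. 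Since the tensor spectral norm equals the spectral norm of the block-diagonal unfolding, I would work with the matrices $\bZ_{ijk} \coloneq \xi_{ijk}\, \bcZ_{i,j,k}\, \widebar{\bar{\boldsymbol{\ce}}}_{ijk}$ of size $n_1 n_3 \times n_2 n_3$ and bound $\|\sum \bZ_{ijk}\|$.

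Next, I would verify the two inputs required by matrix Bernstein. For the almost-sure bound, Lemma~\ref{lemma:eijkbounds} gives $\|\bar{\boldsymbol{\ce}}_{ijk}\| = \|\widebar{\bar{\boldsymbol{\ce}}}_{ijk}\| \leq \sqrt{\ell}$ and $|\xi_{ijk}| \leq 1/p$, so I take $R = \sqrt{\ell}\,\|\bcZ\|_\infty/p$. For the variance, I would use the explicit structure recorded in the proof of Lemma~\ref{lemma:eijkbounds}: $L(\bar{\boldsymbol{\ce}}_{ijk})$ has a single nonzero mode-3 tube at position $(i,j)$, whose $k'$-th entry equals $\bPhi_{k',k}$. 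Hence $\widebar{\bar{\boldsymbol{\ce}}}_{ijk}\widebar{\bar{\boldsymbol{\ce}}}_{ijk}^H$ is block-diagonal with $k'$-th block $|\bPhi_{k',k}|^2\, \mathbf{e}_i\mathbf{e}_i^H$, so
\begin{align*}
\Big\| \sum_{i,j,k} \mathbb{E}[\bZ_{ijk}\bZ_{ijk}^H] \Big\| \leq \frac{1}{p} \max_{i,k'} \sum_{j,k} \bcZ_{i,j,k}^2 |\bPhi_{k',k}|^2 \leq \frac{\ell}{p} \max_i \|\bcZ(i,:,:)\|_F^2 \leq \frac{\ell}{p} \|\bcZ\|_{\infty,2}^2,
\end{align*}
using $\max_k |\bPhi_{k',k}|^2 \leq \ell$ from \eqref{eqn:phiconstraint}. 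The symmetric bound for $\sum \mathbb{E}[\bZ_{ijk}^H \bZ_{ijk}]$ proceeds identically via $\bar{\boldsymbol{\ce}}_{ijk}^H \ast_{\bPhi} \bar{\boldsymbol{\ce}}_{ijk}$, producing the $\max_j \|\bcZ(:,j,:)\|_F^2$ contribution and hence the same bound $\sigma^2 \leq \ell\|\bcZ\|_{\infty,2}^2/p$ by the definition of $\|\cdot\|_{\infty,2}$.

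Finally, plugging $R$, $\sigma^2$, and $d_1 + d_2 = (n_1+n_2)n_3 \lesssim (n_1 \vee n_2) n_3$ into the second form of Lemma~\ref{lemma:Bernstein} with an appropriate constant $c$, I would obtain
\begin{align*}
\Big\|\sum_{i,j,k} \bZ_{ijk}\Big\| \lesssim \sqrt{\frac{\ell \log\big((n_1\vee n_2)n_3\big)}{p}}\,\|\bcZ\|_{\infty,2} + \frac{\sqrt{\ell}\log\big((n_1\vee n_2)n_3\big)}{p}\,\|\bcZ\|_\infty
\end{align*}
with probability at least $1 - ((n_1\vee n_2)n_3)^{-c'}$, which is the claimed inequality.

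The main obstacle I anticipate is the bookkeeping in the variance computation: one must expand $\widebar{\bar{\boldsymbol{\ce}}}_{ijk}\widebar{\bar{\boldsymbol{\ce}}}_{ijk}^H$ carefully in the transform domain and then exploit the row-orthogonality $\sum_{k'}|\bPhi_{k',k}|^2 = \ell$ together with the $\max_{k'}$ that arises from the block-diagonal spectral norm, so that the final bound depends on $\|\bcZ\|_{\infty,2}$ (not on a larger quantity like $\|\bcZ\|_F$). The rest of the argument is a direct specialization of the standard matrix Bernstein recipe for partially observed matrices.
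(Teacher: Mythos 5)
Your proposal is correct and follows essentially the same route as the paper: the identical decomposition into independent mean-zero terms $\xi_{ijk}\bcZ_{i,j,k}\bar{\boldsymbol{\ce}}_{ijk}$, the same almost-sure bound $R = \sqrt{\ell}\|\bcZ\|_{\infty}/p$ via Lemma~\ref{lemma:eijkbounds}, the same variance computation exploiting that $\widebar{\bar{\boldsymbol{\ce}}}_{ijk}\widebar{\bar{\boldsymbol{\ce}}}_{ijk}^H$ is block-diagonal with entries $|\bPhi_{k',k}|^2$ bounded by $\ell$, and the same invocation of matrix Bernstein. The only cosmetic difference is that the paper phrases the variance bound through the f-diagonal tensor $\sum_{i,j,k}\bcZ_{i,j,k}^2\,L(\bar{\boldsymbol{\ce}}_{ijk}^H\ast_{\bPhi}\bar{\boldsymbol{\ce}}_{ijk})$ rather than the unfolded block-diagonal matrices, which is the same object.
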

The following lemma establishes restricted strong convexity and smoothness of the observation operator for tensors in $\bT$, which can be considered as an extension of \citet[Lemma 10]{ZhengL.arXiv2016}.
\begin{lemma}\label{lemma:POmegaFnorm}
Suppose that $\bcA, \bcB \in \bT$ are fixed tensors and $\bOmega \sim \mathrm{Ber}(p)$. Then with high probability,
\begin{align}\label{eqn:POmegaFnorm1}
p(1 - \epsilon) \| \bcA \|_F^2 \leq \| \bcP_{\bOmega}(\bcA) \|_F^2 \leq p(1 + \epsilon) \| \bcA \|_F^2.
\end{align}
Consequently,
\begin{align}\label{eqn:POmegaFnorm2}
| p^{-1} \langle \bcP_{\bOmega}(\bcA), \bcP_{\bOmega}(\bcB) \rangle - \langle \bcA, \bcB \rangle | \leq \epsilon \| \bcA \|_F \| \bcB \|_F,
\end{align}
provided that $p \geq c \epsilon^{-2} \mu s_r (n_1 + n_2) \log( (n_1 \vee n_2) n_3 ) / (n_1 n_2 n_3)$ for some numerical constant $c > 0$.
\end{lemma}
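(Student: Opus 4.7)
The plan is to reduce both claims to the already-stated operator norm bound in Lemma~\ref{lemma:projerrorbound}, namely $\| \bcP_{\bT} - p^{-1} \bcP_{\bT} \bcP_{\bOmega} \bcP_{\bT} \| \leq \epsilon$, which holds with high probability under the stated sample complexity. The key structural facts I will use are (i) $\bcP_{\bOmega}$ is an orthogonal projection on $\R^{n_1 \times n_2 \times n_3}$, so $\bcP_{\bOmega}^2 = \bcP_{\bOmega}$ and $\bcP_{\bOmega}$ is self-adjoint; (ii) $\bcP_{\bT}$ is self-adjoint; and (iii) $\bT$ is a linear subspace, so $\bcP_{\bT}(\bcA) = \bcA$ for any $\bcA \in \bT$, and $\bcA \pm \bcB \in \bT$ whenever $\bcA, \bcB \in \bT$.

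First I would prove \eqref{eqn:POmegaFnorm1}. For $\bcA \in \bT$, write
\[
\| \bcP_{\bOmega}(\bcA) \|_F^2 = \langle \bcP_{\bOmega}(\bcA), \bcP_{\bOmega}(\bcA) \rangle = \langle \bcA, \bcP_{\bOmega}(\bcA) \rangle = \langle \bcP_{\bT}(\bcA), \bcP_{\bOmega} \bcP_{\bT}(\bcA) \rangle = \langle \bcA, \bcP_{\bT} \bcP_{\bOmega} \bcP_{\bT}(\bcA) \rangle,
\]
using idempotence/self-adjointness of $\bcP_{\bOmega}$ and $\bcP_{\bT}(\bcA) = \bcA$. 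Hence
\[
\bigl| p^{-1} \| \bcP_{\bOmega}(\bcA) \|_F^2 - \| \bcA \|_F^2 \bigr| = \bigl| \langle \bcA, (p^{-1} \bcP_{\bT} \bcP_{\bOmega} \bcP_{\bT} - \bcP_{\bT})(\bcA) \rangle \bigr| \leq \| p^{-1} \bcP_{\bT} \bcP_{\bOmega} \bcP_{\bT} - \bcP_{\bT} \| \cdot \| \bcA \|_F^2 \leq \epsilon \| \bcA \|_F^2
\]
by Lemma~\ref{lemma:projerrorbound}, which immediately gives the two-sided bound \eqref{eqn:POmegaFnorm1} after multiplying through by $p$.

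Next, I would derive \eqref{eqn:POmegaFnorm2} by polarization. Since $\bcA \pm \bcB \in \bT$, applying \eqref{eqn:POmegaFnorm1} to each and using the identity $4 \langle \bX, \bY \rangle = \| \bX + \bY \|_F^2 - \| \bX - \bY \|_F^2$ for both the original inner product and the $\bcP_{\bOmega}$-weighted one yields
\[
\bigl| 4 p^{-1} \langle \bcP_{\bOmega}(\bcA), \bcP_{\bOmega}(\bcB) \rangle - 4 \langle \bcA, \bcB \rangle \bigr| \leq \epsilon \bigl( \| \bcA + \bcB \|_F^2 + \| \bcA - \bcB \|_F^2 \bigr) = 2 \epsilon \bigl( \| \bcA \|_F^2 + \| \bcB \|_F^2 \bigr).
\]
Specializing to the unit-norm case $\| \bcA \|_F = \| \bcB \|_F = 1$ gives the bound $\epsilon$ on the right-hand side, and the general case then follows by homogeneity (replace $\bcA$ by $\bcA/\|\bcA\|_F$ and $\bcB$ by $\bcB/\|\bcB\|_F$, then multiply through by $\| \bcA \|_F \| \bcB \|_F$). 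Absorbing the harmless factor of $4$ into the constant $c$ governing the sample complexity (equivalently, invoking Lemma~\ref{lemma:projerrorbound} with $\epsilon/4$ in place of $\epsilon$) delivers \eqref{eqn:POmegaFnorm2} as stated.

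The only real content of this lemma lies in Lemma~\ref{lemma:projerrorbound}, whose proof would be the main obstacle if we had to redo it; that bound is the standard non-commutative Bernstein argument (Lemma~\ref{lemma:Bernstein}) applied to the sum of independent rank-one operators $\delta_{ijk} \bcP_{\bT}(\bar{\boldsymbol{\ce}}_{ijk}) \langle \bar{\boldsymbol{\ce}}_{ijk}, \bcP_{\bT}(\cdot) \rangle$, combined with the incoherence estimate $\| \bcP_{\bT}(\bar{\boldsymbol{\ce}}_{ijk}) \|_F^2 \leq \mu s_r (n_1 + n_2)/(n_1 n_2 n_3)$ from Lemma~\ref{lemma:projeijk}. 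Since Lemma~\ref{lemma:projerrorbound} is already in hand, the present lemma reduces to two short algebraic manipulations, and no further tensor-specific technicalities are needed.
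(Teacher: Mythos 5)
Your proof is correct. The paper itself gives no proof of this lemma --- it only remarks that the statement is an extension of a known matrix-completion result (Lemma 10 of the cited Zheng--Lafferty preprint) --- so your self-contained derivation from Lemma~\ref{lemma:projerrorbound} fills that gap, and it is the standard route: since $\bcP_{\bOmega}$ is an idempotent self-adjoint coordinate projection and $\bcP_{\bT}(\bcA)=\bcA$ for $\bcA\in\bT$, the identity $\|\bcP_{\bOmega}(\bcA)\|_F^2=\langle\bcA,\bcP_{\bT}\bcP_{\bOmega}\bcP_{\bT}(\bcA)\rangle$ reduces \eqref{eqn:POmegaFnorm1} to the operator-norm bound, and polarization on $\bcA\pm\bcB\in\bT$ (all under the single high-probability event of Lemma~\ref{lemma:projerrorbound}, so no union bound is needed) gives \eqref{eqn:POmegaFnorm2}. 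One small remark: your final sentence about absorbing a factor of $4$ into the constant is unnecessary --- in your polarization display the left-hand side carries the same factor of $4$, and with $\|\bcA\|_F=\|\bcB\|_F=1$ the right-hand side equals $4\epsilon$, so dividing by $4$ yields exactly $\epsilon$ with no adjustment to $c$.
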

We then have the following simple corollary.
\begin{corollary}\label{coro:POmegatangent}
Suppose that $\bcX_{\star}$ is $\mu$-incoherent, and $p \gtrsim \mu s_r (n_1 + n_2) \log( (n_1 \vee n_2) n_3 ) / (n_1 n_2 n_3)$. Then with high probability,
\begin{align*}
& | \langle (p^{-1} \bcP_{\bOmega} - \bcI_{n_1})(\bcL_{\star} \ast_{\bPhi} \bcR_A^H + \bcL_A \ast_{\bPhi} \bcR_{\star}^H), \bcL_{\star} \ast_{\bPhi} \bcR_B^H + \bcL_B \ast_{\bPhi} \bcR_{\star}^H \rangle |  \\
& \qquad\qquad \leq c \sqrt{\frac{\mu s_r (n_1 + n_2) \log( (n_1 \vee n_2) n_3 )}{p n_1 n_2 n_3} } \| \bcL_{\star} \ast_{\bPhi} \bcR_A^H + \bcL_A \ast_{\bPhi} \bcR_{\star}^H \|_F \| \bcL_{\star} \ast_{\bPhi} \bcR_B^H + \bcL_B \ast_{\bPhi} \bcR_{\star}^H \|_F,
\end{align*}
simultaneously for all $\bcL_A, \bcL_B \in \R^{n_1 \times r \times n_3}$ and $\bcR_A, \bcR_B \in \R^{n_2 \times r \times n_3}$, where $c > 0$ is some numerical constant.
\end{corollary}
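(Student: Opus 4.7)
The plan is to reduce this uniform inner-product bound to the (already-stated) spectral bound of Lemma~\ref{lemma:projerrorbound} by exploiting two structural observations: every test tensor of the form $\bcL_{\star} \ast_{\bPhi} \bcR_A^H + \bcL_A \ast_{\bPhi} \bcR_{\star}^H$ lives inside the tangent space $\bT$ associated with $\bcX_{\star}$, and both $\bcP_{\bOmega}$ and $\bcP_{\bT}$ are self-adjoint idempotents. First, I would verify the tangent-space membership by using $\bcL_{\star} = \bcU_{\star} \ast_{\bPhi} \bcG_{\star}^{1/2}$ and $\bcR_{\star} = \bcV_{\star} \ast_{\bPhi} \bcG_{\star}^{1/2}$ to rewrite
\[
\bcL_{\star} \ast_{\bPhi} \bcR_A^H + \bcL_A \ast_{\bPhi} \bcR_{\star}^H = \bcU_{\star} \ast_{\bPhi} (\bcR_A \ast_{\bPhi} \bcG_{\star}^{1/2})^H + (\bcL_A \ast_{\bPhi} \bcG_{\star}^{1/2}) \ast_{\bPhi} \bcV_{\star}^H,
\]
which sits in $\bT$ as defined in \eqref{eqn:defT}; the same holds for the tensor indexed by $B$.

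Writing $\bcA \coloneq \bcL_{\star} \ast_{\bPhi} \bcR_A^H + \bcL_A \ast_{\bPhi} \bcR_{\star}^H$ and $\bcB$ analogously, I would then use $\bcP_{\bT}(\bcA) = \bcA$, $\bcP_{\bT}(\bcB) = \bcB$, together with the self-adjointness of $\bcP_{\bT}$ and $\bcP_{\bOmega}$, to recast the quantity of interest as
\[
\langle (p^{-1}\bcP_{\bOmega} - \bcI_{n_1})(\bcA), \bcB \rangle = \big\langle \bcB, (p^{-1} \bcP_{\bT} \bcP_{\bOmega} \bcP_{\bT} - \bcP_{\bT})(\bcA) \big\rangle.
\]
Cauchy–Schwarz then yields an upper bound of $\|p^{-1} \bcP_{\bT} \bcP_{\bOmega} \bcP_{\bT} - \bcP_{\bT}\| \cdot \|\bcA\|_F \|\bcB\|_F$, and this spectral norm is uniformly controlled by Lemma~\ref{lemma:projerrorbound}.

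Finally, I would calibrate the parameter $\epsilon$ in Lemma~\ref{lemma:projerrorbound}. That lemma requires $p \geq c \epsilon^{-2} \mu s_r (n_1 + n_2) \log\big( (n_1 \vee n_2) n_3 \big) / (n_1 n_2 n_3)$; solving for $\epsilon$ gives exactly
\[
\epsilon \;=\; c\sqrt{\frac{\mu s_r (n_1 + n_2) \log\big( (n_1 \vee n_2) n_3 \big)}{p\, n_1 n_2 n_3}},
\]
which matches the prefactor in the corollary. The sample-size hypothesis $p \gtrsim \mu s_r (n_1+n_2)\log((n_1 \vee n_2)n_3)/(n_1 n_2 n_3)$ ensures $\epsilon \le 1$, so the hypotheses of Lemma~\ref{lemma:projerrorbound} are met with high probability, completing the argument.

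There is no real obstacle here: the only subtlety is to recognize that the supremum over all $\bcL_A, \bcL_B, \bcR_A, \bcR_B$ (which would otherwise require a covering/union-bound argument if we only had the fixed-tensor Lemma~\ref{lemma:POmegaFnorm}) is for free once we pass through the operator-norm bound of Lemma~\ref{lemma:projerrorbound}, since that spectral estimate is inherently uniform over the tangent space.
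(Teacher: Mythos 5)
Your proposal is correct and follows essentially the route the paper intends (the corollary is presented as an immediate consequence of the tangent-space restricted isometry, which itself rests on the operator-norm bound $\|\bcP_{\bT} - p^{-1}\bcP_{\bT}\bcP_{\bOmega}\bcP_{\bT}\|\le\epsilon$ of Lemma~\ref{lemma:projerrorbound}): verify that $\bcL_{\star}\ast_{\bPhi}\bcR_A^H+\bcL_A\ast_{\bPhi}\bcR_{\star}^H\in\bT$, sandwich the bilinear form between $\bcP_{\bT}$'s, apply Cauchy--Schwarz with the operator norm, and calibrate $\epsilon$ to match the stated prefactor. Your observation that passing through the operator-norm bound is what makes the estimate genuinely uniform over all $\bcL_A,\bcL_B,\bcR_A,\bcR_B$ (rather than holding only for fixed tensors as Lemma~\ref{lemma:POmegaFnorm} is phrased) is a correct and worthwhile clarification.
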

\begin{lemma}\label{lemma:POmeganonconvex}
Suppose that $p \gtrsim \log( (n_1 \vee n_2) n_3 ) / (n_1 \wedge n_2)$. Then with high probability,
\begin{align*}
& | \langle (p^{-1} \bcP_{\bOmega} - \bcI_{n_1})(\bcL_A \ast_{\bPhi} \bcR_A^H), \bcL_B \ast_{\bPhi} \bcR_B^H \rangle |  \\
&\quad \leq c \ell^{\frac{3}{2}} \sqrt{\frac{(n_1 \vee n_2) \log( (n_1 \vee n_2) n_3 )}{p} }  \\
&\qquad \Big( \|\bcL_A\|_{2,2,\infty} \|\bcL_B\|_F \wedge \|\bcL_A\|_F \|\bcL_B\|_{2,2,\infty} \Big) \Big( \|\bcR_A\|_{2,2,\infty} \|\bcR_B\|_F \wedge \|\bcR_A\|_F \|\bcR_B\|_{2,2,\infty} \Big),
\end{align*}
simultaneously for all $\bcL_A, \bcL_B \in \R^{n_1 \times r \times n_3}$ and $\bcR_A, \bcR_B \in \R^{n_2 \times r \times n_3}$, where $c > 0$ is some universal constant.
\end{lemma}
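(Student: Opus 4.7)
The plan is to adapt the standard matrix completion argument (e.g., of Chen-Wainwright and Zheng-Lafferty) to the t-SVD setting. Expanding the inner product entrywise gives
\[
\langle (p^{-1}\bcP_{\bOmega} - \bcI_{n_1})(\bcL_A \ast_{\bPhi} \bcR_A^H),\, \bcL_B \ast_{\bPhi} \bcR_B^H \rangle = \sum_{i,j,k} \xi_{ijk}\, (\bcM_A)_{i,j,k}\, (\bcM_B)_{i,j,k},
\]
where $\xi_{ijk} \coloneq p^{-1}\delta_{ijk} - 1$ and $\bcM_X \coloneq \bcL_X \ast_{\bPhi} \bcR_X^H$. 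For a fixed quadruple of factor tensors this is a sum of independent, centered, bounded random variables, and Bernstein's inequality (Lemma~\ref{lemma:Bernstein}) yields a pointwise concentration bound driven by the entrywise envelope $\|\bcM_A\|_{\infty}\|\bcM_B\|_{\infty}$ and the variance proxy $\sum_{i,j,k}(\bcM_A)_{i,j,k}^2(\bcM_B)_{i,j,k}^2$.

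The next task is to translate both of these quantities into the $\ell_{2,2,\infty}$- and Frobenius-norms of the factors. Lemma~\ref{lemma:infnormbound} already furnishes $\|\bcM_X\|_{\infty} \leq \sqrt{\ell}\,\|\bcL_X\|_{2,\infty}\|\bcR_X\|_{2,\infty}$, and applying the same estimate to the horizontal slices $\bcL_X(i,:,:)$ and $\bcR_X(j,:,:)$ individually gives a tighter entrywise bound scaling with $\sqrt{\ell}\,\|\bcL_X\|_{2,2,\infty}\|\bcR_X\|_{2,2,\infty}$. For the variance proxy, I would organize the triple sum by summing over $j$ (or $i$) first, pulling out the relevant row norm of either $\bcL$ or $\bcR$ via $\|\cdot\|_{2,2,\infty}$, and then applying Cauchy-Schwarz against the remaining sum controlled by $\|\cdot\|_F$. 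Exchanging the roles of $A$ and $B$ on each side produces the "$\wedge$"-structure of the claim.

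A covering argument lifts the pointwise bound to a uniform one. The set of quadruples $(\bcL_A, \bcR_A, \bcL_B, \bcR_B)$ normalized in the relevant senses admits an $\epsilon$-net of log-cardinality $O((n_1+n_2)n_3 r \log(1/\epsilon))$; a union bound over the net, combined with a multilinear Lipschitz continuity argument using the boundedness of $p^{-1}\bcP_{\bOmega} - \bcI_{n_1}$ in operator norm, extends the estimate to arbitrary factors. Choosing $\epsilon$ polynomially small in the problem dimensions converts the log-cardinality into the $\log((n_1 \vee n_2)n_3)$ factor appearing in the conclusion, and the hypothesis $p \gtrsim \log((n_1 \vee n_2)n_3)/(n_1 \wedge n_2)$ places Bernstein's inequality in the variance-dominated regime where the square-root term governs.

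The main obstacle will be producing both orderings inside each of the two $\wedge$-terms simultaneously in the final uniform bound. The variance sum can be reorganized in two symmetric ways on each of the $L$- and $R$-sides, but the covering net must be set up to handle all four resulting combinations at once; in practice this requires running the net argument against each pairing of norms and taking an overall union bound. A secondary bookkeeping challenge is the $\ell^{3/2}$ prefactor: one power of $\sqrt{\ell}$ enters through the entrywise estimate on $\bcL_X \ast_{\bPhi} \bcR_X^H$, while the remaining $\ell$ arises from converting between the transform domain---where $\ast_{\bPhi}$ becomes facewise multiplication and the block-diagonal picture makes the arithmetic transparent---and the original domain, in which $\bcP_{\bOmega}$ acts diagonally.
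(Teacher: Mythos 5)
Your pointwise Bernstein bound for a fixed quadruple of factors is fine, and your Cauchy--Schwarz reorganization of the variance proxy is in the right spirit, but the step that lifts it to a uniform statement does not work. An $\epsilon$-net over the normalized quadruples $(\bcL_A,\bcR_A,\bcL_B,\bcR_B)$ has log-cardinality of order $(n_1+n_2)n_3 r\log(1/\epsilon)$, and a union bound over such a net forces you to run Bernstein at confidence level $e^{-c(n_1+n_2)n_3 r}$, which inflates the deviation by a factor of order $\sqrt{(n_1+n_2)n_3 r}$ rather than the $\sqrt{\log((n_1\vee n_2)n_3)}$ appearing in the claim. Choosing $\epsilon$ polynomially small does not ``convert the log-cardinality into the $\log((n_1\vee n_2)n_3)$ factor''; it leaves the dimension-times-rank factor intact. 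The hypothesis $p\gtrsim \log((n_1\vee n_2)n_3)/(n_1\wedge n_2)$ is far too weak to absorb that loss, so your route cannot recover the stated inequality.

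The paper avoids the net entirely by decoupling the randomness from the factors: it writes the quantity as $\langle (p^{-1}\bcP_{\bOmega}-\bcI_{n_1})(\bcJ),\ (\bcL_A\ast_{\bPhi}\bcR_A^H)\circ(\bcL_B\ast_{\bPhi}\bcR_B^H)\rangle$ with $\bcJ$ the all-ones tensor, and bounds it by $\|(p^{-1}\bcP_{\bOmega}-\bcI_{n_1})(\bcJ)\|\cdot\|(\bcL_A\ast_{\bPhi}\bcR_A^H)\circ(\bcL_B\ast_{\bPhi}\bcR_B^H)\|_{\ast}$ via spectral--nuclear duality. The spectral norm of the single random tensor $(p^{-1}\bcP_{\bOmega}-\bcI_{n_1})(\bcJ)$ is controlled once by matrix Bernstein (using the bounds on $\sum\bar{\boldsymbol{\ce}}_{ijk}^H\ast_{\bPhi}\bar{\boldsymbol{\ce}}_{ijk}$ from Lemma~\ref{lemma:eijkbounds}), giving the $\sqrt{(n_1\vee n_2)\ell\log((n_1\vee n_2)n_3)/p}$ factor on one high-probability event independent of the factors; the nuclear norm of the Hadamard product is then bounded \emph{deterministically} by decomposing each transformed frontal slice into rank-one terms and applying Cauchy--Schwarz twice, which is exactly where the $\ell\cdot(\|\cdot\|_{2,2,\infty}\|\cdot\|_F\wedge\cdots)$ structure and the remaining power of $\ell$ come from. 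The ``simultaneously for all factors'' conclusion is then automatic. You should replace your covering argument with this Hadamard-product decoupling (or an equivalent mechanism that isolates the randomness in a factor-independent quantity).
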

\begin{lemma}\label{lemma:TC-cond}
Under conditions $\dist(\bcF_t, \bcF_{\star}) \leq \frac{\epsilon}{\sqrt{\ell}} \bar{\sigma}_{s_r} (\bcX_{\star})$ and $\sqrt{n_1} \| \bcL_{\sharp} \ast_{\bPhi} \bcR_{\sharp}^H \|_{2,\infty} \vee \sqrt{n_2} \| \bcR_{\sharp} \ast_{\bPhi} \bcL_{\sharp}^H \|_{2,\infty} \leq c_{\varsigma} \sqrt{\frac{\mu s_r}{n_3 \ell}} \bar{\sigma}_1 (\bcX_{\star})$, we have
\begin{subequations}
\begin{align}
\| \bcL_{\triangle} \ast_{\bPhi} \bcG_{\star}^{-\frac{1}{2}} \| \vee \| \bcR_{\triangle} \ast_{\bPhi} \bcG_{\star}^{-\frac{1}{2}} \| & \leq \epsilon; \label{eqn:TC-cond-spec}  \\
\| \bcR_{\sharp} \ast_{\bPhi} (\bcR_{\sharp}^H \ast_{\bPhi} \bcR_{\sharp})^{-1} \ast_{\bPhi} \bcG_{\star}^{\frac{1}{2}} \| & \leq \frac{1}{1 - \epsilon}; \label{eqn:TC-consequences-1}  \\
\| \bcG_{\star}^{\frac{1}{2}} \ast_{\bPhi} (\bcR_{\sharp}^H \ast_{\bPhi} \bcR_{\sharp})^{-1} \ast_{\bPhi} \bcG_{\star}^{\frac{1}{2}} \| & \leq \frac{1}{(1 - \epsilon)^2}; \label{eqn:TC-consequences-2}  \\
\sqrt{n_1} \| \bcL_{\sharp} \ast_{\bPhi} \bcG_{\star}^{\frac{1}{2}} \|_{2,\infty} \vee \sqrt{n_2} \| \bcR_{\sharp} \ast_{\bPhi} \bcG_{\star}^{\frac{1}{2}} \|_{2,\infty} & \leq \frac{c_{\varsigma}}{1 - \epsilon} \sqrt{\frac{\mu s_r}{n_3 \ell}} \bar{\sigma}_1 (\bcX_{\star}); \label{eqn:TC-cond-2inf-1p}  \\
\sqrt{n_1} \| \bcL_{\sharp} \ast_{\bPhi} \bcG_{\star}^{-\frac{1}{2}} \|_{2,\infty} \vee \sqrt{n_2} \| \bcR_{\sharp} \ast_{\bPhi} \bcG_{\star}^{-\frac{1}{2}} \|_{2,\infty} & \leq \frac{c_{\varsigma} \kappa}{1 - \epsilon} \sqrt{\frac{\mu s_r}{n_3 \ell}}; \label{eqn:TC-cond-2inf-1n}  \\
\sqrt{n_1} \| \bcL_{\triangle} \ast_{\bPhi} \bcG_{\star}^{\frac{1}{2}} \|_{2,\infty} \vee \sqrt{n_2} \| \bcR_{\triangle} \ast_{\bPhi} \bcG_{\star}^{\frac{1}{2}} \|_{2,\infty} & \leq (1 + \frac{c_{\varsigma}}{1 - \epsilon}) \sqrt{\frac{\mu s_r}{n_3 \ell}} \bar{\sigma}_1 (\bcX_{\star}). \label{eqn:TC-cond-2inf-2p}
\end{align}
\end{subequations}
\end{lemma}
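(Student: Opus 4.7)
The plan is to establish the six bounds in the order listed, since each later one builds on the earlier ones and on the general perturbation tools already developed in Appendix~\ref{sec:techlemmas}. The overall strategy is straightforward: \eqref{eqn:TC-cond-spec} comes from the definition of $\dist$, \eqref{eqn:TC-consequences-1}--\eqref{eqn:TC-consequences-2} are Weyl-type consequences of \eqref{eqn:TC-cond-spec}, and \eqref{eqn:TC-cond-2inf-1p}--\eqref{eqn:TC-cond-2inf-2p} are incoherence transfers obtained by rewriting $\bcL_\sharp \ast_\bPhi \bcG_\star^{1/2}$ through $\bcL_\sharp \ast_\bPhi \bcR_\sharp^H$ and by triangle inequality.

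For \eqref{eqn:TC-cond-spec}, I would repeat the argument of Lemma~\ref{lemma:Deltanorm} verbatim: the distance bound gives $\|\bcL_\triangle \ast_\bPhi \bcG_\star^{1/2}\|_F \vee \|\bcR_\triangle \ast_\bPhi \bcG_\star^{1/2}\|_F \leq \tfrac{\epsilon}{\sqrt\ell}\bar\sigma_{s_r}(\bcX_\star)$, and then the identity $\|\bcA\| = \|\widebar{\bA}\| \leq \|\widebar{\bA}\|_F = \sqrt{\ell}\|\bcA\|_F$ turns the Frobenius bound into a spectral one; finally multiply through by $\bcG_\star^{-1}$ on the right, using $\|\widebar{\bG}_\star^{-1}\| = \bar\sigma_{s_r}(\bcX_\star)^{-1}$, to cancel the factor $\bar\sigma_{s_r}(\bcX_\star)$. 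Then \eqref{eqn:TC-consequences-1} is exactly \eqref{eqn:Weyl-1R} from Lemma~\ref{lemma:Weyl} applied with the spectral bound just obtained. For \eqref{eqn:TC-consequences-2}, I would pass to the block-diagonal spectral domain and note
\[
\|\bcG_\star^{\frac12}\ast_\bPhi(\bcR_\sharp^H\ast_\bPhi \bcR_\sharp)^{-1}\ast_\bPhi \bcG_\star^{\frac12}\| = \|\widebar{\bG}_\star^{\frac12}(\widebar{\bR}_\sharp^H \widebar{\bR}_\sharp)^{-1}\widebar{\bG}_\star^{\frac12}\| = \sigma_{s_r}(\widebar{\bR}_\sharp \widebar{\bG}_\star^{-\frac12})^{-2},
\]
and use Weyl's inequality with $\sigma_{s_r}(\widebar{\bV}_\star)=1$ to get the lower bound $1-\epsilon$ on $\sigma_{s_r}(\widebar{\bR}_\sharp \widebar{\bG}_\star^{-1/2})$.

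The only mildly non-routine step is the incoherence transfer in \eqref{eqn:TC-cond-2inf-1p}. Here I exploit the identity
\[
\bcL_\sharp \ast_\bPhi \bcG_\star^{\frac12} = (\bcL_\sharp \ast_\bPhi \bcR_\sharp^H) \ast_\bPhi \bcR_\sharp \ast_\bPhi (\bcR_\sharp^H \ast_\bPhi \bcR_\sharp)^{-1} \ast_\bPhi \bcG_\star^{\frac12},
\]
invoke the sub-multiplicativity in Lemma~\ref{lemma:2infbound}, namely $\|\bcA \ast_\bPhi \bcB\|_{2,\infty}\leq \|\bcA\|_{2,\infty}\|\bcB\|$, and then plug in the hypothesis $\sqrt{n_1}\|\bcL_\sharp \ast_\bPhi \bcR_\sharp^H\|_{2,\infty}\leq c_\varsigma\sqrt{\mu s_r/(n_3\ell)}\,\bar\sigma_1(\bcX_\star)$ together with \eqref{eqn:TC-consequences-1}; the same argument handles the $\bcR_\sharp$ side. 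Passing from $\bcG_\star^{1/2}$ to $\bcG_\star^{-1/2}$ in \eqref{eqn:TC-cond-2inf-1n} is again Lemma~\ref{lemma:2infbound} applied to $\bcL_\sharp \ast_\bPhi \bcG_\star^{-1/2} = (\bcL_\sharp \ast_\bPhi \bcG_\star^{1/2})\ast_\bPhi \bcG_\star^{-1}$, picking up the factor $\|\widebar{\bG}_\star^{-1}\| = 1/\bar\sigma_{s_r}(\bcX_\star)$, which produces the $\kappa$.

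Finally, \eqref{eqn:TC-cond-2inf-2p} is obtained by the triangle inequality
\[
\|\bcL_\triangle \ast_\bPhi \bcG_\star^{\frac12}\|_{2,\infty} \leq \|\bcL_\sharp \ast_\bPhi \bcG_\star^{\frac12}\|_{2,\infty} + \|\bcL_\star \ast_\bPhi \bcG_\star^{\frac12}\|_{2,\infty},
\]
combined with \eqref{eqn:TC-cond-2inf-1p} and the direct bound $\|\bcL_\star \ast_\bPhi \bcG_\star^{1/2}\|_{2,\infty} = \|\bcU_\star \ast_\bPhi \bcG_\star\|_{2,\infty} \leq \|\bcU_\star\|_{2,\infty}\|\bcG_\star\| \leq \sqrt{\mu s_r/(n_1 n_3 \ell)}\,\bar\sigma_1(\bcX_\star)$, using the $\mu$-incoherence of $\bcU_\star$ from Definition~\ref{def:incoherence}. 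The main (mild) obstacle is bookkeeping: one must be careful that every spectral/Frobenius passage between the original domain and the block-diagonal spectral domain accounts correctly for the factor $\ell$ from \eqref{eqn:tensorproperty}, and that all left/right sides of $\ast_\bPhi$ are compatible so that Lemma~\ref{lemma:2infbound} can be applied in the correct direction.
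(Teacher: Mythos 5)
Your proposal is correct and follows essentially the same route as the paper: \eqref{eqn:TC-cond-spec} from the distance bound via the Frobenius-to-spectral conversion, \eqref{eqn:TC-consequences-1}--\eqref{eqn:TC-consequences-2} from Lemma~\ref{lemma:Weyl}, the incoherence transfer via Lemma~\ref{lemma:2infbound}, and the triangle inequality for \eqref{eqn:TC-cond-2inf-2p}. The only cosmetic difference is in \eqref{eqn:TC-cond-2inf-1p}: the paper applies the lower bound $\| \bcA \ast_{\bPhi} \bcB \|_{2,\infty} \geq \| \bcA \|_{2,\infty} \bar{\sigma}_{s_r}(\bcB)$ to $\bcL_{\sharp} \ast_{\bPhi} \bcR_{\sharp}^H = (\bcL_{\sharp} \ast_{\bPhi} \bcG_{\star}^{\frac{1}{2}}) \ast_{\bPhi} (\bcR_{\sharp} \ast_{\bPhi} \bcG_{\star}^{-\frac{1}{2}})^H$ and divides by $1-\epsilon$, whereas you apply the upper bound to $\bcL_{\sharp} \ast_{\bPhi} \bcG_{\star}^{\frac{1}{2}} = (\bcL_{\sharp} \ast_{\bPhi} \bcR_{\sharp}^H) \ast_{\bPhi} \bcR_{\sharp} \ast_{\bPhi} (\bcR_{\sharp}^H \ast_{\bPhi} \bcR_{\sharp})^{-1} \ast_{\bPhi} \bcG_{\star}^{\frac{1}{2}}$ and multiply by $\frac{1}{1-\epsilon}$ via \eqref{eqn:TC-consequences-1} --- algebraically equivalent.
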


\begin{proof}
First, repeating the derivation for Lemma~\ref{lemma:Qexistence} obtains \eqref{eqn:TC-cond-spec}. Second, taking the condition \eqref{eqn:TC-cond-spec} and Lemma~\ref{lemma:Weyl} together to obtain \eqref{eqn:TC-consequences-1} and \eqref{eqn:TC-consequences-2}. Third, taking the incoherence condition $\sqrt{n_1} \| \bcL_{\sharp} \ast_{\bPhi} \bcR_{\sharp}^H \|_{2,\infty} \vee \sqrt{n_2} \| \bcR_{\sharp} \ast_{\bPhi} \bcL_{\sharp}^H \|_{2,\infty} \leq c_{\varsigma} \sqrt{\frac{\mu s_r}{n_3 \ell}} \bar{\sigma}_1 (\bcX_{\star})$, and $\| \bcA \ast_{\bPhi} \bcB \|_{2,\infty} \geq \| \bcA \|_{2,\infty} \bar{\sigma}_{s_r} (\bcB)$ from Lemma~\ref{lemma:2infbound}, together with the relations
\begin{align*}
\| \bcL_{\sharp} \ast_{\bPhi} \bcR_{\sharp}^H \|_{2,\infty} & \geq \bar{\sigma}_{s_r} (\bcR_{\sharp} \ast_{\bPhi} \bcG_{\star}^{-\frac{1}{2}}) \| \bcL_{\sharp} \ast_{\bPhi} \bcG_{\star}^{\frac{1}{2}} \|_{2,\infty}  \\
& \geq (\bar{\sigma}_{s_r} (\bcR_{\star} \ast_{\bPhi} \bcG_{\star}^{-\frac{1}{2}}) - \| \bcR_{\triangle} \ast_{\bPhi} \bcG_{\star}^{-\frac{1}{2}} \| ) \| \bcL_{\sharp} \ast_{\bPhi} \bcG_{\star}^{\frac{1}{2}} \|_{2,\infty}  \\
& \geq (1 - \epsilon) \| \bcL_{\sharp} \ast_{\bPhi} \bcG_{\star}^{\frac{1}{2}} \|_{2,\infty};  \\
\| \bcR_{\sharp} \ast_{\bPhi} \bcL_{\sharp}^H \|_{2,\infty} & \geq \bar{\sigma}_{s_r} (\bcL_{\sharp} \ast_{\bPhi} \bcG_{\star}^{-\frac{1}{2}}) \| \bcR_{\sharp} \ast_{\bPhi} \bcG_{\star}^{\frac{1}{2}} \|_{2,\infty}  \\
& \geq (\bar{\sigma}_{s_r} (\bcL_{\star} \ast_{\bPhi} \bcG_{\star}^{-\frac{1}{2}}) - \| \bcL_{\triangle} \ast_{\bPhi} \bcG_{\star}^{-\frac{1}{2}} \| ) \| \bcR_{\sharp} \ast_{\bPhi} \bcG_{\star}^{\frac{1}{2}} \|_{2,\infty}  \\
& \geq (1 - \epsilon) \| \bcR_{\sharp} \ast_{\bPhi} \bcG_{\star}^{\frac{1}{2}} \|_{2,\infty}
\end{align*}
to obtain \eqref{eqn:TC-cond-2inf-1p} and \eqref{eqn:TC-cond-2inf-1n}. Finally, \eqref{eqn:TC-cond-2inf-2p} can be obtained by applying the triangle inequality together with incoherence assumption.
\end{proof}

Now we prove Lemma~\ref{lemma:TCcontraction}. First, we define the event $G$ as the intersection of the events that the bounds in Corollary~\ref{coro:POmegatangent} and Lemma~\ref{lemma:POmeganonconvex} hold. The rest of the proof is under the assumption that $G$ holds, which happens with high probability. By the condition $\dist(\bcF_t, \bcF_{\star}) \leq \frac{0.02}{\sqrt{\ell}} \bar{\sigma}_{s_r} (\bcX_{\star})$ and Lemma~\ref{lemma:Qexistence}, one knows that $\bcQ_t$, the optimal alignment tensor between $\bcF_t$ and $\bcF_{\star}$ exists, and $\epsilon \coloneq 0.02$. In addition, denote $\widetilde{\bcF}_{t+1}$ as the update before projection as
\begin{align*}
\widetilde{\bcF}_{t+1} = \begin{bmatrix} \widetilde{\bcL}_{t+1} \\ \widetilde{\bcR}_{t+1} \end{bmatrix} = \begin{bmatrix} \bcL_t - \frac{\eta}{p} \bcP_{\bOmega}(\bcL_t \ast_{\bPhi} \bcR_t^H - \bcX_{\star}) \ast_{\bPhi} \bcR_t \ast_{\bPhi} (\bcR_t^H \ast_{\bPhi} \bcR_t)^{-1}  \\
\bcR_t - \frac{\eta}{p} \bcP_{\bOmega}(\bcL_t \ast_{\bPhi} \bcR_t^H - \bcX_{\star})^H \ast_{\bPhi} \bcL_t \ast_{\bPhi} (\bcL_t^H \ast_{\bPhi} \bcL_t)^{-1} \end{bmatrix},
\end{align*}
and therefore $\bcF_{t+1} = \cP_{\varsigma} (\widetilde{\bcF}_{t+1})$. Note that it suffices to prove the following relation
\begin{align}\label{eqn:TCgoal}
\dist(\widetilde{\bcF}_{t+1}, \bcF_{\star}) \leq (1 - 0.6 \eta) \dist(\bcF_t, \bcF_{\star}),
\end{align}
since the conclusion $\| \bcL_t \ast_{\bPhi} \bcR_t^H - \bcX_{\star} \|_F \leq 1.5 \dist(\bcF_t, \bcF_{\star})$ is a simple consequence of Lemma~\ref{lemma:tensor2factor}; see \eqref{eqn:disttensor} for details. In the following, we focus on proving \eqref{eqn:TCgoal}. By the definition of $\dist(\widetilde{\bcF}_{t+1}, \bcF_{\star})$, we have
\begin{align}\label{eqn:TCexpand}
\dist^2(\widetilde{\bcF}_{t+1}, \bcF_{\star}) \leq \| (\widetilde{\bcL}_{t+1} \ast_{\bPhi} \bcQ_t - \bcL_{\star}) \ast_{\bPhi} \bcG_{\star}^{\frac{1}{2}} \|_F^2 + \| (\widetilde{\bcR}_{t+1} \ast_{\bPhi} \bcQ_t^{-H} - \bcR_{\star}) \ast_{\bPhi} \bcG_{\star}^{\frac{1}{2}} \|_F^2.
\end{align}
Plugging in the update rule \eqref{eqn:TCupdate} and the decomposition $\bcL_{\sharp} \ast_{\bPhi} \bcR_{\sharp}^H - \bcX_{\star} = \bcL_{\triangle} \ast_{\bPhi} \bcR_{\sharp}^H + \bcL_{\star} \ast_{\bPhi} \bcR_{\triangle}^H$ to obtain
\begin{align*}
& (\widetilde{\bcL}_{t+1} \ast_{\bPhi} \bcQ_t - \bcL_{\star}) \ast_{\bPhi} \bcG_{\star}^{\frac{1}{2}}  \\
= & \Big(\bcL_{\sharp} - \eta p^{-1} \bcP_{\bOmega} (\bcL_{\sharp} \ast_{\bPhi} \bcR_{\sharp}^H - \bcX_{\star}) \ast_{\bPhi} \bcR_{\sharp} \ast_{\bPhi} (\bcR_{\sharp}^H \ast_{\bPhi} \bcR_{\sharp})^{-1} - \bcL_{\star} \Big) \ast_{\bPhi} \bcG_{\star}^{\frac{1}{2}}  \\
= & \bcL_{\triangle} \ast_{\bPhi} \bcG_{\star}^{\frac{1}{2}} - \eta (\bcL_{\sharp} \ast_{\bPhi} \bcR_{\sharp}^H - \bcX_{\star}) \ast_{\bPhi} \bcR_{\sharp} \ast_{\bPhi} (\bcR_{\sharp}^H \ast_{\bPhi} \bcR_{\sharp})^{-1} \ast_{\bPhi} \bcG_{\star}^{\frac{1}{2}}  \\
&\qquad\qquad - \eta (p^{-1} \bcP_{\bOmega} - \bcI_{n_1})(\bcL_{\sharp} \ast_{\bPhi} \bcR_{\sharp}^H - \bcX_{\star}) \ast_{\bPhi} \bcR_{\sharp} \ast_{\bPhi} (\bcR_{\sharp}^H \ast_{\bPhi} \bcR_{\sharp})^{-1} \ast_{\bPhi} \bcG_{\star}^{\frac{1}{2}}  \\
= & (1 - \eta) \bcL_{\triangle} \ast_{\bPhi} \bcG_{\star}^{\frac{1}{2}} - \eta \bcL_{\star} \ast_{\bPhi} \bcR_{\triangle}^H \ast_{\bPhi} \bcR_{\sharp} \ast_{\bPhi} (\bcR_{\sharp}^H \ast_{\bPhi} \bcR_{\sharp})^{-1} \ast_{\bPhi} \bcG_{\star}^{\frac{1}{2}}  \\
&\qquad\qquad - \eta (p^{-1} \bcP_{\bOmega} - \bcI_{n_1})(\bcL_{\sharp} \ast_{\bPhi} \bcR_{\sharp}^H - \bcX_{\star}) \ast_{\bPhi} \bcR_{\sharp} \ast_{\bPhi} (\bcR_{\sharp}^H \ast_{\bPhi} \bcR_{\sharp})^{-1} \ast_{\bPhi} \bcG_{\star}^{\frac{1}{2}}.
\end{align*}
This allows us to expand the square of the first term in \eqref{eqn:TCexpand} as
\begin{align*}
& \| (\widetilde{\bcL}_{t+1} \ast_{\bPhi} \bcQ_t - \bcL_{\star}) \ast_{\bPhi} \bcG_{\star}^{\frac{1}{2}} \|_F^2  \\
= & \| (1 - \eta) \bcL_{\triangle} \ast_{\bPhi} \bcG_{\star}^{\frac{1}{2}} - \eta \bcL_{\star} \ast_{\bPhi} \bcR_{\triangle}^H \ast_{\bPhi} \bcR_{\sharp} \ast_{\bPhi} (\bcR_{\sharp}^H \ast_{\bPhi} \bcR_{\sharp})^{-1} \ast_{\bPhi} \bcG_{\star}^{\frac{1}{2}} \|_F^2  \\
&\quad - 2 \eta (1 - \eta) \langle \bcL_{\triangle} \ast_{\bPhi} \bcG_{\star}^{\frac{1}{2}} , (p^{-1} \bcP_{\bOmega} - \bcI_{n_1})(\bcL_{\sharp} \ast_{\bPhi} \bcR_{\sharp}^H - \bcX_{\star}) \ast_{\bPhi} \bcR_{\sharp} \ast_{\bPhi} (\bcR_{\sharp}^H \ast_{\bPhi} \bcR_{\sharp})^{-1} \ast_{\bPhi} \bcG_{\star}^{\frac{1}{2}} \rangle  \\
&\quad + 2 \eta^2 \langle \bcL_{\star} \ast_{\bPhi} \bcR_{\triangle}^H \ast_{\bPhi} \bcR_{\sharp} \ast_{\bPhi} (\bcR_{\sharp}^H \ast_{\bPhi} \bcR_{\sharp})^{-1} \ast_{\bPhi} \bcG_{\star}^{\frac{1}{2}} ,  \\
&\qquad\qquad\qquad (p^{-1} \bcP_{\bOmega} - \bcI_{n_1})(\bcL_{\sharp} \ast_{\bPhi} \bcR_{\sharp}^H - \bcX_{\star}) \ast_{\bPhi} \bcR_{\sharp} \ast_{\bPhi} (\bcR_{\sharp}^H \ast_{\bPhi} \bcR_{\sharp})^{-1} \ast_{\bPhi} \bcG_{\star}^{\frac{1}{2}} \rangle  \\
&\quad + \eta^2 \| (p^{-1} \bcP_{\bOmega} - \bcI_{n_1})(\bcL_{\sharp} \ast_{\bPhi} \bcR_{\sharp}^H - \bcX_{\star}) \ast_{\bPhi} \bcR_{\sharp} \ast_{\bPhi} (\bcR_{\sharp}^H \ast_{\bPhi} \bcR_{\sharp})^{-1} \ast_{\bPhi} \bcG_{\star}^{\frac{1}{2}} \|_F^2  \\
\coloneq & \mathfrak{P}_1 - \mathfrak{P}_2 + \mathfrak{P}_3 + \mathfrak{P}_4.
\end{align*}

\paragraph{Bound of $\mathfrak{P}_1$.} The first term $\mathfrak{P}_1$ has already been controlled in \eqref{eqn:TF-Lt-bound} as follows.
\begin{align*}
\mathfrak{P}_1 \leq \Big( (1 - \eta)^2 + \frac{2 \epsilon}{1 - \epsilon} \eta (1 - \eta) \Big) \| \bcL_{\triangle} \ast_{\bPhi} \bcG_{\star}^{\frac{1}{2}} \|_F^2 + \frac{2 \epsilon + \epsilon^2}{(1 - \epsilon)^2} \eta^2 \| \bcR_{\triangle} \ast_{\bPhi} \bcG_{\star}^{\frac{1}{2}} \|_F^2.
\end{align*}

\paragraph{Bound of $\mathfrak{P}_2$.} Using the decomposition $\bcL_{\sharp} \ast_{\bPhi} \bcR_{\sharp}^H - \bcX_{\star} = \bcL_{\triangle} \ast_{\bPhi} \bcR_{\star}^H + \bcL_{\sharp} \ast_{\bPhi} \bcR_{\triangle}^H$ and applying the triangle inequality to obtain
\begin{align*}
|\mathfrak{P}_2| & = 2 \eta (1 - \eta) \Big| \langle \bcL_{\triangle} \ast_{\bPhi} \bcG_{\star}^{\frac{1}{2}} , (p^{-1} \bcP_{\bOmega} - \bcI_{n_1})(\bcL_{\sharp} \ast_{\bPhi} \bcR_{\sharp}^H - \bcX_{\star}) \ast_{\bPhi} \bcR_{\sharp} \ast_{\bPhi} (\bcR_{\sharp}^H \ast_{\bPhi} \bcR_{\sharp})^{-1} \ast_{\bPhi} \bcG_{\star}^{\frac{1}{2}} \rangle \Big|  \\
& \leq 2 \eta (1 - \eta) \Big( \Big| \langle \bcL_{\triangle} \ast_{\bPhi} \bcG_{\star}^{\frac{1}{2}} , (p^{-1} \bcP_{\bOmega} - \bcI_{n_1})(\bcL_{\triangle} \ast_{\bPhi} \bcR_{\star}^H) \ast_{\bPhi} \bcR_{\star} \ast_{\bPhi} (\bcR_{\sharp}^H \ast_{\bPhi} \bcR_{\sharp})^{-1} \ast_{\bPhi} \bcG_{\star}^{\frac{1}{2}} \rangle \Big|  \\
&\quad + \Big| \langle \bcL_{\triangle} \ast_{\bPhi} \bcG_{\star}^{\frac{1}{2}} , (p^{-1} \bcP_{\bOmega} - \bcI_{n_1})(\bcL_{\triangle} \ast_{\bPhi} \bcR_{\star}^H) \ast_{\bPhi} \bcR_{\triangle} \ast_{\bPhi} (\bcR_{\sharp}^H \ast_{\bPhi} \bcR_{\sharp})^{-1} \ast_{\bPhi} \bcG_{\star}^{\frac{1}{2}} \rangle \Big|  \\
&\quad + \Big| \langle \bcL_{\triangle} \ast_{\bPhi} \bcG_{\star}^{\frac{1}{2}} , (p^{-1} \bcP_{\bOmega} - \bcI_{n_1})(\bcL_{\sharp} \ast_{\bPhi} \bcR_{\triangle}^H) \ast_{\bPhi} \bcR_{\sharp} \ast_{\bPhi} (\bcR_{\sharp}^H \ast_{\bPhi} \bcR_{\sharp})^{-1} \ast_{\bPhi} \bcG_{\star}^{\frac{1}{2}} \rangle \Big| \Big)  \\
& \coloneq 2 \eta (1 - \eta) (\mathfrak{P}_{2,1} + \mathfrak{P}_{2,2} + \mathfrak{P}_{2,3}).
\end{align*}
For the first term $\mathfrak{P}_{2,1}$, we can invoke Corollary~\ref{coro:POmegatangent} to obtain
\begin{align*}
\mathfrak{P}_{2,1} & \leq c_1 \sqrt{\frac{\mu s_r (n_1 + n_2) \log( (n_1 \vee n_2) n_3 )}{p n_1 n_2 n_3} } \| \bcL_{\triangle} \ast_{\bPhi} \bcR_{\star}^H \|_F \| \bcL_{\triangle} \ast_{\bPhi} \bcG_{\star} \ast_{\bPhi} (\bcR_{\sharp}^H \ast_{\bPhi} \bcR_{\sharp})^{-1} \ast_{\bPhi} \bcR_{\star}^H \|_F  \\
& = c_1 \sqrt{\frac{\mu s_r (n_1 + n_2) \log( (n_1 \vee n_2) n_3 )}{p n_1 n_2 n_3} } \| \bcL_{\triangle} \ast_{\bPhi} \bcG_{\star}^{\frac{1}{2}} \ast_{\bPhi} \bcG_{\star}^{-\frac{1}{2}} \ast_{\bPhi} \bcR_{\star}^H \|_F  \\
&\qquad\qquad\qquad \| \bcL_{\triangle} \ast_{\bPhi} \bcG_{\star}^{\frac{1}{2}} \ast_{\bPhi} \bcG_{\star}^{\frac{1}{2}} \ast_{\bPhi} (\bcR_{\sharp}^H \ast_{\bPhi} \bcR_{\sharp})^{-1} \ast_{\bPhi} \bcG_{\star}^{\frac{1}{2}} \ast_{\bPhi} \bcG_{\star}^{-\frac{1}{2}} \ast_{\bPhi} \bcR_{\star}^H \|_F  \\
& \leq c_1 \sqrt{\frac{\mu s_r (n_1 + n_2) \log( (n_1 \vee n_2) n_3 )}{p n_1 n_2 n_3} } \| \bcL_{\triangle} \ast_{\bPhi} \bcG_{\star}^{\frac{1}{2}} \|_F \| \bcG_{\star}^{-\frac{1}{2}} \ast_{\bPhi} \bcR_{\star}^H \|  \\
&\qquad\qquad\qquad \| \bcL_{\triangle} \ast_{\bPhi} \bcG_{\star}^{\frac{1}{2}} \|_F \| \bcG_{\star}^{\frac{1}{2}} \ast_{\bPhi} (\bcR_{\sharp}^H \ast_{\bPhi} \bcR_{\sharp})^{-1} \ast_{\bPhi} \bcG_{\star}^{\frac{1}{2}} \| \| \bcG_{\star}^{-\frac{1}{2}} \ast_{\bPhi} \bcR_{\star}^H \|  \\
& = c_1 \sqrt{\frac{\mu s_r (n_1 + n_2) \log( (n_1 \vee n_2) n_3 )}{p n_1 n_2 n_3} } \| \bcL_{\triangle} \ast_{\bPhi} \bcG_{\star}^{\frac{1}{2}} \|_F^2 \| \bcG_{\star}^{\frac{1}{2}} \ast_{\bPhi} (\bcR_{\sharp}^H \ast_{\bPhi} \bcR_{\sharp})^{-1} \ast_{\bPhi} \bcG_{\star}^{\frac{1}{2}} \|  \\
& \leq \frac{c_1}{(1 - \epsilon)^2} \sqrt{\frac{\mu s_r (n_1 + n_2) \log( (n_1 \vee n_2) n_3 )}{p n_1 n_2 n_3} } \| \bcL_{\triangle} \ast_{\bPhi} \bcG_{\star}^{\frac{1}{2}} \|_F^2,
\end{align*}
where the last inequality uses \eqref{eqn:TC-consequences-2}. For the term $\mathfrak{P}_{2,2}$, we can invoke Lemma~\ref{lemma:POmeganonconvex} with $\bcL_A \coloneq \bcL_{\triangle} \ast_{\bPhi} \bcG_{\star}^{\frac{1}{2}}$, $\bcR_A \coloneq \bcR_{\star} \ast_{\bPhi} \bcG_{\star}^{-\frac{1}{2}}$, $\bcL_B \coloneq \bcL_{\triangle} \ast_{\bPhi} \bcG_{\star}^{\frac{1}{2}}$, $\bcR_B \coloneq \bcR_{\triangle} \ast_{\bPhi} (\bcR_{\sharp}^H \ast_{\bPhi} \bcR_{\sharp})^{-1} \ast_{\bPhi} \bcG_{\star}^{\frac{1}{2}}$ to obtain
\begin{align*}
\mathfrak{P}_{2,2} & \leq c_2 \ell^{\frac{3}{2}} \sqrt{\frac{(n_1 \vee n_2) \log( (n_1 \vee n_2) n_3 )}{p} } \| \bcL_{\triangle} \ast_{\bPhi} \bcG_{\star}^{\frac{1}{2}} \|_{2,2,\infty} \| \bcL_{\triangle} \ast_{\bPhi} \bcG_{\star}^{\frac{1}{2}} \|_F  \\
&\qquad\qquad\qquad \| \bcR_{\star} \ast_{\bPhi} \bcG_{\star}^{-\frac{1}{2}} \|_{2,2,\infty} \| \bcR_{\triangle} \ast_{\bPhi} (\bcR_{\sharp}^H \ast_{\bPhi} \bcR_{\sharp})^{-1} \ast_{\bPhi} \bcG_{\star}^{\frac{1}{2}} \|_F  \\
& \leq c_2 \ell^{\frac{3}{2}} \sqrt{\frac{(n_1 \vee n_2) \log( (n_1 \vee n_2) n_3 )}{p} } \| \bcL_{\triangle} \ast_{\bPhi} \bcG_{\star}^{\frac{1}{2}} \|_{2,\infty} \| \bcL_{\triangle} \ast_{\bPhi} \bcG_{\star}^{\frac{1}{2}} \|_F  \\
&\qquad\qquad\qquad \| \bcR_{\star} \ast_{\bPhi} \bcG_{\star}^{-\frac{1}{2}} \|_{2,\infty} \| \bcR_{\triangle} \ast_{\bPhi} (\bcR_{\sharp}^H \ast_{\bPhi} \bcR_{\sharp})^{-1} \ast_{\bPhi} \bcG_{\star}^{\frac{1}{2}} \|_F  \\
& \leq c_2 \ell^{\frac{3}{2}} \sqrt{\frac{(n_1 \vee n_2) \log( (n_1 \vee n_2) n_3 )}{p} } \| \bcL_{\triangle} \ast_{\bPhi} \bcG_{\star}^{\frac{1}{2}} \|_{2,\infty} \| \bcL_{\triangle} \ast_{\bPhi} \bcG_{\star}^{\frac{1}{2}} \|_F  \\
&\qquad\qquad\qquad \| \bcR_{\star} \ast_{\bPhi} \bcG_{\star}^{-\frac{1}{2}} \|_{2,\infty} \| \bcR_{\triangle} \ast_{\bPhi} \bcG_{\star}^{-\frac{1}{2}} \|_F \| \bcG_{\star}^{\frac{1}{2}} \ast_{\bPhi} (\bcR_{\sharp}^H \ast_{\bPhi} \bcR_{\sharp})^{-1} \ast_{\bPhi} \bcG_{\star}^{\frac{1}{2}} \|.
\end{align*}
Similarly, we can bound $\mathfrak{P}_{2,3}$ as
\begin{align*}
\mathfrak{P}_{2,3} & \leq c_2 \ell^{\frac{3}{2}} \sqrt{\frac{(n_1 \vee n_2) \log( (n_1 \vee n_2) n_3 )}{p} } \| \bcL_{\sharp} \ast_{\bPhi} \bcG_{\star}^{-\frac{1}{2}} \|_{2,\infty} \| \bcL_{\triangle} \ast_{\bPhi} \bcG_{\star}^{\frac{1}{2}} \|_F  \\
&\qquad\qquad\qquad \| \bcR_{\triangle} \ast_{\bPhi} \bcG_{\star}^{\frac{1}{2}} \|_F \| \bcR_{\sharp} \ast_{\bPhi} \bcG_{\star}^{-\frac{1}{2}} \|_{2,\infty} \| \bcG_{\star}^{\frac{1}{2}} \ast_{\bPhi} (\bcR_{\sharp}^H \ast_{\bPhi} \bcR_{\sharp})^{-1} \ast_{\bPhi} \bcG_{\star}^{\frac{1}{2}} \|.
\end{align*}
Utilizing the consequences in Lemma~\ref{lemma:TC-cond}, we have
\begin{align*}
\mathfrak{P}_{2,2} & \leq \frac{c_2}{(1 - \epsilon)^2} \Big(1 + \frac{c_{\varsigma}}{1 - \epsilon} \Big) \sqrt{\frac{\ell \log( (n_1 \vee n_2) n_3 )}{p (n_1 \wedge n_2)} } \frac{\mu s_r \kappa}{n_3} \| \bcL_{\triangle} \ast_{\bPhi} \bcG_{\star}^{\frac{1}{2}} \|_F \| \bcR_{\triangle} \ast_{\bPhi} \bcG_{\star}^{\frac{1}{2}} \|_F;  \\
\mathfrak{P}_{2,3} & \leq \frac{c_2}{(1 - \epsilon)^4} \sqrt{\frac{\ell \log( (n_1 \vee n_2) n_3 )}{p (n_1 \wedge n_2)} } \frac{\mu s_r c_{\varsigma}^2 \kappa^2}{n_3} \| \bcL_{\triangle} \ast_{\bPhi} \bcG_{\star}^{\frac{1}{2}} \|_F \| \bcR_{\triangle} \ast_{\bPhi} \bcG_{\star}^{\frac{1}{2}} \|_F.
\end{align*}
We then combine the bounds for $\mathfrak{P}_{2,1}$, $\mathfrak{P}_{2,2}$ and $\mathfrak{P}_{2,3}$ to arrive at
\begin{align*}
|\mathfrak{P}_2| & \leq 2 \eta (1 - \eta) \Big( \frac{c_1}{(1 - \epsilon)^2} \sqrt{\frac{\mu s_r (n_1 + n_2) \log( (n_1 \vee n_2) n_3 )}{p n_1 n_2 n_3} } \| \bcL_{\triangle} \ast_{\bPhi} \bcG_{\star}^{\frac{1}{2}} \|_F^2  \\
&\qquad + \frac{c_2}{(1 - \epsilon)^2} \Big(1 + \frac{c_{\varsigma}}{1 - \epsilon} + \frac{c_{\varsigma}^2 \kappa}{(1 - \epsilon)^2} \Big) \sqrt{\frac{\ell \log( (n_1 \vee n_2) n_3 )}{p (n_1 \wedge n_2)} } \frac{\mu s_r \kappa}{n_3} \| \bcL_{\triangle} \ast_{\bPhi} \bcG_{\star}^{\frac{1}{2}} \|_F \| \bcR_{\triangle} \ast_{\bPhi} \bcG_{\star}^{\frac{1}{2}} \|_F \Big)  \\
& = 2 \eta (1 - \eta) \Big( \nu_1 \| \bcL_{\triangle} \ast_{\bPhi} \bcG_{\star}^{\frac{1}{2}} \|_F^2 + \nu_2 \| \bcL_{\triangle} \ast_{\bPhi} \bcG_{\star}^{\frac{1}{2}} \|_F \| \bcR_{\triangle} \ast_{\bPhi} \bcG_{\star}^{\frac{1}{2}} \|_F \Big)  \\
& \leq \eta (1 - \eta) \Big( (2 \nu_1 + \nu_2) \| \bcL_{\triangle} \ast_{\bPhi} \bcG_{\star}^{\frac{1}{2}} \|_F^2 + \nu_2 \| \bcR_{\triangle} \ast_{\bPhi} \bcG_{\star}^{\frac{1}{2}} \|_F^2 \Big),
\end{align*}
where we denote
\begin{align}\label{eqn:defnu1nu2}
\nu_1 & \coloneq \frac{c_1}{(1 - \epsilon)^2} \sqrt{\frac{\mu s_r (n_1 + n_2) \log( (n_1 \vee n_2) n_3 )}{p n_1 n_2 n_3} }  \nonumber \\
\mathrm{and} \quad \nu_2 & \coloneq \frac{c_2}{(1 - \epsilon)^2} \Big(1 + \frac{c_{\varsigma}}{1 - \epsilon} + \frac{c_{\varsigma}^2 \kappa}{(1 - \epsilon)^2} \Big) \sqrt{\frac{\ell \log( (n_1 \vee n_2) n_3 )}{p (n_1 \wedge n_2)} } \frac{\mu s_r \kappa}{n_3}.
\end{align}

\paragraph{Bound of $\mathfrak{P}_3$.} For the term $\mathfrak{P}_3$, we first have
\begin{align*}
|\mathfrak{P}_3| & \leq 2 \eta^2 \Big( \Big| \langle \bcL_{\star} \ast_{\bPhi} \bcR_{\triangle}^H \ast_{\bPhi} \bcR_{\sharp} \ast_{\bPhi} (\bcR_{\sharp}^H \ast_{\bPhi} \bcR_{\sharp})^{-1} \ast_{\bPhi} \bcG_{\star}^{\frac{1}{2}} ,  \\
&\qquad\qquad\qquad (p^{-1} \bcP_{\bOmega} - \bcI_{n_1})(\bcL_{\triangle} \ast_{\bPhi} \bcR_{\sharp}^H) \ast_{\bPhi} \bcR_{\sharp} \ast_{\bPhi} (\bcR_{\sharp}^H \ast_{\bPhi} \bcR_{\sharp})^{-1} \ast_{\bPhi} \bcG_{\star}^{\frac{1}{2}} \rangle \Big|  \\
&\qquad + \Big| \langle \bcL_{\star} \ast_{\bPhi} \bcR_{\triangle}^H \ast_{\bPhi} \bcR_{\sharp} \ast_{\bPhi} (\bcR_{\sharp}^H \ast_{\bPhi} \bcR_{\sharp})^{-1} \ast_{\bPhi} \bcG_{\star}^{\frac{1}{2}} ,  \\
&\qquad\qquad\qquad (p^{-1} \bcP_{\bOmega} - \bcI_{n_1})(\bcL_{\star} \ast_{\bPhi} \bcR_{\triangle}^H) \ast_{\bPhi} \bcR_{\sharp} \ast_{\bPhi} (\bcR_{\sharp}^H \ast_{\bPhi} \bcR_{\sharp})^{-1} \ast_{\bPhi} \bcG_{\star}^{\frac{1}{2}} \rangle \Big| \Big)  \\
& \coloneq 2 \eta^2 (\mathfrak{P}_{3,1} + \mathfrak{P}_{3,2}).
\end{align*}
For $\mathfrak{P}_{3,1}$, we invoke Lemma~\ref{lemma:POmeganonconvex} with $\bcL_A \coloneq \bcL_{\triangle} \ast_{\bPhi} \bcG_{\star}^{\frac{1}{2}}$, $\bcR_A \coloneq \bcR_{\sharp} \ast_{\bPhi} \bcG_{\star}^{-\frac{1}{2}}$, $\bcL_B \coloneq \bcL_{\star} \ast_{\bPhi} \bcG_{\star}^{-\frac{1}{2}}$, $\bcR_B \coloneq \bcR_{\sharp} \ast_{\bPhi} (\bcR_{\sharp}^H \ast_{\bPhi} \bcR_{\sharp})^{-1} \ast_{\bPhi} \bcG_{\star} \ast_{\bPhi} (\bcR_{\sharp}^H \ast_{\bPhi} \bcR_{\sharp})^{-1} \ast_{\bPhi} \bcR_{\sharp}^H \ast_{\bPhi} \bcR_{\triangle} \ast_{\bPhi} \bcG_{\star}^{\frac{1}{2}}$ and use the consequences in Lemma~\ref{lemma:TC-cond} to obtain
\begin{align*}
\mathfrak{P}_{3,1} & \leq c_2 \ell^{\frac{3}{2}} \sqrt{\frac{(n_1 \vee n_2) \log( (n_1 \vee n_2) n_3 )}{p} } \| \bcL_{\triangle} \ast_{\bPhi} \bcG_{\star}^{\frac{1}{2}} \|_F \| \bcL_{\star} \ast_{\bPhi} \bcG_{\star}^{-\frac{1}{2}} \|_{2,2,\infty}  \\
&\qquad \| \bcR_{\sharp} \ast_{\bPhi} \bcG_{\star}^{-\frac{1}{2}} \|_{2,2,\infty} \| \bcR_{\sharp} \ast_{\bPhi} (\bcR_{\sharp}^H \ast_{\bPhi} \bcR_{\sharp})^{-1} \ast_{\bPhi} \bcG_{\star} \ast_{\bPhi} (\bcR_{\sharp}^H \ast_{\bPhi} \bcR_{\sharp})^{-1} \ast_{\bPhi} \bcR_{\sharp}^H \ast_{\bPhi} \bcR_{\triangle} \ast_{\bPhi} \bcG_{\star}^{\frac{1}{2}} \|_F  \\
& \leq c_2 \ell^{\frac{3}{2}} \sqrt{\frac{(n_1 \vee n_2) \log( (n_1 \vee n_2) n_3 )}{p} } \| \bcL_{\triangle} \ast_{\bPhi} \bcG_{\star}^{\frac{1}{2}} \|_F \| \bcL_{\star} \ast_{\bPhi} \bcG_{\star}^{-\frac{1}{2}} \|_{2,\infty}  \\
&\qquad \| \bcR_{\sharp} \ast_{\bPhi} \bcG_{\star}^{-\frac{1}{2}} \|_{2,\infty} \| \bcR_{\sharp} \ast_{\bPhi} (\bcR_{\sharp}^H \ast_{\bPhi} \bcR_{\sharp})^{-1} \ast_{\bPhi} \bcG_{\star} \ast_{\bPhi} (\bcR_{\sharp}^H \ast_{\bPhi} \bcR_{\sharp})^{-1} \ast_{\bPhi} \bcR_{\sharp}^H \ast_{\bPhi} \bcR_{\triangle} \ast_{\bPhi} \bcG_{\star}^{\frac{1}{2}} \|_F  \\
& \leq c_2 \ell^{\frac{3}{2}} \sqrt{\frac{(n_1 \vee n_2) \log( (n_1 \vee n_2) n_3 )}{p} } \| \bcL_{\triangle} \ast_{\bPhi} \bcG_{\star}^{\frac{1}{2}} \|_F \| \bcL_{\star} \ast_{\bPhi} \bcG_{\star}^{-\frac{1}{2}} \|_{2,\infty}  \\
&\qquad \| \bcR_{\sharp} \ast_{\bPhi} \bcG_{\star}^{-\frac{1}{2}} \|_{2,\infty} \| \bcR_{\sharp} \ast_{\bPhi} (\bcR_{\sharp}^H \ast_{\bPhi} \bcR_{\sharp})^{-1} \ast_{\bPhi} \bcG_{\star}^{\frac{1}{2}} \|^2 \| \bcR_{\triangle} \ast_{\bPhi} \bcG_{\star}^{\frac{1}{2}} \|_F  \\
& \leq \frac{c_2}{(1 - \epsilon)^3} \sqrt{\frac{\ell \log( (n_1 \vee n_2) n_3 )}{p (n_1 \wedge n_2)} } \frac{\mu s_r c_{\varsigma} \kappa}{n_3} \| \bcL_{\triangle} \ast_{\bPhi} \bcG_{\star}^{\frac{1}{2}} \|_F \| \bcR_{\triangle} \ast_{\bPhi} \bcG_{\star}^{\frac{1}{2}} \|_F.
\end{align*}
For $\mathfrak{P}_{3,2}$, we again invoke Corollary~\ref{coro:POmegatangent} to obtain
\begin{align*}
\mathfrak{P}_{3,2} & \leq c_1 \sqrt{\frac{\mu s_r (n_1 + n_2) \log( (n_1 \vee n_2) n_3 )}{p n_1 n_2 n_3} } \| \bcL_{\star} \ast_{\bPhi} \bcR_{\triangle}^H \|_F  \\
&\qquad \| \bcL_{\star} \ast_{\bPhi} \bcR_{\triangle}^H \ast_{\bPhi} \bcR_{\sharp} \ast_{\bPhi} (\bcR_{\sharp}^H \ast_{\bPhi} \bcR_{\sharp})^{-1} \ast_{\bPhi} \bcG_{\star} \ast_{\bPhi} (\bcR_{\sharp}^H \ast_{\bPhi} \bcR_{\sharp})^{-1} \ast_{\bPhi} \bcR_{\sharp}^H \|_F  \\
& \leq c_1 \sqrt{\frac{\mu s_r (n_1 + n_2) \log( (n_1 \vee n_2) n_3 )}{p n_1 n_2 n_3} } \| \bcL_{\star} \ast_{\bPhi} \bcR_{\triangle}^H \|_F^2 \| \bcR_{\sharp} \ast_{\bPhi} (\bcR_{\sharp}^H \ast_{\bPhi} \bcR_{\sharp})^{-1} \ast_{\bPhi} \bcG_{\star}^{\frac{1}{2}} \|^2  \\
& \leq \frac{c_1}{(1 - \epsilon)^2} \sqrt{\frac{\mu s_r (n_1 + n_2) \log( (n_1 \vee n_2) n_3 )}{p n_1 n_2 n_3} } \| \bcR_{\triangle} \ast_{\bPhi} \bcG_{\star}^{\frac{1}{2}} \|_F^2,
\end{align*}
where the last inequality uses \eqref{eqn:TC-consequences-1}. Combining the bounds for $\mathfrak{P}_{3,1}$ and $\mathfrak{P}_{3,2}$ to arrive at
\begin{align*}
|\mathfrak{P}_3| & \leq 2 \eta^2 \Big( \frac{c_1}{(1 - \epsilon)^2} \sqrt{\frac{\mu s_r (n_1 + n_2) \log( (n_1 \vee n_2) n_3 )}{p n_1 n_2 n_3} } \| \bcR_{\triangle} \ast_{\bPhi} \bcG_{\star}^{\frac{1}{2}} \|_F^2  \\
&\qquad\qquad\qquad + \frac{c_2}{(1 - \epsilon)^3} \sqrt{\frac{\ell \log( (n_1 \vee n_2) n_3 )}{p (n_1 \wedge n_2)} } \frac{\mu s_r c_{\varsigma} \kappa}{n_3} \| \bcL_{\triangle} \ast_{\bPhi} \bcG_{\star}^{\frac{1}{2}} \|_F \| \bcR_{\triangle} \ast_{\bPhi} \bcG_{\star}^{\frac{1}{2}} \|_F \Big)  \\
& \leq 2 \eta^2 \Big( \nu_1 \| \bcR_{\triangle} \ast_{\bPhi} \bcG_{\star}^{\frac{1}{2}} \|_F^2 + \nu_2 \| \bcL_{\triangle} \ast_{\bPhi} \bcG_{\star}^{\frac{1}{2}} \|_F \| \bcR_{\triangle} \ast_{\bPhi} \bcG_{\star}^{\frac{1}{2}} \|_F \Big)  \\
& \leq \eta^2 \Big( \nu_2 \| \bcL_{\triangle} \ast_{\bPhi} \bcG_{\star}^{\frac{1}{2}} \|_F^2 + (2 \nu_1 + \nu_2) \| \bcR_{\triangle} \ast_{\bPhi} \bcG_{\star}^{\frac{1}{2}} \|_F^2 \Big).
\end{align*}

\paragraph{Bound of $\mathfrak{P}_4$.} Moving to the term $\mathfrak{P}_4$, we have
\begin{align*}
\sqrt{\mathfrak{P}_4} & = \eta \| (p^{-1} \bcP_{\bOmega} - \bcI_{n_1})(\bcL_{\sharp} \ast_{\bPhi} \bcR_{\sharp}^H - \bcX_{\star}) \ast_{\bPhi} \bcR_{\sharp} \ast_{\bPhi} (\bcR_{\sharp}^H \ast_{\bPhi} \bcR_{\sharp})^{-1} \ast_{\bPhi} \bcG_{\star}^{\frac{1}{2}} \|_F  \\
& = \eta \max_{\widetilde{\bcL} \in \R^{n_1 \times r \times n_3} : \| \widetilde{\bcL} \|_F \leq 1} \langle (p^{-1} \bcP_{\bOmega} - \bcI_{n_1})(\bcL_{\sharp} \ast_{\bPhi} \bcR_{\sharp}^H - \bcX_{\star}) \ast_{\bPhi} \bcR_{\sharp} \ast_{\bPhi} (\bcR_{\sharp}^H \ast_{\bPhi} \bcR_{\sharp})^{-1} \ast_{\bPhi} \bcG_{\star}^{\frac{1}{2}} , \widetilde{\bcL} \rangle  \\
& = \eta \max_{\widetilde{\bcL} \in \R^{n_1 \times r \times n_3} : \| \widetilde{\bcL} \|_F \leq 1} \langle (p^{-1} \bcP_{\bOmega} - \bcI_{n_1})(\bcL_{\sharp} \ast_{\bPhi} \bcR_{\sharp}^H - \bcX_{\star}) , \widetilde{\bcL} \ast_{\bPhi} \bcG_{\star}^{\frac{1}{2}} \ast_{\bPhi} (\bcR_{\sharp}^H \ast_{\bPhi} \bcR_{\sharp})^{-1} \ast_{\bPhi} \bcR_{\sharp}^H \rangle  \\
& \leq \eta \Big( \Big| \langle (p^{-1} \bcP_{\bOmega} - \bcI_{n_1})(\bcL_{\triangle} \ast_{\bPhi} \bcR_{\star}^H) , \widetilde{\bcL} \ast_{\bPhi} \bcG_{\star}^{\frac{1}{2}} \ast_{\bPhi} (\bcR_{\sharp}^H \ast_{\bPhi} \bcR_{\sharp})^{-1} \ast_{\bPhi} \bcR_{\star}^H \rangle \Big|  \\
&\qquad + \Big| \langle (p^{-1} \bcP_{\bOmega} - \bcI_{n_1})(\bcL_{\triangle} \ast_{\bPhi} \bcR_{\star}^H) , \widetilde{\bcL} \ast_{\bPhi} \bcG_{\star}^{\frac{1}{2}} \ast_{\bPhi} (\bcR_{\sharp}^H \ast_{\bPhi} \bcR_{\sharp})^{-1} \ast_{\bPhi} \bcR_{\triangle}^H \rangle \Big|  \\
&\qquad + \Big| \langle (p^{-1} \bcP_{\bOmega} - \bcI_{n_1})(\bcL_{\sharp} \ast_{\bPhi} \bcR_{\triangle}^H) , \widetilde{\bcL} \ast_{\bPhi} \bcG_{\star}^{\frac{1}{2}} \ast_{\bPhi} (\bcR_{\sharp}^H \ast_{\bPhi} \bcR_{\sharp})^{-1} \ast_{\bPhi} \bcR_{\sharp}^H \rangle \Big|  \\
& \coloneq \eta (\mathfrak{P}_{4,1} + \mathfrak{P}_{4,2} + \mathfrak{P}_{4,3}).
\end{align*}
Note that the decomposition of $\sqrt{\mathfrak{P}_4}$ is extremely similar to that of $\mathfrak{P}_2$, thus we can follow a similar argument to control these terms as
\begin{align*}
\mathfrak{P}_{4,1} & \leq \frac{c_1}{(1 - \epsilon)^2} \sqrt{\frac{\mu s_r (n_1 + n_2) \log( (n_1 \vee n_2) n_3 )}{p n_1 n_2 n_3} } \| \bcL_{\triangle} \ast_{\bPhi} \bcG_{\star}^{\frac{1}{2}} \|_F;  \\
\mathfrak{P}_{4,2} & \leq \frac{c_2}{(1 - \epsilon)^2} \Big(1 + \frac{c_{\varsigma}}{1 - \epsilon} \Big) \sqrt{\frac{\ell \log( (n_1 \vee n_2) n_3 )}{p (n_1 \wedge n_2)} } \frac{\mu s_r \kappa}{n_3} \| \bcR_{\triangle} \ast_{\bPhi} \bcG_{\star}^{\frac{1}{2}} \|_F;  \\
\mathfrak{P}_{4,3} & \leq \frac{c_2}{(1 - \epsilon)^4} \sqrt{\frac{\ell \log( (n_1 \vee n_2) n_3 )}{p (n_1 \wedge n_2)} } \frac{\mu s_r c_{\varsigma}^2 \kappa^2}{n_3} \| \bcR_{\triangle} \ast_{\bPhi} \bcG_{\star}^{\frac{1}{2}} \|_F.
\end{align*}
Hence,
\begin{align*}
\sqrt{\mathfrak{P}_4} \leq \eta (\nu_1 \| \bcL_{\triangle} \ast_{\bPhi} \bcG_{\star}^{\frac{1}{2}} \|_F + \nu_2 \| \bcR_{\triangle} \ast_{\bPhi} \bcG_{\star}^{\frac{1}{2}} \|_F).
\end{align*}
Taking the square on both sides to obtain the upper bound
\begin{align*}
|\mathfrak{P}_4| \leq \eta^2 \Big( \nu_1 (\nu_1 + \nu_2) \| \bcL_{\triangle} \ast_{\bPhi} \bcG_{\star}^{\frac{1}{2}} \|_F^2 + \nu_2 (\nu_1 + \nu_2) \| \bcR_{\triangle} \ast_{\bPhi} \bcG_{\star}^{\frac{1}{2}} \|_F^2 \Big).
\end{align*}
Taking the bounds for $\mathfrak{P}_1$, $\mathfrak{P}_2$, $\mathfrak{P}_3$ and $\mathfrak{P}_4$ collectively yields
\begin{align*}
& \| (\widetilde{\bcL}_{t+1} \ast_{\bPhi} \bcQ_t - \bcL_{\star}) \ast_{\bPhi} \bcG_{\star}^{\frac{1}{2}} \|_F^2  \\
& \leq \Big( (1 - \eta)^2 + \frac{2 \epsilon}{1 - \epsilon} \eta (1 - \eta) \Big) \| \bcL_{\triangle} \ast_{\bPhi} \bcG_{\star}^{\frac{1}{2}} \|_F^2 + \frac{2 \epsilon + \epsilon^2}{(1 - \epsilon)^2} \eta^2 \| \bcR_{\triangle} \ast_{\bPhi} \bcG_{\star}^{\frac{1}{2}} \|_F^2  \\
&\quad + \eta (1 - \eta) \Big( (2 \nu_1 + \nu_2) \| \bcL_{\triangle} \ast_{\bPhi} \bcG_{\star}^{\frac{1}{2}} \|_F^2 + \nu_2 \| \bcR_{\triangle} \ast_{\bPhi} \bcG_{\star}^{\frac{1}{2}} \|_F^2 \Big)  \\
&\quad + \eta^2 \Big( \nu_2 \| \bcL_{\triangle} \ast_{\bPhi} \bcG_{\star}^{\frac{1}{2}} \|_F^2 + (2 \nu_1 + \nu_2) \| \bcR_{\triangle} \ast_{\bPhi} \bcG_{\star}^{\frac{1}{2}} \|_F^2 \Big)  \\
&\quad + \eta^2 \Big( \nu_1 (\nu_1 + \nu_2) \| \bcL_{\triangle} \ast_{\bPhi} \bcG_{\star}^{\frac{1}{2}} \|_F^2 + \nu_2 (\nu_1 + \nu_2) \| \bcR_{\triangle} \ast_{\bPhi} \bcG_{\star}^{\frac{1}{2}} \|_F^2 \Big).
\end{align*}
A similar upper bound also holds for the second term in \eqref{eqn:TCexpand}. It then turns out that 
\begin{align*}
\| (\widetilde{\bcL}_{t+1} \ast_{\bPhi} \bcQ_t - \bcL_{\star}) \ast_{\bPhi} \bcG_{\star}^{\frac{1}{2}} \|_F^2 + \| (\widetilde{\bcR}_{t+1} \ast_{\bPhi} \bcQ_t^{-H} - \bcR_{\star}) \ast_{\bPhi} \bcG_{\star}^{\frac{1}{2}} \|_F^2 \leq \hbar^2(\eta;\epsilon,\nu_1,\nu_2) \dist^2(\bcF_t, \bcF_{\star}),
\end{align*}
where the contraction rate $\hbar^2(\eta;\epsilon,\nu_1,\nu_2)$ is given by 
\begin{align*}
\hbar^2 (\eta;\epsilon,\nu_1,\nu_2) \coloneq (1 - \eta)^2 + \Big( \frac{2 \epsilon}{1 - \epsilon} + 2(\nu_1 + \nu_2) \Big) \eta (1 - \eta) + \Big( \frac{2 \epsilon + \epsilon^2}{(1 - \epsilon)^2} + 2 (\nu_1 + \nu_2) + (\nu_1 + \nu_2)^2 \Big) \eta^2.
\end{align*}
As long as $p \geq c \Big( \frac{\mu s_r (n_1 + n_2) \log( (n_1 \vee n_2) n_3 )}{n_1 n_2 n_3} \vee \frac{\mu^2 s_r^2 \kappa^4 \ell \log( (n_1 \vee n_2) n_3 )}{(n_1 \wedge n_2) n_3^2} \Big)$ for some sufficiently large constant $c$, we have $\nu_1 + \nu_2 \leq 0.1$ under the setting $\epsilon = 0.02$. When $0 < \eta \leq \frac{2}{3}$, we further have $\hbar(\eta;\epsilon,\nu_1,\nu_2) \leq 1 - 0.6 \eta$. Thus we conclude that
\begin{align*}
\dist(\widetilde{\bcF}_{t+1}, \bcF_{\star}) & \leq \sqrt{\| (\widetilde{\bcL}_{t+1} \ast_{\bPhi} \bcQ_t - \bcL_{\star}) \ast_{\bPhi} \bcG_{\star}^{\frac{1}{2}} \|_F^2 + \| (\widetilde{\bcR}_{t+1} \ast_{\bPhi} \bcQ_t^{-H} - \bcR_{\star}) \ast_{\bPhi} \bcG_{\star}^{\frac{1}{2}} \|_F^2}  \\
& \leq (1 - 0.6 \eta) \dist(\bcF_t, \bcF_{\star}).
\end{align*}
This finishes the proof.

\subsubsection{Proof of Lemma~\ref{lemma:POmegaspectral}}

Denote the tensor $\bcH_{ijk} = (p^{-1} \delta_{ijk} - 1) \bcZ_{i,j,k} \bar{\boldsymbol{\ce}}_{ijk}$. Then we have
\begin{align*}
(p^{-1} \bcP_{\bOmega} - \bcI_{n_1}) (\bcZ) = \sum_{i,j,k} \bcH_{ijk}.
\end{align*}
Note that $\delta_{ijk}$'s are independent random scalars. Thus, $\bcH_{ijk}$'s are independent random tensors and $\widebar{\bH}_{ijk}$'s are independent random matrices. Observe that $\mathbb{E} [\widebar{\bH}_{ijk}] = \bzero$ and $\| \widebar{\bH}_{ijk} \| \leq p^{-1} \sqrt{\ell} \| \bcZ \|_{\infty}$ by using \eqref{eqn:eijkbound1}. According to the proof in Lemma~\ref{lemma:eijkbounds}, we know that $\sum_{i,j,k} \bcZ_{i,j,k}^2 L(\bar{\boldsymbol{\ce}}_{ijk}^H \ast_{\bPhi} \bar{\boldsymbol{\ce}}_{ijk})$ is f-diagonal and the $k'$-th entry of its $(j,j)$-th mode-3 tube is $\sum_{i,k} \bcZ_{i,j,k}^2 |\bPhi_{k',k}|^2$. We have
\begin{align*}
\sum_{i,k} \bcZ_{i,j,k}^2 |\bPhi_{k',k}|^2 \leq \sum_i \Big( \sum_k \bcZ_{i,j,k}^2 \Big) \Big( \max_k |\bPhi_{k',k}|^2 \Big) \leq \sum_{i,k} \bcZ_{i,j,k}^2 \| \bPhi \|_{\infty}^2 \leq \ell \sum_{i,k} \bcZ_{i,j,k}^2.
\end{align*}
Hence, the tensor spectral norm of $\sum_{i,j,k} \bcZ_{i,j,k}^2 (\bar{\boldsymbol{\ce}}_{ijk}^H \ast_{\bPhi} \bar{\boldsymbol{\ce}}_{ijk})$ has the bound
\begin{align*}
\Big\| \sum_{i,j,k} \bcZ_{i,j,k}^2 (\bar{\boldsymbol{\ce}}_{ijk}^H \ast_{\bPhi} \bar{\boldsymbol{\ce}}_{ijk}) \Big\| \leq \max_j \ell \sum_{i,k} \bcZ_{i,j,k}^2 \leq \ell \| \bcZ \|_{\infty,2}^2.
\end{align*}
Then we have
\begin{align*}
\Big\| \sum_{i,j,k} \mathbb{E} [\widebar{\bH}_{ijk}^H \widebar{\bH}_{ijk}] \Big\| & = \Big\| \sum_{i,j,k} \mathbb{E} [\bcH_{ijk}^H \ast_{\bPhi} \bcH_{ijk}] \Big\|  \\
& = \Big\| \sum_{i,j,k} \mathbb{E} [(1 - p^{-1} \delta_{ijk})^2] \bcZ_{i,j,k} (\bar{\boldsymbol{\ce}}_{ijk}^H \ast_{\bPhi} \bar{\boldsymbol{\ce}}_{ijk}) \Big\|  \\
& \leq \frac{(1 - p) \ell}{p} \| \bcZ \|_{\infty,2}^2  \\
& \leq p^{-1} \ell \| \bcZ \|_{\infty,2}^2.
\end{align*}
We can also have a similar calculation that yields $\| \sum_{i,j,k} \mathbb{E} [\widebar{\bH}_{ijk} \widebar{\bH}_{ijk}^H] \| \leq p^{-1} \ell \| \bcZ \|_{\infty,2}^2$. The proof is completed by applying the matrix Bernstein inequality in Lemma~\ref{lemma:Bernstein}.

\subsubsection{Proof of Lemma~\ref{lemma:POmegaFnorm}}

By Lemma~\ref{lemma:projerrorbound}, with high probability, for any $\bcA \in \bT$, it holds that
\begin{align}
p(1 - \epsilon) \| \bcA \|_F \leq \| \bcP_{\bT} \bcP_{\bOmega} \bcP_{\bT} (\bcA) \|_F \leq p(1 + \epsilon) \| \bcA \|_F.
\end{align}
Rewriting $\| \bcP_{\bOmega}(\bcA) \|_F = \langle \bcP_{\bOmega} \bcP_{\bT} (\bcA), \bcP_{\bOmega} \bcP_{\bT} (\bcA) \rangle = \langle \bcA, \bcP_{\bT} \bcP_{\bOmega} \bcP_{\bT} (\bcA) \rangle$, and using the Cauchy-Schwarz inequality, we can bound
\begin{align}\label{eqn:POmegaFnorm1-1}
\| \bcP_{\bOmega}(\bcA) \|_F^2 \leq p(1 + \epsilon) \| \bcA \|_F^2.
\end{align}
In addition, we have
\begin{align}\label{eqn:POmegaFnorm1-2}
\| \bcP_{\bOmega}(\bcA) \|_F^2 & = \langle \bcA, \bcP_{\bT} \bcP_{\bOmega} \bcP_{\bT} (\bcA) \rangle = \langle \bcA, \bcP_{\bT} \bcP_{\bOmega} \bcP_{\bT} (\bcA) - p \bcP_{\bT} (\bcA) + p \bcP_{\bT} (\bcA) \rangle  \nonumber \\
& \geq - \| \bcA \|_F \| \bcP_{\bT} \bcP_{\bOmega} \bcP_{\bT} (\bcA) - p \bcP_{\bT} (\bcA) \|_F + p \| \bcA \|_F^2  \nonumber \\
& \geq p(1 - \epsilon) \| \bcA \|_F^2,
\end{align}
where the last inequality follows from Lemma~\ref{lemma:projerrorbound}. Combining \eqref{eqn:POmegaFnorm1-1} and \eqref{eqn:POmegaFnorm1-2} proves \eqref{eqn:POmegaFnorm1}. To show \eqref{eqn:POmegaFnorm2}, let $\bcA' = \frac{\bcA}{\| \bcA \|_F}$ and $\bcB' = \frac{\bcB}{\| \bcB \|_F}$. Both $\bcA' + \bcB'$ and $\bcA' - \bcB'$ are in $\bT$. We have
\begin{align*}
\langle \bcP_{\bOmega}(\bcA'), \bcP_{\bOmega}(\bcB') \rangle & = \frac{1}{4} \Big( \| \bcP_{\bOmega} (\bcA' + \bcB') \|_F^2 - \| \bcP_{\bOmega} (\bcA' - \bcB') \|_F^2 \Big)  \\
& \leq \frac{1}{4} \Big( p(1 + \epsilon) \| \bcA' + \bcB' \|_F^2 - p(1 - \epsilon) \| \bcA' - \bcB' \|_F^2 \Big)  \\
& = \frac{1}{4} \Big( 2 p \epsilon ( \| \bcA' \|_F^2 + \| \bcB' \|_F^2 ) + 4p \langle \bcA', \bcB' \rangle \Big)  \\
& = p \epsilon + p \langle \bcA', \bcB' \rangle,
\end{align*}
where the first inequality follows from \eqref{eqn:POmegaFnorm1}. Thus, we have
\begin{align*}
p^{-1} \langle \bcP_{\bOmega}(\bcA), \bcP_{\bOmega}(\bcB) \rangle = p^{-1} \| \bcA \|_F \| \bcB \|_F \langle \bcP_{\bOmega}(\bcA'), \bcP_{\bOmega}(\bcB') \rangle \leq \epsilon \| \bcA \|_F \| \bcB \|_F + \langle \bcA, \bcB \rangle.
\end{align*}
Similarly, we can show
\begin{align*}
p^{-1} \langle \bcP_{\bOmega}(\bcA), \bcP_{\bOmega}(\bcB) \rangle \geq - \epsilon \| \bcA \|_F \| \bcB \|_F + \langle \bcA, \bcB \rangle.
\end{align*}
The proof is completed.

\subsubsection{Proof of Lemma~\ref{lemma:POmeganonconvex}}

First, using the definition of tensor inner product, we have
\begin{align*}
|\langle \bcA, \bcB \rangle| = \frac{1}{\ell} |\langle \widebar{\bA}, \widebar{\bB} \rangle| \leq \frac{1}{\ell} \| \widebar{\bA} \| \| \widebar{\bB} \|_{\ast} = \| \bcA \| \| \bcB \|_{\ast},
\end{align*}
for any $\bcA, \bcB$, where the inequality holds by matrix H\"{o}lder's inequality. Hence,
\begin{align}\label{eqn:specbound}
& | p^{-1} \langle \bcP_{\bOmega}(\bcL_A \ast_{\bPhi} \bcR_A^H), \bcP_{\bOmega}(\bcL_B \ast_{\bPhi} \bcR_B^H) \rangle - \langle \bcL_A \ast_{\bPhi} \bcR_A^H, \bcL_B \ast_{\bPhi} \bcR_B^H \rangle|  \nonumber \\
= & |\langle (p^{-1} \bcP_{\bOmega} - \bcI_{n_1}) (\bcJ), \big( (\bcL_A \ast_{\bPhi} \bcR_A^H) \circ (\bcL_B \ast_{\bPhi} \bcR_B^H) \big) \rangle|  \nonumber \\
\leq & \| (p^{-1} \bcP_{\bOmega} - \bcI_{n_1}) (\bcJ) \| \| (\bcL_A \ast_{\bPhi} \bcR_A^H) \circ (\bcL_B \ast_{\bPhi} \bcR_B^H) \|_{\ast},
\end{align}
where $\bcJ$ denotes tensor with all-one entries and $\circ$ denotes the Hadamard (elementwise) product. To bound $\| (p^{-1} \bcP_{\bOmega} - \bcI_{n_1}) (\bcJ) \|$, we again denote the tensor $\bcH_{ijk} = (p^{-1} \delta_{ijk} - 1) \bar{\boldsymbol{\ce}}_{ijk}$. Then we have
\begin{align*}
(p^{-1} \bcP_{\bOmega} - \bcI_{n_1}) (\bcJ) = \sum_{i,j,k} \bcH_{ijk}.
\end{align*}
Note that $\delta_{ijk}$'s are independent random scalars. Thus, $\bcH_{ijk}$'s are independent random tensors and $\widebar{\bH}_{ijk}$'s are independent random matrices. Observe that $\mathbb{E} [\widebar{\bH}_{ijk}] = \bzero$ and $\| \widebar{\bH}_{ijk} \| \leq p^{-1} \sqrt{\ell}$. Then we have
\begin{align*}
\Big\| \sum_{i,j,k} \mathbb{E} [\widebar{\bH}_{ijk}^H \widebar{\bH}_{ijk}] \Big\| & = \Big\| \sum_{i,j,k} \mathbb{E} [(1 - p^{-1} \delta_{ijk})^2] (\bar{\boldsymbol{\ce}}_{ijk}^H \ast_{\bPhi} \bar{\boldsymbol{\ce}}_{ijk}) \Big\|  \\
& = \Big\| \frac{1 - p}{p} \sum_{i,j,k} (\bar{\boldsymbol{\ce}}_{ijk}^H \ast_{\bPhi} \bar{\boldsymbol{\ce}}_{ijk}) \Big\|  \\
& \leq \frac{1 - p}{p} \Big\| \sum_{i,j,k} (\bar{\boldsymbol{\ce}}_{ijk}^H \ast_{\bPhi} \bar{\boldsymbol{\ce}}_{ijk}) \Big\|  \\
& \leq \frac{n_1 \ell}{p},
\end{align*}
where the last step uses \eqref{eqn:eijkbound2}. A similar calculation yields $\| \sum_{i,j,k} \mathbb{E} [\widebar{\bH}_{ijk} \widebar{\bH}_{ijk}^H] \| \leq \frac{n_2 \ell}{p}$. Using Lemma~\ref{lemma:Bernstein}, let $\epsilon = \sqrt{c (n_1 \vee n_2) \ell \log( (n_1 \vee n_2) n_3 ) / p}$, then with high probability,
\begin{align*}
\| (p^{-1} \bcP_{\bOmega} - \bcI_{n_1}) (\bcJ) \| \leq \epsilon,
\end{align*}
provided that $p \geq c \log( (n_1 \vee n_2) n_3 ) / (n_1 \wedge n_2)$. To give a bound of $\| (\bcL_A \ast_{\bPhi} \bcR_A^H) \circ (\bcL_B \ast_{\bPhi} \bcR_B^H) \|_{\ast}$, notice that it is equal to
\begin{align*}
\| (\bcL_A \ast_{\bPhi} \bcR_A^H) \circ (\bcL_B \ast_{\bPhi} \bcR_B^H) \|_{\ast} = \sum_{k=1}^{n_3} \frac{1}{\ell} \| (\xoverline{\bcL}_A(:,:,k) \xoverline{\bcR}_A(:,:,k)^H) \circ (\xoverline{\bcL}_B(:,:,k) \xoverline{\bcR}_B(:,:,k)^H) \|_{\ast},
\end{align*}
and we can decompose each $(\xoverline{\bcL}_A(:,:,k) \xoverline{\bcR}_A(:,:,k)^H) \circ (\xoverline{\bcL}_B(:,:,k) \xoverline{\bcR}_B(:,:,k)^H)$ into sum of rank one matrices as follows
\begin{align*}
& (\xoverline{\bcL}_A(:,:,k) \xoverline{\bcR}_A(:,:,k)^H) \circ (\xoverline{\bcL}_B(:,:,k) \xoverline{\bcR}_B(:,:,k)^H)  \\
= & \Big(\sum_{j=1}^{r_1} \xoverline{\bcL}_A(:,j,k) \xoverline{\bcR}_A(:,j,k)^H \Big) \circ \Big(\sum_{j=1}^{r_2} \xoverline{\bcL}_B(:,j,k) \xoverline{\bcR}_B(:,j,k)^H \Big)  \\
= & \sum_{j=1}^{r_1} \sum_{j'=1}^{r_2} \big( \xoverline{\bcL}_A(:,j,k) \circ \xoverline{\bcL}_B(:,j',k) \big) \cdot \big( \xoverline{\bcR}_A(:,j,k) \circ \xoverline{\bcR}_B(:,j',k) \big)^H.
\end{align*}
We can then find an upper bound of $\| (\bcL_A \ast_{\bPhi} \bcR_A^H) \circ (\bcL_B \ast_{\bPhi} \bcR_B^H) \|_{\ast}$ via
\begin{align*}
& \| (\bcL_A \ast_{\bPhi} \bcR_A^H) \circ (\bcL_B \ast_{\bPhi} \bcR_B^H) \|_{\ast}  \\
\leq & \sum_{k=1}^{n_3} \sum_{j=1}^{r_1} \sum_{j'=1}^{r_2} \frac{1}{\ell} \Big\| \big( \xoverline{\bcL}_A(:,j,k) \circ \xoverline{\bcL}_B(:,j',k) \big) \cdot \big( \xoverline{\bcR}_A(:,j,k) \circ \xoverline{\bcR}_B(:,j',k) \big)^H \Big\|_{\ast}  \\
= & \sum_{k=1}^{n_3} \sum_{j=1}^{r_1} \sum_{j'=1}^{r_2} \frac{1}{\ell} \Big\| \xoverline{\bcL}_A(:,j,k) \circ \xoverline{\bcL}_B(:,j',k) \Big\|_2 \Big\| \xoverline{\bcR}_A(:,j,k) \circ \xoverline{\bcR}_B(:,j',k) \Big\|_2  \\
= & \sum_{k=1}^{n_3} \sum_{j=1}^{r_1} \sum_{j'=1}^{r_2} \frac{1}{\ell} \sqrt{\sum_{i=1}^{n_1} |(\xoverline{\bcL}_A)_{i,j,k}|^2 |(\xoverline{\bcL}_B)_{i,j',k}|^2} \sqrt{\sum_{i=1}^{n_2} |(\xoverline{\bcR}_A)_{i,j,k}|^2 |(\xoverline{\bcR}_B)_{i,j',k}|^2},
\end{align*}
where we replace nuclear norm by vector $\ell_2$ norms in the second last line because the summands are all rank one matrices. Applying Cauchy-Schwarz inequality twice, we have
\begin{align}\label{eqn:nuclearbound}
& \| (\bcL_A \ast_{\bPhi} \bcR_A^H) \circ (\bcL_B \ast_{\bPhi} \bcR_B^H) \|_{\ast}  \nonumber \\
\leq & \sum_{k=1}^{n_3} \frac{1}{\ell} \sqrt{\sum_{j=1}^{r_1} \sum_{j'=1}^{r_2} \sum_{i=1}^{n_1} |(\xoverline{\bcL}_A)_{i,j,k}|^2 |(\xoverline{\bcL}_B)_{i,j',k}|^2} \sqrt{\sum_{j=1}^{r_1} \sum_{j'=1}^{r_2} \sum_{i=1}^{n_2} |(\xoverline{\bcR}_A)_{i,j,k}|^2 |(\xoverline{\bcR}_B)_{i,j',k}|^2}  \nonumber \\
= & \sum_{k=1}^{n_3} \frac{1}{\ell} \sqrt{\sum_{i=1}^{n_1} \|\xoverline{\bcL}_A(i,:,k)\|_2^2 \|\xoverline{\bcL}_B(i,:,k)\|_2^2} \sqrt{\sum_{i=1}^{n_2} \|\xoverline{\bcR}_A(i,:,k)\|_2^2 \|\xoverline{\bcR}_B(i,:,k)\|_2^2}  \nonumber \\
\leq & \frac{1}{\ell} \sqrt{\sum_{k=1}^{n_3} \sum_{i=1}^{n_1} \|\xoverline{\bcL}_A(i,:,k)\|_2^2 \|\xoverline{\bcL}_B(i,:,k)\|_2^2} \sqrt{\sum_{k=1}^{n_3} \sum_{i=1}^{n_2} \|\xoverline{\bcR}_A(i,:,k)\|_2^2 \|\xoverline{\bcR}_B(i,:,k)\|_2^2}  \nonumber \\
= & \ell \sqrt{\sum_{k=1}^{n_3} \sum_{i=1}^{n_1} \|\bcL_A(i,:,k)\|_2^2 \|\bcL_B(i,:,k)\|_2^2} \sqrt{\sum_{k=1}^{n_3} \sum_{i=1}^{n_2} \|\bcR_A(i,:,k)\|_2^2 \|\bcR_B(i,:,k)\|_2^2}  \nonumber \\
\leq & \ell \Big( \|\bcL_A\|_{2,2,\infty} \|\bcL_B\|_F \wedge \|\bcL_A\|_F \|\bcL_B\|_{2,2,\infty} \Big) \Big( \|\bcR_A\|_{2,2,\infty} \|\bcR_B\|_F \wedge \|\bcR_A\|_F \|\bcR_B\|_{2,2,\infty} \Big).
\end{align}
Putting \eqref{eqn:specbound} and \eqref{eqn:nuclearbound} together, we have
\begin{align*}
& | p^{-1} \langle \bcP_{\bOmega}(\bcL_A \ast_{\bPhi} \bcR_A^H), \bcP_{\bOmega}(\bcL_B \ast_{\bPhi} \bcR_B^H) \rangle - \langle \bcL_A \ast_{\bPhi} \bcR_A^H, \bcL_B \ast_{\bPhi} \bcR_B^H \rangle|  \\
\leq & c \ell^{\frac{3}{2}} \sqrt{\frac{(n_1 \vee n_2) \log( (n_1 \vee n_2) n_3 )}{p} }  \\
&\qquad \Big( \|\bcL_A\|_{2,2,\infty} \|\bcL_B\|_F \wedge \|\bcL_A\|_F \|\bcL_B\|_{2,2,\infty} \Big) \Big( \|\bcR_A\|_{2,2,\infty} \|\bcR_B\|_F \wedge \|\bcR_A\|_F \|\bcR_B\|_{2,2,\infty} \Big).
\end{align*}

\subsection{Proof of Lemma~\ref{lemma:TCinitial}}

In view of Lemma~\ref{lemma:Procrustes}, we have
\begin{align*}
\dist(\widetilde{\bcF}_0, \bcF_{\star}) & \leq \sqrt{\sqrt{2}+1} \| \bcU_0 \ast_{\bPhi} \bcG_0 \ast_{\bPhi} \bcV_0^H - \bcX_{\star} \|_F = \sqrt{\frac{\sqrt{2}+1}{\ell}} \| \widebar{\bU}_0 \widebar{\bG}_0 \widebar{\bV}_0^H - \widebar{\bX}_{\star} \|_F  \\
& \leq \sqrt{\frac{(\sqrt{2}+1) 2 s_r}{\ell}} \| \widebar{\bU}_0 \widebar{\bG}_0 \widebar{\bV}_0^H - \widebar{\bX}_{\star} \| = \sqrt{\frac{(\sqrt{2}+1) 2 s_r}{\ell}} \| \bcU_0 \ast_{\bPhi} \bcG_0 \ast_{\bPhi} \bcV_0^H - \bcX_{\star} \|,
\end{align*}
where we use the fact that $\widebar{\bU}_0 \widebar{\bG}_0 \widebar{\bV}_0^H - \widebar{\bX}_{\star}$ has rank at most $2 s_r$. Applying the triangle inequality, we obtain 
\begin{align*}
\| \bcU_0 \ast_{\bPhi} \bcG_0 \ast_{\bPhi} \bcV_0^H - \bcX_{\star} \| & \leq \| p^{-1} \bcP_{\bOmega}(\bcX_{\star}) - \bcU_0 \ast_{\bPhi} \bcG_0 \ast_{\bPhi} \bcV_0^H \| + \| p^{-1} \bcP_{\bOmega}(\bcX_{\star}) - \bcX_{\star} \|  \\
& \leq 2 \| p^{-1} \bcP_{\bOmega}(\bcX_{\star}) - \bcX_{\star} \|,
\end{align*}
where the second inequality relies on the fact that $\bcU_0 \ast_{\bPhi} \bcG_0 \ast_{\bPhi} \bcV_0^H$ is the best tubal rank-$r$ approximation to $p^{-1} \bcP_{\bOmega}(\bcX_{\star})$, i.e., $\| p^{-1} \bcP_{\bOmega}(\bcX_{\star}) - \bcU_0 \ast_{\bPhi} \bcG_0 \ast_{\bPhi} \bcV_0^H \| \leq \| p^{-1} \bcP_{\bOmega}(\bcX_{\star}) - \bcX_{\star} \|$. Combining the above two inequalities yields
\begin{align*}
\dist(\widetilde{\bcF}_0, \bcF_{\star}) \leq 2 \sqrt{\frac{(\sqrt{2}+1) 2 s_r}{\ell}} \| p^{-1} \bcP_{\bOmega}(\bcX_{\star}) - \bcX_{\star} \| \leq 5 \sqrt{\frac{s_r}{\ell}} \| p^{-1} \bcP_{\bOmega}(\bcX_{\star}) - \bcX_{\star} \|.
\end{align*}
Using Lemma~\ref{lemma:POmegaspectral}, we know that
\begin{align*}
\| (p^{-1} \bcP_{\bOmega} - \bcI_{n_1}) (\bcX_{\star}) \| \leq c \Big( \frac{\sqrt{\ell} \log( (n_1 \vee n_2) n_3 )}{p} \| \bcX_{\star} \|_{\infty} + \sqrt{\frac{\ell \log( (n_1 \vee n_2) n_3 )}{p}}\| \bcX_{\star} \|_{\infty,2} \Big),
\end{align*}
holds with high probability. The proof is finished by applying Lemma~\ref{lemma:infnormbound} and Lemma~\ref{lemma:2infbound} and plugging the following bounds from incoherence assumption of $\bX_{\star}$:
\begin{align*}
\| \bcX_{\star} \|_{\infty} & \leq \sqrt{\ell} \| \bcU_{\star} \|_{2,\infty} \| \bcG_{\star} \| \| \bcV_{\star} \|_{2,\infty} \leq \frac{\mu s_r}{\sqrt{n_1 n_2 \ell} n_3} \kappa \bar{\sigma}_{s_r} (\bcX_{\star});  \\
\| \bcX_{\star} \|_{\infty,2} & \leq \| \bcU_{\star} \|_{2,\infty} \| \bcG_{\star} \| \| \bcV_{\star} \| \vee \| \bcU_{\star} \| \| \bcG_{\star} \| \| \bcV_{\star} \|_{2,\infty} \leq \sqrt{\frac{\mu s_r}{(n_1 \wedge n_2) n_3 \ell}} \kappa \bar{\sigma}_{s_r} (\bcX_{\star}).
\end{align*}

\section{Proof for Robust Tensor Completion}
\label{sec:RobustCompletionproof}

\begin{lemma}\label{lemma:POmegarownorm}
Suppose that $\bcZ \in \R^{n_1 \times n_2 \times n_3}$ is a fixed tensor and $\bOmega \sim \mathrm{Ber}(p)$. Then with high probability, 
\begin{align*}
\| (p^{-1} \bcP_{\bOmega} - \bcI_{n_1}) (\bcZ) \|_{2,\infty} \leq 2 \sqrt{\frac{c \log(n_2 n_3)}{p}} \| \bcZ \|_{2,\infty} + \frac{c \log(n_2 n_3)}{p} \| \bcZ \|_{\infty},
\end{align*}
for some numerical constant $c > 0$.
\end{lemma}

\begin{proof}
For any fixed $b \in [n_1]$, the $b$-th horizontal slice of the tensor $(p^{-1} \bcP_{\bOmega} - \bcI_{n_1}) (\bcZ)$ can be written as
\begin{align*}
[(p^{-1} \bcP_{\bOmega} - \bcI_{n_1}) (\bcZ)]^H \ast_{\bPhi} \mathring{\boldsymbol{\ce}}_b = \sum_{i,j,k} (p^{-1} \delta_{ijk} - 1) \bcZ_{i,j,k} \bar{\boldsymbol{\ce}}_{ijk}^H \ast_{\bPhi} \mathring{\boldsymbol{\ce}}_b \coloneq \sum_{i,j,k} \bcH_{ijk},
\end{align*}
where $\bcH_{ijk}$'s are independent row tensors of size $n_2 \times 1 \times n_3$ and $\mathbb{E} [\widebar{\bH}_{ijk}] = \bzero$. Let $\bh_{ijk} \in \R^{n_2 n_3}$ be the column vector obtained by vectorizing $\bcH_{ijk}$. Then we have
\begin{align*}
\| \bh_{ijk} \|_2 \leq p^{-1} \bcZ_{i,j,k} \| \bar{\boldsymbol{\ce}}_{ijk}^H \ast_{\bPhi} \mathring{\boldsymbol{\ce}}_b \|_F \leq p^{-1} \| \bcZ \|_{\infty}.
\end{align*}
We also have
\begin{align*}
\Big| \sum_{i,j,k} \mathbb{E} [\bh_{ijk}^H \bh_{ijk}] \Big| = \Big| \sum_{i,j,k} \mathbb{E} [\| \bcH_{ijk} \|_F^2] \Big| = \frac{1 - p}{p} \sum_{i,j,k} \bcZ_{i,j,k}^2 \| \bar{\boldsymbol{\ce}}_{ijk}^H \ast_{\bPhi} \mathring{\boldsymbol{\ce}}_b \|_F^2.
\end{align*}
Note that $\bar{\boldsymbol{\ce}}_{ijk}^H \ast_{\bPhi} \mathring{\boldsymbol{\ce}}_b = \bzero$ if $i \neq b$. We further have
\begin{align*}
\Big| \sum_{i,j,k} \mathbb{E} [\bh_{ijk}^H \bh_{ijk}] \Big| = \frac{1 - p}{p} \sum_{i,j,k} \bcZ_{i,j,k}^2 \| \bar{\boldsymbol{\ce}}_{ijk}^H \ast_{\bPhi} \mathring{\boldsymbol{\ce}}_b \|_F^2 = \frac{1 - p}{p} \sum_{j,k} \bcZ_{b,j,k}^2 \| \bar{\boldsymbol{\ce}}_{bjk}^H \|_F^2 \leq p^{-1} \| \bcZ \|_{2,\infty}^2.
\end{align*}
We can bound $\| \sum_{i,j,k} \mathbb{E} [\bh_{ijk} \bh_{ijk}^H] \|$ by the same quantity in a similar manner. Treating $\bh_{ijk}$'s as $n_2 n_3 \times 1$ matrices and applying the matrix Bernstein inequality in Lemma~\ref{lemma:Bernstein} gives that with high probability,
\begin{align*}
\| [(p^{-1} \bcP_{\bOmega} - \bcI_{n_1}) (\bcZ)]^H \ast_{\bPhi} \mathring{\boldsymbol{\ce}}_b \|_F & = \| \bcH_{ijk} \|_F = \| \bh_{ijk} \|_F  \\
& \leq 2 \sqrt{\frac{c \log(n_2 n_3)}{p}} \| \bcZ \|_{2,\infty} + \frac{c \log(n_2 n_3)}{p} \| \bcZ \|_{\infty}
\end{align*}
for some numerical constant $c > 0$.
\end{proof}

\subsection{Proof of Lemma~\ref{lemma:RTCcontraction}}

We prove Lemma~\ref{lemma:RTCcontraction} again by induction.

\paragraph{Base case.} Since $\rho^0 = 1$, the assumed initial conditions satisfy the base case at $t = 0$.

\paragraph{Induction step.} At the $t$-th iteration, we assume the conditions
\begin{align*}
\dist(\bcF_t, \bcF_{\star}) & \leq \frac{\epsilon}{\sqrt{\ell}} \rho^t \bar{\sigma}_{s_r} (\bcX_{\star}),  \\
\sqrt{n_1} \| (\bcL_t \ast_{\bPhi} \bcQ_t - \bcL_{\star}) \ast_{\bPhi} \bcG_{\star}^{\frac{1}{2}} \|_{2,\infty} & \vee \sqrt{n_2} \| (\bcR_t \ast_{\bPhi} \bcQ_t^{-H} - \bcR_{\star}) \ast_{\bPhi} \bcG_{\star}^{\frac{1}{2}} \|_{2,\infty} \leq \sqrt{\frac{\mu s_r}{n_3 \ell}} \rho^t \bar{\sigma}_{s_r} (\bcX_{\star})
\end{align*}
hold. In view of the condition $\dist(\bcF_t, \bcF_{\star}) \leq \frac{0.02}{\sqrt{\ell}} \rho^t \bar{\sigma}_{s_r} (\bcX_{\star})$ and Lemma~\ref{lemma:Qexistence}, one knows that $\bcQ_t$ exists and $\epsilon \coloneq 0.02$.

\emph{Distance contraction:} By the definition of $\dist^2(\bcF_{t+1},\bcF_{\star})$, we have
\begin{align*}
\dist^2(\bcF_{t+1},\bcF_{\star}) \leq \| (\bcL_{t+1} \ast_{\bPhi} \bcQ_t - \bcL_{\star}) \ast_{\bPhi} \bcG_{\star}^{\frac{1}{2}} \|_F^2 + \| (\bcR_{t+1} \ast_{\bPhi} \bcQ_t^{-H} - \bcR_{\star}) \ast_{\bPhi} \bcG_{\star}^{\frac{1}{2}} \|_F^2.
\end{align*}
According to the update rule \eqref{eqn:RTCupdate}, we have
\begin{align}\label{eqn:RTCexpand}
& (\bcL_{t+1} \ast_{\bPhi} \bcQ_t - \bcL_{\star}) \ast_{\bPhi} \bcG_{\star}^{\frac{1}{2}}  \nonumber \\
= & \Big( \bcL_{\sharp} - \eta p^{-1} \bcP_{\bOmega} (\bcL_{\sharp} \ast_{\bPhi} \bcR_{\sharp}^H + \bcS_{t+1} - \bcX_{\star} - \bcS_{\star}) \ast_{\bPhi} \bcR_{\sharp} \ast_{\bPhi} (\bcR_{\sharp}^H \ast_{\bPhi} \bcR_{\sharp})^{-1} - \bcL_{\star} \Big) \ast_{\bPhi} \bcG_{\star}^{\frac{1}{2}}  \nonumber \\
= & \Big( \bcL_{\triangle} - \eta p^{-1} \bcP_{\bOmega} (\bcL_{\sharp} \ast_{\bPhi} \bcR_{\sharp}^H - \bcX_{\star}) \ast_{\bPhi} \bcR_{\sharp} \ast_{\bPhi} (\bcR_{\sharp}^H \ast_{\bPhi} \bcR_{\sharp})^{-1}  \nonumber \\
&\qquad\qquad - \eta p^{-1} \bcP_{\bOmega} (\bcS_{t+1} - \bcS_{\star}) \ast_{\bPhi} \bcR_{\sharp} \ast_{\bPhi} (\bcR_{\sharp}^H \ast_{\bPhi} \bcR_{\sharp})^{-1} \Big) \ast_{\bPhi} \bcG_{\star}^{\frac{1}{2}}  \nonumber \\
= & \bcL_{\triangle} \ast_{\bPhi} \bcG_{\star}^{\frac{1}{2}} - \eta (\bcL_{\sharp} \ast_{\bPhi} \bcR_{\sharp}^H - \bcX_{\star}) \ast_{\bPhi} \bcR_{\sharp} \ast_{\bPhi} (\bcR_{\sharp}^H \ast_{\bPhi} \bcR_{\sharp})^{-1} \ast_{\bPhi} \bcG_{\star}^{\frac{1}{2}}  \nonumber \\
&\qquad\qquad - \eta (p^{-1} \bcP_{\bOmega} - \bcI_{n_1})(\bcL_{\sharp} \ast_{\bPhi} \bcR_{\sharp}^H - \bcX_{\star}) \ast_{\bPhi} \bcR_{\sharp} \ast_{\bPhi} (\bcR_{\sharp}^H \ast_{\bPhi} \bcR_{\sharp})^{-1} \ast_{\bPhi} \bcG_{\star}^{\frac{1}{2}}  \nonumber \\
&\qquad\qquad - \eta p^{-1} \bcP_{\bOmega} (\bcS_{\triangle}) \ast_{\bPhi} \bcR_{\sharp} \ast_{\bPhi} (\bcR_{\sharp}^H \ast_{\bPhi} \bcR_{\sharp})^{-1} \ast_{\bPhi} \bcG_{\star}^{\frac{1}{2}}  \nonumber \\
= & (1 - \eta) \bcL_{\triangle} \ast_{\bPhi} \bcG_{\star}^{\frac{1}{2}} - \eta \bcL_{\star} \ast_{\bPhi} \bcR_{\triangle}^H \ast_{\bPhi} \bcR_{\sharp} \ast_{\bPhi} (\bcR_{\sharp}^H \ast_{\bPhi} \bcR_{\sharp})^{-1} \ast_{\bPhi} \bcG_{\star}^{\frac{1}{2}}  \nonumber \\
&\qquad\qquad - \eta (p^{-1} \bcP_{\bOmega} - \bcI_{n_1})(\bcL_{\sharp} \ast_{\bPhi} \bcR_{\sharp}^H - \bcX_{\star}) \ast_{\bPhi} \bcR_{\sharp} \ast_{\bPhi} (\bcR_{\sharp}^H \ast_{\bPhi} \bcR_{\sharp})^{-1} \ast_{\bPhi} \bcG_{\star}^{\frac{1}{2}}  \nonumber \\
&\qquad\qquad - \eta p^{-1} \bcP_{\bOmega} (\bcS_{\triangle}) \ast_{\bPhi} \bcR_{\sharp} \ast_{\bPhi} (\bcR_{\sharp}^H \ast_{\bPhi} \bcR_{\sharp})^{-1} \ast_{\bPhi} \bcG_{\star}^{\frac{1}{2}}.
\end{align}
Taking the squared Frobenius norm of both sides of \eqref{eqn:RTCexpand} to obtain
\begin{align*}
& \| (\bcL_{t+1} \ast_{\bPhi} \bcQ_t - \bcL_{\star}) \ast_{\bPhi} \bcG_{\star}^{\frac{1}{2}} \|_F^2  \\
= & \| (1 - \eta) \bcL_{\triangle} \ast_{\bPhi} \bcG_{\star}^{\frac{1}{2}} - \eta \bcL_{\star} \ast_{\bPhi} \bcR_{\triangle}^H \ast_{\bPhi} \bcR_{\sharp} \ast_{\bPhi} (\bcR_{\sharp}^H \ast_{\bPhi} \bcR_{\sharp})^{-1} \ast_{\bPhi} \bcG_{\star}^{\frac{1}{2}} \|_F^2  \\
\qquad & - 2 \eta (1 - \eta) \langle \bcL_{\triangle} \ast_{\bPhi} \bcG_{\star}^{\frac{1}{2}} , (p^{-1} \bcP_{\bOmega} - \bcI_{n_1})(\bcL_{\sharp} \ast_{\bPhi} \bcR_{\sharp}^H - \bcX_{\star}) \ast_{\bPhi} \bcR_{\sharp} \ast_{\bPhi} (\bcR_{\sharp}^H \ast_{\bPhi} \bcR_{\sharp})^{-1} \ast_{\bPhi} \bcG_{\star}^{\frac{1}{2}} \rangle  \\
\qquad & + 2 \eta^2 \langle \bcL_{\star} \ast_{\bPhi} \bcR_{\triangle}^H \ast_{\bPhi} \bcR_{\sharp} \ast_{\bPhi} (\bcR_{\sharp}^H \ast_{\bPhi} \bcR_{\sharp})^{-1} \ast_{\bPhi} \bcG_{\star}^{\frac{1}{2}} ,  \\
&\qquad\qquad\qquad (p^{-1} \bcP_{\bOmega} - \bcI_{n_1})(\bcL_{\sharp} \ast_{\bPhi} \bcR_{\sharp}^H - \bcX_{\star}) \ast_{\bPhi} \bcR_{\sharp} \ast_{\bPhi} (\bcR_{\sharp}^H \ast_{\bPhi} \bcR_{\sharp})^{-1} \ast_{\bPhi} \bcG_{\star}^{\frac{1}{2}} \rangle  \\
\qquad & + \eta^2 \| (p^{-1} \bcP_{\bOmega} - \bcI_{n_1})(\bcL_{\sharp} \ast_{\bPhi} \bcR_{\sharp}^H - \bcX_{\star}) \ast_{\bPhi} \bcR_{\sharp} \ast_{\bPhi} (\bcR_{\sharp}^H \ast_{\bPhi} \bcR_{\sharp})^{-1} \ast_{\bPhi} \bcG_{\star}^{\frac{1}{2}} \|_F^2  \\
\qquad & - 2 \eta (1 - \eta) \langle \bcL_{\triangle} \ast_{\bPhi} \bcG_{\star}^{\frac{1}{2}} , p^{-1} \bcP_{\bOmega} (\bcS_{\triangle}) \ast_{\bPhi} \bcR_{\sharp} \ast_{\bPhi} (\bcR_{\sharp}^H \ast_{\bPhi} \bcR_{\sharp})^{-1} \ast_{\bPhi} \bcG_{\star}^{\frac{1}{2}} \rangle  \\
\qquad & + 2 \eta^2 \langle \bcL_{\star} \ast_{\bPhi} \bcR_{\triangle}^H \ast_{\bPhi} \bcR_{\sharp} \ast_{\bPhi} (\bcR_{\sharp}^H \ast_{\bPhi} \bcR_{\sharp})^{-1} \ast_{\bPhi} \bcG_{\star}^{\frac{1}{2}} ,  \\
&\qquad\qquad\qquad p^{-1} \bcP_{\bOmega} (\bcS_{\triangle}) \ast_{\bPhi} \bcR_{\sharp} \ast_{\bPhi} (\bcR_{\sharp}^H \ast_{\bPhi} \bcR_{\sharp})^{-1} \ast_{\bPhi} \bcG_{\star}^{\frac{1}{2}} \rangle  \\
\qquad & + 2 \eta^2 \langle (p^{-1} \bcP_{\bOmega} - \bcI_{n_1})(\bcL_{\sharp} \ast_{\bPhi} \bcR_{\sharp}^H - \bcX_{\star}) \ast_{\bPhi} \bcR_{\sharp} \ast_{\bPhi} (\bcR_{\sharp}^H \ast_{\bPhi} \bcR_{\sharp})^{-1} \ast_{\bPhi} \bcG_{\star}^{\frac{1}{2}} ,  \\
&\qquad\qquad\qquad p^{-1} \bcP_{\bOmega} (\bcS_{\triangle}) \ast_{\bPhi} \bcR_{\sharp} \ast_{\bPhi} (\bcR_{\sharp}^H \ast_{\bPhi} \bcR_{\sharp})^{-1} \ast_{\bPhi} \bcG_{\star}^{\frac{1}{2}} \rangle  \\
\qquad & + \eta^2 \| p^{-1} \bcP_{\bOmega} (\bcS_{\triangle}) \ast_{\bPhi} \bcR_{\sharp} \ast_{\bPhi} (\bcR_{\sharp}^H \ast_{\bPhi} \bcR_{\sharp})^{-1} \ast_{\bPhi} \bcG_{\star}^{\frac{1}{2}} \|_F^2  \\
\coloneq & \mathfrak{T}_1 - \mathfrak{T}_2 + \mathfrak{T}_3 + \mathfrak{T}_4 - \mathfrak{T}_5 + \mathfrak{T}_6 + \mathfrak{T}_7 + \mathfrak{T}_8.
\end{align*}

\paragraph{Bound of $\mathfrak{T}_1$, $\mathfrak{T}_2$, $\mathfrak{T}_3$ and $\mathfrak{T}_4$.} Repeating the same steps as in the proof of Lemma~\ref{lemma:TCcontraction} and utilizing Lemma~\ref{lemma:Deltanorm}, we have the following:
\begin{align*}
\mathfrak{T}_1 & \leq \Big( (1 - \eta)^2 + \frac{2 \epsilon}{1 - \epsilon} \eta (1 - \eta) \Big) \| \bcL_{\triangle} \ast_{\bPhi} \bcG_{\star}^{\frac{1}{2}} \|_F^2 + \frac{2 \epsilon + \epsilon^2}{(1 - \epsilon)^2} \eta^2 \| \bcR_{\triangle} \ast_{\bPhi} \bcG_{\star}^{\frac{1}{2}} \|_F^2,  \\
|\mathfrak{T}_2| & \leq 2 \eta (1 - \eta) \Big( \frac{c_1}{(1 - \epsilon)^2} \sqrt{\frac{\mu s_r (n_1 + n_2) \log( (n_1 \vee n_2) n_3 )}{p n_1 n_2 n_3} } \| \bcL_{\triangle} \ast_{\bPhi} \bcG_{\star}^{\frac{1}{2}} \|_F^2  \\
&\qquad + \frac{c_2}{(1 - \epsilon)^2} \sqrt{\frac{\ell \log( (n_1 \vee n_2) n_3 )}{p (n_1 \wedge n_2)} } \frac{\mu s_r}{n_3} (\rho^t + 4) \| \bcL_{\triangle} \ast_{\bPhi} \bcG_{\star}^{\frac{1}{2}} \|_F \| \bcR_{\triangle} \ast_{\bPhi} \bcG_{\star}^{\frac{1}{2}} \|_F \Big),  \\
|\mathfrak{T}_3| & \leq 2 \eta^2 \Big( \frac{c_1}{(1 - \epsilon)^2} \sqrt{\frac{\mu s_r (n_1 + n_2) \log( (n_1 \vee n_2) n_3 )}{p n_1 n_2 n_3} } \| \bcR_{\triangle} \ast_{\bPhi} \bcG_{\star}^{\frac{1}{2}} \|_F^2  \\
&\qquad + \frac{c_2}{(1 - \epsilon)^2} \sqrt{\frac{\ell \log( (n_1 \vee n_2) n_3 )}{p (n_1 \wedge n_2)} } \frac{2 \mu s_r}{n_3} \| \bcL_{\triangle} \ast_{\bPhi} \bcG_{\star}^{\frac{1}{2}} \|_F \| \bcR_{\triangle} \ast_{\bPhi} \bcG_{\star}^{\frac{1}{2}} \|_F \Big),  \\
\mathrm{and} \quad \sqrt{\mathfrak{T}_4} & \leq \eta \Big( \frac{c_1}{(1 - \epsilon)^2} \sqrt{\frac{\mu s_r (n_1 + n_2) \log( (n_1 \vee n_2) n_3 )}{p n_1 n_2 n_3} } \| \bcL_{\triangle} \ast_{\bPhi} \bcG_{\star}^{\frac{1}{2}} \|_F  \\
&\qquad + \frac{c_2}{(1 - \epsilon)^2} \sqrt{\frac{\ell \log( (n_1 \vee n_2) n_3 )}{p (n_1 \wedge n_2)} } \frac{\mu s_r}{n_3} (\rho^t + 4) \| \bcR_{\triangle} \ast_{\bPhi} \bcG_{\star}^{\frac{1}{2}} \|_F \Big).
\end{align*}

\paragraph{Bound of $\mathfrak{T}_5$.} Lemma~\ref{lemma:sparity} implies $\hat{\bcS} \coloneq \bcP_{\bOmega} (\bcS_{\triangle})$ is an $\alpha p$-sparse tensor, hence
\begin{align*}
& 2 \eta (1 - \eta) \langle \bcL_{\triangle} \ast_{\bPhi} \bcG_{\star}^{\frac{1}{2}} , p^{-1} \bcP_{\bOmega} (\bcS_{\triangle}) \ast_{\bPhi} \bcR_{\sharp} \ast_{\bPhi} (\bcR_{\sharp}^H \ast_{\bPhi} \bcR_{\sharp})^{-1} \ast_{\bPhi} \bcG_{\star}^{\frac{1}{2}} \rangle  \\
= & 2 \frac{\eta (1 - \eta)}{p \ell} \Big| \tr \Big( \widebar{\hat{\bS}} \widebar{\bR}_{\sharp} (\widebar{\bR}_{\sharp}^H \widebar{\bR}_{\sharp})^{-1} \widebar{\bG}_{\star} \widebar{\bL}_{\triangle}^H \Big) \Big|  \\
\leq & 2 \frac{\eta (1 - \eta)}{p \ell} \| \widebar{\hat{\bS}} \| \| \widebar{\bR}_{\sharp} (\widebar{\bR}_{\sharp}^H \widebar{\bR}_{\sharp})^{-1} \widebar{\bG}_{\star} \widebar{\bL}_{\triangle}^H \|_{\ast}  \\
\leq & \frac{\alpha \eta (1 - \eta)}{\sqrt{\ell}} (n_1 + n_2 n_3) \| \hat{\bS} \|_{\infty} \sqrt{s_r} \| \widebar{\bR}_{\sharp} (\widebar{\bR}_{\sharp}^H \widebar{\bR}_{\sharp})^{-1} \widebar{\bG}_{\star} \widebar{\bL}_{\triangle}^H \|_F  \\
\leq & \frac{\alpha \eta (1 - \eta)}{\sqrt{\ell}} (n_1 + n_2 n_3) \| \hat{\bS} \|_{\infty} \sqrt{s_r} \| \widebar{\bR}_{\sharp} (\widebar{\bR}_{\sharp}^H \widebar{\bR}_{\sharp})^{-1} \widebar{\bG}_{\star}^{\frac{1}{2}} \| \| \widebar{\bL}_{\triangle} \widebar{\bG}_{\star}^{\frac{1}{2}} \|_F  \\
\leq & 2 \frac{\alpha \sqrt{s_r} \eta (1 - \eta)}{1 - \epsilon} (n_1 + n_2 n_3) \| \bcX_t - \bcX_{\star} \|_{\infty} \| \bcL_{\triangle} \ast_{\bPhi} \bcG_{\star}^{\frac{1}{2}} \|_F  \\
\leq & 6 \frac{\alpha \mu s_r^{1.5} \eta (1 - \eta)}{\ell n_3 \sqrt{n_1 n_2}} \frac{\epsilon}{1 - \epsilon} (n_1 + n_2 n_3) \rho^{2t} \bar{\sigma}_{s_r}^2 (\bcX_{\star}),
\end{align*}
where the last step uses the results from Lemma~\ref{lemma:Deltanorm} and Lemma~\ref{lemma:Xinfnorm}.

\paragraph{Bound of $\mathfrak{T}_6$.} Similar to $\mathfrak{T}_5$, we have
\begin{align*}
& 2 \eta^2 \langle \bcL_{\star} \ast_{\bPhi} \bcR_{\triangle}^H \ast_{\bPhi} \bcR_{\sharp} \ast_{\bPhi} (\bcR_{\sharp}^H \ast_{\bPhi} \bcR_{\sharp})^{-1} \ast_{\bPhi} \bcG_{\star}^{\frac{1}{2}} ,  \\
&\qquad\qquad\qquad p^{-1} \bcP_{\bOmega} (\bcS_{\triangle}) \ast_{\bPhi} \bcR_{\sharp} \ast_{\bPhi} (\bcR_{\sharp}^H \ast_{\bPhi} \bcR_{\sharp})^{-1} \ast_{\bPhi} \bcG_{\star}^{\frac{1}{2}} \rangle  \\
= & 2 \frac{\eta^2}{p \ell} \Big| \tr \Big( \widebar{\hat{\bS}} \widebar{\bR}_{\sharp} (\widebar{\bR}_{\sharp}^H \widebar{\bR}_{\sharp})^{-1} \widebar{\bG}_{\star} (\widebar{\bR}_{\sharp}^H \widebar{\bR}_{\sharp})^{-1} \widebar{\bR}_{\sharp}^H \widebar{\bR}_{\triangle} \widebar{\bL}_{\star}^H \Big) \Big|  \\
\leq & 2 \frac{\eta^2}{p \ell} \| \widebar{\hat{\bS}} \| \| \widebar{\bR}_{\sharp} (\widebar{\bR}_{\sharp}^H \widebar{\bR}_{\sharp})^{-1} \widebar{\bG}_{\star} (\widebar{\bR}_{\sharp}^H \widebar{\bR}_{\sharp})^{-1} \widebar{\bR}_{\sharp}^H \widebar{\bR}_{\triangle} \widebar{\bL}_{\star}^H \|_{\ast}  \\
\leq & \frac{\alpha \eta^2}{\sqrt{\ell}} (n_1 + n_2 n_3) \| \hat{\bcS} \|_{\infty} \sqrt{s_r} \| \widebar{\bR}_{\sharp} (\widebar{\bR}_{\sharp}^H \widebar{\bR}_{\sharp})^{-1} \widebar{\bG}_{\star} (\widebar{\bR}_{\sharp}^H \widebar{\bR}_{\sharp})^{-1} \widebar{\bR}_{\sharp}^H \widebar{\bR}_{\triangle} \widebar{\bL}_{\star}^H \|_F  \\
\leq & \frac{\alpha \eta^2}{\sqrt{\ell}} (n_1 + n_2 n_3) \| \hat{\bcS} \|_{\infty} \sqrt{s_r} \| \widebar{\bR}_{\sharp} (\widebar{\bR}_{\sharp}^H \widebar{\bR}_{\sharp})^{-1} \widebar{\bG}_{\star}^{\frac{1}{2}} \|^2 \| \widebar{\bR}_{\triangle} \widebar{\bG}_{\star}^{\frac{1}{2}} \|_F \| \widebar{\bU}_{\star} \|  \\
\leq & 2 \alpha \eta^2 (n_1 + n_2 n_3) \sqrt{s_r} \frac{1}{(1 - \epsilon)^2} \| \bcX_t - \bcX_{\star} \|_{\infty} \| \bcR_{\triangle} \ast_{\bPhi} \bcG_{\star}^{\frac{1}{2}} \|_F  \\
\leq & 6 \frac{\alpha \mu s_r^{1.5} \eta^2}{\ell n_3 \sqrt{n_1 n_2}} (n_1 + n_2 n_3) \frac{\epsilon}{(1 - \epsilon)^2} \rho^{2t} \bar{\sigma}_{s_r}^2 (\bcX_{\star}).
\end{align*}

\paragraph{Bound of $\mathfrak{T}_7$.} Using the decomposition $\bcL_{\sharp} \ast_{\bPhi} \bcR_{\sharp}^H - \bcX_{\star} = \bcL_{\triangle} \ast_{\bPhi} \bcR_{\star}^H + \bcL_{\sharp} \ast_{\bPhi} \bcR_{\triangle}^H$ and applying the triangle inequality to obtain
\begin{align*}
|\mathfrak{T}_7| & = 2 \eta^2 \Big| \langle (p^{-1} \bcP_{\bOmega} - \bcI_{n_1})(\bcL_{\sharp} \ast_{\bPhi} \bcR_{\sharp}^H - \bcX_{\star}) \ast_{\bPhi} \bcR_{\sharp} \ast_{\bPhi} (\bcR_{\sharp}^H \ast_{\bPhi} \bcR_{\sharp})^{-1} \ast_{\bPhi} \bcG_{\star}^{\frac{1}{2}} ,  \\
&\qquad\qquad\qquad p^{-1} \bcP_{\bOmega} (\bcS_{\triangle}) \ast_{\bPhi} \bcR_{\sharp} \ast_{\bPhi} (\bcR_{\sharp}^H \ast_{\bPhi} \bcR_{\sharp})^{-1} \ast_{\bPhi} \bcG_{\star}^{\frac{1}{2}} \rangle \Big|  \\
& \leq 2 \eta^2 \Big( \Big| \langle p^{-1} \bcP_{\bOmega} (\bcS_{\triangle}) \ast_{\bPhi} \bcR_{\sharp} \ast_{\bPhi} (\bcR_{\sharp}^H \ast_{\bPhi} \bcR_{\sharp})^{-1} \ast_{\bPhi} \bcG_{\star}^{\frac{1}{2}} ,  \\
&\qquad\qquad\qquad (p^{-1} \bcP_{\bOmega} - \bcI_{n_1})(\bcL_{\triangle} \ast_{\bPhi} \bcR_{\star}^H) \ast_{\bPhi} \bcR_{\star} \ast_{\bPhi} (\bcR_{\sharp}^H \ast_{\bPhi} \bcR_{\sharp})^{-1} \ast_{\bPhi} \bcG_{\star}^{\frac{1}{2}} \rangle \Big|  \\
&\quad + \Big| \langle p^{-1} \bcP_{\bOmega} (\bcS_{\triangle}) \ast_{\bPhi} \bcR_{\sharp} \ast_{\bPhi} (\bcR_{\sharp}^H \ast_{\bPhi} \bcR_{\sharp})^{-1} \ast_{\bPhi} \bcG_{\star}^{\frac{1}{2}} ,  \\
&\qquad\qquad\qquad (p^{-1} \bcP_{\bOmega} - \bcI_{n_1})(\bcL_{\triangle} \ast_{\bPhi} \bcR_{\star}^H) \ast_{\bPhi} \bcR_{\triangle} \ast_{\bPhi} (\bcR_{\sharp}^H \ast_{\bPhi} \bcR_{\sharp})^{-1} \ast_{\bPhi} \bcG_{\star}^{\frac{1}{2}} \rangle \Big|  \\
&\quad + \Big| \langle p^{-1} \bcP_{\bOmega} (\bcS_{\triangle}) \ast_{\bPhi} \bcR_{\sharp} \ast_{\bPhi} (\bcR_{\sharp}^H \ast_{\bPhi} \bcR_{\sharp})^{-1} \ast_{\bPhi} \bcG_{\star}^{\frac{1}{2}} ,  \\
&\qquad\qquad\qquad (p^{-1} \bcP_{\bOmega} - \bcI_{n_1})(\bcL_{\sharp} \ast_{\bPhi} \bcR_{\triangle}^H) \ast_{\bPhi} \bcR_{\sharp} \ast_{\bPhi} (\bcR_{\sharp}^H \ast_{\bPhi} \bcR_{\sharp})^{-1} \ast_{\bPhi} \bcG_{\star}^{\frac{1}{2}} \rangle \Big| \Big)  \\
& \coloneq 2 \eta^2 (\mathfrak{T}_{7,1} + \mathfrak{T}_{7,2} + \mathfrak{T}_{7,3}).
\end{align*}
For the first term $\mathfrak{T}_{7,1}$, we can invoke Corollary~\ref{coro:POmegatangent} to obtain
\begin{align*}
\mathfrak{T}_{7,1} & \leq c_1 \sqrt{\frac{\mu s_r (n_1 + n_2) \log( (n_1 \vee n_2) n_3 )}{p n_1 n_2 n_3} } \| \bcL_{\triangle} \ast_{\bPhi} \bcR_{\star}^H \|_F  \\
&\qquad \| p^{-1} \bcP_{\bOmega} (\bcS_{\triangle}) \ast_{\bPhi} \bcR_{\sharp} \ast_{\bPhi} (\bcR_{\sharp}^H \ast_{\bPhi} \bcR_{\sharp})^{-1} \ast_{\bPhi} \bcG_{\star} \ast_{\bPhi} (\bcR_{\sharp}^H \ast_{\bPhi} \bcR_{\sharp})^{-1} \ast_{\bPhi} \bcR_{\star}^H \|_F  \\
& \leq c_1 \sqrt{\frac{\mu s_r (n_1 + n_2) \log( (n_1 \vee n_2) n_3 )}{p n_1 n_2 n_3} } \| \bcL_{\triangle} \ast_{\bPhi} \bcG_{\star}^{\frac{1}{2}} \|_F \| \bcG_{\star}^{-\frac{1}{2}} \ast_{\bPhi} \bcR_{\star}^H \|  \\
&\qquad \| p^{-1} \bcP_{\bOmega} (\bcS_{\triangle}) \|_F \| \bcR_{\sharp} \ast_{\bPhi} (\bcR_{\sharp}^H \ast_{\bPhi} \bcR_{\sharp})^{-1} \ast_{\bPhi} \bcG_{\star}^{\frac{1}{2}} \| \| \bcG_{\star}^{\frac{1}{2}} \ast_{\bPhi} (\bcR_{\sharp}^H \ast_{\bPhi} \bcR_{\sharp})^{-1} \ast_{\bPhi} \bcG_{\star}^{\frac{1}{2}} \| \| \bcG_{\star}^{-\frac{1}{2}} \ast_{\bPhi} \bcR_{\star}^H \|  \\
& = c_1 \sqrt{\frac{\mu s_r (n_1 + n_2) \log( (n_1 \vee n_2) n_3 )}{p^3 n_1 n_2 n_3} } \| \bcL_{\triangle} \ast_{\bPhi} \bcG_{\star}^{\frac{1}{2}} \|_F \sqrt{\alpha p n_1 n_2 n_3} \| \hat{\bcS} \|_{\infty}  \\
&\qquad \| \bcR_{\sharp} \ast_{\bPhi} (\bcR_{\sharp}^H \ast_{\bPhi} \bcR_{\sharp})^{-1} \ast_{\bPhi} \bcG_{\star}^{\frac{1}{2}} \| \| \bcG_{\star}^{\frac{1}{2}} \ast_{\bPhi} (\bcR_{\sharp}^H \ast_{\bPhi} \bcR_{\sharp})^{-1} \ast_{\bPhi} \bcG_{\star}^{\frac{1}{2}} \|  \\
& \leq \frac{c_1}{(1 - \epsilon)^3} \sqrt{\frac{\mu \alpha s_r (n_1 + n_2) \log( (n_1 \vee n_2) n_3 )}{p^2} } \| \bcL_{\triangle} \ast_{\bPhi} \bcG_{\star}^{\frac{1}{2}} \|_F \| \hat{\bcS} \|_{\infty}  \\
& \leq 2 \frac{c_1}{(1 - \epsilon)^3} \sqrt{\frac{\mu \alpha s_r (n_1 + n_2) \log( (n_1 \vee n_2) n_3 )}{p^2} } \| \bcL_{\triangle} \ast_{\bPhi} \bcG_{\star}^{\frac{1}{2}} \|_F \| \bcX_t - \bcX_{\star} \|_{\infty}  \\
& \leq 6 \frac{c_1 \mu^{1.5} s_r^{1.5} \epsilon}{p \ell n_3 (1 - \epsilon)^3} \sqrt{\frac{\alpha (n_1 + n_2) \log( (n_1 \vee n_2) n_3 )}{n_1 n_2}} \rho^{2t} \bar{\sigma}_{s_r}^2 (\bcX_{\star}).
\end{align*}
In regard to $\mathfrak{T}_{7,2}$, we can invoke Lemma~\ref{lemma:POmeganonconvex} with $\bcL_A \coloneq \bcL_{\triangle} \ast_{\bPhi} \bcG_{\star}^{\frac{1}{2}}$, $\bcR_A \coloneq \bcR_{\star} \ast_{\bPhi} \bcG_{\star}^{-\frac{1}{2}}$, $\bcL_B \coloneq p^{-1} \bcP_{\bOmega} (\bcS_{\triangle})$, $\bcR_B \coloneq \bcR_{\triangle} \ast_{\bPhi} (\bcR_{\sharp}^H \ast_{\bPhi} \bcR_{\sharp})^{-1} \ast_{\bPhi} \bcG_{\star} \ast_{\bPhi} (\bcR_{\sharp}^H \ast_{\bPhi} \bcR_{\sharp})^{-1} \ast_{\bPhi} \bcR_{\sharp}^H$ and use the consequences in Lemma~\ref{lemma:TC-cond} to obtain
\begin{align*}
\mathfrak{T}_{7,2} & \leq c_2 \ell^{\frac{3}{2}} \sqrt{\frac{(n_1 \vee n_2) \log( (n_1 \vee n_2) n_3 )}{p} } \| \bcL_{\triangle} \ast_{\bPhi} \bcG_{\star}^{\frac{1}{2}} \|_F \| p^{-1} \bcP_{\bOmega} (\bcS_{\triangle}) \|_{2,2,\infty}  \\
&\qquad \| \bcR_{\star} \ast_{\bPhi} \bcG_{\star}^{-\frac{1}{2}} \|_{2,2,\infty} \| \bcR_{\triangle} \ast_{\bPhi} (\bcR_{\sharp}^H \ast_{\bPhi} \bcR_{\sharp})^{-1} \ast_{\bPhi} \bcG_{\star} \ast_{\bPhi} (\bcR_{\sharp}^H \ast_{\bPhi} \bcR_{\sharp})^{-1} \ast_{\bPhi} \bcR_{\sharp}^H \|_F  \\
& \leq c_2 \ell^{\frac{3}{2}} \sqrt{\frac{(n_1 \vee n_2) \log( (n_1 \vee n_2) n_3 )}{p^3} } \| \bcL_{\triangle} \ast_{\bPhi} \bcG_{\star}^{\frac{1}{2}} \|_F \| \bcP_{\bOmega} (\bcS_{\triangle}) \|_{2,2,\infty}  \\
&\qquad \| \bcR_{\star} \ast_{\bPhi} \bcG_{\star}^{-\frac{1}{2}} \|_{2,\infty} \| \bcR_{\triangle} \ast_{\bPhi} \bcG_{\star}^{-\frac{1}{2}} \|_F \| \bcG_{\star}^{\frac{1}{2}} \ast_{\bPhi} (\bcR_{\sharp}^H \ast_{\bPhi} \bcR_{\sharp})^{-1} \ast_{\bPhi} \bcG_{\star}^{\frac{1}{2}} \| \| \bcG_{\star}^{\frac{1}{2}} \ast_{\bPhi} (\bcR_{\sharp}^H \ast_{\bPhi} \bcR_{\sharp})^{-1} \ast_{\bPhi} \bcR_{\sharp}^H \|  \\
& \leq c_2 \ell^{\frac{3}{2}} \sqrt{\frac{(n_1 \vee n_2) \log( (n_1 \vee n_2) n_3 )}{p^3} } \| \bcL_{\triangle} \ast_{\bPhi} \bcG_{\star}^{\frac{1}{2}} \|_F \sqrt{\alpha p n_2} \| \hat{\bcS} \|_{\infty}  \\
&\qquad \| \bcR_{\star} \ast_{\bPhi} \bcG_{\star}^{-\frac{1}{2}} \|_{2,\infty} \| \bcR_{\triangle} \ast_{\bPhi} \bcG_{\star}^{-\frac{1}{2}} \|_F \| \bcG_{\star}^{\frac{1}{2}} \ast_{\bPhi} (\bcR_{\sharp}^H \ast_{\bPhi} \bcR_{\sharp})^{-1} \ast_{\bPhi} \bcG_{\star}^{\frac{1}{2}} \| \| \bcG_{\star}^{\frac{1}{2}} \ast_{\bPhi} (\bcR_{\sharp}^H \ast_{\bPhi} \bcR_{\sharp})^{-1} \ast_{\bPhi} \bcR_{\sharp}^H \|  \\
& \leq 2 c_2 \ell^{\frac{3}{2}} \sqrt{\frac{(n_1 \vee n_2) \log( (n_1 \vee n_2) n_3 )}{p^3} } \| \bcL_{\triangle} \ast_{\bPhi} \bcG_{\star}^{\frac{1}{2}} \|_F \sqrt{\alpha p n_2} \| \bcX_t - \bcX_{\star} \|_{\infty}  \\
&\qquad \| \bcR_{\star} \ast_{\bPhi} \bcG_{\star}^{-\frac{1}{2}} \|_{2,\infty} \| \bcR_{\triangle} \ast_{\bPhi} \bcG_{\star}^{-\frac{1}{2}} \|_F \| \bcG_{\star}^{\frac{1}{2}} \ast_{\bPhi} (\bcR_{\sharp}^H \ast_{\bPhi} \bcR_{\sharp})^{-1} \ast_{\bPhi} \bcG_{\star}^{\frac{1}{2}} \| \| \bcG_{\star}^{\frac{1}{2}} \ast_{\bPhi} (\bcR_{\sharp}^H \ast_{\bPhi} \bcR_{\sharp})^{-1} \ast_{\bPhi} \bcR_{\sharp}^H \|  \\
& \leq 6 \frac{c_2 \mu^{1.5} s_r^{1.5}}{p n_3 (1 - \epsilon)^3} \sqrt{\frac{\alpha \ell \log( (n_1 \vee n_2) n_3 )}{(n_1 \wedge n_2) n_3} } \rho^t \| \bcL_{\triangle} \ast_{\bPhi} \bcG_{\star}^{\frac{1}{2}} \|_F \| \bcR_{\triangle} \ast_{\bPhi} \bcG_{\star}^{\frac{1}{2}} \|_F.
\end{align*}
Similarly, we can bound $\mathfrak{T}_{7,3}$ as
\begin{align*}
\mathfrak{T}_{7,3} & \leq c_2 \ell^{\frac{3}{2}} \sqrt{\frac{(n_1 \vee n_2) \log( (n_1 \vee n_2) n_3 )}{p} } \| \bcL_{\sharp} \ast_{\bPhi} \bcG_{\star}^{-\frac{1}{2}} \|_{2,\infty} \| p^{-1} \bcP_{\bOmega} (\bcS_{\triangle}) \|_F  \\
&\qquad \| \bcR_{\triangle} \ast_{\bPhi} \bcG_{\star}^{\frac{1}{2}} \|_F \| \bcR_{\sharp} \ast_{\bPhi} \bcG_{\star}^{-\frac{1}{2}} \|_{2,\infty} \| \bcG_{\star}^{\frac{1}{2}} \ast_{\bPhi} (\bcR_{\sharp}^H \ast_{\bPhi} \bcR_{\sharp})^{-1} \ast_{\bPhi} \bcG_{\star}^{\frac{1}{2}} \| \| \bcG_{\star}^{\frac{1}{2}} \ast_{\bPhi} (\bcR_{\sharp}^H \ast_{\bPhi} \bcR_{\sharp})^{-1} \ast_{\bPhi} \bcR_{\sharp}^H \|  \\
& \leq c_2 \ell^{\frac{3}{2}} \sqrt{\frac{(n_1 \vee n_2) \log( (n_1 \vee n_2) n_3 )}{p^3} } \| \bcL_{\sharp} \ast_{\bPhi} \bcG_{\star}^{-\frac{1}{2}} \|_{2,\infty} \sqrt{\alpha p n_1 n_2 n_3} \| \hat{\bcS} \|_{\infty}  \\
&\qquad \| \bcR_{\triangle} \ast_{\bPhi} \bcG_{\star}^{\frac{1}{2}} \|_F \| \bcR_{\sharp} \ast_{\bPhi} \bcG_{\star}^{-\frac{1}{2}} \|_{2,\infty} \| \bcG_{\star}^{\frac{1}{2}} \ast_{\bPhi} (\bcR_{\sharp}^H \ast_{\bPhi} \bcR_{\sharp})^{-1} \ast_{\bPhi} \bcG_{\star}^{\frac{1}{2}} \| \| \bcG_{\star}^{\frac{1}{2}} \ast_{\bPhi} (\bcR_{\sharp}^H \ast_{\bPhi} \bcR_{\sharp})^{-1} \ast_{\bPhi} \bcR_{\sharp}^H \|  \\
& \leq 24 \frac{c_2 \mu^2 s_r^2 \epsilon}{p n_3 (1 - \epsilon)^3} \sqrt{\frac{\alpha \log( (n_1 \vee n_2) n_3 )}{\ell (n_1 \wedge n_2) n_3} } \rho^{2t} \bar{\sigma}_{s_r}^2 (\bcX_{\star}).
\end{align*}
We then combine the bounds for $\mathfrak{T}_{7,1}$, $\mathfrak{T}_{7,2}$ and $\mathfrak{T}_{7,3}$ to arrive at
\begin{align*}
|\mathfrak{T}_7| & \leq 2 \eta^2 \Big( 6 \frac{c_1 \mu^{1.5} s_r^{1.5} \epsilon}{p \ell n_3 (1 - \epsilon)^3} \sqrt{\frac{\alpha (n_1 + n_2) \log( (n_1 \vee n_2) n_3 )}{n_1 n_2}} \rho^{2t} \bar{\sigma}_{s_r}^2 (\bcX_{\star})  \\
&\qquad + 6 \frac{c_2 \mu^{1.5} s_r^{1.5}}{p n_3 (1 - \epsilon)^3} \sqrt{\frac{\alpha \ell \log( (n_1 \vee n_2) n_3 )}{(n_1 \wedge n_2) n_3} } \rho^t \| \bcL_{\triangle} \ast_{\bPhi} \bcG_{\star}^{\frac{1}{2}} \|_F \| \bcR_{\triangle} \ast_{\bPhi} \bcG_{\star}^{\frac{1}{2}} \|_F  \\
&\qquad + 24 \frac{c_2 \mu^2 s_r^2 \epsilon}{p n_3 (1 - \epsilon)^3} \sqrt{\frac{\alpha \log( (n_1 \vee n_2) n_3 )}{\ell (n_1 \wedge n_2) n_3} } \rho^{2t} \bar{\sigma}_{s_r}^2 (\bcX_{\star}) \Big).
\end{align*}

\paragraph{Bound of $\mathfrak{T}_8$.} Similar to $\mathfrak{T}_5$, we have
\begin{align*}
& \eta^2 \| p^{-1} \bcP_{\bOmega} (\bcS_{\triangle}) \ast_{\bPhi} \bcR_{\sharp} \ast_{\bPhi} (\bcR_{\sharp}^H \ast_{\bPhi} \bcR_{\sharp})^{-1} \ast_{\bPhi} \bcG_{\star}^{\frac{1}{2}} \|_F^2  \\
= & \frac{1}{\ell p^2} \eta^2 \| \widebar{\hat{\bS}} \widebar{\bR}_{\sharp} (\widebar{\bR}_{\sharp}^H \widebar{\bR}_{\sharp})^{-1} \widebar{\bG}_{\star}^{\frac{1}{2}} \|_F^2  \\
\leq & \frac{1}{\ell p^2} \eta^2 s_r \| \widebar{\hat{\bS}} \widebar{\bR}_{\sharp} (\widebar{\bR}_{\sharp}^H \widebar{\bR}_{\sharp})^{-1} \widebar{\bG}_{\star}^{\frac{1}{2}} \|^2  \\
\leq & \frac{1}{\ell p^2} \eta^2 s_r \| \widebar{\hat{\bS}} \|^2 \| \widebar{\bR}_{\sharp} (\widebar{\bR}_{\sharp}^H \widebar{\bR}_{\sharp})^{-1} \widebar{\bG}_{\star}^{\frac{1}{2}} \|^2  \\
\leq & \frac{\alpha^2 \eta^2 s_r}{4} (n_1 + n_2 n_3)^2 \| \hat{\bcS} \|_{\infty}^2 \| \widebar{\bR}_{\sharp} (\widebar{\bR}_{\sharp}^H \widebar{\bR}_{\sharp})^{-1} \widebar{\bG}_{\star}^{\frac{1}{2}} \|^2  \\
\leq & \alpha^2 \eta^2 s_r (n_1 + n_2 n_3)^2 \frac{1}{(1 - \epsilon)^2} \| \bcX_t - \bcX_{\star} \|_{\infty}^2  \\
\leq & 9 \frac{\alpha^2 \mu^2 s_r^3 \eta^2}{\ell n_1 n_2 n_3^2} (n_1 + n_2 n_3)^2 \frac{1}{(1 - \epsilon)^2} \rho^{2t} \bar{\sigma}_{s_r}^2 (\bcX_{\star}).
\end{align*}
Taking collectively the bounds for $\mathfrak{T}_1$, $\mathfrak{T}_2$, $\mathfrak{T}_3$, $\mathfrak{T}_4$, $\mathfrak{T}_5$, $\mathfrak{T}_6$, $\mathfrak{T}_7$ and $\mathfrak{T}_8$ together, we have
\begin{align*}
& \| (\bcL_{t+1} \ast_{\bPhi} \bcQ_t - \bcL_{\star}) \ast_{\bPhi} \bcG_{\star}^{\frac{1}{2}} \|_F^2  \\
& \leq \Big( (1 - \eta)^2 + \frac{2 \epsilon}{1 - \epsilon} \eta (1 - \eta) \Big) \| \bcL_{\triangle} \ast_{\bPhi} \bcG_{\star}^{\frac{1}{2}} \|_F^2 + \frac{2 \epsilon + \epsilon^2}{(1 - \epsilon)^2} \eta^2 \| \bcR_{\triangle} \ast_{\bPhi} \bcG_{\star}^{\frac{1}{2}} \|_F^2  \\
&\quad + \eta (1 - \eta) \Big( (2 \nu_1 + \nu_3) \| \bcL_{\triangle} \ast_{\bPhi} \bcG_{\star}^{\frac{1}{2}} \|_F^2 + \nu_3 \| \bcR_{\triangle} \ast_{\bPhi} \bcG_{\star}^{\frac{1}{2}} \|_F^2 \Big)  \\
&\quad + \eta^2 \Big( \nu_3 \| \bcL_{\triangle} \ast_{\bPhi} \bcG_{\star}^{\frac{1}{2}} \|_F^2 + (2 \nu_1 + \nu_3) \| \bcR_{\triangle} \ast_{\bPhi} \bcG_{\star}^{\frac{1}{2}} \|_F^2 \Big)  \\
&\quad + \eta^2 \Big( \nu_1 (\nu_1 + \nu_3) \| \bcL_{\triangle} \ast_{\bPhi} \bcG_{\star}^{\frac{1}{2}} \|_F^2 + \nu_3 (\nu_1 + \nu_3) \| \bcR_{\triangle} \ast_{\bPhi} \bcG_{\star}^{\frac{1}{2}} \|_F^2 \Big)  \\
&\quad + \eta^2 \Big( \nu_4 \| \bcL_{\triangle} \ast_{\bPhi} \bcG_{\star}^{\frac{1}{2}} \|_F^2 + \nu_4 \| \bcR_{\triangle} \ast_{\bPhi} \bcG_{\star}^{\frac{1}{2}} \|_F^2 \Big)  \\
&\quad + 6 \eta (1 - \eta) \frac{\alpha \mu s_r^{1.5}}{\ell n_3 \sqrt{n_1 n_2}} \frac{\epsilon}{1 - \epsilon} (n_1 + n_2 n_3) \rho^{2t} \bar{\sigma}_{s_r}^2 (\bcX_{\star}) + 6 \eta^2 \frac{\alpha \mu s_r^{1.5}}{\ell n_3 \sqrt{n_1 n_2}} (n_1 + n_2 n_3) \frac{\epsilon}{(1 - \epsilon)^2} \rho^{2t} \bar{\sigma}_{s_r}^2 (\bcX_{\star})  \\
&\quad + 12 \eta^2 \frac{c_1 \mu^{1.5} s_r^{1.5} \epsilon}{p \ell n_3 (1 - \epsilon)^3} \sqrt{\frac{\alpha (n_1 + n_2) \log( (n_1 \vee n_2) n_3 )}{n_1 n_2}} \rho^{2t} \bar{\sigma}_{s_r}^2 (\bcX_{\star})  \\
&\quad + 48 \eta^2 \frac{c_2 \mu^2 s_r^2 \epsilon}{p n_3 (1 - \epsilon)^3} \sqrt{\frac{\alpha \log( (n_1 \vee n_2) n_3 )}{\ell (n_1 \wedge n_2) n_3} } \rho^{2t} \bar{\sigma}_{s_r}^2 (\bcX_{\star}) + 9 \eta^2 \frac{\alpha^2 \mu^2 s_r^3}{\ell n_1 n_2 n_3^2} (n_1 + n_2 n_3)^2 \frac{1}{(1 - \epsilon)^2} \rho^{2t} \bar{\sigma}_{s_r}^2 (\bcX_{\star}),
\end{align*}
where $\nu_1$ is defined in \eqref{eqn:defnu1nu2} and we denote
\begin{align*}
\nu_3 & \coloneq \frac{c_2}{(1 - \epsilon)^2} \sqrt{\frac{\ell \log( (n_1 \vee n_2) n_3 )}{p (n_1 \wedge n_2)} } \frac{\mu s_r}{n_3} (\rho^t + 4),  \\
\mathrm{and} \quad \nu_4 & \coloneq 6 \frac{c_2 \mu^{1.5} s_r^{1.5}}{p n_3 (1 - \epsilon)^3} \sqrt{\frac{\alpha \ell \log( (n_1 \vee n_2) n_3 )}{(n_1 \wedge n_2) n_3} }.
\end{align*}
A similar bound can be computed for $\| (\bcR_{t+1} \ast_{\bPhi} \bcQ_t^{-H} - \bcR_{\star}) \ast_{\bPhi} \bcG_{\star}^{\frac{1}{2}} \|_F^2$. As a result, we reach the conclusion that
\begin{align*}
& \dist^2(\bcF_{t+1}, \bcF_{\star})  \nonumber \\
\leq & \hbar^2(\eta;\epsilon,\nu_1,\nu_3,\nu_4) \dist^2(\bcF_t, \bcF_{\star}) + \frac{1}{\ell} \Big( 12 \eta (1 - \eta) \alpha \mu s_r^{1.5} \frac{n_1 + n_2 n_3}{n_3 \sqrt{n_1 n_2}} \frac{\epsilon}{1 - \epsilon} \rho^{2t} \bar{\sigma}_{s_r}^2 (\bcX_{\star})  \nonumber \\
&\quad + 12 \eta^2 \alpha \mu s_r^{1.5} \frac{n_1 + n_2 n_3}{n_3 \sqrt{n_1 n_2}} \frac{\epsilon}{(1 - \epsilon)^2} \rho^{2t} \bar{\sigma}_{s_r}^2 (\bcX_{\star}) + 18 \eta^2 \alpha^2 \mu^2 s_r^3 \frac{(n_1 + n_2 n_3)^2}{n_1 n_2 n_3^2} \frac{1}{(1 - \epsilon)^2} \rho^{2t} \bar{\sigma}_{s_r}^2 (\bcX_{\star})  \nonumber \\
&\quad + 24 \eta^2 \frac{c_1 \mu^{1.5} s_r^{1.5} \epsilon}{p n_3 (1 - \epsilon)^3} \sqrt{\frac{\alpha (n_1 + n_2) \log( (n_1 \vee n_2) n_3 )}{n_1 n_2}} \rho^{2t} \bar{\sigma}_{s_r}^2 (\bcX_{\star})  \\
&\quad + 96 \eta^2 \frac{c_2 \mu^2 s_r^2 \epsilon}{p n_3 (1 - \epsilon)^3} \sqrt{\frac{\alpha \ell \log( (n_1 \vee n_2) n_3 )}{(n_1 \wedge n_2) n_3} } \rho^{2t} \bar{\sigma}_{s_r}^2 (\bcX_{\star}) \Big),
\end{align*}
where the contraction rate $\hbar^2(\eta;\epsilon,\nu_1,\nu_3,\nu_4)$ is given by 
\begin{align*}
& \hbar^2 (\eta;\epsilon,\nu_1,\nu_3,\nu_4)  \\
\coloneq & (1 - \eta)^2 + \Big( \frac{2 \epsilon}{1 - \epsilon} + 2(\nu_1 + \nu_3) \Big) \eta (1 - \eta) + \Big( \frac{2 \epsilon + \epsilon^2}{(1 - \epsilon)^2} + 2 (\nu_1 + \nu_3 + \nu_4) + (\nu_1 + \nu_3)^2 \Big) \eta^2.
\end{align*}
As long as $p \geq c \Big( \frac{\mu^{1.5} s_r^{1.5} (n_1 + n_2) \log( (n_1 \vee n_2) n_3 )}{n_3 \sqrt{n_1 n_2}} \vee \frac{\mu^2 s_r^2 \ell \log( (n_1 \vee n_2) n_3 )}{n_3 \sqrt{(n_1 \wedge n_2) n_3}} \Big)$ for some sufficiently large constant $c$ and $\alpha \mu s_r^{1.5} \frac{n_1 + n_2 n_3}{n_3 \sqrt{n_1 n_2}} \leq \alpha \mu s_r^{1.5} \frac{n_1 + n_2 n_3}{n_3 \sqrt{n_{(2)} n_2}} \leq 10^{-4}$, we have $\nu_1 + \nu_2 + \nu_4 \leq 0.1$ and
\begin{align*}
\frac{c_1 \mu^{1.5} s_r^{1.5}}{p n_3} \sqrt{\frac{\alpha (n_1 + n_2) \log( (n_1 \vee n_2) n_3 )}{n_1 n_2}} + 4 \frac{c_2 \mu^2 s_r^2}{p n_3} \sqrt{\frac{\alpha \ell \log( (n_1 \vee n_2) n_3 )}{(n_1 \wedge n_2) n_3} } \leq 0.0005
\end{align*}
under the setting $\epsilon = 0.02$. With $\frac{1}{5} < \eta \leq \frac{1}{2}$, we can have
\begin{align*}
& \dist^2(\bcF_{t+1}, \bcF_{\star})  \\
\leq & (1 - 0.6 \eta)^2 \dist^2(\bcF_t, \bcF_{\star}) + \Big( 12 \eta (1 - \eta) \alpha \mu s_r^{1.5} \frac{n_1 + n_2 n_3}{n_3 \sqrt{n_1 n_2}} \frac{1}{\epsilon (1 - \epsilon)} + 0.012 \eta^2 \frac{1}{\epsilon (1 - \epsilon)^3}  \\
& + 12 \eta^2 \alpha \mu s_r^{1.5} \frac{n_1 + n_2 n_3}{n_3 \sqrt{n_1 n_2}} \frac{1}{\epsilon (1 - \epsilon)^2} + 18 \eta^2 \alpha^2 \mu^2 s_r^3 \frac{(n_1 + n_2 n_3)^2}{n_1 n_2 n_3^2} \frac{1}{\epsilon^2 (1 - \epsilon)^2} \Big) \frac{1}{\ell} \epsilon^2 \rho^{2t} \bar{\sigma}_{s_r}^2 (\bcX_{\star})  \\
\leq & (1 - 0.3 \eta)^2 \frac{1}{\ell} \epsilon^2 \rho^{2t} \bar{\sigma}_{s_r}^2 (\bcX_{\star}).
\end{align*}
Thus we conclude that
\begin{align*}
\dist(\bcF_{t+1}, \bcF_{\star}) & \leq \frac{\epsilon}{\sqrt{\ell}} \rho^{t+1} \bar{\sigma}_{s_r} (\bcX_{\star})
\end{align*}
by setting $\rho = 1 - 0.3 \eta$.

\emph{Incoherence condition:} We first use \eqref{eqn:RTCexpand} again to obtain
\begin{align*}
& \| (\bcL_{t+1} \ast_{\bPhi} \bcQ_t - \bcL_{\star}) \ast_{\bPhi} \bcG_{\star}^{\frac{1}{2}} \|_{2,\infty}  \\
\leq & (1 - \eta) \| \bcL_{\triangle} \ast_{\bPhi} \bcG_{\star}^{\frac{1}{2}} \|_{2,\infty} + \eta \| \bcL_{\star} \ast_{\bPhi} \bcR_{\triangle}^H \ast_{\bPhi} \bcR_{\sharp} \ast_{\bPhi} (\bcR_{\sharp}^H \ast_{\bPhi} \bcR_{\sharp})^{-1} \ast_{\bPhi} \bcG_{\star}^{\frac{1}{2}} \|_{2,\infty}  \\
&\quad + \eta \| (p^{-1} \bcP_{\bOmega} - \bcI_{n_1})(\bcL_{\sharp} \ast_{\bPhi} \bcR_{\sharp}^H - \bcX_{\star}) \ast_{\bPhi} \bcR_{\sharp} \ast_{\bPhi} (\bcR_{\sharp}^H \ast_{\bPhi} \bcR_{\sharp})^{-1} \ast_{\bPhi} \bcG_{\star}^{\frac{1}{2}} \|_{2,\infty}  \\
&\quad + \eta \| p^{-1} \bcP_{\bOmega} (\bcS_{\triangle}) \ast_{\bPhi} \bcR_{\sharp} \ast_{\bPhi} (\bcR_{\sharp}^H \ast_{\bPhi} \bcR_{\sharp})^{-1} \ast_{\bPhi} \bcG_{\star}^{\frac{1}{2}} \|_{2,\infty}  \\
\coloneq & \mathfrak{W}_1 + \mathfrak{W}_2 + \mathfrak{W}_3 + \mathfrak{W}_4.
\end{align*}

\paragraph{Bound of $\mathfrak{W}_1$ and $\mathfrak{W}_2$.} The bounds of $\mathfrak{W}_1$ and $\mathfrak{W}_2$ are identical to those of $\mathfrak{B}_1$ and $\mathfrak{B}_2$ in the proof of Lemma~\ref{lemma:TRPCAcontraction}, respectively, i.e., $\mathfrak{W}_1 \leq (1 - \eta) \sqrt{\frac{\mu s_r}{n_1 n_3 \ell}} \rho^t \bar{\sigma}_{s_r} (\bcX_{\star})$ and $\mathfrak{W}_2 \leq \eta \frac{\epsilon}{1 - \epsilon} \sqrt{\frac{\mu s_r}{n_1 n_3 \ell}} \rho^t \bar{\sigma}_{s_r} (\bcX_{\star})$.

\paragraph{Bound of $\mathfrak{W}_3$.} Following Lemma~\ref{lemma:2infbound} and Lemma~\ref{lemma:POmegarownorm}, we have
\begin{align*}
\mathfrak{W}_3 & \leq \eta \| (p^{-1} \bcP_{\bOmega} - \bcI_{n_1})(\bcL_{\sharp} \ast_{\bPhi} \bcR_{\sharp}^H - \bcX_{\star}) \|_{2,\infty} \| \bcR_{\sharp} \ast_{\bPhi} (\bcR_{\sharp}^H \ast_{\bPhi} \bcR_{\sharp})^{-1} \ast_{\bPhi} \bcG_{\star}^{\frac{1}{2}} \|  \\
& \leq \frac{\eta}{1 - \epsilon} \| (p^{-1} \bcP_{\bOmega} - \bcI_{n_1})(\bcL_{\sharp} \ast_{\bPhi} \bcR_{\sharp}^H - \bcX_{\star}) \|_{2,\infty}  \\
& \leq \frac{\eta}{1 - \epsilon} \Big( 2 \sqrt{\frac{c \log(n_2 n_3)}{p}} \| \bcL_{\sharp} \ast_{\bPhi} \bcR_{\sharp}^H - \bcX_{\star} \|_{2,\infty} + \frac{c \log(n_2 n_3)}{p} \| \bcL_{\sharp} \ast_{\bPhi} \bcR_{\sharp}^H - \bcX_{\star} \|_{\infty} \Big)  \\
& = \frac{\eta}{1 - \epsilon} \Big( 2 \sqrt{\frac{c \log(n_2 n_3)}{p}} \| \bcL_{\triangle} \ast_{\bPhi} \bcR_{\sharp}^H + \bcL_{\star} \ast_{\bPhi} \bcR_{\triangle}^H \|_{2,\infty} + \frac{c \log(n_2 n_3)}{p} \| \bcL_{\triangle} \ast_{\bPhi} \bcR_{\sharp}^H + \bcL_{\star} \ast_{\bPhi} \bcR_{\triangle}^H \|_{\infty} \Big),
\end{align*}
where
\begin{align*}
& \| \bcL_{\triangle} \ast_{\bPhi} \bcR_{\sharp}^H + \bcL_{\star} \ast_{\bPhi} \bcR_{\triangle}^H \|_{2,\infty}  \\
\leq & \| \bcL_{\triangle} \ast_{\bPhi} \bcR_{\sharp}^H \|_{2,\infty} + \| \bcL_{\star} \ast_{\bPhi} \bcR_{\triangle}^H \|_{2,\infty}  \\
\leq & \| \bcL_{\triangle} \ast_{\bPhi} \bcG_{\star}^{\frac{1}{2}} \|_{2,\infty} \| \bcR_{\sharp} \ast_{\bPhi} \bcG_{\star}^{-\frac{1}{2}} \| + \| \bcL_{\star} \ast_{\bPhi} \bcG_{\star}^{-\frac{1}{2}} \|_{2,\infty} \| \bcR_{\triangle} \ast_{\bPhi} \bcG_{\star}^{\frac{1}{2}} \|  \\
\leq & \| \bcL_{\triangle} \ast_{\bPhi} \bcG_{\star}^{\frac{1}{2}} \|_{2,\infty} (\| \bcR_{\triangle} \ast_{\bPhi} \bcG_{\star}^{\frac{1}{2}} \| \| \bcG_{\star}^{-1} \| + \| \bcR_{\star} \ast_{\bPhi} \bcG_{\star}^{-\frac{1}{2}} \|) + \| \bcL_{\star} \ast_{\bPhi} \bcG_{\star}^{-\frac{1}{2}} \|_{2,\infty} \| \bcR_{\triangle} \ast_{\bPhi} \bcG_{\star}^{\frac{1}{2}} \|  \\
\leq & \sqrt{\frac{\mu s_r}{n_1 n_3 \ell}} \rho^t \bar{\sigma}_{s_r} (\bcX_{\star}) (\epsilon \rho^t + 1) + \sqrt{\frac{\mu s_r}{n_1 n_3 \ell}} \epsilon \rho^t \bar{\sigma}_{s_r} (\bcX_{\star})  \\
\leq & \sqrt{\frac{\mu s_r}{n_1 n_3 \ell}} (2 \epsilon + 1) \rho^t \bar{\sigma}_{s_r} (\bcX_{\star}),
\end{align*}
and $\| \bcL_{\triangle} \ast_{\bPhi} \bcR_{\sharp}^H + \bcL_{\star} \ast_{\bPhi} \bcR_{\triangle}^H \|_{\infty} \leq 3 \frac{\mu s_r}{n_3 \sqrt{n_1 n_2 \ell}} \rho^t \bar{\sigma}_{s_r} (\bcX_{\star})$. Thus
\begin{align*}
\mathfrak{W}_3 & \leq \frac{\eta}{1 - \epsilon} \Big( 2 \sqrt{\frac{c \log(n_2 n_3)}{p}} \sqrt{\frac{\mu s_r}{n_1 n_3 \ell}} (2 \epsilon + 1) \rho^t \bar{\sigma}_{s_r} (\bcX_{\star}) + \frac{c \log(n_2 n_3)}{p} 3 \frac{\mu s_r}{n_3 \sqrt{n_1 n_2 \ell}} \rho^t \bar{\sigma}_{s_r} (\bcX_{\star}) \Big).
\end{align*}

\paragraph{Bound of $\mathfrak{W}_4$.}
\begin{align*}
\mathfrak{W}_4 & \leq \eta \| p^{-1} \bcP_{\bOmega} (\bcS_{\triangle}) \|_{2,\infty} \| \bcR_{\sharp} \ast_{\bPhi} (\bcR_{\sharp}^H \ast_{\bPhi} \bcR_{\sharp})^{-1} \ast_{\bPhi} \bcG_{\star}^{\frac{1}{2}} \| \leq \frac{\eta}{p} \frac{\sqrt{\alpha p n_2 n_3}}{1 - \epsilon} \| \bcP_{\bOmega} (\bcS_{\triangle}) \|_{\infty}  \\
& \leq 6 \eta \frac{\sqrt{\alpha \mu s_r}}{1 - \epsilon} \sqrt{\frac{\mu s_r}{p n_1 n_3 \ell}} \rho^t \bar{\sigma}_{s_r} (\bcX_{\star}).
\end{align*}
Putting all the bounds together, we obtain
\begin{align*}
& \| (\bcL_{t+1} \ast_{\bPhi} \bcQ_t - \bcL_{\star}) \ast_{\bPhi} \bcG_{\star}^{\frac{1}{2}} \|_{2,\infty}  \\
\leq & \Big( (1 - \eta) + \eta \frac{\epsilon}{1 - \epsilon} + \frac{\eta}{1 - \epsilon} \big( 2 \sqrt{\frac{c \log(n_2 n_3)}{p}} (2 \epsilon + 1) + 3 \frac{c \log(n_2 n_3)}{p} \sqrt{\frac{\mu s_r}{n_2 n_3}} + 6 \sqrt{\frac{\alpha \mu s_r}{p}} \big) \Big) \sqrt{\frac{\mu s_r}{n_1 n_3 \ell}} \rho^t \bar{\sigma}_{s_r} (\bcX_{\star}).
\end{align*}
We can then have
\begin{align*}
& \| (\bcL_{t+1} \ast_{\bPhi} \bcQ_t - \bcL_{\star}) \ast_{\bPhi} \bcG_{\star}^{-\frac{1}{2}} \|_{2,\infty}  \\
\leq & \Big( (1 - \eta) + \eta \frac{\epsilon}{1 - \epsilon} + \frac{\eta}{1 - \epsilon} \big( 2 \sqrt{\frac{c \log(n_2 n_3)}{p}} (2 \epsilon + 1) + 3 \frac{c \log(n_2 n_3)}{p} \sqrt{\frac{\mu s_r}{n_2 n_3}} + 6 \sqrt{\frac{\alpha \mu s_r}{p}} \big) \Big) \sqrt{\frac{\mu s_r}{n_1 n_3 \ell}} \rho^t.
\end{align*}
Applying the triangle inequality and Lemma~\ref{lemma:Qperturbation}, we have
\begin{align*}
& \| (\bcL_{t+1} \ast_{\bPhi} \bcQ_{t+1} - \bcL_{\star}) \ast_{\bPhi} \bcG_{\star}^{\frac{1}{2}} \|_{2,\infty}  \\
\leq & \| (\bcL_{t+1} \ast_{\bPhi} \bcQ_t - \bcL_{\star}) \ast_{\bPhi} \bcG_{\star}^{\frac{1}{2}} \|_{2,\infty} + \Big( \| (\bcL_{t+1} \ast_{\bPhi} \bcQ_t - \bcL_{\star}) \ast_{\bPhi} \bcG_{\star}^{-\frac{1}{2}} \|_{2,\infty}  \\
&\qquad\qquad + \| \bcL_{\star} \ast_{\bPhi} \bcG_{\star}^{-\frac{1}{2}} \|_{2,\infty} \Big) \| \bcG_{\star}^{\frac{1}{2}} \ast_{\bPhi} \bcQ_t^{-1} \ast_{\bPhi} (\bcQ_{t+1} - \bcQ_t) \ast_{\bPhi} \bcG_{\star}^{\frac{1}{2}} \|  \\
\leq & \Big( (1 - \eta) + \eta \frac{\epsilon}{1 - \epsilon} + \frac{\eta}{1 - \epsilon} \big( 2 \sqrt{\frac{c \log(n_2 n_3)}{p}} (2 \epsilon + 1) + 3 \frac{c \log(n_2 n_3)}{p} \sqrt{\frac{\mu s_r}{n_2 n_3}}  \\
&\qquad\qquad + 6 \sqrt{\frac{\alpha \mu s_r}{p}} \big) \Big) \sqrt{\frac{\mu s_r}{n_1 n_3 \ell}} \rho^t \bar{\sigma}_{s_r} (\bcX_{\star}) + \Big( \Big( (1 - \eta) + \eta \frac{\epsilon}{1 - \epsilon} + \frac{\eta}{1 - \epsilon} \big( 2 \sqrt{\frac{c \log(n_2 n_3)}{p}} (2 \epsilon + 1)  \\
&\qquad\qquad + 3 \frac{c \log(n_2 n_3)}{p} \sqrt{\frac{\mu s_r}{n_2 n_3}} + 6 \sqrt{\frac{\alpha \mu s_r}{p}} \big) \Big) \sqrt{\frac{\mu s_r}{n_1 n_3 \ell}} \rho^t + \sqrt{\frac{\mu s_r}{n_1 n_3 \ell}} \Big) \frac{2 \epsilon}{1 - \epsilon} \rho^{t+1} \bar{\sigma}_{s_r} (\bcX_{\star}) \\
\leq & \Big( (1 - \eta) + \eta \frac{\epsilon}{1 - \epsilon} + \frac{\eta}{1 - \epsilon} \big( 2 \sqrt{\frac{c \log(n_2 n_3)}{p}} (2 \epsilon + 1) + 3 \frac{c \log(n_2 n_3)}{p} \sqrt{\frac{\mu s_r}{n_2 n_3}} + 6 \sqrt{\frac{\alpha \mu s_r}{p}} \big)  \\
&\qquad\qquad + \frac{2 \epsilon}{1 - \epsilon} \Big( (2 - \eta) + \eta \frac{\epsilon}{1 - \epsilon} + \frac{\eta}{1 - \epsilon} \big( 2 \sqrt{\frac{c \log(n_2 n_3)}{p}} (2 \epsilon + 1) + 3 \frac{c \log(n_2 n_3)}{p} \sqrt{\frac{\mu s_r}{n_2 n_3}}  \\
&\qquad\qquad + 6 \sqrt{\frac{\alpha \mu s_r}{p}} \big) \Big) \Big) \sqrt{\frac{\mu s_r}{n_1 n_3 \ell}} \rho^t \bar{\sigma}_{s_r} (\bcX_{\star}).
\end{align*}
Since $p \geq c \frac{\mu^2 s_r^2 \ell \log( (n_1 \vee n_2) n_3 )}{n_3 \sqrt{(n_1 \wedge n_2) n_3}} \geq c \frac{\mu^2 s_r \ell \log( (n_1 \vee n_2) n_3 )}{\sqrt{(n_1 \wedge n_2) n_3}}$ for some sufficiently large constant $c$, we can have $2 \sqrt{\frac{c \log(n_2 n_3)}{p}} (2 \epsilon + 1) + 3 \frac{c \log(n_2 n_3)}{p} \sqrt{\frac{\mu s_r}{n_2 n_3}} \leq 0.1$ with $\epsilon = 0.02$. With $\alpha \mu s_r / p < \alpha \mu s_r^{1.5} \frac{n_1 + n_2 n_3}{p n_3 \sqrt{n_{(2)} n_2}} \leq 10^{-4}$, and $\frac{1}{5} \leq \eta \leq \frac{1}{2}$, we can have the conclusion that $\| (\bcL_{t+1} \ast_{\bPhi} \bcQ_{t+1} - \bcL_{\star}) \ast_{\bPhi} \bcG_{\star}^{\frac{1}{2}} \|_{2,\infty} \leq (1 - 0.3 \eta) \sqrt{\frac{\mu s_r}{n_1 n_3 \ell}} \rho^t \bar{\sigma}_{s_r} (\bcX_{\star})$. We can also obtain a similar bound for $\| (\bcR_{t+1} \ast_{\bPhi} \bcQ_{t+1}^{-H} - \bcR_{\star}) \ast_{\bPhi} \bcG_{\star}^{\frac{1}{2}} \|_{2,\infty}$. Thus,
\begin{align*}
& \sqrt{n_1} \| (\bcL_{t+1} \ast_{\bPhi} \bcQ_{t+1} - \bcL_{\star}) \ast_{\bPhi} \bcG_{\star}^{\frac{1}{2}} \|_{2,\infty} \vee \sqrt{n_2} \| (\bcR_{t+1} \ast_{\bPhi} \bcQ_{t+1}^{-H} - \bcR_{\star}) \ast_{\bPhi} \bcG_{\star}^{\frac{1}{2}} \|_{2,\infty}  \\
& \leq \sqrt{\frac{\mu s_r}{n_3 \ell}} \rho^{t+1} \bar{\sigma}_{s_r} (\bcX_{\star})
\end{align*}
can be achieved by setting $\rho = 1 - 0.3 \eta$. This finishes the proof.

\subsection{Proof of Lemma~\ref{lemma:RTCinitial}}

Invoking Lemma~\ref{lemma:sparity} with $\bcX_{-1} = \bzero$, we have
\begin{align*}
\| \bcP_{\bOmega} (\bcS_{\star}) - \bcS_0 \|_{\infty} \leq 2 \frac{\mu s_r}{n_3 \sqrt{n_1 n_2 \ell}} \bar{\sigma}_1 (\bcX_{\star}) \quad \mathrm{and} \quad \supp(\bcS_0) \subseteq \supp(\bcP_{\bOmega} (\bcS_{\star})),
\end{align*}
which implies $\bcP_{\bOmega} (\bcS_{\star}) - \bcS_0$ is an $\alpha p$-sparse tensor. Let $\bcM \coloneq p^{-1} \bcP_{\bOmega} (\bcY - \bcS_0)$, we have
\begin{align*}
\| \bcX_{\star} - \bcM \| \leq \| \bcM - p^{-1} \bcP_{\bOmega} (\bcX_{\star}) \| + \| p^{-1} \bcP_{\bOmega} (\bcX_{\star}) - \bcX_{\star} \|.
\end{align*}
For the first term, notice that $\bcM - p^{-1} \bcP_{\bOmega} (\bcX_{\star}) = p^{-1} (\bcP_{\bOmega} (\bcS_{\star}) - \bcS_0)$. Applying Lemma~\ref{lemma:normbound}, we have
\begin{align*}
\| \bcM - p^{-1} \bcP_{\bOmega} (\bcX_{\star}) \| \leq \frac{\alpha p \sqrt{\ell}}{2} (n_1 + n_2 n_3) \| \bcP_{\bOmega} (\bcS_{\star}) - \bcS_0 \|_{\infty} \leq \frac{\alpha p \mu s_r \kappa}{n_3 \sqrt{n_1 n_2}} (n_1 + n_2 n_3) \bar{\sigma}_{s_r} (\bcX_{\star}).
\end{align*}
For the second term, we again have
\begin{align*}
\| p^{-1} \bcP_{\bOmega} (\bcX_{\star}) - \bcX_{\star} \| \leq c \Big( \frac{\mu s_r \log( (n_1 \vee n_2) n_3 ) }{p n_3 \sqrt{n_1 n_2}} + \sqrt{\frac{\mu s_r \log( (n_1 \vee n_2) n_3 )}{p (n_1 \wedge n_2) n_3}} \Big) \kappa \bar{\sigma}_{s_r} (\bcX_{\star}).
\end{align*}
Since $\bcX_0 = \bcL_0 \ast_{\bPhi} \bcR_0^H$ is the best approximation of $\bcM$ with tubal rank $r$, we obtain
\begin{align*}
\| \bcX_{\star} - \bcX_0 \| & \leq \| \bcX_{\star} - \bcM \| + \| \bcM - \bcX_0 \| \leq 2 \| \bcX_{\star} - \bcM \|  \\
& \leq 2 \Big( \frac{\alpha p \mu s_r}{n_3 \sqrt{n_1 n_2}} (n_1 + n_2 n_3)  \\
&\qquad + c \Big( \frac{\mu s_r \log( (n_1 \vee n_2) n_3 ) }{p n_3 \sqrt{n_1 n_2}} + \sqrt{\frac{\mu s_r \log( (n_1 \vee n_2) n_3 )}{p (n_1 \wedge n_2) n_3}} \Big) \Big) \kappa \bar{\sigma}_{s_r} (\bcX_{\star}).
\end{align*}
Applying Lemma~\ref{lemma:Procrustes} and using the fact that $\widebar{\bX}_{\star} - \widebar{\bX}_0$ has at most rank-$2 s_r$, we obtain
\begin{align*}
\dist(\bcF_0, \bcF_{\star}) & \leq (\sqrt{2}+1)^{\frac{1}{2}} \frac{1}{\sqrt{\ell}} \| \widebar{\bX}_{\star} - \widebar{\bX}_0 \|_F \leq \sqrt{\frac{2(\sqrt{2}+1) s_r}{\ell}} \| \widebar{\bX}_{\star} - \widebar{\bX}_0 \|  \\
& \leq 5 \sqrt{\frac{s_r}{\ell}} \Big( \frac{\alpha p \mu s_r}{n_3 \sqrt{n_1 n_2}} (n_1 + n_2 n_3)  \\
&\qquad + c \Big( \frac{\mu s_r \log( (n_1 \vee n_2) n_3 ) }{p n_3 \sqrt{n_1 n_2}} + \sqrt{\frac{\mu s_r \log( (n_1 \vee n_2) n_3 )}{p (n_1 \wedge n_2) n_3}} \Big) \Big) \kappa \bar{\sigma}_{s_r} (\bcX_{\star}).
\end{align*}
As long as $p \geq c \mu s_r \kappa^2 \log( (n_1 \vee n_2) n_3 ) / \sqrt{(n_1 \wedge n_2) n_3}$ for some sufficiently large constant $c$, we can have $5 \sqrt{s_r} \kappa c \Big( \frac{\mu s_r \log( (n_1 \vee n_2) n_3 ) }{p n_3 \sqrt{n_1 n_2}} + \sqrt{\frac{\mu s_r \log( (n_1 \vee n_2) n_3 )}{p (n_1 \wedge n_2) n_3}} \Big) \leq 0.01$. Given $\epsilon = 5 c_0 + 0.01$ and $\alpha / p \leq \frac{c_0}{\mu s_r^{1.5} \kappa \frac{n_1 + n_2 n_3}{n_3 \sqrt{n_1 n_2}}}$, our first claim
\begin{align}\label{eqn:initdist2}
\dist(\bcF_0, \bcF_{\star}) \leq \frac{5 c_0 + 0.01}{\sqrt{\ell}} \bar{\sigma}_{s_r} (\bcX_{\star})
\end{align}
is proved.

To prove the second claim, we again define $\bcL_{\sharp} \coloneq \bcL_0 \ast_{\bPhi} \bcQ_0$, $\bcR_{\sharp} \coloneq \bcR_0 \ast_{\bPhi} \bcQ_0^{-H}$, $\bcL_{\triangle} \coloneq \bcL_{\sharp} - \bcL_{\star}$, $\bcR_{\triangle} \coloneq \bcR_{\sharp} - \bcR_{\star}$, $\bcS_{\triangle} \coloneq \bcS_0 - \bcS_{\star}$. Notice that $\bcU_0 \ast_{\bPhi} \bcG_0 \ast_{\bPhi} \bcV_0^H = \text{t-SVD}_r (p^{-1} \bcP_{\bOmega} (\bcY - \bcS_0)) = \text{t-SVD}_r (p^{-1} \bcP_{\bOmega} (\bcX_{\star} - \bcS_{\triangle}))$, thus 
\begin{align*}
\bcL_0 & = \bcU_0 \ast_{\bPhi} \bcG_0^{\frac{1}{2}} = p^{-1} \bcP_{\bOmega} (\bcX_{\star} - \bcS_{\triangle}) \ast_{\bPhi} \bcV_0 \ast_{\bPhi} \bcG_0^{-\frac{1}{2}} = p^{-1} \bcP_{\bOmega} (\bcX_{\star} - \bcS_{\triangle}) \ast_{\bPhi} \bcR_0 \ast_{\bPhi} \bcG_0^{-1}  \\
& = p^{-1} \bcP_{\bOmega} (\bcX_{\star} - \bcS_{\triangle}) \ast_{\bPhi} \bcR_0 \ast_{\bPhi} (\bcR_0^H \ast_{\bPhi} \bcR_0)^{-1}.
\end{align*}
Multiplying $\bcQ_0 \ast_{\bPhi} \bcG_{\star}^{\frac{1}{2}}$ on both sides, we have
\begin{align*}
\bcL_{\sharp} \ast_{\bPhi} \bcG_{\star}^{\frac{1}{2}} & = \bcL_0 \ast_{\bPhi} \bcQ_0 \ast_{\bPhi} \bcG_{\star}^{\frac{1}{2}} = p^{-1} \bcP_{\bOmega} (\bcX_{\star} - \bcS_{\triangle}) \ast_{\bPhi} \bcR_0 \ast_{\bPhi} (\bcR_0^H \ast_{\bPhi} \bcR_0)^{-1} \ast_{\bPhi} \bcQ_0 \ast_{\bPhi} \bcG_{\star}^{\frac{1}{2}}  \\
& = p^{-1} \bcP_{\bOmega} (\bcX_{\star} - \bcS_{\triangle}) \ast_{\bPhi} \bcR_{\sharp} \ast_{\bPhi} (\bcR_{\sharp}^H \ast_{\bPhi} \bcR_{\sharp})^{-1} \ast_{\bPhi} \bcG_{\star}^{\frac{1}{2}}.
\end{align*}
Subtracting $\bcX_{\star} \ast_{\bPhi} \bcR_{\sharp} \ast_{\bPhi} (\bcR_{\sharp}^H \ast_{\bPhi} \bcR_{\sharp})^{-1} \ast_{\bPhi} \bcG_{\star}^{\frac{1}{2}}$ on both sides, we have
\begin{align*}
& \bcL_{\triangle} \ast_{\bPhi} \bcG_{\star}^{\frac{1}{2}} + \bcL_{\star} \ast_{\bPhi} \bcR_{\triangle}^H \ast_{\bPhi} \bcR_{\sharp} \ast_{\bPhi} (\bcR_{\sharp}^H \ast_{\bPhi} \bcR_{\sharp})^{-1} \ast_{\bPhi} \bcG_{\star}^{\frac{1}{2}}  \\
= & (p^{-1} \bcP_{\bOmega} (\bcX_{\star}) - \bcX_{\star}) \ast_{\bPhi} \bcR_{\sharp} \ast_{\bPhi} (\bcR_{\sharp}^H \ast_{\bPhi} \bcR_{\sharp})^{-1} \ast_{\bPhi} \bcG_{\star}^{\frac{1}{2}} - p^{-1} \bcP_{\bOmega} (\bcS_{\triangle}) \ast_{\bPhi} \bcR_{\sharp} \ast_{\bPhi} (\bcR_{\sharp}^H \ast_{\bPhi} \bcR_{\sharp})^{-1} \ast_{\bPhi} \bcG_{\star}^{\frac{1}{2}},
\end{align*}
where we use the fact $\bcL_{\sharp} \ast_{\bPhi} \bcG_{\star}^{\frac{1}{2}} = \bcL_{\sharp} \ast_{\bPhi} \bcR_{\sharp}^H \ast_{\bPhi} \bcR_{\sharp} \ast_{\bPhi} (\bcR_{\sharp}^H \ast_{\bPhi} \bcR_{\sharp})^{-1} \ast_{\bPhi} \bcG_{\star}^{\frac{1}{2}}$. Thus, 
\begin{align*}
\| \bcL_{\triangle} \ast_{\bPhi} \bcG_{\star}^{\frac{1}{2}} \|_{2,\infty} & \leq \| \bcL_{\star} \ast_{\bPhi} \bcR_{\triangle}^H \ast_{\bPhi} \bcR_{\sharp} \ast_{\bPhi} (\bcR_{\sharp}^H \ast_{\bPhi} \bcR_{\sharp})^{-1} \ast_{\bPhi} \bcG_{\star}^{\frac{1}{2}} \|_{2,\infty}  \\
&\qquad + \| (p^{-1} \bcP_{\bOmega} (\bcX_{\star}) - \bcX_{\star}) \ast_{\bPhi} \bcR_{\sharp} \ast_{\bPhi} (\bcR_{\sharp}^H \ast_{\bPhi} \bcR_{\sharp})^{-1} \ast_{\bPhi} \bcG_{\star}^{\frac{1}{2}} \|_{2,\infty}  \\
&\qquad + \| p^{-1} \bcP_{\bOmega} (\bcS_{\triangle}) \ast_{\bPhi} \bcR_{\sharp} \ast_{\bPhi} (\bcR_{\sharp}^H \ast_{\bPhi} \bcR_{\sharp})^{-1} \ast_{\bPhi} \bcG_{\star}^{\frac{1}{2}} \|_{2,\infty}  \\
& \coloneq \mathfrak{G}_1 + \mathfrak{G}_2 + \mathfrak{G}_3.
\end{align*}

\paragraph{Bound of $\mathfrak{G}_1$.} This term has been controlled as in \eqref{eqn:J1bound} by $\mathfrak{G}_1 \leq \sqrt{\frac{\mu s_r}{n_1 n_3 \ell}} \frac{\epsilon}{1 - \epsilon} \bar{\sigma}_{s_r} (\bcX_{\star})$.

\paragraph{Bound of $\mathfrak{G}_2$.} By Lemma~\ref{lemma:2infbound} and Lemma~\ref{lemma:POmegarownorm}, we have
\begin{align*}
\mathfrak{G}_2 & \leq \| (p^{-1} \bcP_{\bOmega} (\bcX_{\star}) - \bcX_{\star}) \|_{2,\infty} \| \bcR_{\sharp} \ast_{\bPhi} (\bcR_{\sharp}^H \ast_{\bPhi} \bcR_{\sharp})^{-1} \ast_{\bPhi} \bcG_{\star}^{\frac{1}{2}} \|  \\
& \leq \frac{1}{1 - \epsilon} \Big( 2 \sqrt{\frac{c \log(n_2 n_3)}{p}} \| \bcX_{\star} \|_{2,\infty} + \frac{c \log(n_2 n_3)}{p} \| \bcX_{\star} \|_{\infty} \Big)  \\
& \leq \frac{\kappa}{1 - \epsilon} \Big( 2 \sqrt{\frac{c \log(n_2 n_3)}{p}} \sqrt{\frac{\mu s_r}{n_1 n_3 \ell}} + \frac{c \log(n_2 n_3)}{p} \frac{\mu s_r}{\sqrt{n_1 n_2 \ell} n_3} \Big) \bar{\sigma}_{s_r} (\bcX_{\star})
\end{align*}
holds with high probability.

\paragraph{Bound of $\mathfrak{G}_3$.} By Lemmas~\ref{lemma:normbound}, ~\ref{lemma:2infbound} and~\ref{lemma:2infprodbound}, we have
\begin{align*}
\mathfrak{G}_3 & \leq \sqrt{n_2 \ell} \| p^{-1} \bcP_{\bOmega} (\bcS_{\triangle}) \|_{2,\infty} \| \bcR_{\sharp} \ast_{\bPhi} (\bcR_{\sharp}^H \ast_{\bPhi} \bcR_{\sharp})^{-1} \ast_{\bPhi} \bcG_{\star}^{\frac{1}{2}} \|_{2,\infty}  \\
& \leq p^{-1} \sqrt{n_2 \ell} \| \bcP_{\bOmega} (\bcS_{\triangle}) \|_{2,\infty} \| \bcR_{\sharp} \ast_{\bPhi} \bcG_{\star}^{-\frac{1}{2}} \|_{2,\infty} \| \bcG_{\star}^{\frac{1}{2}} \ast_{\bPhi} (\bcR_{\sharp}^H \ast_{\bPhi} \bcR_{\sharp})^{-1} \ast_{\bPhi} \bcG_{\star}^{\frac{1}{2}} \|  \\
& \leq p^{-1} \sqrt{n_2 \ell} \sqrt{\alpha p n_2 n_3} \| \bcP_{\bOmega} (\bcS_{\triangle}) \|_{\infty} \| \bcR_{\sharp} \ast_{\bPhi} \bcG_{\star}^{-\frac{1}{2}} \|_{2,\infty} \| \bcR_{\sharp} \ast_{\bPhi} (\bcR_{\sharp}^H \ast_{\bPhi} \bcR_{\sharp})^{-1} \ast_{\bPhi} \bcG_{\star}^{\frac{1}{2}} \|^2  \\
& \leq 2 n_2 \sqrt{\alpha n_3 \ell} \frac{\mu s_r}{n_3 \sqrt{p n_1 n_2 \ell}} \bar{\sigma}_1 (\bcX_{\star}) \| \bcR_{\sharp} \ast_{\bPhi} \bcG_{\star}^{-\frac{1}{2}} \|_{2,\infty} \| \bcR_{\sharp} \ast_{\bPhi} (\bcR_{\sharp}^H \ast_{\bPhi} \bcR_{\sharp})^{-1} \ast_{\bPhi} \bcG_{\star}^{\frac{1}{2}} \|^2  \\
& \leq 2 \frac{\mu s_r \kappa}{(1 - \epsilon)^2} \sqrt{\frac{\alpha n_2}{p n_1 n_3}} \bar{\sigma}_{s_r} (\bcX_{\star}) \| \bcR_{\sharp} \ast_{\bPhi} \bcG_{\star}^{-\frac{1}{2}} \|_{2,\infty}  \\
& \leq 2 \frac{\mu s_r \kappa}{(1 - \epsilon)^2} \sqrt{\frac{\alpha n_2}{p n_1 n_3}} \Big( \sqrt{\frac{\mu s_r}{n_2 n_3 \ell}} + \| \bcR_{\triangle} \ast_{\bPhi} \bcG_{\star}^{-\frac{1}{2}} \|_{2,\infty} \Big) \bar{\sigma}_{s_r} (\bcX_{\star}).
\end{align*}
Note that $\| \bcR_{\triangle} \ast_{\bPhi} \bcG_{\star}^{-\frac{1}{2}} \|_{2,\infty} \leq \frac{\| \bcR_{\triangle} \ast_{\bPhi} \bcG_{\star}^{\frac{1}{2}} \|_{2,\infty}}{\bar{\sigma}_{s_r} (\bcX_{\star})}$. Thus,
\begin{align*}
\sqrt{n_1} \| \bcL_{\triangle} \ast_{\bPhi} \bcG_{\star}^{\frac{1}{2}} \|_{2,\infty} & \leq \sqrt{\frac{\mu s_r}{n_3 \ell}} \Big( \frac{\epsilon}{1 - \epsilon} + \frac{2 \kappa}{1 - \epsilon} \sqrt{\frac{c \log(n_2 n_3)}{p}} + \frac{c \kappa \log(n_2 n_3)}{p (1 - \epsilon)} \sqrt{\frac{\mu s_r}{n_2 n_3}}  \\
&\qquad\qquad + 2 \frac{\mu s_r \kappa}{(1 - \epsilon)^2} \sqrt{\frac{\alpha}{p n_3}} \Big) \bar{\sigma}_{s_r} (\bcX_{\star}) + 2 \frac{\mu s_r \kappa}{(1 - \epsilon)^2} \sqrt{\frac{\alpha}{p n_3}} \sqrt{n_2} \| \bcR_{\triangle} \ast_{\bPhi} \bcG_{\star}^{\frac{1}{2}} \|_{2,\infty},
\end{align*}
and similarly one can see
\begin{align*}
\sqrt{n_2} \| \bcR_{\triangle} \ast_{\bPhi} \bcG_{\star}^{\frac{1}{2}} \|_{2,\infty} & \leq \sqrt{\frac{\mu s_r}{n_3 \ell}} \Big( \frac{\epsilon}{1 - \epsilon} + \frac{2 \kappa}{1 - \epsilon} \sqrt{\frac{c \log(n_1 n_3)}{p}} + \frac{c \kappa \log(n_1 n_3)}{p (1 - \epsilon)} \sqrt{\frac{\mu s_r}{n_1 n_3}}  \\
&\qquad\qquad + 2 \frac{\mu s_r \kappa}{(1 - \epsilon)^2} \sqrt{\frac{\alpha}{p n_3}} \Big) \bar{\sigma}_{s_r} (\bcX_{\star}) + 2 \frac{\mu s_r \kappa}{(1 - \epsilon)^2} \sqrt{\frac{\alpha}{p n_3}} \sqrt{n_1} \| \bcL_{\triangle} \ast_{\bPhi} \bcG_{\star}^{\frac{1}{2}} \|_{2,\infty}.
\end{align*}
Since $p \geq c \mu s_r \kappa^2 \log( (n_1 \vee n_2) n_3 ) / \sqrt{(n_1 \wedge n_2) n_3}$ for some sufficiently large constant $c$, we can have $2 \kappa \sqrt{\frac{c \log( (n_1 \vee n_2) n_3 )}{p}} + \frac{c \kappa \log( (n_1 \vee n_2) n_3 )}{p} \sqrt{\frac{\mu s_r}{(n_1 \wedge n_2) n_3}} \leq 0.1$. Therefore,
\begin{align*}
& \sqrt{n_1} \| \bcL_{\triangle} \ast_{\bPhi} \bcG_{\star}^{\frac{1}{2}} \|_{2,\infty} \vee \sqrt{n_2} \| \bcR_{\triangle} \ast_{\bPhi} \bcG_{\star}^{\frac{1}{2}} \|_{2,\infty}  \\
\leq & \frac{\frac{\epsilon}{1 - \epsilon} + \frac{2 \kappa}{1 - \epsilon} \sqrt{\frac{c \log( (n_1 \vee n_2) n_3 )}{p}} + \frac{c \kappa \log( (n_1 \vee n_2) n_3 )}{p (1 - \epsilon)} \sqrt{\frac{\mu s_r}{(n_1 \wedge n_2) n_3}} + 2 \frac{\mu s_r \kappa}{(1 - \epsilon)^2} \sqrt{\frac{\alpha}{p n_3}}}{1 - 2 \frac{\mu s_r \kappa}{(1 - \epsilon)^2} \sqrt{\frac{\alpha}{p n_3}}} \sqrt{\frac{\mu s_r}{n_3 \ell}} \bar{\sigma}_{s_r} (\bcX_{\star})  \\
\leq & \frac{\frac{\epsilon}{1 - \epsilon} + \frac{0.1}{1 - \epsilon} + 2 \frac{c_0}{(1 - \epsilon)^2}}{1 - 2 \frac{c_0}{(1 - \epsilon)^2}} \sqrt{\frac{\mu s_r}{n_3 \ell}} \bar{\sigma}_{s_r} (\bcX_{\star})  \\
= & \frac{\frac{5 c_0}{1 - 5 c_0} + \frac{0.1}{1 - 5 c_0} + 2 \frac{c_0}{(1 - 5 c_0)^2}}{1 - 2 \frac{c_0}{(1 - 5 c_0)^2}} \sqrt{\frac{\mu s_r}{n_3 \ell}} \bar{\sigma}_{s_r} (\bcX_{\star})  \\
\leq & \sqrt{\frac{\mu s_r}{n_3 \ell}} \bar{\sigma}_{s_r} (\bcX_{\star})
\end{align*}
as long as $c_0 \leq 0.05$. This finishes the proof.

\section{Proof for Tensor Regression}
\label{sec:Regressionproof}

We begin with a useful lemma regarding TRIP, which can be proved in the same way as in Lemma E.7 of \citet{HanWZ.AOS2022}.
\begin{lemma}\label{lemma:TRIPdistance}
Suppose that $\cA(\cdot)$ obeys the $2r$-TRIP with a constant $\delta_{2r}$. Then for any $\bcX_1, \bcX_2 \in \R^{n_1 \times n_2 \times n_3}$ of tubal rank at most $r$, we have
\begin{align*}
| \langle \cA(\bcX_1), \cA(\bcX_2) \rangle - \langle \bcX_1, \bcX_2 \rangle | \leq \delta_{2r} \| \bcX_1 \|_F \| \bcX_2 \|_F,
\end{align*}
or equivalently,
\begin{align}\label{eqn:2rTRIP}
| \langle (\cA^{\ast}\cA - \bcI_{n_1})(\bcX_1), \bcX_2 \rangle | \leq \delta_{2r} \| \bcX_1 \|_F \| \bcX_2 \|_F.
\end{align}
\end{lemma}
Then following from the variational representation of the Frobenius norm, for any tensor $\bcR \in \R^{n_2 \times r \times n_3}$, we have
\begin{align}\label{eqn:coroTRIP}
\| (\cA^{\ast}\cA - \bcI_{n_1})(\bcX_1) \ast_{\bPhi} \bcR \|_F & = \max_{\widetilde{\bcL} : \| \widetilde{\bcL} \|_F \leq 1} \langle (\cA^{\ast}\cA - \bcI_{n_1})(\bcX_1) \ast_{\bPhi} \bcR, \widetilde{\bcL} \rangle  \nonumber \\
& \leq \max_{\widetilde{\bcL} : \| \widetilde{\bcL} \|_F \leq 1} \delta_{2r} \| \bcX_1 \|_F \| \widetilde{\bcL} \ast_{\bPhi} \bcR^H \|_F  \nonumber \\
& \leq \delta_{2r} \| \bcX_1 \|_F \| \bcR \|,
\end{align}
where the last line follows from the relation $\| \bcA \ast_{\bPhi} \bcB \|_F \leq \| \bcA \|_F \| \bcB \|$.

\subsection{Proof of Lemma~\ref{lemma:TRcontraction}}
 
Since $\dist(\bcF_t, \bcF_{\star}) \leq \frac{0.1}{\sqrt{\ell}} \bar{\sigma}_{s_r} (\bcX_{\star})$, Lemma~\ref{lemma:Qexistence} ensures that $\bcQ_t$, the optimal alignment tensor between $\bcF_t$ and $\bcF_{\star}$ exists, and $\epsilon \coloneq 0.1$. Again, repeating the derivation for Lemma~\ref{lemma:Qexistence}, we have
\begin{align}\label{eqn:TR-cond}
\| \bcL_{\triangle} \ast_{\bPhi} \bcG_{\star}^{-\frac{1}{2}} \| \vee \| \bcR_{\triangle} \ast_{\bPhi} \bcG_{\star}^{-\frac{1}{2}} \| & \leq \epsilon.
\end{align}
The conclusion $\| \bcL_t \ast_{\bPhi} \bcR_t^H - \bcX_{\star} \|_F \leq 1.5 \dist(\bcF_t, \bcF_{\star})$ is a simple consequence of Lemma~\ref{lemma:tensor2factor}; see \eqref{eqn:disttensor} for details. In the following, we focus on proving the distance contraction.

First of all, we have by the definition of $\dist(\bcF_{t+1},\bcF_{\star})$ that
\begin{align*}
\dist^2(\bcF_{t+1}, \bcF_{\star}) \leq \| (\bcL_{t+1} \ast_{\bPhi} \bcQ_t - \bcL_{\star}) \ast_{\bPhi} \bcG_{\star}^{\frac{1}{2}} \|_F^2 + \| (\bcR_{t+1} \ast_{\bPhi} \bcQ_t^{-H} - \bcR_{\star}) \ast_{\bPhi} \bcG_{\star}^{\frac{1}{2}} \|_F^2.
\end{align*}
According to the update rule \eqref{eqn:TRupdate} and the decomposition $\bcL_{\sharp} \ast_{\bPhi} \bcR_{\sharp}^H - \bcX_{\star} = \bcL_{\triangle} \ast_{\bPhi} \bcR_{\sharp}^H + \bcL_{\star} \ast_{\bPhi} \bcR_{\triangle}^H$, we have
\begin{align}\label{eqn:TRexpand}
& (\bcL_{t+1} \ast_{\bPhi} \bcQ_t - \bcL_{\star}) \ast_{\bPhi} \bcG_{\star}^{\frac{1}{2}}  \nonumber \\
= & \Big( \bcL_{\sharp} - \eta \cA^{\ast} \cA(\bcL_{\sharp} \ast_{\bPhi} \bcR_{\sharp}^H - \bcX_{\star}) \ast_{\bPhi} \bcR_{\sharp} \ast_{\bPhi} (\bcR_{\sharp}^H \ast_{\bPhi} \bcR_{\sharp})^{-1} - \bcL_{\star} \Big) \ast_{\bPhi} \bcG_{\star}^{\frac{1}{2}}  \nonumber \\
= & \Big( \bcL_{\triangle} - \eta (\bcL_{\sharp} \ast_{\bPhi} \bcR_{\sharp}^H - \bcX_{\star}) \ast_{\bPhi} \bcR_{\sharp} \ast_{\bPhi} (\bcR_{\sharp}^H \ast_{\bPhi} \bcR_{\sharp})^{-1}  \nonumber \\
&\qquad\qquad - \eta (\cA^{\ast} \cA - \bcI_{n_1}) (\bcL_{\sharp} \ast_{\bPhi} \bcR_{\sharp}^H - \bcX_{\star}) \ast_{\bPhi} \bcR_{\sharp} \ast_{\bPhi} (\bcR_{\sharp}^H \ast_{\bPhi} \bcR_{\sharp})^{-1} \Big) \ast_{\bPhi} \bcG_{\star}^{\frac{1}{2}}  \nonumber \\
= & (1 - \eta) \bcL_{\triangle} \ast_{\bPhi} \bcG_{\star}^{\frac{1}{2}} - \eta \bcL_{\star} \ast_{\bPhi} \bcR_{\triangle}^H \ast_{\bPhi} \bcR_{\sharp} \ast_{\bPhi} (\bcR_{\sharp}^H \ast_{\bPhi} \bcR_{\sharp})^{-1} \ast_{\bPhi} \bcG_{\star}^{\frac{1}{2}}  \nonumber \\
&\qquad\qquad - \eta (\cA^{\ast} \cA - \bcI_{n_1}) (\bcL_{\sharp} \ast_{\bPhi} \bcR_{\sharp}^H - \bcX_{\star}) \ast_{\bPhi} \bcR_{\sharp} \ast_{\bPhi} (\bcR_{\sharp}^H \ast_{\bPhi} \bcR_{\sharp})^{-1} \ast_{\bPhi} \bcG_{\star}^{\frac{1}{2}}.
\end{align}
Taking the squared Frobenius norm of both sides of \eqref{eqn:TRexpand} to obtain
\begin{align*}
& \| (\bcL_{t+1} \ast_{\bPhi} \bcQ_t - \bcL_{\star}) \ast_{\bPhi} \bcG_{\star}^{\frac{1}{2}} \|_F^2 \\
= & \| (1 - \eta) \bcL_{\triangle} \ast_{\bPhi} \bcG_{\star}^{\frac{1}{2}} - \eta \bcL_{\star} \ast_{\bPhi} \bcR_{\triangle}^H \ast_{\bPhi} \bcR_{\sharp} \ast_{\bPhi} (\bcR_{\sharp}^H \ast_{\bPhi} \bcR_{\sharp})^{-1} \ast_{\bPhi} \bcG_{\star}^{\frac{1}{2}} \|_F^2  \\
\qquad & - 2 \eta (1 - \eta) \langle \bcL_{\triangle} \ast_{\bPhi} \bcG_{\star}^{\frac{1}{2}} , (\cA^{\ast} \cA - \bcI_{n_1}) (\bcL_{\sharp} \ast_{\bPhi} \bcR_{\sharp}^H - \bcX_{\star}) \ast_{\bPhi} \bcR_{\sharp} \ast_{\bPhi} (\bcR_{\sharp}^H \ast_{\bPhi} \bcR_{\sharp})^{-1} \ast_{\bPhi} \bcG_{\star}^{\frac{1}{2}} \rangle  \\
\qquad & + 2 \eta^2 \langle \bcL_{\star} \ast_{\bPhi} \bcR_{\triangle}^H \ast_{\bPhi} \bcR_{\sharp} \ast_{\bPhi} (\bcR_{\sharp}^H \ast_{\bPhi} \bcR_{\sharp})^{-1} \ast_{\bPhi} \bcG_{\star}^{\frac{1}{2}} ,  \\
&\qquad\qquad (\cA^{\ast} \cA - \bcI_{n_1}) (\bcL_{\sharp} \ast_{\bPhi} \bcR_{\sharp}^H - \bcX_{\star}) \ast_{\bPhi} \bcR_{\sharp} \ast_{\bPhi} (\bcR_{\sharp}^H \ast_{\bPhi} \bcR_{\sharp})^{-1} \ast_{\bPhi} \bcG_{\star}^{\frac{1}{2}} \rangle  \\
\qquad & + \eta^2 \| (\cA^{\ast} \cA - \bcI_{n_1}) (\bcL_{\sharp} \ast_{\bPhi} \bcR_{\sharp}^H - \bcX_{\star}) \ast_{\bPhi} \bcR_{\sharp} \ast_{\bPhi} (\bcR_{\sharp}^H \ast_{\bPhi} \bcR_{\sharp})^{-1} \ast_{\bPhi} \bcG_{\star}^{\frac{1}{2}} \|_F^2  \\
\coloneq & \mathfrak{S}_1 + \mathfrak{S}_2 + \mathfrak{S}_3 + \mathfrak{S}_4.
\end{align*}

\paragraph{Bound of $\mathfrak{S}_1$.} The first term $\mathfrak{S}_1$ has already been controlled in \eqref{eqn:TF-Lt-bound} as follows.
\begin{align*}
\mathfrak{S}_1 \leq \Big( (1 - \eta)^2 + \frac{2 \epsilon}{1 - \epsilon} \eta (1 - \eta) \Big) \| \bcL_{\triangle} \ast_{\bPhi} \bcG_{\star}^{\frac{1}{2}} \|_F^2 + \frac{2 \epsilon + \epsilon^2}{(1 - \epsilon)^2} \eta^2 \| \bcR_{\triangle} \ast_{\bPhi} \bcG_{\star}^{\frac{1}{2}} \|_F^2.
\end{align*}

\paragraph{Bound of $\mathfrak{S}_2$.} Using the decomposition $\bcL_{\sharp} \ast_{\bPhi} \bcR_{\sharp}^H - \bcX_{\star} = \bcL_{\triangle} \ast_{\bPhi} \bcR_{\star}^H + \bcL_{\star} \ast_{\bPhi} \bcR_{\triangle}^H + \bcL_{\triangle} \ast_{\bPhi} \bcR_{\triangle}^H$ and applying the triangle inequality to obtain
\begin{align*}
|\mathfrak{S}_2| & = 2 \eta (1 - \eta) \Big| \langle \bcL_{\triangle} \ast_{\bPhi} \bcG_{\star}^{\frac{1}{2}} , (\cA^{\ast} \cA - \bcI_{n_1}) (\bcL_{\sharp} \ast_{\bPhi} \bcR_{\sharp}^H - \bcX_{\star}) \ast_{\bPhi} \bcR_{\sharp} \ast_{\bPhi} (\bcR_{\sharp}^H \ast_{\bPhi} \bcR_{\sharp})^{-1} \ast_{\bPhi} \bcG_{\star}^{\frac{1}{2}} \rangle \Big|  \\
& \leq 2 \eta (1 - \eta) \Big( \Big| \langle \bcL_{\triangle} \ast_{\bPhi} \bcG_{\star}^{\frac{1}{2}} , (\cA^{\ast} \cA - \bcI_{n_1}) (\bcL_{\triangle} \ast_{\bPhi} \bcR_{\star}^H) \ast_{\bPhi} \bcR_{\sharp} \ast_{\bPhi} (\bcR_{\sharp}^H \ast_{\bPhi} \bcR_{\sharp})^{-1} \ast_{\bPhi} \bcG_{\star}^{\frac{1}{2}} \rangle \Big|  \\
&\quad + \Big| \langle \bcL_{\triangle} \ast_{\bPhi} \bcG_{\star}^{\frac{1}{2}} , (\cA^{\ast} \cA - \bcI_{n_1}) (\bcL_{\star} \ast_{\bPhi} \bcR_{\triangle}^H) \ast_{\bPhi} \bcR_{\sharp} \ast_{\bPhi} (\bcR_{\sharp}^H \ast_{\bPhi} \bcR_{\sharp})^{-1} \ast_{\bPhi} \bcG_{\star}^{\frac{1}{2}} \rangle \Big|  \\
&\quad + \Big| \langle \bcL_{\triangle} \ast_{\bPhi} \bcG_{\star}^{\frac{1}{2}} , (\cA^{\ast} \cA - \bcI_{n_1}) (\bcL_{\triangle} \ast_{\bPhi} \bcR_{\triangle}^H) \ast_{\bPhi} \bcR_{\sharp} \ast_{\bPhi} (\bcR_{\sharp}^H \ast_{\bPhi} \bcR_{\sharp})^{-1} \ast_{\bPhi} \bcG_{\star}^{\frac{1}{2}} \rangle \Big| \Big).
\end{align*}
Applying Lemma~\ref{lemma:TRIPdistance} to further obtain 
\begin{align*}
|\mathfrak{S}_2| & \leq 2 \eta (1 - \eta) \delta_{2r} \Big( \| \bcL_{\triangle} \ast_{\bPhi} \bcR_{\star}^H \|_F + \| \bcL_{\star} \ast_{\bPhi} \bcR_{\triangle}^H \|_F + \| \bcL_{\triangle} \ast_{\bPhi} \bcR_{\triangle}^H \|_F \Big)  \\
&\qquad\qquad \| \bcR_{\sharp} \ast_{\bPhi} (\bcR_{\sharp}^H \ast_{\bPhi} \bcR_{\sharp})^{-1} \ast_{\bPhi} \bcG_{\star} \ast_{\bPhi} \bcL_{\triangle}^H \|_F  \\
& \leq 2 \eta (1 - \eta) \delta_{2r} \Big( \| \bcL_{\triangle} \ast_{\bPhi} \bcR_{\star}^H \|_F + \| \bcL_{\star} \ast_{\bPhi} \bcR_{\triangle}^H \|_F + \| \bcL_{\triangle} \ast_{\bPhi} \bcR_{\triangle}^H \|_F \Big)  \\
&\qquad\qquad \| \bcR_{\sharp} \ast_{\bPhi} (\bcR_{\sharp}^H \ast_{\bPhi} \bcR_{\sharp})^{-1} \ast_{\bPhi} \bcG_{\star}^{\frac{1}{2}} \| \| \bcL_{\triangle} \ast_{\bPhi} \bcG_{\star}^{\frac{1}{2}} \|_F.
\end{align*}
Taking the condition \eqref{eqn:TR-cond} and Lemma~\ref{lemma:Weyl} and Lemma~\ref{lemma:tensor2factor} together to obtain 
\begin{align*}
\| \bcR_{\sharp} \ast_{\bPhi} (\bcR_{\sharp}^H \ast_{\bPhi} \bcR_{\sharp})^{-1} \ast_{\bPhi} \bcG_{\star}^{\frac{1}{2}} \| & \leq \frac{1}{1 - \epsilon};  \\
\| \bcL_{\triangle} \ast_{\bPhi} \bcR_{\star}^H \|_F + \| \bcL_{\star} \ast_{\bPhi} \bcR_{\triangle}^H \|_F + \| \bcL_{\triangle} \ast_{\bPhi} \bcR_{\triangle}^H \|_F & \leq (1 + \frac{\epsilon}{2}) \Big( \| \bcL_{\triangle} \ast_{\bPhi} \bcG_{\star}^{\frac{1}{2}} \|_F + \| \bcR_{\triangle} \ast_{\bPhi} \bcG_{\star}^{\frac{1}{2}} \|_F \Big).
\end{align*}
Hence, we have
\begin{align*}
|\mathfrak{S}_2| & \leq \eta (1 - \eta) \frac{\delta_{2r}(2 + \epsilon)}{1 - \epsilon} \Big( \| \bcL_{\triangle} \ast_{\bPhi} \bcG_{\star}^{\frac{1}{2}} \|_F + \| \bcR_{\triangle} \ast_{\bPhi} \bcG_{\star}^{\frac{1}{2}} \|_F \Big) \| \bcL_{\triangle} \ast_{\bPhi} \bcG_{\star}^{\frac{1}{2}} \|_F  \\
& = \eta (1 - \eta) \frac{\delta_{2r}(2 + \epsilon)}{1 - \epsilon} \Big( \| \bcL_{\triangle} \ast_{\bPhi} \bcG_{\star}^{\frac{1}{2}} \|_F^2 + \| \bcL_{\triangle} \ast_{\bPhi} \bcG_{\star}^{\frac{1}{2}} \|_F \| \bcR_{\triangle} \ast_{\bPhi} \bcG_{\star}^{\frac{1}{2}} \|_F \Big)  \\
& \leq \eta (1 - \eta) \frac{\delta_{2r}(2 + \epsilon)}{1 - \epsilon} \Big( \frac{3}{2} \| \bcL_{\triangle} \ast_{\bPhi} \bcG_{\star}^{\frac{1}{2}} \|_F^2 + \frac{1}{2} \| \bcR_{\triangle} \ast_{\bPhi} \bcG_{\star}^{\frac{1}{2}} \|_F \Big),
\end{align*}
where we use the elementary inequality $2ab \leq a^2 + b^2$ in the last line.

\paragraph{Bound of $\mathfrak{S}_3$.} The third term $\mathfrak{S}_3$ can be similarly bounded as 
\begin{align*}
|\mathfrak{S}_3| & \leq 2 \eta^2 \delta_{2r} \Big( \| \bcL_{\triangle} \ast_{\bPhi} \bcR_{\star}^H \|_F + \| \bcL_{\star} \ast_{\bPhi} \bcR_{\triangle}^H \|_F + \| \bcL_{\triangle} \ast_{\bPhi} \bcR_{\triangle}^H \|_F \Big)  \\
&\qquad\qquad \| \bcR_{\sharp} \ast_{\bPhi} (\bcR_{\sharp}^H \ast_{\bPhi} \bcR_{\sharp})^{-1} \ast_{\bPhi} \bcG_{\star} \ast_{\bPhi} (\bcR_{\sharp}^H \ast_{\bPhi} \bcR_{\sharp})^{-1} \ast_{\bPhi} \bcR_{\sharp}^H \ast_{\bPhi} \bcR_{\triangle} \ast_{\bPhi} \bcL_{\star}^H \|_F  \\
& \leq 2 \eta^2 \delta_{2r} \Big( \| \bcL_{\triangle} \ast_{\bPhi} \bcR_{\star}^H \|_F + \| \bcL_{\star} \ast_{\bPhi} \bcR_{\triangle}^H \|_F + \| \bcL_{\triangle} \ast_{\bPhi} \bcR_{\triangle}^H \|_F \Big)  \\
&\qquad\qquad \| \bcR_{\sharp} \ast_{\bPhi} (\bcR_{\sharp}^H \ast_{\bPhi} \bcR_{\sharp})^{-1} \ast_{\bPhi} \bcG_{\star}^{\frac{1}{2}} \|^2 \| \bcR_{\triangle} \ast_{\bPhi} \bcL_{\star}^H \|_F  \\
& \leq \eta^2 \frac{\delta_{2r}(2 + \epsilon)}{(1 - \epsilon)^2} \Big( \| \bcL_{\triangle} \ast_{\bPhi} \bcG_{\star}^{\frac{1}{2}} \|_F + \| \bcR_{\triangle} \ast_{\bPhi} \bcG_{\star}^{\frac{1}{2}} \|_F \Big) \| \bcR_{\triangle} \ast_{\bPhi} \bcG_{\star}^{\frac{1}{2}} \|_F  \\
& \leq \eta^2 \frac{\delta_{2r}(2 + \epsilon)}{(1 - \epsilon)^2} \Big( \frac{1}{2} \| \bcL_{\triangle} \ast_{\bPhi} \bcG_{\star}^{\frac{1}{2}} \|_F^2 + \frac{3}{2} \| \bcR_{\triangle} \ast_{\bPhi} \bcG_{\star}^{\frac{1}{2}} \|_F \Big).
\end{align*}

\paragraph{Bound of $\mathfrak{S}_4$.} For the last term $\mathfrak{S}_4$, we have 
\begin{align*}
\sqrt{\mathfrak{S}_4} & = \eta \| (\cA^{\ast} \cA - \bcI_{n_1}) (\bcL_{\sharp} \ast_{\bPhi} \bcR_{\sharp}^H - \bcX_{\star}) \ast_{\bPhi} \bcR_{\sharp} \ast_{\bPhi} (\bcR_{\sharp}^H \ast_{\bPhi} \bcR_{\sharp})^{-1} \ast_{\bPhi} \bcG_{\star}^{\frac{1}{2}} \|_F  \\
& \leq \eta \Big( \| (\cA^{\ast} \cA - \bcI_{n_1}) (\bcL_{\triangle} \ast_{\bPhi} \bcR_{\star}^H) \ast_{\bPhi} \bcR_{\sharp} \ast_{\bPhi} (\bcR_{\sharp}^H \ast_{\bPhi} \bcR_{\sharp})^{-1} \ast_{\bPhi} \bcG_{\star}^{\frac{1}{2}} \|_F  \\
&\qquad + \| (\cA^{\ast} \cA - \bcI_{n_1}) (\bcL_{\star} \ast_{\bPhi} \bcR_{\triangle}^H) \ast_{\bPhi} \bcR_{\sharp} \ast_{\bPhi} (\bcR_{\sharp}^H \ast_{\bPhi} \bcR_{\sharp})^{-1} \ast_{\bPhi} \bcG_{\star}^{\frac{1}{2}} \|_F  \\
&\qquad + \| (\cA^{\ast} \cA - \bcI_{n_1}) (\bcL_{\triangle} \ast_{\bPhi} \bcR_{\triangle}^H) \ast_{\bPhi} \bcR_{\sharp} \ast_{\bPhi} (\bcR_{\sharp}^H \ast_{\bPhi} \bcR_{\sharp})^{-1} \ast_{\bPhi} \bcG_{\star}^{\frac{1}{2}} \|_F \Big).
\end{align*}
Following \eqref{eqn:coroTRIP}, we obtain 
\begin{align*}
\sqrt{\mathfrak{S}_4} & \leq \eta \delta_{2r} \Big( \| \bcL_{\triangle} \ast_{\bPhi} \bcR_{\star}^H \|_F + \| \bcL_{\star} \ast_{\bPhi} \bcR_{\triangle}^H \|_F + \| \bcL_{\triangle} \ast_{\bPhi} \bcR_{\triangle}^H \|_F \Big) \| \bcR_{\sharp} \ast_{\bPhi} (\bcR_{\sharp}^H \ast_{\bPhi} \bcR_{\sharp})^{-1} \ast_{\bPhi} \bcG_{\star}^{\frac{1}{2}} \|  \\
& \leq \eta \frac{\delta_{2r} (2 + \epsilon)}{2 (1 - \epsilon)} \Big( \| \bcL_{\triangle} \ast_{\bPhi} \bcG_{\star}^{\frac{1}{2}} \|_F + \| \bcR_{\triangle} \ast_{\bPhi} \bcG_{\star}^{\frac{1}{2}} \|_F \Big).
\end{align*}
We can then take the squares of both sides and use $(a+b)^2 \leq 2 a^2 + 2 b^2$ to reach
\begin{align*}
\mathfrak{S}_4 \leq \eta^2 \frac{\delta_{2r}^2 (2 + \epsilon)^2}{2 (1 - \epsilon)^2} \Big( \| \bcL_{\triangle} \ast_{\bPhi} \bcG_{\star}^{\frac{1}{2}} \|_F^2 + \| \bcR_{\triangle} \ast_{\bPhi} \bcG_{\star}^{\frac{1}{2}} \|_F^2 \Big).
\end{align*}
Taking the bounds for $\mathfrak{S}_1, \mathfrak{S}_2, \mathfrak{S}_3, \mathfrak{S}_4$ collectively yields
\begin{align*}
\| (\bcL_{t+1} \ast_{\bPhi} \bcQ_t - \bcL_{\star}) \ast_{\bPhi} \bcG_{\star}^{\frac{1}{2}} \|_F^2 & \leq \Big( (1 - \eta)^2 + \frac{2 \epsilon}{1 - \epsilon} \eta (1 - \eta) \Big) \| \bcL_{\triangle} \ast_{\bPhi} \bcG_{\star}^{\frac{1}{2}} \|_F^2 + \frac{2 \epsilon + \epsilon^2}{(1 - \epsilon)^2} \eta^2 \| \bcR_{\triangle} \ast_{\bPhi} \bcG_{\star}^{\frac{1}{2}} \|_F^2  \\
&\quad + \eta (1 - \eta) \frac{\delta_{2r}(2 + \epsilon)}{1 - \epsilon} \Big( \frac{3}{2} \| \bcL_{\triangle} \ast_{\bPhi} \bcG_{\star}^{\frac{1}{2}} \|_F^2 + \frac{1}{2} \| \bcR_{\triangle} \ast_{\bPhi} \bcG_{\star}^{\frac{1}{2}} \|_F \Big)  \\
&\quad + \eta^2 \frac{\delta_{2r}(2 + \epsilon)}{(1 - \epsilon)^2} \Big( \frac{1}{2} \| \bcL_{\triangle} \ast_{\bPhi} \bcG_{\star}^{\frac{1}{2}} \|_F^2 + \frac{3}{2} \| \bcR_{\triangle} \ast_{\bPhi} \bcG_{\star}^{\frac{1}{2}} \|_F \Big)  \\
&\quad + \eta^2 \frac{\delta_{2r}^2 (2 + \epsilon)^2}{2 (1 - \epsilon)^2} \Big( \| \bcL_{\triangle} \ast_{\bPhi} \bcG_{\star}^{\frac{1}{2}} \|_F^2 + \| \bcR_{\triangle} \ast_{\bPhi} \bcG_{\star}^{\frac{1}{2}} \|_F^2 \Big).
\end{align*}
A similar bound holds for the second term in \eqref{eqn:TRexpand}. Therefore we obtain 
\begin{align*}
\| (\bcL_{t+1} \ast_{\bPhi} \bcQ_t - \bcL_{\star}) \ast_{\bPhi} \bcG_{\star}^{\frac{1}{2}} \|_F^2 + \| (\bcR_{t+1} \ast_{\bPhi} \bcQ_t^{-H} - \bcR_{\star}) \ast_{\bPhi} \bcG_{\star}^{\frac{1}{2}} \|_F^2 & \leq \hbar^2 (\eta;\epsilon,\delta_{2r}) \dist^2(\bcF_t, \bcF_{\star}),
\end{align*}
where the contraction rate is given by
\begin{align*}
\hbar^2 (\eta;\epsilon,\delta_{2r}) \coloneq (1 - \eta)^2 + \frac{2 \epsilon + \delta_{2r} (4 + 2 \epsilon)}{1 - \epsilon} \eta (1 - \eta) + \frac{2 \epsilon + \epsilon^2 + \delta_{2r} (4 + 2 \epsilon) + \delta_{2r}^2 (2 + \epsilon)^2}{(1 - \epsilon)^2} \eta^2.
\end{align*}
With $\epsilon = 0.1$, $\delta_{2r} \leq 0.02$, and $0 < \eta \leq \frac{2}{3}$, we have $\hbar (\eta;\epsilon,\delta_{2r}) \leq 1 - 0.6 \eta$. Thus we conclude that 
\begin{align*}
\dist(\bcF_{t+1}, \bcF_{\star}) & \leq \sqrt{\| (\bcL_{t+1} \ast_{\bPhi} \bcQ_t - \bcL_{\star}) \ast_{\bPhi} \bcG_{\star}^{\frac{1}{2}} \|_F^2 + \| (\bcR_{t+1} \ast_{\bPhi} \bcQ_t^{-H} - \bcR_{\star}) \ast_{\bPhi} \bcG_{\star}^{\frac{1}{2}} \|_F^2}  \\
& \leq (1 - 0.6 \eta) \dist(\bcF_t, \bcF_{\star}).
\end{align*}

\subsection{Proof of Lemma~\ref{lemma:TRinitial}}

We first invoke Lemma~\ref{lemma:Procrustes} and use the fact that $\bcL_0 \ast_{\bPhi} \bcR_0^H - \bcX_{\star}$ has tubal rank at most $2r$ to obtain
\begin{align*}
\dist(\bcF_0, \bcF_{\star}) \leq \sqrt{\sqrt{2}+1} \| \bcL_0 \ast_{\bPhi} \bcR_0^H - \bcX_{\star} \|_F \leq \sqrt{2(\sqrt{2}+1)} \| \bcL_0 \ast_{\bPhi} \bcR_0^H - \bcX_{\star} \|_{F,r}.
\end{align*}
On the other hand, combining Lemma~\ref{lemma:parFnormvariation} and \eqref{eqn:coroTRIP} to reach at 
\begin{align*}
\| (\cA^{\ast} \cA - \bcI_{n_1}) (\bcX_{\star}) \|_{F,r} = \max_{\widetilde{\bcR} \in \R^{n_2 \times r \times n_3} : \| \widetilde{\bcR} \| \le 1} \| (\cA^{\ast} \cA - \bcI_{n_1}) (\bcX_{\star}) \ast_{\bPhi} \widetilde{\bcR} \|_F \leq \delta_{2r} \| \bcX_{\star} \|_F.
\end{align*}
Note that $\bcL_0 \ast_{\bPhi} \bcR_0^H$ is the best tubal rank-$r$ approximation of $\cA^{\ast} \cA(\bcX_{\star})$, we can apply the triangle inequality combined with Lemma~\ref{lemma:Fnorm-Eckart-Yang} to obtain
\begin{align*}
\| \bcL_0 \ast_{\bPhi} \bcR_0^H - \bcX_{\star} \|_{F,r} & \leq \| \cA^{\ast} \cA (\bcX_{\star}) - \bcL_0 \ast_{\bPhi} \bcR_0^H \|_{F,r} + \| \cA^{\ast} \cA (\bcX_{\star}) - \bcX_{\star} \|_{F,r}  \\
& \leq 2 \| (\cA^{\ast} \cA - \bcI_{n_1}) (\bcX_{\star}) \|_{F,r} \leq 2 \delta_{2r} \| \bcX_{\star} \|_F.
\end{align*}
As a result, we have 
\begin{align*}
\dist(\bF_{0},\bF_{\star}) \leq 2 \sqrt{2(\sqrt{2}+1)} \delta_{2r} \| \bcX_{\star} \|_F \leq 5 \delta_{2r} \sqrt{\frac{s_r}{\ell}} \kappa \bar{\sigma}_{s_r} (\bcX_{\star}).
\end{align*}

\section{Computational Complexity of Algorithm~\ref{algo:ScaledGDTRPCA}}
\label{sec:complexity}

We provide the breakdown of ScaledGD's computational complexity for tensor RPCA in Algorithm~\ref{algo:ScaledGDTRPCA}:
\begin{enumerate}
  \item Compute $\bcL_t \ast_{\bPhi} \bcR_t^H$ ($\bcL_t \in \R^{n_1 \times r \times n_3}$ and $\bcR_t \in \R^{n_2 \times r \times n_3}$): $n_1 n_2 n_3 r + n_1 n_2 n_3^2$ flops.
  \item Compute $\bcY - \bcL_t \ast_{\bPhi} \bcR_t^H$: $n_1 n_2 n_3$ flops.
  \item Soft-thresholding on $\bcY - \bcL_t \ast_{\bPhi} \bcR_t^H$: $n_1 n_2 n_3$ flops.
  \item Compute $\bcL_t \ast_{\bPhi} \bcR_t^H + \bcS_{t+1} - \bcY = \bcS_{t+1} - (\bcY - \bcL_t \ast_{\bPhi} \bcR_t^H)$ and transform the result into the spectral domain: $n_1 n_2 n_3 + n_1 n_2 n_3^2$ flops.
  \item Compute $\widebar{\bR}_t^{(k)^H} \widebar{\bR}_t^{(k)}$ ($\widebar{\bR}_t^{(k)} \in \C^{n_2 \times r}$): $n_2 r^2$ flops.
  \item Compute $(\widebar{\bR}_t^{(k)^H} \widebar{\bR}_t^{(k)})^{-1}$: $\cO(r^3)$ flops.
  \item Compute $\widebar{\bR}_t^{(k)} (\widebar{\bR}_t^{(k)^H} \widebar{\bR}_t^{(k)})^{-1}$: $n_2 r^2$ flops.
  \item Compute $(\widebar{\bL}_t^{(k)} \widebar{\bR}_t^{(k)^H} + \widebar{\bS}_{t+1}^{(k)} - \widebar{\bY}^{(k)}) \cdot \widebar{\bR}_t^{(k)} (\widebar{\bR}_t^{(k)^H} \widebar{\bR}_t^{(k)})^{-1}$: $n_1 n_2 r$ flops.
  \item Compute $\widebar{\bL}_{t+1}^{(k)} = \widebar{\bL}_t^{(k)} - \eta (\widebar{\bL}^{(k)} \widebar{\bR}^{(k)^H} + \widebar{\bS}^{(k)} - \widebar{\bY}^{(k)}) \cdot \widebar{\bR}^{(k)} (\widebar{\bR}^{(k)^H} \widebar{\bR}^{(k)})^{-1}$: $2 n_1 r$ flops.
  \item Repeat step 5 - 9 for computing $\widebar{\bR}_{t+1}^{(k)}$: $2 n_1 r^2 + \cO(r^3) + n_1 n_2 r + 2 n_2 r$ flops.
\end{enumerate}

In total, ScaledGD costs $\cO(n_1 n_2 n_3 r + n_1 n_2 n_3^2 + n_1 n_2 n_3 + (n_1 + n_2) n_3 r^2 + n_3 r^3 + (n_1 + n_2) n_3 r) = \cO(n_1 n_2 n_3 r + n_1 n_2 n_3^2 + (n_1 + n_2) n_3 r^2 + n_3 r^3)$ flops per iteration provided that $r \ll n_1 \wedge n_2$. We conclude by noting that for some special invertible linear transforms $L$, e.g., DFT, since the application of DFT on an $n_3$-dimensional vector requires $\cO(n_3 \log(n_3))$ operations, the per-iteration complexity is $\cO(n_1 n_2 n_3 r + n_1 n_2 n_3 \log(n_3) + (n_1 + n_2) n_3 r^2 + n_3 r^3)$.

\end{document}